\documentclass{article}

\usepackage{microtype}
\usepackage{graphicx}
\usepackage{subcaption}
\usepackage{booktabs} 
\usepackage{svg}
\usepackage{longtable}
\usepackage{array}
\usepackage{caption}
\usepackage{dsfont}
\usepackage{xcolor}
\usepackage{listings}
\usepackage{placeins}
\usepackage{multirow}
\usepackage{rotating}
\usepackage{siunitx}
\usepackage{wrapfig}
\usepackage{makecell}

\usepackage{hyperref}

\DeclareUnicodeCharacter{03B5}{\ensuremath{\varepsilon}}
\DeclareUnicodeCharacter{2264}{\ensuremath{\leq}}

\usepackage[accepted]{icml2026}

\usepackage{amsmath}
\usepackage{amssymb}
\usepackage{mathtools}
\usepackage{amsthm}

\usepackage{dsfont}
\usepackage{thmtools}
\usepackage{thm-restate}

\usepackage{algorithm}
\usepackage{algorithmic}

\usepackage[capitalize,noabbrev]{cleveref}

\theoremstyle{plain}
\newtheorem{theorem}{Theorem}[section]
\newtheorem{proposition}[theorem]{Proposition}
\newtheorem{lemma}[theorem]{Lemma}
\newtheorem{corollary}[theorem]{Corollary}
\theoremstyle{definition}
\newtheorem{definition}[theorem]{Definition}

\theoremstyle{remark}

\newcommand{\algname}{PRAXIS }
\newcommand{\algnamenospace}{PRAXIS}
\usepackage[textsize=tiny]{todonotes}

\usepackage{xparse}
\usepackage{letltxmacro}
\usepackage{etoolbox}

\newif\ifappendixtocactive
\appendixtocactivefalse

\newwrite\apptocfile

\makeatletter

\newcommand{\apptocline}[3]{%
  \ifstrequal{#1}{section}{%
    \par\medskip\noindent\textbf{#2}\dotfill #3\par
  }{%
  \ifstrequal{#1}{subsection}{%
    \par\noindent\hspace{1.5em}#2\dotfill #3\par
  }{%
  \ifstrequal{#1}{item}{%
    \par\noindent\hspace{3em}{\small #2\dotfill #3}\par
  }{%
    \par\noindent\hspace{3em}{\small #2\dotfill #3}\par
  }}}%
}

\newcommand{\printappendixcontents}{%
  \section*{Appendix Contents}
  \begingroup
  \normalsize
  \InputIfFileExists{\jobname.apc}{}{}%
  \endgroup
  \immediate\openout\apptocfile=\jobname.apc\relax
}

\AtEndDocument{%
  \immediate\closeout\apptocfile
}

\newcommand{\addtoappendixcontents}[2]{%
  \ifappendixtocactive
    \begingroup
    \protected@write\apptocfile{}%
      {\string\apptocline{#1}{#2}{\thepage}}%
    \endgroup
  \fi
}

\LetLtxMacro{\oldsection}{\section}
\RenewDocumentCommand{\section}{s o m}{%
  \IfBooleanTF{#1}
    {%
      \oldsection*{#3}%
      \addtoappendixcontents{section}{#3}%
    }
    {%
      \IfNoValueTF{#2}
        {\oldsection{#3}\addtoappendixcontents{section}{\thesection\quad #3}}
        {\oldsection[#2]{#3}\addtoappendixcontents{section}{\thesection\quad #2}}%
    }%
}

\LetLtxMacro{\oldsubsection}{\subsection}
\RenewDocumentCommand{\subsection}{s o m}{%
  \IfBooleanTF{#1}
    {%
      \oldsubsection*{#3}%
      \addtoappendixcontents{subsection}{#3}%
    }
    {%
      \IfNoValueTF{#2}
        {\oldsubsection{#3}\addtoappendixcontents{subsection}{\thesubsection\quad #3}}
        {\oldsubsection[#2]{#3}\addtoappendixcontents{subsection}{\thesubsection\quad #2}}%
    }%
}

\let\oldthmhead\thmhead

\renewcommand{\thmhead}[3]{%
  \ifappendixtocactive
    \begingroup
    \def\thmname##1{##1}%
    \def\thmnumber##1{##1}%
    \def\thmnote##1{##1}%
    \protected@edef\apptocentry{#1~#2: #3}%
    \addtoappendixcontents{item}{\apptocentry}%
    \endgroup
  \fi
  \oldthmhead{#1}{#2}{#3}%
}

\def\apptoc@figure{figure}
\def\apptoc@table{table}

\newcommand{\apptocaddcaption}[1]{%
  \ifappendixtocactive
    \ifx\@captype\apptoc@figure
      \addtoappendixcontents{item}{\thefigure\quad Figure: #1}%
    \else\ifx\@captype\apptoc@table
      \addtoappendixcontents{item}{\thetable\quad Table: #1}%
    \fi\fi
  \fi
}

\LetLtxMacro{\oldcaption}{\caption}
\RenewDocumentCommand{\caption}{o m}{%
  \IfNoValueTF{#1}
    {\oldcaption{#2}\apptocaddcaption{#2}}
    {\oldcaption[#1]{#2}\apptocaddcaption{#1}}%
}

\makeatother

\icmltitlerunning{
Efficient Decision Tree Rashomon Sets
}

\begin{document}

\twocolumn[
    \icmltitle{From Rashomon Theory to PRAXIS: Efficient Decision Tree Rashomon Sets}



  \icmlsetsymbol{equal}{*}

    \begin{icmlauthorlist}
    \icmlauthor{Zakk Heile}{equal,yyy}
    \icmlauthor{Hayden McTavish}{equal,yyy}
    \icmlauthor{Varun Babbar}{yyy}
    \icmlauthor{Margo Seltzer}{sch}
    \icmlauthor{Cynthia Rudin}{yyy}
    \end{icmlauthorlist}
    
    \icmlaffiliation{yyy}{Department of Computer Science, Duke University, Durham, USA}
    \icmlaffiliation{sch}{Department of Computer Science, University of British Columbia, Vancouver, Canada}

  \icmlcorrespondingauthor{Zakk Heile}{zakk.heile@duke.edu}
  \icmlcorrespondingauthor{Hayden McTavish}{hayden.mctavish@duke.edu}

  \icmlkeywords{interpretable machine learning, Rashomon sets, Decision Trees}

  \vskip 0.3in
]



\printAffiliationsAndNotice{\icmlEqualContribution}

\begin{abstract}
Standard machine learning pipelines often admit many near-optimal models. These ``Rashomon sets'' pose a range of challenges and opportunities for uncertainty-aware, robust decision making. They allow users to incorporate domain knowledge and preferences that would otherwise be difficult to specify directly in an objective, and they quantify diversity among valid models for a given training dataset and objective function. However, computation of Rashomon sets, even for simple, interpretable model classes such as sparse decision trees, continues to require immense memory and runtime resources. We present \algnamenospace, an algorithm to approximate this Rashomon set with orders of magnitude improvement in runtime and memory usage. We validate that \algname regularly recovers almost all of the full Rashomon set. \algname allows researchers and practitioners to scalably model the Rashomon set for real-world datasets. Code for PRAXIS is available at \url{https://github.com/zakk-h/PRAXIS}

\end{abstract}

\section{Introduction}

Model selection is crucial to any machine learning pipeline. The Rashomon effect \citep{breiman1984classification} refers to the phenomenon that many models can be well-justified for a given dataset and objective. A recent framework for model selection in the presence of this effect is the \textit{Rashomon set paradigm} \citep{xin2022exploring,rudin2024amazing}, whereby an algorithm first enumerates or represents the set of all plausible models that fit the data well (i.e., the Rashomon set), and then the user interacts with the Rashomon set to select the model that aligns best with their needs. This paradigm is particularly useful when users are unable to formalize all of their goals and constraints in advance. Users may want to find a model that maximizes fairness, obeys causal hypotheses, uses certain features, and/or follows specific structural constraints. These modeling goals become simpler when the Rashomon set has been enumerated, because only a simple loop through the set is required to optimize any secondary objective. The method for uncovering this Rashomon set depends on the model class being considered. For instance, Rashomon sets of generalized additive models (GAMs) can be efficiently approximated by sampling around a convex hyperboloid centered at the optimal weight vector \cite{gam_rsets}. Uncovering the Rashomon set for discrete, non-parametric model classes such as decision trees, however, can be a massive computational undertaking in both runtime and memory, as even finding a single optimal decision tree is NP-hard. As an example, \citet{osdt} show that the size of the search space of decision trees of depth $4$ with only 
$20$ features is $\approx 8.4 \times 10^{18}$ trees.

\begin{figure}[t]
    \raggedleft
    \includegraphics[width=1.0\linewidth]{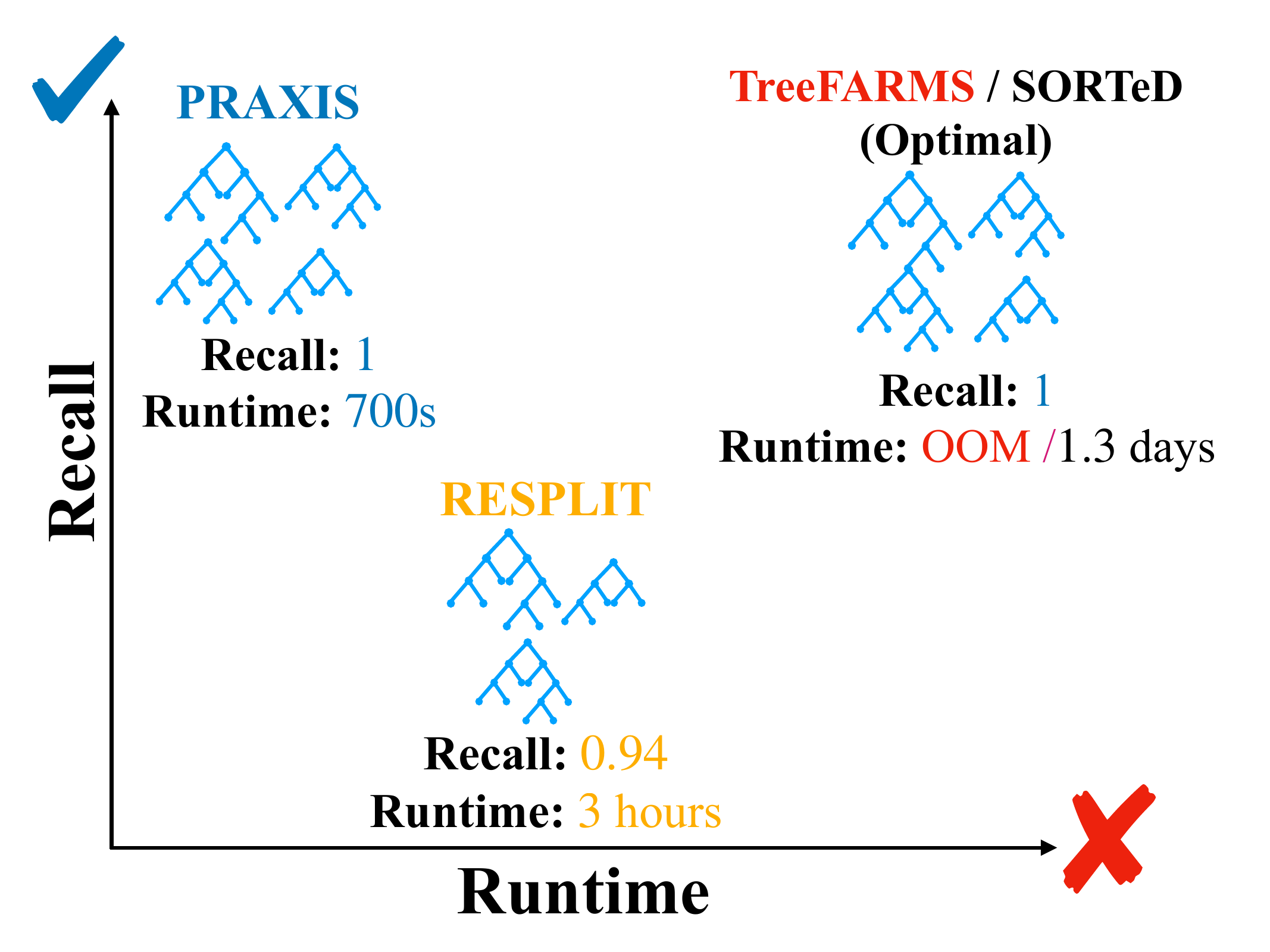}
    \caption{An illustration of \algname 
    and other Rashomon set algorithms on the
    News dataset \cite{online_news_popularity_332}, $\lambda = 0.02 ,\varepsilon=0.03$, depth $=5$. \algname runs 
    orders of magnitude faster than competitors while ensuring near-perfect recall relative to optimal methods.}
    \label{fig:placeholder}
\end{figure}

Although two recent algorithms \cite{xin2022exploring, arslan2025sorted} can exactly uncover the Rashomon set for sparse decision trees, their runtime and memory requirements scale exponentially with the depth budget and number of features. RESPLIT \citep{babbar2025near} provides an approximate Rashomon set, at significant cost to approximation quality and still with worst-case exponential cost per tree recovered.

We propose a new family of Rashomon set approximation algorithms with polynomial computation time per member of the Rashomon set. 
Our algorithms use a proxy subroutine for evaluating feature splits; this subroutine efficiently computes the objectives of high quality individual trees on a given subproblem.

Our approach,
\algname (\textbf{P}roxy-guided \textbf{R}ashomon set \textbf{A}ppro\textbf{X}imat\textbf{I}on\textbf{S}), achieves orders-of-magnitude improvements in both runtime and memory efficiency compared to state-of-the-art exact and approximate approaches while still recovering nearly the full Rashomon set.

Our contributions are as follows.

    (1) We introduce a novel, flexible framework for efficient approximation of the Rashomon set of decision trees.
    
    (2) We demonstrate reliable recovery of the Rashomon set with our approximations and discuss theoretical conditions under which this recovery is guaranteed.
    
     (3) We demonstrate that our method is orders of magnitude faster and more memory efficient than the previous state-of-the-art methods and provide asymptotic analysis to match.

\section{Related Work}

\paragraph{Optimal Individual Trees.}
Decision trees have been established for decades as interpretable, scalable classifiers, with widely cited implementations such as CART \cite{breiman1984classification} and C4.5 \citep{quinlan2014c45}. Early decision tree approaches were greedy, building trees from the root, adding splits one at a time according to heuristics, without looking back to see if the splits could be improved. Researchers have substantially improved the accuracy of individual sparse decision trees via global optimization of performance and sparsity, alongside a range of techniques for computational efficiency \cite{gosdt, dl85, hu2019optimal, murtree, streed, bertsimas2017optimal}. A recent survey found that many of the most scalable approaches succeed by leveraging tree-specific branch and bound logic with dynamic programming \cite{costa2023recent}. Several tree optimization works have observed that such tree-specific approaches can be framed as search over an AND/OR graph \cite{sullivan2024maptree, chaoukibranches}; our algorithm also corresponds to an AND/OR graph search (see Appendix \ref{app:andorgraph}). 

\paragraph{Optimal Rashomon Sets.}
Several approaches for finding optimal single trees can be extended to find all trees within a small multiple of the optimal objective -- that is, the Rashomon set of trees \cite{xin2022exploring, arslan2025sorted}. Additional works exist for special cases of trees \cite{mata2022computing,ciaperoni2024efficient,falling_trees}. The ability to capture Rashomon sets enables a range of powerful downstream applications, such as adding robustness \cite{hsu2025double}, estimating variable importance \cite{DonnellyEtAl2024,DonnellyEtAl2026}, or providing customization and control to domain experts and practitioners \cite{rudin2024amazing}. These advances are significant but struggle to scale to larger practical datasets: they are combinatorially complex in memory and runtime, particularly with respect to the number of features in the dataset.

\paragraph{Individual Tree Approximations.}
Several approaches have improved the scalability of optimal and approximately optimal decision tree algorithms through better handling of continuous features \cite{quantbnb, brița2025optimal} or incorporating carefully founded heuristics \cite{gosdt_guesses, topk, demirovic2023blossom, kiossou2022time, kiossou2024lookahead, kiossou2025CABS}. 
Of particular relevance to our work is the LicketySPLIT algorithm \cite{babbar2025near}. Starting at the root, this approach considers all possible splits and evaluates each based on completions of the tree with a greedy algorithm. It selects the best initial split conditioned on greedy completions, then recurses. This split-selection heuristic is a rollout procedure \citep{bertsekas1997rollout}; consequently, the overall LicketySPLIT algorithm belongs to the class of pilot algorithms \citep{duin1994steiner, vosss2005looking}.
Our method \algname can be viewed as a pilot method that outputs not a single solution, but a set of near-optimal solutions. Like any pilot method, it uses a subroutine for heuristic completions. For ease of nomenclature, we call this subroutine a \textit{proxy method} (defined in \autoref{def:proxy}), though this proxy can be quite complicated. 
Empirically, we find that the objective of a single-tree pilot method approach like LicketySPLIT serves as an excellent proxy, both for accurate results and for caching efficiency.

\paragraph{Rashomon Set Approximations}

 Relatively few works have focused on computational benefits for approximate Rashomon sets, which can be complex given the distinct nature of the optimization task. 
\citet{babbar2025near} provide RESPLIT, a scalable Rashomon set approximation based on a hybridization of greedy heuristics and branch-and-bound search. RESPLIT's optimization strategy is orthogonal to our approach: it solves simpler subproblems optimally and then combines them approximately, whereas we instead approximate the full problem directly. 
In addition, a commonly used baseline is to take repeated bootstraps of a dataset and run single tree optimization on each bootstrap; this is explored in both exact decision tree Rashomon set works \cite{arslan2025sorted, xin2022exploring}. Both papers find bootstrapping to be inefficient, leading to many missed Rashomon set members, both when running an optimal method on each bootstrap and when running a greedy method like CART \cite{breiman1984classification}. We also compare to bootstrapping our proxy algorithm, LicketySPLIT, showing that our approach (PRAXIS) yields a dramatically better approximation of the Rashomon set.

\section{Methodology}

\subsection{Notation}\label{sec:notation}

Let $D = \{(x_i, y_i)\}_{i=1}^n$ be a dataset of size $n$, where  $y_i \in \{0, 1\}$ and each $x_i \in \{0, 1\}^k$ has $k$ binary features (possibly binarizations of continuous or categorical features). A binary decision tree $t \in \mathcal{T}$ is a function $\{0, 1\}^k \rightarrow \{0,1\}$. A depth 0 tree, i.e., a leaf, makes a single point prediction for any sample. Any other tree has a simple splitting function (defined as $\textrm{split}_t(x_i) = x_{ij}$ for some $j \in [1, k]$), as well as two subtrees $t_\textrm{left} \in \mathcal{T}, t_\textrm{right} \in \mathcal{T} \textrm{ s.t. } \textrm{depth}(t) = 1 + \max \left( \textrm{depth}(t_\textrm{left}), \textrm{depth}(t_\textrm{right})\right)$; it returns:  
$$t(x_i) = t_\textrm{left}(x_{i})\textrm{split}_t(x_i) + t_\textrm{right}(x_{i}) (1-\textrm{split}_t(x_i)).$$
That is, a tree returns the prediction of its left or right subtree based on a queried feature.
We denote $D_t$ as the subset of dataset $D$ for which tree $t$ is assigned to make predictions; for the root tree, this is equal to $D$, and this is then recursively defined as $$D_{t_\textrm{left}} = \{(x_i, y_i) \in D_t \textrm{ s.t. } \textrm{split}_t(x_i) = 1\}$$
$$D_{t_\textrm{right}} = \{(x_i, y_i) \in D_t \textrm{ s.t. } \textrm{split}_t(x_i) = 0\}$$

Let $|t|$ denote the number of leaves in tree $t$. We use an objective that penalizes both the number of misclassifications and the number of leaves, with a per-leaf miss penalty, $\gamma$:
$$\text{Obj}(t, D, \gamma) = \gamma |t| + \sum_{i=1}^n \mathds{1}{\{t(x_i) \neq y_i\}}. $$
This objective is analogous to prior Rashomon set formulations \cite{xin2022exploring, arslan2025sorted}, which use a regularization parameter $\lambda$; the two are related by a scaling of $|D|$, with $\gamma = \lambda |D|$. In our implementation, we constrain $\gamma$ to be a whole number to avoid floating point calculations, with negligible effect on objective expressiveness (discussed in Appendix \ref{appendix:impl}).

Let $\mathcal{T}_d = \{t \in \mathcal{T} \mid \textrm{depth}(t) \leq d\}$; 
The Rashomon set for dataset $D$, depth bound $d$, and penalty $\gamma$ can be defined as $$\mathcal{R}_{\varepsilon_{\mathrm{abs}}}(D, d, \gamma) := \{ t \in \mathcal{T}_d \mid \mathrm{Obj}(t, D, \gamma) \le \varepsilon_{\mathrm{abs}}\},$$ with cardinality $|\mathcal{R}_{\varepsilon_{\mathrm{abs}}}(D, d, \gamma)|$ (abbreviated to $|\mathcal{R}_{\varepsilon_{\mathrm{abs}}}|$ for notational simplicity). It is often convenient to parameterize this using a fractional tolerance $\varepsilon_{\mathrm{mult}}$, setting $\varepsilon_{\mathrm{abs}} = (1+\varepsilon_{\mathrm{mult}})\min_{t \in \mathcal{T}_d} \mathrm{Obj}(t, D, \gamma)$.

As in prior work, we maintain a search graph representation of the Rashomon set, from which all valid trees can be enumerated after the algorithm terminates. Algorithms~\ref{alg:addleaf} and~\ref{alg:addsplit} in \autoref{appendix:impl} specify the operations used to extend this graph with new leaves and splits, respectively, and populate the minimum objective of trees rooted at that subproblem to be used throughout our algorithm.

\subsection{Proxy Optimizer Framework}
\label{sec:proxy_optimizer}

Our goal is to approximate the Rashomon set efficiently. To that end, we utilize proxy algorithms (Definition \ref{def:proxy}) as efficient subroutines to find high-quality individual trees. We use the objectives incurred by these trees to prune our search space.

\begin{definition}[Proxy Algorithm]\label{def:proxy}
Given a dataset $D$, depth budget $d$, and regularization parameter $\gamma$,
a \emph{proxy algorithm} returns the objective of some tree $t \in \mathcal{T}_d$:
\[
\textsc{PROXY}(D,d,\gamma) = \mathrm{Obj}(t,D,\gamma).
\]
If $t$ is not a leaf, then its children must satisfy:

$\begin{aligned}
&\textsc{PROXY}(D_{t_{\mathrm{left}}},d-1,\gamma)
&&\le \mathrm{Obj}(t_{\mathrm{left}},D_{t_{\mathrm{left}}},\gamma) \\
&\textsc{PROXY}(D_{t_{\mathrm{right}}},d-1,\gamma)
&&\le \mathrm{Obj}(t_{\mathrm{right}},D_{t_{\mathrm{right}}},\gamma)
\end{aligned}$
\end{definition}

Many algorithms in the literature can satisfy this property, and therefore can be a proxy algorithm \citep[e.g.,][]{breiman1984classification,gosdt}.
If a decision tree algorithm does not satisfy this refinement property, we can easily modify it to meet the above criteria by memoizing subtree objectives (see Appendix \ref{sec:proxycaching}).
If an algorithm exactly preserves the left and right subtree objectives, 
we can save runtime with caching, as also discussed in
Appendix \ref{sec:proxycaching}.

Algorithm \ref{alg:\algname-trie} presents our approach. 
At each subproblem (starting from the root), we construct nodes to represent each of the possible actions, i.e., leaf predictions or feature splits, for extending a tree while remaining within the budget $\varepsilon_{\textrm{abs}}$. 
We add any leaf predictions that fit within the budget (lines 2-8), and, if we have not hit a depth limit (lines 9-11), evaluate all possible splits, pruning those whose proxy-completed objectives exceed $\varepsilon_\textrm{abs}$ (lines 12-21). For each remaining split, we recursively approximate the sets of feasible left and right subtrees using the budget refinement procedure in Algorithm  \ref{alg:multipass} (lines 22-23).

When the proxy algorithm is itself \textit{exactly} optimal, \algname recovers the Rashomon set in a manner similar to existing state-of-the-art solvers \citep{xin2022exploring, arslan2025sorted}. Using an approximate proxy algorithm, however, leads to many practical benefits.

Under some non-optimal settings of this proxy optimizer, we can still recover exact guarantees around our approximation. For example, Theorem~\ref{thm:rule-list} in \autoref{app:theory} guarantees that, under  a small set of modifications to \algnamenospace, 
 the returned set contains the full Rashomon set of simple rule lists (trees that have at least one leaf child for each split) for the given features and depth.

In practice, we find that strong, approximate proxy algorithms, as well as stronger pruning, yield efficient and reliable Rashomon approximations. 
Algorithm~\ref{alg:pilotalgorithm} presents our default proxy algorithm for the experiments presented in the main body of this paper. It corresponds to a substantially modified version of the LicketySPLIT algorithm \cite{babbar2025near}. At each subproblem, we return a leaf if the leaf objective and $\gamma$ penalty guarantee no further splits will improve the objective (lines 2-5); otherwise, we find a single split minimizing the objective of a greedy tree completion on each subproblem (line 6) and recursively call our algorithm on the resulting two subproblems (lines 7-9). 
We then compare that recursive solution to our leaf objective, and return whichever leads to a better tree (line 10). By contrast, LicketySPLIT compares the leaf objective to the greedy objective before recursion. Either choice leads to the same worst-case asymptotic complexity, but avoiding that prepruning step allows us to improve the objective of recovered trees. Several other important distinctions from LicketySPLIT that improve the proxy are discussed in Appendix~\ref{appendix:licketysplit-modifications}. Most significantly, we cache subproblem solutions encountered in \textsc{Greedy}, which guarantees some cache reuse in later subproblems, and the greedy solver directly finds the accuracy-maximizing split when the remaining depth limit is one (rather than minimizing weighted child entropy). Appendix~\ref{appendix:licketysplit-modifications} defines a family of proxy algorithms ranging from greedy to optimal, and Appendix~\ref{appendix:other-proxy} provides empirical results for these alternatives.

\begin{algorithm}[t!b]
\caption{\textsc{\algnamenospace}$(D,d,\gamma,\varepsilon_{\textrm{abs}})$}
\label{alg:\algname-trie}
\begin{algorithmic}[1]
\REQUIRE Subproblem dataset $D$, remaining depth $d$,
         per-leaf penalty $\gamma$, budget $\varepsilon_{\textrm{abs}}$

\STATE Let $G \gets \textsc{OrNode}(\varepsilon_{\textrm{abs}})$ \COMMENT{Initialize subgraph for subtrees found within budget $\varepsilon_{\textrm{abs}}$ (See Appendix \ref{app:andorgraph})}

\FOR{$b \in \{0,1\}$}
  \STATE 
  \COMMENT{For each possible leaf prediction $b$, set $C_b$ to the corresponding objective: $\gamma$ + misclassification error.}
  \STATE $C_b \gets \gamma \;+\; 
    \big|\{(x_i,y_i)\in D : y_i \neq b\}\big| \,$
  \IF{$C_b \le \varepsilon_{\textrm{abs}}$}
    \STATE \textsc{AddLeaf}$\,(G,b,C_b)$ \COMMENT{See Appendix \ref{app:andorgraph}}
  \ENDIF
\ENDFOR

\IF{$d = 0$ \textbf{ or } $\varepsilon_{\textrm{abs}} < 2\gamma$}
  \STATE \textbf{return} $G$
         \COMMENT{Either no remaining depth or any split would exceed the budget}
\ENDIF

\FOR{\textbf{each} feature $j$}
  \STATE $(D_L, D_R) \gets \textsc{Partition}(D,j)$
  \IF{$D_L = \emptyset$ \textbf{ or } $D_R = \emptyset$}
    \STATE \textbf{continue} \COMMENT{Skip degenerate splits}
  \ENDIF
  \STATE $P_L \gets \textsc{Proxy}(D_L,d-1,\gamma)$
         \COMMENT{Proxy cost on left}
  \STATE $P_R \gets \textsc{Proxy}(D_R,d-1,\gamma)$
         \COMMENT{Proxy cost on right}
  \IF{$P_L + P_R > \varepsilon_{\textrm{abs}}$}
    \STATE \textbf{continue}
           \COMMENT{Prune split if proxy completions exceed the budget}
  \ENDIF
  \STATE   
  ${G}_L, {G}_R \gets 
    \textsc{Solve\_Siblings}(D_L,D_R,d{-}1,\gamma,$ $\varepsilon_{\textrm{abs}},P_R)$ \COMMENT{Find subgraphs for the left and right subproblems of this split; described in Algorithm \ref{alg:multipass}}
  \STATE $\textsc{AddSplit}(G, j, G_L, {G}_R)$
         \COMMENT{Add $G_L, {G}_R$ as a split for the current subgraph $G$; see Appendix \ref{app:andorgraph}}
\ENDFOR

\STATE \textbf{return} $G$  \COMMENT{Rashomon graph for the subproblem}

\end{algorithmic}
\vspace{0.25em}
\noindent\textbf{Usage Note:} If provided with only $\varepsilon_{\textrm{mult}}$ and not $\varepsilon_{\textrm{abs}}$, set Rashomon budget relative to proxy:
\[
\varepsilon_{\textrm{abs}} \gets  \bigl(1+\varepsilon_{\textrm{mult}}\bigr) \cdot
\textsc{Proxy}(D,d,\gamma) 
\]
\end{algorithm}

\begin{algorithm}[]
\caption{Default Algorithm for $\textsc{Proxy}(D', d, \gamma)$}
\label{alg:pilotalgorithm}
\begin{algorithmic}[1]
\REQUIRE Subproblem dataset $D'$, remaining depth $d$, leaf penalty $\gamma$
\STATE $n' \gets |D'|$, \quad $p \gets |\{(x_i,y_i)\in D' : y_i=1\}|$
\STATE $\text{leaf\_obj} \gets \gamma + \min\{p,\, n'-p\}$
\IF{$d=0$ \OR $\text{leaf\_obj} \le 2\gamma$}
    \STATE \textbf{return} $\text{leaf\_obj}$
\ENDIF
\STATE $j^\star \gets$ feature whose split minimizes
\[
\textsc{Greedy}(D'_L,d-1,\gamma)
+
\textsc{Greedy}(D'_R,d-1,\gamma)
\]
\STATE Split $D'$ by $j^\star$ into $(D'_L,D'_R)$
\STATE $L \gets \textsc{Proxy}(D'_L,d-1,\gamma)$
\STATE $R \gets \textsc{Proxy}(D'_R,d-1,\gamma)$
\STATE \textbf{return} $\min\{\text{leaf\_obj},L+R\}$
\end{algorithmic}
\end{algorithm}

\subsection{Algorithm details}

\paragraph{Budget.} One of the core details around \algname is the setting of the budget $\varepsilon_{\textrm{abs}}$, which maps to the number of errors (and additional leaves) the algorithm can make while remaining in the Rashomon set. At the root, the budget can be based on a user-specified value, or it can be obtained by treating the proxy tree at the root as the reference solution and applying a multiplicative factor of $(1 + \epsilon_\textrm{mult})$. In the latter case, the output can also be filtered at the end to be $(1 + \epsilon_\textrm{mult})$ times the best tree recovered (this filtering is trivial since we enumerate the set in sorted order of objective).

To propagate the budget to child subproblems, a key subroutine in \algname is Algorithm \ref{alg:multipass}, which efficiently propagates the budget down the search space using the initial budget and the proxy algorithm's estimates of split quality. 
Given the two sibling subproblems resulting from a given split, the algorithm initially allocates budget for the left subproblem so that any solution found can be combined with the proxy algorithm's solution for the right subproblem while remaining within the parent budget $\varepsilon_{\textrm{abs}}$ (line $3$). This is potentially an underestimate 
-- if there are completions of the right subproblem that perform better than the proxy, then additional budget should be provided to the left subproblem. To handle this, the algorithm repeatedly refines (loosens) the left and right subproblem budgets if a better tree is found for the other subproblem (lines $5$-$12$). This iterative budget refinement procedure continues until no better tree is found. 

\begin{algorithm}[ht]
\caption{$\textsc{Solve\_Siblings}(D_L,D_R,d,\gamma,\varepsilon_{\textrm{abs}},P_R)$}
\label{alg:multipass}
\begin{algorithmic}[1]
\REQUIRE Left/right datasets $D_L,D_R$, remaining depth $d$, regularization $\gamma$,
         parent budget $\varepsilon_{\textrm{abs}}$, proxy cost $P_R$
\STATE $\varepsilon_L \gets -\infty$ \COMMENT{largest budget used for solving $D_L$ with \algname; currently we have not yet run on $D_L$ at all.}
\STATE $\varepsilon_R \gets -\infty$ \COMMENT{largest budget used for solving $D_R$ with \algname; currently we have not yet run on $D_R$ at all.}
\STATE $\varepsilon_L^\textrm{(new)} \gets \varepsilon_\textrm{abs} - P_R$ \COMMENT{new budget to use for $D_L$ with \algname}
\WHILE{$\varepsilon_L^\textrm{(new)} > \varepsilon_L$}
  \STATE $\varepsilon_L \gets \varepsilon_L^\textrm{(new)}$
  \STATE $G_L \gets \textsc{\algnamenospace}(D_L,d,\gamma,\varepsilon_L)$
  \STATE $\varepsilon_R^\textrm{(new)} \gets \varepsilon_{\textrm{abs}} - G_L.\textit{min\_objective}$%
  \IF{$\varepsilon_R^\textrm{(new)} > \varepsilon_R$}
    \STATE $\varepsilon_R \gets \varepsilon_R^\textrm{(new)}$
    \STATE $G_R \gets \textsc{\algnamenospace}(D_R,d,\gamma,\varepsilon_R)$
    \STATE $\varepsilon_L^\textrm{(new)} \gets \varepsilon\_\textrm{abs} - G_R.\textit{min\_objective}$
  \ENDIF
\ENDWHILE

\STATE \textbf{return} $G_L, G_R$

\end{algorithmic}
\end{algorithm}

\paragraph{Runtime and Memory Requirements.} Whereas normally the memory and runtime requirements on Rashomon set construction can be arbitrarily larger than the size of the Rashomon set, \algname takes memory and runtime linear in the size of the Rashomon set approximation it finds. We demonstrate this in the following theorems (which are proven in \autoref{app:theory}).
\begin{restatable}[\algname Runtime]{theorem}{RuntimeTheorem}
\label{thm:runtime}
Given a dataset $D$ with $n$ samples and $k$ features, depth limit $d$, leaf penalty $\gamma$, and Rashomon budget $\varepsilon_\textrm{abs}$, $\textsc{\algnamenospace}(D, d, \gamma, \varepsilon_{\textrm{abs}})$ 
computes an estimate of the Rashomon set $R_{\varepsilon_{\mathrm{abs}}}(D, d, \gamma)$ in time $\mathcal{O}\!\left(|R_{\varepsilon_{\mathrm{abs}}}|\,k\,d \cdot \textsc{ProxyCompute}(n,k,d)\right)$, assuming the runtime of our proxy algorithm is bounded: $\textsc{ProxyCompute}(n,k,d)\in \Theta\!\bigl(n\,g(k,d)\bigr)$ for some function $g(k,d) \in \Omega(1)$ that is also nondecreasing in $d$.

\end{restatable}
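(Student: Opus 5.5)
Throughout, $|R_{\varepsilon_{\mathrm{abs}}}|$ is read as the number of trees represented by the graph \algname returns, i.e.\ the Rashomon set approximation it finds, as in the sentence preceding the theorem. The plan is a charging argument: bound the work done inside each call to \textsc{\algnamenospace} by $\mathcal{O}(k\cdot\textsc{ProxyCompute}(n,k,d))$, then bound the number of such calls by $\mathcal{O}(d\,|R_{\varepsilon_{\mathrm{abs}}}|)$. Within a single call on a subproblem $D'$ with remaining depth $d'$, the leaf step costs $\mathcal{O}(|D'|)$. The split loop performs $k$ partitions, each costing $\mathcal{O}(|D'|)$, and $2k$ proxy calls, each costing $\Theta(|D'|\,g(k,d'-1))\subseteq\mathcal{O}(n\,g(k,d))$ because $g$ is nondecreasing in $d$ and $|D'|\le n$. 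Since $g\in\Omega(1)$, the partition cost is absorbed, so the local work is $\mathcal{O}(k\cdot\textsc{ProxyCompute}(n,k,d))$. Work performed inside recursive calls made from \textsc{Solve\_Siblings} is charged to those calls themselves.

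Next we bound the number of calls. The key invariant is that every call to \textsc{\algnamenospace}$(D',d',\gamma,\varepsilon')$ returns a graph containing at least one tree whose objective is at most $\varepsilon'$, i.e.\ it is nonempty. To see this, note that every call arises either at the root or via \textsc{Solve\_Siblings} for a split that survived pruning, $P_L+P_R\le\varepsilon_{\mathrm{abs}}$. By Definition~\ref{def:proxy}, the proxy's own tree on $D_L$ refines consistently, so by induction on depth that tree lies within budget $P_L\le \varepsilon_{\mathrm{abs}}-P_R$ and would be found. Along the refinement loop, budgets only increase, so later calls are nonempty as well. Consequently every surviving split contributes at least one full tree to the output, because the minimum-objective left and right subtrees combine within the parent budget by construction of $\varepsilon_L$ and $\varepsilon_R$.

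With nonemptiness in hand, I would map each invocation to a distinct root-to-node path in some output tree. An invocation at depth level $\ell$ corresponds to a node of some tree in the returned set, and a tree has at most $2^{d}$ nodes. That naive count would give a bound of $2^d$ rather than $d$, so the argument must instead charge each call to a \emph{distinct} output tree. The refinement would be that each invocation yields at least one new output tree that extends its minimal completion, together with a count of how many calls a single tree absorbs. Each \textsc{Solve\_Siblings} loop iteration strictly increases a budget, which can only happen because a strictly better sibling tree was found. The number of iterations is therefore bounded by the number of distinct minimal objectives, and each iteration can be charged to new trees.

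The main obstacle is precisely this last counting step: showing that the total number of recursive calls, including repeated refinement passes, is $\mathcal{O}(d\,|R_{\varepsilon_{\mathrm{abs}}}|)$ rather than something exponential in $d$. I would first try to charge each call to the lexicographically first output tree that passes through it and uses that call's min-objective completion elsewhere. I would then argue that a given tree is charged at most once per depth level along a single spine, which yields the factor $d$. Combining this with the per-call cost gives the stated bound.
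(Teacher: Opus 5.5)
There is a genuine gap, and it is exactly the step you flagged as the main obstacle. The count of $\mathcal{O}(d\,|R_{\varepsilon_{\mathrm{abs}}}|)$ calls is false in general, so no charging scheme can establish it. Every node of every returned tree is a distinct subproblem (a distinct split path), and each needs its own call to \textsc{\algnamenospace}. A single returned tree can have $2^{\ell}$ nodes at level $\ell$. If the returned set consists of one full depth-$d$ tree, \algname makes at least $2^{d+1}-1$ calls, while $d\,|R_{\varepsilon_{\mathrm{abs}}}| = d$. Your per-call bound $\mathcal{O}(k\cdot\textsc{ProxyCompute}(n,k,d))$ replaces $|D'|$ by $n$. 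Combined with the true call count, it yields only $\mathcal{O}(2^{d}\,k\cdot\textsc{ProxyCompute}(n,k,d))$ per tree. The decomposition into cost per call times number of calls, with $|D'|\le n$, throws away precisely the information that makes the theorem true. A ``spine'' charging cannot rescue it, because one tree genuinely absorbs exponentially many calls.

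The missing idea is to keep the dependence on the local subproblem size; this is what the linearity assumption $\textsc{ProxyCompute}\in\Theta(n\,g(k,d))$ is for. The work at an expanded node with data $D'$ is $\mathcal{O}(k\,|D'|\,g(k,d))$, not $\mathcal{O}(k\,n\,g(k,d))$. Now fix a level $\ell$ (distance from the root) and sum $|D'|$ over the subproblems expanded at that level. Each is a node of at least one returned tree, by your nonemptiness invariant. The level-$\ell$ nodes of a single tree have pairwise disjoint sample sets. Therefore $\sum |D'| \le \sum_{t}\sum_{u \text{ at level } \ell \text{ in } t} |D_u| \le n\,|R_{\varepsilon_{\mathrm{abs}}}|$. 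Refinement reruns of a subproblem are charged to distinct trees containing it, much as you sketch. This gives $\mathcal{O}(n\,|R_{\varepsilon_{\mathrm{abs}}}|\,k\,g(k,d))$ work per level. The factor $d$ then comes from summing over the $d$ levels, using that $g$ is nondecreasing in depth, not from how often a tree is charged. Finally $n\,g(k,d)=\mathcal{O}(\textsc{ProxyCompute}(n,k,d))$ converts this into the stated bound. This level-wise, size-weighted accounting is how the paper's proof proceeds.
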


Theorem \ref{thm:runtime} shows that for a polynomial-time proxy algorithm that is linear in $n$, \algname performs only polynomial work per actual tree in the Rashomon set. This is a substantially stronger guarantee than that of TreeFARMS \cite{xin2022exploring}, SORTeD \cite{arslan2025sorted}, and RESPLIT \cite{babbar2025near}, which, in the worst case, require exponential time per tree they output.
When using our default proxy algorithm  \citep[based on LicketySPLIT from][]{babbar2025near}, we have a runtime of 
$
\mathcal{O}\bigl(|R| \, n \, k^{3} d^{3}\bigr)$, 
i.e., we spend time that is only a polynomial multiple of the size of the Rashomon set that we need to output.

By scaling linearly in the output size, Theorem \ref{thm:runtime} is a substantial improvement relative to optimal solvers in the 
often-encountered scenario where the Rashomon set is a tiny fraction of the overall search space of trees. For instance, \citet{Semenova_2022} show that even large Rashomon sets (allowing for 5\% additive error) are on the order of $10^{-37}$\% or $10^{-38}$\% of the hypothesis space. 
Corollary \ref{thm::optimalruntime} gives a condition on the Rashomon set size such that even finding a single optimal tree  \citep[which is a precursor to finding the entire Rashomon set,][]{xin2022exploring, arslan2025sorted} with standard branch-and-bound is super-polynomially slower than \algnamenospace, even though \algname finds an approximation of the entire Rashomon set.

\begin{restatable}[Super-polynomial speedup over optimal dynamic programming]{corollary}{OptimalRuntime}
\label{thm::optimalruntime}
Consider a dataset $D$ of size $n \times k$, and let $d$ be the depth budget. Setting $\gamma = 0$ for simplicity and assuming there is at least one tree in the Rashomon set,
if $$\forall q, |R_{\varepsilon_\textrm{abs}}(D, d, \gamma)| \in o\left(\frac{{k\choose d}}{(kd)^q}\right),$$
then a full dynamic programming search \citep[i.e., Algorithm 1 from][]{demirovic2023blossom} for a single optima 
will have worst-case runtime that is a super-polynomial factor of the Rashomon set size $|R_{\varepsilon_\textrm{abs}}|$,
 but \algname with a polynomial-time proxy algorithm satisfying Theorem~\ref{thm:runtime} will not, i.e:
$$\forall q, \frac{\textrm{Worst-Case Runtime(\texttt{OPT})}}{\textrm{Worst-Case Runtime(\texttt{\algnamenospace})}} \in \omega\Big((kd)^q\Big).$$
\end{restatable}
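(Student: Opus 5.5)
The plan is to compare two explicit worst-case runtime expressions: an upper bound for \algname taken from Theorem~\ref{thm:runtime}, and a lower bound for the full dynamic-programming search. Theorem~\ref{thm:runtime} gives Worst-Case Runtime(\algnamenospace) $\in \mathcal{O}\bigl(|R_{\varepsilon_\textrm{abs}}|\,k\,d\,n\,g(k,d)\bigr)$. With a polynomial-time proxy we may take $g(k,d)$ polynomial in $k,d$, so this is $|R_{\varepsilon_\textrm{abs}}|\cdot n\cdot \mathrm{poly}(k,d)$. Since at least one tree lies in the Rashomon set, $|R_{\varepsilon_\textrm{abs}}|\ge 1$, and this expression is a genuine polynomial multiple of the output size. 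That settles the claim that \algname is not super-polynomial in $|R_{\varepsilon_\textrm{abs}}|$.

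Next I lower-bound OPT. The full DP of \citet{demirovic2023blossom} (Algorithm~1) with $\gamma=0$ has nothing to prune on the regularizer. In the worst case it therefore visits every subproblem reachable by a path of at most $d$ splits. I will argue that every $d$-subset of the $k$ features defines at least one distinct subproblem: for instance, take the branch that sets all chosen features to $1$, and pick a dataset in which these branches are all nonempty and distinct. The DP cannot merge such subproblems, and it must spend at least constant, or even $\Omega(n)$, work at each of them. Hence Worst-Case Runtime(OPT) $\in \Omega\bigl(\binom{k}{d}\bigr)$, or $\Omega\bigl(n\binom{k}{d}\bigr)$ if we keep the per-node data scan, which is convenient because it cancels the factor $n$ in \algnamenospace's bound.

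Then I divide. For any fixed $q$, set $q' = q + c$, where $(kd)^c$ dominates $k\,d\,g(k,d)$. The hypothesis applied with $q'$ gives $|R_{\varepsilon_\textrm{abs}}|\,(kd)^{q'} = o\bigl(\binom{k}{d}\bigr)$. Therefore
\[
\frac{\Omega\bigl(n\binom{k}{d}\bigr)}{\mathcal{O}\bigl(|R_{\varepsilon_\textrm{abs}}|\,n\,(kd)^{c}\bigr)} \in \omega\bigl((kd)^{q}\bigr),
\]
which is the claimed ratio. The same computation, divided instead by $|R_{\varepsilon_\textrm{abs}}|$ alone, shows that OPT's runtime is $\omega\bigl((kd)^q\bigr)\cdot|R_{\varepsilon_\textrm{abs}}|$ for every $q$. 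That is the super-polynomial-factor statement.

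The main obstacle is making the worst-case lower bound for OPT rigorous and compatible with the hypothesis. The worst-case instance must realize $\Omega\bigl(\binom{k}{d}\bigr)$ distinct DP subproblems while still satisfying the small-Rashomon-set assumption. I would handle this by reading ``worst-case runtime'' as a bound over inputs of the given dimensions. I would then exhibit (or cite) the standard construction in which the DP's memo table has size $\Theta\bigl(\sum_{i\le d}\binom{k}{i}2^i\bigr)$, and note that with $\gamma=0$ and no upper bounds, the DP's search does not depend on how many near-optimal trees exist. If that fails, I would fall back to stating the comparison between worst-case bound expressions, which the corollary's wording permits.
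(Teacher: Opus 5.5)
Your proposal is correct and follows essentially the same route as the paper. Both arguments bound \algname from above via Theorem~\ref{thm:runtime} with a proxy cost of at most $\tilde c\, n (kd)^z$, bound OPT from below by $\Omega\bigl(n\binom{k}{d}\bigr)$, and absorb the polynomial factors by applying the hypothesis at a shifted exponent, which also yields the by-product that OPT is $\omega\bigl((kd)^q|R_{\varepsilon_\textrm{abs}}|\bigr)$. The paper simply cites the $\Theta\bigl((n+2^d)\binom{k}{d}\bigr)$ bound of \citet{demirovic2023blossom} instead of constructing an instance; if you derive it yourself, get the factor $n$ by summing over the $2^d$ branch patterns of each $d$-subset of features, whose supports partition $D$, rather than by claiming $\Omega(n)$ work at each individual subproblem.
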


In addition to the runtime efficiency discussed above, Theorem \ref{thm::memory} below establishes that \algnamenospace's memory complexity scales linearly with the size of the input and the set of trees it finds, for any memory-efficient proxy algorithm.

\begin{restatable}[\algname Memory Usage]{theorem}{MemoryTheorem}
\label{thm::memory}
Let $d$ be a depth budget, and $\gamma$ be a leaf penalty. 
Given a dataset $D$ of size $n$ with $k$ features and a  proxy algorithm with $\mathcal{O}(nk)$ memory usage, a memory-efficient implementation of \algname can find an estimate of the Rashomon set $R_{\varepsilon_{\textrm{abs}}}(D, d, \gamma)$ using 
$\mathcal{O}\!\left(nk+ \sum_{t \in R_{\varepsilon_{\textrm{abs}}}} |t|\right)$ memory (where $|t|$ is the \# of leaves in tree $t$), with the runtime complexity in Theorem \ref{thm:runtime}.
\end{restatable}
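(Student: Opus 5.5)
We split the memory bound into three parts: the transient working memory of the recursion, the memory of the proxy, and the persistent memory of the output graph. We then show that a memory-efficient implementation charges each part to either $nk$ or $\sum_{t\in R_{\varepsilon_{\mathrm{abs}}}}|t|$.

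\textbf{Working memory.} Instead of materializing each subproblem dataset $D_L, D_R$, we represent a subproblem by the original data plus the path of at most $d$ split features from the root. Membership of a sample can then be tested on the fly. Equivalently, we store an index array that is partitioned in place, as in quicksort-style tree builders, so that each recursion level only records a contiguous range. Either way, the call stack of \textsc{\algnamenospace} and \textsc{Solve\_Siblings} has depth $\mathcal{O}(d)$, and each frame holds $\mathcal{O}(1)$ scalars (budgets, feature index, proxy costs) plus a range descriptor. Since $d\le k$, this is $\mathcal{O}(n+k+d)\subseteq \mathcal{O}(nk)$. The per-feature loop is sequential, so only one pair $(D_L,D_R)$ is active per frame. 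The proxy is called sequentially and freed after returning, so it contributes $\mathcal{O}(nk)$ at any time. Its cache is part of the proxy's own $\mathcal{O}(nk)$ memory by hypothesis, or is discarded between calls. Any such discarding only affects runtime through recomputation, which is already accounted for in the cost $\textsc{ProxyCompute}$ used in Theorem~\ref{thm:runtime}.

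\textbf{Output graph.} In the AND/OR graph, a node is only added via \textsc{AddLeaf} or \textsc{AddSplit}. We argue that every retained node lies on some tree in the returned set, so each node can be charged to a leaf or internal node of some $t\in R_{\varepsilon_{\mathrm{abs}}}$. A tree with $|t|$ leaves has $2|t|-1$ nodes, so this gives $\mathcal{O}\!\left(\sum_t |t|\right)$ total.

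The subtlety is that \textsc{Solve\_Siblings} can produce subgraphs $G_L$ or $G_R$ that turn out to be empty, or that are recomputed with larger budgets. The memory-efficient implementation must therefore free graphs from earlier passes of the while loop once they are superseded. It must also not call \textsc{AddSplit} when $G_L$ or $G_R$ is empty. It must further prune from $G_L$ any subtree whose objective plus $G_R.\textit{min\_objective}$ exceeds $\varepsilon_{\mathrm{abs}}$, and symmetrically for $G_R$; this pruning is possible because children are enumerable in sorted objective order. After this pruning, every leaf option kept in a subgraph combines with the best sibling completion into a tree within budget, so every stored node appears in at least one output tree. The superseded graphs are transient, and at most one pass's graph per active frame is live. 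Sharing is fine, since it only makes the bound looser.

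\textbf{Main obstacle.} The hardest step is the transient memory of discarded subgraphs inside the budget-refinement loop. A subgraph built with budget $\varepsilon_L$ may be larger than its final pruned version. Before being pruned it is bounded by the number of trees within that budget combined with some valid sibling, namely the sibling achieving $G_R.\textit{min\_objective}$. Because $\varepsilon_L=\varepsilon_{\mathrm{abs}}-(\text{best found sibling objective})$, every tree in such a subgraph combines with a real sibling tree to give a member of the output set. Hence even the transient graph is charged to output trees, and the peak memory is at most a constant factor times the final size. Runtime is unchanged by freeing, re-filtering, and on-the-fly partitioning, because each of these costs at most a constant factor over the operations counted in Theorem~\ref{thm:runtime}.
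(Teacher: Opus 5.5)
Your proposal is correct and follows essentially the same route as the paper. Both proofs use one copy of the data plus working memory for $\mathcal{O}(d)\subseteq\mathcal{O}(k)$ active subproblems, the proxy's $\mathcal{O}(nk)$ space, and a charging argument that maps every stored graph node to a node of some tree in $R_{\varepsilon_{\mathrm{abs}}}$. The only difference there is the witness: the paper uses the tree PRAXIS recovers through each unpruned split, while you use the best sibling completion.

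Your extra pruning of $G_L$ against $G_R.\textit{min\_objective}$ is harmless but redundant, because \textsc{Solve\_Siblings} only terminates once $\varepsilon_L=\varepsilon_{\mathrm{abs}}-G_R.\textit{min\_objective}$ (and symmetrically for $\varepsilon_R$). Your explicit freeing of graphs from superseded refinement passes fills in a detail the paper's proof leaves implicit; its footnote only addresses rebuilding row indices.
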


In practice, our implementation trades additional memory for faster runtime by caching and reusing solutions to repeated subproblems.

\label{par:caching}
\paragraph{Caching.}
Rashomon set algorithms for decision trees make use of caching to allow subproblem reuse. The subproblem representation of \citet{xin2022exploring} requires a bitvector of length $n$ to represent the subset of data points contained in a subproblem (and the subproblem depth), which can be memory intensive. 
Alternatively, subproblems may be represented by the set of feature splits (and branch directions) used to reach them \citep{arslan2025sorted}. This representation is substantially more memory efficient, but it cannot detect when different sets of splits correspond to the same subset of data points, which can lead to redundant work. This creates a space–time tradeoff (see Appendix~\ref{sec:subproblem-representation}).

In \algnamenospace, we combine the memory efficiency of split-based representations \citep{arslan2025sorted} with the reuse benefits of bitvector representations \citep{xin2022exploring}. We hash each bitvector representing the samples in a subproblem (and the depth budget) to a 64-bit fingerprint and use this fingerprint as the cache key. Thus, when our modified LicketySPLIT proxy is called, we cache the solution for every recursive subproblem it solves while computing the objective of the tree it implicitly enumerates; we apply the same caching strategy to each subproblem solved by the greedy algorithm. In Appendix \ref{sec:subproblem-representation}, we discuss and show the benefits of this approach. For well-behaved hash functions, the probability of an erroneous cache hit is vanishingly small; in practice, this saves orders of magnitude of memory without introducing errors.

\paragraph{Recovery Conditions.}

All trees returned by \algname are guaranteed to lie within the specified budget $\varepsilon_\textrm{abs}$, meaning precision is easy to guarantee even when using an approximate proxy. However, perfect recall is not guaranteed: a proxy may exceed the budget on a subproblem even when an optimal subtree for that subproblem would remain feasible. In this case, some valid Rashomon set members will not be recovered. Theorems \ref{thm:recovery} and \ref{thm:root_gap_recovery} provide sufficient conditions to guarantee a decision tree is recovered by \algnamenospace. 

\begin{restatable}[Frontier cut recovery condition]{theorem}{RecoveryTheorem}
\label{thm:recovery}
Let $t$ be a tree within the root budget $\varepsilon_\textrm{abs}$. Fix any root or internal node $n_{d'}$ of $t$ at depth $d'$ , and let $n_1, n_2, \dots, n_{d'-1}, n_{d'}$
be the nodes on the path from the root to $n_{d'}$ (excluding the root $r$). For each $n_i$ on this path, let $s_i$ denote its sibling in $t$, and let $(n_{d'})_\textrm{left}$ and $(n_{d'})_\textrm{right}$ 
denote the two children of $n_{d'}$. Let $d_u$ denote the remaining depth budget for internal node $u$. Suppose that 
\[
\sum_{u \in \{s_1,\ldots,s_{d'},\,(n_{d'})_\textrm{left},\,(n_{d'})_\textrm{right}\}}
\textsc{Proxy}\bigl(D_u,d_u, \gamma\bigr)
\;\le\; \varepsilon_{\mathrm{abs}}.
\]
Then, \textsc{\algnamenospace} will not prune exploration of $(n_{d'})_\textrm{left}$ and $(n_{d'})_\textrm{right}$. If this holds for all internal nodes, then $t$ will be in the set returned by \textsc{\algnamenospace}. 
\end{restatable}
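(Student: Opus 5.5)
We argue by induction down the path $r, n_1, \dots, n_{d'}$, tracking the budget that \textsc{\algnamenospace} assigns to each node on it. For a node $u$ on the path, let $\varepsilon_u$ denote the final (largest) budget with which \textsc{\algnamenospace} is invoked on $D_u$. The invariant we aim for is
\[
\varepsilon_{n_i} \;\ge\; \varepsilon_{\mathrm{abs}} - \sum_{j=1}^{i} \textsc{Proxy}(D_{s_j}, d_{s_j}, \gamma),
\]
with $\varepsilon_r = \varepsilon_{\mathrm{abs}}$ as the base case. The argument rests on two basic facts.

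\emph{First}, Algorithm~\ref{alg:multipass} only ever loosens budgets. In the first pass the left sibling receives $\varepsilon - P_R$, where $P_R$ is the proxy value of the right sibling. After solving one side, the other side receives $\varepsilon - G.\textit{min\_objective}$.

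\emph{Second}, whenever the sibling's solve returns a nonempty graph, its min objective is at most that sibling's proxy value. By Definition~\ref{def:proxy}, the proxy tree's children have proxy values no larger than their objectives. Induction on depth then shows the proxy tree itself lies within any budget $\ge \textsc{Proxy}(D_u,d_u,\gamma)$, so \textsc{\algnamenospace} run with such a budget finds a tree of objective $\le \textsc{Proxy}(D_u,d_u,\gamma)$. Hence $G.\textit{min\_objective} \le \textsc{Proxy}$ of the sibling. This holds for the right sibling provided its budget reaches its proxy value, and symmetrically for the left sibling.

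Combining the two facts, each path node $n_i$ eventually receives a budget of at least its parent's budget minus $\textsc{Proxy}(D_{s_i},d_{s_i},\gamma)$. The one case needing care is the first pass when $n_i$ is a right child. Here the left sibling $s_i$ is solved with budget $\varepsilon_{n_{i-1}} - P_R$, where $P_R$ is the proxy value of $n_i$. We must check that this budget is at least $\textsc{Proxy}(D_{s_i},\cdot)$, so that the sibling's min objective is at most its proxy value and $n_i$ then receives $\ge \varepsilon_{n_{i-1}} - \textsc{Proxy}(D_{s_i})$. This reduces to the inequality $\textsc{Proxy}(D_{s_i}) + \textsc{Proxy}(D_{n_i}) \le \varepsilon_{n_{i-1}}$.

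At this point we apply the hypothesis, together with the observation that along the path the proxy of a node is at most the proxy of its subtree cut. Again by Definition~\ref{def:proxy} and nonnegativity, $\textsc{Proxy}(D_{n_i}) \le \textsc{Proxy}(D_{(n_{d'})_\mathrm{left}}) + \textsc{Proxy}(D_{(n_{d'})_\mathrm{right}}) + \sum_{j>i}\textsc{Proxy}(D_{s_j})$. I expect this step to be the main obstacle. Definition~\ref{def:proxy} only bounds the proxy's \emph{own} children, not arbitrary splits of $t$. So we cannot directly compare $\textsc{Proxy}(D_{n_i})$ to proxies on $t$'s frontier.

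To get around this, I will avoid needing that comparison. The pruning test at $n_{i-1}$ for the split of $t$ is $P_L + P_R \le \varepsilon_{n_{i-1}}$, applied to the children $s_i, n_i$. For the frontier node itself, the relevant test is at $n_{d'}$: $\textsc{Proxy}((n_{d'})_\mathrm{left}) + \textsc{Proxy}((n_{d'})_\mathrm{right}) \le \varepsilon_{n_{d'}}$. This follows from the invariant and the hypothesis.

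For the intermediate tests, I will apply the hypothesis at each internal node of the path. The theorem's final clause assumes the condition at every internal node of $t$, and for the first claim the condition at the ancestors is needed anyway for $n_{d'}$ to be reached. The condition at $n_{i-1}$ gives exactly $\sum_{j<i}\textsc{Proxy}(D_{s_j}) + \textsc{Proxy}(D_{s_i}) + \textsc{Proxy}(D_{n_i}) \le \varepsilon_{\mathrm{abs}}$, which is what the right-child first-pass argument needs. So the induction goes through using the hypothesis at $n_{i-1}$ for step $i$, and at $n_{d'}$ for the non-pruning conclusion.

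For the final statement, suppose no split of $t$ is ever pruned. Then for every leaf $\ell$ of $t$, the budget reaching $\ell$ is at least $\varepsilon_{\mathrm{abs}}$ minus the objectives of the other subtrees of $t$ along the path. This uses the refinement loop, which iterates until budgets reflect the min objectives. By monotonicity these are at most the objectives of $t$'s own subtrees. Since $\mathrm{Obj}(t) \le \varepsilon_{\mathrm{abs}}$, each leaf satisfies $C_b \le$ its budget, so it is added by \textsc{AddLeaf}. Assembling through \textsc{AddSplit}, $t$ is enumerable from the returned graph.
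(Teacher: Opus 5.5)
Your budget-propagation argument for the non-pruning claim is the paper's own. The paper also tracks $\varepsilon_u \ge \varepsilon_{\mathrm{abs}}-\sum_{i\le d'}\textsc{Proxy}(D_{s_i},d_{s_i},\gamma)$ down the path, written as an equality ``in the worst case.'' Your first-pass check when $n_i$ is a right child is exactly its appeal to Corollary~\ref{cor:proxy_feasible_all_nodes}. One small correction: for the single-node claim you should not use the frontier-cut condition at the ancestors. It is not needed for $n_{d'}$ to be reached, since refinement can enlarge ancestor budgets. If $n_{d'}$ is explored at all, each ancestor split of $t$ has already passed its test $\textsc{Proxy}(D_{s_i})+\textsc{Proxy}(D_{n_i})\le\varepsilon_{n_{i-1}}$, and that is the only inequality your induction uses.

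The genuine gap is your last paragraph. The assertion that each sibling's $\textit{min\_objective}$ is at most $\mathrm{Obj}(t|_s)$ does not follow from monotonicity. It says $G_s$ already contains a tree as good as $t|_s$. That requires the budget at $s$ to be at least $\mathrm{Obj}(t|_s)$, and that budget depends on the $\textit{min\_objective}$ of $s$'s sibling on the path, so the argument is circular. Refinement really is needed here: with parent budget $10$, $\mathrm{Obj}(t_L)=6$, $\mathrm{Obj}(t_R)=4$ and $P_L=P_R=5$, the split passes but the first-pass left budget $5$ excludes $t_L$.

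To close it, take a node of $t$ solved with budget $E\ge\mathrm{Obj}(t_L)+\mathrm{Obj}(t_R)$ whose split passes $P_L+P_R\le E$. The final budgets in \textsc{Solve\_Siblings} satisfy $\varepsilon_L\ge E-P_R$ and $\varepsilon_R\ge E-P_L$, so $\varepsilon_L+\varepsilon_R\ge E$. At loop exit, $\varepsilon_L\ge E-G_R.\textit{min\_objective}$ and $\varepsilon_R\ge E-G_L.\textit{min\_objective}$ for the final graphs. Now induct bottom-up on the claim that a call on $D_v$ with budget at least $\mathrm{Obj}(t|_v)$ and at least your path lower bound contains $t|_v$. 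If $\varepsilon_L<\mathrm{Obj}(t_L)$, then $G_R.\textit{min\_objective}>E-\mathrm{Obj}(t_L)\ge\mathrm{Obj}(t_R)$, so $t_R\notin G_R$. By induction $\varepsilon_R<\mathrm{Obj}(t_R)$, contradicting $\varepsilon_L+\varepsilon_R\ge E$. The right side is symmetric, and at a leaf $C_b=\mathrm{Obj}(t|_\ell)$ is then within budget.

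The paper's proof does not supply this step either; it passes directly from ``every split of $t$ is expanded'' to ``every root-to-leaf path is materialized.'' You were right to flag it, but your version does not yet prove it.
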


\begin{figure}[!tbp]
    \centering
    \includegraphics[width=0.85\linewidth]{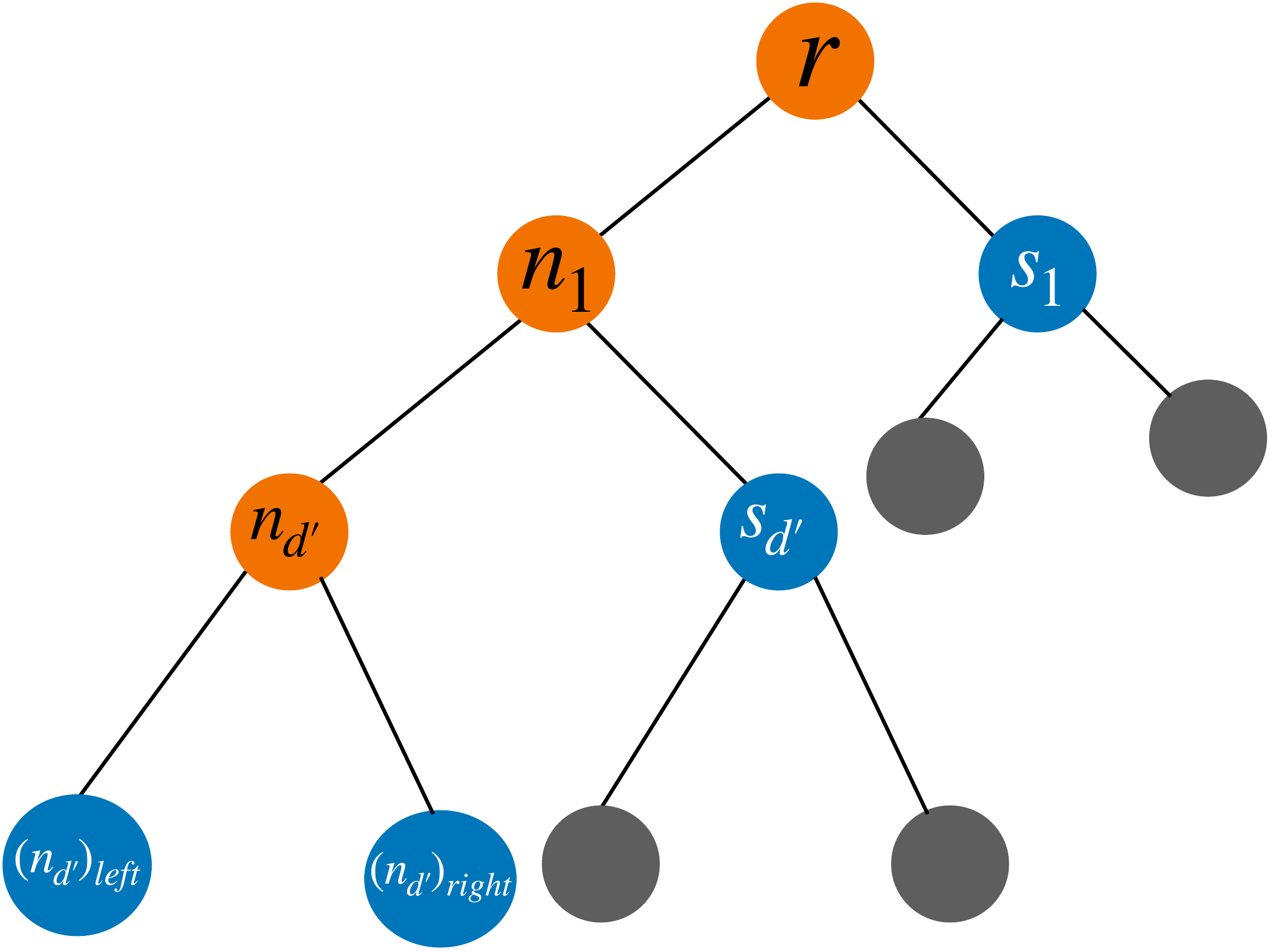}
    \caption{Frontier cut for internal node $n_{d'}$. The path to the node is shaded in orange. The nodes that are summed in the frontier cut are depicted in blue.}
    \label{fig:frontier}
\end{figure}

To illustrate the condition in Theorem \ref{thm:recovery}, we introduce the notion of a \emph{frontier cut}, depicted in Figure \ref{fig:frontier}. A frontier cut for a node $N$ of a given tree corresponds to those subproblems not already encountered along the path from the root to node $N$. Theorem \ref{thm:recovery} tells us that if, for each internal node of a tree, proxy algorithm completions along that node's frontier cut remain within $\varepsilon_\textrm{abs}$, then we are guaranteed to include that tree in \algnamenospace's output.

Intuitively, we would expect that conditioning on splits from trees that are in the Rashomon set should generally improve the performance of the proxy. Thus, we expect the condition from Theorem \ref{thm:recovery} to be satisfied for most trees in the Rashomon set. Even when some tree in the Rashomon set does not satisfy this condition, however, we are often still able to recover the tree, thanks to Algorithm \ref{alg:multipass}. The repeated passes in that subroutine allow \algname to widen the budget provided in subproblem exploration beyond what the initial proxy completions along a frontier cut would allow. Corollary \ref{cor:proxywrapper_feasible_all_nodes} in the appendix formalizes some of the objective improvements that will occur along sibling nodes as a consequence of these repeated passes.

A related condition for recovering a tree in the Rashomon set comes from Corollary \ref{thm:root_gap_recovery}. This corollary implies that if the gap between the proxy and an optimal tree completion decreases (or does not increase too much) further down a tree, then our approach will recover that tree. This aligns with observations about monotonically decreasing optimal-greedy gaps in Rashomon sets \citep{babbar2025near}. It also provides a multiplicative factor $\sigma$ that, when set appropriately, guarantees recovery of all trees in $\mathcal{R}_{\varepsilon_{\textrm{mult}}}$, even if the earlier condition does not hold.

\begin{restatable}[Slack needed for recovery under approximate proxies]{corollary}{GeneralSlackRecoveryCorollary}
\label{thm:root_gap_recovery}
For leaf penalty $\gamma$, depth $d$ and dataset $D$, fix a tree $t_r \in \mathcal{R}_{\varepsilon_{\textrm{mult}}}(D,d,\gamma)$ and initialize \algname with
\[
\varepsilon_{\textrm{abs}} \gets \sigma\,(1+\varepsilon_{\textrm{mult}})\,\textsc{Proxy}(D,d,\gamma).
\]
Define
\[
\begin{aligned}
\alpha \;&=\; \frac{\textsc{Proxy}(D,d,\gamma)}{\min_{t \in \mathcal{T}_d}\mathrm{Obj}(t,D,\gamma)}, \\[0.4em]
\beta \;&=\; \max_{u \in \mathrm{InternalNodes}(t_r)}
\left\{
\frac{\textsc{Proxy}(D_u,d_u,\gamma)}{\min_{t \in \mathcal{T}_{d_u}} \mathrm{Obj}(t,D_u,\gamma)}
\right\}, \\[0.4em]
\eta_r &=\; \frac{\mathrm{Obj}(t_r,D,\gamma)}{(1+\varepsilon_{\textrm{mult}})\min_{t \in \mathcal{T}_d}\mathrm{Obj}(t,D,\gamma)} \in [\frac{1}{1+\varepsilon_{\textrm{mult}}}, 1].
\end{aligned}
\]

Then, \textsc{\algnamenospace} will return $t_r$ with $\sigma=\max\!\left\{1,\frac{\beta}{\alpha}\eta_r\right\} $

\end{restatable}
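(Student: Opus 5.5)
The plan is to reduce the statement to the frontier cut condition of Theorem~\ref{thm:recovery}. It suffices to show that, for every internal node $n_{d'}$ of $t_r$ (including the root), the sum of proxy values over the frontier cut $\{s_1,\dots,s_{d'},(n_{d'})_\textrm{left},(n_{d'})_\textrm{right}\}$ is at most $\varepsilon_{\textrm{abs}}=\sigma(1+\varepsilon_{\textrm{mult}})\textsc{Proxy}(D,d,\gamma)$. Once that holds at every internal node, Theorem~\ref{thm:recovery} gives that $t_r$ is returned.

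First I bound each proxy term on the cut by a multiple of the corresponding subtree objective in $t_r$. Every node $u$ in a frontier cut is either a leaf of $t_r$ or an internal node of $t_r$.
\begin{itemize}
\item If $u$ is internal, the definition of $\beta$ gives $\textsc{Proxy}(D_u,d_u,\gamma)\le \beta\min_{t\in\mathcal{T}_{d_u}}\mathrm{Obj}(t,D_u,\gamma)\le \beta\,\mathrm{Obj}((t_r)_u,D_u,\gamma)$, since the subtree of $t_r$ at $u$ lies in $\mathcal{T}_{d_u}$.
\item If $u$ is a leaf, Algorithm~\ref{alg:pilotalgorithm} returns $\min\{\text{leaf\_obj},\dots\}$, so the proxy is at most the best leaf objective. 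That is at most the objective of the leaf in $t_r$, which is in turn at most $\beta$ times it, using $\beta\ge 1$.
\end{itemize}
The claim $\beta\ge1$ holds because the proxy returns the objective of an actual tree in $\mathcal{T}_{d_u}$, so it is never below the optimum. (If the theorem is meant for a general proxy, the leaf case needs a separate assumption; I would note this and rely on the proxy never doing worse than a leaf.)

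Next I use that the datasets $D_u$ over a frontier cut partition $D$, and the subtrees of $t_r$ rooted at those nodes together make up all leaves of $t_r$. Hence the objective is additive over the cut: $\sum_u \mathrm{Obj}((t_r)_u,D_u,\gamma)=\mathrm{Obj}(t_r,D,\gamma)$. Summing the per-node bounds then gives
\[
\sum_u \textsc{Proxy}(D_u,d_u,\gamma)\le \beta\,\mathrm{Obj}(t_r,D,\gamma)=\beta\,\eta_r(1+\varepsilon_{\textrm{mult}})\,\mathrm{OPT},
\]
where $\mathrm{OPT}=\min_{t\in\mathcal{T}_d}\mathrm{Obj}(t,D,\gamma)$.

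Finally, the definition of $\alpha$ gives $\mathrm{OPT}=\textsc{Proxy}(D,d,\gamma)/\alpha$, so the right-hand side equals $\frac{\beta}{\alpha}\eta_r(1+\varepsilon_{\textrm{mult}})\textsc{Proxy}(D,d,\gamma)$. This is at most $\varepsilon_{\textrm{abs}}$ exactly when $\sigma\ge \frac{\beta}{\alpha}\eta_r$. Taking the max with $1$ keeps the budget at least as large as the unscaled one, so the cutoff still admits $t_r$ itself, whose objective is at most $(1+\varepsilon_{\textrm{mult}})\mathrm{OPT}\le(1+\varepsilon_{\textrm{mult}})\textsc{Proxy}(D,d,\gamma)$.

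The main obstacle is the bookkeeping in the first step: making the per-node bound uniform across cut nodes that are leaves versus internal nodes, and checking that the depth budgets $d_u$ match the depth classes used in $\beta$. The root case $d'=0$ has a cut consisting of just the root's two children, and I would verify it fits the same argument.
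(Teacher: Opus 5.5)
Your proposal is correct and follows the paper's proof essentially step for step. Both reduce to the frontier-cut condition of Theorem~\ref{thm:recovery}, bound each cut term by $\beta$ times the corresponding piece of $t_r$, use additivity of the objective over the cut to get $\beta\,\mathrm{Obj}(t_r,D,\gamma)$, and then substitute $\eta_r$ and $\alpha$. Your separate handling of cut nodes that are leaves of $t_r$ (the proxy never exceeds the best leaf objective, and $\beta\ge 1$) is more careful than the paper: it applies the $\beta$ bound to every node of the cut even though $\beta$ is only defined over internal nodes, so your caveat that a general proxy needs this leaf property is well placed.
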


In the worst case, $\alpha = 1$ and $\beta > 1$, meaning the proxy is exact at the root but not later down the tree. In this setting, $\textsc{\algnamenospace}$ still recovers every tree in $\mathcal{R}_{\varepsilon_{\textrm{mult}}}$ provided the Rashomon bound is rescaled by $\sigma \geq \beta$. 

Based on the preceding theorems, recovery can be improved either by introducing slack into the root budget (\autoref{thm:root_gap_recovery}) or by using a stronger proxy algorithm, which yields better estimates of the optimal subproblem objectives (\autoref{thm:recovery}). Both adjustments incur additional runtime cost, but can be applied after the initial execution of \algnamenospace, reusing previous computation.

\section{Experiments} \label{sec:experiments}

\begin{table*}[!t]
\centering
\scriptsize
\setlength{\tabcolsep}{3pt}
\renewcommand{\arraystretch}{1.05}

\begin{tabular}{@{}lrr rr rr rr rr@{}}
\toprule
 &  &  &
\multicolumn{2}{c}{\algname} &
\multicolumn{2}{c}{TreeFARMS} &
\multicolumn{2}{c}{SORTeD} &
\multicolumn{2}{c}{RESPLIT} \\
\cmidrule(lr){4-5} \cmidrule(lr){6-7} \cmidrule(lr){8-9} \cmidrule(lr){10-11}
Dataset & $n$ & $k$
& Time (s) & Peak MB
& Time (s) & Peak MB
& Time (s) & Peak MB
& Time (s) & Peak MB \\
\midrule

Churn & 5000 & 472 & \textbf{34.84} & \textbf{279.41} & -- & -- & 123776.11 & 22012.99 & 2564.30 & 2792.24 \\
Madeline & 3140 & 451 & \textbf{2971.65} & 29094.20 & -- & -- & -- & -- & 133189.38 & \textbf{26621.99} \\
Electricity & 38474 & 264 & \textbf{306.94} & \textbf{692.10} & -- & -- & 114325.61 & 23777.65 & 7619.63 & 6402.99 \\
Shopping & 12330 & 243 & \textbf{58.88} & \textbf{470.00} & -- & -- & 24151.41 & 7957.33 & 1760.04 & 2195.54 \\
Christine & 5418 & 231 & \textbf{944.27} & 10439.32 & -- & -- & 38970.60 & 12709.88 & 12625.37 & \textbf{3096.83} \\
Credit & 30000 & 225 & \textbf{26.03} & \textbf{249.60} & -- & -- & 65145.23 & 14300.78 & 2316.15 & 4355.73 \\
Adult & 48842 & 209 & \textbf{272.53} & \textbf{682.15} & -- & -- & 56440.23 & 12079.80 & 5494.42 & 5782.34 \\
Jasmine & 2984 & 207 & \textbf{22.77} & \textbf{451.61} & -- & -- & 5211.85 & 2705.49 & 517.17 & 745.36 \\
News & 39644 & 196 & \textbf{596.47} & \textbf{1480.48} & -- & -- & 114514.40 & 30426.06 & 13544.76 & 6082.23 \\
Bike & 17379 & 164 & \textbf{35.40} & \textbf{359.94} & -- & -- & 6533.75 & 3546.60 & 1022.18 & 1302.75 \\
Helena & 65196 & 156 & \textbf{4.74} & \textbf{290.96} & 38.32 & 1983.32 & 100.29 & 779.31 & 564.13 & 1755.30 \\
Jannis & 57580 & 106 & \textbf{327.54} & \textbf{1458.63} & -- & -- & 5277.68 & 5137.22 & 5587.19 & 2462.35 \\
Bank & 45211 & 97 & \textbf{2.43} & \textbf{195.74} & -- & -- & 1808.61 & 1189.15 & 164.34 & 1468.21 \\
Covertype & 581012 & 96 & \textbf{358.70} & \textbf{1301.23} & -- & -- & 64672.88 & 16107.48 & 10101.87 & 8631.58 \\
Droid & 29332 & 84 & \textbf{0.78} & \textbf{165.04} & -- & -- & 882.65 & 697.19 & 78.89 & 730.01 \\
Higgs & 11000000 & 84 & \textbf{2374.91} & \textbf{21537.79} & -- & -- & -- & -- & -- & -- \\
Magic & 19020 & 80 & \textbf{31.05} & \textbf{446.82} & -- & -- & 268.66 & 850.65 & 595.96 & 527.28 \\
Madelon & 2000 & 73 & \textbf{8.19} & 292.36 & -- & -- & 97.75 & 409.84 & 204.01 & \textbf{276.30} \\
Heloc & 2502 & 65 & \textbf{0.38} & \textbf{140.59} & -- & -- & 52.34 & 313.95 & 8.79 & 205.18 \\
Wine & 6497 & 64 & \textbf{0.24} & \textbf{135.34} & -- & -- & 62.98 & 307.63 & 12.54 & 255.48 \\
Compas & 4966 & 44 & \textbf{0.09} & \textbf{130.47} & 65.08 & 3742.00 & 7.23 & 163.50 & 11.90 & 159.61 \\
Poker & 1025010 & 40 & \textbf{13.71} & \textbf{952.10} & -- & -- & 7370.45 & 7475.91 & 657.48 & 5958.07 \\
Diabetes & 253680 & 33 & \textbf{0.79} & \textbf{291.69} & 553.31 & 28336.29 & 683.31 & 1067.69 & 48.68 & 1049.81 \\
Taxi & 1224158 & 27 & \textbf{1.18} & \textbf{894.65} & 12.50 & 2874.38 & 364.85 & 3318.65 & 32.35 & 2030.43 \\

\bottomrule
\end{tabular}
\caption{Runtime and peak memory usage at $\lambda=0.02$, $\varepsilon=0.03$, depth $=5$.
Peak MB reports the peak resident set size (RSS) during the script to load packages, the dataset, and compute the Rashomon set. `--' indicates that the method did not finish in 90 hours or with 200GB of RAM.}
\label{tab:cfg02-003-d5-time-delta-shortened}
\end{table*}

Our experimental evaluation stress-tests \algname across 50 dataset-binarization combinations (with up to two binarizations per dataset), spanning up to 11 million samples and 472 binary features. We primarily binarize continuous features using the threshold-guessing technique in \citet{gosdt_guesses} (\autoref{sec:datasets} has additional information on datasets and binarizations).
The goal of these experiments is to assess whether \algname can scale to the most demanding and computationally challenging Rashomon-set problems encountered in practice.

In our experiments, we evaluate three key dimensions of performance:
(1) time and memory usage required to generate a Rashomon set;
(2) approximation quality, measured primarily through recall;
\footnote{We focus on recall rather than precision, because precision in this context amounts to a simple check of whether returned trees' training objectives are sufficiently low; since we return our trees in sorted order, we can achieve perfect precision without any drop to recall for any user-provided absolute threshold.} and
(3) ability to recover optimal decision trees.
We compare \algname to SORTeD \cite{arslan2025sorted} and TreeFARMS \cite{xin2022exploring}—the two existing exact Rashomon-set enumeration algorithms—as well as RESPLIT \cite{babbar2025near}, a state-of-the-art approximation algorithm for Rashomon sets. We also include a baseline that repeatedly trains the proxy on bootstrapped versions of the dataset (for as long as \algname ran) to quantify how much \algnamenospace's search procedure improves on the underlying proxy algorithm's performance.

\paragraph{Timing and Memory.}
Table \ref{tab:cfg02-003-d5-time-delta-shortened} contains timing and memory profiles from a large range of datasets. The results for the full list of over 50 datasets are given in the \autoref{appendix:experiments}.

On all datasets with 100 or fewer binary features  (except for Higgs or Covertype, which have large sample sizes), \algname finished in 32 seconds or less. In contrast, SORTeD took up to 18 hours, and TreeFARMS frequently did not finish, because it ran out of memory. On Higgs, no other method finished within 90 hours or 200 GB of RAM, yet \algname finished in 40 minutes with no extra memory beyond what was required to load the dataset. The difference becomes more dramatic as the number of features increases: \algname takes 35 seconds on Churn, whereas SORTeD takes nearly 35 hours.

These results demonstrate that \algname significantly improves the scalability of Rashomon set computation by orders of magnitude, outperforming RESPLIT by up to 2 orders of magnitude, TreeFARMS by up to 5 orders of magnitude (and handling many datasets that TreeFARMS cannot), and SORTeD by up to 3 orders of magnitude. In terms of memory consumption, \algname is typically 5 $\times$ more memory efficient than RESPLIT, up to 2 orders of magnitude more memory efficient than SORTeD, and up to 4 orders of magnitude more memory efficient than TreeFARMS. Appendix~\ref{appendix:scaling} shows that \algname scales much better with depth than approximate methods, achieving up to four orders of magnitude gains in runtime and memory for depth-7 Rashomon sets versus RESPLIT, while improving approximation quality. In this deeper regime, SORTeD fails to complete within 150 hours on these datasets, whereas \algname finishes in only 11 seconds.

\paragraph{Approximation Quality.}

\begin{figure*}[t]
    \centering
    \includegraphics[width=0.9\linewidth]{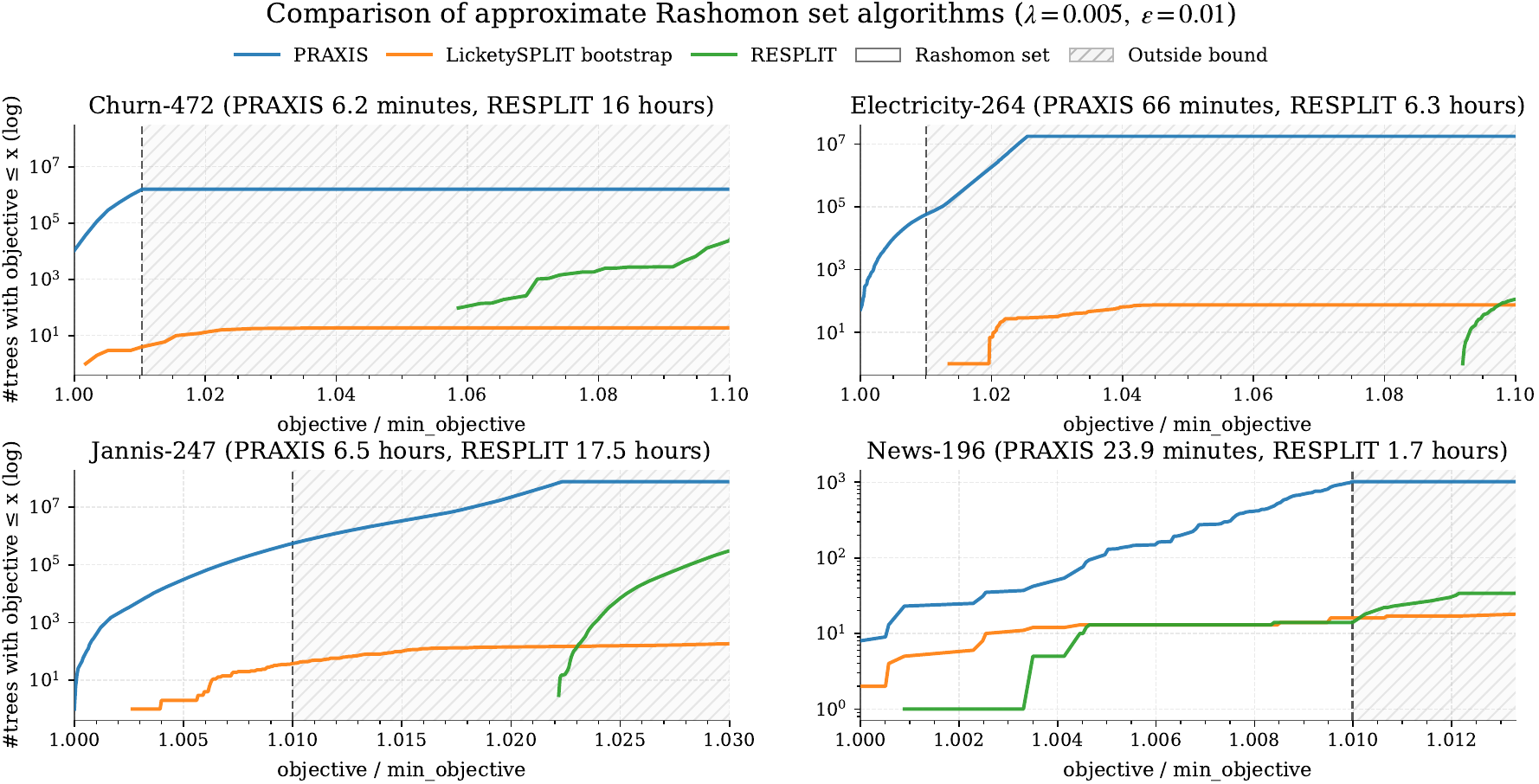}
    \caption{Approximation quality of non-optimal methods for the 4 datasets where optimal methods either take $>40$ hours or $>200$ GB of RAM.  Each plot shows the number of depth 5 trees with objective value within an $x$ factor of the minimum objective found by any method. The dashed vertical line shows the Rashomon bound estimated as $(1 + \epsilon)$ times the minimum objective found by any method.}
    \label{fig:approx_placeholder}
\end{figure*}

\begin{table}[!ht]
\footnotesize
\centering
\setlength{\tabcolsep}{15pt}
\renewcommand{\arraystretch}{0.92}
\begin{tabular}{@{}l l l@{}}
\toprule
\textbf{Dataset} & \multicolumn{2}{c}{\textbf{Recall}} \\
\cmidrule(lr){2-3}
& $\lambda{=}0.005$ & $\lambda{=}0.02$ \\
\midrule
Churn-472         & $0.997\!\pm\!0.005$ & $1.000\!\pm\!0.000$ \\
Electricity-264   & $0.994\!\pm\!0.003$ & $1.000\!\pm\!0.000$ \\
Shopping-243      & $0.984\!\pm\!0.034$ & $1.000\!\pm\!0.000$ \\
Christine-231     & $1.000\!\pm\!0.000$ & $0.994\!\pm\!0.011$ \\
Credit-225        & $1.000\!\pm\!0.000$ & $1.000\!\pm\!0.000$ \\
Adult-209         & $1.000\!\pm\!0.000$ & $1.000\!\pm\!0.000$ \\
Jasmine-207       & $0.993\!\pm\!0.009$ & $1.000\!\pm\!0.000$ \\
News-196          & $1.000\!\pm\!0.000$ & $1.000\!\pm\!0.000$ \\
Bike-164          & $0.986\!\pm\!0.010$ & $1.000\!\pm\!0.000$ \\
Helena-156        & $1.000\!\pm\!0.000$ & $1.000\!\pm\!0.000$ \\
Jannis-106        & $0.995\!\pm\!0.008$ & $1.000\!\pm\!0.000$ \\
Bank-97           & $1.000\!\pm\!0.000$ & $1.000\!\pm\!0.000$ \\
Covertype-96      & $1.000\!\pm\!0.000$ & $1.000\!\pm\!0.000$ \\
Droid-84          & $1.000\!\pm\!0.000$ & $1.000\!\pm\!0.000$ \\
Magic-80          & $0.997\!\pm\!0.004$ & $0.998\!\pm\!0.004$ \\
Madelon-73        & $1.000\!\pm\!0.000$ & $1.000\!\pm\!0.000$ \\
Heloc-65          & $0.999\!\pm\!0.000$ & $1.000\!\pm\!0.000$ \\
Wine-64           & $1.000\!\pm\!0.000$ & $1.000\!\pm\!0.000$ \\
Compas-44         & $1.000\!\pm\!0.000$ & $1.000\!\pm\!0.000$ \\
Poker-40          & $1.000\!\pm\!0.000$ & $1.000\!\pm\!0.000$ \\
Diabetes-33       & $1.000\!\pm\!0.000$ & $1.000\!\pm\!0.000$ \\
Taxi-27           & $1.000\!\pm\!0.000$ & $1.000\!\pm\!0.000$ \\
\bottomrule
\end{tabular}
\caption{Rashomon set recall (mean $\pm$ standard deviation over 5 bootstraps) of \algname relative to ground truth enumeration for $\varepsilon_{\text{mult}}=0.03$ and depth $d=5$. Format: Dataset-NumBinaryFeatures.}
\label{tab:rashomon-recall}
\end{table}

We now establish that \algname approximates the Rashomon set with near-perfect recall across all tested datasets and far better than existing approximations. Table \ref{tab:rashomon-recall} shows the recall of \algname on datasets where we can directly compare to the ground truth Rashomon set. We see that \algname frequently achieves a recall of 1.0 across all bootstraps of datasets. Even when it does not, it is always above 0.98 on average, highlighting that the outcomes of downstream tasks are unlikely to change when using this subset of trees.

\paragraph{Benchmarking Trees Found.}

To investigate settings where exact Rashomon set computations are prohibitively slow/memory intensive but \algname can still complete execution for a reasonable time and memory usage, we compare our approximation quality to RESPLIT (run to completion) and bootstrapping our proxy algorithm (LicketySPLIT, run for as long as \algname ran). Figure \ref{fig:approx_placeholder} shows the cumulative number of trees found by each method with objectives less than or equal to a given value. These are representative results from large datasets; the results for additional datasets can be found in Appendix \ref{sec:additional-cdf}. In the case of the Churn and Electricity datasets, \algname finds more than one million trees that are better than the best tree RESPLIT finds; moreover, none of the trees found by RESPLIT lie in the desired Rashomon set. Across all 4 cases displayed in Figure \ref{fig:approx_placeholder}, RESPLIT and bootstrapped LicketySPLIT returned 0 or a small fraction of the estimated Rashomon set.

\paragraph{Finding Optimal Trees.}
While the primary focus of our framework is to approximate Rashomon sets, our work also functions as a way to find near-optimal individual decision trees (by taking the first tree returned by \algnamenospace).

Across regularization values $\lambda \in \{0.005, 0.01, 0.02\}$, and up to 50 dataset–binarization pairs for which both the globally optimal solver STreeD \cite{streed} and \algname (with $\varepsilon_{\textrm{mult}}$ = 0.03) completed successfully (the number of completed runs varies slightly across $\lambda$), we observe exact agreement in the minimum objective value between STreeD and \algname on every dataset. Even when setting $\varepsilon_{\mathrm{mult}} = 0$, \algname recovers the optimal tree in the vast majority of cases. Specifically, for $\lambda = 0.02$, \algname ($\varepsilon_{\mathrm{mult}} = 0$) recovers the optimal tree on all 50 datasets; for $\lambda = 0.01$, it fails on only two datasets; and for $\lambda = 0.005$, it fails on only three datasets. In the rare failure cases, the returned objective exceeded the optimum by only a very small multiplicative factor (e.g., 1.00069).

Given this, \algname can be used to nearly always recover optimal decision trees and does so up to 3 orders of magnitude faster than GOSDT and STreeD \cite{gosdt, streed} (see Appendix \ref{section:optimal-tree-finder}). As an illustration, on News with $\lambda = 0.005$, \algname computes the optimal tree in under 3 minutes; by contrast, STreeD takes more than twenty hours, and GOSDT runs out of memory.

\section{Conclusion}
We introduce \algnamenospace, a fast, memory-efficient algorithm for estimating Rashomon sets of decision trees. \algname 
efficiently prunes candidates using proxy completions as computationally light subroutines for candidate split evaluations. 
Leveraging these proxies and a mechanism to refine bounds during search, \algname approximates the Rashomon set with near-perfect recall and delivers orders-of-magnitude speedups over existing methods. 
These results allow for high-quality approximations of Rashomon sets on datasets far larger than those accessible to existing methods, substantially expanding the practicality of Rashomon set computation. Future work could explore how these ideas extend to single-tree optimization and to broader classes of tree and partitioning problems in machine learning.

\section*{Acknowledgments}

We acknowledge funding from the U.S. Department of Energy under DE-SC0023194, and the National Institute On Drug Abuse of the National Institutes of Health under R01DA054994. Content does not necessarily represent official views of the United States Government nor any agency thereof. We acknowledge the support of the Natural Sciences and Engineering Research Council of Canada (NSERC).
Nous remercions le Conseil de recherches en sciences naturelles et en génie du Canada (CRSNG) de son soutien. We would also like to thank Yixiao Wang and Luke Moffett for comments and suggestions on our work, as well as our reviewers.

\section*{Impact Statement}
This paper presents work on interpretable machine learning, which is essential for many ethical AI applications.

\bibliography{references}
\bibliographystyle{icml2026}

\newpage
\appendix
\onecolumn

\printappendixcontents
\appendixtocactivetrue

\newpage
\section{Theoretical Results}\label{app:theory}

\subsection{Proofs of Claims}

\RuntimeTheorem*
\begin{proof}[Proof of Theorem \ref{thm:runtime}]
    We split the proof into a simpler case where iterative budget refinement does not run and then argue that whenever iterative budget refinement does run, we know a subproblem has at least one more tree than we previously expected, so the earlier bound was pessimistic and thus allows us to solve the subproblem again with a larger budget.
    
    \paragraph{Simpler Case: }

    First, consider the case where Algorithm~\ref{alg:multipass} does not iteratively refine budgets. That is, we use Algorithm~\ref{alg:singlepass} in place of it. This way, we will never solve a subproblem identified by a sequence of splits more than 1 time.

\begin{algorithm}[H]
\caption{$\textsc{Singlepass}(D_L,D_R,\gamma,d,\varepsilon_{\textrm{abs}},P_R)$}
\label{alg:singlepass}
\begin{algorithmic}[1]
\REQUIRE Left/right datasets $D_L,D_R$, remaining depth $d$, regularization $\gamma$,
         parent budget $\varepsilon_{\textrm{abs}}$, proxy cost $P_R$,

\STATE $b_L \gets \varepsilon_{\textrm{abs}} - P_R$
\STATE $\mathcal{T}_L \gets \textsc{\algname}(D_L,d,\gamma,b_L)$
\STATE $m_L \gets \mathcal{T}_L.\textit{min\_objective}$

\STATE $b_R \gets \varepsilon_{\textrm{abs}} - m_L$
\STATE $\mathcal{T}_R \gets \textsc{\algname}(D_R,d,\gamma,b_R)$

\STATE \textbf{return} $(\mathcal{T}_L,\mathcal{T}_R)$
\end{algorithmic}
\end{algorithm}

    Consider the search graph created by \algnamenospace. 
      
    In this case, each node corresponds to a subproblem defined by the sequence of splits leading to it, and there is no evaluation of the same subproblem with different budgets. By construction, \algname  expands only those splits whose LicketySPLIT completions remain in the Rashomon set (based on the pruning  procedure in Algorithm \ref{alg:\algname-trie}).
Fix an AND/OR graph level $d' < d$, and let $r := d-d'$ denote the remaining depth budget at this
level. Let $u_{d'}$ be the number of OR nodes (subproblems) expanded at level $d'$, and
let $n_i$ be the dataset size at the $i^{th}$ such node. At each expanded node, \algname
evaluates all $k$ candidate splits to select the next batch of candidates.
Because the proxy algorithm is linear in the local subproblem size $n_i$, the work at node $i$ is
\[
\mathcal{O}\!\left(n_i k \bigl(g(k,r)\bigr)\right).
\]
Therefore, the total cost at level $d'$ is
\[
\mathcal{O}\!\left(\sum_{i=1}^{u_{d'}} n_i\, k \bigl(1+g(k,r)\bigr)\right).
\]

Since subproblems corresponding only to trees certified to be in the Rashomon set appear
in the AND/OR graph, the combined subproblem sizes at any level $d'$ cannot exceed $n|R|$. Indeed,
\begin{align}
\sum_{i=1}^{u_{d'}} n_i
&\le \sum_{\text{tree } t \in R}
   \sum_{\text{nodes of $t$ at level $d'$}} n_{u_t} \\
&= \sum_{\text{tree } t \in R} n \\
&= n|R|.
\end{align}
Substituting this bound yields that the total cost at level $d'$ is
\[
\mathcal{O}\!\left(n|R|\,k\,\bigl(g(k,r)\bigr)\right).
\]

Summing over all non-leaf levels $d'=0,1,\dots,d-1$ (equivalently, $r=1,2,\dots,d$), the total
runtime is
\[
\mathcal{O}\!\left(n|R|\,k \sum_{r=1}^{d}\bigl(g(k,r)\bigr)\right).
\]
If $g(k,r)$ is nondecreasing in $r$, then
$\sum_{r=1}^{d} g(k,r) \le d\,g(k,d)$, yielding the simplified bound
\[
\mathcal{O}\!\left(n|R|\,k\,d\,\bigl(g(k,d)\bigr)\right).
\]

At this point, the runtime is expressed in terms of the function $g$, which upper-bounds the proxy cost for a dataset. To express the bound directly in terms of the proxy algorithm’s runtime, we require a matching lower bound.

From the assumption that $$\textsc{ProxyCompute}(m,k,r) = \Theta(m\,g(k,r)),$$ we have that $$n\,g(k,d) = \mathcal{O}(\textsc{ProxyCompute}(n,k,d)).$$

Substituting into the runtime bound gives:

$$\mathcal{O}\!\left(
  n|R|\,k\,d\,g(k,d)
\right)
=
\mathcal{O}\!\left(
  |R|\,k\,d\cdot \textsc{ProxyCompute}(n,k,d)
\right).$$

Additionally, \algname may perform work at leaf nodes corresponding to
$r=0$ (equivalently, $d'=d$), where no further splits are evaluated and the
algorithm considers only leaf predictions. 
The work performed at such a node is linear in the size of the local
subproblem.
Summed over all leaf nodes represented in the AND/OR graph, this contributes at most
\(\mathcal{O}(n|R|)\) additional work and does not affect the overall asymptotic bound.

    \paragraph{General Case: }
    Now, consider the case where Algorithm~\ref{alg:multipass} does run—that is, \algname is run on at least one more time on one of the child subproblems with a larger budget. If the minimum objective is still the LicketySPLIT objective, the enumeration on the left subproblem is not rerun, and we reduce to the earlier case. Thus, in order for one side to be rerun and a larger budget to have been set, it implies that there exist at least two trees for the one child (the LicketySPLIT tree and the new minimum-objective tree) that could be combined with the LicketySPLIT tree for sibling node to be within the budget for the parent subproblem.
    
    Recall that \algname has a key invariant: any solution to a subproblem not pruned is part of at least one tree in the Rashomon set (this follows from \algnamenospace's construction, which only expands splits whose LicketySPLIT completions are within the budget). Thus, in our new call of \algnamenospace, we know that every subproblem visited is a part of a solution with the new minimum-objective tree on the other side. And, every subproblem visited in the original \algname call on one side is part of a solution with the LicketySPLIT tree on the other, so we know we're not double counting solutions.

Thus, even though we may solve a subproblem multiple times with different budgets, the number of times a subproblem (a sequence of splits) is solved is never more than the number of trees in which it appears. 

This fact implies that the combined subproblem sizes at any level $d'$ cannot exceed $n|R|$, the sum of all subproblem sizes across the Rashomon-set trees found for that depth, because any distinct subproblem is solved once, and any subproblem shared by $t$ trees is solved no more than $t$ times. Thus, even in this case, the size of subproblems solved is bounded by the same quantity, and as such, the asymptotic runtime is as well.

\end{proof}

\OptimalRuntime*
\begin{proof}
Starting from the initial condition, we have: 
\begin{align*}
    \forall q, |R_{\varepsilon_\textrm{abs}}(D, d, \gamma)| \in o\big(\frac{{k\choose d}}{(kd)^q}\big)
\end{align*}
applying the definition of little $o$, we know : 
\begin{align}
\forall q, \exists c, d_0, k_0, \forall k \ge k_0, d \ge d_0, |R_{\varepsilon_\textrm{abs}}(D, d, \gamma)| < c\big(\frac{{k\choose d}}{(kd)^q}\big) \nonumber \\
\forall q, \exists c, d_0, k_0, \forall k \ge k_0, d \ge d_0, |R_{\varepsilon_\textrm{abs}}(D, d, \gamma)|(kd)^q < c\big({{k\choose d}}\big) \nonumber \\
\forall q, \exists c, d_0, k_0, \forall k \ge k_0, d \ge d_0, (kd)^q < c\big(\frac{{k\choose d}}{|R_{\varepsilon_\textrm{abs}}(D, d, \gamma)|}\big) \nonumber \\
\forall q, \exists c, d_0, k_0, \forall k \ge k_0, d \ge d_0, \frac{1}{c}(kd)^q < \big(\frac{{k\choose d}}{|R_{\varepsilon_\textrm{abs}}(D, d, \gamma)|}\big) \nonumber \\
\textrm{define $c' = \frac{1}{c}$} \nonumber \\
\forall q, \exists c', d_0, k_0, \forall k \ge k_0, d \ge d_0, c'(kd)^q < \big(\frac{{k\choose d}}{|R_{\varepsilon_\textrm{abs}}(D, d, \gamma)|}\big) \nonumber \\
\forall q, \exists c', d_0, k_0, \forall k \ge k_0, d \ge d_0, \big(\frac{{k\choose d}}{|R_{\varepsilon_\textrm{abs}}(D, d, \gamma)|}\big) > c'(kd)^q  \nonumber \\
\textrm{divide both sides by kd times proxycompute cost: } \nonumber \\
\forall q, \exists c', d_0, k_0, \forall k \ge k_0, d \ge d_0, \big(\frac{{k\choose d}}{kd\textsc{ProxyCompute}(n, k, d)|R_{\varepsilon_\textrm{abs}}(D, d, \gamma)|}\big) > \frac{c'(kd)^{q-1}}{\textsc{ProxyCompute}(n, k, d)} \label{eq::opt-runtime-proof-a}
\end{align}
By assumption, the proxy is polynomial overall (and linear in $n$, to adhere to Theorem \ref{thm:runtime}). So we know there must be some constants $z, \tilde{c}, \tilde{d_0}, \tilde{k_0}, n_0$ for which $\textsc{ProxyCompute}(n, k, d) \leq \tilde{c} n(kd)^z$ for $n \ge {n_0}, k \ge \tilde{k_0}, d \ge \tilde{d_0}$. So, considering the constants from Equation \ref{eq::opt-runtime-proof-a}, defining $k_0' = \max(k_0, \tilde{k_0})$, $d_0' = \max(d_0, \tilde{d_0})$, we have
\begin{align}
\forall q, \exists c', \tilde{c}, z, d_0', k_0', \forall k \ge k_0', d \ge d_0', n \ge n_0, \big(\frac{{k\choose d}}{kd\textsc{ProxyCompute}(n, k, d)|R_{\varepsilon_\textrm{abs}}(D, d, \gamma)|}\big) > \frac{c'(kd)^{q-1}}{\tilde{c}n(kd)^z} \nonumber \\
\text{Define $q' = {q-1-z}$, $c'' = \frac{c'}{\tilde{c}}$: } \nonumber \\
\forall q', \exists c'', d_0', k_0', \forall k \ge k_0', d \ge d_0', n \ge n_0, \frac{{k\choose d}}{kd\textsc{ProxyCompute}(n, k, d)|R_{\varepsilon_\textrm{abs}}(D, d, \gamma)|} > \frac{c''(kd)^{q'}}{n} \nonumber \\
\forall q', \exists c'', d_0', k_0', \forall k \ge k_0', d \ge d_0', n \ge n_0, \frac{{n}{k\choose d}}{kd\textsc{ProxyCompute}(n, k, d)|R_{\varepsilon_\textrm{abs}}(D, d, \gamma)|} > {c''(kd)^{q'}} \nonumber
\end{align}

Now, we lower bound the runtime of an optimal tree algorithm, and upper bound the runtime of \algnamenospace. The runtime for full dynamic programming search cited in \citet{demirovic2023blossom} is $\Theta((n+2^d){k\choose d})$(after translating that result to this paper's notation for \# features and depth). So the runtime is also  $\Omega(n{k\choose d})$, and we can write: 
$\exists \tilde{c}', \tilde{d_0}', \tilde{k_0}', \forall k \ge \tilde{k_0}', d \ge \tilde{d_0}', n \ge {n_0}', \textrm{Worst-Case Runtime(\texttt{OPT})} \geq \tilde{c}' n {k\choose d} $
So substituting $c''' = \frac{c''}{\tilde{c}'}$, $d_0'' = \max (d_0', \tilde{d_0}')$, $k_0'' = \max (k_0', \tilde{k_0}')$ $n_0'' = \max (n_0, n_0')$: 
\begin{align}
\forall q', \exists c''', d_0'', k_0'', \forall k \ge k_0'', d \ge d_0'', n \ge n_0'', \frac{\textrm{Worst-Case Runtime(\texttt{OPT})}}{kd\textsc{ProxyCompute}(n, k, d)|R_{\varepsilon_\textrm{abs}}(D, d, \gamma)|} > {c'''(kd)^{q'}} \nonumber
\end{align}
Now, using Theorem \ref{thm:runtime}, we know $\textrm{Worst-Case Runtime(\texttt{\algnamenospace})} \in \mathcal{O}\!\left(|R_{\varepsilon_{\mathrm{abs}}}|\,k\,d \cdot \textsc{ProxyCompute}(n,k,d)\right)$.
So, $$\exists \tilde{c}'', \tilde{d_0}'', \tilde{k_0}'', \forall k \ge \tilde{k_0}'', d \ge \tilde{d_0}'', n \ge \tilde{n_0}, \textrm{Worst-Case Runtime(\texttt{\algnamenospace})} \leq \tilde{c}''kd\textsc{ProxyCompute}(n, k, d)|R_{\varepsilon_\textrm{abs}}(D, d, \gamma)|.$$ 
So substituting $c'''' = {c'''}{\tilde{c}''}$, $d_0''' = \max (d_0'', \tilde{d_0}'')$, $k_0''' = \max (k_0'', \tilde{k_0}'')$ $n_0''' = \max (n_0'', \tilde{n_0})$, 
\begin{align}
\forall q', \exists c'''', d_0''', k_0''', \forall k \ge k_0''', d \ge d_0''', n \ge n_0''', \frac{\textrm{Worst-Case Runtime(\texttt{OPT})}}{\textrm{Worst-Case Runtime(\texttt{\algnamenospace})}} > {c''''(kd)^{q'}} \nonumber
\end{align}
Meaning: 
\begin{align}
\forall q, \frac{\textrm{Worst-Case Runtime(\texttt{OPT})}}{\textrm{Worst-Case Runtime(\texttt{\algnamenospace})}} \in \omega({(kd)^{q}}), \nonumber
\end{align}
as required.

Note also the following intermediate consequence. The worst-case runtime of \texttt{OPT} exceeds the Rashomon set size by more than any polynomial factor, as shown below.

\begin{align}
\forall q', \exists c''', d_0'', k_0'', \forall k \ge k_0'', d \ge d_0'', n \ge n_0'', \frac{\textrm{Worst-Case Runtime(\texttt{OPT})}}{kd\textsc{ProxyCompute}(n, k, d)|R_{\varepsilon_\textrm{abs}}(D, d, \gamma)|} > {c'''(kd)^{q'}} \nonumber\\
\forall q', \exists c''', d_0'', k_0'', \forall k \ge k_0'', d \ge d_0'', n \ge n_0'', \frac{\textrm{Worst-Case Runtime(\texttt{OPT})}}{kd\textsc{ProxyCompute}(n, k, d)} > {c'''(kd)^{q'}|R_{\varepsilon_\textrm{abs}}(D, d, \gamma)|} \nonumber\\
\forall q', \exists c''', d_0'', k_0'', \forall k \ge k_0'', d \ge d_0'', n \ge n_0'', {\textrm{Worst-Case Runtime(\texttt{OPT})}} > {c'''(kd)^{q'}|R_{\varepsilon_\textrm{abs}}(D, d, \gamma)|} \nonumber\\
\forall q, {\textrm{Worst-Case Runtime(\texttt{OPT})}} \in \omega( {(kd)^{q}|R_{\varepsilon_\textrm{abs}}(D, d, \gamma)|}) \nonumber\\
\Rightarrow {\textrm{Worst-Case Runtime(\texttt{OPT})}} \in \omega(|R_{\varepsilon_\textrm{abs}}(D, d, \gamma)| \textit{Poly}(k,d))
\end{align}
In contrast, the worst case runtime of \texttt{\algname}is at most a polynomial multiple of the Rashomon set size: 
\begin{align}
    &\textrm{Worst-Case Runtime(\texttt{\algnamenospace})} \in \mathcal{O}(|R_{\varepsilon_\textrm{abs}}(D, d, \gamma)| kd \textsc{ProxyCompute}(n, k, d))\\
    &\Rightarrow \textrm{Worst-Case Runtime(\texttt{\algnamenospace})} \in \mathcal{O}(|R_{\varepsilon_\textrm{abs}}(D, d, \gamma)|n\textit{Poly}(k,d))
\end{align}

\end{proof}
\paragraph{Comments on Corollary \ref{thm::optimalruntime}.} For ease of discussion, we have incorporated two simplifying conditions in Corollary \ref{thm::optimalruntime}: (1) $\gamma = 0$, and (2) the absolute Rashomon budget $\varepsilon_{\mathrm{abs}}$ is fixed and does not vary with depth. Both can reasonably be relaxed. $\gamma$ can be nonzero without necessarily changing the worst case behaviour of optimal trees (the applicability of bounds from $\gamma$, i.e., those in GOSDT \cite{gosdt}, now depends on search orders and what bounds are admitted by the dataset $D$). $\varepsilon$ can be defined to scale with the depth (since the optimal objective for a fixed dataset monotonically decreases with depth budget), and/or be defined as a multiplicative factor of the proxy algorithm's call for the current $n, k, d$. Such a relaxation can be useful for defining the scaling of a Rashomon set's size with depth, as models become more accurate.

\MemoryTheorem*
\begin{proof}

By assumption, the proxy algorithm requires only $\mathcal{O}(nk)$ memory to run. This holds for our many possible proxy algorithms. LicketySPLIT is one of them (\autoref{lemma:licketysplit_memory}). Regardless of the proxy algorithm, it takes a single float or integer to store the output: the objective of some tree.

Now, when we run \algnamenospace, at each node we visit, we just need to know which splits result in proxy objectives below the epsilon bound. So for each potential split, we need to run the proxy algorithm on the left and right subproblems (with $O(nk)$ memory total), then check if that falls within the budget $\varepsilon_{\textrm{abs}}$. If it does, then we will save the resulting subproblems from this split, and later visit those nodes to continue \algnamenospace. This will be an additional two nodes we need to persist in the dependency graph. However, every time this happens (increasing the total storage used by 2), the total sum of nodes in trees across the entire Rashomon set will also increase by at least 2, since at least one tree with this split (the one found by \algnamenospace) will fall within the $\varepsilon_{\textrm{abs}}$ bound, and that tree will have at least one internal or leaf node corresponding to each of these two nodes, since it includes this split.

In order to keep the information stored in each node efficient, we use the following structure (assuming we have one global copy of the entire dataset provided as input).
Consider a node to be active only if we are visiting that node currently (that is, we are mid-execution of Algorithm \ref{alg:\algname-trie} at that node), or if we are visiting one of its child nodes. When a node is active, we persist information about the row indices corresponding to the data subset used in that node. We can still efficiently determine the row indices relevant for any child just using these row indices, the original dataset, and the binary splitting feature, so the runtime is not affected. We also maintain memory efficiency: we only have $O(d)$ nodes active at once, so the total memory required to persist these row indices is $O(nd)$; since we cannot have a depth greater than the number of binary features, that means the memory required is under $O(nk)$ and does not affect asymptotic complexity. Once we are done visiting a node, we don't need to visit it again and can safely stop persisting the memory needed for its row indices.\footnote{This is slightly complicated by the multiple passes induced by Algorithm \ref{alg:multipass}; however, it does not affect the asymptotic complexity if we reconstruct the relevant row indices again when revisiting that side for an additional pass.}

So the total information we persist is just: 
\begin{enumerate}
    \item A single copy of the original dataset
    \item the dependency graph structure, which can be constructed to include only nodes with a constant amount of storage space, where there are no more nodes in the dependency graph than there are nodes (split nodes or leaves) across the whole Rashomon set.
    \item Row indices for currently active subproblems in the dependency graph.
\end{enumerate}

Since the number of leaves and internal nodes is bounded by twice the number of leaves, we know object (2) is $O(\sum_{t \in R} |t|)$. We know object (1) is $O(nk)$, so combining them we have proven our claimed memory complexity. (Note that (3) is also within $O(nk)$ so does not affect the complexity).

\end{proof}.

\RecoveryTheorem*
\begin{proof}
Fix a target tree $t$ of depth at most $d$ whose root-to-leaf paths we want to show are materialized in the AND/OR graph. For any node or tree u, denote the depth of that node/tree as $d_u$.

\paragraph{Pruning at the Root}
Let the root of $t$ split on feature $f$, yielding children $t_{\mathrm{left}}$ and $t_{\mathrm{right}}$.
Algorithm~\ref{alg:\algname-trie} will consider feature $f$ and perform the pruning test
\begin{equation}
\textsc{Proxy}\!\big(D_{t_\textrm{left}}, d_{\mathrm{root}} - 1, \gamma\big) + \textsc{Proxy}\!\big(D_{t_\textrm{right}}, d_{\mathrm{root}} - 1, \gamma\big) \;\le\; \varepsilon_{\textrm{abs}}.
\label{eq:root-proxy-test}
\end{equation}
If \eqref{eq:root-proxy-test} holds, the split on $f$ is not pruned and the algorithm proceeds to build subtries for both children, so the prefix consisting of the root split of $t$ is materialized.
(If $t$ is a single leaf, then it is handled before any split-pruning occurs.)

Let $b_L$ and $b_R$ denote the budgets used when recursively solving the left and right child subproblems for the split on $f$. In the worst-case (no iterative refinement), these are set by subtracting the proxy estimate of the opposite side (as shown in \autoref{thm:proxy_feasible} and \autoref{cor:proxy_feasible_all_nodes}):
\begin{equation}
b_L \;=\; \varepsilon_{\textrm{abs}} - \textsc{Proxy}\!\big(D_{t_\textrm{right}}, d_{\mathrm{root}} - 1, \gamma\big),
\qquad
b_R \;=\; \varepsilon_{\textrm{abs}} - \textsc{Proxy}\!\big(D_{t_\textrm{left}}, d_{\mathrm{root}} - 1, \gamma\big) .
\label{eq:singlepass-budgets-root}
\end{equation}

\paragraph{Budget propagation down a path}
Let $u$ be any internal node of $t$ at depth $d' \ge 1$ (root at depth $0$).
Let $n_0=\mathrm{root}, n_1, \dots, n_{d'}=u$ be the nodes on the unique path from the root to $u$ in $t$.
For each $i\in\{1,\dots,d'\}$, let $s_i$ denote the sibling of $n_i$ (i.e., $s_i$ is the other child of $n_{i-1}$ not equal to $n_i$).
Finally, let $u_\textrm{left}$ and $u_\textrm{right}$ denote the two children of $u$ in $t$.

In the worst case (without iterative budget refinement), we have
\begin{equation}
\varepsilon_{\textrm{u}} \;=\; \varepsilon_{\textrm{abs}}  \;-\; \sum_{i=1}^{d'} \textsc{PROXY}(D_{s_i}, d_{s_i}, \gamma),
\label{eq:budget-at-u}
\end{equation}
where $\varepsilon_{\textrm{u}} $ is the budget with which the algorithm solves the subproblem corresponding to node $u$.

\paragraph{Proof of \eqref{eq:budget-at-u} by induction on $d'$.}
For $d'=1$, $u=n_1$ is a child of the root and $s_1$ is the opposite child.
By \eqref{eq:singlepass-budgets-root}, the budget for $u$ is exactly $\varepsilon_{\textrm{abs}}  - \textsc{PROXY}(s_1)$, which matches \eqref{eq:budget-at-u}.

Assume \eqref{eq:budget-at-u} holds for depth $d'-1$, i.e.,
\[
\varepsilon_{n_{d'-1}} \;=\; \varepsilon_{\textrm{abs}}  - \sum_{i=1}^{d'-1}\textsc{PROXY}(D_{s_i}, d_{s_i}, \gamma).
\]
At node $n_{d'-1}$, in the worst-case, we pass its chosen child $u=n_{d'}$ a budget that subtracts off the proxy algorithm on the other side (see \autoref{thm:proxy_feasible} and \autoref{cor:proxy_feasible_all_nodes}). This gives us 
\[
\begin{aligned}
\varepsilon_{u}
&= \varepsilon_{n_{d'-1}} - \textsc{PROXY}(D_{s_{d'}}, d_{s_{d'}}, \gamma) \\
&= \Big(\varepsilon_{\textrm{abs}} - \sum_{i=1}^{d'-1}\textsc{PROXY}(D_{s_i}, d_{s_i}, \gamma)\Big) - \textsc{PROXY}(D_{s_{d'}}, d_{s_{d'}}, \gamma) \\
&= \varepsilon_{\textrm{abs}} - \sum_{i=1}^{d'}\textsc{PROXY}(D_{s_i}, d_{s_i}, \gamma),
\end{aligned}
\]
establishing \eqref{eq:budget-at-u}.

\paragraph{The pruning test at $u$.}
At node $u$, the algorithm considers the split that $t$ takes at $u$ and performs the pruning check
\begin{equation}
\textsc{Proxy}\!\big(D_{u_\textrm{left}}, d_{u} - 1, \gamma\big) + \textsc{Proxy}\!\big(D_{u_\textrm{right}}, d_{u} - 1, \gamma\big) \;\le\; \varepsilon_u.
\label{eq:local-proxy-test}
\end{equation}
Substituting \eqref{eq:budget-at-u} into \eqref{eq:local-proxy-test} and rearranging yields the \emph{frontier-cut inequality}:
\begin{align}
\textsc{Proxy}\!\big(u_\textrm{left}, d_{u} - 1, \gamma\big) + \textsc{Proxy}\!\big(u_\textrm{right}, d_{u} - 1, \gamma\big)
\le \varepsilon_{\textrm{abs}} - \sum_{i=1}^{d'}\textsc{PROXY}(D_{s_i}, d_{s_i}, \gamma)
\\[0.25em]
\iff\qquad
\sum_{i=1}^{d'}\textsc{PROXY}(D_{s_i}, d_{s_i}, \gamma)
\;+\;
\textsc{Proxy}\!\big(u_\textrm{left}, d_{u} - 1, \gamma\big) 
\;+\;
\textsc{Proxy}\!\big(u_\textrm{right}, d_{u} - 1, \gamma\big)
\le \varepsilon_{\textrm{abs}}.
\label{eq:frontier-cut}
\end{align}
The set
\[
\{s_1,\dots,s_{d'},u_\textrm{left}, u_\textrm{right}\}
\]
is exactly the frontier cut associated with $u$.

\paragraph{Sufficiency: satisfying all frontier cuts implies $t$ is fully represented.}
Assume \eqref{eq:frontier-cut} holds for every internal node $u$ of $t$ (including the root).
At the root, this reduces to \eqref{eq:root-proxy-test}, so the root split is not pruned.
Inductively, since budgets are initially allocated using proxy objectives and iterative budget refinement can only increase these budgets, the local pruning test \eqref{eq:local-proxy-test} succeeds at every node $u$. 
Thus, every split used by $t$ is expanded, and every root-to-leaf path of $t$ is materialized in the AND/OR graph.

\paragraph{On non-necessity:}

The earlier analysis does not account for the fact that we may discover a tree better than the proxy certifies (in particular, \autoref{cor:proxywrapper_feasible_all_nodes} guarantees recovery of trees at least as good as the pilot algorithm induced by the proxy). As a consequence, we may subtract off less than the proxy value, leading to larger children budgets and less pruning as a consequence.

Given that we instead recurse on children assuming the other side is the best objective found (which we denote $\mathrm{MinObj}$), which could be better than the proxy, then if $u$ is at depth $d'$ with path siblings $s_1,\dots,s_{d'}$, then after refinement we have
\[
\varepsilon_u \;\ge\; \varepsilon_{\textrm{abs}} \;-\; \sum_{i=1}^{d'} \mathrm{MinObj}(s_i),
\]
so a sufficient frontier-cut condition for expanding the split at $u$ is really
\[
\sum_{i=1}^{d'}\mathrm{MinObj}(s_i)\;+\;\textsc{PROXY}\!\big(u_\textrm{left}\big)\;+\;\textsc{PROXY}\!\big(u_\textrm{right}\big)\;\le\; \varepsilon_{\textrm{abs}},
\]
i.e., \textsc{Solve\_Siblings} refines the sibling terms from $\textsc{PROXY}(s_i)$ to $\mathrm{MinObj}(s_i)$ while the two children at $u$ are still conservatively completed using $\textsc{PROXY}(\cdot)$.

\end{proof}

\GeneralSlackRecoveryCorollary*

\begin{proof}
Fix any internal node $u$ of $t_r$ at depth $q$, and let
\[
F(u)=\{s_1,\dots,s_q,u_{\mathrm{left}},u_{\mathrm{right}}\}
\]
denote the associated frontier cut as defined in Theorem~\ref{thm:recovery}. By definition of $\beta$, for every $v\in F(u)$,
\[
\frac{\textsc{Proxy}(D_v,d_v,\gamma)}{\min_{t\in\mathcal{T}_{d_v}}\mathrm{Obj}(t,D_v,\gamma)}\le \beta,
\]
so
\[
\textsc{Proxy}(D_v,d_v,\gamma)\le \beta \min_{t\in\mathcal{T}_{d_v}}\mathrm{Obj}(t,D_v,\gamma).
\]
Summing over $v\in F(u)$ yields
\begin{align}
\sum_{v\in F(u)} \textsc{Proxy}(D_v,d_v,\gamma)
&\le
\beta \sum_{v\in F(u)} \min_{t\in\mathcal{T}_{d_v}}\mathrm{Obj}(t,D_v,\gamma).
\label{eq:general-slack-frontier}
\end{align}
The subtrees of $t_r$ rooted at the nodes in $F(u)$ are disjoint and collectively cover all leaves of $t_r$, so
\[
\sum_{v\in F(u)} \min_{t\in\mathcal{T}_{d_v}}\mathrm{Obj}(t,D_v,\gamma)
\le
\sum_{v\in F(u)} \mathrm{Obj}(t_r|_v,D_v,\gamma)
=
\mathrm{Obj}(t_r,D,\gamma),
\]
where $t_r|_v$ denotes the subtree of $t_r$ rooted at $v$. Substituting into \eqref{eq:general-slack-frontier} gives
\[
\sum_{v\in F(u)} \textsc{Proxy}(D_v,d_v,\gamma)\le \beta\,\mathrm{Obj}(t_r,D,\gamma).
\]
Using the definition of $\eta_r$,
\[
\mathrm{Obj}(t_r,D,\gamma)
=
\eta_r \left({(1+\varepsilon_{\textrm{mult}})\min_{t\in\mathcal{T}_d}\mathrm{Obj}(t,D,\gamma)}\right),
\]
so
\[
\sum_{v\in F(u)} \textsc{Proxy}(D_v,d_v,\gamma)
\le
{\beta\eta_r(1+\varepsilon_{\textrm{mult}})}
\min_{t\in\mathcal{T}_d}\mathrm{Obj}(t,D,\gamma).
\]
By definition of $\alpha$,
\[
\min_{t\in\mathcal{T}_d}\mathrm{Obj}(t,D,\gamma)=\frac{\textsc{Proxy}(D,d,\gamma)}{\alpha},
\]
hence
\[
\sum_{v\in F(u)} \textsc{Proxy}(D_v,d_v,\gamma)
\le
\frac{\beta}{\alpha}\eta_r(1+\varepsilon_{\textrm{mult}})\textsc{Proxy}(D,d,\gamma).
\]
Now multiply the standard root budget
\[
\varepsilon_{\textrm{abs}}=(1+\varepsilon_{\textrm{mult}})\textsc{Proxy}(D,d,\gamma)
\]
by
\[
\sigma=\max\!\left\{1,\frac{\beta}{\alpha}\eta_r\right\}.
\]
Then
\[
\sigma\,\varepsilon_{\textrm{abs}}
\ge
\frac{\beta}{\alpha}\eta_r(1+\varepsilon_{\textrm{mult}})\textsc{Proxy}(D,d,\gamma)
\ge
\sum_{v\in F(u)} \textsc{Proxy}(D_v,d_v,\gamma).
\]
Thus the frontier-cut inequality of Theorem~\ref{thm:recovery} holds for $u$. Since $u$ was arbitrary, it holds for every internal node of $t_r$. Therefore, by Theorem~\ref{thm:recovery}, \textsc{\algname} will return $t_r$.
\end{proof}

\subsection{Additional Theorems and Proofs}

This section includes additional theoretical guarantees for \algnamenospace. We show that \algname can return a set of trees that are all arbitrarily better than pure greedy, prove convergence of iterative budget refinement, and provide bounds that characterize how additional slack in the root budget compensates for proxy optimality gaps. We also prove some algorithm invariants assuming a proxy algorithm is used (that is, assuming the proxy adheres to Definition \ref{def:proxy}), and establish key decision tree algorithms are indeed proxy algorithms.

\paragraph{Simplifying Notation.} For any subproblem node $u$ with corresponding dataset $D_u$ and remaining depth budget $d_u$, define
\[
\mathrm{OPT}(u)\;:=\;\min_{t\in\mathcal{T}_{d_u}} \mathrm{Obj}(t, D_u, \gamma),
\]
When $u=\mathrm{root}$ we also write $\mathrm{OPT}(\mathrm{root})
=\min_{t\in\mathcal{T}_d}\mathrm{Obj}(t,D,\gamma)$.

\begin{proposition}[Monotonicity in the budget provided to \algnamenospace ]
\label{lem:slack-monotonicity}
Fix a dataset $D$, depth $d$, and regularization $\gamma$.
Let $0 \le A \le B$ be two absolute Rashomon budgets, and let
$\hat{\mathcal{R}}^{\textsc{\algname}(D,d,\gamma,U)}$ denote the set of trees 
returned by $\textsc{\algname}(D,d,\gamma,U)$ for $U \in \{A,B\}$.

Then
\[
\hat{\mathcal{R}}^{\textsc{\algname}(D,d,\gamma,A)}\;\subseteq\;\hat{\mathcal{R}}^{\textsc{\algname}(D,d,\gamma,B)}
\]
That is, increasing the budget from $A$ to $B$ can only expand the returned AND/OR graph and set of returned trees.
\end{proposition}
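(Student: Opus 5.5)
We argue by strong induction on the remaining depth $d$. The claim to carry through the induction is slightly stronger than the statement: for every subproblem $(D,d)$ and all budgets $A\le B$, the graph $G_A=\textsc{\algnamenospace}(D,d,\gamma,A)$ is a subgraph of $G_B=\textsc{\algnamenospace}(D,d,\gamma,B)$. By this we mean that every leaf of $G_A$ is a leaf of $G_B$, and every split kept in $G_A$ is kept in $G_B$ with child subgraphs that contain the corresponding child subgraphs of $G_A$. As a consequence we also get $G_B.\textit{min\_objective}\le G_A.\textit{min\_objective}$, taking the minimum over the empty set as $+\infty$. Enumerating trees from an AND/OR graph is monotone under this subgraph order, so the containment of returned tree sets then follows.

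The leaf and guard steps are immediate. A leaf prediction $b$ is added when $C_b\le A$, and then also $C_b\le B$. The early-return guard fires on $d=0$ or on the budget being $<2\gamma$; if it fires for $B$, it also fires for $A$. So whenever $G_A$ explores splits, $G_B$ does as well. The proxy values $P_L,P_R$ do not depend on the budget, so $P_L+P_R\le A$ implies $P_L+P_R\le B$. Hence every feature $j$ that survives pruning under $A$ also survives under $B$.

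The main obstacle is \textsc{Solve\_Siblings} (Algorithm~\ref{alg:multipass}), whose iterative refinement uses data-dependent budgets. For a fixed split, we track the two runs of the loop, with parent budgets $A$ and $B$. Each run produces increasing budget sequences on the left and on the right, and the final budgets are the last values reached. We show that the final left and right budgets under $B$ dominate those under $A$.

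To do this, we characterize the final budgets as the least fixed point of a monotone map. Define
\[
\Phi_U(e_L,e_R)=\bigl(U-\mathrm{m}_R(e_R),\,U-\mathrm{m}_L(e_L)\bigr),
\]
where $\mathrm{m}_L(e)$ is the min objective of $\textsc{\algnamenospace}(D_L,d-1,\gamma,e)$, and $\mathrm{m}_R$ is defined likewise. The right-hand component is seeded with $\mathrm{m}_R$ replaced by $P_R$. By the induction hypothesis, $\mathrm{m}_L$ and $\mathrm{m}_R$ are nonincreasing in the budget, so $\Phi_U$ is monotone in $(e_L,e_R)$, and it is also monotone in $U$. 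The loop produces the Kleene iterates of $\Phi_U$ from the initial point $(U-P_R,\,\cdot)$. Its termination is established later in the appendix, and in any case follows from budgets increasing over a finite set of attainable objective values. Comparing the two iterate sequences step by step, by induction on the iteration count, shows that each iterate under $B$ dominates the corresponding iterate under $A$. The loops may have different lengths: once the $A$-loop stops, its values are fixed, while the $B$-loop values only continue to grow. Therefore the final budgets satisfy $\varepsilon_L^{(B)}\ge\varepsilon_L^{(A)}$ and $\varepsilon_R^{(B)}\ge\varepsilon_R^{(A)}$.

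Applying the induction hypothesis at depth $d-1$ to these final budgets gives $G_L^{(A)}\subseteq G_L^{(B)}$ and $G_R^{(A)}\subseteq G_R^{(B)}$. \textsc{AddSplit} therefore inserts a larger pair of child graphs under $B$ for every split kept under $A$. The min-objective monotonicity at depth $d$ follows because $G_B$ contains every leaf and split of $G_A$. This completes the induction and yields $\hat{\mathcal R}^{\textsc{\algnamenospace}(D,d,\gamma,A)}\subseteq\hat{\mathcal R}^{\textsc{\algnamenospace}(D,d,\gamma,B)}$.
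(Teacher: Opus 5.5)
Your proposal is correct and follows essentially the paper's proof. It uses induction on depth, and the crux is an iteration-by-iteration comparison of the \textsc{Solve\_Siblings} budgets under $A$ and $B$, driven by monotonicity of the child min-objectives (the paper's Lemma~\ref{lem:inner-monotonicity}); your least-fixed-point phrasing repackages that comparison, and your explicit treatment of the $2\gamma$ guard and the proxy pruning test fills in steps the paper leaves implicit. One small point should be made explicit: identifying the loop with nondecreasing Kleene iterates, so that a stopped loop sits at a fixed point (this is what covers a $B$-loop that stops before the $A$-loop), needs $\mathrm{m}_R(\varepsilon_R^{(0)})\le P_R$ on the first pass, which follows from the split surviving pruning together with Theorem~\ref{thm:proxy_feasible}; alternatively, the stopping rule already gives $\Phi_B(x)\le x$ at the stopped state $x$, and that alone keeps the $A$-iterates below $x$.
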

\begin{proof}
    The proof will proceed by induction. 

    \textbf{Base case ($d = 0$)}: At depth $0$, \algname considers only leaf actions. A leaf is added if and only if its objective fits within the local subproblem budget.
Therefore, any leaf added under budget $A$ is also added under budget $B$, and the claim holds.

    \textbf{Inductive Hypothesis:}  
    At depth $d$, for all subproblems $D$, all non-negative leaf penalties $\gamma$, and all budgets $0 \leq A \leq B$,  $\hat{\mathcal{R}}^{\textsc{\algname}(D,d,\gamma,A)}\;\subseteq\;\hat{\mathcal{R}}^{\textsc{\algname}(D,d,\gamma,B)}$.

    \textbf{Inductive Step: } We want to show that, for all subproblems $D$, all non-negative leaf penalties $\gamma$, and all budgets $0 \leq A \leq B$, $\hat{\mathcal{R}}^{\textsc{\algname}(D,d+1,\gamma,A)}\;\subseteq\;\hat{\mathcal{R}}^{\textsc{\algname}(D,d+1,\gamma,B)}$. We proceed by showing each tree in $\hat{\mathcal{R}}^{\textsc{\algname}(D,d+1,\gamma,A)}$ is also in $\hat{\mathcal{R}}^{\textsc{\algname}(D,d+1,\gamma,B)}$. We consider different cases based on the depth of the tree in question.

    \textbf{Case 1 (the tree is a leaf):} Note that trees that are just leaves at $D$ will only be in $\hat{\mathcal{R}}^{\textsc{\algname}(D,d+1,\gamma,A)}$ if they are also in $\hat{\mathcal{R}}^{\textsc{\algname}(D,d+1,\gamma,B)}$, for similar reasoning to the base case (leaf membership is directly determined by comparing leaf objective to the budget.)

    \textbf{Case 2 (the tree is not a leaf):} All non-leaf trees are formed from calls to the \textsc{Solve\_Siblings} routine. In particular, for each choice of initial feature split, the returned trees come from the last call made to \algname on $D_L$ and $D_R$ in \textsc{Solve\_Siblings} for that feature, which corresponds to the call with the most permissive budget. 

    To handle this case, we need a simple invariant: for each child, its final budget (the last call in \textsc{Solve\_Siblings}) can never be higher for initial budget $A$ than for initial budget $B$. To recover this invariant, we use the following lemma: 

    \begin{lemma}[Higher initial budgets will always result in higher child budgets]
\label{lem:inner-monotonicity}
Fix two subproblems $D_L,D_R$ and a depth $d \geq 0$. Fix some $\gamma > 0$. Assume that for all budgets $0 \leq A \leq B$,  $\hat{\mathcal{R}}^{\textsc{\algname}(D_L,d,\gamma,A)}\;\subseteq\;\hat{\mathcal{R}}^{\textsc{\algname}(D_L,d,\gamma,B)}$ and $\hat{\mathcal{R}}^{\textsc{\algname}(D_R,d,\gamma,A)}\;\subseteq\;\hat{\mathcal{R}}^{\textsc{\algname}(D_R,d,\gamma,B)}$. 

Then the final, largest-budget versions of $\varepsilon_L$ and $\varepsilon_R$ in \textsc{Solve\_Siblings} (Algorithm \ref{alg:multipass}) for initial budget $B$ will be no smaller than $\varepsilon_L$ and $\varepsilon_R$ for initial budget A. 
\end{lemma}

\begin{proof}
In \textsc{Solve\_Siblings}, for an initial budget $U$, let
$\varepsilon_L^{(i)}(U)$ and $\varepsilon_R^{(i)}(U)$ denote the left/right
budgets used on the $i$-th iteration of the while-loop
($i=0,1,2,\dots$).  
It is now sufficient for us to show that for all $i\ge0$ and all $0\le A\le B$,
\begin{equation}
\label{eq:budget-invariant}
\varepsilon_L^{(i)}(A)\le \varepsilon_L^{(i)}(B)
\quad\text{and}\quad
\varepsilon_R^{(i)}(A)\le \varepsilon_R^{(i)}(B).
\end{equation}

For any dataset $S$ 
and budget $U$, define
\[
m(S,d,\gamma,U)
:=
\min\bigl\{
\mathrm{obj}(t, S,\gamma)
:\;
t \in \hat{\mathcal{R}}^{\textsc{\algname}(S,d,\gamma,U)} \bigr\},
\]
with $m(S,d,\gamma,U)=+\infty$ if the set is empty.

By the assumption in this lemma, increasing the budget can only expand
the estimated Rashomon set for depth $d$, so $m(S,d,\gamma,U)$ is monotone non-increasing
in budget $U$. That is, if $A\le B$ then
\[
m(S,d,\gamma,B)
\le
m(S,d,\gamma,A).
\]

We prove \eqref{eq:budget-invariant} by induction on $t$.

\textbf{Base case ($i=0$).}
The initialization sets
\[
\varepsilon_L^{(0)}(U)
=
U-\texttt{PROXY}(D_R,d,\gamma),
\]
where the proxy term is independent of $U \in \{A,B\}$.
Hence
$\varepsilon_L^{(0)}(A)\le \varepsilon_L^{(0)}(B)$.
Next,
\[
\varepsilon_R^{(0)}(U)
=
U - m(D_L, d, \gamma,\varepsilon_L^{(0)}(U)).
\]
Since
$\varepsilon_L^{(0)}(A)\le\varepsilon_L^{(0)}(B)$
and $m$ is monotone in its budget argument,
\[
m\!\left(D_L,d,\gamma,\varepsilon_L^{(0)}(B)\right)
\le
m\!\left(D_L,d,\gamma,\varepsilon_L^{(0)}(A)\right).
\]
Therefore
\[
\varepsilon_R^{(0)}(A)
=
A - m\!\left(D_L,d,\gamma,\varepsilon_L^{(0)}(A)\right)
\le
B - m\!\left(D_L,d,\gamma,\varepsilon_L^{(0)}(B)\right)
=
\varepsilon_R^{(0)}(B).
\]

\textbf{Inductive step.}
Assume \eqref{eq:budget-invariant} holds at iteration $i$.
The updates in \textsc{Solve\_Siblings} are
\[
\varepsilon_R^{(i)}(U)
=
U - m(D_L,d,\gamma,\varepsilon_L^{(i)}(U)),
\qquad
\varepsilon_L^{(i+1)}(U)
=
U - m(D_R,d,\gamma,\varepsilon_R^{(i)}(U)).
\]
Using
$\varepsilon_L^{(i)}(A)\le\varepsilon_L^{(i)}(B)$
and monotonicity of $m$ yields
$\varepsilon_R^{(i)}(A)\le\varepsilon_R^{(i)}(B)$.
Applying monotonicity again with
$\varepsilon_R^{(i)}(A)\le\varepsilon_R^{(i)}(B)$
gives
$\varepsilon_L^{(i+1)}(A)\le\varepsilon_L^{(i+1)}(B)$.

Thus \eqref{eq:budget-invariant} holds for $i+1$,
completing the induction. Note that if the while loop terminates for budget $A$ but not $B$ (or vice versa), we write the update rule anyway (which does not change the return, as it is solved with the same budget as the last iteration). This needed to be handled because the smaller budget could run in fewer iterations, the same number of iterations, or more iterations than the larger budget. 
\end{proof}

    Calling Lemma \ref{lem:inner-monotonicity} with $d$ (and using our inductive hypothesis to satisfy the assumption of that lemma) gives us that the final call to $D_L$ and $D_R$ in \textsc{Solve\_Siblings} involves a more permissive budget when the initial budget is larger. Therefore, we know that the found trees for $D_L$ and $D_R$ with the larger initial budget are supersets of those found with the smaller initial budget, due to our inductive hypothesis. That means any tree found by \textsc{Solve\_Siblings} with the smaller budget will also be found by \textsc{Solve\_Siblings} in the larger budget call, maintaining our invariant. 

    Since we now know all trees will only be in $\hat{\mathcal{R}}^{\textsc{\algname}(D_S,d+1,\gamma,A)}$ if they are also in $\hat{\mathcal{R}}^{\textsc{\algname}(D_S,d+1,\gamma,B)}$, we have shown our inductive step. Accordingly, by induction we have proven Proposition \ref{lem:slack-monotonicity}.
\end{proof}

\begin{theorem}[\algname recovers the proxy tree]\label{thm:proxy_feasible}
Fix any subproblem $(D,d)$ and let the proxy return an objective for a tree
$f^{\mathrm{px}} \in \mathcal{T}_d$, which we call
\[
P(D,d) \;:=\; \textsc{PROXY}(D,d, \gamma) \;=\; \mathrm{Obj}(f^{\mathrm{px}},D, \gamma).
\]
If \emph{\algname} is called with budget $\varepsilon_{\textrm{abs}} \ge P(D,d)$, i.e.
\[
G \;\gets\; \textsc{\algname}(D,d,\gamma,\varepsilon_\textrm{abs}),
\]
then:
\begin{enumerate}
\item the returned AND/OR graph $G$ contains (represents) the proxy tree $f^{\mathrm{px}}$, ($G$ is nonempty);
\item the minimum objective stored at the root OR-node satisfies
\[
G.\mathrm{min\_objective} \;\le\; P(D,d).
\]
\end{enumerate}

In particular, the root node satisfies
\[
G.\mathrm{min\_objective} \le P(D,d).
\]
\end{theorem}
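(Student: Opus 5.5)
We argue by strong induction on the remaining depth $d$, proving both claims simultaneously for every subproblem $(D,d)$ and every budget $\varepsilon_{\textrm{abs}} \ge P(D,d)$. Claim 2 follows from claim 1: the root OR-node's stored minimum objective is the minimum over all trees it represents (this is how \textsc{AddLeaf} and \textsc{AddSplit} populate it). So if $f^{\mathrm{px}}$ is represented, $G.\mathrm{min\_objective} \le \mathrm{Obj}(f^{\mathrm{px}},D,\gamma) = P(D,d)$. The real work is therefore claim 1.

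\textbf{Leaf case.} Suppose $f^{\mathrm{px}}$ is a leaf. This covers the base case $d=0$, since the only trees in $\mathcal{T}_0$ are leaves. If $f^{\mathrm{px}}$ predicts $b$, its objective is at least $C_b = \gamma + |\{y_i\neq b\}|$; for the default proxy it equals the best such $C_b$. Hence $C_b \le P(D,d) \le \varepsilon_{\textrm{abs}}$, and lines 2--8 of Algorithm~\ref{alg:\algname-trie} add that leaf to $G$.

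\textbf{Split case.} Now suppose $f^{\mathrm{px}}$ splits on feature $j$ with children $t_L, t_R$ on $D_L, D_R$. Then $P(D,d) \ge 2\gamma$, so $\varepsilon_{\textrm{abs}} \ge 2\gamma$ and the early return on line 9 does not fire. Both children are nonempty; otherwise $j$ would be degenerate and the tree could be collapsed, and we assume the proxy never returns such splits. By Definition~\ref{def:proxy}, $P_L := \textsc{Proxy}(D_L,d-1,\gamma) \le \mathrm{Obj}(t_L)$ and similarly $P_R \le \mathrm{Obj}(t_R)$. Thus
\[
P_L + P_R \le \mathrm{Obj}(t_L)+\mathrm{Obj}(t_R) = P(D,d) \le \varepsilon_{\textrm{abs}},
\]
so split $j$ is not pruned and \textsc{Solve\_Siblings} runs.

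\textbf{Inside \textsc{Solve\_Siblings}.} The first left budget is $\varepsilon_{\textrm{abs}} - P_R \ge P_L$. By the induction hypothesis, the left graph contains the proxy tree $f^{\mathrm{px}}_L$ of $(D_L,d-1)$ and has $G_L.\mathrm{min\_objective} \le P_L$. The right budget is then $\varepsilon_{\textrm{abs}} - G_L.\mathrm{min\_objective} \ge \varepsilon_{\textrm{abs}} - P_L \ge P_R$, so induction applies again on the right. Later iterations only increase budgets, and by Proposition~\ref{lem:slack-monotonicity} the returned sets only grow. Hence the final $G_L$ and $G_R$ contain $f^{\mathrm{px}}_L$ and $f^{\mathrm{px}}_R$, and $G$ represents the tree that splits on $j$ with these two subtrees.

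\textbf{Main obstacle.} That tree is $f^{\mathrm{px}}$ only if the proxy's subtrees $t_L, t_R$ coincide with the proxy trees of the child subproblems. For our default proxy (Algorithm~\ref{alg:pilotalgorithm}) this holds by construction, since it recurses on the children, and caching makes the calls deterministic. For a general proxy satisfying only Definition~\ref{def:proxy}, I would fall back on the inequality: the represented tree has objective $P_L+P_R \le P(D,d)$. That still yields claim 2 and nonemptiness, and claim 1 in the sense of containing a tree at least as good as $f^{\mathrm{px}}$. I would then use the memoization discussed in Appendix~\ref{sec:proxycaching}, which forces the subtree objectives to match, to obtain exact containment of $f^{\mathrm{px}}$.
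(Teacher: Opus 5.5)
Up to your last paragraph, your argument is the paper's. It runs by induction on $d$, and the split is unpruned because $P_L+P_R\le P(D,d)\le\varepsilon_{\textrm{abs}}$. The first left budget $\varepsilon_{\textrm{abs}}-P_R$ is at least $P_L$, then $G_L.\mathrm{min\_objective}\le P_L$ makes the right budget at least $P_R$, and later passes only raise budgets. You are more careful than the paper about three cases: a leaf-valued $f^{\mathrm{px}}$ at $d\ge 1$, the $\varepsilon_{\textrm{abs}}<2\gamma$ exit, and degenerate splits. The obstacle you flag is genuine. The paper's proof goes from the inductive hypothesis on $(D_L,d-1)$ directly to ``$G_L$ represents $f^{\mathrm{px}}_{\textrm{left}}$,'' silently identifying the proxy tree of the child subproblem with the left subtree of $f^{\mathrm{px}}$.

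Your proposed repair, however, does not work. The memoization of Appendix~\ref{sec:proxycaching} defines the proxy value at $(D_L,d-1)$ as a minimum over candidate completions that include $t_L$. That yields only $\textsc{Proxy}(D_L,d-1,\gamma)\le\mathrm{Obj}(t_L,D_L,\gamma)$, which is exactly the inequality of Definition~\ref{def:proxy}, not equality. Even equal objectives would not make the child's proxy tree coincide with $t_L$.

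In fact, literal containment can fail for a proxy satisfying only Definition~\ref{def:proxy}. If $P_L<\mathrm{Obj}(t_L,D_L,\gamma)$, then $t_L$ fits the left budget. But nothing constrains the proxy values on the children of $t_L$'s own root split, since refinement only concerns the tree the proxy actually returns at $(D_L,d-1)$. So the left call can prune that split, leaving $f^{\mathrm{px}}$ out of $G$.

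The defensible endpoint is your fallback, with the induction run on the weak statement itself: ``$G$ represents some tree of objective at most $P(D,d)$.'' The children then supply trees of objective at most $P_L$ and $P_R$, whose combination fits the budget. This gives nonemptiness and claim~2 for every proxy. Literal containment of $f^{\mathrm{px}}$ holds only for proxies that reproduce their own subtrees on child subproblems, such as the default proxy or \textsc{ProxyWrapper} (Proposition~\ref{prop:proxywrapper_is_exact_proxy}). Only the weak form is used downstream, e.g.\ in Corollary~\ref{cor:proxy_feasible_all_nodes}.
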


\begin{proof}
We argue by induction on the depth $d$.

\textbf{Base case ($d=0$).}
Then $f^{\mathrm{px}}$ is a leaf predicting some label $b\in\{0,1\}$, (denote this label wlog as $b'$) hence
\begin{equation}
P(D,0) = C_{b'} ,
\label{eq:proxy-base}
\end{equation}
one of the leaf costs computed in Algorithm~\ref{alg:\algname-trie}, before any pruning.
Since $\varepsilon_{\textrm{abs}} \ge P(D,0)$, that leaf is added and thus
\begin{equation}
G.\mathrm{min\_objective}\le C_{b'} = P(D,0).
\label{eq:base-min-obj}
\end{equation}

\textbf{Inductive step ($d\ge 1$).}
Let the proxy tree at $(D,d)$ split at the root on feature $j^\star$,
inducing $(D_{f^{\mathrm{px}}_\textrm{left}},D_{f^{\mathrm{px}}_\textrm{right}})$ and proxy subtrees $f^{\mathrm{px}}_\textrm{left},f^{\mathrm{px}}_\textrm{right}$
of depth at most $d-1$.
Define the proxy calls made by \algname at this split:
\begin{equation}
P_L \;:=\; \textsc{PROXY}(D_{f^{\mathrm{px}}_\textrm{left}},d-1, \gamma),
\qquad
P_R \;:=\; \textsc{PROXY}(D_{f^{\mathrm{px}}_\textrm{right}},d-1, \gamma).
\label{eq:proxy-subcalls}
\end{equation}
By the refinement property (Definition~\ref{def:proxy}) applied to the proxy tree,
\begin{equation}
P_L \;\le\; \mathrm{Obj}(f^{\mathrm{px}}_\textrm{left},D_{f^{\mathrm{px}}_\textrm{left}}, \gamma),
\qquad
P_R \;\le\; \mathrm{Obj}(f^{\mathrm{px}}_\textrm{right},D_{f^{\mathrm{px}}_\textrm{right}}, \gamma).
\label{eq:proxy-refinement}
\end{equation}
Because the objective is additive across a split,
\begin{align}
P(D,d)
&= \mathrm{Obj}(f^{\mathrm{px}},D, \gamma)
\label{eq:proxy-obj-root}\\
&= \mathrm{Obj}(f^{\mathrm{px}}_\textrm{left},D_{f^{\mathrm{px}}_\textrm{left}}, \gamma)
   + \mathrm{Obj}(f^{\mathrm{px}}_\textrm{right},D_{f^{\mathrm{px}}_\textrm{right}}, \gamma)
\notag\\
&\ge P_L + P_R .
\label{eq:proxy-sum-lower}
\end{align}
Since $\varepsilon_{\textrm{abs}}  \ge P(D,d)$, we have
\begin{equation}
P_L + P_R \le \varepsilon_{\textrm{abs}}  ,
\label{eq:budget-sum}
\end{equation}
so Algorithm~1 does not prune split $j^\star$, and it calls
\textsc{SolveSiblings} on $(D_{f^{\mathrm{px}}_\textrm{left}},D_{f^{\mathrm{px}}_\textrm{right}})$.

Inside \textsc{SolveSiblings}, the first left budget is set to
\begin{equation}
\varepsilon_L \;=\; \varepsilon_{\textrm{abs}} - P_R .
\label{eq:left-budget-def}
\end{equation}
Using $\varepsilon_{\textrm{abs}} \ge \mathrm{Obj}(f^{\mathrm{px}}_\textrm{left},D_{f^{\mathrm{px}}_\textrm{left}}, \gamma)
+ \mathrm{Obj}(f^{\mathrm{px}}_\textrm{right},D_{f^{\mathrm{px}}_\textrm{right}}, \gamma)$
and $P_R \le \mathrm{Obj}(f^{\mathrm{px}}_\textrm{right},D_{f^{\mathrm{px}}_\textrm{right}}, \gamma)$, we obtain
\begin{align}
\varepsilon_L
&= \varepsilon_{\textrm{abs}} - P_R
\label{eq:left-budget-step1}\\
&\ge \varepsilon_{\textrm{abs}} - \mathrm{Obj}(f^{\mathrm{px}}_\textrm{right},D_{f^{\mathrm{px}}_\textrm{right}}, \gamma)
\notag\\
&\ge \mathrm{Obj}(f^{\mathrm{px}}_\textrm{left},D_{f^{\mathrm{px}}_\textrm{left}}, \gamma)
\notag\\
&\ge P_L .
\label{eq:left-budget-lower}
\end{align}
Therefore the recursive call
\begin{equation}
G_L \gets \textsc{PRAXIS}(D_{f^{\mathrm{px}}_\textrm{left}},d-1,\gamma,\varepsilon_L)
\label{eq:left-recursive-call}
\end{equation}
satisfies the inductive hypothesis, hence $G_L$ represents $f^{\mathrm{px}}_\textrm{left}$ and
\begin{equation}
G_L.\mathrm{min\_objective} \le P_L
\le \mathrm{Obj}(f^{\mathrm{px}}_\textrm{left},D_{f^{\mathrm{px}}_\textrm{left}}, \gamma).
\label{eq:left-min-obj}
\end{equation}

Next \textsc{SolveSiblings} sets the right budget to
\begin{equation}
\varepsilon_R \;=\; \varepsilon_{\textrm{abs}} - G_L.\mathrm{min\_objective}.
\label{eq:right-budget-def}
\end{equation}
Since $G_L.\mathrm{min\_objective} \le \mathrm{Obj}(f^{\mathrm{px}}_\textrm{left},D_{f^{\mathrm{px}}_\textrm{left}}, \gamma)$,
\begin{align}
\varepsilon_R
&= \varepsilon_{\textrm{abs}} - G_L.\mathrm{min\_objective}
\label{eq:right-budget-step1}\\
&\ge \varepsilon_{\textrm{abs}} - \mathrm{Obj}(f^{\mathrm{px}}_\textrm{left},D_{f^{\mathrm{px}}_\textrm{left}}, \gamma)
\notag\\
&\ge \mathrm{Obj}(f^{\mathrm{px}}_\textrm{right},D_{f^{\mathrm{px}}_\textrm{right}}, \gamma)
\notag\\
&\ge P_R .
\label{eq:right-budget-lower}
\end{align}
Thus the call
\begin{equation}
G_R \gets \textsc{PRAXIS}(D_{f^{\mathrm{px}}_\textrm{right}},d-1,\gamma,\varepsilon_R)
\label{eq:right-recursive-call}
\end{equation}
also satisfies the inductive hypothesis, so it represents $f^{\mathrm{px}}_R$ and
\begin{equation}
G_R.\mathrm{min\_objective} \le P_R .
\label{eq:right-min-obj}
\end{equation}

Finally, Algorithm~\ref{alg:\algname-trie} adds this split, attaching $G_L$ and $G_R$.
Therefore $G$ represents the proxy tree $f^{\mathrm{px}}$.
Moreover, since $f^{\mathrm{px}}$ is a feasible tree in $G$ with objective $P(D,d)$,
\begin{equation}
G.\mathrm{min\_objective} \le P(D,d).
\label{eq:final-min-obj}
\end{equation}

The while-loop in \textsc{SolveSiblings} only increases budgets when a better
(lower objective) subtree is found, so the above lower bounds on $\varepsilon_L$ and $\varepsilon_R$ are
never violated by subsequent refinements
(Proposition \ref{lem:slack-monotonicity} establishes this monotonicity).
\end{proof}

We note that Theorem \ref{thm:proxy_feasible} directly applies to \algname as the budget is set as relative to the proxy algorithm at the root:

\[
\varepsilon_{\textrm{abs}} \gets  \bigl(1+\varepsilon_{\textrm{mult}}\bigr) \cdot
\textsc{Proxy}(D,d, \gamma) 
\]

Beyond \autoref{thm:proxy_feasible}, which guarantees we find the proxy tree at the root, we also show below (Corollary \ref{cor:proxy_feasible_all_nodes}) that we recover the proxy tree at all explored subproblems.

\begin{corollary}[The AND/OR subgraph for every explored subproblem contains the proxy tree]\label{cor:proxy_feasible_all_nodes}
Assume the root call to \algname\ satisfies
\[
\varepsilon_{\mathrm{abs}} \ge P(D,d).
\]
Then every unpruned OR node in the returned AND/OR graph is explored with budget
\[
\varepsilon'_{\mathrm{abs}} \ge \textsc{PROXY}(D',d', \gamma).
\]
Consequently, by Theorem~\ref{thm:proxy_feasible}, the subgraph rooted at every
OR node contains the proxy tree for that subproblem, and the minimum objective
stored at that node is at most $P(D',d')$.
\end{corollary}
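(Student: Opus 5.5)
We argue by induction on the depth of the OR node in the returned AND/OR graph (distance from the root), proving the stronger invariant: every OR node $v$ with dataset $D_v$ and remaining depth $d_v$ that \algname\ ever calls itself on is called with a budget $\varepsilon_v \ge \textsc{PROXY}(D_v,d_v,\gamma)$. The base case is the root call, where this is exactly the hypothesis $\varepsilon_{\mathrm{abs}} \ge P(D,d)$. Once the invariant holds for a node, Theorem~\ref{thm:proxy_feasible} applied to that subproblem immediately gives both conclusions: the subgraph contains the proxy tree for $(D_v,d_v)$, and its stored minimum objective is at most $P(D_v,d_v)$.

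For the inductive step, fix an OR node $v$ explored with budget $\varepsilon_v \ge P(D_v,d_v)$. Consider any feature $j$ whose split survives the pruning test in Algorithm~\ref{alg:\algname-trie}, so $P_L + P_R \le \varepsilon_v$ with $P_L=\textsc{PROXY}(D_L,d_v-1,\gamma)$ and $P_R=\textsc{PROXY}(D_R,d_v-1,\gamma)$. Note that $j$ need not be the proxy's own split; we use only the pruning inequality, not the refinement property. Inside \textsc{Solve\_Siblings}, the first left budget is $\varepsilon_L = \varepsilon_v - P_R \ge P_L$, so the inductive claim holds for the first left call. By Theorem~\ref{thm:proxy_feasible}, that call returns $G_L.\mathrm{min\_objective} \le P_L$. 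Hence the first right budget satisfies $\varepsilon_R = \varepsilon_v - G_L.\mathrm{min\_objective} \ge \varepsilon_v - P_L \ge P_R$.

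Every later iteration of the while-loop only runs when the new budget strictly exceeds the previous one on that side (lines 4 and 8 of Algorithm~\ref{alg:multipass}). Subsequent left and right calls are therefore made with budgets at least as large as the first ones, and hence still at least the corresponding proxy values. Since every child OR node of $v$ arises from one of these calls, the invariant propagates to all children, which completes the induction.

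The main subtlety is that the child calls are interleaved: the right budget depends on the output of the left call. We must therefore invoke Theorem~\ref{thm:proxy_feasible} on the left child before we can bound the right budget. The strict-increase guard in the loop then ensures that refinement never drops a budget below the first-pass values. A secondary point is that "unpruned OR node" should be read as any node reached by a recursive call; the returned graph is built from the final (largest-budget) calls, which are covered by the same argument.
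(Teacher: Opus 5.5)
Your proof is correct and follows the paper's argument step for step. The pruning inequality gives the first left budget $\varepsilon_v - P_R \ge P_L$; Theorem~\ref{thm:proxy_feasible} then bounds $G_L$'s minimum objective by $P_L$, so the first right budget is at least $P_R$; refinement only raises budgets; and the invariant propagates down the graph. Your justification of the refinement step via the strict-increase guards in \textsc{Solve\_Siblings} is, if anything, more direct than the paper's appeal to Lemma~\ref{lem:inner-monotonicity}, which compares runs started from different initial budgets rather than successive passes within a single run.
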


\begin{proof}

As in Theorem~\ref{thm:proxy_feasible}, we use the notation
\[P(D,d) \;:=\; \textsc{PROXY}(D,d, \gamma) \;=\; \mathrm{Obj}(f^{\mathrm{px}},D, \gamma).\] 

Let a split produce child subproblems with proxy values
\[
P_L := P(D_L,d-1), \qquad P_R := P(D_R,d-1),
\]
and suppose this split is not pruned, i.e.
\[
P_L + P_R \le \varepsilon_{\mathrm{abs}}.
\]
Then the initial left budget used by \textsc{SolveSiblings} is
\[
\varepsilon_L := \varepsilon_{\mathrm{abs}} - P_R.
\]
Therefore,
\[
\varepsilon_L
= \varepsilon_{\mathrm{abs}} - P_R
\ge P_L.
\]
So the recursive call on the left subproblem is made with budget at least its
proxy objective, which by Theorem~\ref{thm:proxy_feasible} is sufficient to
recover the proxy tree on the left.

Now let \(G_L\) denote the returned left subgraph. Since the left proxy tree is
contained in \(G_L\), its minimum objective satisfies
\[
G_L.\mathrm{min\_objective} \le P_L.
\]
The right budget is then set to
\[
\varepsilon_R := \varepsilon_{\mathrm{abs}} - G_L.\mathrm{min\_objective}.
\]
Using \(G_L.\mathrm{min\_objective} \le P_L\), we obtain
\[
\varepsilon_R
= \varepsilon_{\mathrm{abs}} - G_L.\mathrm{min\_objective}
\ge \varepsilon_{\mathrm{abs}} - P_L
\ge P_R,
\]
since \(P_L + P_R \le \varepsilon_{\mathrm{abs}}\). Hence the recursive call on
the right subproblem is also made with budget at least its proxy objective, so
the proxy tree on the right is recovered as well.

Finally, \textsc{SolveSiblings} only increases budgets during subsequent
refinement steps (by \autoref{lem:inner-monotonicity}). Thus once a child budget is at least the corresponding proxy
value, this inequality is never lost (due to the monotonicity of budgets shown in \autoref{lem:slack-monotonicity}).

Repeated application of this logic down the AND/OR graph shows the corollary. 
\end{proof}

\begin{theorem}[All Trees returned by \algname can be Arbitrarily Better than Greedy]
\label{thm:\algname-arb-better-than-greedy}
Fix a depth budget $d$ and let $\varepsilon_\textrm{mult} \ge 0$ denote the multiplicative slack used in the root budget.
Assume $\lambda=\gamma=0$ (i.e., the objective is just misclassification error) and use a proxy algorithm satisfying
$
\forall S, \mathrm{PROXY}(S, d, \gamma) \;\le\; \mathrm{Obj}(\textsc{LicketySPLIT}(S, d, \gamma)),
$
where $\textsc{LicketySPLIT}$ refers to Algorithm \ref{alg:licketysplit}.
Then for every $\delta>0$ there exists a data distribution $\mathcal{D}$ and sample size $n$
such that, with high probability over $S\sim\mathcal{D}^n$, setting $\varepsilon_\textrm{abs} = (1+\varepsilon_\textrm{mult}) \textsc{Proxy}(S, d, \gamma)$:
\begin{enumerate}
\item A pure greedy (information-gain) depth-$d$ tree achieves accuracy at most $\tfrac12+\delta$.
\item Every tree $T$ 
returned by $\algname(S, d, \gamma, \varepsilon_\textrm{abs})$ achieves accuracy at least $1-\delta$.
\end{enumerate}
\end{theorem}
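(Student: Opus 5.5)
The plan is to build a parity-type distribution with a weak distractor signal. Greedy information gain at the root is then drawn to the distractors, while the one-step lookahead inside \textsc{LicketySPLIT} (Algorithm~\ref{alg:licketysplit}) finds a zero-error tree. Because $\gamma=0$ and the proxy is dominated by \textsc{LicketySPLIT}, this forces $\textsc{Proxy}(S,d,\gamma)=0$, hence $\varepsilon_\textrm{abs}=0$. Every tree returned by \algname then has zero training error, since all returned trees lie within the budget, as noted under ``Recovery Conditions''. Uniform convergence over the finite class $\mathcal{T}_d$ then turns this into population accuracy at least $1-\delta$.

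\textbf{Construction.} Take $d\ge 2$. Let the features be $x_1,x_2$ i.i.d.\ uniform bits and set the label $y=x_1\oplus x_2$, which is noise-free. Add $m$ distractor bits $z_1,\dots,z_m$ with $m\ge d$. Each $z_j$ is an independent noisy copy of $y$, equal to $y$ with probability $\tfrac12+\delta'$, where $\delta'$ is chosen small relative to $\delta$ (roughly $\delta'\lesssim \delta/\sqrt d$).

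\textbf{Greedy fails.} In the population, $x_1$ and $x_2$ each carry zero information gain about $y$. This holds at the root and also in any node reached by conditioning only on $z$'s, because the $z$'s are conditionally independent of $(x_1,x_2)$ given $y$, and $y$ remains a balanced parity there. Each unused $z_j$ has strictly positive gain. So the population greedy tree splits only on $z$'s at every level, including the depth-one accuracy-maximizing rule. Its accuracy is therefore at most the Bayes accuracy from $d$ noisy copies, $\tfrac12+O(\sqrt d\,\delta')\le \tfrac12+\delta$.

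\textbf{\textsc{LicketySPLIT} succeeds.} At the root, \textsc{LicketySPLIT} scores each feature by greedy completions of depth $d-1$. After splitting on $x_1$, each child has label $x_2$ or $\lnot x_2$. This is a single-feature function with gain $1$, which beats every $z_j$, so the greedy completion reaches zero error. Any $z$ split instead leaves error near $\tfrac12$. Hence \textsc{LicketySPLIT} chooses $x_1$ (or $x_2$), recurses, and returns a zero-error tree. It follows that $0\le\textsc{Proxy}\le 0$.

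\textbf{Finite sample.} For $n$ large, empirical gains and accuracies are within any fixed margin of their population values with high probability, by Hoeffding plus a union bound over the finitely many nodes and features. The population gaps above, namely positive gain for $z$ versus zero for $x$, and gain $1$ versus $<1$ in the children, are constants. So the empirical greedy and \textsc{LicketySPLIT} choices match the population ones. Label noise is absent, so the \textsc{LicketySPLIT} tree has exactly zero training error, and $\varepsilon_\textrm{abs}=(1+\varepsilon_\textrm{mult})\cdot 0=0$. Every returned tree has zero empirical error. Since $|\mathcal{T}_d|$ is finite for fixed $d$ and $m$, a union-bound generalization argument shows that with high probability every zero-training-error tree in $\mathcal{T}_d$ has true error at most $\delta$. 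The greedy tree's true accuracy likewise stays within $\delta$-slack of $\tfrac12+O(\sqrt d\,\delta')$.

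\textbf{Main obstacle.} The delicate step is the finite-sample control of the greedy choices in deep nodes. Node sample sizes shrink geometrically, up to $2^{-d}$, and the $x$-features' empirical gain is only near zero rather than exactly zero, so ties or near-ties could flip a choice. I would handle this by fixing $d$, $m$ and $\delta'$ first and then taking $n\gg 2^{d}/\mathrm{gap}^2$. An alternative is to make the distractor margin $\delta'$ a fixed constant while increasing the number of distractors, which keeps the gaps bounded away from zero.
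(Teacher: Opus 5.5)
Your proof is correct, but it obtains the separating instance by a different route than the paper. The paper's argument is a pure reduction. It invokes Theorem A.7 of \citet{babbar2025near} for a distribution on which \textsc{LicketySPLIT} has error at most $\eta=\delta/(1+\varepsilon_\textrm{mult})$ while information-gain greedy has error at least $\tfrac12-\eta$. Then $\textsc{Proxy}\le\eta$ gives $\varepsilon_\textrm{abs}\le\delta$, and since every returned tree respects the root budget and $\gamma=0$, each has training error at most $\delta$.

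You share that final step, but you replace the citation with an explicit noise-free parity-plus-weak-distractors instance on which \textsc{LicketySPLIT} is exact. This makes $\textsc{Proxy}=0=\varepsilon_\textrm{abs}$, so the $(1+\varepsilon_\textrm{mult})$ rescaling becomes irrelevant. Your realizable union bound also upgrades the conclusion from training to population accuracy. Your construction makes the restriction $d\ge 2$ explicit, which the statement leaves implicit in the citation, and this restriction is genuinely needed. For $d\le 1$, \textsc{LicketySPLIT} is just the best leaf or stump. A stump with error at most $\delta$ forces the information-gain stump to have error at most $h(\delta)/2$ (with $h$ the binary entropy, by Fano), so part~1 cannot hold for small $\delta$. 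The price is that you must carry out yourself the finite-sample control of the greedy and \textsc{LicketySPLIT} split choices that the paper outsources. Your plan does this: fix $d,m,\delta'$, then take $n$ large, with a union bound over the finitely many split paths.

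Three small repairs are needed.
\begin{itemize}
\item Conditioning on $z$'s does \emph{not} keep $y$ balanced. The correct reason $x_1,x_2$ have zero gain at such nodes is that, given $y$, $(x_1,x_2)$ is uniform on the two pairs of that parity, so $x_1\perp y$ under any reweighting of $y$.
\item The bound by the Bayes accuracy of $d$ copies should note that unqueried $z$'s are i.i.d.\ given $y$ and independent of the queried ones. So each adaptive root-to-leaf path sees at most $d$ independent copies, even though different branches use different $z$'s.
\item Discard the closing ``alternative''. With $\delta'$ a fixed constant, the greedy tree already gets accuracy at least $\tfrac12+\delta'$ from its root split, because taking the $\min$ with the leaf loss at each node makes it no worse than that stump. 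This holds regardless of how many distractors you add, so part~1 fails once $\delta<\delta'$.
\end{itemize}
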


\begin{proof}
Fix $\delta>0$ and define $\eta := \delta/(1+\varepsilon_\textrm{mult})$.
By Theorem A.7 of \citet{babbar2025near}, for this $\eta$ and depth $d$ there exist $\mathcal{D}$ and $n$ such that,
with high probability over $S\sim\mathcal{D}^n$:
(i) \textsc{LicketySPLIT} returns a depth-$d$ tree with error at most $\eta$, and
(ii) pure greedy achieves error at least $\tfrac12-\eta$.

For (1), since $\eta \le \delta$, we have $\text{acc}_{\textsc{greedy}} \le \tfrac12+\eta \le \tfrac12+\delta$.

For (2), let $P := \mathrm{PROXY}(S, d, \gamma)$. By assumption,
\begin{align*}
P \; &\le\; \mathrm{Obj}(\textsc{LicketySPLIT}(\mathcal{S}, d, \gamma)).\\
\textrm{applying}&\textrm{ the result discussed above, that $\mathrm{Obj}(\textsc{LicketySPLIT}(\mathcal{S}, d, \gamma))\le \eta$: }\\
P \; &\le\; \eta.
\end{align*}
\algname is run with root budget $\varepsilon_{\text{abs}} := (1+\varepsilon_\textrm{mult})P$, hence $$\varepsilon_{\text{abs}} \le (1+\varepsilon_\textrm{mult})\eta = \delta.$$
Because every materialized tree respects the budget at the root,
every tree $T$ materialized  satisfies $\mathrm{Obj}(T, S, \gamma)\le \varepsilon_{\text{abs}} \le \delta$.
Since $\gamma=0$, $\mathrm{Obj}(T, S, \gamma)$ equals the misclassification error of $T$, so $\text{acc}(T)\ge 1-\delta$.
\end{proof}

Theorem \ref{thm:\algname-arb-better-than-greedy} establishes that \algname can return a set of trees that are all arbitrarily better than greedy. Additionally, using any proxy algorithm that performs at least as well as greedy, the trees returned by \algname will never be more than $\varepsilon$ worse than greedy, and will always include one at least as good as greedy. 

\begin{restatable}[Recovery guarantee for c-approximation proxy algorithms]{corollary}{CApproxRecoveryCorollary}
\label{cor:c_approx_recovery}
Assume $\textsc{Proxy}$ is a $c$-approximation algorithm, i.e., for all $(D',d')$,
\[
\textsc{Proxy}(D',d',\gamma)\;\le\; c \min_{t\in\mathcal{T}_{d'}} \mathrm{Obj}(t,D',\gamma).
\]
Then, for any budget $\varepsilon_{\mathrm{abs}}$, \textsc{\algnamenospace} returns every tree $t_r\in\mathcal{T}_d$ such that
\[
\mathrm{Obj}(t_r,D,\gamma)\;\le\;\frac{\varepsilon_{\mathrm{abs}}}{c}.
\]
\end{restatable}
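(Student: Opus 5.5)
The plan is to reduce the corollary to the frontier cut recovery condition of Theorem~\ref{thm:recovery}, in the same way the proof of Corollary~\ref{thm:root_gap_recovery} does. It suffices to show that for every internal node $u$ of $t_r$ the frontier cut inequality holds; leaves of $t_r$ (including the case where $t_r$ is itself a leaf) are handled directly, since a leaf is added whenever its objective fits in the local budget.

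\textbf{Step 1: bound the proxy on each frontier node.} Fix an internal node $u$ of $t_r$ at depth $q$ with frontier cut $F(u)=\{s_1,\dots,s_q,u_{\mathrm{left}},u_{\mathrm{right}}\}$. Applying the $c$-approximation hypothesis to each subproblem $(D_v,d_v)$ with $v\in F(u)$ gives
\[
\textsc{Proxy}(D_v,d_v,\gamma)\le c\,\mathrm{OPT}(v)\le c\,\mathrm{Obj}(t_r|_v,D_v,\gamma),
\]
because the restriction $t_r|_v$ lies in $\mathcal{T}_{d_v}$.

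\textbf{Step 2: sum over the frontier.} The subtrees rooted at the nodes of $F(u)$ are disjoint and cover all leaves of $t_r$. Since the objective is additive over leaves (misclassifications plus $\gamma$ per leaf),
\[
\sum_{v\in F(u)}\textsc{Proxy}(D_v,d_v,\gamma)\le c\,\mathrm{Obj}(t_r,D,\gamma)\le \varepsilon_{\mathrm{abs}}.
\]
This is exactly the frontier cut inequality at $u$. Since $u$ is arbitrary, Theorem~\ref{thm:recovery} then gives that $t_r$ is returned.

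\textbf{Step 3: final budget check.} Theorem~\ref{thm:recovery} requires $t_r$ to be within the root budget. This holds because $\mathrm{Obj}(t_r)\le\varepsilon_{\mathrm{abs}}/c\le\varepsilon_{\mathrm{abs}}$, using $c\ge1$. That $c\ge 1$ follows from the proxy returning the objective of an actual tree, so $\textsc{Proxy}\ge\mathrm{OPT}$. The degenerate case $\mathrm{OPT}=0$ is trivial, since then the proxy is also $0$.

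\textbf{Main obstacle.} The delicate point is making sure Theorem~\ref{thm:recovery}'s "if this holds for all internal nodes" conclusion also covers the leaf checks at the bottom of $t_r$. When a leaf $\ell$ of $t_r$ is reached, its local budget must be at least $\mathrm{Obj}(\ell)$. My approach is to argue that the budget passed to $\ell$ is $\varepsilon_{\mathrm{abs}}$ minus the sibling proxy values along its path. That budget is at least $\textsc{Proxy}(\ell)\ge\mathrm{Obj}$ of the best leaf on $D_\ell$, using the parent's frontier inequality together with Corollary~\ref{cor:proxy_feasible_all_nodes}. However, the specific leaf label used by $t_r$ needs budget $\gamma+\mathrm{errors}$, which the proxy need not bound. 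To close this gap I would bound the leaf's actual cost with the frontier sum, replacing $\textsc{Proxy}(\ell)$ by $\mathrm{Obj}(\ell)\le c\,\mathrm{Obj}(\ell)$ in Step 2's chain. This works because the leaf term also appears in the sum bounded by $c\,\mathrm{Obj}(t_r)$. If needed, I would also invoke the monotonicity of budgets under refinement from Proposition~\ref{lem:slack-monotonicity} and Lemma~\ref{lem:inner-monotonicity}, to ensure these budgets are never reduced.
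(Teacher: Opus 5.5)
Your proposal is correct and follows the paper's proof. The paper likewise bounds each frontier-cut proxy by $c$ times the optimum, and hence by $c$ times the corresponding subtree of $t_r$. It then sums over the frontier cut, which partitions the leaves of $t_r$, to get at most $c\,\mathrm{Obj}(t_r,D,\gamma)\le\varepsilon_{\mathrm{abs}}$, and invokes Theorem~\ref{thm:recovery}.

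Your extra checks are sound and fill in details the paper leaves implicit in Theorem~\ref{thm:recovery}. These are that $t_r$ lies within the root budget because $c\ge 1$, and that each leaf (indeed each subtree $t_r|_u$) fits its local budget because $\sum_i \textsc{Proxy}(D_{s_i},d_{s_i},\gamma)+\mathrm{Obj}(t_r|_u,D_u,\gamma)\le c\,\mathrm{Obj}(t_r,D,\gamma)$. One intermediate remark is wrong but unneeded: $\textsc{Proxy}$ at a leaf's subproblem need not dominate the best leaf objective, since it may split when remaining depth is positive.
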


\begin{proof}
Fix any tree $t_r\in\mathcal{T}_d$ with
\[
\mathrm{Obj}(t_r,D,\gamma)\;\le\;\frac{\varepsilon_{\mathrm{abs}}}{c}.
\]
Let $u$ be any internal node of $t_r$, let its depth be $d'$, and let
\[
\{s_1,\dots,s_{d'},u_{\mathrm{left}},u_{\mathrm{right}}\}
\]
be the associated frontier cut as in \autoref{thm:recovery}. Since these subtrees partition the leaves of $t_r$, the sum of their optimal objectives is at most the objective of $t_r$:
\[
\sum_{i=1}^{d'} \min_{t\in\mathcal{T}_{d_{s_i}}}\mathrm{Obj}(t,D_{s_i},\gamma)
\;+\;
\min_{t\in\mathcal{T}_{d_u-1}}\mathrm{Obj}(t,D_{u_{\mathrm{left}}},\gamma)
\;+\;
\min_{t\in\mathcal{T}_{d_u-1}}\mathrm{Obj}(t,D_{u_{\mathrm{right}}},\gamma)
\;\le\;
\mathrm{Obj}(t_r,D,\gamma).
\]
By the $c$-approximation property of $\textsc{Proxy}$,
\[
\sum_{i=1}^{d'} \textsc{Proxy}(D_{s_i},d_{s_i},\gamma)
\;+\;
\textsc{Proxy}(D_{u_{\mathrm{left}}},d_u-1,\gamma)
\;+\;
\textsc{Proxy}(D_{u_{\mathrm{right}}},d_u-1,\gamma)
\;\le\;
c\,\mathrm{Obj}(t_r,D,\gamma)
\;\le\;
\varepsilon_{\mathrm{abs}}.
\]
Thus the frontier-cut condition of \autoref{thm:recovery} holds at every internal node of $t_r$, so \textsc{\algnamenospace} returns $t_r$.
\end{proof}

\begin{lemma}[Convergence of iterative budget refinement]
\label{lem:multipass-iterations-opt}
Fix a split that has been evaluated and not pruned in Algorithm \ref{alg:\algname-trie}. Let $P_L,P_R$ be the proxy objectives and
$\mathrm{OPT}_L,\mathrm{OPT}_R$ be the optimal subtree objectives
for the left and right subproblems (under the same depth constraint).

Then the iterative budget refinement procedure in Algorithm \ref{alg:multipass} performs at most
\[
2\,\min\big(P_L - \mathrm{OPT}_L,\; P_R - \mathrm{OPT}_R\big) + 2
\]
recursive calls to Algorithm \ref{alg:\algname-trie}.
\end{lemma}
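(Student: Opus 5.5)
The argument is a potential-function one: each extra pass of the while-loop in Algorithm~\ref{alg:multipass} requires a strict, integer-sized improvement in the minimum objective found on one side, and those improvements are bounded by the proxy–optimal gap. The key fact is integrality. Since $\gamma$ is a whole number and misclassification counts are integers, every objective, every proxy value, and every \texttt{min\_objective} is an integer. Budgets are formed as $\varepsilon_{\mathrm{abs}}$ minus such integers, so two successive budgets on the same side that differ must differ by at least $1$.

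The first step tracks the sequences $m_L^{(i)} := G_L.\textit{min\_objective}$ after the $i$-th left call and $m_R^{(i)}$ likewise. Because the split was not pruned, Corollary~\ref{cor:proxy_feasible_all_nodes} (equivalently Theorem~\ref{thm:proxy_feasible} applied through the argument in its proof) gives that the first left call is made with budget $\ge P_L$, so $m_L^{(1)} \le P_L$. The first right call is likewise made with budget $\ge P_R$, so $m_R^{(1)} \le P_R$. Every returned tree is a genuine tree of the appropriate depth, so $m_L^{(i)} \ge \mathrm{OPT}_L$ and $m_R^{(i)} \ge \mathrm{OPT}_R$. By Proposition~\ref{lem:slack-monotonicity} together with the loop structure, each new budget strictly exceeds the previous one, hence the returned sets are nested and both $m_L^{(i)}$ and $m_R^{(i)}$ are nonincreasing.

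The second step counts passes. A new left call happens only if $\varepsilon_{\mathrm{abs}} - m_R^{(i)} > \varepsilon_L$, that is, only if $m_R$ strictly decreased since the budget defining the current $\varepsilon_L$. A new right call happens only if $m_L$ strictly decreased. By integrality each such decrease is at least $1$. Hence, beyond the initial left call and initial right call, the number of additional left calls is at most the number of strict decreases of $m_R$, which is at most $m_R^{(1)} - \mathrm{OPT}_R \le P_R - \mathrm{OPT}_R$. Symmetrically, the number of additional right calls is at most $P_L - \mathrm{OPT}_L$.

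The third step, which I expect to be the main obstacle, turns these separate counts into a bound by the minimum of the two gaps. Because the loop alternates, left and right calls interleave, so the number of left calls and the number of right calls differ by at most one. Each pass after the first therefore requires a decrease on both sides. The total number of calls is at most $2\min(\text{additional left}, \text{additional right}) + 2 \le 2\min(P_L-\mathrm{OPT}_L, P_R-\mathrm{OPT}_R)+2$. I need to check the off-by-one boundary carefully: the loop may terminate after a left call with no subsequent right call, and the initial left budget is defined from $P_R$ rather than from an $m_R$. Anchoring the count at $m_R^{(0)} := P_R$ handles the latter.
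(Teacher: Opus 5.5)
Your steps one and two are correct and follow the paper's own route: integrality, improvements bounded by the proxy–optimal gap, and alternation between the two sides. Write $a,b$ for the numbers of left and right calls, and set $g_L = P_L - \mathrm{OPT}_L$ and $g_R = P_R - \mathrm{OPT}_R$. Your first two steps then give three facts: $a-1 \le g_R$ (using your anchoring $m_R^{(0)} := P_R$), $b-1 \le m_L^{(1)} - \mathrm{OPT}_L \le g_L$, and $b \le a \le b+1$.

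The gap is in step three, exactly at the boundary you flagged. It is not true that every pass after the first needs a decrease on both sides. The last pass can be a left call that was triggered by a decrease of $m_R$ but does not itself lower $m_L$. Then no right call is made and the loop exits. In that case $a = b+1$, and the total is $a+b = 2(b-1)+3$, which exceeds $2\min(a-1,b-1)+2$.

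This cannot be patched, because the stated bound is false in that regime. Take a non-pruned split with $P_L = \mathrm{OPT}_L$; for example, $D_L$ pure, so both equal $\gamma$. Suppose the first right call finds a tree strictly better than the proxy, $m_R^{(1)} < P_R$; for example, the \textsc{ProxyWrapper} tree of Corollary~\ref{cor:proxywrapper_feasible_all_nodes} beats the proxy on $D_R$. Algorithm~\ref{alg:multipass} then proceeds as follows:
\begin{enumerate}
\item It calls the left side with budget $\varepsilon_{\mathrm{abs}} - P_R$ and gets $m_L^{(1)} = \mathrm{OPT}_L$.
\item It calls the right side with budget $\varepsilon_{\mathrm{abs}} - \mathrm{OPT}_L \ge P_R$ and gets $m_R^{(1)} < P_R$.
\item Since $\varepsilon_{\mathrm{abs}} - m_R^{(1)} > \varepsilon_{\mathrm{abs}} - P_R$, it calls the left side again.
\item That call again returns $\mathrm{OPT}_L$, so the right-side test fails and the loop ends.
\end{enumerate}
That is three calls, while the lemma allows $2\min(0,g_R)+2 = 2$.

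What your counts do prove is $a+b \le \min(2g_R+2,\; 2g_L+3) \le 2\min(g_L,g_R)+3$. So the stated bound holds when $g_R \le g_L$ and can be exceeded by one otherwise. The asymmetry arises because the alternation of improvements starts on the right side, measured against $P_R$, so the right side can contribute one more improvement than the left.

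The paper's proof has the same hole. In the left-converges-first case it treats explicitly, its tally of one initial call, at most $2\min(g_L,g_R)$ alternating calls and one final call undercounts by one, as the example shows. The fix is to weaken the statement to $+3$, or to the asymmetric form above, rather than to refine the counting.
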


\begin{proof}
The \textsc{Solve\_Siblings} procedure alternates between solving the left and right
subproblems under budgets derived from the current estimate of the opposite side.
Initially, the left budget is set to $\varepsilon_{\textrm{abs}} - P_R$. The right budget is then set to
$\varepsilon_{\textrm{abs}}- \mathrm{min}_L$, where $\mathrm{min}_L \leq P_L$. After this point, we make at most one PRAXIS call for each improvement to these budgets.

Because objectives are integral, each successful refinement strictly decreases
the current best objective on the active side by at least $1$.
Moreover, since the proxy objective corresponds to a realizable tree,
no side can improve beyond its optimal objective:
\[
\varepsilon_L \ge \varepsilon_\textrm{abs} - \mathrm{OPT}_R,
\qquad
\varepsilon_R \ge \varepsilon_\textrm{abs} - \mathrm{OPT}_L.
\]
Therefore, the left side can be refined at most $P_L - \mathrm{OPT}_L$ times (and the 1 initial call),
and the right side at most $P_R - \mathrm{OPT}_R$ times (and the 1 initial call).

To obtain the tighter bound, observe that once one side ceases to improve, at most
one additional recursive call is made on the other side. Suppose the left side
converges first. In this case, we incur one initial call, at most
\[
2\,\min\!\big(P_L - \mathrm{OPT}_L,\; P_R - \mathrm{OPT}_R\big)
\]
refinement calls as the two sides alternate, and a final call to the right side
with the optimal budget decremented. An analogous argument applies if the right
side converges first.

Summing these contributions -- the initial call, the alternating refinement calls,
and the final recursive call -- yields the stated bound.
\end{proof}

\begin{theorem}[Slack needed for perfect approximation if the proxy has additive optimality gaps]
\label{thm-slack-to-add}
Let $t$ be any tree. Let $D$ be a dataset, $d$ be a depth limit, and $\gamma$ be a per-leaf penalty. For each internal node $u$ of $t$ (denoted $u \in \textrm{Int}(t)$), let $C(u)$ denote its frontier cut.
Denote nonnegative errors $\{\Delta_v\}$ such that for every subproblem $v$,
\[
\textsc{PROXY}(D_v, d_v, \gamma)\;\le\;\mathrm{OPT}(v)+\Delta_v.
\]
and define $\eta$ such that $t$ is $\eta$-suboptimal at the root:
\[
\mathrm{Obj}(t, D, \gamma)\;\le\;\mathrm{OPT}(\mathrm{root})+\eta.
\]
If the budget is set with 
\[
\varepsilon_{\textrm{abs}} \ge \mathrm{OPT}(\mathrm{root})+\eta\;+\;\max_{u\in\mathrm{Int}(t)}\sum_{v\in C(u)}\Delta_v,
\]
then $t$ is fully represented in the AND/OR graph.
\end{theorem}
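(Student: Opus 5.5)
The plan is to reduce the statement to the frontier-cut condition of Theorem~\ref{thm:recovery}, mirroring the proofs of Corollary~\ref{thm:root_gap_recovery} and Corollary~\ref{cor:c_approx_recovery}. Concretely, for every internal node $u\in\mathrm{Int}(t)$ we will show
\[
\sum_{v\in C(u)}\textsc{PROXY}(D_v,d_v,\gamma)\;\le\;\varepsilon_{\textrm{abs}}.
\]
Theorem~\ref{thm:recovery} then guarantees that no split of $t$ is pruned, so $t$ is fully represented in the AND/OR graph.

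First, fix $u\in\mathrm{Int}(t)$ at depth $d'$, with frontier cut $C(u)=\{s_1,\dots,s_{d'},u_{\mathrm{left}},u_{\mathrm{right}}\}$. The subtrees of $t$ rooted at the nodes of $C(u)$ are disjoint and their leaves partition the leaves of $t$. The objective is additive over disjoint leaf sets: leaf penalties and misclassifications both sum. Each restricted subtree $t|_v$ lies in $\mathcal{T}_{d_v}$, so it is feasible for subproblem $v$. This gives
\[
\sum_{v\in C(u)}\mathrm{OPT}(v)\;\le\;\sum_{v\in C(u)}\mathrm{Obj}(t|_v,D_v,\gamma)\;=\;\mathrm{Obj}(t,D,\gamma).
\]

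Second, apply the additive gap assumption termwise and sum over $C(u)$:
\[
\sum_{v\in C(u)}\textsc{PROXY}(D_v,d_v,\gamma)\le \mathrm{Obj}(t,D,\gamma)+\sum_{v\in C(u)}\Delta_v\le \mathrm{OPT}(\mathrm{root})+\eta+\max_{u'\in\mathrm{Int}(t)}\sum_{v\in C(u')}\Delta_v\le\varepsilon_{\textrm{abs}}.
\]
Since $u$ was arbitrary, the frontier-cut inequality holds at every internal node, including the root, where $C(\mathrm{root})$ consists of just the two children. Theorem~\ref{thm:recovery} then concludes the proof.

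The only delicate point is the first step: confirming that the remaining depth budget $d_v$ used by the algorithm at each frontier node is at least the actual depth of $t|_v$. Depths are assigned consistently down the path, with $d_v = d - \mathrm{depth}(v)$, and $t$ has depth at most $d$, so this holds. If $t$ is a single leaf, there are no internal nodes, and the leaf is added because $\mathrm{Obj}(t)\le \mathrm{OPT}(\mathrm{root})+\eta\le\varepsilon_{\textrm{abs}}$.
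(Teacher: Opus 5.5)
Your proposal is correct and follows the paper's proof essentially step for step. Both arguments bound each frontier-cut proxy sum by $\sum_{v\in C(u)}\mathrm{OPT}(v)+\sum_{v\in C(u)}\Delta_v$, use the disjoint-subtree decomposition to get $\sum_{v\in C(u)}\mathrm{OPT}(v)\le\mathrm{Obj}(t,D,\gamma)\le\mathrm{OPT}(\mathrm{root})+\eta$, and then invoke Theorem~\ref{thm:recovery}. Your remarks on the depth budgets $d_v$ and on the single-leaf case fill in details the paper leaves implicit.
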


\begin{proof}
For every internal node $u$ of $t$, the frontier cut is within budget.
\begin{align}
\sum_{v\in C(u)} \textsc{PROXY}(D_v, d_v, \gamma)
&\le
\sum_{v\in C(u)} \mathrm{OPT}(v)
\;+\;
\sum_{v\in C(u)} \Delta_v
\label{eq:proxy-sum-bound}\\
&\le
\sum_{v\in C(u)} \mathrm{OPT}(v)
\;+\;
\max_{u\in \mathrm{Int}(t)} \sum_{v\in C(u)} \Delta_v .
\label{eq:proxy-max-delta}
\end{align}

For the tree $t$, the frontier cut $C(u)$ partitions the remaining work below the
already fixed prefix, so the sum of optimal subtree objectives below the cut is at
most the objective of $t$. Hence,
\begin{equation}
\sum_{v\in C(u)} \mathrm{OPT}(v)
\;\le\;
\mathrm{Obj}(t, D, \gamma)
\;\le\;
\mathrm{OPT}(\text{root}) + \eta .
\label{eq:opt-frontier-bound}
\end{equation}

Combining \eqref{eq:proxy-max-delta} and \eqref{eq:opt-frontier-bound}, we obtain
\begin{align}
\sum_{v\in C(u)} \textsc{PROXY}(D_v, d_v, \gamma)
&\le
\mathrm{OPT}(\text{root}) + \eta
\;+\;
\max_{u\in \mathrm{Int}(t)} \sum_{v\in C(u)} \Delta_v
\label{eq:proxy-final-bound}\\
&\le
\varepsilon_{\textrm{abs}} .
\label{eq:proxy-within-budget}
\end{align}
\end{proof}

\begin{proposition}[A greedy tree algorithm is a proxy algorithm]
Let $\textsc{Greedy}(D,d, \gamma)$ be the output of the greedy algorithm in Algorithm \ref{alg:greedy} or \ref{alg:greedy-full} that returns the objective of a tree
$t\in\mathcal{T}_d$. Define
\[
\mathrm{PROXY}_0(D,d,\gamma)\ :=\ \ \textsc{Greedy}(D,d,\gamma).
\]
Then $\mathrm{PROXY}_0$ is a proxy algorithm (Definition \ref{def:proxy}).
\end{proposition}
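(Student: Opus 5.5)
We must show that Greedy satisfies Definition \ref{def:proxy}. Let $t$ be the tree Greedy builds on $(D,d,\gamma)$, with $t_{\mathrm{left}}, t_{\mathrm{right}}$ its subtrees when $t$ is not a leaf. The task is to prove
\[
\textsc{Greedy}(D_{t_{\mathrm{left}}},d-1,\gamma)\le \mathrm{Obj}(t_{\mathrm{left}},D_{t_{\mathrm{left}}},\gamma),
\]
and the same for the right side. The first part of the definition, that the returned value is the objective of some $t\in\mathcal{T}_d$, I would get directly: Greedy explicitly constructs a tree of depth at most $d$ and reports its objective.

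For the inequality I would use the recursive, memoryless structure of Algorithms \ref{alg:greedy} and \ref{alg:greedy-full}. At a node with data $D$ and remaining depth $d$, Greedy decides leaf-versus-split and picks the splitting feature $j^\star$ using only $(D,d,\gamma)$. It then builds each child by running the same procedure on $(D_L,d-1,\gamma)$ and $(D_R,d-1,\gamma)$.

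I would formalize this by induction on $d$. The claim to prove is that the subtree Greedy builds beneath any split on $(D,d)$ is exactly the tree returned by a fresh call $\textsc{Greedy}(D_{t_{\mathrm{left}}},d-1,\gamma)$ (and likewise on the right). With that identity the required inequality holds with equality, so Greedy is in fact an exact-refinement proxy, which also licenses the caching remark in Appendix \ref{sec:proxycaching}.

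The main obstacle is that the algorithm may not be literally recursive. Two cases need separate handling:
\begin{itemize}
\item A leaf-versus-split post-comparison made at the parent, which changes the objective the parent reports.
\item The special depth-one rule, where at remaining depth one Greedy picks the accuracy-maximizing split rather than the entropy one.
\end{itemize}
For the depth-one rule, the rule depends only on the remaining depth, so the child call at depth $d-1$ applies it identically. For a post-comparison at the parent, the parent's choice only replaces its own subtree by a leaf, and this does not alter the children when the split is kept.

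If some variant does make child decisions depending on the parent context (for example, tie-breaking by global state), I would fall back to the memoization argument. Every subproblem's first-computed Greedy value is cached and reused, so the child call returns precisely the objective of the stored subtree, and the inequality again holds with equality.
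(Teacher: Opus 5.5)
Your proposal is correct and follows the same route as the paper: Greedy builds each child subtree by invoking itself on $(D_{t_{\mathrm{left}}},d-1,\gamma)$ and $(D_{t_{\mathrm{right}}},d-1,\gamma)$, so a fresh call reproduces that subtree's objective and the refinement inequality holds with equality. Your extra case checks (the parent's leaf-versus-split comparison, and the shared exact depth-one solver in Algorithm~\ref{alg:greedy-full}) are sound elaborations of the paper's one-line argument, and your induction on $d$ is harmless though not strictly needed.
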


\begin{proof}
By recursive construction, when \textsc{Greedy} builds $t$, the subtree $t_u$ at any node $u$ is exactly the tree
returned by calling \textsc{Greedy} on $(D_u,d_u, \gamma)$. Hence
$\mathrm{PROXY}_0(D_u,d_u, \gamma)=\mathrm{Obj}(t_u,D_u, \gamma)$, which implies the stated inequality.
\end{proof}

\begin{proposition}[LicketySPLIT is a proxy algorithm]
Let $\textsc{LicketySPLIT}(D,d,\gamma)$ be Algorithm \ref{alg:licketysplit} (or Algorithm \ref{alg:licketysplit-full} with $\ell=1$) that returns the objective of a tree
$t\in\mathcal{T}_d$, and define
\[
\mathrm{PROXY}_1(D,d, \gamma)\ := \textsc{LicketySPLIT}(D,d,\gamma).
\]
Then $\mathrm{PROXY}_1$ is a proxy algorithm.
\end{proposition}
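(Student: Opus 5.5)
We need to show LicketySPLIT satisfies Definition~\ref{def:proxy}: if the tree $t$ returned by $\textsc{LicketySPLIT}(D,d,\gamma)$ is not a leaf, then $\textsc{LicketySPLIT}(D_{t_{\mathrm{left}}},d-1,\gamma)\le \mathrm{Obj}(t_{\mathrm{left}},D_{t_{\mathrm{left}}},\gamma)$, and similarly on the right. The plan mirrors the greedy proposition just above. The key structural fact I will establish first is a \emph{self-consistency} property: whenever LicketySPLIT returns a non-leaf tree $t$ rooted at split $j^\star$, the subtrees $t_{\mathrm{left}}$ and $t_{\mathrm{right}}$ are exactly the trees returned by the recursive calls $\textsc{LicketySPLIT}(D_L,d-1,\gamma)$ and $\textsc{LicketySPLIT}(D_R,d-1,\gamma)$ on the partition induced by $j^\star$. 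From this, $\mathrm{PROXY}_1(D_{t_{\mathrm{left}}},d-1,\gamma)=\mathrm{Obj}(t_{\mathrm{left}},D_{t_{\mathrm{left}}},\gamma)$ with equality, and likewise on the right. This gives the required inequality and also the exact-preservation property used for caching.

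To prove self-consistency, I will inspect Algorithm~\ref{alg:licketysplit}. It has the same shape as Algorithm~\ref{alg:pilotalgorithm}:
\begin{itemize}
\item it either returns a leaf, with no constraint to check;
\item or it selects $j^\star$ by minimizing the sum of greedy completions, splits $D$ into $(D_L,D_R)$, recursively calls itself on each child with depth $d-1$, and returns the tree formed from those recursive outputs.
\end{itemize}
The objective it reports is $\mathrm{Obj}$ of the tree it builds, by additivity of the objective across a split: $\mathrm{Obj}(t,D,\gamma)=\mathrm{Obj}(t_{\mathrm{left}},D_L,\gamma)+\mathrm{Obj}(t_{\mathrm{right}},D_R,\gamma)$. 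If the variant compares against the leaf objective, either before recursion or after via a $\min$, the returned tree is a leaf whenever the leaf wins. In that case the definition imposes nothing. Otherwise the returned tree is literally the split with the recursive children attached.

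The step needing most care is determinism. The child call made inside the parent must coincide with the independent call $\textsc{LicketySPLIT}(D_{t_{\mathrm{left}}},d-1,\gamma)$ made later by \algname. I will argue that LicketySPLIT is a deterministic function of $(D',d',\gamma)$ alone, with ties broken by a fixed feature order, so the two calls return the same tree. If caching is used, it is keyed on the subproblem and depth, so it returns the same value. I would state this determinism assumption explicitly.

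If determinism failed, I would fall back on the memoization wrapper referenced in Appendix~\ref{sec:proxycaching}: recording subtree objectives guarantees the equality. Finally, I will note that the case where Algorithm~\ref{alg:licketysplit-full} is run with $\ell=1$ reduces to the same argument.
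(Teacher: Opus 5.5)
Your proposal is correct and follows essentially the same argument as the paper. The paper also observes that LicketySPLIT recurses by calling itself with identical parameters, except for the decremented depth. Rerunning it on any child subproblem therefore reproduces the same subtree, so the refinement inequality holds with equality. Your explicit determinism (fixed tie-breaking) assumption makes precise a point the paper leaves implicit.
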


\begin{proof}
Algorithm \ref{alg:licketysplit} or \ref{alg:licketysplit-full} selects a split at the root (via greedy completions) and then \emph{recurses by calling itself} on each
child subproblem to build the left and right subtrees, finally returning the objective of that recursively built tree. Notably, these are identical recursive calls (except for updating the depth budget to pass down the constraint).
Therefore, for any node $u$ in the returned tree, rerunning \textsc{LicketySPLIT} on $(D_u,d_u,\gamma)$ reproduces the same
subtree $f_u$, so $\mathrm{PROXY}_1(D_u,d_u,\gamma)=\mathrm{Obj}(f_u,D_u,\gamma)$, implying the refinement inequality.
\end{proof}

Likewise, this holds for our generalization of LicketySPLIT, detailed in Algorithm \ref{alg:licketysplit-full} of  \autoref{appendix:licketysplit-modifications}. Because the algorithm recurses with the same $\ell$ (the only added parameter), just one depth lower, the refinement inequality holds with equality for this entire family of decision tree algorithms.

\begin{lemma}[LicketySPLIT is a memory-efficient proxy algorithm]
\label{lemma:licketysplit_memory}
Given a dataset of size $n$ with $k$ binary features, the memory cost of the LicketySPLIT~\cite{babbar2025near} algorithm can be limited to $\mathcal{O}(nk)$.
\end{lemma}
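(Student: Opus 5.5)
The plan is to bound the memory of LicketySPLIT by reorganizing its computation so that all working storage is either a single shared copy of the dataset or per-recursion-level index buffers whose total size telescopes to $\mathcal{O}(n)$ per level. The recursion depth is at most $d\le k$, which keeps the overall total within $\mathcal{O}(nk)$.

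We keep one global copy of the $n\times k$ binary feature matrix and labels, which costs $\mathcal{O}(nk)$. Every subproblem, both in LicketySPLIT itself and in its \textsc{Greedy} subroutine, is represented by a list of row indices into this copy rather than by a copied sub-dataset. Partitioning a subproblem of size $n'$ on feature $j$ then produces two index lists of total size $n'$ in $\mathcal{O}(n')$ memory.

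The first key step concerns the outer LicketySPLIT recursion (Algorithm~\ref{alg:licketysplit}). It is a depth-first recursion of depth at most $d$. At any moment only the index lists along the current root-to-node path are alive, and the children's lists can be produced in place by permuting the parent's index array. So the path storage is $\mathcal{O}(n)$ if done in place, and at worst $\mathcal{O}(nd)\subseteq\mathcal{O}(nk)$ if each level keeps its own array. Only a constant number of scalars (the best feature, the leaf objective, the left and right returned objectives) is stored per frame, and the function returns only an objective rather than a tree, so no tree needs to be materialized.

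The second step handles split selection. At a node with $n'$ points, each candidate feature $j$ is evaluated by calling \textsc{Greedy} on both children, but features are processed sequentially. We retain only the running best feature and its score, so the temporary memory of one candidate's evaluation is freed before the next. \textsc{Greedy} is itself a depth-first recursion of depth at most $d$. At each greedy node, choosing the split requires only per-feature counts, namely positive and negative counts on each side of the split, which take $\mathcal{O}(k)$ counters computed in one pass over the $n'$ indices. Its active path therefore costs $\mathcal{O}(n + kd)$. These greedy stacks sit on top of the outer LicketySPLIT stack, giving a combined active depth of at most $2d$ frames. Adding up gives $\mathcal{O}(nk + nd + kd)$, which is $\mathcal{O}(nk)$ because $d\le k$, since a depth beyond $k$ is useless for binary features.

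The main obstacle is ensuring that nothing accumulates across the $k$ candidate evaluations at each of the many nodes. In particular, the subproblem caching used in the practical implementation (Section~\ref{par:caching}) could grow without bound. I would handle this by stating explicitly that the memory-efficient variant disables, or bounds, the cache. That trades runtime for memory but leaves the output and the $\Theta(n\,g(k,d))$ runtime form unchanged, since LicketySPLIT without caching is still linear in $n$ per call. I would also check that the entropy and accuracy criterion used at depth one needs only the counts described above.
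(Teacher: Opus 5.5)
Your proof is correct, but it is organized differently from the paper's.

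\textbf{The paper's route.} The paper argues by induction on the remaining depth. Its per-call invariant is that LicketySPLIT on a subproblem of $n'$ samples uses at most $c(n'k+1)$ memory, with each subproblem passed as its own copy of the data. The inductive step uses three facts: candidate splits are evaluated one at a time, keeping only the best score; the two children partition the parent's samples and are solved one after the other; and each child has at most $n'-1$ samples. The last fact absorbs the per-call constant into the linear term, so the per-level overhead is controlled by the strict decrease in sample count rather than by the number of features.

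\textbf{Your route.} You keep a single global copy of the data, represent every subproblem by an index list, and directly account for the live recursion stack (outer path plus one greedy path). You absorb the per-frame overhead using $d\le k$, the same device the paper uses in its proof of Theorem~\ref{thm::memory}.

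\textbf{What each buys.} Your version gives a sharper and more explicit statement. With in-place partitioning, working memory beyond the input is only $\mathcal{O}(n+k+d)$. You also actually justify the memory of the \textsc{Greedy} subroutine through per-feature counts, which the paper's induction simply assumes is linear in its input. Your caching caveat is consistent with the paper, whose lemma concerns the uncached algorithm. The paper's invariant, by contrast, is self-contained per call and does not appeal to the number of features.

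\textbf{One slip.} The step ``$\mathcal{O}(kd)\subseteq\mathcal{O}(nk)$ because $d\le k$'' only yields $\mathcal{O}(k^2)$, which is not $\mathcal{O}(nk)$ when $k>n$. Either of the following fixes it in one line:
\begin{itemize}
\item the $\mathcal{O}(k)$ counters are needed only in the current greedy frame and can be discarded once the split feature is chosen, before recursing;
\item every executed split is non-degenerate, so the effective recursion depth is at most $\min(k,n-1)$.
\end{itemize}
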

\begin{proof}
We can show this by induction: 

\textbf{Inductive hypothesis:} Let $c$ be some fixed constant. Given that LicketySPLIT called with remaining depth $d-1$ and any dataset of dimension $n_1 \times k$, with $n_1 \leq n$ takes memory no greater than $c(n_1k + 1)$, we want to show that LicketySPLIT with depth $d$ and any dataset of dimension $n_2 \times k$, with $n_2 \leq n$, takes memory no greater than $c(n_2k + 1)$.

Pick $c$ such that the original dataset size is $\leq cnk$ (and all subsets of size $n_i$ are similarly of size $\leq c n_i k$) and the storage required for a few constants is $\leq c$. 

\textbf{Base case:} When the remaining depth budget is 0, LicketySPLIT requires no additional memory beyond its input dataset (size $\leq cn_2k$) and a constant (size $\leq c$); all that is required is to compute the leaf objective, corresponding to the minimum of the number of positive vs negative entries in label $y$. So the memory cost is not greater than $c(nk + 1)$.

\textbf{Inductive step:} When depth is $> 0$, LicketySPLIT must call a greedy subroutine for the left and right children from every possible binary feature split. For a given potential split, the required memory is no more than $cn_2k + c$: we need only provide the left and right training data subsets to the greedy algorithms, and store the sum of the resulting objectives. 
By going through one split at a time and tracking the best objective so far and the corresponding feature, LicketySPLIT can do this without persisting more than constant memory. 
Once the optimal split is known, LicketySPLIT then constructs the left and right subproblems corresponding to that split (still with total size matching the original dataset size, which is $\leq cn_2k$). 
LicketySPLIT then must run LicketySPLIT with one fewer depth for the left and right subproblems of the selected split. Note that each of the two subproblems has a number of samples no greater than $n_2 - 1$, because the optimal split must place at least one sample in each subproblem. So, by the inductive hypothesis, each split requires $\leq c((n_2 - 1)k + 1) \leq c(n_2k)$ memory to be solved individually. After one subproblem is solved, the memory used for it can be freed except for a single constant, and we can solve the other subproblem  with $\leq c(n_2k)$ memory. In total, we use $\leq c(n_2k + 1)$ memory, as required. (Note that once these two LicketySPLIT approaches are ready to be called, no other information needs to be persisted in memory; LicketySPLIT will just return the sum of these two calls. So the total amount of information in memory remains bounded by $cnk + c$).

{Therefore, by induction, the total memory use for any LicketySPLIT call is bounded by $cnk + c$, and thus in $O(nk)$.}
\end{proof}

\subsection{Rashomon Set of Rule Lists}

\begin{theorem}[Modified \algname returns a superset of the Rashomon set of rule lists] \label{thm:rule-list}
Let a simple rule list be defined as a tree where each split has at least one leaf child: that is, adding the constraint that for any non-leaf tree t, $\min ( \textrm{depth}(t_\textrm{left}), \textrm{depth}(t_\textrm{right})) = 0$. Then, when the proxy algorithm in Algorithm \ref{alg:\algname-trie} is any algorithm with performance guaranteed to at least match the objective of the majority leaf prediction, and the pruning from Algorithm \ref{alg:\algname-trie} is adjusted to take a min instead of a sum, as in Algorithm \ref{alg:rulelist-trie}, the resulting Rashomon set will include all rule lists within the depth budget.
\end{theorem}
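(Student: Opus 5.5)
The plan is an induction on the remaining depth $d$. The invariant is: whenever the modified algorithm (Algorithm~\ref{alg:rulelist-trie}) is called on a subproblem $(D',d')$ with budget $\varepsilon'$, the returned AND/OR graph represents every simple rule list $t\in\mathcal{T}_{d'}$ with $\mathrm{Obj}(t,D',\gamma)\le\varepsilon'$. The theorem is this invariant at the root with $\varepsilon'=\varepsilon_{\mathrm{abs}}$.

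\emph{Base case and leaf case.} If $d'=0$, or if the rule list is a single leaf, it is added by the leaf loop (lines 2--8 of Algorithm~\ref{alg:\algname-trie}). This holds because $C_b=\mathrm{Obj}(t,D',\gamma)\le\varepsilon'$ is exactly the test applied there. The early return on $\varepsilon'<2\gamma$ is also harmless. Any non-leaf tree has at least two leaves, so its objective is at least $2\gamma$, and such a tree cannot lie within the budget anyway.

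\emph{Inductive step: the split is not pruned.} Let $t$ split on feature $j$, and say (w.l.o.g.) that $t_{\mathrm{right}}$ is a leaf with label $b$, while $t_{\mathrm{left}}$ is an arbitrary rule list of depth at most $d'-1$. The modified test keeps the split when $\min(P_L,P_R)\le\varepsilon'$. The key observation is that the proxy is assumed to do at least as well as the majority leaf. Hence
\[
P_R\le \gamma+\min\{p_R,|D_R|-p_R\}\le \mathrm{Obj}(t_{\mathrm{right}},D_R,\gamma).
\]
Combining this with $\mathrm{Obj}(t_{\mathrm{right}},D_R,\gamma)\le \mathrm{Obj}(t,D',\gamma)\le\varepsilon'$ gives $\min(P_L,P_R)\le\varepsilon'$, so the split survives. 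This is exactly why the sum is replaced by a min: nothing controls $P_L$, because the proxy may be far from optimal on the non-leaf side.

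\emph{Inductive step: child budgets (main obstacle).} It remains to show that the recursive calls on $D_L$ and $D_R$ receive budgets at least $\mathrm{Obj}(t_{\mathrm{left}},D_L,\gamma)$ and $\mathrm{Obj}(t_{\mathrm{right}},D_R,\gamma)$. The induction hypothesis (plus monotonicity, Proposition~\ref{lem:slack-monotonicity}) then gives that both subtrees are represented, and \textsc{AddSplit} attaches them, so $t$ is represented. The proof stands or falls on this budget step. The unmodified \textsc{Solve\_Siblings} subtracts $P_R$ or $P_L$, and $P_L$ can exceed $\mathrm{Obj}(t_{\mathrm{left}},D_L,\gamma)$; so the argument must use the fact that Algorithm~\ref{alg:rulelist-trie} subtracts only a quantity bounded by the sibling's \emph{leaf} cost. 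In the orientation above I aim to show:
\begin{itemize}
\item the non-leaf side is solved with budget at least $\varepsilon'-P_R\ge\varepsilon'-\mathrm{Obj}(t_{\mathrm{right}},D_R,\gamma)\ge\mathrm{Obj}(t_{\mathrm{left}},D_L,\gamma)$;
\item the leaf side is solved with budget at least $\varepsilon'$ minus the best leaf cost on the other side, which is at least $\mathrm{Obj}(t_{\mathrm{right}},D_R,\gamma)$ because a majority leaf objective never exceeds any rule list's objective on the same data.
\end{itemize}
What I would try first is to read off from Algorithm~\ref{alg:rulelist-trie} that each child budget is $\varepsilon'$ minus $\min(\text{proxy},\text{majority-leaf cost})$ of the sibling, or else $\varepsilon'$ itself. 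In either case both inequalities follow directly. If it instead reuses the iterative refinement, I would invoke Lemma~\ref{lem:inner-monotonicity}: budgets only grow across passes, so lower bounds established on the first pass persist.

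Finally, the symmetric orientation, where $t_{\mathrm{left}}$ is the leaf, is handled identically, and this completes the induction.
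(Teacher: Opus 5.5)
Your skeleton is the paper's: induction on depth, the leaf child placed on the right without loss of generality, and the inequality $P_R\le\gamma+\min\{p_R,|D_R|-p_R\}\le\mathrm{Obj}(t_{\mathrm{right}},D_R,\gamma)$. That inequality both keeps the split and gives the non-leaf child the budget $\varepsilon'-P_R\ge\mathrm{Obj}(t_{\mathrm{left}},D_L,\gamma)$.

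The gap is in the other half of the step you call decisive: the budget for the leaf side. Your justification, that a majority leaf objective never exceeds any rule list's objective on the same data, is false. It holds for leaves, not for rule lists with splits. A single perfectly separating split on $D_L$ costs $2\gamma$, which is below the majority-leaf cost $\gamma+\min\{p_L,|D_L|-p_L\}$ whenever the minority class has more than $\gamma$ points. Your guessed form of the budget does not rescue it either. Since the proxy is assumed no worse than the majority leaf, $\varepsilon'-\min(\text{proxy},\text{leaf cost})$ is just $\varepsilon'-P_L$, which is indeed the initial right budget in Algorithm~\ref{alg:rulelist-multipass}. But, as you observe when motivating the min test, nothing controls $P_L$ on the non-leaf side. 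So $\varepsilon'-P_L$ can be smaller than $\mathrm{Obj}(t_{\mathrm{right}},D_R,\gamma)$, and your second bullet fails.

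The missing idea is that the leaf side's budget must be bounded through the minimum actually \emph{found} on the non-leaf side, not through any proxy or leaf cost there. In your orientation, $D_L$ is solved first with $\varepsilon'-P_R\ge\mathrm{Obj}(t_{\mathrm{left}},D_L,\gamma)$. The induction hypothesis then puts $t_{\mathrm{left}}$ in $G_L$, so $G_L.\mathrm{min\_objective}\le\mathrm{Obj}(t_{\mathrm{left}},D_L,\gamma)$. The solver next overwrites the right budget with $\varepsilon'-G_L.\mathrm{min\_objective}\ge\mathrm{Obj}(t_{\mathrm{right}},D_R,\gamma)$, so the leaf is admitted. This is the mechanism of Theorem~\ref{thm:proxy_feasible}, and it is what the paper's proof compresses into the remark that the modified sibling solver retains the split.

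The same point shows the mirrored orientation is not handled ``identically.'' The solver always starts on the left. With the leaf on the left, that first budget $\varepsilon'-P_R$ involves the uncontrolled proxy of the non-leaf side, so the leaf may be missed on the first pass. You need the refinement loop here:
\begin{enumerate}
\item The right side receives either $\varepsilon'-P_L$ or $\varepsilon'-G_L.\mathrm{min\_objective}$.
\item Both are at least $\varepsilon'-\mathrm{Obj}(t_{\mathrm{left}},D_L,\gamma)$. Now $P_L$ is at most the leaf's cost, and so is the minimum found on the left, since either the leaf was admitted or the whole search ran under a budget below its cost.
\item Hence $t_{\mathrm{right}}$ is recovered, and the next pass gives the left side $\varepsilon'-G_R.\mathrm{min\_objective}\ge\mathrm{Obj}(t_{\mathrm{left}},D_L,\gamma)$.
\end{enumerate}

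A smaller correction: the modified test in Algorithm~\ref{alg:rulelist-trie} prunes when $\min(P_L,P_R)>\varepsilon'-\gamma$, not when it exceeds $\varepsilon'$. So you need $\mathrm{Obj}(t_{\mathrm{right}},D_R,\gamma)\le\mathrm{Obj}(t,D',\gamma)-\gamma$, which holds because $t_{\mathrm{left}}$ has at least one leaf.
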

\begin{proof}

First note that a few other adjustments to the algorithm are needed, to keep its behaviour fully specified with the new pruning condition (it is now possible to explore some subproblems that do not include any trees within the budget). These changes are detailed in Algorithms \ref{alg:rulelist-trie} and \ref{alg:rulelist-multipass}. Intuitively, these changes mean that we have a slight adjustment to the conditions on Theorem \ref{thm:recovery}, such that rule lists are not pruned. 

Now, we must show that a valid rule list within the depth budget $d$ is never pruned. We will show this via induction.

Consider any rule list $t$ on a dataset $D$ with

\begin{equation}
\mathrm{Obj}(t, D, \gamma) \le \varepsilon_{\mathrm{abs}}.
\end{equation}

\paragraph{Base Case (Leaf)}

If $t$ is a leaf with $\mathrm{Obj}(t, D, \gamma) \le \varepsilon_{\mathrm{abs}}$, then note that adding leaves to the AND/OR graph (if they are within budget) happens before (and is unrelated to) any pruning of splits. Thus, $t$ is trivially recovered.

\paragraph{Inductive Step (Non-Leaf)}
Suppose $t$ is not a leaf. Let $t$ be the root split of $t$, inducing subproblem datasets $(D_{t_\textrm{left}}, D_{t_\textrm{right}})$ and subtrees $({t_\textrm{left}}, {t_\textrm{right}})$. Since $t$ is a rule list, at least one child is a leaf. Without loss of generality, assume ${t_\textrm{right}}$ is a leaf and ${t_\textrm{left}}$ is a rule list (that could be a leaf as well). Note that there is no loss of generality because the modified pruning condition ($\min(P_L, P_R)$) is symmetric. Moreover, Algorithm~\ref{alg:rulelist-multipass} never subtracts more than a single leaf objective from the opposite branch, regardless of which side contains the leaf.

Because $t$ is feasible,

\begin{align}
\mathrm{Obj}(t, D, \gamma) = \mathrm{Obj}({t_\textrm{left}}, D_{t_\textrm{left}}, \gamma) + \mathrm{Obj}({t_\textrm{right}}, D_{t_\textrm{right}}, \gamma) &\le \varepsilon_{\mathrm{abs}}. \label{eq:sum-bound}
\end{align}

Since ${t_\textrm{right}}$ is a leaf, its objective is at least the optimal leaf objective on $D_{t_\textrm{right}}$, which (by assumption) is at least the proxy objective. Defining $P_R = \textsc{Proxy}(D_{t_\textrm{right}}, d - 1, \gamma)$ and $P_L = \textsc{Proxy}(D_{t_\textrm{left}}, d-1, \gamma)$, 

\begin{align}
P_R &\le \mathrm{\textrm{MajorityLeafObj}}(D_{t_\textrm{right}}) \leq \mathrm{Obj}({t_\textrm{right}}, D_{t_\textrm{right}}, \gamma). \label{eq:proxy-leaf}
\end{align}

Combining \eqref{eq:sum-bound} and \eqref{eq:proxy-leaf} gives
\begin{align}
\mathrm{Obj}({t_\textrm{left}}, D_{t_\textrm{left}}, \gamma)
&\le \varepsilon_{\mathrm{abs}} - \mathrm{Obj}({t_\textrm{right}}, D_{t_\textrm{right}}, \gamma) \\
&\le \varepsilon_{\mathrm{abs}} - P_R. \label{eq:left-budget}
\end{align}

Now, consider the new pruning condition for a split. Because ${t_\textrm{right}}$ is a leaf, we know the following:

\begin{align}
\min(P_L, P_R)
&\le P_R \\
&\le \mathrm{Obj}({t_\textrm{right}}, D_{t_\textrm{right}}, \gamma) \\
&\le \mathrm{Obj}({t}, D_{t}, \gamma) - \gamma \\
&\le \varepsilon_{\mathrm{abs}} - \gamma. \label{eq:min-proxy}
\end{align}

Thus the modified pruning rule $\min(P_L, P_R) > \varepsilon_{\mathrm{abs}} - \gamma$ does not trigger, and the root split of $t$ is explored.

By \eqref{eq:left-budget}, the recursive call on dataset $D_{t_\textrm{left}}$ with budget $(\varepsilon_{\mathrm{abs}} - P_R)$ admits ${t_\textrm{left}}$ as a feasible solution. By the inductive hypothesis, \algname fully recovers ${t_\textrm{left}}$. Since ${t_\textrm{right}}$ is a feasible leaf, the modified sibling solver marks a valid subtree as existing and the split is retained. Hence $t$ is represented in the search graph.

\begin{algorithm}[htbp]
\caption{\textsc{\textcolor{red}{Modified\_}\algnamenospace}$(D,d,\gamma,\varepsilon_{\textrm{abs}})$, for Theorem \ref{thm:rule-list}. Changes in \textcolor{red}{red}}
\label{alg:rulelist-trie}
\begin{algorithmic}[1]
\REQUIRE Subproblem dataset $D$, remaining depth $d$,
         per-leaf penalty $\gamma$, budget $\varepsilon_{\textrm{abs}}$,
         
\textcolor{red}{\STATE valid\_tree\_exists $\gets$ False}
\STATE Let $G \gets \textsc{OrNode}(\varepsilon_{\textrm{abs}})$ \COMMENT{Initialize subgraph for subtrees found within budget $\varepsilon_{\textrm{abs}}$ (See Appendix \ref{app:andorgraph})}

\FOR{$b \in \{0,1\}$}
  \STATE 
  \COMMENT{For each possible leaf prediction $b$, set $C_b$ to the objective if all points were in a single leaf: $\lambda$ + misclassification error.}
  \STATE $C_b \gets \gamma \;+\; 
    \big|\{(x_i,y_i)\in D : y_i \neq b\}\big|$
  \IF{$C_b \le \varepsilon_{\textrm{abs}}$}
    \STATE \textcolor{red}{valid\_tree\_exists $\gets$ True}
    \STATE \textsc{AddLeaf}$\,(G,b,C_b)$ \COMMENT{See Appendix \ref{app:andorgraph}}
  \ENDIF
\ENDFOR

\IF{$d = 0$ \textbf{ or } $\varepsilon_{\textrm{abs}} < 2\gamma$}
  \STATE \textbf{return} $G$
         \COMMENT{No depth for splits or any split would exceed the budget}
\ENDIF

\FOR{\textbf{each} feature $j$}
  \STATE $(D_L, D_R) \gets \textsc{Partition}(D,j)$
  \IF{$D_L = \emptyset$ \textbf{ or } $D_R = \emptyset$}
    \STATE \textbf{continue} \COMMENT{Skip degenerate splits}
  \ENDIF
  \STATE $P_L \gets \textsc{Proxy}(D_L,d-1, \gamma)$
         \COMMENT{Proxy cost on left}
  \STATE $P_R \gets \textsc{Proxy}(D_R,d-1, \gamma)$
         \COMMENT{Proxy cost on right}
  \IF{\textcolor{red}{$\min(P_L,P_R) > \varepsilon_{\textrm{abs}} - \gamma$}}
    \STATE \textbf{continue}
           \COMMENT{Prune split if proxy completions exceed the budget}
  \ENDIF
  \STATE   
  \textcolor{red}{valid\_subtree\_exists, }${G}_L, {G}_R \gets 
    \textsc{\textcolor{red}{Modified\_}Solve\_Siblings}(D_L,D_R,\gamma,d{-}1,$ $\varepsilon_{\textrm{abs}},\textcolor{red}{P_L},P_R)$ \COMMENT{Find subgraphs for the left and right subproblems of this split; described in \textcolor{red}{Algorithm {\ref{alg:rulelist-multipass}}}}
    \textcolor{red}{
    \IF{valid\_subtree\_exists}
    \STATE valid\_tree\_exists $\gets$ True
  \STATE \textcolor{black}{$\textsc{AddSplit}(G, j, G_L, {G}_R)$
         \COMMENT{Add $G_L, {G}_R$ as a split for the current subgraph $G$; see Appendix \ref{app:andorgraph}}
         }
    \ENDIF
    }
\ENDFOR

\STATE \textbf{return} $G$  \COMMENT{Rashomon graph for the subproblem}

\end{algorithmic}
\vspace{0.25em}
\noindent\textbf{Usage Note:} If provided with only $\varepsilon_{\textrm{mult}}$ and not $\varepsilon_{\textrm{abs}}$, set Rashomon budget relative to proxy:
\[
\varepsilon_{\textrm{abs}} \gets  \bigl(1+\varepsilon_{\textrm{mult}}\bigr) \cdot
\textsc{Proxy}(D,\lambda,d,|D|) 
\]
\end{algorithm}

\begin{algorithm}[htbp]
\caption{$\textsc{\textcolor{red}{Modified\_}Solve\_Siblings}(D_L,D_R,\gamma,d,\varepsilon_{\textrm{abs}},\textcolor{red}{P_L},P_R)$, modified for Theorem \ref{thm:rule-list}. Changes in \textcolor{red}{red}.}
\label{alg:rulelist-multipass}
\begin{algorithmic}[1]
\REQUIRE Left/right datasets $D_L,D_R$, remaining depth $d$, regularization $\gamma$,
         parent budget $\varepsilon_{\textrm{abs}}$, proxy cost $P_R$
\textcolor{red}{\STATE valid\_tree\_exists $\gets$ False}
\STATE $\varepsilon_L \gets -\infty$ \COMMENT{largest budget used for solving $D_L$ with \algname; currently we've not run on $D_L$ at all.}
\STATE $\varepsilon_R \gets -\infty$ \COMMENT{largest budget used for solving $D_R$ with \algname; currently we've not run on $D_R$ at all.}
\STATE $\varepsilon_L^\textrm{(new)} \gets \varepsilon_\textrm{abs} - P_R$ \COMMENT{new budget to use for $D_L$ with \algname}
\textcolor{red}{\STATE $\varepsilon_R^\textrm{(new)} \gets \varepsilon_\textrm{abs} - P_L$ \COMMENT{new budget to use for $D_R$ with \algname}}
\WHILE{$\varepsilon_L^\textrm{(new)} > \varepsilon_L$}
  \STATE $\varepsilon_L \gets \varepsilon_L^\textrm{(new)}$
  \textcolor{red}{
  \IF{$\varepsilon_L > 0$}
  \STATE \textcolor{black}{\textcolor{red}{valid\_subtree\_exists, }$G_L \gets \textsc{\textcolor{red}{Modified\_}\algnamenospace}(D_L,d,\gamma,\varepsilon_L)$}
  \IF{valid\_subtree\_exists}
  \STATE valid\_tree\_exists $\gets$ True
  \STATE \textcolor{black}{$\varepsilon_R^\textrm{(new)} \gets \varepsilon_{\textrm{abs}} - G_L.\textit{min\_objective}$}
  \ENDIF
  \ENDIF
  }
  \IF{$\varepsilon_R^\textrm{(new)} > \varepsilon_R$}
    \STATE $\varepsilon_R \gets \varepsilon_R^\textrm{(new)}$
      \textcolor{red}{
  \IF{$\varepsilon_R > 0$}
  \STATE \textcolor{black}{\textcolor{red}{valid\_subtree\_exists, }$G_R \gets \textsc{\textcolor{red}{Modified\_}\algnamenospace}(D_L,d,\gamma,\varepsilon_R)$}
  \IF{valid\_subtree\_exists}
  \STATE valid\_tree\_exists $\gets$ True
  \STATE \textcolor{black}{$\varepsilon_L^\textrm{(new)} \gets \varepsilon_{\textrm{abs}} - G_R.\textit{min\_objective}$}
  \ENDIF
  \ENDIF
  }

    \STATE $G_R \gets \textsc{\textcolor{red}{Modified\_}\algnamenospace}(D_R,d,\gamma,\varepsilon_R, N)$
    \STATE $\varepsilon_L^\textrm{(new)} \gets \varepsilon\_\textrm{abs} - G_R.\textit{min\_objective}$
  \ENDIF
\ENDWHILE

\STATE \textbf{return} \textcolor{red}{valid\_tree\_exists, }$G_L, G_R$

\end{algorithmic}
\end{algorithm}

\end{proof}

Although we design \algname with the intention of approximating the full Rashomon set of sparse decision trees, we note that one could modify this proxy-approach to guarantee that the exact Rashomon set of rule lists is contained in the solution. Then, any valid cross product of these one sided decision tree branches would also be a valid solution, along with others.

In practice, the decision tree algorithms perform much better than a single leaf, so we are able to deploy more aggressive pruning to approximate the full Rashomon set faster and better than this approach (see \autoref{appendix:rulelistexperiments}).

\FloatBarrier
\subsection{Refining Proxy Guesses}

Beyond being guaranteed to recover the proxy tree at each subproblem in the AND/OR graph, we are also guaranteed to find trees that would be generated from stronger decision tree algorithms. 
In Algorithm \ref{alg:proxywrapper}, we give a single decision tree algorithm that recursively selects splits according to proxy-completed objectives; this is precisely the pilot algorithm induced by the proxy. We note that if the proxy is a greedy tree algorithm, then this wrapper corresponds to \textsc{LicketySPLIT} (with $\ell=1$ in our generalization). If the proxy is our default of \textsc{LicketySPLIT($\ell=1$)}, then this wrapper corresponds to \textsc{LicketySPLIT($\ell=2$)}. Having this guarantee speeds up the convergence of iterative budget refinement and improves the recovery conditions of \algname.

\begin{algorithm}[H]
\caption{$\textsc{ProxyWrapper}(D', d, \gamma)$}
\label{alg:proxywrapper}
\begin{algorithmic}[1]
\REQUIRE Subproblem dataset $D' \subseteq D$, remaining depth $d$, leaf penalty $\gamma$

\STATE $n' \gets |D'|$, $p \gets$ \# positives in $D'$
\STATE $\textit{leaf\_loss} \gets \gamma + \min(p,\, n'-p)$

\IF{$d = 0$ \OR $\textit{leaf\_loss} \le 2\gamma$}
  \STATE \textbf{return} $\textit{leaf\_loss}$ \COMMENT{early escape: either no depth, or split can't beat leaf}
\ENDIF

\STATE $\textit{best\_sum} \gets +\infty$, $\textit{best\_feature} \gets \bot$
\FOR{\textbf{each} feature $j$}
  \STATE Partition $D'$ into $(D'_L, D'_R)$ by split $x_j$; \textbf{continue} if $D'_L=\emptyset$ or $D'_R=\emptyset$
  \STATE $s_j \gets \textsc{PROXY}(D'_L, d-1, \gamma) \;+\; \textsc{PROXY}(D'_R, d-1, \gamma)$
  \IF{$s_j < \textit{best\_sum}$}
    \STATE $\textit{best\_sum} \gets s_j$, $\textit{best\_feature} \gets j$
  \ENDIF
\ENDFOR

\STATE $\textit{ans} \gets \textit{leaf\_loss}$

\IF{$\textit{best\_feature} \neq \bot$}
  \STATE Partition $D'$ by $\textit{best\_feature}$ into $(D'_L, D'_R)$
  \STATE $L \gets \textsc{ProxyWrapper}(D'_L, d-1, \gamma)$ \COMMENT{recurse only on the best split}
  \STATE $R \gets \textsc{ProxyWrapper}(D'_R, d-1, \gamma)$
  \STATE $\textit{ans} \gets \min(\textit{ans},\, L + R)$ \COMMENT{take better of leaf and recursive completion}
\ENDIF

\STATE \textbf{return} $\textit{ans}$
\end{algorithmic}
\end{algorithm}

We first note that Algorithm \ref{alg:proxywrapper} is guaranteed to have an objective at least as good as the proxy algorithm that it wraps. This is formalized by the following theorem.

\begin{theorem}[A pilot algorithm using the proxy is no worse than the proxy]\label{thm:proxywrapper_dominates_proxy}
Fix any subproblem $(D',d)$ and leaf penalty $\gamma$.
Let $t^{\mathrm{px}}\in\mathcal{T}_d$ be the tree whose objective corresponds to the proxy call:
\[
\textsc{PROXY}(D',d,\gamma) \;=\; \mathrm{Obj}(t^{\mathrm{px}},D',\gamma).
\]
Let $\textsc{ProxyWrapper}(D',d,\gamma)$ be as in Algorithm~\ref{alg:proxywrapper}.

Then
\[
\textsc{ProxyWrapper}(D',d,\gamma) \;\le\; \textsc{PROXY}(D',d,\gamma)
\]

\end{theorem}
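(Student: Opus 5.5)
We argue by strong induction on the remaining depth $d$. The claim is that for every subproblem $(D',d)$, $\textsc{ProxyWrapper}(D',d,\gamma) \le \textsc{PROXY}(D',d,\gamma) = \mathrm{Obj}(t^{\mathrm{px}},D',\gamma)$. Two facts are used throughout. First, the wrapper's answer is a minimum over candidates that always includes $\textit{leaf\_loss} = \gamma + \min(p,n'-p)$, which is the best possible leaf objective on $D'$. Second, the refinement property of Definition~\ref{def:proxy} controls the proxy's own children.

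\textbf{Early-escape and leaf cases.} Suppose the wrapper returns at line 4.
\begin{itemize}
\item If $d=0$, then $t^{\mathrm{px}}$ is a leaf, so its objective is at least the majority-leaf objective, which is $\textit{leaf\_loss}$.
\item If $\textit{leaf\_loss}\le 2\gamma$ and $t^{\mathrm{px}}$ is not a leaf, then it has at least two leaves. Hence $\mathrm{Obj}(t^{\mathrm{px}},D',\gamma)\ge 2\gamma \ge \textit{leaf\_loss}$.
\item If $t^{\mathrm{px}}$ is a leaf in any remaining case, then $\textsc{PROXY}\ge \textit{leaf\_loss}\ge \textit{ans}$, since $\textit{ans}$ is initialized to $\textit{leaf\_loss}$ and only decreases.
\end{itemize}

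\textbf{Main case.} Suppose $t^{\mathrm{px}}$ splits at its root on a feature $j^\star$ that partitions $D'$ into nonempty $D'_L, D'_R$. Then $j^\star$ is among the features scanned in the loop. By the refinement property and additivity of the objective across a split,
\[
s_{j^\star} = \textsc{PROXY}(D'_L,d-1,\gamma)+\textsc{PROXY}(D'_R,d-1,\gamma) \le \mathrm{Obj}(t^{\mathrm{px}}_{\mathrm{left}},D'_L,\gamma)+\mathrm{Obj}(t^{\mathrm{px}}_{\mathrm{right}},D'_R,\gamma) = \textsc{PROXY}(D',d,\gamma).
\]
So $\textit{best\_feature}\neq\bot$, and its value $s_b$ satisfies $s_b\le s_{j^\star}$. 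Let $(D'_{bL},D'_{bR})$ be the partition induced by $\textit{best\_feature}$. Applying the inductive hypothesis at depth $d-1$ to each part gives
\[
\textit{ans} \le \textsc{ProxyWrapper}(D'_{bL},d-1,\gamma)+\textsc{ProxyWrapper}(D'_{bR},d-1,\gamma) \le s_b \le \textsc{PROXY}(D',d,\gamma).
\]

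\textbf{Expected obstacle: degenerate proxy splits.} The step I expect to need care is a proxy tree whose root split is degenerate, sending all of $D'$ to one side. The wrapper skips such features, so $s_{j^\star}$ is never formed. I would handle this by showing that such a tree is never better than the useful child subtree alone, because the empty side still pays at least $\gamma$. One route is to peel off degenerate root splits, which gives $\mathrm{Obj}(t^{\mathrm{px}})\ge \gamma+\mathrm{Obj}(t')$ for some tree $t'$ of smaller depth on $D'$, and then recurse. If that does not close cleanly, I would note that the proxies considered here (Algorithms~\ref{alg:pilotalgorithm}, greedy, LicketySPLIT) never emit degenerate splits. Under that standing assumption, the main case covers every non-leaf proxy tree and the induction closes.
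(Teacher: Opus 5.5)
Your argument is correct and is essentially the paper's proof (induction on $d$; refinement plus additivity to get $s_{j^\star}\le\textsc{PROXY}(D',d,\gamma)$; the wrapper's argmin; then the inductive hypothesis on the chosen children), and it is more careful than the paper's, which skips over the $\textit{leaf\_loss}\le 2\gamma$ early exit and silently assumes the proxy's root split is a non-degenerate candidate (it only disposes of the case where every split is invalid). Keep your standing non-degeneracy assumption and drop the peel-off route, because refinement only controls $\textsc{PROXY}(D',d-1,\gamma)$ and never the wrapper at depth $d$, so the degenerate case is a genuine exception rather than a technicality: take $d=2$, $\gamma=1$, and eight points with $y=x_a$, $x_c\equiv 1$, and $x_b$ splitting $D'$ into two halves that each contain two points of each label, and let the proxy return the degenerate $c$-split over the pure $a$-stump at depth $2$ (objective $3$), the $a$-stump itself on $D'$ at depth $1$, minority leaves on the two $a$-children at depth $1$, and majority leaves everywhere else; this satisfies Definition~\ref{def:proxy}, yet $\textsc{ProxyWrapper}(D',2,\gamma)$ prefers $b$ ($6<10$) and returns $4>3$.
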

\begin{proof}
Write
\[
W(D',d)\;:=\;\textsc{ProxyWrapper}(D',d,\gamma),
\qquad
P(D',d)\;:=\;\textsc{PROXY}(D',d,\gamma).
\]
We prove by induction on $d$ that for every dataset $D'$,
\[
W(D',d)\;\le\;P(D',d).
\]

\medskip
\noindent\textbf{Base case ($d=0$).}
\textsc{ProxyWrapper} returns the objective of the majority leaf on $D'$. 
Since $P(D',0)$ is the objective of \emph{some} depth $0$ tree (a leaf) by
Definition~\ref{def:proxy}, we have $W(D',0)\le P(D',0)$.

\medskip
\noindent\textbf{Inductive step.}
Fix $d\ge 1$ and assume that for all datasets $A$,
\[
W(A,d-1)\;\le\;P(A,d-1).
\]
Fix an arbitrary dataset $D'$, and let $t^{\mathrm{px}}\in\mathcal{T}_d$ be a tree such that
\[
P(D',d)\;=\;\mathrm{Obj}(t^{\mathrm{px}},D',\gamma).
\]

If $t^{\mathrm{px}}$ is a leaf, then $P(D',d)$ equals a leaf objective on $D'$, and
\textsc{ProxyWrapper} always returns at most the majority leaf objective, so
$W(D',d)\le P(D',d)$.

Otherwise, let $k$ be the root split of $t^{\mathrm{px}}$, inducing a partition
$(D'_L,D'_R)$ and subtrees $t^{\mathrm{px}}_L,t^{\mathrm{px}}_R$.
By additivity of the objective,
\[
P(D',d)
=
\mathrm{Obj}(t^{\mathrm{px}},D',\gamma)
=
\mathrm{Obj}(t^{\mathrm{px}}_L,D'_L,\gamma)
+
\mathrm{Obj}(t^{\mathrm{px}}_R,D'_R,\gamma).
\]
By Definition~\ref{def:proxy},
\[
P(D'_L,d-1)\le \mathrm{Obj}(t^{\mathrm{px}}_L,D'_L,\gamma),
\qquad
P(D'_R,d-1)\le \mathrm{Obj}(t^{\mathrm{px}}_R,D'_R,\gamma),
\]
so
\[
P(D'_L,d-1)+P(D'_R,d-1)\;\le\;P(D',d).
\]

Now consider the feature $j$ selected by \textsc{ProxyWrapper} at $(D',d)$
(if all splits are invalid, then \textsc{ProxyWrapper} returns the leaf and we are done as \textsc{PROXY} also cannot split without increasing the objective beyond a majority leaf).
Let $(D_L,D_R)$ be the partition induced by $j$.
By the selection rule in \textsc{ProxyWrapper},
\[
P(D_L,d-1)+P(D_R,d-1)
\;\le\;
P(D'_L,d-1)+P(D'_R,d-1)
\;\le\;
P(D',d).
\]
Moreover, \textsc{ProxyWrapper} takes the minimum of a majority leaf and a recursive completion. Thus,
\[
W(D',d)
\;\le\;
W(D_L,d-1)+W(D_R,d-1).
\]
Applying the induction hypothesis to $D_L$ and $D_R$ gives
\[
W(D_L,d-1)+W(D_R,d-1)
\;\le\;
P(D_L,d-1)+P(D_R,d-1)
\;\le\;
P(D',d),
\]
hence $W(D',d)\le P(D',d)$.

\end{proof}

We note that wrapping a proxy algorithm in this way has additionally clean properties: the wrapped procedure itself becomes a proxy algorithm whose refinement property holds with equality.

\begin{proposition}[A pilot algorithm using the proxy satisfies the refinement property with equality]\label{prop:proxywrapper_is_exact_proxy}
Let
\[
W(D,d,\gamma) := \textsc{ProxyWrapper}(D,d,\gamma).
\]
Then \(W\) is a proxy algorithm. Moreover, if the tree returned by
\(\textsc{ProxyWrapper}(D,d,\gamma)\) splits into left and right subtrees
\(t_L,t_R\) on child subproblems \((D_L,d-1)\) and \((D_R,d-1)\), then
\[
W(D_L,d-1,\gamma)=\mathrm{Obj}(t_L,D_L,\gamma),
\qquad
W(D_R,d-1,\gamma)=\mathrm{Obj}(t_R,D_R,\gamma).
\]
That is, \(W\) satisfies the refinement property with equality.
\end{proposition}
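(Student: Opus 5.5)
We first pin down which tree $W(D,d,\gamma)$ returns, by recursion on Algorithm~\ref{alg:proxywrapper}. If the algorithm returns $\textit{leaf\_loss}$ (early escape, no valid split, or the leaf beats $L+R$), the tree is the majority leaf, of depth $0\le d$, and its objective is $\textit{leaf\_loss}$. Otherwise it is the tree that splits on $\textit{best\_feature}$, whose left and right subtrees $t_L,t_R$ are the trees associated with the recursive calls $\textsc{ProxyWrapper}(D_L,d-1,\gamma)$ and $\textsc{ProxyWrapper}(D_R,d-1,\gamma)$. We prove by induction on $d$ that this tree $t_W(D,d)$ lies in $\mathcal{T}_d$ and satisfies $\mathrm{Obj}(t_W(D,d),D,\gamma)=W(D,d,\gamma)$. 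The base case $d=0$ is immediate, since only the leaf branch is possible. In the inductive step, the children lie in $\mathcal{T}_{d-1}$ with objectives $L$ and $R$, so the split tree has depth at most $d$. By additivity of the objective across a split (as used in Theorem~\ref{thm:proxy_feasible}), its objective is $L+R=\textit{ans}$. This shows that $W$ returns the objective of some tree in $\mathcal{T}_d$, which is the first clause of Definition~\ref{def:proxy}.

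The refinement property then follows directly from the construction. Suppose $t_W(D,d)$ is not a leaf. Then by definition its children are $t_L=t_W(D_L,d-1)$ and $t_R=t_W(D_R,d-1)$. The invariant from the previous step, applied at depth $d-1$, gives $W(D_L,d-1,\gamma)=\mathrm{Obj}(t_L,D_L,\gamma)$ and the analogous equality on the right. These are the claimed equalities, and they imply the inequalities required in Definition~\ref{def:proxy}.

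The one delicate step is well-definedness. The tree has to be defined canonically as the tree the recursion actually builds, so that the subtree attached at a child is \emph{exactly} what a fresh call on that child returns. This holds because \textsc{ProxyWrapper} is deterministic and its recursive calls are identical invocations with only the depth decremented. The argument is the same as the one used above for LicketySPLIT being a proxy algorithm. We also fix the tie-breaking convention: when $L+R=\textit{leaf\_loss}$, the tree is taken to be the split tree. With this convention the returned tree and the returned value always match. If the leaf is chosen instead, there are no children and the refinement condition is vacuous, so either convention works.
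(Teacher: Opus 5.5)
Your proof is correct and follows the same route as the paper: the recursive calls made while building the \textsc{ProxyWrapper} tree are exactly fresh calls on the child subproblems, so the child values coincide with the subtree objectives. Your explicit induction, showing that the returned value is the objective of a tree in $\mathcal{T}_d$, and your tie-breaking remark simply spell out details that the paper's brief argument leaves implicit.
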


\begin{proof}
Consider the tree that \textsc{ProxyWrapper} implicitly enumerates. Now consider
calling \textsc{ProxyWrapper} on the left and right subproblems of the root node.
Note that these are exactly the recursive calls that were made during the
implicit construction of the tree.

Thus, the same left and right subtrees are reproduced when
\textsc{ProxyWrapper} is applied to these subproblems. In particular, the proxy
values returned on the child subproblems equal the objectives of the
corresponding subtrees of the original tree. Hence, the refinement property
holds with equality. Notably, this conclusion does not require the original proxy algorithm to satisfy refinement with equality.

\end{proof}

With these properties in hand, we now show that whenever \algname\ explores a subproblem with a budget at least the proxy value (such as in the default initialization to \algnamenospace), it necessarily recovers the corresponding \textsc{ProxyWrapper} tree at the root and at every subproblem in the AND/OR graph.

\begin{theorem}[A pilot algorithm using the proxy maps to a tree found by \algnamenospace]
\label{thm:proxy_wrapper_feasible}
Fix any subproblem $(D,d)$ and leaf penalty $\gamma$.

Let the proxy algorithm return a tree
$f^{\mathrm{px}} \in \mathcal{T}_d$ with objective
\[
P(D,d)
\;:=\;
\textsc{PROXY}(D,d,\gamma)
\;=\;
\mathrm{Obj}(f^{\mathrm{px}},D,\gamma).
\]

Let $f^{\mathrm{pw}}(D,d)$ denote the
tree implicitly enumerated by
\textsc{ProxyWrapper}$(D,d,\gamma)$ (Algorithm \ref{alg:proxywrapper})

Then:
\begin{enumerate}
\item the returned AND/OR graph $G$ contains
      the ProxyWrapper tree $f^{\mathrm{pw}}(D,d)$;
\item In particular,
\[
G.\mathrm{min\_objective}
\;\le\;
\mathrm{Obj}(f^{\mathrm{pw}}(D,d),D,\gamma)
\;\le\;
\mathrm{Obj}(f^{\mathrm{px}},D,\gamma).
\]
\end{enumerate}

\end{theorem}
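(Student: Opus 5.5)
The plan is an induction on the depth $d$. I will prove a strengthened statement: whenever \algname\ is called on $(D,d)$ with a budget $\varepsilon_{\mathrm{abs}} \ge W(D,d) := \textsc{ProxyWrapper}(D,d,\gamma)$, the returned graph $G$ represents $f^{\mathrm{pw}}(D,d)$. Then $G.\mathrm{min\_objective} \le \mathrm{Obj}(f^{\mathrm{pw}}(D,d),D,\gamma) = W(D,d)$. The theorem follows because Theorem~\ref{thm:proxywrapper_dominates_proxy} gives $W(D,d)\le P(D,d)$, so the standard hypothesis $\varepsilon_{\mathrm{abs}}\ge P(D,d)$ suffices. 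The same statement then holds at every explored subproblem via Corollary~\ref{cor:proxy_feasible_all_nodes}, which ensures each explored OR node has budget at least its proxy value.

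\emph{Base case and leaf case.} If $d=0$, or the early-escape condition of Algorithm~\ref{alg:proxywrapper} fires, or the minimum in the last line of Algorithm~\ref{alg:proxywrapper} selects the leaf, then $f^{\mathrm{pw}}$ is the majority leaf with objective $W(D,d)\le\varepsilon_{\mathrm{abs}}$. Algorithm~\ref{alg:\algname-trie} adds every leaf within budget before any pruning, so this case is immediate.

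\emph{Split case.} Otherwise $f^{\mathrm{pw}}$ splits on the feature $j$ chosen by the wrapper, with children $(D_L,D_R)$, and
\[
W(D,d)=W(D_L,d-1)+W(D_R,d-1).
\]
Since $W$ is itself a proxy whose refinement holds with equality (Proposition~\ref{prop:proxywrapper_is_exact_proxy}), these terms are the objectives of the wrapper subtrees. The steps are:
\begin{enumerate}
\item \emph{Pruning test.} Show that split $j$ survives the test in Algorithm~\ref{alg:\algname-trie}. By the wrapper's selection rule and Theorem~\ref{thm:proxywrapper_dominates_proxy} applied to the children,
\[
P_L+P_R \;\le\; W(D,d) \;\le\; \varepsilon_{\mathrm{abs}},
\]
where $P_L+P_R=s_j$ is the quantity the wrapper minimized. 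Justifying the first inequality is the delicate point: I will argue that the split branch is taken only when $L+R<\mathit{leaf\_loss}$, so the true bound is $W\ge W(D_L)+W(D_R)$, and I must relate $s_j$ to this.
\item \emph{Left budget.} Inside \textsc{Solve\_Siblings} the initial left budget is $\varepsilon_{\mathrm{abs}}-P_R$. I will try to show $\varepsilon_{\mathrm{abs}}-P_R \ge W(D_L,d-1)$; the induction hypothesis then yields $f^{\mathrm{pw}}(D_L)$ and $G_L.\mathrm{min\_objective}\le W(D_L,d-1)$.
\item \emph{Right budget.} The right budget is $\varepsilon_{\mathrm{abs}}-G_L.\mathrm{min\_objective} \ge \varepsilon_{\mathrm{abs}}-W(D_L,d-1) \ge W(D_R,d-1)$, so the induction hypothesis recovers $f^{\mathrm{pw}}(D_R)$.
\item \emph{Later refinements.} Subsequent iterations only enlarge budgets (Lemma~\ref{lem:inner-monotonicity}), and the returned sets only grow (Proposition~\ref{lem:slack-monotonicity}), so both subtrees remain represented and the split is attached by \textsc{AddSplit}.
\end{enumerate}

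\emph{Main obstacle.} Step 2 is the hard step, because $P_R$ may exceed $W(D_R,d-1)$; the proxy is worse than the wrapper, so $\varepsilon_{\mathrm{abs}}-P_R$ might fall below $W(D_L,d-1)$. I would first try to close the gap with the single-pass guarantee: the first left call has budget $\varepsilon_{\mathrm{abs}}-P_R\ge P_L$ (since $P_L+P_R\le\varepsilon_{\mathrm{abs}}$), so it returns $G_L.\mathrm{min\_objective}\le W(D_L,d-1)\le P_L$, using Theorem~\ref{thm:proxy_feasible} together with the induction hypothesis applied at budget $\ge P_L\ge W(D_L)$. The right budget is then $\ge\varepsilon_{\mathrm{abs}}-W(D_L)\ge W(D_R)$, so the right call recovers the wrapper subtree and $G_R.\mathrm{min\_objective}\le W(D_R)$. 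The next loop iteration raises the left budget to $\varepsilon_{\mathrm{abs}}-G_R.\mathrm{min\_objective}\ge\varepsilon_{\mathrm{abs}}-W(D_R)\ge W(D_L)$, and the induction hypothesis recovers $f^{\mathrm{pw}}(D_L)$. This alternating argument is exactly where iterative budget refinement is essential, and I expect it to be the crux of the proof. Finally, the second inequality in item 2 of the theorem is Theorem~\ref{thm:proxywrapper_dominates_proxy}.
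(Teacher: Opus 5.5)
There is a genuine gap, and it lies in your Step 1, not Step 2. The inequality $P_L+P_R\le W(D,d)$ runs the wrong way. In the split case $W(D,d)=W(D_L,d-1)+W(D_R,d-1)$, and Theorem~\ref{thm:proxywrapper_dominates_proxy} applied to the two children gives $W(D_L,d-1)+W(D_R,d-1)\le P_L+P_R$.

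Your strengthened inductive statement (that a budget of at least $W(D,d)$ suffices) is therefore false, not merely unproven. Suppose $W(D_L,d-1)<P_L$ or $W(D_R,d-1)<P_R$, which is the typical situation since improving on the proxy is the point of the wrapper. Then a call with $\varepsilon_{\mathrm{abs}}=W(D,d)$ has $P_L+P_R>\varepsilon_{\mathrm{abs}}$. The split $j^\star$ is pruned before \textsc{Solve\_Siblings} is ever invoked, so $f^{\mathrm{pw}}(D,d)$ cannot appear in $G$.

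By contrast, your ``main obstacle'' is not an obstacle. Once the pruning test passes, $\varepsilon_{\mathrm{abs}}-P_R\ge P_L\ge W(D_L,d-1)$ already holds in the first pass, so the alternating-refinement argument is unnecessary.

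The missing idea is to keep the hypothesis in proxy form, $\varepsilon_{\mathrm{abs}}\ge P(D,d)$, throughout the induction. You then bound the wrapper's score through the proxy tree's own root split $k$ rather than through $W$. The refinement property gives $P(D_L(k),d-1)+P(D_R(k),d-1)\le \mathrm{Obj}(f^{\mathrm{px}},D,\gamma)=P(D,d)$. Since $j^\star$ minimizes exactly this proxy-completed score, $P_L+P_R=s_{j^\star}\le s_k\le P(D,d)\le\varepsilon_{\mathrm{abs}}$.

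From there the single pass goes through:
\begin{itemize}
\item The left budget satisfies $\varepsilon_L=\varepsilon_{\mathrm{abs}}-P_R\ge P_L$. The proxy-form induction hypothesis then recovers $f^{\mathrm{pw}}(D_L,d-1)$ with $G_L.\mathrm{min\_objective}\le W(D_L,d-1)\le P_L$.
\item The right budget satisfies $\varepsilon_R\ge\varepsilon_{\mathrm{abs}}-P_L\ge P_R$, which handles the right child.
\item Later refinements only enlarge budgets (Proposition~\ref{lem:slack-monotonicity}).
\end{itemize}
This single-pass argument is the paper's proof.

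Be aware that this repair, like the paper's version, silently needs $f^{\mathrm{px}}$ to be a split. If the proxy returns a leaf while the wrapper splits, $s_{j^\star}$ can exceed $P(D,d)$, and the statement itself can fail. For example, the always-majority-leaf proxy is admissible under Definition~\ref{def:proxy}. On balanced two-feature XOR data with $n=40$, $\gamma=1$, $d=2$, it gives $P(D,2)=21$ and $s_{x_1}=s_{x_2}=22$, while $W(D,2)=4$. A run with $\varepsilon_{\mathrm{abs}}=21$ therefore prunes both splits and returns $G.\mathrm{min\_objective}=21$. That case has to be excluded by an additional hypothesis.
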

\begin{proof}
    \textbf{Claim.}
If \algname\ is invoked on $(D,d)$ with
$\varepsilon_{\mathrm{abs}} \ge P(D,d)$,
then the returned graph contains
$f^{\mathrm{pw}}(D,d)$.

\medskip
\noindent
\textbf{Base case: $d=0$.}

By construction, \textsc{ProxyWrapper} returns a leaf (in fact, the majority leaf).
Since $\varepsilon_{\mathrm{abs}} \ge P(D,0)$
and \algname\ inserts all feasible leaf actions
whose objective is $\le \varepsilon_{\mathrm{abs}}$,
the leaf corresponding to $f^{\mathrm{pw}}(D,0)$
is included (if it fits within budget). The claim holds.

\medskip
\noindent
\textbf{Inductive step.}
Assume the claim holds for depth $d-1$.
Fix $(D,d)$.

There are two cases.

\medskip
\noindent
\emph{Case 1: ProxyWrapper selects a leaf.}

Then $f^{\mathrm{pw}}(D,d)$ is a leaf
with objective $\textit{leaf\_loss}$.
Since $\varepsilon_{\mathrm{abs}} \ge P(D,d)$,
\algname\ includes that leaf option (by the same reasoning as the base case).
Thus $f^{\mathrm{pw}}(D,d)$ is contained in $G$.

\medskip
\noindent
\emph{Case 2: ProxyWrapper selects a split $j^{\star}$.}

Let $(D_L(j^{\star}),D_R(j^{\star}))$ be the partition of $D$ induced by $j^{\star}$.

By definition of \textsc{ProxyWrapper},
\[
j^{\star}
\in
\arg\min_j
\big(
P(D_L(j),d-1)
+
P(D_R(j),d-1)
\big).
\]

Let $k$ denote the root split used by the proxy tree
$f^{\mathrm{px}}$.
By Definition~\ref{def:proxy}, if $f^{\mathrm{px}}$ splits on $k$
with children $t^{\mathrm{px}}_L,t^{\mathrm{px}}_R$, then
\[
P(D_L(k),d-1)
\le
\mathrm{Obj}(t^{\mathrm{px}}_L,D_L(k),\gamma),
\qquad
P(D_R(k),d-1)
\le
\mathrm{Obj}(t^{\mathrm{px}}_R,D_R(k),\gamma).
\]
Summing yields
\[
P(D_L(k),d-1)+P(D_R(k),d-1)
\le
\mathrm{Obj}(f^{\mathrm{px}},D,\gamma)
=
P(D,d).
\]

Since $j^{\star}$ minimizes the proxy-completed split score,
\[
P(D_L(j^\star),d-1)+P(D_R(j^\star),d-1)
\le
P(D_L(k),d-1)+P(D_R(k),d-1)
\le
P(D,d)
\le
\varepsilon_{\mathrm{abs}}.
\]

Therefore \algname\ does not prune split $j^{\star}$,
because its pruning condition is
\[
P(D_L,d-1)+P(D_R,d-1)
>
\varepsilon_{\mathrm{abs}}.
\]

\medskip
\noindent
\textbf{Budget passed to children.}

\algname\ initializes the left budget by subtracting
the proxy value on the right:
\[
\varepsilon_L
=
\varepsilon_{\mathrm{abs}}
-
P(D_R,d-1).
\]
Using
\[
P(D_L,d-1)+P(D_R,d-1)
\le
\varepsilon_{\mathrm{abs}},
\]
we obtain
\[
\varepsilon_L
\ge
P(D_L,d-1).
\]

Thus, \algname\ is invoked on $(D_L,d-1)$
with a budget at least as large as the proxy value at the subproblem. Therefore,
by the inductive hypothesis,
the graph returned for the left child
contains $f^{\mathrm{pw}}(D_L,d-1)$.
Moreover,
\[
G_L.\mathrm{min\_objective}
\le
\mathrm{Obj}(f^{\mathrm{pw}}(D_L,d-1),D_L,\gamma)
\le
P(D_L,d-1).
\]

The right budget is then set to
\[
\varepsilon_R
=
\varepsilon_{\mathrm{abs}}
-
G_L.\mathrm{min\_objective}
\ge
\varepsilon_{\mathrm{abs}}
-
P(D_L,d-1)
\ge
P(D_R,d-1).
\]

Thus the recursive call on $(D_R,d-1)$
also receives a budget that is at least its proxy value,
and by induction, the graph contains
$f^{\mathrm{pw}}(D_R,d-1)$.

Since \algname\ attaches these child graphs
under split $j^{\*}$,
the graph for $(D,d)$ contains
$f^{\mathrm{pw}}(D,d)$. We note that by \autoref{thm:proxywrapper_dominates_proxy}, $f^{\mathrm{pw}}(D,d)$ will always fit within the budget, since the proxy tree itself--which is no better--always fits within the budget of an explored subproblem (\autoref{thm:proxy_feasible}). 

Importantly, we note that attaching these left and right \textsc{ProxyWrapper} subtrees (corresponding to the proxy rollout after selecting split $j$) yields the \textsc{ProxyWrapper} tree for $(D,d)$. This follows because \textsc{ProxyWrapper} greedily selects the best split according to the proxy algorithm and does not revisit earlier decisions (by \autoref{prop:proxywrapper_is_exact_proxy}, rerunning \textsc{ProxyWrapper} on any subproblem appearing in a \textsc{ProxyWrapper} tree returns the same subtree).

\medskip
\noindent
\textbf{Iterative refinement.}

The sibling procedure only increases
budgets supplied to recursive calls.
Thus, the above argument continues to hold
throughout refinement. 

\end{proof}

Following very similar logic to \autoref{cor:proxy_feasible_all_nodes}, we show that this fact does not just hold for the root node, but the hypothesis is true for all nodes in the AND/OR graph. 

\begin{corollary}[The tree corresponding to the pilot algorithm is recovered at every explored subproblem]\label{cor:proxywrapper_feasible_all_nodes}
Assume the root call to \algname\ satisfies
\[
\varepsilon_{\mathrm{abs}} \ge P(D,d).
\]
Then every OR node in the returned AND/OR graph is explored with budget
\[
\varepsilon'_{\mathrm{abs}} \ge P(D',d').
\]
Consequently, by Theorem~\ref{thm:proxy_wrapper_feasible}, the subgraph rooted
at every OR node contains the \textsc{ProxyWrapper} tree for that subproblem,
and the minimum objective stored at that node is at most the objective of that
\textsc{ProxyWrapper} tree, which is itself at most \(P(D',d')\).
\end{corollary}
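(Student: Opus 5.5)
The plan is to mirror the proof of Corollary~\ref{cor:proxy_feasible_all_nodes}: an induction down the AND/OR graph establishing the budget invariant $\varepsilon'_{\mathrm{abs}} \ge P(D',d')$ at every explored OR node. Once the invariant holds at a node, we apply Theorem~\ref{thm:proxy_wrapper_feasible} locally, treating that node as the root of its own call with budget at least its proxy value. This gives that its subgraph contains $f^{\mathrm{pw}}(D',d')$. It also gives $G'.\mathrm{min\_objective} \le \mathrm{Obj}(f^{\mathrm{pw}}(D',d'),D',\gamma)$, and Theorem~\ref{thm:proxywrapper_dominates_proxy} then bounds this by $P(D',d')$.

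The induction runs over the depth of OR nodes in the graph. The base case is the root, where the invariant is exactly the hypothesis $\varepsilon_{\mathrm{abs}} \ge P(D,d)$. For the inductive step, take an explored OR node with budget $\varepsilon$ and an unpruned split with child proxy values $P_L, P_R$, so that $P_L + P_R \le \varepsilon$.
\begin{enumerate}
\item \textsc{Solve\_Siblings} first sets $\varepsilon_L = \varepsilon - P_R \ge P_L$, so the left child's first call meets the invariant.
\item By Theorem~\ref{thm:proxy_wrapper_feasible}, or already by Theorem~\ref{thm:proxy_feasible}, that call returns $G_L.\mathrm{min\_objective} \le P_L$.
\item The right budget is then $\varepsilon_R = \varepsilon - G_L.\mathrm{min\_objective} \ge \varepsilon - P_L \ge P_R$, so the right child's first call also meets the invariant.
\end{enumerate}

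Next, the invariant must survive the iterative refinement in Algorithm~\ref{alg:multipass}. The while-loop only re-invokes a side when its new budget strictly exceeds the previous one, so each child's budgets form a nondecreasing sequence. Every call on a child therefore uses a budget at least its first one, which is at least its proxy value. The graph actually attached is from the last, largest-budget call, so it also satisfies the invariant. Proposition~\ref{lem:slack-monotonicity} further guarantees that the larger-budget graph is a superset of the earlier one, so no recovered \textsc{ProxyWrapper} subtree is lost.

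The main obstacle is making precise which call's graph "the OR node" refers to when a subproblem is solved several times with different budgets. I would settle this by defining the explored OR node as the one from the final call in \textsc{Solve\_Siblings}, and then applying the monotonicity argument above. A second subtlety is that pruned splits create no OR nodes, so the induction only needs to cover unpruned splits, where $P_L + P_R \le \varepsilon$ holds by definition.
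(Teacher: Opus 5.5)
Your proposal is correct and follows essentially the same argument as the paper. For each unpruned split you get $\varepsilon_L=\varepsilon-P_R\ge P_L$, and then $G_L.\mathrm{min\_objective}\le P_L$ gives $\varepsilon_R\ge P_R$. Refinement only raises these budgets, and Proposition~\ref{lem:slack-monotonicity} keeps the \textsc{ProxyWrapper} trees in the final attached subgraphs, applied recursively down the graph. Your reason that refinement only increases budgets (the strict-increase guards of the while-loop) is more direct than the paper's appeal to the inner-monotonicity lemma, which compares runs with different initial budgets rather than successive iterations.
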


\begin{proof}
Let a split produce child subproblems with proxy values
\[
P_L := P(D_L,d-1), \qquad P_R := P(D_R,d-1),
\]
and suppose this split is not pruned, i.e.
\[
P_L + P_R \le \varepsilon_{\mathrm{abs}}.
\]
Then the initial left budget used by \textsc{SolveSiblings} is
\[
\varepsilon_L := \varepsilon_{\mathrm{abs}} - P_R.
\]
Therefore,
\[
\varepsilon_L
= \varepsilon_{\mathrm{abs}} - P_R
\ge P_L.
\]
Thus, the recursive call on the left subproblem is made with budget at least its
proxy objective, which by Theorem~\ref{thm:proxy_wrapper_feasible} is
sufficient to recover the \textsc{ProxyWrapper} tree on the left.

Now let \(G_L\) denote the returned left subgraph. Since the left
\textsc{ProxyWrapper} tree is contained in \(G_L\), its minimum objective
satisfies
\[
G_L.\mathrm{min\_objective} \le P_L.
\]
The right budget is then set to
\[
\varepsilon_R := \varepsilon_{\mathrm{abs}} - G_L.\mathrm{min\_objective}.
\]
Using \(G_L.\mathrm{min\_objective} \le P_L\), we obtain
\[
\varepsilon_R
= \varepsilon_{\mathrm{abs}} - G_L.\mathrm{min\_objective}
\ge \varepsilon_{\mathrm{abs}} - P_L
\ge P_R,
\]
since \(P_L + P_R \le \varepsilon_{\mathrm{abs}}\). Hence, the recursive call on
the right subproblem is also made with budget at least its proxy objective, so
the \textsc{ProxyWrapper} tree on the right is recovered as well.

Finally, \textsc{SolveSiblings} only increases budgets during subsequent
refinement steps (by \autoref{lem:inner-monotonicity}). By \autoref{lem:slack-monotonicity}, no trees will ever be removed from the AND/OR graph during this budget refinement, so the hypothesis holds for the final attached subgraph. 
Repeated application of this argument down the AND/OR graph establishes the
corollary.
\end{proof}

With \autoref{thm:proxywrapper_dominates_proxy} and \autoref{thm:proxy_wrapper_feasible} in place, we note how conditioning on the \textsc{ProxyWrapper} objective in addition to the \textsc{Proxy} objective changes earlier theoretical results. For \autoref{thm:recovery}, because we are guaranteed that the left and right AND/OR graph solutions will contain the \textsc{ProxyWrapper} tree, then when we resolve the other side in Algorithm \ref{alg:multipass}, we subtract off at least \textsc{ProxyWrapper} on the other side (as opposed to \textsc{Proxy}). Thus, while the true recovery condition is (for each internal node $u$ of the desired tree)

\[
\sum_{i=1}^{t}\mathrm{MinObj}(s_i)\;+\;\textsc{PROXY}\!\big(u_\textrm{left}\big)\;+\;\textsc{PROXY}\!\big(u_\textrm{right}\big)\;\le\; \varepsilon_{\textrm{abs}},
\]

We can give a tighter sufficient condition than was given in Algorithm \ref{thm:recovery}.

\[
\sum_{i=1}^{t}\textsc{ProxyWrapper}(s_i)\;+\;\textsc{PROXY}\!\big(u_\textrm{left}\big)\;+\;\textsc{PROXY}\!\big(u_\textrm{right}\big)\;\le\; \varepsilon_{\textrm{abs}},
\]

Likewise, for Lemma \ref{lem:multipass-iterations-opt}, we know that the first minimum objective query will already be $\textsc{ProxyWrapper}$, as opposed to simply $\textsc{Proxy}$.

\FloatBarrier

\newpage

\section{Implementation, Data Structures, and Caching}
\label{appendix:impl}
\subsection{Integer Objectives}\label{app:impl-integer}

Throughout this work, we minimize a regularized empirical risk objective of the form

\begin{equation}
\mathrm{Obj}(f,D, \gamma)
\;=\;
\gamma\,|f|
\;+\;
\sum_{i=1}^{|D|}\mathbf{1}\{f(x_i)\neq y_i\},
\end{equation}

This is equivalent to minimizing the normalized objective (with $\gamma = \lambda |D|$)

$$\mathrm{Obj_{\textrm{norm}}}(f,D, \lambda) \;=\; \lambda\,|f| \;+\; \frac{1}{|D|}\sum_{i=1}^{|D|}\mathbf{1}\{f(x_i)\neq y_i\},$$

For algorithmic stability and efficiency, we work exclusively with an equivalent integer-valued objective obtained by requiring $\gamma$ to be an integer (equivalently, $\lambda$ to be an integer multiple of $\frac{1}{|D|}$). As with $\lambda$, we require it to be non-negative.

This mild assumption eliminates floating-point drift, which can become problematic in extracting trees with some objective and decomposing it into an exact sum of a left and right child. Moreover, integer objectives allow additional storage in the AND/OR graph (like a histogram of objectives at each subproblem) to be stored more efficiently as there are fewer unique values, and for quicker theoretical convergence of iterative budget refinement (see  \autoref{lem:multipass-iterations-opt}). Beyond these structural advantages, integer arithmetic is faster.

One advantage to defining objectives in this form is that we can decompose the objective via a simple sum. 

\begin{equation}
\label{eq:objective-additivity}
\mathrm{Obj}_{\mathrm{int}}(f,D,\gamma)
\;=\;
\mathrm{Obj}_{\mathrm{int}}(f_L,D_L,\gamma)
\;+\;
\mathrm{Obj}_{\mathrm{int}}(f_R,D_R,\gamma).
\end{equation}

One consequence of \eqref{eq:objective-additivity} is that the regularization parameter $\gamma$ is unchanged under recursive decomposition. As such, it therefore behaves as a global constant, and we omit it from function arguments when it is unambiguous.

If a user wishes to run our method with $\gamma$ values mapping to some $\lambda$ that is not an integer multiple of $\frac{1}{|D|}$, one can extend the algorithms from this work directly to accommodate non-integer $\gamma$. Another option is to scale up the objective, i.e. 
\[
\mathrm{Obj}(f,D, \gamma)
\;=\;
\gamma\,|f|
\;+\;
C \times \sum_{i=1}^{|D|}\mathbf{1}\{f(x_i)\neq y_i\},\]
for some integer $C$, allowing a more precise match. We also observe that rounding $\lambda$ to the nearest multiple of $\frac{1}{|D|}$, for reasonably sized datasets, should be of minor consequence. 

We rigorously formalize the relation between the rounded and unrounded $\gamma$ below. 

\paragraph{Effect of snapping $\lambda$. }

$\lambda \in \mathbb{R}_{\ge 0}$.
Let $\tilde{\lambda}$ be the nearest multiple of $\tfrac{1}{|D|}$, i.e.
$\tilde{\lambda} = \tfrac{1}{|D|}\mathrm{round}(\lambda |D|)$. Then \begin{equation}
\label{eq:lambda-snap-bound}
|\tilde{\lambda}-\lambda| \le \frac{1}{2|D|}.
\end{equation}

\paragraph{Effect on tree objectives.} For any tree with $L$ leaves, \begin{equation}
\label{eq:obj-snap-bound}
\big|\mathrm{Obj}_{\mathrm{norm}}(f,D,\tilde{\lambda}) - \mathrm{Obj}_{\mathrm{norm}}(f,D,\lambda)\big|
= |\tilde{\lambda}-\lambda|\,L
\;\le\;
\frac{L}{2|D|} \leq \frac{2^{d-1}}{|D|}.
\end{equation}

\paragraph{Enumerating $\lambda$ using the Rashomon Set of $\tilde{\lambda}$.}

By \eqref{eq:obj-snap-bound}, we can add a small amount of additive slack ($\frac{2^{d-1}}{|D|}$) to the Rashomon bound and recover the Rashomon set defined by $\lambda$ by solving with $\tilde{\lambda}$.

\paragraph{Scaling objectives to be lossless.} Suppose $\lambda$ is specified to $p$ decimal places (commonly $p \leq 4$, as $\lambda$ is frequently chosen to be in $\{ 0.04, 0.02, 0.015, 0.01, 0.0075, 0.005, 0.0025, 0.001, 0 \}$). Then we can write it as $\lambda = a/10^p$ for some integer $a$, so $\gamma = \lambda |D| = \frac{a}{10^p} |D| = \frac{a|D|}{10^p}$. If we scale by $\frac{10^p}{gcd(a |D|, 10^p} \leq 10^p$, the scaled objective is integer, and thus there is no loss in the integerization.

\subsection{AND/OR Graph Representation}\label{app:andorgraph}

Much like prior work \cite{arslan2025sorted, xin2022exploring}, we build out an AND/OR graph that sufficiently explores the search space to embed the Rashomon set in the graph. In this section, we provide information about this AND/OR graph, which encodes all of the information needed to materialize trees in the Rashomon set.

\paragraph{Subproblems and choices.}
At any point during the execution of \algnamenospace, the algorithm operates on a
subproblem, defined by a dataset $D$, a remaining depth $d$, and a
remaining objective budget (unlike the proxy algorithm, whose subproblems do not
depend on the budget).
For a fixed subproblem, there are multiple possible ways to construct a 
tree within the budget: the algorithm may choose a leaf prediction, or it may choose to split on
one of several features and recursively construct trees for the left and right
children.

\paragraph{OR nodes (choices at a subproblem):}
We represent each subproblem by an \emph{OR node}.
Each OR node stores a collection of alternative choices, any one of which yields
a valid tree for that subproblem.
These choices consist of:
\begin{itemize}
\item \textbf{Leaf choices}, corresponding to predicting a constant label
$b\in\{0,1\}$; and
\item \textbf{Split choices}, corresponding to splitting on a feature $j$, which can be used to recursively choose trees within that lowered budget for the left and right child subproblems. The AND structure arises because, once a split is chosen, both child subproblems must be recursively handled for that split.
\end{itemize}

\paragraph{AND nodes (joint requirements of a split):}
A split choice introduces an \emph{AND node}:
choosing a split on feature $j$ requires both a valid left subtree and a
valid right subtree.
Thus, a split is feasible if and only if feasible solutions exist for both
children within the remaining budget.

\paragraph{Graph structure:}
The resulting structure is an AND/OR graph:
\begin{itemize}
\item OR nodes represent subproblems and store alternative choices;
\item split choices introduce AND structure by linking to two child OR nodes;
\item leaf choices terminate the recursion.
\end{itemize}
A decision tree corresponds to selecting exactly one outgoing choice at
each OR node and, for every split choice, recursively selecting choices in both
children.

\paragraph{Feasibility of combinations:}
For a given OR node, not every possible combination of left and right subtrees
necessarily yields a feasible tree under the budget.
However, our algorithms ensures a useful invariant for this representation:
for every split choice stored at an OR node, \emph{each} feasible subtree on one
side of the split can be paired with \emph{at least one} feasible subtree on the other side to form a valid tree within budget.
Otherwise, that split would have been pruned by the proxy test and never added to
the AND/OR graph.

\paragraph{Incremental construction:}

\begin{algorithm}[tb]
\caption{$\textsc{AddLeaf}(G, b, C_b)$}
\label{alg:addleaf}
\begin{algorithmic}[1]
\REQUIRE ORNODE $G$, leaf prediction $b \in \{0,1\}$, leaf objective $C_b$
\STATE $G.\textit{leaves} \gets G.\textit{leaves} \cup \{(b, C_b)\}$
\IF{$C_b < G.\textit{min\_objective}$}
  \STATE $G.\textit{min\_objective} \gets C_b$
\ENDIF
\end{algorithmic}
\end{algorithm}

\begin{algorithm}[tb]
\caption{$\textsc{AddSplit}(N, j, G_L, G_R)$}
\label{alg:addsplit}
\begin{algorithmic}[1]
\REQUIRE ORNODE $G$, feature $j$, ORNODES $G_L, G_R$
\STATE $G.\textit{splits} \gets G.\textit{splits} \cup \{(j, G_L, G_R)\}$
\STATE $m_{\text{sum}} \gets G_L.\textit{min\_objective} + G_R.\textit{min\_objective}$
\IF{$m_{\text{sum}} < G.\textit{min\_objective}$}
  \STATE $G.\textit{min\_objective} \gets m_{\text{sum}}$
\ENDIF
\end{algorithmic}
\end{algorithm}

Algorithms~\ref{alg:addleaf} and~\ref{alg:addsplit} incrementally construct this
AND/OR graph during the execution of \algnamenospace.  Unlike prior work, our AND/OR graph is significantly smaller, because the proxy algorithms have already pruned a substantial portion of the search space. A subproblem is included in our AND/OR graph if and only if it is used in some tree in the Rashomon set $R_{\varepsilon_{\textrm{abs}}}(\mathcal{T}, D)$. Algorithms \ref{alg:addleaf}
 and \ref{alg:addsplit} build out this AND/OR graph as we run \algnamenospace. \algname returns an AND/OR graph by attaching smaller AND/OR graphs using these methods. Given that \algname at a subproblem is considering a split, the multipass framework in Algorithm \ref{alg:multipass} recurses on the children subproblems with a refined budget. After iterative budget refinement, we attach the AND/OR graph from the two children to the parent with Algorithm \ref{alg:addsplit}. Analogously, if we can fit a leaf within the budget, we add it with Algorithm \ref{alg:addleaf}.
 
 One crucial value that these methods store in each node is the $min\_objective$. This is used in Algorithm \ref{alg:multipass} to decrement budgets according to the best solution that was found.

These pieces of information are sufficient to construct the full AND/OR graph, which represents our Rashomon set approximation. However, to efficiently extract trees one at a time (without materializing the entire Rashomon set), we require additional information—specifically, a list or histogram of objectives at each subproblem. Algorithm~\ref{sec:postprocessing} describes this process. We do not store this information during the execution of \algnamenospace, because iterative budget refinement may rebuild the AND/OR graph multiple times per subproblem. Instead, we defer this work to a single postprocessing step, as it is not needed during the algorithm’s execution.

\subsection{Postprocessing Algorithms}
\label{sec:postprocessing}

To efficiently extract trees from the AND/OR graph, we must know how many trees on one side of a split can be paired with trees on the other side, so that we can build a one-to-one map from a tree index at the root to an actual tree in the Rashomon set. We construct this map to be monotonic, in the sense that a smaller index maps to a tree with no larger objective value (i.e., we can index into the trees in sorted order and output them one at a time). Algorithm~\ref{alg:add-hist} shows the histogram implementation we use: a list of $(\text{objective}, \text{count})$ pairs, maintained in increasing order of objective.

Algorithm~\ref{alg:build-histograms-post} builds histograms of tree objectives that can be achieved within the subproblem budget. At a given subproblem (starting at the root), it recursively processes the left and right subproblems of every split node and adds any leaves to the histogram of the current subproblem. Note that we do not check whether leaves fit within the budget, as this check was already performed during the execution of \algnamenospace.

Once histograms have been computed for all descendant subproblems that appear in the  Rashomon set $R_{\varepsilon_{\textrm{abs}}}(\mathcal{T}, D)$, and leaf contributions have been added to the current histogram, the remaining step is to propagate histogram information upward through split nodes. Algorithm~\ref{alg:add-split-and-build} performs a filtered Cartesian product of the histograms from the left and right children of a split node. Because objectives are additive, we sum objective values and multiply their multiplicities to obtain the total number of trees with that objective arising from the split.

The resulting $(\text{objective}, \text{count})$ pairs are then merged into the parent histogram. This procedure is repeated for all splits until all information has been fully propagated to the root (and then we have the distribution over objectives for the entire Rashomon set).
 
\begin{algorithm}[tb]
\caption{$\textsc{AddHist}(N, \textit{obj}, \textit{add\_cnt})$}
\label{alg:add-hist}
\begin{algorithmic}[1]
\REQUIRE ORNODE $N$, objective value $\textit{obj}$,
         increment $\textit{add\_cnt} \in \mathbb{N}$
\STATE Find the position of $\textit{obj}$ in $N.\textit{hist}$, keeping the list sorted by objective
\IF{$\textit{obj}$ already appears in $N.\textit{hist}$}
  \STATE Increase its count by $\textit{add\_cnt}$
\ELSE
  \STATE Insert a new entry $(\textit{obj}, \textit{add\_cnt})$ in sorted order
\ENDIF
\end{algorithmic}
\end{algorithm}

\begin{algorithm}[tb]
\caption{$\textsc{AddSplitAndBuild}(N, j, L, R)$}
\label{alg:add-split-and-build}
\begin{algorithmic}[1]
\REQUIRE ORNODE node $N$ with budget $N.\textit{budget}$, feature index $j$,
         left child ORNODE $L$, right child ORNODE $R$
\STATE Initialize a new split record $s$ with
       $s.\textit{feature} \gets j$, $s.\textit{left} \gets L$, $s.\textit{right} \gets R$
\STATE $\textit{sum\_counts} \gets$ empty map from objective $\to$ count
\FOR{\textbf{each} $(\ell_{\text{obj}}, \ell_{\text{cnt}})$ in $L.\textit{hist}$}
  \STATE $\textit{rem} \gets N.\textit{budget} - \ell_{\text{obj}}$
  \FOR{\textbf{each} $(r_{\text{obj}}, r_{\text{cnt}})$ in $R.\textit{hist}$ with $r_{\text{obj}} \le \textit{rem}$}
    \STATE $\textit{tot} \gets \ell_{\text{obj}} + r_{\text{obj}}$
    \STATE $\textit{sum\_counts}[\textit{tot}] \gets
           \textit{sum\_counts}[\textit{tot}] + \ell_{\text{cnt}} \cdot r_{\text{cnt}}$
  \ENDFOR
\ENDFOR
\IF{$\textit{sum\_counts}$ is not empty}
  \STATE Let $\textit{tmp}$ be the list of pairs $(\textit{obj},\textit{cnt})$
         from $\textit{sum\_counts}$, sorted by $\textit{obj}$
  \STATE Merge the sorted lists $N.\textit{hist}$ and $\textit{tmp}$
         into a new sorted list, summing counts for equal objectives
  \STATE Replace $N.\textit{hist}$ by this merged list
\ENDIF
\STATE $s.\textit{num\_valid\_trees} \gets
       \sum_{(\textit{obj},\,\textit{cnt}) \in \textit{sum\_counts}} \textit{cnt}$
\STATE Append $s$ to $N.\textit{splits}$
\end{algorithmic}
\end{algorithm}

\begin{algorithm}[tb]
\caption{$\textsc{BuildHistogramsPost}(N)$}
\label{alg:build-histograms-post}
\begin{algorithmic}[1]
\REQUIRE ORNODE $N$
\IF{$N.\textit{hist\_built}$ is \textbf{true}}
  \STATE \textbf{return}
\ENDIF

\FOR{\textbf{each} split $s$ in $N.\textit{splits}$}
  \STATE $\textsc{BuildHistogramsPost}(s.\textit{left})$
  \STATE $\textsc{BuildHistogramsPost}(s.\textit{right})$
\ENDFOR

\STATE $\textit{saved\_splits} \gets N.\textit{splits}$
\STATE $N.\textit{splits} \gets \emptyset$
\STATE $N.\textit{hist} \gets \emptyset$

\FOR{\textbf{each} leaf $(b,\textit{loss})$ in $N.\textit{leaves}$}
  \STATE $\textsc{AddHist}(N, \textit{loss}, 1)$
\ENDFOR

\FOR{\textbf{each} split $s$ in $\textit{saved\_splits}$}
  \STATE $\textsc{AddSplitAndBuild}(N, s.\textit{feature}, s.\textit{left}, s.\textit{right})$
\ENDFOR

\STATE $N.\textit{hist\_built} \gets \textbf{true}$
\end{algorithmic}
\end{algorithm}
With histograms of objectives available at each subproblem, we can efficiently extract trees in sorted order. This procedure follows a similar indexing scheme to that of \citet{xin2022exploring}, except that we explicitly guarantee a sorted order \citep{arslan2025sorted}. The high-level idea is to convert each histogram into a cumulative count of trees. Given a request for the $x$-th tree, we first consult the root histogram to determine the corresponding objective value and the index of the tree within that objective bucket. We then implement a procedure to retrieve the $y$-th tree with objective $z$.

There are many possible ways to break ties among trees with the same objective. We use a deterministic scheme without any meaningful interpretation, although this could be replaced with a secondary criterion. 

Tie-breaking is not the focus of our work; instead, our goal is to produce trees in sorted order, allowing users to choose whether to enumerate only the top $(1+\varepsilon)$-factor of trees or to extract the additional trees returned by \algname that are within the initial budget. Either is a reasonable choice, since \autoref{tab:recall-abs-vs-mult-meanpmstd} shows that \algname approximates the set of trees within the initial budget almost as well as the true Rashomon set.

See \autoref{sec:tree-extraction} for further details on extracting the trees in sorted order.

\subsection{Tree Extraction in sorted order}
\label{sec:tree-extraction}
Once histograms of objective values have been computed for every subproblem via Algorithm~\ref{alg:build-histograms-post}, we can extract individual trees from the AND/OR graph in monotone (nondecreasing) objective order without materializing the entire Rashomon set.

Algorithm~\ref{alg:get-ith-tree} is a wrapper method for tree extraction, which calls the main recursive extraction method. Given an index $i$, it scans the root histogram in increasing order of objective value, identifying the smallest objective $z$ such that the cumulative number of trees with objective at most $z$ exceeds $i$. This determines both the target objective $z$ and the within-bucket index $k$ corresponding to the $i^{\text{th}}$ tree.

Algorithm~\ref{alg:get-kth-tree-with-objective} then recursively materializes the $k^{\text{th}}$ tree with objective exactly $z$ from the AND/OR graph.

At an ORNODE $N$, the procedure first enumerates any leaf options whose loss equals $z$ (which defines a deterministic tie-breaking scheme within an objective bucket). If the desired index is not exhausted by leaf solutions, the algorithm decrements $k$ and proceeds through the split records stored at $N$. For a split $(j,L,R)$, it computes how many trees under that split achieve objective $z$ by pairing objective values from $L.\textit{hist}$ and $R.\textit{hist}$ whose sums equal $z$. If the desired index falls within this split, the algorithm deterministically selects the corresponding pair of left and right subtrees and recurses on the children, constructing a prediction node with feature $j$. Otherwise, $k$ is further decremented to skip all trees induced by that split, continuing until the target position within the bucket is reached.
\begin{algorithm}[tb]
\caption{$\textsc{GetIthTree}(\mathrm{root}, i)$}
\label{alg:get-ith-tree}
\begin{algorithmic}[1]
\REQUIRE Root ORNODE $\mathrm{root}$, index $i \in \{0,1,\dots\}$
\STATE $\textsc{BuildHistogramsPost}(\mathrm{root})$
\STATE $\textit{cum} \gets 0$
\FOR{\textbf{each} $(\textit{obj}, \textit{ct})$ in $\mathrm{root}.\textit{hist}$ \textbf{in increasing} $\textit{obj}$}
    \IF{$i < \textit{cum} + \textit{ct}$}
        \STATE $\textit{target\_obj} \gets \textit{obj}$
        \STATE $k \gets i - \textit{cum}$ \COMMENT{index within this objective bucket}
        \STATE \textbf{return} $\textsc{GetKthTreeWithObjective}(\mathrm{root}, \textit{target\_obj}, k)$
    \ENDIF
    \STATE $\textit{cum} \gets \textit{cum} + \textit{ct}$
\ENDFOR
\end{algorithmic}
\end{algorithm}

In Alorithm \ref{alg:get-kth-tree-with-objective}, $\textsc{MakeLeaf}(b)$ denotes a leaf predicting label $b$, and $\textsc{MakeSplit}(j,T_L,T_R)$ denotes the decision tree whose root splits on feature $j$, with left subtree $T_L$ and right subtree $T_R$.

\begin{algorithm}[tb]
\caption{$\textsc{GetKthTreeWithObjective}(N, z, k)$}
\label{alg:get-kth-tree-with-objective}
\begin{algorithmic}[1]
\REQUIRE ORNODE $N$, target objective $z$, index $k$ within objective-$z$ bucket
\STATE $\textsc{BuildHistogramsPost}(N)$

\COMMENT{(1) Leaf trees at $N$ (tie-breaker: leaves are enumerated before splits).}
\FOR{\textbf{each} leaf $(b,\ell)$ in $N.\textit{leaves}$}
    \IF{$\ell = z$}
        \IF{$k = 0$}
            \STATE \textbf{return} $\textsc{MakeLeaf}(b)$
        \ELSE
            \STATE $k \gets k-1$
        \ENDIF
    \ENDIF
\ENDFOR

\COMMENT{(2) Split trees at $N$.}
\FOR{\textbf{each} split record $s$ in $N.\textit{splits}$}
    \STATE $L \gets s.\textit{left}$,\;\; $R \gets s.\textit{right}$
    \STATE $\textsc{BuildHistogramsPost}(L)$;\;\; $\textsc{BuildHistogramsPost}(R)$

    \COMMENT{Compute how many trees under this split achieve objective $z$.}
    \STATE $\textit{total\_here} \gets 0$
    \FOR{\textbf{each} $(\ell_{\text{obj}}, \ell_{\text{cnt}})$ in $L.\textit{hist}$}
        \STATE $r_{\text{obj}} \gets z - \ell_{\text{obj}}$
        \IF{$r_{\text{obj}}$ appears in $R.\textit{hist}$ with count $r_{\text{cnt}}$}
            \STATE $\textit{total\_here} \gets \textit{total\_here} + \ell_{\text{cnt}}\cdot r_{\text{cnt}}$
        \ENDIF
    \ENDFOR

    \IF{$k < \textit{total\_here}$}
        \STATE \COMMENT{The desired tree lies under split $s$; select the $(\ell_{\text{obj}}, r_{\text{obj}})$ pair and recurse.}
        \STATE $\textit{running} \gets 0$
        \FOR{\textbf{each} $(\ell_{\text{obj}}, \ell_{\text{cnt}})$ in $L.\textit{hist}$}
            \STATE $r_{\text{obj}} \gets z - \ell_{\text{obj}}$
            \IF{$r_{\text{obj}}$ appears in $R.\textit{hist}$ with count $r_{\text{cnt}}$}
                \STATE $\textit{pairs} \gets \ell_{\text{cnt}}\cdot r_{\text{cnt}}$
                \IF{$\textit{running} + \textit{pairs} > k$}
                    \STATE $\textit{rel} \gets k - \textit{running}$
                    \STATE $\textit{left\_idx} \gets \left\lfloor \textit{rel}/r_{\text{cnt}} \right\rfloor$
                    \STATE $\textit{right\_idx} \gets \textit{rel} \bmod r_{\text{cnt}}$
                    \STATE $T_L \gets \textsc{GetKthTreeWithObjective}(L, \ell_{\text{obj}}, \textit{left\_idx})$
                    \STATE $T_R \gets \textsc{GetKthTreeWithObjective}(R, r_{\text{obj}}, \textit{right\_idx})$
                    \STATE \textbf{return} $\textsc{MakeSplit}(s.\textit{feature}, T_L, T_R)$
                \ENDIF
                \STATE $\textit{running} \gets \textit{running} + \textit{pairs}$
            \ENDIF
        \ENDFOR
    \ELSE
        \STATE $k \gets k - \textit{total\_here}$ \COMMENT{skip all objective-$z$ trees from this split}
    \ENDIF
\ENDFOR
\end{algorithmic}
\end{algorithm}

\FloatBarrier

\subsection{LicketySPLIT modifications}
\label{appendix:licketysplit-modifications}

We use the LicketySPLIT algorithm from \citet{babbar2025near} as our default proxy algorithm in \algnamenospace. However, we provide a multitude of implementation and algorithm changes that often allow \algname to run much faster than even a single call of the implementation detailed in \citet{babbar2025near}. 

Below are the implementations of LicketySPLIT and the greedy tree methods that it calls that are faithful to \citet{babbar2025near}. Though \citet{babbar2025near}'s implementation used a modified version of the GOSDT \cite{gosdt} algorithm, here we represent it in a pure form. We do not present the implementation as returning an explicit tree (although \citet{babbar2025near} does so). Instead, we return only the objective value of the corresponding tree, which is sufficient for our purposes.

LicketySPLIT (Algorithm \ref{alg:licketysplit}) introduces a one-step lookahead relative to a greedy tree algorithm. Rather than selecting a feature using information gain alone, \textsc{LicketySPLIT} evaluates
each candidate split by completing both children greedily and selecting the split that
minimizes the resulting summed objective.
After selecting this feature, the algorithm recurses using \textsc{LicketySPLIT} itself on
the children.
\begin{algorithm}[H]
\caption{$\textsc{GreedyTree}(D', d,\gamma)$}
\label{alg:greedy}
\begin{algorithmic}[1]
\REQUIRE Subproblem dataset $D'$, remaining depth $d$, per-leaf penalty $\gamma$
\STATE Let $n' \gets |D'|$, $p \gets$ number of positive labels in $D'$
\STATE $\textit{leaf\_loss} \gets \gamma + \min(p,\, n'-p)$
\IF{$d = 0$ \textbf{or} $\textit{leaf\_loss} \le 2\gamma$}
    \STATE \textbf{return} $\textit{leaf\_loss}$
\ENDIF

\STATE Select feature $j$ maximizing information gain on $D'$
\STATE Partition $D'$ into $(D'_L, D'_R)$ using feature $j$
\IF{$D'_L = \emptyset$ \textbf{or} $D'_R = \emptyset$}
    \STATE \textbf{return} $\textit{leaf\_loss}$
\ENDIF

\STATE $L \gets \textsc{GreedyTree}(D'_L, d-1,\gamma)$
\STATE $R \gets \textsc{GreedyTree}(D'_R, d-1,\gamma)$
\STATE \textbf{return} $\min(\textit{leaf\_loss},\, L + R)$
\end{algorithmic}
\end{algorithm}

\begin{algorithm}[H]
\caption{$\textsc{LicketySPLIT}_{}(D', d, \gamma)$ (from \citet{babbar2025near})}
\label{alg:licketysplit}
\begin{algorithmic}[1]
\REQUIRE Subproblem dataset $D'$, remaining depth $d$, per-leaf penalty $\gamma$.
\STATE Let $n' \gets |D'|$, $p \gets$ number of positive labels in $D'$
\STATE $\textit{leaf\_loss} \gets \gamma + \min(p,\, n'-p)$
\IF{$d = 0$ \textbf{or} $\textit{leaf\_loss} \le 2\gamma$}
    \STATE \textbf{return} $\textit{leaf\_loss}$
\ENDIF

\STATE $\textit{best\_sum} \gets +\infty$, $\textit{best\_feature} \gets \bot$
\FOR{\textbf{each} feature $j$}
    \STATE Partition $D'$ into $(D'_L, D'_R)$ using feature $j$
    \IF{$D'_L = \emptyset$ \textbf{or} $D'_R = \emptyset$}
        \STATE \textbf{continue}
    \ENDIF
    \STATE $s_j \gets \textsc{GreedyTree}(D'_L, d-1,\gamma) + \textsc{GreedyTree}(D'_R, d-1,\gamma)$
    \IF{$s_j < \textit{best\_sum}$}
        \STATE $\textit{best\_sum} \gets s_j$
        \STATE $\textit{best\_feature} \gets j$
    \ENDIF
\ENDFOR

\IF{$\textit{best\_feature} = \bot$}
    \STATE \textbf{return} $\textit{leaf\_loss}$
\ENDIF

\IF{$\textit{best\_sum} \ge \textit{leaf\_loss}$}
    \STATE \textbf{return} $\textit{leaf\_loss}$
\ENDIF

\STATE Partition $D'$ using $\textit{best\_feature}$ into $(D'_L, D'_R)$
\STATE $L \gets \textsc{LicketySPLIT}_{}(D'_L, d-1,\gamma)$
\STATE $R \gets \textsc{LicketySPLIT}_{}(D'_R, d-1,\gamma)$
\STATE \textbf{return} $L+R$
\end{algorithmic}
\end{algorithm}

Now, we present changes to the LicketySPLIT and its accompanying greedy subroutine in three categories: optimal solvers of a shallow depth, caching to guarantee speed-ups, and generalizations to interpolate between greedy and optimal.

We generalize LicketySPLIT via a hierarchical rollout scheme for split selection dictated by a lookahead parameter $\ell$. Taking $\ell=1$ in our generalization recovers LicketySPLIT, and $\ell=0$ is taken to mean the greedy tree algorithm. 

While LicketySPLIT ($\ell = 1$) evaluates candidate splits using greedy completion ($\ell = 0$), our approach generalizes split selection by defining it recursively through a hierarchy of heuristics. For a given $\ell$, candidate splits are evaluated using completions generated by a lower-tier heuristic ($\ell - 1$); for instance, the proxy algorithm at $\ell = 2$ selects splits based on LicketySPLIT completions. This construction yields a hierarchical rollout scheme in which each heuristic rolls out a cheaper heuristic, ultimately terminating at greedy induction.

We explain the modifications to LicketySPLIT, fixing $\ell=1$ for simplicity, as it is our preferred proxy algorithm.

\begin{algorithm}[H]
\caption{$\textsc{GreedyTree}(D', d,\gamma)$ \textcolor{red}{(with shared depth-1 exact solver + caching)}}
\label{alg:greedy-full}
\begin{algorithmic}[1]
\REQUIRE Subproblem dataset $D' \subseteq D$, remaining depth $d$, regularization $\gamma$

\textcolor{red}{\IF{$d = 0$}
  \STATE $\textit{ans} \gets \textsc{Depth\_d\_Exact}(D', 0, \gamma)$
  \STATE \textbf{return} $\textit{ans}$
\ENDIF}

\textcolor{red}{\IF{$d = 1$}
  \STATE \textbf{return} $\textsc{Depth\_D\_Exact}(D', 1, \gamma)$ \COMMENT{\textcolor{red}{last step is optimal misclassification (shared with LicketySPLIT)}}
\ENDIF}

\textcolor{red}{\STATE $k \gets \textsc{Key}(D')$ \COMMENT{subproblem identifier (64-bit fingerprint)}}
\textcolor{red}{\IF{$k \in \mathcal{C}_{\mathrm{greedy}}[d]$}
  \STATE \textbf{return} $\mathcal{C}_{\mathrm{greedy}}[d][k]$
\ENDIF}

\textcolor{red}{\STATE $\textit{leaf\_loss} \gets \textsc{Depth\_D\_Exact}(D', 0, \gamma)$}

\IF{$\textit{leaf\_loss} \le 2\gamma$}
  \STATE \textcolor{red}{Optionally cache: $\mathcal{C}_{\mathrm{greedy}}[d][k] \gets \textit{leaf\_loss}(D')$}
  \STATE \textbf{return} $\textit{leaf\_loss}$
\ENDIF

\STATE Choose feature $j^\star$ maximizing information gain on $D'$
\STATE Split $D'$ into $(D'_L, D'_R)$ by $j^\star$; if either side is empty, return $\textit{leaf\_loss}$

\STATE $L \gets \textsc{GreedyTree}(D'_L, d-1, \gamma)$
\STATE $R \gets \textsc{GreedyTree}(D'_R, d-1, \gamma)$
\STATE $\textit{split\_loss} \gets L + R$
\STATE $\textit{ans} \gets \min(\textit{leaf\_loss},\, \textit{split\_loss})$

\textcolor{red}{\STATE Cache: $\mathcal{C}_{\mathrm{greedy}}[d][k] \gets \textit{ans}$}
\STATE \textbf{return} $\textit{ans}$
\end{algorithmic}
\end{algorithm}

\begin{algorithm}[H]
\caption{$\textsc{LicketySPLIT}(D', d, \gamma, \ell)$ \textcolor{red}{(low depth exact solvers + caching + clamped lookahead + generalization)}}
\label{alg:licketysplit-full}
\begin{algorithmic}[1]
\REQUIRE Subproblem dataset $D' \subseteq D$, remaining depth $d$, lookahead $\ell \ge 0$

\textcolor{red}{
\IF{$d = 0$}
  \STATE $\textit{ans} \gets \textsc{Depth\_d\_Exact}(D', 0)$
  \STATE \textbf{return} $\textit{ans}$
\ENDIF}

\textcolor{red}{\STATE $\ell \gets \min(\ell,\, d-1)$ \COMMENT{clamp lookahead by remaining depth to allow more caching, $\ell=d-1$ is optimal at depth d now}}

\textcolor{red}{\IF{$\ell = d-1$}
  \STATE \textbf{return} $\textsc{Depth\_d\_Exact}(D', d)$ \COMMENT{shared with GreedyTree}
\ENDIF}

\STATE $k \gets \textsc{Key}(D')$
\textcolor{red}{\IF{$k \in \mathcal{C}_{\mathrm{lickety}}[d,\ell]$}
  \STATE \textbf{return} $\mathcal{C}_{\mathrm{lickety}}[d,\ell][k]$
\ENDIF}

\textcolor{red}{\STATE $\textit{leaf\_loss} \gets \textsc{Depth\_D\_Exact}(D', 0, \gamma)$}

\IF{$\textit{leaf\_loss} \le 2\gamma$}
  \STATE \textcolor{red}{Optionally cache: $\mathcal{C}_{\mathrm{depth0}}[k] \gets \textit{leaf\_loss}(D')$}
  \STATE \textbf{return} $\textit{leaf\_loss}$
\ENDIF

\STATE $\textit{best\_sum} \gets +\infty$, $\textit{best\_feature} \gets \bot$
\FOR{\textbf{each} feature $j$}
  \STATE Split $D'$ into $(D'_L, D'_R)$ by $j$; \textbf{continue} if either side is empty
  \IF{$\ell = 1$}
    \STATE $s_j \gets \textsc{GreedyTree}(D'_L, d-1, \gamma) + \textsc{GreedyTree}(D'_R, d-1, \gamma)$
    \COMMENT{one-step lookahead: greedy completion}
  \ELSE
    \STATE $s_j \gets \textsc{LicketySPLIT}(D'_L, d-1, \ell-1, \gamma) + \textsc{LicketySPLIT}(D'_R, d-1, \ell-1, \gamma)$
    \COMMENT{\textcolor{red}{general $\ell$: recurse with $\ell-1$ during split evaluation}}
  \ENDIF
  \IF{$s_j < \textit{best\_sum}$}
    \STATE $\textit{best\_sum} \gets s_j$, $\textit{best\_feature} \gets j$
  \ENDIF
\ENDFOR

\STATE $\textit{ans} \gets \textit{leaf\_loss}$
\IF{$\textit{best\_feature} \neq \bot$}
  \STATE Split $D'$ by $\textit{best\_feature}$ into $(D'_L, D'_R)$
  \STATE $L \gets \textsc{LicketySPLIT}(D'_L, d-1, \ell, \gamma)$ \COMMENT{\textcolor{red}{recurse with constant $\ell$ in recursion}}
  \STATE $R \gets \textsc{LicketySPLIT}(D'_R, d-1, \ell, \gamma)$
  \STATE $\textit{ans} \gets \min(\textit{ans},\, L+R)$
\ENDIF

\textcolor{red}{\STATE Cache: $\mathcal{C}_{\mathrm{lickety}}[d,\ell][k] \gets \textit{ans}$}
\STATE \textbf{return} $\textit{ans}$
\end{algorithmic}
\end{algorithm}

\begin{algorithm}[H]
\caption{$\textsc{Depth\_d\_Exact}(D', d, \gamma)$ (optimal depth-$d$ solver, shared for $d=1$), \textcolor{red}{(all new)}}
\label{alg:depthd-exact}
\begin{algorithmic}[1]
\REQUIRE Subproblem dataset $D' \subseteq D$, remaining depth $d$, regularization $\gamma$
\ENSURE $\mathrm{OPT}(D',d)$, the minimum integer objective among all trees of depth at most $d$ on $D'$

\STATE $k \gets \textsc{Key}(D')$

\IF{$(d,k) \in \mathcal{C}_{\mathrm{opt}}$}
  \STATE \textbf{return} $\mathcal{C}_{\mathrm{opt}}[d][k]$
\ENDIF

\STATE $n' \gets |D'|$, $p \gets$ number of positive labels in $D'$
\STATE $\textit{leaf\_loss} \gets \gamma + \min(p,\, n'-p)$
\IF{$d = 0$}
  \STATE Optionally cache: $\mathcal{C}_{\mathrm{depth0}}[k] \gets \textit{leaf\_loss}$
  \STATE \textbf{return} $\textit{leaf\_loss}$
\ENDIF

\STATE $\textit{best} \gets \textit{leaf\_loss}$

\FOR{\textbf{each} feature $j$}
  \STATE Split $D'$ into $(D'_L, D'_R)$ by $j$; \textbf{continue} if either side is empty
  \STATE $s_j \gets \textsc{Depth\_d\_Exact}(D'_L, d-1) \;+\; \textsc{Depth\_d\_Exact}(D'_R, d-1)$
  \STATE $\textit{best} \gets \min(\textit{best},\, s_j)$
\ENDFOR

\STATE Cache: $\mathcal{C}_{\mathrm{opt}}[d][k] \gets \textit{best}$
\STATE \textbf{return} $\textit{best}$
\end{algorithmic}
\end{algorithm}

\paragraph{Optimal solvers at shallow depth.}

The greedy tree solver was previously not optimal at depth 1 (the best split for information gain is not necessarily the one that minimizes misclassification error). Now, with an optimal depth 1 solver, greedy is optimal at depth 1, and LicketySPLIT is optimal at depth 2 without any additional time complexity (and in practice, computing additive objectives is cheaper than information gain). 

Because we delegate solving shallow subproblems to a custom subroutine, this allows the proxy algorithm LicketySPLIT, and the subroutine of it, the greedy tree algorithm, to share caching. For instance, if they were both called at depth 1, they now use the same cache. Additionally, when the lookahead parameter $\ell \geq d-1$, it would yield an optimal proxy algorithm even with $\ell=d-1$, so we clamp it to share more caching.  We elaborate more on caching beyond the shared optimal solvers in \autoref{sec:proxycaching}.

\paragraph{Further exploration of search space:} 

In the original implementation of \textsc{LicketySPLIT} (see lines 21–23 of Algorithm~\ref{alg:licketysplit}; also the condition in line 1 of Algorithm 3 of \cite{babbar2025near}), the algorithm includes an early stopping condition: if the best greedy tree completion over all candidate splits is no better than a leaf, the algorithm terminates and returns the leaf loss for the subproblem. In our implementation, we remove this early exit. While this change appears trivial, the early stopping condition is implicitly required in the implementation of \citet{babbar2025near} as a consequence of their use of GOSDT to implement the algorithm. This modification allows the algorithm to potentially find better trees (and never a worse tree as we take the minimum over it and the leaf loss) at no additional asymptotic cost.
\paragraph{Caching and shared subproblem reuse within LicketySPLIT}
We use a comprehensive caching strategy within LicketySPLIT: we cache every subproblem that it or its subroutine (a greedy tree algorithm) solves to report the proxy solution. This will be helpful within the larger \algname algorithm, but we first observe that the LicketySPLIT algorithm benefits from caching greedy recursive subproblems as it itself recurses. 

\begin{theorem}[Caching greedy solutions at subproblems gives provable cache reuse]
\label{thm:lickety_greedy_reuse}
Run $\textsc{LicketySPLIT}(D,d,\ell=1)$ with remaining depth $d\ge 1$. Assume that no
early stopping occurs for \textsc{LicketySPLIT} and its greedy subroutine (i.e. there is always a split that partitions the data and that $2 \lambda < \text{leaf\_objective}$ at all depths $d >0$).
Then, the total number of greedy-cache hits incurred during the execution is at least $(2^{d+1}-4)$.
\end{theorem}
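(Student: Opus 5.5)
The approach is to identify a specific family of greedy calls that \textsc{LicketySPLIT} makes twice on the same key: once while scoring candidate splits at a node, and once later while scoring splits at the chosen child. I will use Algorithm~\ref{alg:greedy-full} and Algorithm~\ref{alg:licketysplit-full} with $\ell=1$.

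At a node $(D',r)$ with $r\ge 2$, the scoring loop calls $\textsc{GreedyTree}$ on $(D'_L(j),r-1)$ and $(D'_R(j),r-1)$ for every feature $j$. The split ultimately chosen, $j^\star$, is among these. So after the loop, greedy has been run on $(D'_L(j^\star),r-1)$ and $(D'_R(j^\star),r-1)$. In the no-early-stopping regime, a call at remaining depth $\ge 2$ populates the greedy cache before returning. That call also recursively runs greedy on its own information-gain children at depth $r-2$, and those calls are cached as well once $r-2\ge 2$; depth-$1$ and depth-$0$ calls go to the shared \textsc{Depth\_d\_Exact} cache.

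The key step is the following. \textsc{LicketySPLIT} then recurses on the child $(D'_L(j^\star),r-1)$. If $r-1\ge 2$, it scores all splits $j$ of that child by calling greedy on the grandchildren at depth $r-2$. In particular, for $j$ equal to the information-gain feature $g$ of that child, these are exactly the two subproblems that greedy already solved and cached inside the earlier call. This yields at least two greedy-cache hits per \textsc{LicketySPLIT} node at remaining depth $r-1\ge 2$. The nodes at remaining depth $1$ are handled by \textsc{Depth\_d\_Exact}; I will include those hits only if the count requires it, since the paper says greedy and \textsc{LicketySPLIT} share that cache.

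Next comes the counting. Under the no-early-stopping assumption, \textsc{LicketySPLIT} builds a complete binary tree of recursive calls. It has $2^{i}$ nodes at depth $i$ below the root, for $i=0,\dots,d$. Every non-root node with remaining depth $\ge 1$ re-queries, when scoring its splits, the two children that greedy produced for it. More carefully, for a non-root node $v$ with remaining depth $s\ge1$, its parent's scoring loop already ran greedy on $(D_v,s)$. That run cached greedy's two children of $v$ at depth $s-1$, whether via the greedy cache or the shared depth-$\le 1$ exact cache. When $v$'s own loop evaluates split $g_v$, it queries both of these and gets two hits. The non-root internal nodes number $\sum_{i=1}^{d-1}2^i=2^d-2$, giving $2(2^d-2)=2^{d+1}-4$ hits, which matches the bound exactly.

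I expect the main obstacle to be the bookkeeping of which cache counts as the ``greedy cache''. For nodes with $s-1\le1$, the re-queried values sit in the shared \textsc{Depth\_d\_Exact} cache rather than in $\mathcal{C}_{\mathrm{greedy}}$. I would argue these count because the greedy subroutine resolves them by cache lookup, as the paper emphasizes that this cache is shared. A second point needs care: each hit must be attributed to a distinct lookup event, so that no hit is double counted. I will handle this by noting that lookups are indexed by (node $v$, side), which are distinct calls in the execution even when their keys coincide. Finally, the assumption that every split is nondegenerate and $2\gamma<$ leaf objective guarantees that the greedy recursion actually reaches and caches those children, rather than returning a leaf early.
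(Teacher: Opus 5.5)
Your proposal is the paper's proof: for each of the $2^d-2$ non-root internal nodes, the greedy call issued from the parent's scoring loop has already cached the two children of its information-gain split, and the node's own loop re-queries both at that split, giving $2(2^d-2)=2^{d+1}-4$ hits. The cache-bookkeeping worry you raise is one the paper's proof does not address either. Strictly, both arguments live in the unclamped recursion with greedy caching at every depth: in Algorithm~\ref{alg:licketysplit-full} the clamp $\ell\gets\min(\ell,d-1)$ sends every call with remaining depth at most $2$ straight to \textsc{Depth\_d\_Exact}, so nodes at remaining depth $1$ never run a scoring loop of their own.
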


\begin{proof}
Under the stated assumption that no early stopping occurs,
$\textsc{LicketySPLIT}(D,d,1)$ always selects a non-degenerate
split at each node with positive remaining depth. Thus, the tree
returned by $\textsc{LicketySPLIT}$ is a full binary tree of depth $d$. Equivalently, the recursion tree of the execution is a full binary tree of depth $d$. Additionally, as no early stopping occurs, the greedy tree algorithm will also return a full binary tree of depth $d$ at every subproblem with remaining depth $d$.

Now fix any non-root internal node $u$ of this recursion tree, with induced
subproblem dataset $D_u$ and remaining depth $r \ge 1$ at $u$. Let $p$ be
the parent of $u$. In the call at $p$, $\textsc{LicketySPLIT}$ evaluates
candidate splits by calling its greedy subroutine on the child subproblems.
In particular, the candidate split at $p$ that eventually leads to $u$
necessarily triggered a greedy call on $D_u$ with remaining depth $r$.
Therefore, the value for this greedy call is present in the greedy cache
before the algorithm begins processing node $u$ itself. However, it isn't the greedy cache for this subproblem that matters -- it is the fact that left and right greedy solutions are cached for some split at $u$ (because the greedy continues; there is no early stopping).

When $\textsc{LicketySPLIT}$ processes node $u$, it again iterates over
candidate splits and, for each split, requests two greedy completions: one
for the left child subproblem and one for the right child subproblem. Consider
the specific feature $j^\star$ chosen by the cached greedy run on $D_u$.
By definition of the greedy recursion, that greedy call computed and cached
the two recursive greedy subcalls on the children induced by $j^\star$.

When $\textsc{LicketySPLIT}$ later evaluates feature $j^\star$ among its
candidate splits at node $u$, it requests exactly those two greedy child
values. Both are cache hits. Thus, every non-root internal node contributes
at least two greedy-cache reuses.

It remains to count the non-root internal nodes. A full binary tree of depth
$d$ has $2^d - 1$ internal nodes, including the root. Hence it has
$2^d - 2$ non-root internal nodes. Since each non-root internal node
contributes at least two greedy-cache hits, the total number of greedy-cache
hits incurred during the execution is at least
\[
2(2^d - 2) = 2^{d+1} - 4.
\]
\end{proof}

Though we do not prove it here, we note that the proof for $\ell \geq 2$ is essentially identical. Calling LicketySPLIT($\ell=2$) recursively calls LicketySPLIT($\ell=1)$, and $\Omega(2^d)$ of those calls will be reused in the execution of LicketySPLIT($\ell=2$). This is in addition to the greedy reuse within each LicketySPLIT($\ell=1$) call.

Beyond caching within the proxy algorithm (which was strictly an improvement to LicketySPLIT), this caching can also help reuse work in the recursive calls of \algname as it explores subproblems within the budget $\varepsilon_{abs}$. This does not hold for every proxy algorithm, but because LicketySPLIT($\ell$) recurses with the same parameters (except decreasing the depth budget), we can reuse the work it did if \algname explores its initial split.  We discuss this more and quantify the benefits saved in \autoref{sec:proxycaching}.

\subsection{Proxy Algorithm Choice for Efficient Caching}
\label{sec:proxycaching}

In Algorithm \ref{alg:licketysplit-full}, we defined a generalization of LicketySPLIT with a lookahead parameter $\ell$, with $\ell=1$ yielding the behavior of our modified LicketySPLIT algorithm. LicketySPLIT($\ell=1$) repeatedly selects the split whose greedy completion yields the lowest objective value. LicketySPLIT($\ell \geq 2$) repeatedly selects the split whose LicketySPLIT($\ell-1$) completion yields the lowest objective value. Combined with $\ell=0$ (which we define to be greedy), this allows us to interpolate between linear time proxy algorithms and optimal ones. 

We choose \textsc{LicketySPLIT} for its favorable time--accuracy trade-off, further
improving its runtime via caching and its accuracy via optimal solvers at shallow
depths (the improvements are detailed in  \autoref{appendix:licketysplit-modifications}). We showed in  \autoref{thm:lickety_greedy_reuse} that our modified version of LicketySPLIT allows for substantial caching within a single call of LicketySPLIT. Importantly, this proxy also exhibits a useful structure where each subtree of a LicketySPLIT tree corresponds to a LicketySPLIT call on the corresponding data subset. 
This property, which also holds for all values of $\ell$, allows for caching across LicketySPLIT calls in \algname. 

\paragraph{Interaction with \algnamenospace.}

Consider an execution of \algname at some subproblem $(D,d)$.
Suppose a split is not pruned, which occurs whenever the proxy completions for its
children satisfy the budget constraint.
To evaluate that split, \algname invokes the proxy on both child subproblems
$(D_L,d-1)$ and $(D_R,d-1)$.

After budget refinement, \algname recurses on these child subproblems.
At such a child, \algname again evaluates candidate splits by invoking the proxy.
One of these candidate splits corresponds exactly to the split chosen by the proxy
when it was originally applied to that child subproblem.
Thus, \algname requests a proxy completion for a subproblem that the proxy itself
has already constructed internally. However, calling the proxy algorithm on these subproblems is not guaranteed to solve the same problem.

For example, consider SPLIT with lookahead depth 2 from \citet{babbar2025near} without any postprocessing. In this configuration, SPLIT chooses the optimal first two splits, conditioned on the tree being completed greedily after (and does not alter the greedy completion). If SPLIT is called again on a child after taking the first split of the SPLIT tree, it solves a slightly different problem. As a consequence, the SPLIT call on the two children is not fully cached (though one could imagine that many of the greedy calls would be). 

To save fully on caching, we need the proxy algorithm, when implemented recursively, to recurse with the same parameters, just one depth lower (that is, it recurses with exactly the parameter set one would use to call the algorithm). This holds for \textsc{LicketySPLIT}$(\ell)$: although split selection depends on
$\ell-1$, recursive calls are made with $\ell$ unchanged.
This behavior corresponds precisely to the equality case of the refinement property in
\autoref{def:proxy}.

\paragraph{Role of the refinement property.}
The refinement condition
\begin{equation}
\begin{aligned}
\textsc{PROXY}(D_L,\gamma,d-1) &\le \text{Obj}(f_L,\gamma,D_L) \\
\wedge\quad
\textsc{PROXY}(D_R,\gamma,d-1) &\le \text{Obj}(f_R,\gamma,D_R)
\end{aligned}
\end{equation}

ensures that when the proxy is reapplied to a subproblem induced by its own tree,
the resulting objective does not worsen.
If equality holds, the subtree is preserved; if the inequality is strict, the
subtree is refined to a better one. In either case, \autoref{thm:proxy_feasible} shows
that all invariants of \algname are maintained, including feasibility and
monotonicity of minimum objectives. However, we prefer the equality case for additional caching benefits.

\paragraph{Failure without refinement.}
If a proxy algorithm violates the refinement condition:

\begin{equation}
\neg\!\left(
\textsc{PROXY}(D_L,\gamma,d-1) \le \text{Obj}(f_L,\gamma,D_L)
\;\wedge\;
\textsc{PROXY}(D_R,\gamma,d-1) \le \text{Obj}(f_R,\gamma,D_R)
\right).
\end{equation}

i.e., if it is possible that reapplying the proxy to a subtree worsens its objective, then key invariants
of \algname may fail.
In particular, a split chosen by the proxy at a parent node may later be pruned
because the sum of proxy objectives conditioned on that split exceeds the allowed
budget.
This breaks the invariant that the minimum objective returned by each recursive
call of \algname is at least as good as the proxy objective for that subproblem, as \algname isn't guaranteed to recover the proxy algorithm.

\paragraph{Giving other decision tree algorithms the refinement property via caching.}
Any decision tree algorithm that lacks the refinement property can be modified to
satisfy it by caching all of its subtree solutions that it implicitly constructs in yielding the top-level objective.
For each subproblem $(D,d)$, one would maintain a set of candidate completions produced
during execution—both from direct calls to the algorithm and from subtrees
constructed as part of larger trees.
The proxy value is then defined as the minimum objective over this set:
\[
\textsc{PROXY}(D,\lambda,d)
\;:=\;
\min\{\mathrm{Obj}(f,D) : f \text{ constructed for } (D,d)\}.
\]
With this modification, reapplying the proxy can only preserve or improve the subtree
objectives (because it always minimizes over a set that includes the old subtree), ensuring the refinement property, and restoring all invariants required
by \algname. 

\paragraph{Comparing proxy algorithms with and without these caching benefits}

In \autoref{tab:style0-vs-style1-k2}, we compare the resource usage of \algname when using two different proxy algorithms with the same asymptotic time complexity, $\mathcal{O}(n k^3 d^3)$.

The first proxy is our generalization of LicketySPLIT, characterized by a lookahead parameter $\ell = 2$. This proxy recursively selects the best split using LicketySPLIT completions, where each completion is itself obtained by choosing the best split according to greedy completions.

We compare this to a second decision tree algorithm, which we refer to as BlockSPLIT (we define this algorithm to make a direct comparison). BlockSPLIT is obtained by recursively applying the SPLIT procedure of \citet{babbar2025near} without postprocessing, using a block size (lookahead depth) of 2. Concretely, BlockSPLIT chooses the first two splits conditioned on greedy completions, then chooses the next two splits conditioned on greedy completions, and so on.

BlockSPLIT does not satisfy the equality case of the proxy algorithm refinement property. This is because it recurses with an internal bit that tracks whether there is one remaining split to choose in the current block, or whether the algorithm is restarting a new block of two splits. Furthermore, if one refines a tree produced by BlockSPLIT, the block optimization may be offset by one level relative to how the tree was originally constructed. In this case, calling the algorithm on a node of its own tree can produce a strictly worse solution than the subtree already used.

To address this, we define a wrapper that turns BlockSPLIT into a valid proxy algorithm. Instead of directly returning the objective produced by BlockSPLIT, the wrapper checks whether an objective for the same subproblem was previously computed with the internal bit in the opposite position. Since the algorithm may encounter the same subproblem in either state -- once when entered externally (with the bit indicating two more splits must be done in the block) and once internally with the bit flipped -- the wrapper returns the better of the two objectives.

The results in \autoref{tab:style0-vs-style1-k2} show that, although both proxy algorithms have the same asymptotic time complexity, using $\ell = 2$ leads to substantially better runtime and memory usage. This is because $\ell=2$ makes much better use of caching, as motivated earlier.

\begin{table*}[!t]
\centering
\scriptsize
\setlength{\tabcolsep}{3pt}
\renewcommand{\arraystretch}{1.05}

\caption{Runtime and peak memory comparison between \algname with $\ell{=}2$ and a proxy algorithm of the same time complexity without caching benefits. $\lambda=0.01$, $\varepsilon=0.01$, depth $=5$.
\textit{Time (s) seconds and Peak Memory (MB) are reported for the entire script.}}
\label{tab:style0-vs-style1-k2}

\begin{tabular}{@{}lrr rr rr@{}}
\toprule
 &  &  &
\multicolumn{2}{c}{\algname ($\ell=2$)} &
\multicolumn{2}{c}{\algname with less cache friendly proxy} \\
\cmidrule(lr){4-5} \cmidrule(lr){6-7}
Dataset & $n$ & $k$
& Time & Peak MB
& Time & Peak MB \\
\midrule

Adult & 48,842 & 209 & 2,393.00 & 5,026.95 & 4,730.00 & 21,316.95 \\
Bank & 45,211 & 217 & 962.20 & 2,061.88 & 3,130.70 & 19,391.88 \\
Bike & 17,379 & 164 & 478.80 & 2,760.59 & 1,113.30 & 14,300.59 \\
Christine & 5,418 & 231 & 9,946.00 & 139,833.82 & -- & -- \\
Churn & 5,000 & 472 & 13,645.90 & 76,252.34 & -- & -- \\
Covertype & 581,012 & 96 & 1,979.60 & 1,301.23 & 2,760.50 & 1,669.83 \\
Credit & 30,000 & 225 & 1,401.40 & 3,826.30 & 5,158.40 & 39,868.30 \\
Diabetes & 253,680 & 121 & 945.40 & 1,037.27 & 2,604.20 & 5,166.67 \\
Electricity & 38,474 & 264 & 18,210.60 & 45,175.44 & 22,793.00 & 80,906.44 \\
Helena & 65,196 & 156 & 540.50 & 1,913.96 & 1,090.60 & 7,849.96 \\
Higgs & 11,000,000 & 84 & 54,014.00 & 21,537.79 & 110,377.20 & 21,537.79 \\
Jannis & 57,580 & 247 & 43,428.00 & 150,035.83 & -- & -- \\
Jasmine & 2,984 & 207 & 257.00 & 4,314.29 & 1,211.70 & 30,794.29 \\
Madeline & 3,140 & 76 & 27.30 & 988.02 & 54.54 & 2,019.72 \\
Madelon & 2,000 & 186 & 929.70 & 16,384.99 & 3,809.50 & 117,292.99 \\
Magic & 19,020 & 167 & 2,349.60 & 16,237.19 & 3,032.50 & 28,859.19 \\
News & 39,644 & 196 & 7,942.00 & 20,999.32 & -- & -- \\
Poker & 1,025,010 & 40 & 1,129.40 & 952.10 & 1,136.10 & 952.10 \\
Shopping & 12,330 & 243 & 2,058.60 & 14,315.35 & 5,081.00 & 73,797.35 \\

\bottomrule
\end{tabular}
\end{table*}

\subsection{Subproblem Representation}
\label{sec:subproblem-representation}

A subproblem in a decision tree search is most naturally identified by the subset of samples it contains (together with the remaining
depth).
This representation is used by \citet{xin2022exploring}, who encode subproblems as
bitvectors of length $n$, but this can be prohibitively memory-intensive for large
datasets.

An alternative representation, supported by \citet{arslan2025sorted}, identifies a subproblem by the set of feature splits (and branch directions) taken to reach it.
While this representation does not grow with the sample size, distinct sequences of splits can induce the same subset of samples, causing identical subproblems to be treated as
different and solved repeatedly. 

This implementation is commonly implemented by storing a canonical sorted list of literals.
Each literal encodes a feature index $i$ and branch direction as a 16-bit integer
$2i + b$, where $b \in \{0,1\}$ indicates whether the path takes the true or false
branch. Sorting enforces a canonical order, since the conjunction of split conditions
is commutative and permutations of splits induce the same subproblem.

This representation is more compact than an alternative encoding using two feature-sized
bitvectors -- one indicating which features are split on and one encoding branch
directions--because the number of literals is bounded by the depth. For a depth budget
of $d=5$, the representation stores at most five 16-bit literals, requiring 80 bits per
subproblem. This is more space-efficient than the feature-bitvector encoding whenever the
number of features exceeds 40, which is the case for nearly all of our datasets.

Our approach combines the advantages of both representations.
We define subproblems canonically by the subset of samples they contain, but store
this information implicitly via a 64-bit fingerprint computed from the
corresponding bitvector (together with the remaining depth). This compression is possible because the dominant combinatorial growth in the search space comes from the number of features and the remaining depth, rather than from the number of samples.

The only trade-off is a vanishingly small probability of hash collisions.
In practice, this probability is negligible.
Moreover, even in the unlikely event of a collision, the correctness of the AND/OR graph is unaffected in the sense that any tree materialized from the graph will still be valid and lie within the specified budget. A collision may cause either additional trees to be explored or some trees to be missed, but it does not invalidate the feasibility of any returned tree. This is because we do not cache subgraphs, only proxy algorithm evaluations.

For our default proxy, a modified version of \textsc{LicketySPLIT}, we maintain two separate caches: one for the greedy algorithm and one for \textsc{LicketySPLIT}. The greedy cache is substantially larger, storing up to five million entries per depth for medium to medium-challenging Rashomon set queries. With a 64-bit fingerprint, one would expect 1 in 1.5 million runs of the algorithm to have a collision  (we provide a derivation at the end of the section). In contrast, collisions are virtually guaranteed with a 32-bit fingerprint at this scale. For substantially deeper queries or higher-dimensional feature spaces, the fingerprint size can be increased accordingly (noting that representations based on split sequences would also grow with either the number of features or depth of the search space, so this is not unique to us).

\autoref{tab:subprob_keys_d5_lam0p007_eps0p02_mb}
and \autoref{tab:ratios_d5_lam0p007_eps0p02}
compare the three representations on datasets whose runtimes exceed one second,
using $\lambda=0.007$, $\varepsilon=0.02$, and depth $d=5$. We display the number of binarized features after each dataset.

\begin{table}[t]
\centering
\scriptsize
\setlength{\tabcolsep}{3.5pt}
\begin{tabular}{@{}r rrr rrr rrr@{}}
\toprule
& \multicolumn{3}{c}{\textbf{Bitvector (Exact)}} 
& \multicolumn{3}{c}{\textbf{Literal (Itemset)}} 
& \multicolumn{3}{c}{\textbf{Hash (Us)}} \\
\cmidrule(lr){2-4} \cmidrule(lr){5-7} \cmidrule(lr){8-10}
\textbf{Dataset}
& \textbf{Time (s)}
& \textbf{Memory (MB)}
& \textbf{Cache}
& \textbf{Time (s)}
& \textbf{Memory (MB)}
& \textbf{Cache}
& \textbf{Time (s)}
& \textbf{Memory (MB)}
& \textbf{Cache} \\
\midrule
Adult-209      &   358.7 & 15{,}964 &   2{,}620{,}799 &   603.0 &  1{,}117 &   6{,}458{,}004 &   \textbf{339.1} &   \textbf{393} &   2{,}620{,}799 \\
Bank-97        &    13.2 &  1{,}759 &     300{,}462 &    18.2 &    248 &     523{,}445 &    \textbf{11.8} &   \textbf{203} &     300{,}462 \\
Bank-217       &   191.9 & 11{,}709 &   2{,}001{,}930 &   327.4 &    794 &   4{,}275{,}476 &   \textbf{183.6} &   \textbf{352} &   2{,}001{,}930 \\
Bike-43        &     1.3 &    354 &     112{,}903 &     1.4 &    162 &     171{,}325 &     \textbf{1.1} &   \textbf{149} &     112{,}903 \\
Bike-164       &   171.3 &  9{,}507 &   4{,}504{,}463 &   292.9 &  1{,}500 &  10{,}262{,}367 &   \textbf{157.0} &   \textbf{441} &   4{,}504{,}463 \\
Chess-50       &     1.1 &    304 &      52{,}036 &     3.2 &    175 &     213{,}184 &     \textbf{0.9} &   \textbf{157} &      52{,}036 \\
Christine-80   &    31.7 &  2{,}848 &   3{,}376{,}062 &    29.9 &    694 &   4{,}085{,}162 &    \textbf{25.6} &   \textbf{373} &   3{,}376{,}062 \\
Christine-231  & 2{,}580.9 & 89{,}977 & 108{,}754{,}093 & 2{,}672.2 & 20{,}062 & 147{,}725{,}802 & \textbf{2{,}298.4} & \textbf{7{,}924} & 108{,}754{,}093 \\
Churn-81       &     2.0 &    347 &     302{,}935 &     5.9 &    293 &   1{,}200{,}148 &     \textbf{1.5} &   \textbf{154} &     302{,}935 \\
Churn-472      &   657.5 & 14{,}979 &  19{,}435{,}936 & 4{,}639.5 & 30{,}121 & 210{,}901{,}558 &   \textbf{612.1} & \textbf{1{,}341} &  19{,}435{,}936 \\
Covertype-96   & 1{,}073.2 & 75{,}423 &   1{,}217{,}146 & 2{,}477.0 &  1{,}364 &   3{,}812{,}675 & \textbf{1{,}020.8} & \textbf{1{,}362} &   1{,}217{,}146 \\
\bottomrule
\end{tabular}
\caption{Subproblem-key comparison at depth $d=5$, $\lambda=0.007$, and $\varepsilon=0.02$.
Peak memory is reported in MB. Cache is the total number of cached subproblems (cache size).}
\label{tab:subprob_keys_d5_lam0p007_eps0p02_mb}
\end{table}

\begin{table}[t]
\centering
\scriptsize
\setlength{\tabcolsep}{3.2pt}
\begin{tabular}{lrrrrr}
\toprule
& \multicolumn{5}{c}{\textbf{Ratios}} \\
\cmidrule(lr){2-6}
\textbf{Dataset}
& \textbf{Literal/BV Cache}
& \textbf{BV/Hash MB}
& \textbf{Lit/Hash MB}
& \textbf{BV/Hash Time}
& \textbf{Lit/Hash Time} \\
\midrule
Adult-209      & 2.464 & 40.631 & 2.842 & 1.058 & 1.778 \\
Bank-97        & 1.742 & 8.646  & 1.217 & 1.117 & 1.543 \\
Bank-217       & 2.136 & 33.233 & 2.253 & 1.045 & 1.783 \\
Bike-43        & 1.517 & 2.379  & 1.089 & 1.211 & 1.312 \\
Bike-164       & 2.278 & 21.552 & 3.401 & 1.091 & 1.865 \\
Chess-50       & 4.096 & 1.938  & 1.120 & 1.146 & 3.499 \\
Christine-80   & 1.210 & 7.642  & 1.862 & 1.238 & 1.166 \\
Christine-231  & 1.358 & 11.354 & 2.532 & 1.123 & 1.162 \\
Churn-81       & 3.962 & 2.257  & 1.909 & \textbf{1.301} & 3.810 \\
Churn-472      & \textbf{10.847} & 11.163 & \textbf{22.451} & 1.074 & \textbf{7.579} \\
Covertype-96   & 3.132 & \textbf{55.376} & 1.001 & 1.051 & 2.426 \\
\bottomrule
\end{tabular}
\caption{Ratio comparison between the Literal, Bitvector (BV), and Hash subproblem representations at depth $d=5$, $\lambda=0.007$, and $\varepsilon=0.02$. The maximal ratio in each column is bolded.}
\label{tab:ratios_d5_lam0p007_eps0p02}
\end{table}

These tables make the core trade-off very clear. The bitvector dataset can be catastrophically memory-intensive for larger datasets such as Covertype. This is a difference of $75$GB versus $1.3$GB, and would only grow if we considered a dataset with $20 \times$ more samples, such as Higgs.

The literal/itemset representation eliminates the dependence on the sample size, but \autoref{tab:ratios_d5_lam0p007_eps0p02} shows that this comes at the cost of substantial cache inflation. Although each individual key is compact, distinct split sequences can correspond to the same subset of samples, requiring multiple itemset representations, whereas a single bitvector would suffice. As a result, the literal representation can actually lead to increased runtime and memory consumption.

Quantitatively, the literal representation uses more cache entries than bitvector/hash (e.g., Adult-209: $2.46\times$; Bike-164: $2.28\times$; Churn-472: $10.85\times$), reflecting that multiple distinct split sequences can map to the same subset of samples and therefore prevent reuse. This difference grows as the number of features increases.

In contrast, the hash fingerprint preserves the canonical identity while keeping keys constant in size (and thus reducing memory consumption). In these experiments, it also matches the bitvector cache sizes exactly, as we have never observed a collision that impacted the output of \algnamenospace.

The core takeaway from the experiments is that the hash fingerprint is more memory efficient than both exact bitvector and itemset representations, and faster than using itemsets. In this ablation, we are also slightly faster than exact bitvectors, though this difference is not major and was not used to inform our choice of subproblem. The reason for this minor speedup is our use of interning for the ablative exact bitvector baseline. Interning is a memory-saving technique in which each distinct object is stored only once, and subsequent uses of the same object refer to that shared copy via a compact identifier. In our setting, this means storing each distinct bitvector mask exactly once and using a small integer ID in cache keys, rather than repeatedly copying the full bitvector. This approach decreases the memory consumption of the bitvector subproblem representation with a slight speed trade-off. This time-memory tradeoff via interning is insignificant compared to the larger gains we see with introducing the hash fingerprint representation.

\paragraph{Derivation of hash collisions result (under uniform hashing assumptions):}

When using the modified LicketySPLIT proxy algorithm, \algname maintains two types of subproblem caches: one for subproblems encountered by LicketySPLIT and one for subproblems encountered by the greedy subroutine. Each cache key includes both a subproblem identifier and the remaining depth, i.e., $(\mathrm{subproblem}, \mathrm{depth})$. Equivalently, we can view this as maintaining a separate cache for each pair consisting of a cache type and a remaining depth. Since there are two cache types and a depth limit of $d$, we analyze up to $2d$ caches. Each of these caches maps a subproblem (a 64-bit fingerprint, a bitvector, or an itemset) to the corresponding objective returned by either LicketySPLIT or greedy for that subproblem and remaining depth. We do not assume that these caches are independent.

We analyze collisions under the standard idealized model in which the 64-bit hash function
behaves as a uniform random mapping from distinct subproblem identifiers (bitvectors) to
$\{0,\dots,2^{64}-1\}$. Let $c \in \{1,\dots,2d\}$ index into the caches, with cache $c$ storing $n_c$ unique subproblem identifiers (bitvectors), each hashed uniformly to one of $M=2^{64}$ different values. We consider the probability of the event $A_c$, which asks whether there exists a collision among the $n_c$ bitvectors stored in cache $c$. We are interested in bounding the probability that a collision occurs anywhere. We define the event that a collision occurs anywhere.

\begin{equation}
A \;:=\; \bigcup_{c=1}^{2d} A_c .
\label{eq:def-A}
\end{equation}

By the union bound,
\begin{equation}
\Pr(A)
\;=\;
\Pr\!\left( \bigcup_{c=1}^{2d} A_c \right)
\;\le\;
\sum_{c=1}^{2d} \Pr(A_c).
\label{eq:union-bound}
\end{equation}

Now fix a cache $c$. Under standard hashing assumptions, the probability that no collision occurs among the $n_c$ bitvectors is
\begin{equation}
\Pr(A_c^c)
\;=\;
\prod_{i=0}^{n_c-1}\left(1-\frac{i}{M}\right).
\label{eq:no-collision}
\end{equation}
Consequently,
\begin{equation}
\Pr(A_c)
\;=\;
1 - \prod_{i=0}^{n_c-1}\left(1-\frac{i}{M}\right).
\label{eq:collision-prob}
\end{equation}

Substituting \eqref{eq:collision-prob} into the union bound \eqref{eq:union-bound} yields
\begin{equation}
\Pr(A)
\;\le\;
\sum_{c=1}^{2d}
\left(
1 - \prod_{i=0}^{n_c-1}\left(1-\frac{i}{2^{64}}\right)
\right).
\label{eq:union-expanded}
\end{equation}

As long as $n_c \ll \sqrt{M} = 2^{32}$ (which holds for all Rashomon set problems considered), the collision probability admits the standard birthday approximation:

\begin{align}
1 - \prod_{i=0}^{n_c-1}\left(1-\frac{i}{M}\right)
&\approx
1 - \exp\!\left(-\frac{n_c(n_c-1)}{2M}\right)
\notag\\
&\le
\frac{n_c(n_c-1)}{2M}.
\label{eq:birthday-approx}
\end{align}

The approximation follows from the classical analysis of the birthday paradox
(see, e.g., Chapter~3 of \citet{randomized}), while the upper bound on the approximation uses the bound ($1-e^{-x} \leq x$, which follows from $x-(1-e^{-x})$ taking only non-negative values).

Applying \eqref{eq:birthday-approx} to \eqref{eq:union-expanded} gives the final bound
\begin{equation}
\Pr(A)
\;\lesssim\;
\sum_{c=1}^{2d}
\frac{n_c(n_c-1)}{2 \cdot 2^{64}}.
\label{eq:final-bound}
\end{equation}

Before plugging in numbers, we note that we do not cache subproblems with remaining depth $0$ as a design choice, since these cases are fast to compute.
Second, the number of cached subproblems near the root of the search is small and insignificant compared to the number of subproblems cached at later depths, so the bound is usually dominated by one term.

We evaluate this bound using the three runs with the largest cache sizes shown in \autoref{tab:subprob_keys_d5_lam0p007_eps0p02_mb}, along with the smallest run.
For Rashomon set tasks with 50 binary features (one of the largest feature counts considered in \citet{arslan2025sorted}), we expect approximately one collision in every 27.3 billion runs of \algnamenospace.
Scaling up to Churn with 472 binary features, the odds increase to about one in $154{,}500$.
Even for the Christine dataset, which had the largest cache size, the probability of a collision is still only about one in $3{,}583$.
While these odds could be further reduced by using a larger fingerprint, we find a 64-bit fingerprint to be more than sufficient for the problem sizes considered here.

\begin{table}[t]
\centering
\scriptsize
\setlength{\tabcolsep}{4pt}
\renewcommand{\arraystretch}{1.05}

\begin{tabular}{l l r l}
\toprule
\textbf{Run} & \textbf{Nonzero cache sizes} $(n_c)$ & 
$\Pr(\text{collision anywhere})$ & \textbf{Odds} \\
\midrule

Bike-164 &
$\{3.68{\times}10^6,\;7.20{\times}10^5,\;5.29{\times}10^4,\;3.74{\times}10^4,\;8.81{\times}10^3,\;318,\;318,\;1\}$ &
$3.82{\times}10^{-7}$ &
$1$ in $2.6{\times}10^{6}$ \\

Chess-50 &
$\{3.41{\times}10^4,\;1.34{\times}10^4,\;2.36{\times}10^3,\;1.45{\times}10^3,\;518,\;74,\;74,\;1\}$ &
$3.67{\times}10^{-11}$ &
$1$ in $2.7{\times}10^{10}$ \\
Christine-231 &
$\{1.01{\times}10^8,\;6.69{\times}10^6,\;6.62{\times}10^5,\;1.03{\times}10^5,\;4.08{\times}10^4,\;462,\;462,\;1\}$ &
$2.79{\times}10^{-4}$ &
$1$ in $3.6{\times}10^{3}$ \\

Churn-472 &
$\{1.49{\times}10^7,\;4.24{\times}10^6,\;2.24{\times}10^5,\;8.83{\times}10^4,\;2.43{\times}10^4,\;790,\;790,\;1\}$ &
$6.47{\times}10^{-6}$ &
$1$ in $1.5{\times}10^{5}$ \\

\bottomrule
\end{tabular}

\caption{
Probability of at least one 64-bit fingerprint collision across all \(2d\) distinct caches
}
\label{tab:cache-collision-probabilities}
\end{table}

\subsection{Caching Ablation}
We now evaluate the importance of caching (with the 64-bit fingerprint) via an ablation. Note that the runtime result detailed in \autoref{thm:runtime} still holds without caching, but we can provably incur practical savings within our modified LicketySPLIT proxy algorithm and within the AND/OR graph expansion.

\autoref{tab:proxy-cache-ablation-gb} reports the results. Across datasets with small Rashomon sets (Diabetes, Credit, Bank), \algname is able to run without any additional memory beyond the peak memory used when loading the dataset. In contrast, with aggressive caching of proxy algorithms, \algname could use up to 4GB of memory. In these cases, \algname typically runs an order of magnitude faster with caching. Given that 4GB of memory is not prohibitively large, we use caching in our default implementation. 

For datasets with larger Rashomon sets (such as Christine), caching still leads to more memory use, but it is dwarfed by the size of the AND/OR graph needed to encode these trees. The amount of memory needed specifically for caching is unlikely to determine the feasibility of the run (with the 64-bit fingerprint, it would alter feasibility with other subproblem representations such as a full bitvector).

\begin{table*}[!t]
\centering
\scriptsize
\setlength{\tabcolsep}{5pt}
\renewcommand{\arraystretch}{1.05}
\begin{tabular}{lrrrrrr}
\toprule
\textbf{Dataset} &
\textbf{Trees} &
\multicolumn{2}{c}{\textbf{Cache OFF}} &
\multicolumn{2}{c}{\textbf{Cache ON}} &
\textbf{Speedup} \\
\cmidrule(lr){3-4} \cmidrule(lr){5-6}
 &  & Time (s) & PeakMem (GB) & Time (s) & PeakMem (GB) & $\times$ \\
\midrule
Bank-217        & 215{,}823
& 72{,}239 & 0.28
& 6{,}213 & 3.05
& 11.6 \\

Christine-231   & $9.92\times10^{10}$
& 4{,}006 & 96.58
&   855 & 97.68
& 4.7 \\

Churn-472       & $5.78\times10^{9}$
& 44{,}208 & 5.59
& 2{,}270 & 8.60
& 19.5 \\

Covertype-45    & $5.04\times10^{9}$
& 4{,}465 & 2.41
&   984 & 2.42
& 4.5 \\

Credit-225      & 36{,}140
& 62{,}044 & 0.23
& 5{,}977 & 4.24
& 10.4 \\

Diabetes-121    & 124
& 17{,}471 & 0.68
& 2{,}168 & 0.76
& 8.1 \\

Jasmine-207     & $3.11\times10^{9}$
& 14{,}378 & 10.40
&   988 & 14.45
& 14.6 \\

Wine-64         & 5{,}756{,}601
&   531 & 0.52
&    37 & 0.75
& 14.4 \\
\bottomrule
\end{tabular}
\caption{\algname with and without caching subproblems solved by proxy algorithms. $\lambda=0.003, \varepsilon=0.03, d=5$}
\label{tab:proxy-cache-ablation-gb}
\end{table*}

\newpage

\section{Datasets and Binarization}
\label{sec:datasets}

We provide a table of all 51 datasets and binarizations used in our experiments. For all datasets, we remove rows containing missing values and one-hot encode all categorical variables. Since all existing Rashomon set algorithms (including ours) operate on binary features, all continuous features are binarized prior to training. For most datasets, we use the threshold-guessing binarization procedure of \citet{gosdt_guesses}, which can handle both categorical and continuous features. This procedure trains a gradient-boosted decision tree ensemble with a specified number of estimators and depth cutoffs. It then collects all of the thresholds generated, orders them by Gini variable importance, and removes the least important thresholds iteratively. To remove any dependence on the construction of the binary dataset, we use a large range of GBDT parameters: from depth 1 to depth 7, and from 15 estimators to 500 estimators. On smaller datasets such as Monk2, we took all thresholds, as it was manageable to solve using all of them. On very large datasets such as Higgs, we used quantile binarization on numerical features to reduce the computation cost of training a reference ensemble.

\begin{table}[H]
\centering
\begin{tabular}{lrr}
\toprule
\textbf{Dataset} & \textbf{Samples} & \textbf{Features} \\
\midrule
Adult & 48842 & 14 \\
Adult & 48842 & 209 \\
Aging & 714 & 57 \\
Bank & 45211 & 97 \\
Bank & 45211 & 217 \\
Bike & 17379 & 43 \\
Bike & 17379 & 164 \\
Chess & 28056 & 50  \\
Christine & 5418 & 80 \\
Christine & 5418 & 231 \\
Churn & 5000 & 81 \\
Churn & 5000 & 472 \\
Compas & 4966 & 44 \\
Covertype & 581012 & 45 \\
Covertype & 581012 & 96 \\
Credit & 30000 & 134 \\
Credit & 30000 & 225 \\
Diabetes & 253680 & 33 \\
Diabetes & 253680 & 121 \\
Droid & 29332 & 84 \\
Electricity & 38474 & 94 \\
Electricity & 38474 & 264 \\
Heart & 297 & 42 \\
Helena & 65196 & 84 \\
Helena & 65196 & 156 \\
Heloc & 2502 & 65 \\
Higgs & 11000000 & 84 \\
IOT & 123117 & 86 \\
Jannis & 57580 & 106 \\
Jannis & 57580 & 247 \\
Jasmine & 2984 & 51 \\
Jasmine & 2984 & 207 \\
Madeline & 3140 & 76 \\
Madeline & 3140 & 451 \\
Madelon & 2000 & 73 \\
Madelon & 2000 & 186 \\
Magic & 19020 & 80 \\
Magic & 19020 & 167 \\
Monk2 & 601 & 17 \\
Mushroom & 8124 & 13 \\
News & 39644 & 196 \\
Phishing & 11055 & 44 \\
Poker & 1025010 & 40 \\
Shopping & 12330 & 112 \\
Shopping & 12330 & 243 \\
Spambase & 4601 & 24 \\
Spambase & 4601 & 67 \\
Student & 649 & 48 \\
Taxi & 1224158 & 27 \\
TicTacToe & 958 & 26 \\
Wine & 6497 & 64 \\
\bottomrule
\end{tabular}
\caption{Datasets with number of features in their binarizations}
\end{table}

\paragraph{\textbf{Adult \citep{adult_2}} (\textit{48{,}842 samples, 14 and 209 features})}
Predict whether an individual earns more than 50,000 per year based on demographic and occupational attributes. \\
Two binarizations are used, produced via threshold guessing with (i) 40 depth-1 estimators and (ii) 400 depth-2 estimators.

\paragraph{\textbf{Aging \citep{malani2017npha}} (\textit{714 samples, 57 features})}
Predict whether an individual has visited at least two doctors. \\
Binarized using threshold guessing with 100 estimators of depth 7.

\paragraph{\textbf{Bank \citep{moro2014ada, bank_marketing_222}} (\textit{45{,}211 samples, 97 and 217 features})}
Predict whether a client subscribes to a term deposit following a marketing campaign. \\
Two binarizations are generated using (i) 100 depth-3 estimators and (ii) 400 depth-1 estimators.

\paragraph{\textbf{Bike \citep{FanaeeT2013EventLC, bike_sharing_275}} (\textit{17{,}379 samples, 43 and 164 features})}
Predict whether daily bike rental demand exceeds the median. \\
Two binarizations are used: (i) depth-3 trees with 250 estimators and (ii) depth-1 trees with 250 estimators.

\paragraph{\textbf{Chess \citep{chess_(king-rook_vs._king)_23}} (\textit{28{,}056 samples, 50 features})}
Determine whether White can force a win within a fixed number of plies, chosen to balance class labels. \\
Binarized using threshold guessing with a 100-estimator ensemble of depth 10.

\paragraph{\textbf{Christine \citep{openml_christine_41142, automlchallenges}} (\textit{5{,}418 samples, 80 and 231 features})}
Binary classification using the provided target column. \\
Two binarizations are constructed using 40 estimators with depths 2 and 3.

\paragraph{\textbf{Churn \citep{erickson2025tabarenalivingbenchmarkmachine, marcoulides2005data}} (\textit{5{,}000 samples, 81 and 472 features})}
Predict whether a customer will churn. \\
Two binarizations are used, generated with 40 estimators of depth 5 and depth 3.

\paragraph{\textbf{COMPAS \citep{bao2021compaslicated}} (\textit{4{,}966 samples, 44 features})}
Predict whether a defendant will recidivate within two years. \\
Binarized using 400 depth-1 estimators.

\paragraph{\textbf{Covertype \citep{covertype_31}} (\textit{581{,}012 samples, 45 and 96 features})}
Predict whether a forest plot belongs to cover type 2. \\
Two binarizations are used: 52 estimators of depth 1 and 46 estimators of depth 2.

\paragraph{\textbf{Credit \citep{default_of_credit_card_clients_350, Yeh2009TheCO}} (\textit{30{,}000 samples, 134 and 225 features})}
Predict whether a client will default on their credit card payment in the following month. \\
Two binarizations are generated using 40 estimators with depths 4 and 3.

\paragraph{\textbf{Diabetes \citep{burrows2017esrd}} (\textit{253{,}680 samples, 33 and 121 features})}
Predict whether an individual is diabetic. \\
Continuous features are binarized using quantile thresholds, with two configurations using approximately 6 and 500 candidate quantiles per feature.

\paragraph{\textbf{Droid \citep{Mathur2021PosterNA}} (\textit{29{,}332 samples, 84 features})}
Predict whether an Android application is malicious. \\
Binarized using threshold guessing with 500 estimators of depth 5.

\paragraph{\textbf{Electricity \citep{openml_electricity_43952}} (\textit{38{,}474 samples, 94 and 264 features})}
Predict whether electricity prices increase relative to a 24-hour moving average. \\
Two binarizations are used, generated with 40 estimators of depths 3 and 4.

\paragraph{\textbf{Heart \citep{heart_disease_45, Detrano1989InternationalAO}} (\textit{297 samples, 42 features})}
Predict the presence of heart disease. \\
Binarized using 70 estimators of depth 2.

\paragraph{\textbf{Helena \citep{openml_helena_41169, automlchallenges}} (\textit{65{,}196 samples, 84 and 156 features})}
Predict whether an instance belongs to the most frequent class. \\
Two binarizations are generated using (i) 35 depth-3 estimators and (ii) 40 depth-2 estimators.

\paragraph{\textbf{Heloc \citep{fico2018heloc}} (\textit{2{,}502 samples, 65 features})}
Predict whether an individual is high- or low-risk for a home equity line of credit. \\
Binarized using 200 depth-1 estimators.

\paragraph{\textbf{Higgs \citep{higgs_280, Baldi_2014}} (\textit{11{,}000{,}000 samples, 84 features})}
Predict whether a particle collision corresponds to signal or background noise. \\
Each continuous feature is binarized at the 25th, 50th, and 75th percentiles, estimated using a 200{,}000-sample subsample and applied to the full dataset.

\paragraph{\textbf{IOT \citep{rt-iot2022__942, Sharmila2023QuantizedA}} (\textit{123{,}117 samples, 86 features})}
Predict whether a network-traffic record corresponds to an attack or normal behavior. \\
Each numerical feature is discretized into five binary indicators using evenly spaced thresholds.

\paragraph{\textbf{Jannis \citep{openml_jannis_43977, automlchallenges}} (\textit{57{,}580 samples, 106 and 247 features})}
Binary classification using the provided target column. \\
Two binarizations are generated using 40 estimators with depths 3 and 2.

\paragraph{\textbf{Jasmine \citep{openml_jasmine_41143, automlchallenges}} (\textit{2{,}984 samples, 51 and 207 features})}
Binary classification using the provided target column. \\
Two binarizations are generated using 40 estimators with depths 3 and 4.

\paragraph{\textbf{Madeline \citep{openml_madeline_41144}} (\textit{3{,}140 samples, 76 and 451 features})}
Binary classification using the provided target column. \\
Two binarizations are generated using 40 estimators with depths 2 and 4.

\paragraph{\textbf{Madelon \citep{automlchallenges}} (\textit{2{,}000 samples, 73 and 186 features})}
Synthetic dataset where points are labeled based on proximity to selected corners of a hypercube. \\
Two binarizations are generated using 40 estimators with depths 2 and 3.

\paragraph{\textbf{Magic \citep{magic_gamma_telescope_159}} (\textit{19{,}020 samples, 80 and 167 features})}
Predict whether a Cherenkov telescope image corresponds to a gamma ray or background noise. \\
Two binarizations are generated using 35 estimators with depths 3 and 2.

\paragraph{\textbf{Monk2 \citep{Thrun1991TheMP, monks_problems_70}} (\textit{601 samples, 17 features})}
Predict the logical rule where the label is 1 if exactly two of six attributes take value 1. \\
All possible thresholds are included.

\paragraph{\textbf{Mushroom \citep{mushroom_73}} (\textit{8{,}124 samples, 13 features})}
Predict whether a mushroom is poisonous. \\
Binarized using 400 depth-1 estimators.

\paragraph{\textbf{News \citep{online_news_popularity_332, Fernandes2015API}} (\textit{39{,}644 samples, 196 features})}
Predict whether an online news article’s number of shares exceeds the median. \\
Each feature is discretized into five binary indicators using quantile thresholds.

\paragraph{\textbf{Phishing \citep{phishing_websites_327, Mohammad2012AnAO}} (\textit{11{,}055 samples, 44 features})}
Predict whether a website is phishing or legitimate. \\
Binarized using 200 estimators of depth 2.

\paragraph{\textbf{Poker \citep{poker_hand_158}} (\textit{1{,}025{,}010 samples, 40 features})}
Predict whether a five-card hand is at least a pair. \\
Binarized using 40 estimators of depth 4.

\paragraph{\textbf{Shopping \citep{online_shoppers_purchasing_intention_dataset_468, Sakar2018RealtimePO}} (\textit{12{,}330 samples, 112 and 243 features})}
Predict whether an online shopping session ends in a purchase. \\
Two binarizations are generated using 40 estimators with depths 3 and 4.

\paragraph{\textbf{Spambase \citep{spambase_94}} (\textit{4{,}601 samples, 24 and 67 features})}
Predict whether an email is spam. \\
Two binarizations are used: (i) 15 estimators of depth 4 and (ii) 40 estimators of depth 1.

\paragraph{\textbf{Student \citep{student_performance_320, Cortez2008UsingDM}} (\textit{649 samples, 48 features})}
Predict whether a student passes a course (final grade $\geq 10$). \\
Binarized using 500 depth-1 estimators.

\paragraph{\textbf{Taxi \citep{openml_nyc_taxi_green_41255}} (\textit{1{,}224{,}158 samples, 27 features})}
Predict whether a taxi ride includes a tip. \\
Binarized using 40 estimators of depth 2.

\paragraph{\textbf{Tic-Tac-Toe \citep{tic-tac-toe_endgame_101}} (\textit{958 samples, 26 features})}
Predict whether a player has won given a terminal board configuration. \\
Binarized using 20 estimators of depth 7.

\paragraph{\textbf{Wine \citep{openml_wine_47041}} (\textit{6{,}497 samples, 64 features})}
Predict whether wine quality is at least 7. \\
Binarized using 200 depth-1 estimators.

\FloatBarrier
\newpage

\section{Experiments}
\label{appendix:experiments}

\paragraph{Experimental Setup}

All Rashomon set algorithms are run with a 90-hour time limit and 200 GB of allocated memory, unless otherwise noted. Whenever we use bootstrapping, such as in experiments evaluating the mean and standard deviation of \algnamenospace’s recall or in computing the Rashomon Importance Distribution \citep{DonnellyEtAl2024}, we generate 10 bootstrap resamples of the binarized dataset.
For consistency with the three other Rashomon set algorithms and with \algnamenospace, we allow trivial extensions of trees to be included in the Rashomon set. Trivial extensions are splits in which the leaf and its sibling make identical predictions and thus form a special case of predictive equivalence \citep{mctavish2025leveragingpredictiveequivalencedecision}. These extensions can be removed with minimal additional handling of the AND/OR graph or by post-processing the Rashomon set to discard trees that are predictively equivalent to another tree already in the set.

\algname and SORTeD \citep{arslan2025sorted} use a depth convention in which the root is at depth 0, whereas RESPLIT and TreeFARMS index the root at depth 1; we adjust for this difference accordingly. Additionally, for RESPLIT \citep{babbar2025near}, we set the lookahead depth parameter to exactly half of the overall tree depth. Thus, when training depth-5 and depth-7 trees (corresponding to depths 6 and 8 under RESPLIT’s convention), we use lookahead depths of 3 and 4, respectively.

Additionally, because \algname operates on integer-valued objectives, all reported values of $\lambda$
are snapped to the nearest multiple of $1/|D|$ for compatibility.

For all quantitative recall comparisons, we compute ground truth using \algname configured with an optimal proxy and an exact subproblem representation (i.e., without 64-bit fingerprinting). This configuration is used only to define the reference Rashomon set and eliminates any theoretical risk of hash collisions or proxy-induced pruning errors. All reported experimental results for our method, including runtime, memory usage, and recall, use the modified LicketySPLIT proxy (unless stated otherwise) with a 64-bit fingerprint used to key the subproblem caches. Consequently, any observed recall differences reflect algorithmic behavior rather than numerical differences across implementations.

\paragraph{Machines Used}
All experiments were performed on an institutional computing cluster.
Each experiment was executed on a single compute node equipped with an AMD EPYC 9554 processor (2.75 GHz), with 64 physical cores. Resources were restricted to 1 CPU core and 200 GB of RAM.

\subsection{Rashomon Set Timing}

Across all three Rashomon set configurations shown (\autoref{tab:cfg02-003-d5-time-delta-all}--\autoref{tab:cfg005-001-d5-time-delta-all}), \algname is consistently the fastest method, often by orders of magnitude: on most datasets it completes in seconds to minutes while SORTeD and RESPLIT frequently require hours to days, and TreeFARMS does not finish the majority of tasks with a 200GB memory limit.

\algname is the fastest method to complete across all datasets, binarizations, and parameter settings. In a small number of cases, however, \algname exhibits higher memory usage than competing methods. When this excess exceeds a few hundred megabytes, it is always associated with particularly challenging datasets.
The main example of this is Madelon, a synthetic dataset in which labels are determined by proximity to selected corners of a hypercube. Such structure does not favor heuristics, which means the initial budget is set much more loosely than optimal, leading to increased exploration. This behavior does not lead to degraded approximation quality (see \autoref{appendix:licketyrecall}).

This issue can be mitigated within the \algname framework. As shown in \autoref{appendix:other-timing}, using a stronger proxy algorithm both reduces memory usage and shortens runtime on Madelon. Nevertheless, we retain the modified LicketySPLIT proxy as the default, since synthetic adversarial datasets such as Madelon are not typical real-world use cases. Instead, we include them to stress-test the algorithms and characterize worst-case behavior.

Finally, we note that in the few settings where RESPLIT uses less memory than \algnamenospace, this comes at the cost of substantially worse approximation quality (see \autoref{sec:additional-cdf}).

\begin{table*}[!t]
\centering
\scriptsize
\setlength{\tabcolsep}{3pt}
\renewcommand{\arraystretch}{1.05}

\caption{Runtime and peak memory usage at $\lambda=0.02$, $\varepsilon_{\textrm{mult}}=0.03$, depth $=5$.
\textit{Peak MB reports peak resident set size (RSS), including imports, dataset loading, and algorithm execution. -- indicates that the method did not finish in 90 hours or with 200GB of RAM.}}
\label{tab:cfg02-003-d5-time-delta-all}

\begin{tabular}{@{}lrr rr rr rr rr@{}}
\toprule
 &  &  &
\multicolumn{2}{c}{\algname} &
\multicolumn{2}{c}{TreeFARMS} &
\multicolumn{2}{c}{SORTeD} &
\multicolumn{2}{c}{RESPLIT} \\
\cmidrule(lr){4-5} \cmidrule(lr){6-7} \cmidrule(lr){8-9} \cmidrule(lr){10-11}
Dataset & $n$ & $k$
& Time & Peak MB
& Time & Peak MB
& Time & Peak MB
& Time & Peak MB \\
\midrule

Adult & 48842 & 14 & \textbf{0.04} & \textbf{144.27} & 0.44 & 220.77 & 2.27 & 242.91 & 4.37 & 218.53 \\
Adult & 48842 & 209 & \textbf{272.53} & \textbf{682.15} & -- & -- & 56440.23 & 12079.80 & 5494.42 & 5782.34 \\
Aging & 714 & 57 & \textbf{0.02} & \textbf{126.92} & 1433.52 & 9208.92 & 2.01 & 147.50 & 0.96 & 139.97 \\
Bank & 45211 & 97 & \textbf{2.43} & \textbf{195.74} & -- & -- & 1808.61 & 1189.15 & 164.34 & 1468.21 \\
Bank & 45211 & 217 & \textbf{19.79} & \textbf{281.88} & -- & -- & 39822.12 & 9455.45 & 1861.18 & 5860.19 \\
Bike & 17379 & 43 & \textbf{0.78} & \textbf{146.38} & -- & -- & 24.83 & 284.10 & 26.07 & 256.52 \\
Bike & 17379 & 164 & \textbf{35.40} & \textbf{359.94} & -- & -- & 6533.75 & 3546.60 & 1022.18 & 1302.75 \\
Chess & 28056 & 50 & \textbf{0.33} & \textbf{151.67} & -- & -- & 26.66 & 288.12 & 32.75 & 325.80 \\
Christine & 5418 & 80 & \textbf{31.28} & 974.84 & -- & -- & 252.29 & 801.67 & 373.98 & \textbf{551.52} \\
Christine & 5418 & 231 & \textbf{944.27} & 10439.32 & -- & -- & 38970.60 & 12709.88 & 12625.37 & \textbf{3096.83} \\
Churn & 5000 & 81 & \textbf{0.22} & \textbf{136.17} & -- & -- & 42.26 & 292.94 & 13.96 & 243.64 \\
Churn & 5000 & 472 & \textbf{34.84} & \textbf{279.41} & -- & -- & 123776.11 & 22012.99 & 2564.30 & 2792.24 \\
Compas & 4966 & 44 & \textbf{0.09} & \textbf{130.47} & 65.08 & 3742.00 & 7.23 & 163.50 & 11.90 & 159.61 \\
Covertype & 581012 & 45 & \textbf{13.11} & \textbf{579.30} & 519.70 & 27925.92 & 3123.76 & 2986.86 & 1640.53 & \textbf{2518.53} \\
Covertype & 581012 & 96 & \textbf{358.70} & \textbf{1301.23} & -- & -- & 64672.88 & 16107.48 & 10101.87 & 8631.58 \\
Credit & 30000 & 134 & \textbf{5.74} & \textbf{190.12} & -- & -- & 6194.55 & 3472.01 & 447.43 & 1728.17 \\
Credit & 30000 & 225 & \textbf{26.03} & \textbf{249.60} & -- & -- & 65145.23 & 14300.78 & 2316.15 & 4355.73 \\
Diabetes & 253680 & 33 & \textbf{0.79} & \textbf{291.69} & 553.31 & 28336.29 & 683.31 & 1067.69 & 48.68 & 1049.81 \\
Diabetes & 253680 & 121 & \textbf{26.59} & \textbf{677.67} & -- & -- & 50986.77 & 10723.03 & 2410.23 & 8037.53 \\
Droid & 29332 & 84 & \textbf{0.79} & \textbf{165.04} & -- & -- & 882.65 & 697.19 & 78.89 & 730.01 \\
Electricity & 38474 & 94 & \textbf{7.83} & \textbf{192.63} & -- & -- & 1333.20 & 1186.57 & 216.98 & 1011.17 \\
Electricity & 38474 & 264 & \textbf{306.94} & \textbf{692.10} & -- & -- & 114325.61 & 23777.65 & 7619.63 & 6402.99 \\
Heart & 297 & 42 & \textbf{0.05} & \textbf{130.01} & 11792.01 & 20541.30 & 3.46 & 165.31 & 4.59 & 139.74 \\
Helena & 65196 & 84 & \textbf{0.64} & \textbf{214.39} & 2.53 & 664.27 & 16.31 & 447.14 & 68.20 & 634.65 \\
Helena & 65196 & 156 & \textbf{4.74} & \textbf{290.96} & 38.32 & 1983.32 & 100.29 & 779.31 & 564.13 & 1755.30 \\
Heloc & 2502 & 65 & \textbf{0.38} & \textbf{140.59} & -- & -- & 52.34 & 313.95 & 8.79 & 205.18 \\
Higgs & 11000000 & 84 & \textbf{2374.91} & \textbf{21537.79} & -- & -- & -- & -- & -- & -- \\
IOT & 123117 & 86 & \textbf{0.63} & \textbf{302.86} & 3.73 & 936.54 & 18.29 & 688.02 & 5.33 & 805.53 \\
Jannis & 57580 & 106 & \textbf{327.54} & \textbf{1458.63} & -- & -- & 5277.68 & 5137.22 & 5587.19 & 2462.35 \\
Jannis & 57580 & 247 & \textbf{15352.04} & 42067.00 & -- & -- & 215642.66 & 57931.75 & 237926.22 & \textbf{11297.20} \\
Jasmine & 2984 & 51 & \textbf{0.35} & \textbf{142.87} & -- & -- & 13.72 & 211.14 & 16.78 & 193.93 \\
Jasmine & 2984 & 207 & \textbf{22.77} & \textbf{451.61} & -- & -- & 5211.85 & 2705.49 & 517.17 & 745.36 \\
Madeline & 3140 & 76 & \textbf{2.22} & \textbf{196.45} & -- & -- & 136.18 & 518.20 & 444.67 & 460.68 \\
Madeline & 3140 & 451 & \textbf{2971.65} & 29094.20 & -- & -- & -- & -- & 133189.38 & \textbf{26621.99} \\
Madelon & 2000 & 73 & \textbf{8.19} & 292.36 & -- & -- & 97.75 & 409.84 & 204.01 & \textbf{276.30} \\
Madelon & 2000 & 186 & \textbf{1420.33} & 29392.03 & -- & -- & 8760.55 & 4460.80 & 4196.68 & \textbf{1242.15} \\
Magic & 19020 & 80 & \textbf{31.05} & \textbf{446.82} & -- & -- & 268.66 & 850.65 & 595.96 & 527.28 \\
Magic & 19020 & 167 & \textbf{368.48} & 2481.18 & -- & -- & 6605.22 & 5146.42 & 8074.65 & \textbf{1771.25} \\
Monk2 & 601 & 17 & \textbf{0.00} & \textbf{125.96} & 3.81 & 414.67 & 0.09 & 133.14 & 0.14 & 130.55 \\
Mushroom & 8124 & 13 & \textbf{0.00} & \textbf{128.88} & 0.18 & 160.31 & 0.07 & 145.33 & 1.46 & 138.40 \\
News & 39644 & 196 & \textbf{596.47} & \textbf{1480.48} & -- & -- & 114514.40 & 30426.06 & 13544.76 & 6082.23 \\
Phishing & 11055 & 44 & \textbf{0.07} & \textbf{138.56} & 645.57 & 87245.93 & 20.98 & \textbf{191.67} & 10.20 & 195.34 \\
Poker & 1025010 & 40 & \textbf{13.71} & \textbf{952.10} & -- & -- & 7370.45 & 7475.91 & 657.48 & 5958.07 \\
Shopping & 12330 & 112 & \textbf{4.09} & \textbf{187.69} & -- & -- & 646.54 & 980.08 & 137.95 & 606.73 \\
Shopping & 12330 & 243 & \textbf{58.88} & \textbf{470.00} & -- & -- & 24151.41 & 7957.33 & 1760.04 & 2195.54 \\
Spambase & 4601 & 24 & \textbf{0.05} & \textbf{130.40} & 106.68 & 10617.66 & 0.98 & 155.85 & 4.46 & 148.48 \\
Spambase & 4601 & 67 & \textbf{3.67} & 232.30 & -- & -- & 53.30 & 351.89 & 40.33 & \textbf{223.32} \\
Student & 649 & 48 & \textbf{0.02} & \textbf{127.51} & 25801.79 & 36353.51 & 3.42 & 159.91 & 3.05 & 142.59 \\
Taxi & 1224158 & 27 & \textbf{1.18} & \textbf{894.65} & 12.50 & 2874.38 & 364.85 & 3318.65 & 32.35 & 2030.43 \\
TicTacToe & 958 & 26 & \textbf{0.17} & 145.15 & 216.40 & 9333.29 & 0.79 & 147.81 & 0.54 & \textbf{138.73} \\
Wine & 6497 & 64 & \textbf{0.24} & \textbf{135.34} & -- & -- & 62.98 & 307.63 & 12.54 & 255.48 \\

\bottomrule
\end{tabular}
\end{table*}
\begin{table*}[!t]
\centering
\scriptsize
\setlength{\tabcolsep}{3pt}
\renewcommand{\arraystretch}{1.05}

\caption{Runtime and peak memory usage at $\lambda=0.01$, $\varepsilon_{\textrm{mult}}=0.02$, depth $=5$.
\textit{Peak MB reports peak resident set size (RSS), including imports, dataset loading, and algorithm execution. -- indicates that the method did not finish in 90 hours or with 200GB of RAM.}}
\label{tab:cfg01-002-d5-time-delta-all}

\begin{tabular}{@{}lrr rr rr rr rr@{}}
\toprule
 &  &  &
\multicolumn{2}{c}{\algname} &
\multicolumn{2}{c}{TreeFARMS} &
\multicolumn{2}{c}{SORTeD} &
\multicolumn{2}{c}{RESPLIT} \\
\cmidrule(lr){4-5} \cmidrule(lr){6-7} \cmidrule(lr){8-9} \cmidrule(lr){10-11}
Dataset & $n$ & $k$
& Time & Peak MB
& Time & Peak MB
& Time & Peak MB
& Time & Peak MB \\
\midrule

Adult & 48842 & 14 & \textbf{0.03} & \textbf{142.79} & 1.45 & 356.50 & 2.44 & 237.79 & 2.29 & 212.25 \\
Adult & 48842 & 209 & \textbf{213.43} & \textbf{329.20} & -- & -- & 43716.00 & 13414.49 & 5409.87 & 5889.31 \\
Aging & 714 & 57 & \textbf{0.02} & \textbf{126.82} & 3052.93 & 14589.09 & 2.12 & 153.57 & 1.24 & 140.83 \\
Bank & 45211 & 97 & \textbf{3.69} & \textbf{194.56} & -- & -- & 2057.39 & 1651.47 & 246.53 & 1806.91 \\
Bank & 45211 & 217 & \textbf{30.26} & \textbf{281.51} & -- & -- & 41199.79 & 14633.89 & 2653.47 & 7062.95 \\
Bike & 17379 & 43 & \textbf{0.87} & \textbf{145.73} & -- & -- & 35.19 & 304.32 & 171.08 & 378.37 \\
Bike & 17379 & 164 & \textbf{45.83} & \textbf{402.90} & -- & -- & 8686.16 & 4317.58 & 6840.29 & 3007.27 \\
Chess & 28056 & 50 & \textbf{0.84} & \textbf{151.67} & -- & -- & 31.13 & 325.01 & 255.75 & 771.57 \\
Christine & 5418 & 80 & \textbf{60.96} & 1854.50 & -- & -- & 318.82 & 943.47 & 176.95 & \textbf{578.80} \\
Christine & 5418 & 231 & \textbf{2688.49} & 36909.23 & -- & -- & 50600.24 & 15805.86 & 4764.45 & \textbf{3184.38} \\
Churn & 5000 & 81 & \textbf{1.51} & \textbf{160.04} & -- & -- & 65.26 & 385.35 & 55.45 & 302.81 \\
Churn & 5000 & 472 & \textbf{672.04} & \textbf{2673.30} & -- & -- & 207785.97 & 35154.42 & 65161.25 & 4495.50 \\
Compas & 4966 & 44 & \textbf{0.28} & \textbf{135.31} & 309.37 & 13473.63 & 7.20 & 181.21 & 12.82 & 160.25 \\
Covertype & 581012 & 45 & \textbf{16.00} & \textbf{579.65} & 3242.37 & 138008.74 & 3199.01 & 3423.12 & 1199.73 & 2586.74 \\
Covertype & 581012 & 96 & \textbf{411.17} & \textbf{1301.31} & -- & -- & 51888.28 & 19681.58 & 14090.82 & 9014.84 \\
Credit & 30000 & 134 & \textbf{8.49} & \textbf{193.99} & -- & -- & 8578.20 & 4990.32 & 623.32 & 1930.66 \\
Credit & 30000 & 225 & \textbf{37.61} & \textbf{265.08} & -- & -- & 78946.90 & 20408.63 & 3144.97 & 4616.44 \\
Diabetes & 253680 & 33 & \textbf{1.41} & \textbf{291.29} & -- & -- & 1003.21 & 1346.43 & 79.66 & 1353.52 \\
Diabetes & 253680 & 121 & \textbf{49.56} & \textbf{679.43} & -- & -- & 54065.58 & 17048.95 & 3854.99 & 10305.41 \\
Droid & 29332 & 84 & \textbf{1.32} & \textbf{164.86} & -- & -- & 905.42 & 781.80 & 99.92 & 734.73 \\
Electricity & 38474 & 94 & \textbf{167.97} & \textbf{853.92} & -- & -- & 1554.07 & 1714.89 & 384.58 & 1052.48 \\
Electricity & 38474 & 264 & \textbf{15226.18} & 41489.45 & -- & -- & 137252.76 & 31984.60 & 22809.66 & \textbf{6679.00} \\
Heart & 297 & 42 & \textbf{0.15} & \textbf{138.07} & 15562.35 & 21285.08 & 4.36 & 167.92 & 8.86 & 305.46 \\
Helena & 65196 & 84 & \textbf{4.09} & \textbf{214.65} & 3149.22 & 142928.84 & 1023.98 & 1345.18 & 140.12 & 1719.96 \\
Helena & 65196 & 156 & \textbf{53.02} & \textbf{343.79} & -- & -- & 19191.46 & 7918.21 & 1027.52 & 5564.16 \\
Heloc & 2502 & 65 & \textbf{1.71} & \textbf{187.75} & -- & -- & 63.12 & 361.53 & 17.97 & 236.79 \\
Higgs & 11000000 & 84 & \textbf{9354.19} & \textbf{21536.63} & -- & -- & -- & -- & -- & -- \\
IOT & 123117 & 86 & \textbf{0.69} & \textbf{305.50} & 3.73 & 939.38 & 20.49 & 688.75 & 5.34 & 809.26 \\
Jannis & 57580 & 106 & \textbf{232.05} & \textbf{1214.11} & -- & -- & 6107.02 & 6181.51 & 10247.63 & 2501.41 \\
Jannis & 57580 & 247 & \textbf{10085.61} & 35068.73 & -- & -- & 238250.82 & 71731.88 & 225254.53 & \textbf{11419.48} \\
Jasmine & 2984 & 51 & \textbf{1.76} & \textbf{184.79} & -- & -- & 16.98 & 224.13 & 15.75 & 188.03 \\
Jasmine & 2984 & 207 & \textbf{142.81} & \textbf{2348.67} & -- & -- & 6355.12 & 3245.50 & 573.81 & 745.05 \\
Madeline & 3140 & 76 & \textbf{29.26} & 1063.57 & -- & -- & 176.20 & \textbf{524.15} & -- & -- \\
Madelon & 2000 & 73 & \textbf{103.71} & 3816.83 & -- & -- & 116.89 & \textbf{418.05} & -- & -- \\
Madelon & 2000 & 186 & \textbf{4731.53} & 94685.84 & -- & -- & 11299.92 & \textbf{4768.32} & -- & -- \\
Magic & 19020 & 80 & \textbf{34.69} & \textbf{467.59} & -- & -- & 370.13 & 1041.29 & 370.55 & 532.23 \\
Magic & 19020 & 167 & \textbf{707.41} & 4875.07 & -- & -- & 9429.27 & 6714.69 & 4113.34 & \textbf{1947.57} \\
Monk2 & 601 & 17 & \textbf{0.02} & \textbf{128.17} & 3.80 & 419.89 & 0.12 & 133.92 & 0.16 & 130.15 \\
Mushroom & 8124 & 13 & \textbf{0.00} & \textbf{128.55} & 0.25 & 169.00 & 0.11 & 145.63 & 1.71 & 138.34 \\
News & 39644 & 196 & \textbf{1151.19} & \textbf{2658.81} & -- & -- & 136627.16 & 41543.94 & 10672.60 & 9491.36 \\
Phishing & 11055 & 44 & \textbf{0.13} & \textbf{138.57} & -- & -- & 22.91 & 210.77 & 14.30 & 202.65 \\
Poker & 1025010 & 40 & \textbf{770.12} & \textbf{990.07} & -- & -- & 10703.73 & 30587.01 & 44109.02 & 21182.09 \\
Shopping & 12330 & 112 & \textbf{5.76} & \textbf{194.20} & -- & -- & 1021.69 & 1438.49 & 183.90 & 723.99 \\
Shopping & 12330 & 243 & \textbf{91.16} & \textbf{575.52} & -- & -- & 34442.13 & 11823.67 & 2395.57 & 2560.67 \\
Spambase & 4601 & 24 & \textbf{0.03} & \textbf{128.77} & 164.15 & 14223.42 & 1.22 & 153.60 & 1.63 & 148.00 \\
Spambase & 4601 & 67 & \textbf{1.87} & \textbf{182.17} & -- & -- & 59.83 & 318.11 & 26.15 & 225.06 \\
Student & 649 & 48 & \textbf{0.05} & \textbf{129.01} & 26873.72 & 43952.53 & 4.71 & 168.66 & 3.23 & 143.88 \\
Taxi & 1224158 & 27 & \textbf{1.42} & \textbf{894.16} & 95.59 & 9040.93 & 1987.10 & 3496.16 & 138.84 & 2334.88 \\
TicTacToe & 958 & 26 & \textbf{0.57} & 178.92 & 243.16 & 9455.00 & 1.03 & \textbf{148.10} & 44.54 & 1135.14 \\
Wine & 6497 & 64 & \textbf{0.47} & \textbf{140.97} & -- & -- & 75.13 & 371.18 & 20.55 & 333.90 \\

\bottomrule
\end{tabular}
\end{table*}
\begin{table*}[!t]
\centering
\scriptsize
\setlength{\tabcolsep}{3pt}
\renewcommand{\arraystretch}{1.05}

\caption{Runtime and peak memory usage at $\lambda=0.005$, $\varepsilon_{\textrm{mult}}=0.01$, depth $=5$.
\textit{Peak MB reports peak resident set size (RSS), including imports, dataset loading, and algorithm execution. -- indicates that the method did not finish in 90 hours or with 200GB of RAM.}}
\label{tab:cfg005-001-d5-time-delta-all}

\begin{tabular}{@{}lrr rr rr rr rr@{}}
\toprule
 &  &  &
\multicolumn{2}{c}{\algname} &
\multicolumn{2}{c}{TreeFARMS} &
\multicolumn{2}{c}{SORTeD} &
\multicolumn{2}{c}{RESPLIT} \\
\cmidrule(lr){4-5} \cmidrule(lr){6-7} \cmidrule(lr){8-9} \cmidrule(lr){10-11}
Dataset & $n$ & $k$
& Time & Peak MB
& Time & Peak MB
& Time & Peak MB
& Time & Peak MB \\
\midrule

Adult & 48842 & 14 & \textbf{0.04} & \textbf{142.95} & 14.27 & 3726.86 & 3.23 & 239.91 & 2.09 & 208.89 \\
Adult & 48842 & 209 & \textbf{103.67} & \textbf{287.67} & -- & -- & 50692.90 & 16083.11 & 3389.43 & 5828.01 \\
Aging & 714 & 57 & \textbf{0.02} & \textbf{126.96} & 3787.89 & 15215.65 & 2.31 & 158.90 & 1.38 & 141.26 \\
Bank & 45211 & 97 & \textbf{14.84} & \textbf{195.26} & -- & -- & 1631.56 & 2136.69 & 604.20 & 1901.05 \\
Bank & 45211 & 217 & \textbf{258.36} & \textbf{308.94} & -- & -- & 50481.86 & 19501.40 & 9870.89 & 7499.14 \\
Bike & 17379 & 43 & \textbf{0.34} & \textbf{143.97} & -- & -- & 43.12 & 310.59 & 644.18 & 14718.25 \\
Bike & 17379 & 164 & \textbf{193.43} & \textbf{392.34} & -- & -- & 10165.04 & 4818.43 & -- & -- \\
Chess & 28056 & 50 & \textbf{0.56} & \textbf{151.82} & -- & -- & 42.15 & \text{334.95} & 311.71 & 1337.80 \\
Christine & 5418 & 80 & \textbf{9.11} & \textbf{340.74} & -- & -- & 366.60 & 965.29 & 59.01 & 475.03 \\
Christine & 5418 & 231 & \textbf{740.80} & \text{9069.65} & -- & -- & 58533.87 & 16757.12 & 1747.80 & \textbf{2822.89} \\
Churn & 5000 & 81 & \textbf{1.56} & \textbf{167.24} & -- & -- & 83.37 & 412.25 & 70.71 & 422.36 \\
Churn & 5000 & 472 & \textbf{373.36} & \textbf{668.72} & -- & -- & 300644.51 & 40201.41 & 57447.85 & \text{22535.59} \\
Compas & 4966 & 44 & \textbf{0.14} & \textbf{131.78} & 357.00 & 15292.30 & 6.95 & 172.83 & 12.91 & \text{165.10} \\
Covertype & 581012 & 45 & \textbf{27.02} & \textbf{577.12} & -- & -- & 2926.56 & 8743.22 & 800.52 & \text{2686.79} \\
Covertype & 581012 & 96 & \textbf{1175.58} & \textbf{1301.56} & -- & -- & 45014.11 & 36188.30 & 9721.09 & 9161.04 \\
Credit & 30000 & 134 & \textbf{14.08} & \textbf{187.86} & -- & -- & 8981.24 & 6006.17 & 788.48 & 2041.50 \\
Credit & 30000 & 225 & \textbf{65.09} & \textbf{231.32} & -- & -- & 96518.88 & 24726.79 & 3844.08 & 5090.96 \\
Diabetes & 253680 & 33 & \textbf{2.39} & \textbf{288.94} & -- & -- & 1296.70 & \text{1583.26} & 116.81 & 1401.60 \\
Diabetes & 253680 & 121 & \textbf{69.09} & \textbf{676.90} & -- & -- & 61324.21 & 24343.17 & 5082.61 & 11350.65 \\
Droid & 29332 & 84 & \textbf{1.83} & \textbf{165.44} & -- & -- & 864.82 & 857.08 & 111.37 & 734.25 \\
Electricity & 38474 & 94 & \textbf{33.11} & \textbf{203.75} & -- & -- & 1677.05 & 1896.45 & 328.27 & 1058.29 \\
Electricity & 38474 & 264 & \textbf{3936.78} & \text{10709.56} & -- & -- & 152529.97 & 35515.06 & 22561.25 & \textbf{6885.98} \\
Heart & 297 & 42 & \textbf{3.80} & 741.07 & -- & -- & 5.27 & \textbf{162.44} & 144.55 & 14249.28 \\
Helena & 65196 & 84 & \textbf{4.85} & \textbf{214.45} & -- & -- & 1582.83 & 1867.96 & 223.49 & 1864.16 \\
Helena & 65196 & 156 & \textbf{71.38} & \textbf{288.56} & -- & -- & 31017.18 & 11336.79 & 1639.32 & 6105.35 \\
Heloc & 2502 & 65 & \textbf{2.08} & \textbf{155.22} & -- & -- & 69.14 & 386.42 & 14.31 & 229.00 \\
Higgs & 11000000 & 84 & \textbf{13787.22} & \textbf{21538.12} & -- & -- & -- & -- & -- & -- \\
IOT & 123117 & 86 & \textbf{0.78} & \textbf{304.72} & 3.66 & 938.48 & 12.73 & \text{688.69} & 5.29 & 806.43 \\
Jannis & 57580 & 106 & \textbf{389.75} & \textbf{1980.20} & -- & -- & 5752.37 & 7342.77 & 7888.34 & 4098.71 \\
Jannis & 57580 & 247 & \textbf{23348.15} & 82284.37 & -- & -- & 301779.90 & 86424.74 & 63050.14 & \textbf{22226.05} \\
Jasmine & 2984 & 51 & \textbf{1.28} & \textbf{148.84} & -- & -- & 18.84 & 244.91 & 8.07 & 191.91 \\
Jasmine & 2984 & 207 & \textbf{248.60} & \text{1136.76} & -- & -- & 6700.77 & 3744.66 & 354.05 & \textbf{751.01} \\
Madeline & 3140 & 76 & \textbf{4.29} & \textbf{242.38} & -- & -- & 190.28 & \text{510.69} & -- & -- \\
Madelon & 2000 & 73 & \textbf{11.09} & 1751.20 & -- & -- & 130.92 & \textbf{436.17} & -- & -- \\
Magic & 19020 & 80 & \textbf{24.47} & \textbf{378.42} & -- & -- & 313.07 & 941.38 & 216.06 & 534.37 \\
Magic & 19020 & 167 & \textbf{453.49} & \textbf{682.59} & -- & -- & 9165.80 & 6907.26 & 2277.17 & \text{2181.43} \\
Monk2 & 601 & 17 & \textbf{0.16} & 207.18 & -- & -- & 0.14 & \textbf{133.68} & 0.16 & 129.73 \\
Mushroom & 8124 & 13 & \textbf{0.00} & \textbf{127.17} & 0.36 & 172.73 & 0.19 & 144.43 & 1.47 & 136.66 \\
News & 39644 & 196 & \textbf{1432.08} & \textbf{3948.98} & -- & -- & 162757.59 & 47046.81 & 5987.42 & 11158.40 \\
Phishing & 11055 & 44 & \textbf{0.15} & \textbf{138.13} & -- & -- & 18.30 & 216.72 & 13.78 & \text{201.65} \\
Poker & 1025010 & 40 & \textbf{450.94} & \textbf{989.50} & -- & -- & 8472.44 & 14107.55 & 61478.45 & \text{112157.77} \\
Shopping & 12330 & 112 & \textbf{9.37} & \textbf{161.82} & -- & -- & 1137.70 & 1835.37 & 226.42 & 769.48 \\
Shopping & 12330 & 243 & \textbf{121.89} & \textbf{303.47} & -- & -- & 43659.86 & 15684.98 & 2416.49 & 2734.83 \\
Spambase & 4601 & 24 & \textbf{0.03} & \textbf{128.27} & 179.34 & 15334.18 & 1.69 & 155.17 & 4338.20 & 92571.09 \\
Spambase & 4601 & 67 & \textbf{1.00} & \textbf{136.66} & -- & -- & 70.93 & 334.90 & 26.00 & \text{223.70} \\
Student & 649 & 48 & \textbf{0.12} & \textbf{127.36} & 24661.38 & 44943.45 & 5.54 & 174.04 & 8.23 & 144.50 \\
Taxi & 1224158 & 27 & \textbf{1.65} & \textbf{893.99} & -- & -- & 1672.81 & 3575.89 & 146.76 & 2709.54 \\
TicTacToe & 958 & 26 & \textbf{0.08} & \textbf{133.65} & 208.98 & 9563.60 & 1.27 & \text{144.92} & 214.33 & 8655.65 \\
Wine & 6497 & 64 & \textbf{0.61} & \textbf{142.35} & -- & -- & 78.07 & 428.43 & 22.27 & 348.78 \\

\bottomrule
\end{tabular}
\end{table*}

\FloatBarrier

\subsection{Rashomon Set Recall}
\label{appendix:licketyrecall}
We now report approximation quality over a broader range of parameter settings than in \autoref{tab:rashomon-recall}. In particular, we vary the sparsity parameter from $\lambda=0.0025$ to $\lambda=0.02$ and evaluate recall under two regimes: Multiplicative, which considers trees within a $1+\varepsilon_{\textrm{mult}}$ factor of the optimal objective, and Absolute, which considers trees within $1+\varepsilon_{\textrm{mult}}$ factor of the proxy algorithm. Because trees are enumerated in sorted order, the multiplicative set can always be recovered and is a subset (or equal to) the full set returned by \algnamenospace. Regardless, if recall remains high on the larger absolute set, these additional trees may be useful for downstream applications.

\autoref{tab:recall-abs-vs-mult-meanpmstd} reports Rashomon set recall (mean $\pm$ standard deviation over bootstraps). Across most datasets and $\lambda$ values, \algname achieves essentially perfect recall (often $1.000 \pm 0$) under both variants, indicating close agreement with ground-truth enumeration. In cases where multiplicative recall falls below 1, absolute recall differs only slightly and may be marginally higher or lower; these differences are not substantial. This suggests that trees farther from optimal are not necessarily harder to recover, even though their higher objectives leave less slack before a completion exceeds the pruning budget. Additionally, the datasets with a recall below 1 tend to be smaller datasets, which can be solved exactly with little computational cost. 

In \autoref{appendix:greedyrecall}, we report recall for using a greedy proxy algorithm, which has reasonable, but inferior, recall compared to using modified LicketySPLIT as the proxy algorithm.

In the main body of the paper, we calculate recall relative to two algorithms: \algname with an optimal proxy and an exact subproblem representation, which corresponds to full recall of the Rashomon set, and SORTeD \cite{xin2022exploring}. For each method, we compute the histogram of tree objectives within the Rashomon set. Then, for each objective value, we take the maximum count reported by either method. We use these maximum counts to estimate the ground-truth number of optimal trees. This approach is more conservative, yielding lower recall for our method, than using either \algname with an optimal proxy alone or SORTeD alone as the ground truth.
We use this procedure because, in some cases, the two existing exact solvers (TreeFARMS \citep{xin2022exploring} and SORTeD \citep{arslan2025sorted}) disagree even when run on identical datasets with identical parameter settings. Additionally, TreeFARMS does not always finish within our 90-hour time limit and 200GB memory limit.

We believe the missed trees in SORTeD are caused solely by column deduplication in SORTeD. For example, if
\[
\texttt{years\_of\_education} \geq 16
\qquad\text{and}\qquad
\texttt{income} \geq 70000
\]

induce identical bitvectors (this can happen with quantile binarization or Threshold Guessing \citep{gosdt_guesses}), SORTeD arbitrarily chooses one binary column to keep, even though the two splits represent different features and should both be included in the Rashomon set (their inclusion can affect downstream analyses such as variable-importance conclusions). We have not observed any cases where \algname with optimal parameters finds fewer trees than SORTeD.

Note that the occurrence of this kind of collision is not indicative of a flaw in binarization when we are working with Rashomon sets rather than trying to find a single optimum. Consider, for example, the exhaustive binarization needed to evaluate the full Rashomon set for continuous features: each individual feature becomes, in the worst case, $n$ binary features. If any two features are sufficiently correlated, it is quite likely that there exist thresholds for these two features with the same support. Even for fully decorrelated features, each continuous feature will have a split with support size one (actually two disjoint ones: where all but one sample is below a threshold, or all but one is above), and now we have the birthday paradox with $n$ possibilities and $k$ assignments.

Using the same birthday paradox approximation as in Section \ref{sec:subproblem-representation}, we have a probability of  $\frac{k(k-1)}{2n}$ of a collision; for 20 continuous features and 5000 samples, that gives a 7.6\% chance of at least one collision among just this subset of features with support 1. When the collision occurs, every instance of the preserved split can be replaced with the removed split while maintaining Rashomon membership, leading to a substantial number of missed trees.

\begin{table*}[t]
\centering
\caption{Recall comparison between the Absolute-$\varepsilon$ and Multiplicative-$\varepsilon$ (Mult) Rashomon sets across $\lambda \in \{0.02, 0.01, 0.005, 0.0025\}$ (depth $=5$, $\varepsilon=0.03$). Values are mean $\pm$ std over 10 bootstraps (rounded to 3 decimals for means and 2 decimals for std). If some but not all bootstraps run out of memory, the mean and std reported are for those runs that finished. If a dataset has more than one binarization, the smaller one is displayed in this table.}
\label{tab:recall-abs-vs-mult-meanpmstd}

\footnotesize
\setlength{\tabcolsep}{3pt}
\renewcommand{\arraystretch}{0.92}

\begin{tabular}{@{}l rr rr rr rr@{}}
\toprule
 & \multicolumn{2}{c}{$\lambda=0.02$} & \multicolumn{2}{c}{$\lambda=0.01$} & \multicolumn{2}{c}{$\lambda=0.005$} & \multicolumn{2}{c}{$\lambda=0.0025$} \\
\cmidrule(lr){2-3}\cmidrule(lr){4-5}\cmidrule(lr){6-7}\cmidrule(lr){8-9}
Dataset & Abs & Mult & Abs & Mult & Abs & Mult & Abs & Mult \\
\midrule
Adult-14        & $1.000\!\pm\!0.00$ & $1.000\!\pm\!0.00$ & $1.000\!\pm\!0.00$ & $1.000\!\pm\!0.00$ & $1.000\!\pm\!0.00$ & $1.000\!\pm\!0.00$ & $1.000\!\pm\!0.00$ & $1.000\!\pm\!0.00$ \\
Aging-57        & $1.000\!\pm\!0.00$ & $1.000\!\pm\!0.00$ & $1.000\!\pm\!0.00$ & $1.000\!\pm\!0.00$ & $1.000\!\pm\!0.00$ & $1.000\!\pm\!0.00$ & $0.998\!\pm\!0.00$ & $0.999\!\pm\!0.00$ \\
Bank-97         & $1.000\!\pm\!0.00$ & $1.000\!\pm\!0.00$ & $1.000\!\pm\!0.00$ & $1.000\!\pm\!0.00$ & $1.000\!\pm\!0.00$ & $1.000\!\pm\!0.00$ & $1.000\!\pm\!0.00$ & $1.000\!\pm\!0.00$ \\
Bike-43         & $1.000\!\pm\!0.00$ & $1.000\!\pm\!0.00$ & $1.000\!\pm\!0.00$ & $1.000\!\pm\!0.00$ & $0.979\!\pm\!0.03$ & $0.978\!\pm\!0.03$ & $0.994\!\pm\!0.01$ & $0.993\!\pm\!0.01$ \\
Christine-80    & $0.993\!\pm\!0.01$ & $0.993\!\pm\!0.01$ & $0.992\!\pm\!0.01$ & $0.994\!\pm\!0.01$ & $0.997\!\pm\!0.00$ & $0.998\!\pm\!0.01$ & $1.000\!\pm\!0.00$ & $1.000\!\pm\!0.00$ \\
Churn-81        & $1.000\!\pm\!0.00$ & $1.000\!\pm\!0.00$ & $1.000\!\pm\!0.00$ & $1.000\!\pm\!0.00$ & $0.977\!\pm\!0.03$ & $0.979\!\pm\!0.03$ & $0.991\!\pm\!0.02$ & $0.989\!\pm\!0.03$ \\
Compas-44       & $1.000\!\pm\!0.00$ & $1.000\!\pm\!0.00$ & $0.999\!\pm\!0.00$ & $0.999\!\pm\!0.00$ & $1.000\!\pm\!0.00$ & $1.000\!\pm\!0.00$ & $1.000\!\pm\!0.00$ & $1.000\!\pm\!0.00$ \\
Covertype-45    & $1.000\!\pm\!0.00$ & $1.000\!\pm\!0.00$ & $1.000\!\pm\!0.00$ & $1.000\!\pm\!0.00$ & $1.000\!\pm\!0.00$ & $1.000\!\pm\!0.00$ & $1.000\!\pm\!0.00$ & $1.000\!\pm\!0.00$ \\
Diabetes-33     & $1.000\!\pm\!0.00$ & $1.000\!\pm\!0.00$ & $1.000\!\pm\!0.00$ & $1.000\!\pm\!0.00$ & $1.000\!\pm\!0.00$ & $1.000\!\pm\!0.00$ & $1.000\!\pm\!0.00$ & $1.000\!\pm\!0.00$ \\
Droid-84        & $1.000\!\pm\!0.00$ & $1.000\!\pm\!0.00$ & $1.000\!\pm\!0.00$ & $1.000\!\pm\!0.00$ & $0.975\!\pm\!0.03$ & $1.000\!\pm\!0.00$ & $0.987\!\pm\!0.01$ & $0.993\!\pm\!0.01$ \\
Electricity-94  & $1.000\!\pm\!0.00$ & $1.000\!\pm\!0.00$ & $0.921\!\pm\!0.04$ & $0.963\!\pm\!0.06$ & $0.982\!\pm\!0.03$ & $0.980\!\pm\!0.03$ & $0.950\!\pm\!0.00$ & $0.987\!\pm\!0.02$ \\
Heart-42        & $0.995\!\pm\!0.02$ & $1.000\!\pm\!0.00$ & $0.946\!\pm\!0.13$ & $0.917\!\pm\!0.22$ & $0.948\!\pm\!0.14$ & $0.952\!\pm\!0.13$ & $0.948\!\pm\!0.14$ & $0.952\!\pm\!0.13$ \\
Helena-84       & $1.000\!\pm\!0.00$ & $1.000\!\pm\!0.00$ & $1.000\!\pm\!0.00$ & $1.000\!\pm\!0.00$ & $1.000\!\pm\!0.00$ & $1.000\!\pm\!0.00$ & $1.000\!\pm\!0.00$ & $1.000\!\pm\!0.00$ \\
Heloc-65        & $1.000\!\pm\!0.00$ & $1.000\!\pm\!0.00$ & $1.000\!\pm\!0.00$ & $1.000\!\pm\!0.00$ & $1.000\!\pm\!0.00$ & $1.000\!\pm\!0.00$ & $1.000\!\pm\!0.00$ & $1.000\!\pm\!0.00$ \\
IOT-86          & $1.000\!\pm\!0.00$ & $1.000\!\pm\!0.00$ & $1.000\!\pm\!0.00$ & $1.000\!\pm\!0.00$ & $1.000\!\pm\!0.00$ & $1.000\!\pm\!0.00$ & $1.000\!\pm\!0.00$ & $1.000\!\pm\!0.00$ \\
Jasmine-51      & $1.000\!\pm\!0.00$ & $1.000\!\pm\!0.00$ & $0.985\!\pm\!0.02$ & $0.997\!\pm\!0.01$ & $0.994\!\pm\!0.01$ & $0.995\!\pm\!0.01$ & $0.992\!\pm\!0.02$ & $0.992\!\pm\!0.02$ \\
Madeline-76     & $0.991\!\pm\!0.01$ & $1.000\!\pm\!0.00$ & $0.986\!\pm\!0.04$ & $0.997\!\pm\!0.01$ & $0.999\!\pm\!0.00$ & $1.000\!\pm\!0.00$ & $0.866\!\pm\!0.26$ & $0.905\!\pm\!0.23$ \\
Madelon-73      & $0.975\!\pm\!0.01$ & $0.997\!\pm\!0.01$ & $0.996\!\pm\!0.01$ & $0.999\!\pm\!0.00$ & $1.000\!\pm\!0.00$ & $1.000\!\pm\!0.00$ & $0.995\!\pm\!0.01$ & $0.999\!\pm\!0.00$ \\
Magic-80        & $0.983\!\pm\!0.03$ & $0.991\!\pm\!0.02$ & $0.985\!\pm\!0.01$ & $0.991\!\pm\!0.01$ & $0.990\!\pm\!0.01$ & $0.992\!\pm\!0.01$ & $0.999\!\pm\!0.00$ & $0.999\!\pm\!0.00$ \\
Monk2-17        & $1.000\!\pm\!0.00$ & $1.000\!\pm\!0.00$ & $0.901\!\pm\!0.17$ & $0.943\!\pm\!0.15$ & $0.874\!\pm\!0.26$ & $0.914\!\pm\!0.20$ & $0.888\!\pm\!0.13$ & $0.887\!\pm\!0.15$ \\
Mushroom-13     & $1.000\!\pm\!0.00$ & $1.000\!\pm\!0.00$ & $1.000\!\pm\!0.00$ & $1.000\!\pm\!0.00$ & $1.000\!\pm\!0.00$ & $1.000\!\pm\!0.00$ & $1.000\!\pm\!0.00$ & $1.000\!\pm\!0.00$ \\
Phishing-44     & $1.000\!\pm\!0.00$ & $1.000\!\pm\!0.00$ & $1.000\!\pm\!0.00$ & $1.000\!\pm\!0.00$ & $1.000\!\pm\!0.00$ & $1.000\!\pm\!0.00$ & $0.987\!\pm\!0.04$ & $0.987\!\pm\!0.04$ \\
Shopping-112    & $1.000\!\pm\!0.00$ & $1.000\!\pm\!0.00$ & $1.000\!\pm\!0.00$ & $1.000\!\pm\!0.00$ & $1.000\!\pm\!0.00$ & $1.000\!\pm\!0.00$ & $0.996\!\pm\!0.01$ & $0.996\!\pm\!0.01$ \\
Spambase-24     & $0.992\!\pm\!0.02$ & $0.992\!\pm\!0.02$ & $0.993\!\pm\!0.02$ & $0.998\!\pm\!0.01$ & $0.967\!\pm\!0.08$ & $0.963\!\pm\!0.09$ & $0.986\!\pm\!0.03$ & $0.992\!\pm\!0.02$ \\
Student-48      & $1.000\!\pm\!0.00$ & $1.000\!\pm\!0.00$ & $0.985\!\pm\!0.04$ & $0.987\!\pm\!0.04$ & $0.991\!\pm\!0.01$ & $0.992\!\pm\!0.01$ & $0.986\!\pm\!0.04$ & $0.987\!\pm\!0.04$ \\
Taxi-27         & $1.000\!\pm\!0.00$ & $1.000\!\pm\!0.00$ & $1.000\!\pm\!0.00$ & $1.000\!\pm\!0.00$ & $1.000\!\pm\!0.00$ & $1.000\!\pm\!0.00$ & $0.990\!\pm\!0.01$ & $0.991\!\pm\!0.01$ \\
TicTacToe-26    & $1.000\!\pm\!0.00$ & $1.000\!\pm\!0.00$ & $0.967\!\pm\!0.06$ & $0.988\!\pm\!0.03$ & $0.967\!\pm\!0.08$ & $0.965\!\pm\!0.09$ & $0.982\!\pm\!0.06$ & $0.995\!\pm\!0.02$ \\
Wine-64         & $1.000\!\pm\!0.00$ & $1.000\!\pm\!0.00$ & $1.000\!\pm\!0.00$ & $1.000\!\pm\!0.00$ & $1.000\!\pm\!0.00$ & $1.000\!\pm\!0.00$ & $0.998\!\pm\!0.01$ & $0.998\!\pm\!0.00$ \\
\bottomrule
\end{tabular}
\end{table*}

\FloatBarrier

\subsection{What Trees Are Missing?}

We next examine the datasets from \autoref{tab:recall-abs-vs-mult-meanpmstd} for which \algname does not achieve perfect recall on every bootstrap. Our goal is to understand which Rashomon-set trees are missed.

Fix a $(\lambda, d, \varepsilon_{\textrm{mult}})$ configuration. For each bootstrap, consider a tree that belongs to the true Rashomon set (within a $1+\varepsilon_{\textrm{mult}}$ factor of the minimum objective) but is not returned by our approximation. We normalize its objective value by dividing by the optimal objective on that bootstrap, mapping it into the interval $[1, 1+\varepsilon_{\textrm{mult}}]$. We then aggregate these normalized objectives across all bootstraps and analyze the distribution of missingness on the interval $[1, 1+\varepsilon_{\textrm{mult}}]$.

We summarize the distribution of missing objectives using three statistics. The \emph{top third} reports the proportion of missing trees whose normalized objective lies in $[1, 1+\varepsilon_{\textrm{mult}}/3)$, while the \emph{bottom third} reports the proportion in $[1+2\varepsilon_{\textrm{mult}}/3, 1+\varepsilon_{\textrm{mult}}]$. In addition, we report the proportion of missing trees that occur at the \emph{very end}. Unlike the quantile-based summaries, this corresponds to a single objective value: the largest objective observed in the full Rashomon set on that bootstrap. This statistic captures whether the trees we miss lie (essentially) right at the Rashomon boundary, where there is no slack to absorb proxy error.

\autoref{tab:missing-mass-by-quantile-all} aggregates these statistics across all datasets and values of $\lambda$ (with depth fixed to $d=5$ and $\varepsilon_{\textrm{mult}}=0.03$). Across all configurations, \algname rarely misses trees in the best third of the objective interval, consistent with the fact that it can be used as an optimal tree finder (see \autoref{section:optimal-tree-finder}). Instead, the missed trees are overwhelmingly concentrated near the Rashomon boundary, and in many cases a substantial fraction occur exactly at the worst possible objective. This suggests that the observed recall losses are driven primarily by trees whose objectives are very close to the Rashomon threshold, motivating the use of slack in \algnamenospace’s budget to improve recall for a slightly smaller budget (see \autoref{section:slack}). However, this is typically unnecessary, as \algname frequently recovers all or nearly all of the Rashomon set, yielding a strong approximation anyway.
\clearpage
\thispagestyle{empty}

\begin{table*}[p]
\centering
\scriptsize
\setlength{\tabcolsep}{4.5pt}
\renewcommand{\arraystretch}{1.03}

\vspace{-0.0em}

\caption{Distribution of the \emph{missed} trees for various $\lambda$ configurations ($d=5$, $\varepsilon_{\textrm{mult}}=0.03$). The number line illustrates the regions used in the table. For each missed tree, we normalize its objective by the optimal objective, so values lie in $[1,1+\varepsilon_{\textrm{mult}}]$. The entries report \emph{where the missed trees occur}, not the miss rate within each region: for example, $0.90$ in Bottom Third means that $90\%$ of the missed trees lie in the bottom third, not that $90\%$ of bottom-third trees are missed. Top Third denotes $[1,1+\varepsilon_{\textrm{mult}}/3)$, Bottom Third denotes $[1+2\varepsilon_{\textrm{mult}}/3,1+\varepsilon_{\textrm{mult}}]$, and Very End denotes missed trees at the largest objective in the Rashomon set. See Appendix \ref{appendix:licketyrecall} for overall approximation quality; very few trees are missed.}
\label{tab:missing-mass-by-quantile-all}

\vspace{-0.6em}

\begin{tabular}{llrrr}
\toprule
$\lambda$ & Dataset & Top Third & Bottom Third & Very End \\
\midrule
\multirow{23}{*}{0.0025}
 & Aging-57        & 0.0000 & 1.0000 & 0.9780 \\
 & Bank-97         & 0.0000 & 1.0000 & 0.1316 \\
 & Bike-43         & 0.0000 & 0.9882 & 0.1188 \\
 & Christine-80    & 0.0000 & 0.9997 & 0.3870 \\
 & Churn-81        & 0.0000 & 0.9514 & 0.3918 \\
 & Compas-44       & 0.0000 & 0.9992 & 0.2661 \\
 & Covertype-45    & 0.0000 & 0.9996 & 0.0050 \\
 & Droid-84        & 0.0000 & 0.9333 & 0.0000 \\
 & Electricity-94  & 0.0000 & 0.9949 & 0.0453 \\
 & Heart-42        & 0.0000 & 1.0000 & 1.0000 \\
 & Heloc-65        & 0.0000 & 0.9997 & 0.6642 \\
 & Jasmine-51      & 0.0000 & 0.9937 & 0.5966 \\
 & Madeline-76     & 0.0000 & 0.9944 & 0.5591 \\
 & Madelon-73      & 0.0000 & 0.9940 & 0.6998 \\
 & Magic-80        & 0.0000 & 0.9951 & 0.1204 \\
 & Monk2-17        & 0.0004 & 0.9843 & 0.8440 \\
 & Phishing-44     & 0.0000 & 0.8452 & 0.0476 \\
 & Shopping-112    & 0.0000 & 0.9399 & 0.1796 \\
 & Spambase-24     & 0.0000 & 1.0000 & 0.4434 \\
 & Student-48      & 0.0000 & 0.9993 & 0.9713 \\
 & Taxi-27         & 0.0000 & 1.0000 & 0.0625 \\
 & Tictactoe-26    & 0.0000 & 0.9923 & 0.8454 \\
 & Wine-64         & 0.0000 & 0.9929 & 0.3229 \\
\midrule
\multirow{17}{*}{0.005}
 & Bike-43         & 0.0000 & 0.9581 & 0.1321 \\
 & Christine-80    & 0.0000 & 0.9980 & 0.2632 \\
 & Churn-81        & 0.0048 & 0.9286 & 0.3360 \\
 & Compas-44       & 0.0000 & 1.0000 & 0.1123 \\
 & Covertype-45    & 0.0000 & 1.0000 & 0.0000 \\
 & Electricity-94  & 0.0023 & 0.9698 & 0.0490 \\
 & Heart-42        & 0.0000 & 1.0000 & 1.0000 \\
 & Heloc-65        & 0.0000 & 0.9903 & 0.3829 \\
 & Jasmine-51      & 0.0000 & 0.9767 & 0.3878 \\
 & Madeline-76     & 0.0000 & 1.0000 & 0.4695 \\
 & Madelon-73      & 0.0000 & 0.9899 & 0.5539 \\
 & Magic-80        & 0.0003 & 0.9773 & 0.0949 \\
 & Monk2-17        & 0.0000 & 0.9959 & 0.7115 \\
 & Spambase-24     & 0.0000 & 0.9091 & 0.5152 \\
 & Student-48      & 0.0000 & 0.9881 & 0.9486 \\
 & Tictactoe-26    & 0.0313 & 0.8504 & 0.6104 \\
 & Wine-64         & 0.0000 & 1.0000 & 0.6364 \\
\midrule
\multirow{13}{*}{0.01}
 & Christine-80    & 0.0000 & 0.9846 & 0.1919 \\
 & Compas-44       & 0.0000 & 1.0000 & 0.3636 \\
 & Electricity-94  & 0.0113 & 0.9016 & 0.0160 \\
 & Heart-42        & 0.0000 & 0.9925 & 0.9925 \\
 & Heloc-65        & 0.0000 & 1.0000 & 1.0000 \\
 & Jasmine-51      & 0.0000 & 1.0000 & 0.0000 \\
 & Madeline-76     & 0.0000 & 0.9952 & 0.3692 \\
 & Madelon-73      & 0.0000 & 0.9558 & 0.4575 \\
 & Magic-80        & 0.0000 & 0.9880 & 0.0901 \\
 & Monk2-17        & 0.0000 & 0.9981 & 0.5102 \\
 & Spambase-24     & 0.0000 & 0.0000 & 0.0000 \\
 & Student-48      & 0.0000 & 1.0000 & 1.0000 \\
 & Tictactoe-26    & 0.0000 & 1.0000 & 0.7812 \\
\midrule
\multirow{4}{*}{0.02}
 & Christine-80    & 0.0000 & 0.8000 & 0.2000 \\
 & Madelon-73      & 0.0000 & 1.0000 & 0.1000 \\
 & Magic-80        & 0.0000 & 0.9712 & 0.0481 \\
 & Spambase-24     & 0.0000 & 0.6000 & 0.0000 \\
\bottomrule
\end{tabular}
\end{table*}

\clearpage
\FloatBarrier
\subsection{Optimal Tree Finder}
\label{section:optimal-tree-finder}

Across all 3 sparsity settings ($\lambda \in \{0.02, 0.01, 0.005\}$), \algname almost always achieves the same minimum objective as STreeD (and GOSDT when available) while being typically 2-3 orders of magnitude faster. In the rare failure cases, the returned objective exceeds the optimal value by a small multiplicative factor and by an even smaller additive amount.

\begin{table*}[!b]
\centering
\scriptsize
\setlength{\tabcolsep}{3pt}
\renewcommand{\arraystretch}{1.05}

\caption{Minimum objective values at $\lambda=0.005$, depth $=5$ and $\varepsilon_{\textrm{mult}}\in\{0.0,0.01,0.03\}$, with runtimes in seconds}
\label{tab:lam005-d5-minobj-plus-time}

\begin{tabular}{@{}lrrrr rrr@{}}
\toprule
Dataset & STreeD & \algname ($\varepsilon_{\textrm{mult}}{=}0.0$) & \algname ($\varepsilon_{\textrm{mult}}{=}0.01$) & \algname ($\varepsilon_{\textrm{mult}}{=}0.03$)
& Time (GOSDT) & Time (STreeD) & Time (\algname, $\varepsilon_{\textrm{mult}}{=}0.0$) \\
\midrule

Adult-14 & \textbf{8677} & \textbf{8677} & \textbf{8677} & \textbf{8677} & 1.16 & 2.17 & \textbf{0.03} \\
Adult-209 & \textbf{8677} & \textbf{8677} & \textbf{8677} & \textbf{8677} & -- & 49434.59 & \textbf{43.85} \\
Aging-57 & \textbf{135} & \textbf{135} & \textbf{135} & \textbf{135} & -- & 1.88 & \textbf{0.02} \\
Bank-97 & \textbf{5273} & \textbf{5273} & \textbf{5273} & \textbf{5273} & -- & 3142.87 & \textbf{6.35} \\
Bank-217 & \textbf{5273} & \textbf{5273} & \textbf{5273} & \textbf{5273} & -- & 48896.44 & \textbf{83.77} \\
Bike-43 & \textbf{2977} & \textbf{2977} & \textbf{2977} & \textbf{2977} & 430.16 & 51.90 & \textbf{0.32} \\
Bike-164 & \textbf{2889} & \textbf{2889} & \textbf{2889} & \textbf{2889} & -- & 5333.04 & \textbf{68.07} \\
Chess-50 & \textbf{6321} & \textbf{6321} & \textbf{6321} & \textbf{6321} & 445.12 & 20.32 & \textbf{0.59} \\
Christine-80 & \textbf{1706} & \textbf{1706} & \textbf{1706} & \textbf{1706} & -- & 346.66 & \textbf{2.05} \\
Christine-231 & \textbf{1709} & \textbf{1709} & \textbf{1709} & -- & -- & 16139.66 & \textbf{63.66} \\
Churn-81 & \textbf{580} & \textbf{580} & \textbf{580} & \textbf{580} & 1648.47 & 81.57 & \textbf{0.55} \\
Churn-472 & \textbf{580} & \textbf{580} & \textbf{580} & \textbf{580} & -- & 61895.81 & \textbf{101.92} \\
Compas-44 & \textbf{1724} & \textbf{1724} & \textbf{1724} & \textbf{1724} & 41.20 & 1.34 & \textbf{0.09} \\
Covertype-45 & \textbf{164668} & \textbf{164668} & \textbf{164668} & \textbf{164668} & 1423.34 & 849.66 & \textbf{4.17} \\
Covertype-96 & \textbf{164668} & \textbf{164668} & \textbf{164668} & \textbf{164668} & -- & 19457.18 & \textbf{74.15} \\
Credit-134 & \textbf{5712} & \textbf{5712} & \textbf{5712} & \textbf{5712} & -- & 3665.61 & \textbf{10.54} \\
Credit-225 & \textbf{5712} & \textbf{5712} & \textbf{5712} & \textbf{5712} & -- & 28223.63 & \textbf{68.63} \\
Diabetes-33 & \textbf{36614} & \textbf{36614} & \textbf{36614} & \textbf{36614} & 1565.15 & 608.10 & \textbf{2.00} \\
Diabetes-121 & \textbf{36614} & \textbf{36614} & \textbf{36614} & \textbf{36614} & -- & 23273.75 & \textbf{102.40} \\
Droid-84 & \textbf{2715} & \textbf{2715} & \textbf{2715} & \textbf{2715} & -- & 305.52 & \textbf{1.59} \\
Electricity-94 & \textbf{9406} & 9519 & \textbf{9406} & \textbf{9406} & -- & 536.49 & \textbf{6.35} \\
Electricity-264 & \textbf{9376} & \textbf{9376} & \textbf{9376} & \textbf{9376} & -- & 44936.91 & \textbf{672.79} \\
Heart-42 & \textbf{39} & \textbf{39} & \textbf{39} & \textbf{39} & 301.39 & \textbf{1.20} & 1.32 \\
Helena-84 & \textbf{3246} & \textbf{3246} & \textbf{3246} & \textbf{3246} & -- & 404.41 & \textbf{4.67} \\
Helena-156 & \textbf{3246} & \textbf{3246} & \textbf{3246} & \textbf{3246} & -- & 16671.62 & \textbf{30.47} \\
Heloc-65 & \textbf{770} & \textbf{770} & \textbf{770} & \textbf{770} & 6502.01 & 19.70 & \textbf{0.61} \\
Higgs-84 & -- & \textbf{4159657} & \textbf{4159657} & \textbf{4159657} & -- & -- & \textbf{4466.14} \\
IOT-86 & \textbf{1356} & \textbf{1356} & \textbf{1356} & \textbf{1356} & \textbf{0.27} & 1.06 & 0.74 \\
Jannis-106 & \textbf{17339} & 17351 & \textbf{17339} & \textbf{17339} & -- & 3787.99 & \textbf{46.88} \\
Jannis-247 & \textbf{17311} & \textbf{17311} & \textbf{17311} & -- & -- & 120205.06 & \textbf{14468.23} \\
Jasmine-51 & \textbf{659} & \textbf{659} & \textbf{659} & \textbf{659} & 352.79 & 17.23 & \textbf{0.73} \\
Jasmine-207 & \textbf{659} & \textbf{659} & \textbf{659} & \textbf{659} & -- & 1620.51 & \textbf{97.47} \\
Madeline-76 & \textbf{679} & \textbf{679} & \textbf{679} & \textbf{679} & 5488.74 & 49.20 & \textbf{1.17} \\
Madeline-451 & \textbf{652} & -- & -- & -- & -- & 183383.52 & -- \\
Madelon-73 & \textbf{456} & \textbf{456} & \textbf{456} & \textbf{456} & 5573.99 & 30.73 & \textbf{25.52} \\
Madelon-186 & \textbf{435} & \textbf{435} & -- & -- & -- & \textbf{2374.22} & 6330.09 \\
Magic-80 & \textbf{3903} & \textbf{3903} & \textbf{3903} & \textbf{3903} & -- & 183.60 & \textbf{9.82} \\
Magic-167 & \textbf{3903} & 3915 & 3905 & \textbf{3903} & -- & 4171.23 & \textbf{39.28} \\
Monk2-17 & \textbf{162} & \textbf{162} & \textbf{162} & \textbf{162} & 1.02 & \textbf{0.06} & 0.18 \\
Mushroom-13 & \textbf{253} & \textbf{253} & \textbf{253} & \textbf{253} & 0.18 & 0.06 & \textbf{0.00} \\
News-196 & \textbf{15716} & \textbf{15716} & \textbf{15716} & -- & -- & 72221.13 & \textbf{177.13} \\
Phishing-44 & \textbf{1136} & \textbf{1136} & \textbf{1136} & \textbf{1136} & 199.00 & 7.93 & \textbf{0.16} \\
Poker-40 & \textbf{473585} & \textbf{473585} & \textbf{473585} & \textbf{473585} & -- & 5270.71 & \textbf{61.15} \\
Shopping-112 & \textbf{1456} & \textbf{1456} & \textbf{1456} & \textbf{1456} & -- & 545.95 & \textbf{4.29} \\
Shopping-243 & \textbf{1456} & \textbf{1456} & \textbf{1456} & \textbf{1456} & -- & 12060.33 & \textbf{40.47} \\
Spambase-24 & \textbf{564} & \textbf{564} & \textbf{564} & \textbf{564} & 9.39 & 0.57 & \textbf{0.02} \\
Spambase-67 & \textbf{570} & \textbf{570} & \textbf{570} & \textbf{570} & 1038.04 & 18.88 & \textbf{0.39} \\
Student-48 & \textbf{115} & \textbf{115} & \textbf{115} & \textbf{115} & 255.05 & 1.78 & \textbf{0.07} \\
Taxi-27 & \textbf{100623} & \textbf{100623} & \textbf{100623} & \textbf{100623} & 268.61 & 427.25 & \textbf{1.69} \\
TicTacToe-26 & \textbf{168} & \textbf{168} & \textbf{168} & \textbf{168} & 18.45 & 0.44 & \textbf{0.03} \\
Wine-64 & \textbf{1260} & \textbf{1260} & \textbf{1260} & \textbf{1260} & 1492.83 & 29.50 & \textbf{0.52} \\

\bottomrule
\end{tabular}
\end{table*}

\begin{table*}[!t]
\centering
\scriptsize
\setlength{\tabcolsep}{3pt}
\renewcommand{\arraystretch}{1.05}

\caption{Minimum objective values at $\lambda=0.01$, depth $=5$ and $\varepsilon_{\textrm{mult}}\in\{0.0,0.01,0.03\}$, with runtimes in seconds}
\label{tab:lam01-d5-minobj-plus-time}

\begin{tabular}{@{}lrrrr rrr@{}}
\toprule
Dataset & STreeD & \algname ($\varepsilon_{\textrm{mult}}{=}0.0$) & \algname ($\varepsilon_{\textrm{mult}}{=}0.01$) & \algname ($\varepsilon_{\textrm{mult}}{=}0.03$)
& Time (GOSDT) & Time (STreeD) & Time (\algname, $\varepsilon_{\textrm{mult}}{=}0.0$) \\
\midrule

Adult-14 & \textbf{9653} & \textbf{9653} & \textbf{9653} & \textbf{9653} & 0.31 & 1.06 & \textbf{0.03} \\
Adult-209 & \textbf{9653} & \textbf{9653} & \textbf{9653} & \textbf{9653} & -- & 9611.06 & \textbf{37.43} \\
Aging-57 & \textbf{138} & \textbf{138} & \textbf{138} & \textbf{138} & 56.40 & 2.24 & \textbf{0.02} \\
Bank-97 & \textbf{5741} & \textbf{5741} & \textbf{5741} & \textbf{5741} & -- & 2265.54 & \textbf{2.80} \\
Bank-217 & \textbf{5741} & \textbf{5741} & \textbf{5741} & \textbf{5741} & -- & 33338.40 & \textbf{28.43} \\
Bike-43 & \textbf{3454} & \textbf{3454} & \textbf{3454} & \textbf{3454} & 304.76 & 43.90 & \textbf{0.36} \\
Bike-164 & \textbf{3454} & \textbf{3454} & \textbf{3454} & \textbf{3454} & -- & 5459.65 & \textbf{12.53} \\
Chess-50 & \textbf{7133} & \textbf{7133} & \textbf{7133} & \textbf{7133} & 356.90 & 20.73 & \textbf{0.46} \\
Christine-80 & \textbf{1851} & \textbf{1851} & \textbf{1851} & \textbf{1851} & -- & 305.20 & \textbf{4.06} \\
Christine-231 & \textbf{1860} & \textbf{1860} & \textbf{1860} & \textbf{1860} & -- & 13646.96 & \textbf{80.84} \\
Churn-81 & \textbf{736} & \textbf{736} & \textbf{736} & \textbf{736} & 1005.89 & 77.91 & \textbf{0.69} \\
Churn-472 & \textbf{736} & \textbf{736} & \textbf{736} & \textbf{736} & -- & 48780.84 & \textbf{149.35} \\
Compas-44 & \textbf{1845} & \textbf{1845} & \textbf{1845} & \textbf{1845} & 29.27 & 1.24 & \textbf{0.05} \\
Covertype-45 & \textbf{173383} & \textbf{173383} & \textbf{173383} & \textbf{173383} & 923.09 & 535.43 & \textbf{3.33} \\
Covertype-96 & \textbf{173383} & \textbf{173383} & \textbf{173383} & \textbf{173383} & -- & 16413.62 & \textbf{60.55} \\
Credit-134 & \textbf{6012} & \textbf{6012} & \textbf{6012} & \textbf{6012} & -- & 3483.04 & \textbf{8.42} \\
Credit-225 & \textbf{6012} & \textbf{6012} & \textbf{6012} & \textbf{6012} & -- & 24738.49 & \textbf{37.78} \\
Diabetes-33 & \textbf{37883} & \textbf{37883} & \textbf{37883} & \textbf{37883} & 618.10 & 426.37 & \textbf{1.40} \\
Diabetes-121 & \textbf{37883} & \textbf{37883} & \textbf{37883} & \textbf{37883} & -- & 20655.07 & \textbf{49.53} \\
Droid-84 & \textbf{3285} & \textbf{3285} & \textbf{3285} & \textbf{3285} & -- & 177.58 & \textbf{1.15} \\
Electricity-94 & \textbf{10526} & \textbf{10526} & \textbf{10526} & \textbf{10526} & -- & 524.95 & \textbf{32.23} \\
Electricity-264 & \textbf{10526} & \textbf{10526} & \textbf{10526} & \textbf{10526} & -- & 38287.22 & \textbf{2732.79} \\
Heart-42 & \textbf{62} & \textbf{62} & \textbf{62} & \textbf{62} & 211.61 & 1.14 & \textbf{0.09} \\
Helena-84 & \textbf{4224} & \textbf{4224} & \textbf{4224} & \textbf{4224} & 153.80 & 18.99 & \textbf{1.84} \\
Helena-156 & \textbf{4224} & \textbf{4224} & \textbf{4224} & \textbf{4224} & -- & 1634.69 & \textbf{14.12} \\
Heloc-65 & \textbf{818} & \textbf{818} & \textbf{818} & \textbf{818} & 5534.76 & 17.18 & \textbf{0.61} \\
Higgs-84 & -- & \textbf{4449359} & \textbf{4449359} & \textbf{4449359} & -- & -- & \textbf{2285.68} \\
IOT-86 & \textbf{2586} & \textbf{2586} & \textbf{2586} & \textbf{2586} & \textbf{0.26} & 2.23 & 0.67 \\
Jannis-106 & \textbf{18789} & \textbf{18789} & \textbf{18789} & \textbf{18789} & -- & 3182.88 & \textbf{26.58} \\
Jannis-247 & \textbf{18771} & \textbf{18771} & \textbf{18771} & \textbf{18771} & -- & 97224.37 & \textbf{321.67} \\
Jasmine-51 & \textbf{725} & \textbf{725} & \textbf{725} & \textbf{725} & 257.59 & 6.07 & \textbf{0.45} \\
Jasmine-207 & \textbf{725} & \textbf{725} & \textbf{725} & \textbf{725} & -- & 1289.89 & \textbf{30.33} \\
Madeline-76 & \textbf{889} & \textbf{889} & \textbf{889} & \textbf{889} & 4159.87 & 49.33 & \textbf{4.51} \\
Madeline-451 & \textbf{858} & \textbf{858} & \textbf{858} & \textbf{858} & -- & 174318.25 & \textbf{19950.96} \\
Madelon-73 & \textbf{577} & \textbf{577} & \textbf{577} & \textbf{577} & 4212.66 & 30.28 & 40.14 \\
Madelon-186 & \textbf{560} & \textbf{560} & \textbf{560} & \textbf{560} & -- & 2235.10 & \textbf{1697.94} \\
Magic-80 & \textbf{4498} & 4500 & \textbf{4498} & \textbf{4498} & -- & 155.68 & \textbf{3.21} \\
Magic-167 & \textbf{4490} & \textbf{4490} & \textbf{4490} & \textbf{4490} & -- & 3412.59 & \textbf{28.78} \\
Monk2-17 & \textbf{208} & 212 & 209 & \textbf{208} & 0.81 & 0.06 & \textbf{0.00} \\
Mushroom-13 & \textbf{444} & \textbf{444} & \textbf{444} & \textbf{444} & 0.16 & 0.03 & \textbf{0.00} \\
News-196 & \textbf{16496} & \textbf{16496} & \textbf{16496} & \textbf{16496} & -- & 65250.51 & \textbf{131.71} \\
Phishing-44 & \textbf{1360} & \textbf{1360} & \textbf{1360} & \textbf{1360} & 143.64 & 7.70 & \textbf{0.12} \\
Poker-40 & \textbf{509567} & \textbf{509567} & \textbf{509567} & \textbf{509567} & -- & 5076.40 & \textbf{49.22} \\
Shopping-112 & \textbf{1578} & \textbf{1578} & \textbf{1578} & \textbf{1578} & -- & 349.46 & \textbf{2.85} \\
Shopping-243 & \textbf{1578} & \textbf{1578} & \textbf{1578} & \textbf{1578} & -- & 8196.00 & \textbf{27.35} \\
Spambase-24 & \textbf{702} & \textbf{702} & \textbf{702} & \textbf{702} & 5.18 & 0.44 & \textbf{0.02} \\
Spambase-67 & \textbf{708} & \textbf{708} & \textbf{708} & \textbf{708} & 762.07 & 17.37 & \textbf{0.30} \\
Student-48 & \textbf{125} & \textbf{125} & \textbf{125} & \textbf{125} & 163.02 & 1.45 & \textbf{0.04} \\
Taxi-27 & \textbf{112865} & \textbf{112865} & \textbf{112865} & \textbf{112865} & 5.38 & 140.52 & \textbf{1.41} \\
TicTacToe-26 & \textbf{244} & \textbf{244} & \textbf{244} & \textbf{244} & 12.80 & 0.45 & \textbf{0.36} \\
Wine-64 & \textbf{1326} & \textbf{1326} & \textbf{1326} & \textbf{1326} & 1008.05 & 25.99 & \textbf{0.36} \\

\bottomrule
\end{tabular}
\end{table*}

\begin{table*}[!t]
\centering
\scriptsize
\setlength{\tabcolsep}{3pt}
\renewcommand{\arraystretch}{1.05}

\caption{Minimum objective values at $\lambda=0.02$, depth $=5$ and $\varepsilon_{\textrm{mult}}\in\{0.0,0.01,0.03\}$, with runtimes in seconds}
\label{tab:lam02-d5-minobj-plus-time}

\begin{tabular}{@{}lrrrr rrr@{}}
\toprule
Dataset & STreeD & \algname ($\varepsilon_{\textrm{mult}}{=}0.0$) & \algname ($\varepsilon_{\textrm{mult}}{=}0.01$) & \algname ($\varepsilon_{\textrm{mult}}{=}0.03$)
& Time (GOSDT) & Time (STreeD) & Time (\algname, $\varepsilon_{\textrm{mult}}{=}0.0$) \\
\midrule

Adult-14 & \textbf{11557} & \textbf{11557} & \textbf{11557} & \textbf{11557} & 0.08 & 0.26 & \textbf{0.02} \\
Adult-209 & \textbf{11557} & \textbf{11557} & \textbf{11557} & \textbf{11557} & -- & 32301.16 & \textbf{28.89} \\
Aging-57 & \textbf{145} & \textbf{145} & \textbf{145} & \textbf{145} & 24.88 & 1.64 & \textbf{0.02} \\
Bank-97 & \textbf{6193} & \textbf{6193} & \textbf{6193} & \textbf{6193} & 204.54 & 1324.78 & \textbf{1.92} \\
Bank-217 & \textbf{6193} & \textbf{6193} & \textbf{6193} & \textbf{6193} & -- & 14266.24 & \textbf{19.94} \\
Bike-43 & \textbf{4138} & \textbf{4138} & \textbf{4138} & \textbf{4138} & 140.38 & 28.09 & \textbf{0.20} \\
Bike-164 & \textbf{4138} & \textbf{4138} & \textbf{4138} & \textbf{4138} & -- & 2071.78 & \textbf{6.63} \\
Chess-50 & \textbf{8253} & \textbf{8253} & \textbf{8253} & \textbf{8253} & 251.54 & 18.39 & \textbf{0.31} \\
Christine-80 & \textbf{2053} & \textbf{2053} & \textbf{2053} & \textbf{2053} & 3960.06 & 237.72 & \textbf{4.10} \\
Christine-231 & \textbf{2054} & \textbf{2054} & \textbf{2054} & \textbf{2054} & -- & 27891.23 & \textbf{63.42} \\
Churn-81 & \textbf{807} & \textbf{807} & \textbf{807} & \textbf{807} & 63.67 & 44.96 & \textbf{0.22} \\
Churn-472 & \textbf{807} & \textbf{807} & \textbf{807} & \textbf{807} & -- & 30923.96 & \textbf{34.95} \\
Compas-44 & \textbf{1992} & \textbf{1992} & \textbf{1992} & \textbf{1992} & 3.85 & 0.69 & \textbf{0.04} \\
Covertype-45 & \textbf{190813} & \textbf{190813} & \textbf{190813} & \textbf{190813} & 216.55 & 6.80 & \textbf{2.83} \\
Covertype-96 & \textbf{190813} & \textbf{190813} & \textbf{190813} & \textbf{190813} & -- & 10531.68 & \textbf{50.26} \\
Credit-134 & \textbf{6612} & \textbf{6612} & \textbf{6612} & \textbf{6612} & -- & 5998.63 & \textbf{5.78} \\
Credit-225 & \textbf{6612} & \textbf{6612} & \textbf{6612} & \textbf{6612} & -- & 17247.67 & \textbf{26.36} \\
Diabetes-33 & \textbf{40420} & \textbf{40420} & \textbf{40420} & \textbf{40420} & 3.54 & 29.18 & \textbf{0.79} \\
Diabetes-121 & \textbf{40420} & \textbf{40420} & \textbf{40420} & \textbf{40420} & -- & 40707.22 & \textbf{26.85} \\
Droid-84 & \textbf{4167} & \textbf{4167} & \textbf{4167} & \textbf{4167} & -- & 245.28 & \textbf{0.80} \\
Electricity-94 & \textbf{11769} & \textbf{11769} & \textbf{11769} & \textbf{11769} & -- & 430.66 & \textbf{2.62} \\
Electricity-264 & \textbf{11769} & \textbf{11769} & \textbf{11769} & \textbf{11769} & -- & 29572.30 & \textbf{61.22} \\
Heart-42 & \textbf{77} & \textbf{77} & \textbf{77} & \textbf{77} & 147.28 & 0.88 & \textbf{0.03} \\
Helena-84 & \textbf{5309} & \textbf{5309} & \textbf{5309} & \textbf{5309} & \textbf{0.29} & 0.45 & 0.64 \\
Helena-156 & \textbf{5309} & \textbf{5309} & \textbf{5309} & \textbf{5309} & \textbf{0.82} & 1.00 & 4.78 \\
Heloc-65 & \textbf{875} & \textbf{875} & \textbf{875} & \textbf{875} & 3731.96 & 14.37 & \textbf{0.25} \\
Higgs-84 & -- & \textbf{4763119} & \textbf{4763119} & \textbf{4763119} & -- & -- & \textbf{1073.52} \\
IOT-86 & \textbf{5048} & \textbf{5048} & \textbf{5048} & \textbf{5048} & \textbf{0.27} & 0.97 & 0.63 \\
Jannis-106 & \textbf{20963} & \textbf{20963} & \textbf{20963} & \textbf{20963} & -- & 2502.86 & \textbf{19.77} \\
Jannis-247 & \textbf{20986} & \textbf{20986} & \textbf{20986} & \textbf{20986} & -- & 71091.52 & \textbf{275.90} \\
Jasmine-51 & \textbf{828} & \textbf{828} & \textbf{828} & \textbf{828} & 151.62 & 5.07 & \textbf{0.09} \\
Jasmine-207 & \textbf{828} & \textbf{828} & \textbf{828} & \textbf{828} & -- & 1192.10 & \textbf{3.43} \\
Madeline-76 & \textbf{1110} & \textbf{1110} & \textbf{1110} & \textbf{1110} & 2783.98 & 110.12 & \textbf{0.65} \\
Madeline-451 & \textbf{1104} & \textbf{1104} & \textbf{1104} & \textbf{1104} & -- & 159124.31 & \textbf{405.31} \\
Madelon-73 & \textbf{734} & \textbf{734} & \textbf{734} & \textbf{734} & 2402.67 & 28.34 & \textbf{7.01} \\
Madelon-186 & \textbf{730} & \textbf{730} & \textbf{730} & \textbf{730} & -- & 2258.98 & \textbf{214.28} \\
Magic-80 & \textbf{5226} & \textbf{5226} & \textbf{5226} & \textbf{5226} & 2206.13 & 123.31 & \textbf{4.69} \\
Magic-167 & \textbf{5225} & \textbf{5225} & \textbf{5225} & \textbf{5225} & -- & 2341.04 & \textbf{25.46} \\
Monk2-17 & \textbf{218} & \textbf{218} & \textbf{218} & \textbf{218} & 0.55 & 0.05 & \textbf{0.00} \\
Mushroom-13 & \textbf{768} & \textbf{768} & \textbf{768} & \textbf{768} & 0.12 & 0.02 & \textbf{0.00} \\
News-196 & \textbf{17675} & \textbf{17675} & \textbf{17675} & \textbf{17675} & -- & 47814.68 & \textbf{75.18} \\
Phishing-44 & \textbf{1670} & \textbf{1670} & \textbf{1670} & \textbf{1670} & 1.71 & 5.76 & \textbf{0.06} \\
Poker-40 & \textbf{531808} & \textbf{531808} & \textbf{531808} & \textbf{531808} & -- & 4081.42 & \textbf{13.91} \\
Shopping-112 & \textbf{1826} & \textbf{1826} & \textbf{1826} & \textbf{1826} & 152.18 & 123.01 & \textbf{1.38} \\
Shopping-243 & \textbf{1826} & \textbf{1826} & \textbf{1826} & \textbf{1826} & -- & 3321.37 & \textbf{13.68} \\
Spambase-24 & \textbf{950} & \textbf{950} & \textbf{950} & \textbf{950} & 1.31 & 0.25 & \textbf{0.01} \\
Spambase-67 & \textbf{945} & \textbf{945} & \textbf{945} & \textbf{945} & 501.69 & 42.46 & \textbf{0.37} \\
Student-48 & \textbf{143} & \textbf{143} & \textbf{143} & \textbf{143} & 84.21 & 1.08 & \textbf{0.02} \\
Taxi-27 & \textbf{137347} & \textbf{137347} & \textbf{137347} & \textbf{137347} & 1.82 & 4.85 & \textbf{1.17} \\
TicTacToe-26 & \textbf{304} & \textbf{304} & \textbf{304} & \textbf{304} & 6.57 & 0.35 & \textbf{0.07} \\
Wine-64 & \textbf{1407} & \textbf{1407} & \textbf{1407} & \textbf{1407} & 532.82 & 19.85 & \textbf{0.25} \\

\bottomrule
\end{tabular}
\end{table*}

\FloatBarrier

\subsection{Other Proxy Algorithms}
\label{appendix:other-proxy}

\subsubsection{Other Proxy Algorithm Timings}
\label{appendix:other-timing}
In \autoref{sec:proxycaching}, we demonstrated how less cache-friendly proxies can hurt runtime and memory usage. Here, we evaluate three proxy regimes for \algname: (i) a greedy tree builder ($\ell{=}0$), (ii) our default modified LicketySPLIT algorithm ($\ell{=}1$), and (iii) our LicketySPLIT generalization ($\ell{=}2$, detailed in \autoref{appendix:licketysplit-modifications}), all of which have cache-friendly properties. Each increase in $\ell$ adds a factor of $\mathcal{O} (kd)$ to the runtime of each proxy call. In practice, this means that each increase in $\ell$ makes \algname an order of magnitude slower, though this is not a universal rule. \autoref{tab:cfg02-003-d5-time-delta-lickety-k}--\autoref{tab:cfg005-001-d5-time-delta-lickety-k} show these observations. For instance, on the Christine, Electricity, Jannis, and Madelon datasets, using a greedy algorithm as the proxy algorithm can lead \algname to run out of memory, as the initial bound was set too loose (see  \autoref{tab:cfg005-001-d5-time-delta-lickety-k}). These cases are a motivating reason for why the modified LicketySPLIT algorithm is our default proxy algorithm: only on Madelon is it ever slower than using the $\ell=2$ generalization, which is a very challenging dataset that pushes beyond typical real-world uses. Additionally, it is generally expected that increasing $\ell$ will correspond to better recall for the set of trees within $\varepsilon_{\textrm{mult}}$ of optimal. While this does not always have to hold due to the budget initialization in \algnamenospace, we find that it holds practically (see \autoref{appendix:comparelickety}). In particular, we find that there are some cases where using a greedy proxy can lead to significantly worse recall than using modified LicketySPLIT (which leads to perfect recall the majority of the time). One explanation for this is that choosing the optimal split based on more information is more robust, so the proxy objective at the root better reflects the performance of the proxy objective at other subproblems, so  \autoref{thm:root_gap_recovery}
and other theoretical results are more likely to hold.

\begin{table*}[!t]
\centering
\scriptsize
\setlength{\tabcolsep}{3pt}
\renewcommand{\arraystretch}{1.05}

\caption{Runtime and peak memory usage at $\lambda=0.02$, $\varepsilon_{\textrm{mult}}=0.03$, depth $=5$.
\textit{Peak MB reports peak resident set size (RSS), including imports, dataset loading, and algorithm execution. -- indicates that the method did not finish.}}
\label{tab:cfg02-003-d5-time-delta-lickety-k}

\begin{tabular}{@{}lrr rr rr rr@{}}
\toprule
 &  &  &
\multicolumn{2}{c}{\algname ($\ell{=}0$)} &
\multicolumn{2}{c}{\algname ($\ell{=}1$)} &
\multicolumn{2}{c}{\algname ($\ell{=}2$)} \\
\cmidrule(lr){4-5} \cmidrule(lr){6-7} \cmidrule(lr){8-9}
Dataset & $n$ & $k$
& Time & Peak MB
& Time & Peak MB
& Time & Peak MB \\
\midrule

Adult & 48842 & 14 & \textbf{0.01} & \textbf{144.27} & 0.04 & \textbf{144.27} & 0.11 & \textbf{144.27} \\
Adult & 48842 & 209 & \textbf{7.84} & \textbf{286.95} & 272.53 & 682.15 & 4709.93 & 14651.75 \\
Aging & 714 & 57 & \textbf{0.00} & \textbf{126.92} & 0.02 & \textbf{126.92} & 0.17 & 134.71 \\
Bank & 45211 & 97 & \textbf{0.05} & \textbf{195.74} & 2.43 & \textbf{195.74} & 30.17 & 298.87 \\
Bank & 45211 & 217 & \textbf{0.19} & \textbf{281.88} & 19.79 & \textbf{281.88} & 627.68 & 1476.60 \\
Bike & 17379 & 43 & 1.17 & 149.96 & \textbf{0.78} & \textbf{146.38} & 5.02 & 206.43 \\
Bike & 17379 & 164 & 42.20 & \textbf{332.17} & \textbf{35.40} & 359.94 & 605.43 & 4125.13 \\
Chess & 28056 & 50 & \textbf{0.03} & \textbf{151.67} & 0.33 & \textbf{151.67} & 2.64 & 159.85 \\
Christine & 5418 & 80 & 66.84 & 1417.77 & \textbf{31.28} & \textbf{974.84} & 199.96 & 5516.96 \\
Christine & 5418 & 231 & 2285.95 & 21360.87 & \textbf{944.27} & \textbf{10439.32} & -- & -- \\
Churn & 5000 & 81 & \textbf{0.01} & \textbf{136.17} & 0.22 & \textbf{136.17} & 2.92 & 199.82 \\
Churn & 5000 & 472 & \textbf{0.11} & \textbf{162.34} & 34.84 & 279.41 & 2139.13 & 14412.20 \\
Compas & 4966 & 44 & \textbf{0.01} & \textbf{130.47} & 0.09 & \textbf{130.47} & 0.42 & 137.01 \\
Covertype & 581012 & 45 & \textbf{2.51} & \textbf{579.30} & 13.11 & \textbf{579.30} & 41.02 & \textbf{579.30} \\
Covertype & 581012 & 96 & \textbf{21.69} & \textbf{1301.23} & 358.70 & \textbf{1301.23} & 2760.71 & 1350.85 \\
Credit & 30000 & 134 & \textbf{0.10} & \textbf{189.42} & 5.74 & 190.12 & 142.21 & 764.59 \\
Credit & 30000 & 225 & \textbf{0.25} & \textbf{230.30} & 26.03 & 249.60 & 985.63 & 3665.34 \\
Diabetes & 253680 & 33 & \textbf{0.09} & \textbf{291.69} & 0.79 & \textbf{291.69} & 5.53 & \textbf{291.69} \\
Diabetes & 253680 & 121 & \textbf{0.54} & \textbf{677.67} & 26.59 & \textbf{677.67} & 545.65 & 848.44 \\
Droid & 29332 & 84 & \textbf{0.03} & \textbf{165.04} & 0.79 & \textbf{165.04} & 18.15 & 254.45 \\
Electricity & 38474 & 94 & \textbf{0.28} & \textbf{183.05} & 7.83 & 192.63 & 72.14 & 590.03 \\
Electricity & 38474 & 264 & \textbf{4.22} & \textbf{290.44} & 306.94 & 692.10 & 7059.97 & 16147.89 \\
Heart & 297 & 42 & \textbf{0.01} & \textbf{126.41} & 0.05 & 130.01 & 0.44 & 158.48 \\
Helena & 65196 & 84 & \textbf{0.04} & \textbf{214.39} & 0.64 & \textbf{214.39} & 8.91 & 218.62 \\
Helena & 65196 & 156 & \textbf{0.11} & \textbf{290.96} & 4.74 & \textbf{290.96} & 126.88 & 644.31 \\
Heloc & 2502 & 65 & \textbf{0.23} & \textbf{133.98} & 0.38 & 140.59 & 5.08 & 335.07 \\
Higgs & 11000000 & 84 & \textbf{71.33} & \textbf{21537.79} & 2374.91 & \textbf{21537.79} & 37609.94 & \textbf{21537.79} \\
IOT & 123117 & 86 & \textbf{0.06} & \textbf{302.86} & 0.63 & \textbf{302.86} & 9.43 & \textbf{302.86} \\
Jannis & 57580 & 106 & \textbf{14.44} & \textbf{221.44} & 327.54 & 1458.63 & 2283.14 & 14792.75 \\
Jannis & 57580 & 247 & \textbf{485.68} & \textbf{757.78} & 15352.04 & 42067.00 & -- & -- \\
Jasmine & 2984 & 51 & \textbf{0.02} & \textbf{129.66} & 0.35 & 142.87 & 3.00 & 254.38 \\
Jasmine & 2984 & 207 & \textbf{0.37} & \textbf{141.29} & 22.77 & 451.61 & 460.97 & 8297.02 \\
Madeline & 3140 & 76 & 115.00 & 9451.59 & \textbf{2.22} & \textbf{196.45} & 19.29 & 710.83 \\
Madeline & 3140 & 451 & -- & -- & \textbf{2971.65} & \textbf{29094.20} & -- & -- \\
Madelon & 2000 & 73 & 40.80 & 1640.36 & \text{8.19} & 175.66 & \textbf{8.15} & \textbf{166.89} \\
Madelon & 2000 & 186 & 2448.54 & 53989.45 & \textbf{1420.33} & \textbf{29392.03} & 1967.57 & 35821.65 \\
Magic & 19020 & 80 & \textbf{21.23} & \textbf{299.13} & 31.05 & 446.82 & 115.34 & 1355.47 \\
Magic & 19020 & 167 & 404.96 & \textbf{2118.31} & \textbf{368.48} & 2481.18 & 3190.99 & 29738.90 \\
Monk2 & 601 & 17 & \textbf{0.00} & \textbf{125.96} & 0.00 & \textbf{125.96} & 0.01 & 126.65 \\
Mushroom & 8124 & 13 & 0.00 & \textbf{128.88} & \textbf{0.00} & \textbf{128.88} & 0.00 & \textbf{128.88} \\
News & 39644 & 196 & \textbf{6.49} & \textbf{249.32} & 596.47 & 1480.48 & 16288.08 & 61919.68 \\
Phishing & 11055 & 44 & \textbf{0.01} & \textbf{138.56} & 0.07 & \textbf{138.56} & 0.79 & 146.90 \\
Poker & 1025010 & 40 & \textbf{0.62} & \textbf{952.10} & 13.71 & \textbf{952.10} & 150.82 & \textbf{952.10} \\
Shopping & 12330 & 112 & \textbf{2.51} & \textbf{162.80} & 4.09 & 187.69 & 57.09 & 1001.80 \\
Shopping & 12330 & 243 & \textbf{36.44} & \textbf{308.27} & 58.88 & 470.00 & 1609.78 & 14590.64 \\
Spambase & 4601 & 24 & \textbf{0.01} & \textbf{128.88} & 0.05 & 130.40 & 0.19 & 137.54 \\
Spambase & 4601 & 67 & \textbf{0.36} & \textbf{136.48} & 3.67 & 232.30 & 22.89 & 754.88 \\
Student & 649 & 48 & \textbf{0.00} & \textbf{125.95} & 0.02 & 127.51 & 0.23 & 141.20 \\
Taxi & 1224158 & 27 & \textbf{0.29} & \textbf{894.65} & 1.18 & \textbf{894.65} & 4.72 & \textbf{894.65} \\
TicTacToe & 958 & 26 & \textbf{0.02} & \textbf{128.40} & 0.17 & 145.15 & \text{0.16} & \text{142.17} \\
Wine & 6497 & 64 & \textbf{0.01} & \textbf{135.34} & 0.24 & \textbf{135.34} & 3.18 & 206.77 \\

\bottomrule
\end{tabular}
\end{table*}

\begin{table*}[!t]
\centering
\scriptsize
\setlength{\tabcolsep}{3pt}
\renewcommand{\arraystretch}{1.05}

\caption{Runtime and peak memory usage at $\lambda=0.01$, $\varepsilon_{\textrm{mult}}=0.02$, depth $=5$.
\textit{Peak MB reports peak resident set size (RSS), including imports, dataset loading, and algorithm execution. -- indicates that the method did not finish.}}
\label{tab:cfg01-002-d5-time-delta-lickety-k}

\begin{tabular}{@{}lrr rr rr rr@{}}
\toprule
 &  &  &
\multicolumn{2}{c}{\algname ($\ell{=}0$)} &
\multicolumn{2}{c}{\algname ($\ell{=}1$)} &
\multicolumn{2}{c}{\algname ($\ell{=}2$)} \\
\cmidrule(lr){4-5} \cmidrule(lr){6-7} \cmidrule(lr){8-9}
Dataset & $n$ & $k$
& Time & Peak MB
& Time & Peak MB
& Time & Peak MB \\
\midrule

Adult & 48842 & 14 & 0.04 & \textbf{142.79} & \textbf{0.03} & \textbf{142.79} & 0.11 & \textbf{142.79} \\
Adult & 48842 & 209 & \textbf{68.23} & \textbf{287.65} & 213.43 & 329.20 & 4202.96 & 9429.84 \\
Aging & 714 & 57 & \textbf{0.00} & \textbf{126.82} & 0.02 & \textbf{126.82} & 0.22 & 138.23 \\
Bank & 45211 & 97 & \textbf{0.10} & \textbf{194.56} & 3.69 & \textbf{194.56} & 46.95 & 374.95 \\
Bank & 45211 & 217 & \textbf{0.37} & \textbf{281.51} & 30.26 & \textbf{281.51} & 969.46 & 2103.91 \\
Bike & 17379 & 43 & \textbf{0.41} & \textbf{144.00} & 0.87 & 145.73 & 5.68 & 207.70 \\
Bike & 17379 & 164 & \textbf{13.67} & \textbf{225.51} & 45.83 & 402.91 & 707.02 & 4586.06 \\
Chess & 28056 & 50 & \textbf{0.10} & \textbf{151.67} & 0.84 & \textbf{151.67} & 4.95 & 175.17 \\
Christine & 5418 & 80 & 240.18 & 5638.40 & \textbf{60.96} & \textbf{1854.50} & 218.22 & 5511.67 \\
Christine & 5418 & 231 & 8532.86 & 95097.52 & \textbf{2688.49} & \textbf{36909.23} & -- & -- \\
Churn & 5000 & 81 & \textbf{0.43} & \textbf{137.02} & 1.51 & 160.04 & 14.76 & 413.08 \\
Churn & 5000 & 472 & \textbf{79.49} & \textbf{399.20} & 672.04 & 2673.30 & 22093.47 & 127553.72 \\
Compas & 4966 & 44 & \textbf{0.17} & \textbf{133.18} & 0.28 & 135.31 & 0.91 & 145.99 \\
Covertype & 581012 & 45 & \textbf{6.12} & \textbf{579.65} & 16.00 & \textbf{579.65} & 49.36 & \textbf{579.65} \\
Covertype & 581012 & 96 & \textbf{59.59} & \textbf{1301.31} & 411.17 & \textbf{1301.31} & 2953.70 & 1556.26 \\
Credit & 30000 & 134 & \textbf{0.14} & \textbf{189.78} & 8.49 & 194.00 & 203.41 & 1014.44 \\
Credit & 30000 & 225 & \textbf{0.35} & \textbf{231.85} & 37.61 & 265.08 & 1407.14 & 3913.87 \\
Diabetes & 253680 & 33 & \textbf{0.11} & \textbf{291.29} & 1.41 & \textbf{291.29} & 9.97 & \textbf{291.29} \\
Diabetes & 253680 & 121 & \textbf{1.50} & \textbf{679.43} & 49.56 & \textbf{679.43} & 960.79 & 1039.69 \\
Droid & 29332 & 84 & \textbf{0.05} & \textbf{164.86} & 1.32 & \textbf{164.86} & 23.14 & 285.92 \\
Electricity & 38474 & 94 & \textbf{64.11} & \textbf{635.72} & 167.97 & 853.92 & 206.93 & 1074.71 \\
Electricity & 38474 & 264 & \textbf{5691.44} & \textbf{27635.33} & 15226.18 & 41489.45 & 24916.21 & 69089.77 \\
Heart & 297 & 42 & \textbf{0.02} & \textbf{127.28} & 0.15 & 138.06 & 0.97 & 182.99 \\
Helena & 65196 & 84 & \textbf{1.17} & \textbf{214.65} & 4.09 & \textbf{214.65} & 46.84 & 392.11 \\
Helena & 65196 & 156 & \textbf{12.65} & \textbf{290.91} & 53.02 & 343.79 & 986.02 & 3744.70 \\
Heloc & 2502 & 65 & \textbf{0.05} & \textbf{129.84} & 1.71 & 187.75 & 18.10 & 666.60 \\
Higgs & 11000000 & 84 & \textbf{393.56} & \textbf{21536.63} & 9354.19 & \textbf{21536.63} & 73165.26 & \textbf{21536.63} \\
IOT & 123117 & 86 & \textbf{0.06} & \textbf{305.50} & 0.69 & \textbf{305.50} & 10.90 & \textbf{305.50} \\
Jannis & 57580 & 106 & 310.42 & 1347.28 & \textbf{232.05} & \textbf{1214.11} & 1690.06 & 8275.83 \\
Jannis & 57580 & 247 & 11780.88 & 38579.31 & \textbf{10085.61} & \textbf{35068.73} & -- & -- \\
Jasmine & 2984 & 51 & \textbf{0.24} & \textbf{134.40} & 1.76 & 184.79 & 4.23 & 253.73 \\
Jasmine & 2984 & 207 & \textbf{11.70} & \textbf{268.94} & 142.81 & 2348.67 & 584.55 & 8660.03 \\
Madeline & 3140 & 76 & -- & -- & \textbf{29.26} & \textbf{1063.57} & 35.10 & 1115.99 \\
Madelon & 2000 & 73 & -- & -- & 103.71 & 3816.83 & \textbf{34.16} & \textbf{1026.64} \\
Madelon & 2000 & 186 & -- & -- & 4731.53 & 94685.84 & \textbf{1150.60} & \textbf{21008.90} \\
Magic & 19020 & 80 & \textbf{1.69} & \textbf{155.94} & 34.69 & 467.59 & 157.95 & 1492.90 \\
Magic & 19020 & 167 & \textbf{167.85} & \textbf{1377.71} & 707.41 & 4875.07 & 4224.45 & 32126.71 \\
Monk2 & 601 & 17 & \textbf{0.00} & \textbf{125.15} & 0.02 & 128.17 & 0.02 & 127.72 \\
Mushroom & 8124 & 13 & 0.00 & \textbf{128.55} & \textbf{0.00} & \textbf{128.55} & 0.01 & \textbf{128.55} \\
News & 39644 & 196 & \textbf{52.38} & \textbf{302.44} & 1151.19 & 2658.81 & 25423.37 & 120160.33 \\
Phishing & 11055 & 44 & \textbf{0.01} & \textbf{138.57} & 0.13 & \textbf{138.57} & 1.48 & 159.73 \\
Poker & 1025010 & 40 & \textbf{457.95} & \textbf{990.07} & 770.12 & \textbf{990.07} & 1509.56 & \textbf{990.07} \\
Shopping & 12330 & 112 & \textbf{5.18} & \textbf{175.07} & 5.76 & 194.20 & 87.15 & 1188.32 \\
Shopping & 12330 & 243 & \textbf{76.70} & \textbf{415.36} & 91.16 & 575.52 & 2516.95 & 16797.06 \\
Spambase & 4601 & 24 & 0.05 & 130.68 & \textbf{0.03} & \textbf{128.77} & 0.13 & 133.84 \\
Spambase & 4601 & 67 & \textbf{1.49} & \textbf{174.85} & 1.87 & 182.17 & 13.96 & 556.00 \\
Student & 649 & 48 & \textbf{0.00} & \textbf{126.16} & 0.05 & 129.01 & 0.41 & 151.35 \\
Taxi & 1224158 & 27 & \textbf{0.31} & \textbf{894.16} & 1.42 & \textbf{894.16} & 5.63 & \textbf{894.16} \\
TicTacToe & 958 & 26 & \textbf{0.01} & \textbf{127.37} & 0.57 & 178.93 & 0.20 & 144.43 \\
Wine & 6497 & 64 & \textbf{0.01} & \textbf{135.97} & 0.47 & 140.97 & 5.71 & 258.30 \\

\bottomrule
\end{tabular}
\end{table*}
\begin{table*}[t]
\centering
\scriptsize
\setlength{\tabcolsep}{3.5pt}
\caption{Runtime and peak memory usage at $\lambda=0.005$, $\varepsilon_{\textrm{mult}}=0.01$, depth $=5$ across 3 proxy options.
\textit{Peak MB reports peak resident set size (RSS), including imports, dataset loading, and algorithm execution. -- indicates that the method did not finish.}}
\label{tab:cfg005-001-d5-time-delta-lickety-k}

\begin{tabular}{@{}lrr rr rr rr@{}}
\toprule
 &  &  &
\multicolumn{2}{c}{\algname ($\ell{=}0$)} &
\multicolumn{2}{c}{\algname ($\ell{=}1$)} &
\multicolumn{2}{c}{\algname ($\ell{=}2$)} \\
\cmidrule(lr){4-5} \cmidrule(lr){6-7} \cmidrule(lr){8-9}
Dataset & $n$ & $k$
& Time & Peak MB
& Time & Peak MB
& Time & Peak MB \\
\midrule
Adult & 48842 & 14 & \textbf{0.04} & \textbf{142.95} & \textbf{0.04} & \textbf{142.95} & 0.13 & \textbf{142.95} \\
Adult & 48842 & 209 & \textbf{32.36} & \textbf{287.67} & 103.67 & \textbf{287.67} & 2428.60 & 4861.11 \\
Aging & 714 & 57 & \textbf{0.00} & \textbf{126.96} & 0.02 & \textbf{126.96} & 0.25 & 138.39 \\
Bank & 45211 & 97 & \textbf{12.10} & \textbf{195.26} & 14.84 & \textbf{195.26} & 144.98 & 671.35 \\
Bank & 45211 & 217 & \textbf{224.29} & \textbf{299.06} & 258.36 & 308.94 & 4200.01 & 7761.75 \\
Bike & 17379 & 43 & \textbf{0.02} & \textbf{143.97} & 0.34 & \textbf{143.97} & 3.22 & 184.31 \\
Bike & 17379 & 164 & \textbf{0.22} & \textbf{170.06} & 193.43 & 392.35 & 386.16 & 2333.43 \\
Chess & 28056 & 50 & 1.24 & \textbf{151.82} & \textbf{0.56} & \textbf{151.82} & 4.57 & 171.81 \\
Christine & 5418 & 80 & 2823.78 & 93080.18 & \textbf{9.11} & \textbf{340.74} & 148.76 & 3683.37 \\
Christine & 5418 & 231 & -- & -- & \textbf{740.80} & \textbf{9069.65} & -- & -- \\
Churn & 5000 & 81 & \textbf{1.15} & \textbf{160.91} & 1.56 & 167.24 & 14.48 & 405.19 \\
Churn & 5000 & 472 & \textbf{313.91} & 1614.43 & 373.36 & \textbf{668.72} & 16226.72 & 120081.82 \\
Compas & 4966 & 44 & 0.83 & 171.84 & \textbf{0.14} & \textbf{131.78} & 0.56 & 141.74 \\
Covertype & 581012 & 45 & 203.48 & \textbf{577.12} & \textbf{27.02} & \textbf{577.12} & 73.76 & \textbf{577.12} \\
Covertype & 581012 & 96 & 1983.47 & 1394.22 & \textbf{1175.58} & \textbf{1301.56} & 5246.79 & 2086.85 \\
Credit & 30000 & 134 & \textbf{0.21} & \textbf{187.86} & 14.08 & \textbf{187.86} & 254.82 & 1120.93 \\
Credit & 30000 & 225 & \textbf{0.44} & \textbf{231.32} & 65.09 & \textbf{231.32} & 1715.57 & 4348.59 \\
Diabetes & 253680 & 33 & \textbf{0.15} & \textbf{288.94} & 2.39 & \textbf{288.94} & 14.38 & \textbf{288.94} \\
Diabetes & 253680 & 121 & \textbf{1.35} & \textbf{676.90} & 69.09 & \textbf{676.90} & 1370.85 & 1290.02 \\
Droid & 29332 & 84 & \textbf{0.22} & \textbf{165.44} & 1.83 & \textbf{165.44} & 29.04 & 310.22 \\
Electricity & 38474 & 94 & -- & -- & \textbf{33.11} & \textbf{203.75} & 363.33 & 1528.72 \\
Electricity & 38474 & 264 & -- & -- & \textbf{3936.78} & \textbf{10709.56} & 17162.87 & 39449.11 \\
Heart & 297 & 42 & 41.15 & 17781.65 & 3.80 & 741.07 & \textbf{0.65} & \textbf{164.29} \\
Helena & 65196 & 84 & \textbf{1.46} & \textbf{214.45} & 4.85 & \textbf{214.45} & 63.93 & 427.16 \\
Helena & 65196 & 156 & \textbf{24.05} & \textbf{288.56} & 71.38 & \textbf{288.56} & 1135.65 & 3719.30 \\
Heloc & 2502 & 65 & \textbf{0.13} & \textbf{129.70} & 2.08 & 155.22 & 24.39 & 786.97 \\
Higgs & 11000000 & 84 & \textbf{9941.68} & \textbf{21538.12} & 13787.22 & \textbf{21538.12} & 65312.94 & \textbf{21538.12} \\
IOT & 123117 & 86 & \textbf{0.06} & \textbf{304.72} & 0.78 & \textbf{304.72} & 12.29 & \textbf{304.72} \\
Jannis & 57580 & 106 & -- & -- & \textbf{389.75} & \textbf{1980.20} & 1174.44 & 4765.17 \\
Jannis & 57580 & 247 & -- & -- & \textbf{23348.15} & \textbf{82284.37} & -- & -- \\
Jasmine & 2984 & 51 & 4.15 & 383.18 & \textbf{1.28} & \textbf{148.84} & 5.36 & 269.31 \\
Jasmine & 2984 & 207 & 916.47 & 13211.78 & \textbf{248.60} & \textbf{1136.76} & 710.45 & 10160.48 \\
Madeline & 3140 & 76 & -- & -- & \textbf{4.29} & \textbf{242.38} & 43.02 & 1052.55 \\
Madelon & 2000 & 73 & 452.75 & 77155.76 & \textbf{11.09} & 1751.20 & 22.42 & \textbf{773.55} \\
Magic & 19020 & 80 & \textbf{1.18} & \textbf{150.76} & 24.47 & 378.41 & 138.15 & 1336.18 \\
Magic & 19020 & 167 & \textbf{19.67} & \textbf{192.93} & 453.49 & 682.60 & 3298.74 & 22249.79 \\
Monk2 & 601 & 17 & 0.31 & 289.94 & 0.16 & 207.18 & \textbf{0.03} & \textbf{128.70} \\
Mushroom & 8124 & 13 & 0.00 & \textbf{127.17} & \textbf{0.00} & \textbf{127.17} & 0.01 & \textbf{127.17} \\
News & 39644 & 196 & \textbf{770.88} & \textbf{520.43} & 1432.08 & 3948.98 & 26120.80 & 120072.61 \\
Phishing & 11055 & 44 & \textbf{0.12} & \textbf{138.13} & 0.15 & \textbf{138.13} & 1.64 & 160.26 \\
Poker & 1025010 & 40 & 4874.99 & 2703.79 & \textbf{450.94} & \textbf{989.50} & 1251.19 & \textbf{989.50} \\
Shopping & 12330 & 112 & \textbf{1.17} & \textbf{152.06} & 9.37 & 161.83 & 124.63 & 1402.99 \\
Shopping & 12330 & 243 & \textbf{13.05} & \textbf{171.57} & 121.89 & 303.47 & 2774.69 & 17096.23 \\
Spambase & 4601 & 24 & 0.07 & 134.62 & \textbf{0.03} & \textbf{128.27} & 0.16 & \textbf{134.46} \\
Spambase & 4601 & 67 & 3.93 & 310.89 & \textbf{1.00} & \textbf{136.66} & 14.37 & 554.75 \\
Student & 649 & 48 & \textbf{0.01} & \textbf{124.62} & 0.12 & 127.36 & 0.88 & 175.46 \\
Taxi & 1224158 & 27 & \textbf{0.32} & \textbf{893.99} & 1.65 & \textbf{893.99} & 6.75 & \textbf{893.99} \\
TicTacToe & 958 & 26 & \textbf{0.03} & 134.17 & 0.08 & \textbf{133.65} & 0.27 & 149.93 \\
Wine & 6497 & 64 & 1.61 & \textbf{141.50} & \textbf{0.61} & 142.35 & 7.67 & 285.83 \\
\bottomrule
\end{tabular}
\end{table*}

\FloatBarrier

\subsubsection{Trees found using various proxy algorithms}
\label{appendix:comparelickety}

We display approximation results comparing the same three proxy algorithms for
$\lambda=0.005$, $\varepsilon_{\textrm{mult}}=0.03$, and depth $d=5$. We exclude cases where the Rashomon set size is believed to be small ($<10$ trees).

\begin{figure}[p]
\centering
\begin{tabular}{cc}
\includegraphics[width=0.48\linewidth]{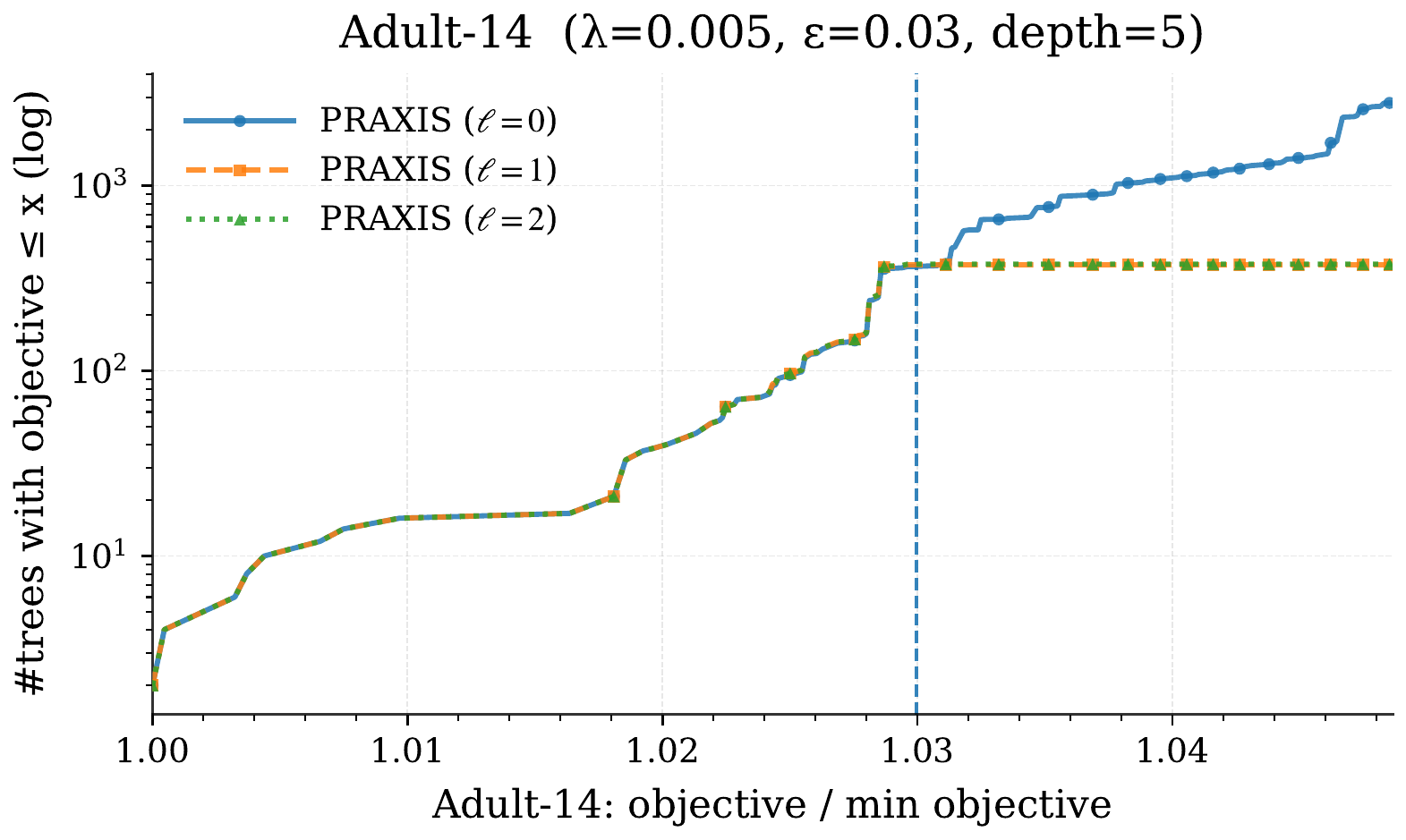} &
\includegraphics[width=0.48\linewidth]{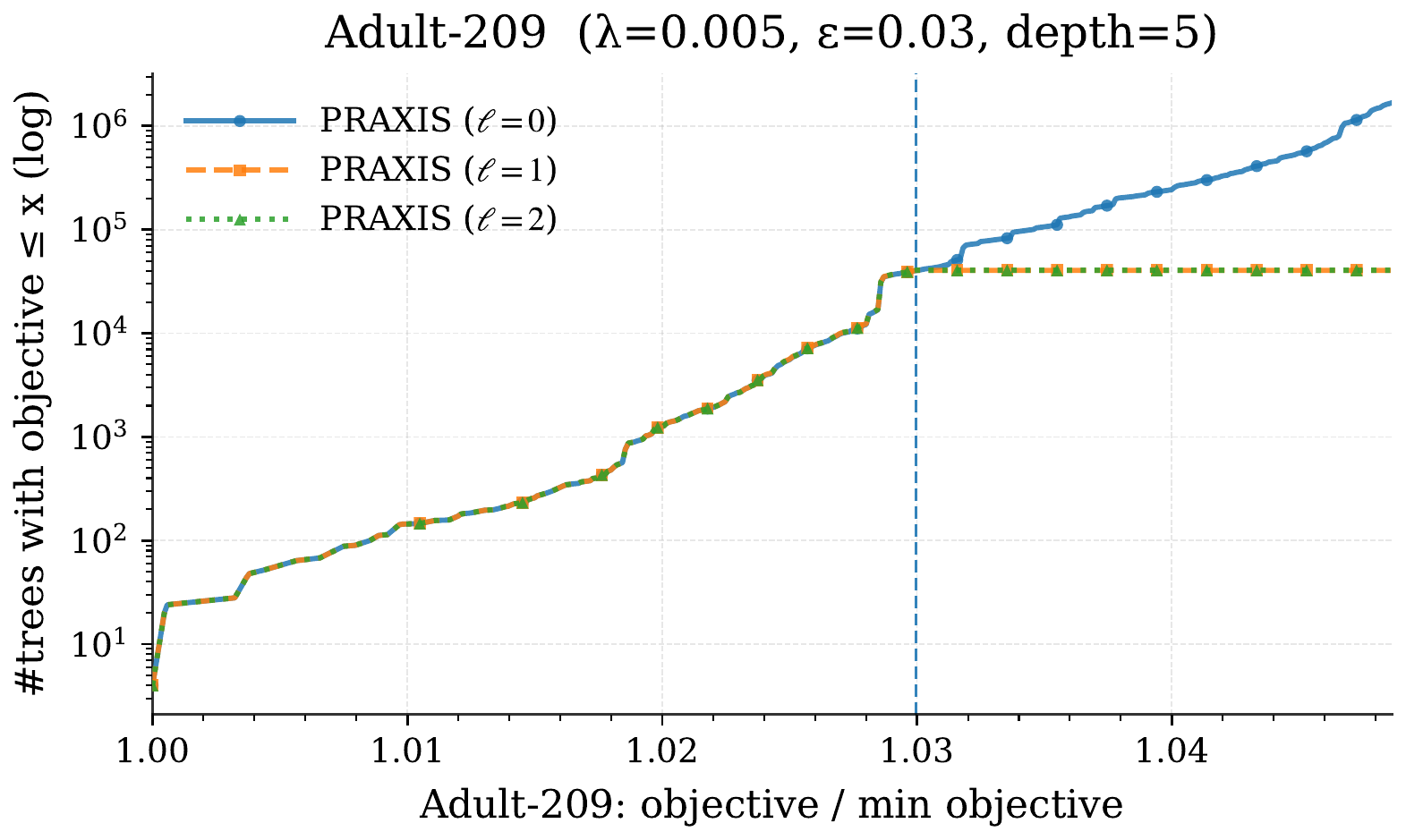} \\[4pt]
\includegraphics[width=0.48\linewidth]{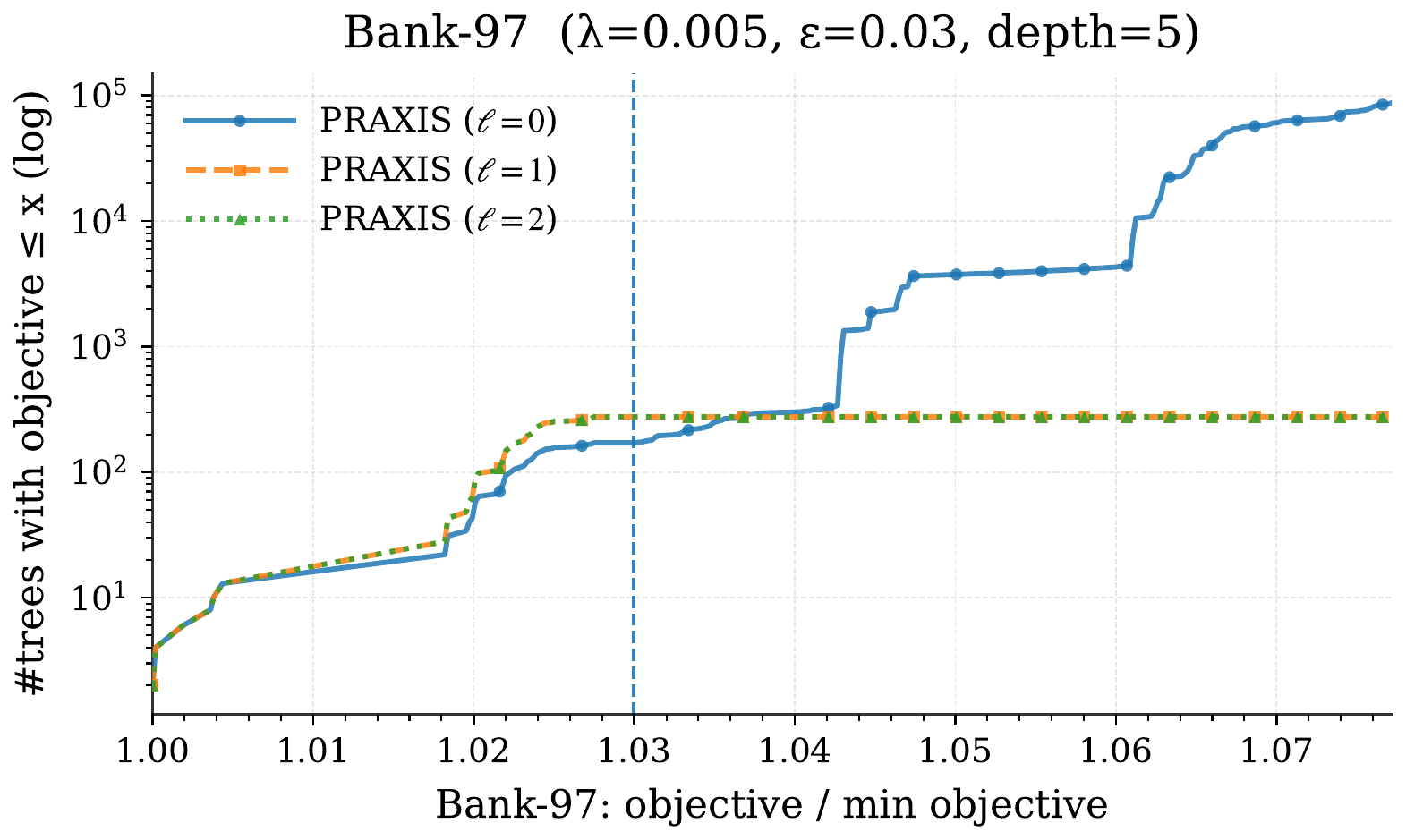} &
\includegraphics[width=0.48\linewidth]{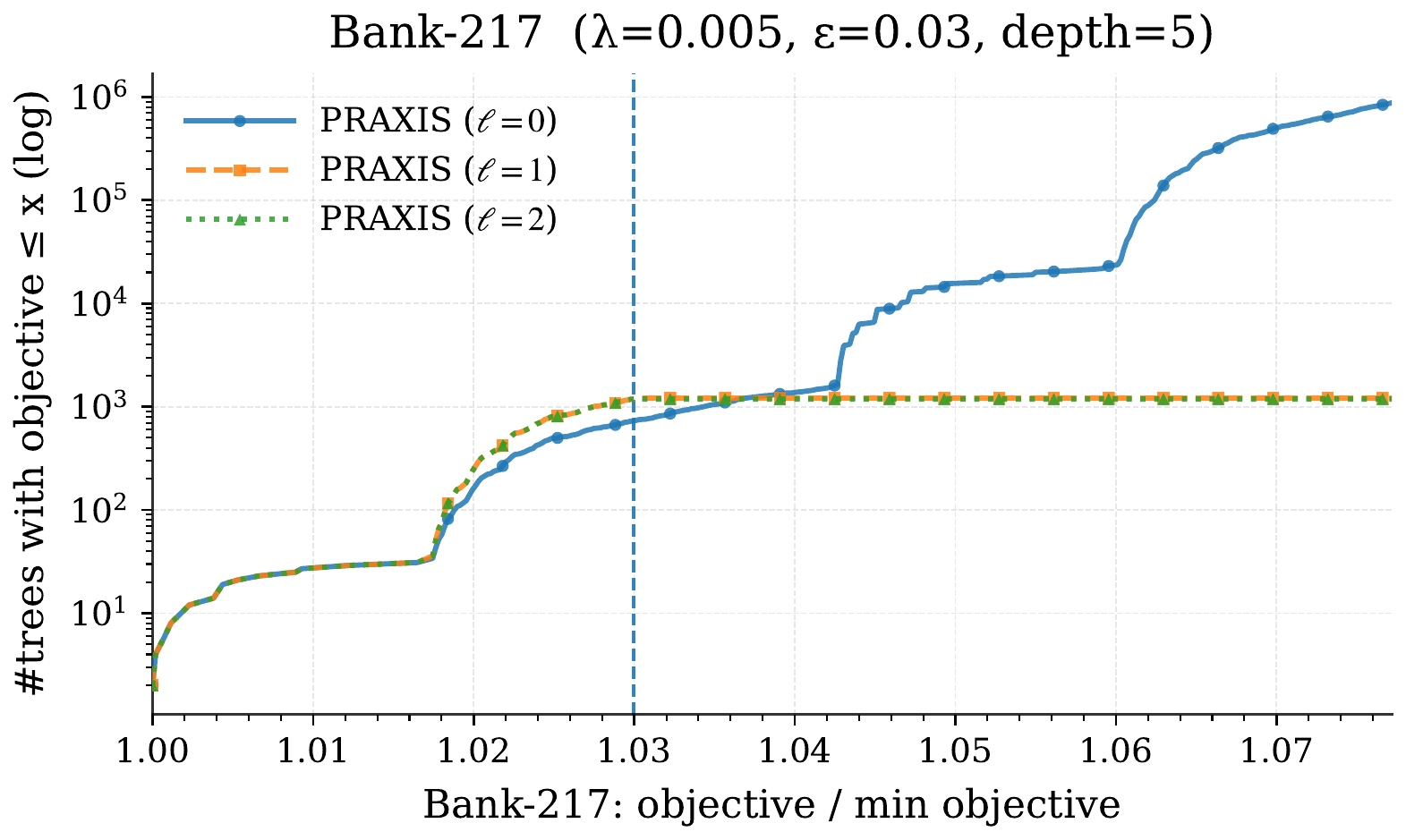} \\[4pt]
\includegraphics[width=0.48\linewidth]{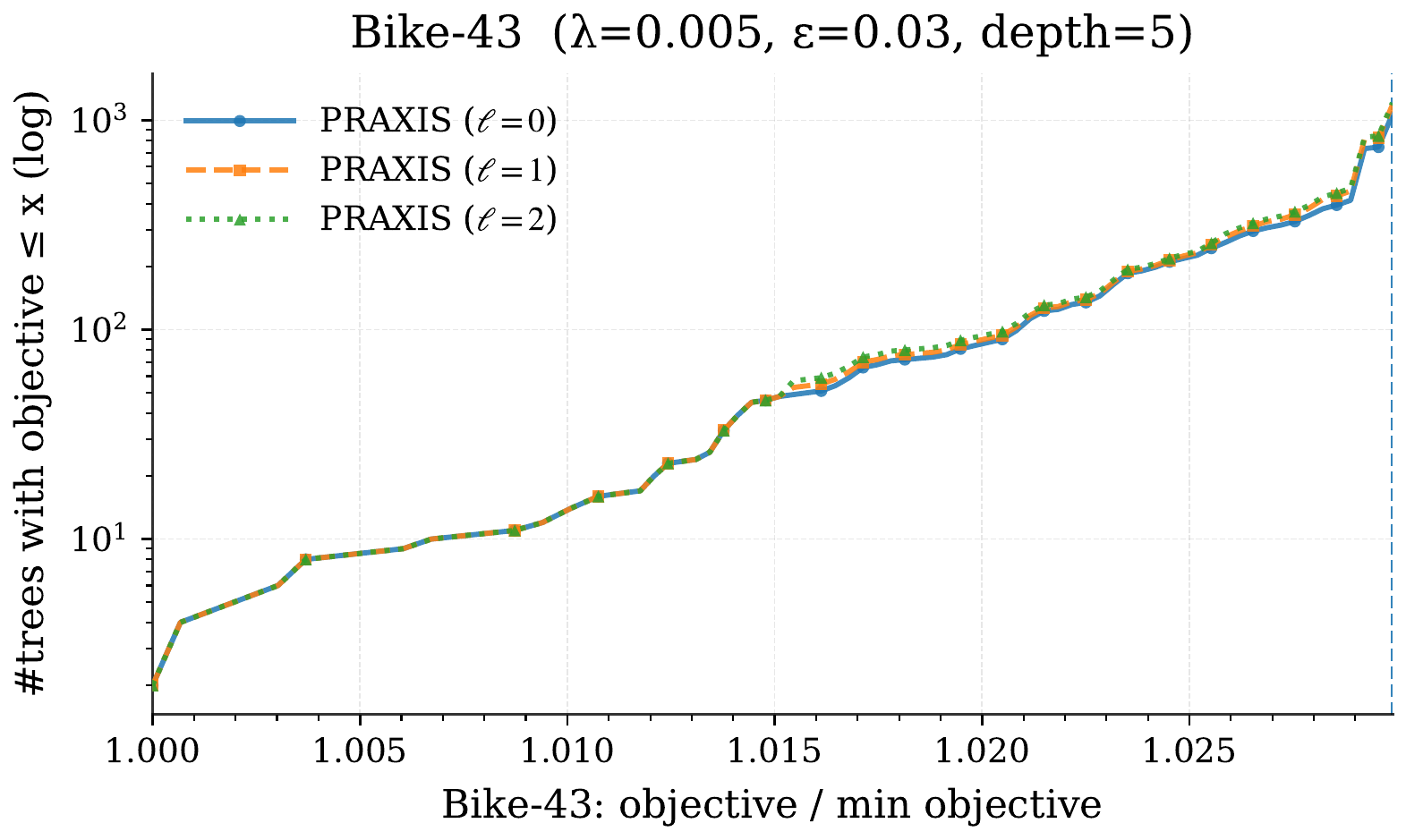} &
\includegraphics[width=0.48\linewidth]{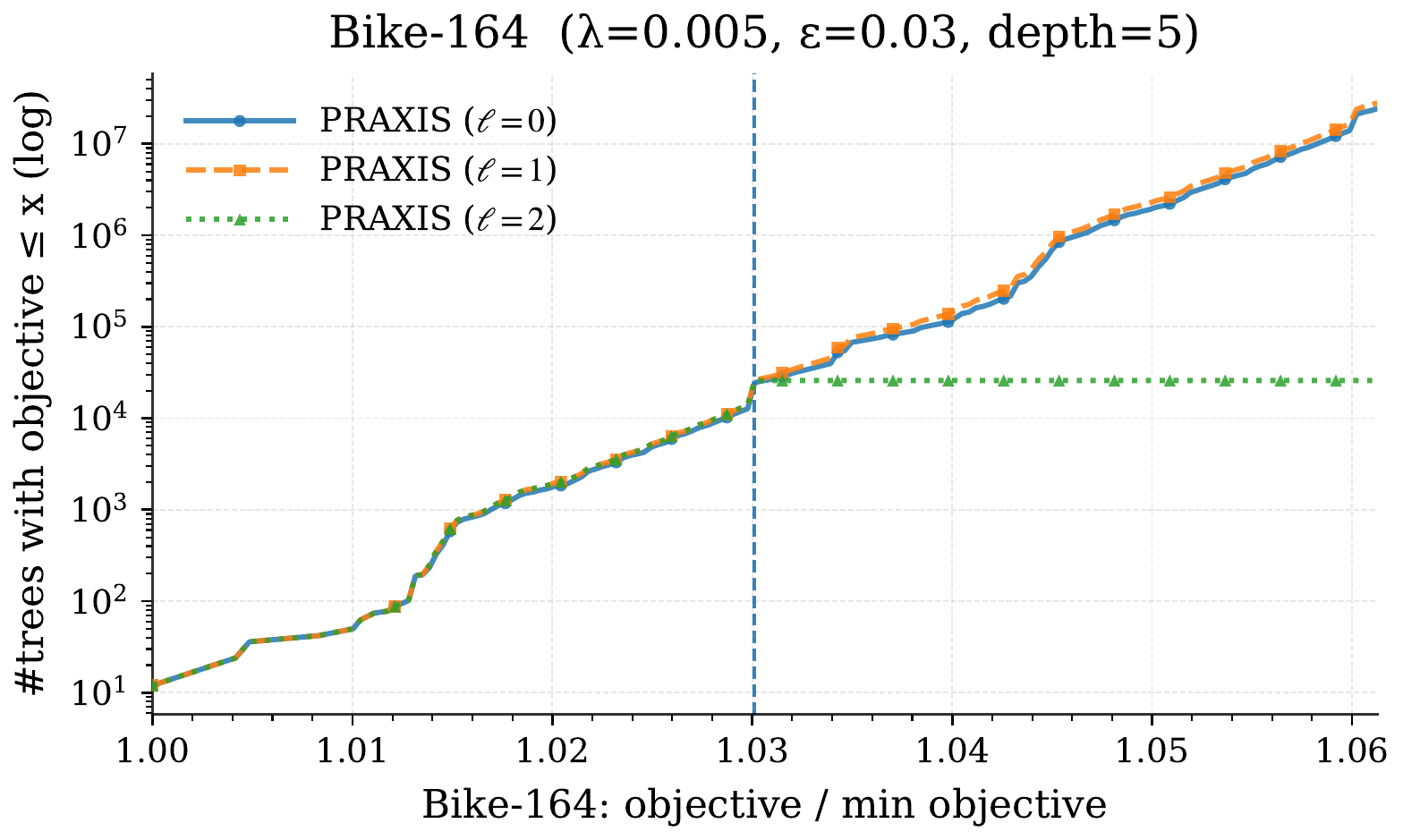} \\[4pt]
\includegraphics[width=0.48\linewidth]{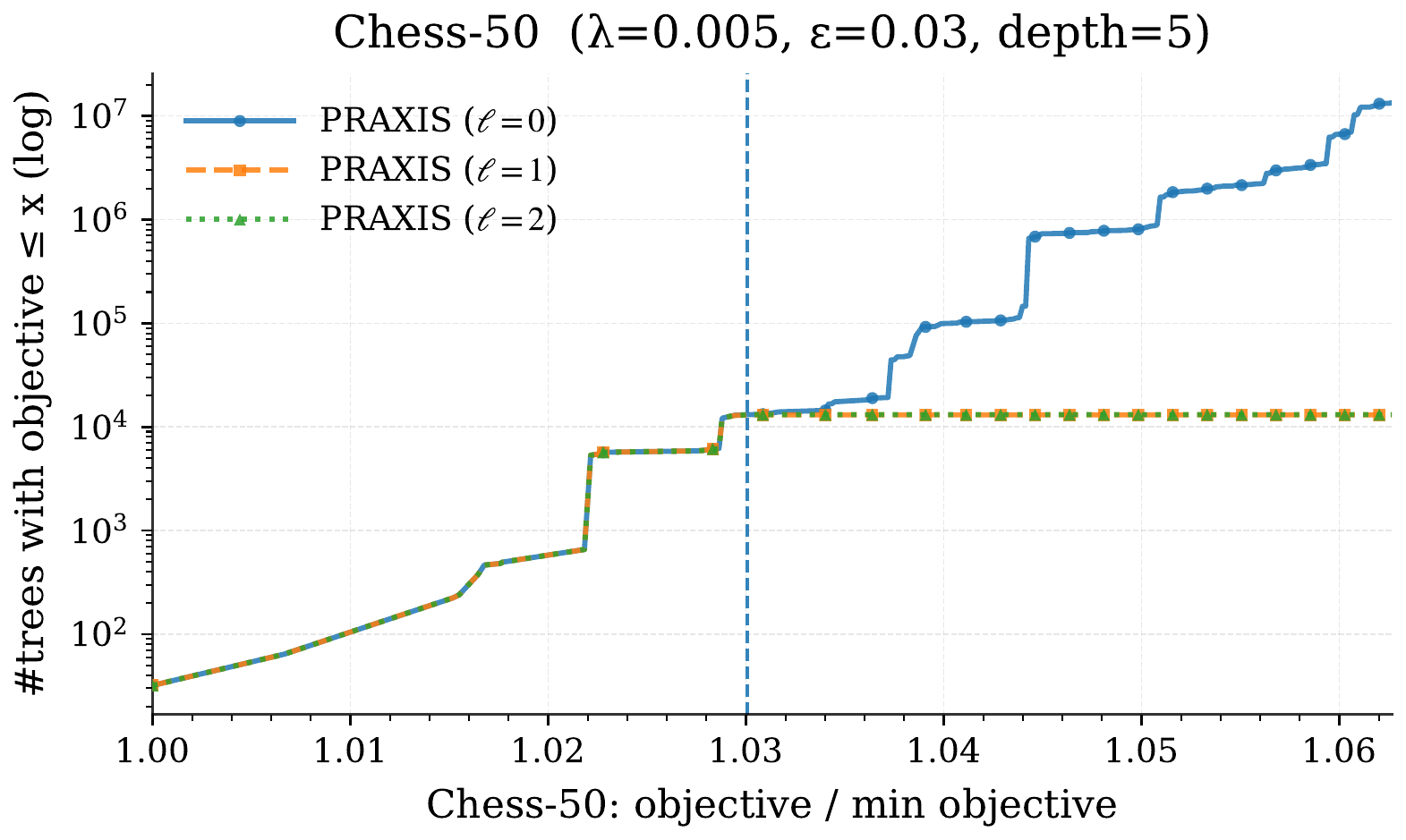} &
\includegraphics[width=0.48\linewidth]{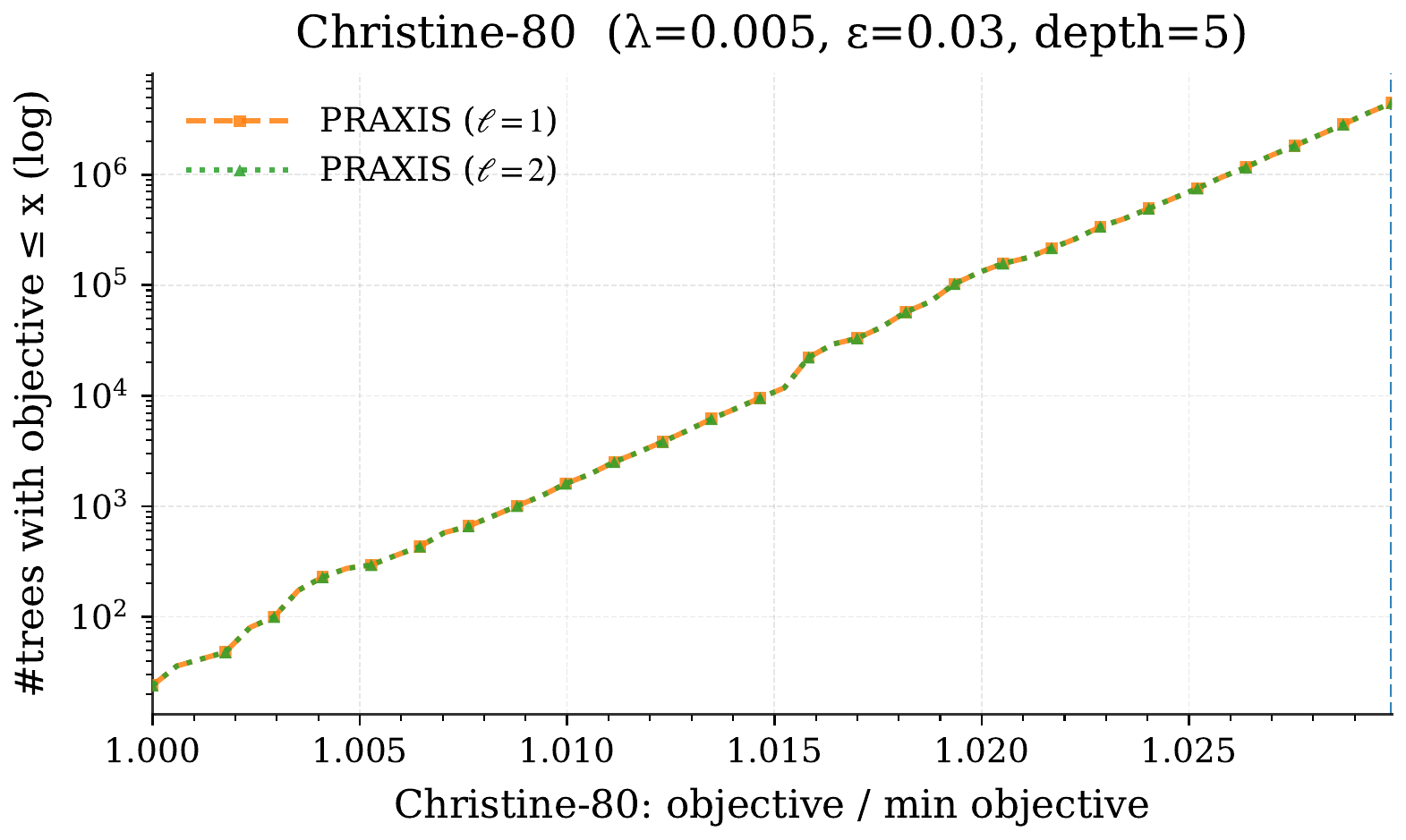}
\end{tabular}
\end{figure}

\begin{figure}[p]
\centering
\begin{tabular}{cc}
\includegraphics[width=0.48\linewidth]{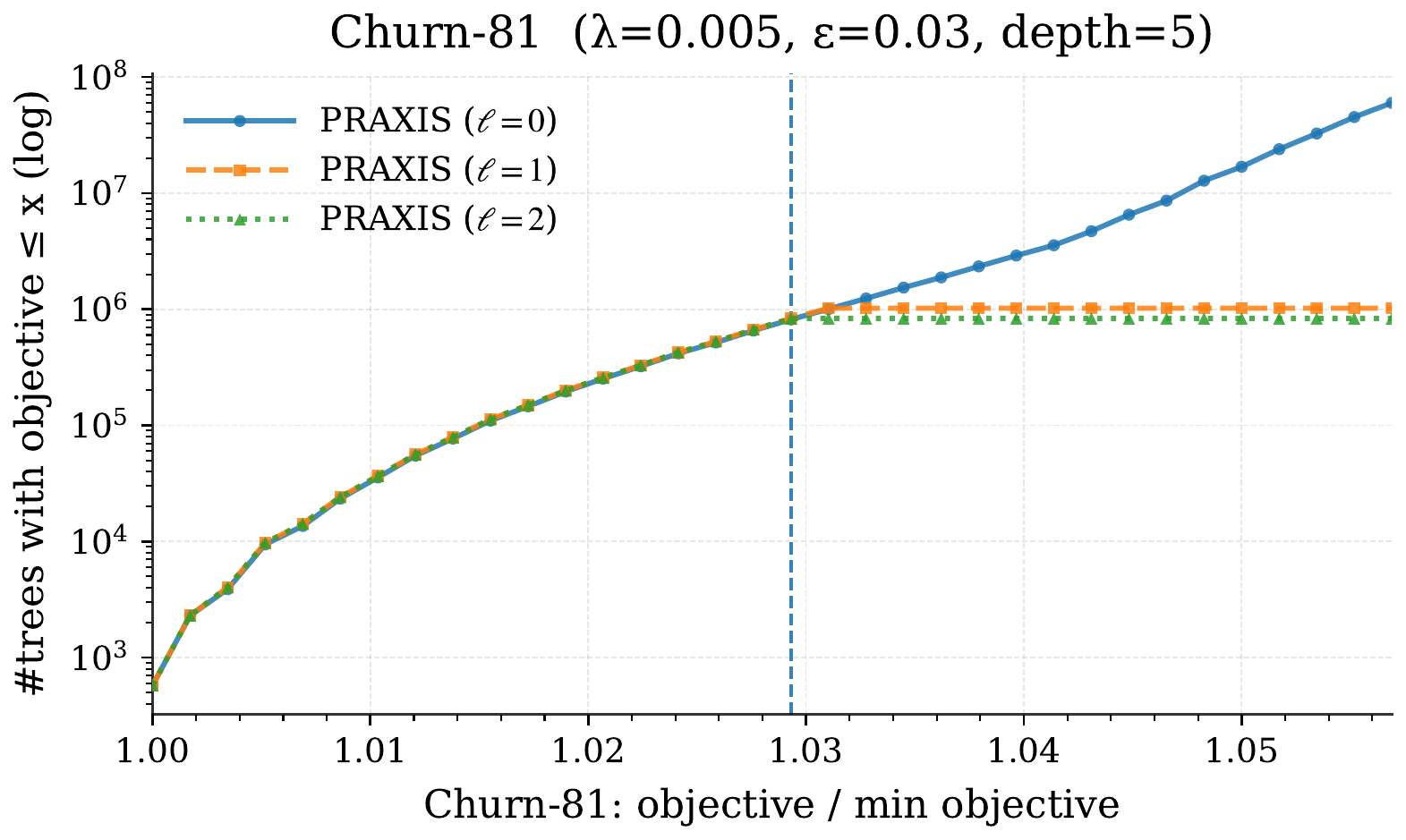} &
\includegraphics[width=0.48\linewidth]{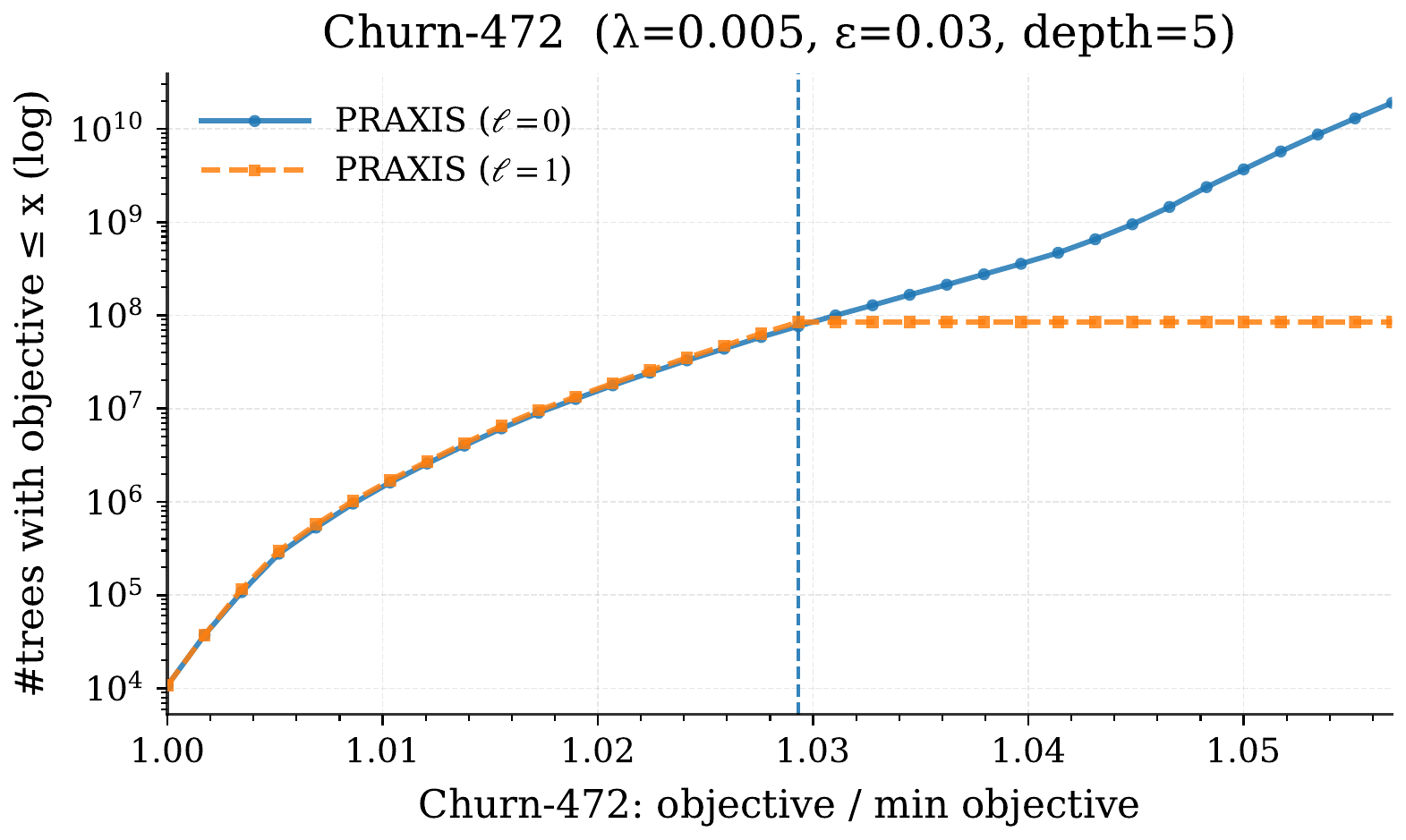} \\[4pt]
\includegraphics[width=0.48\linewidth]{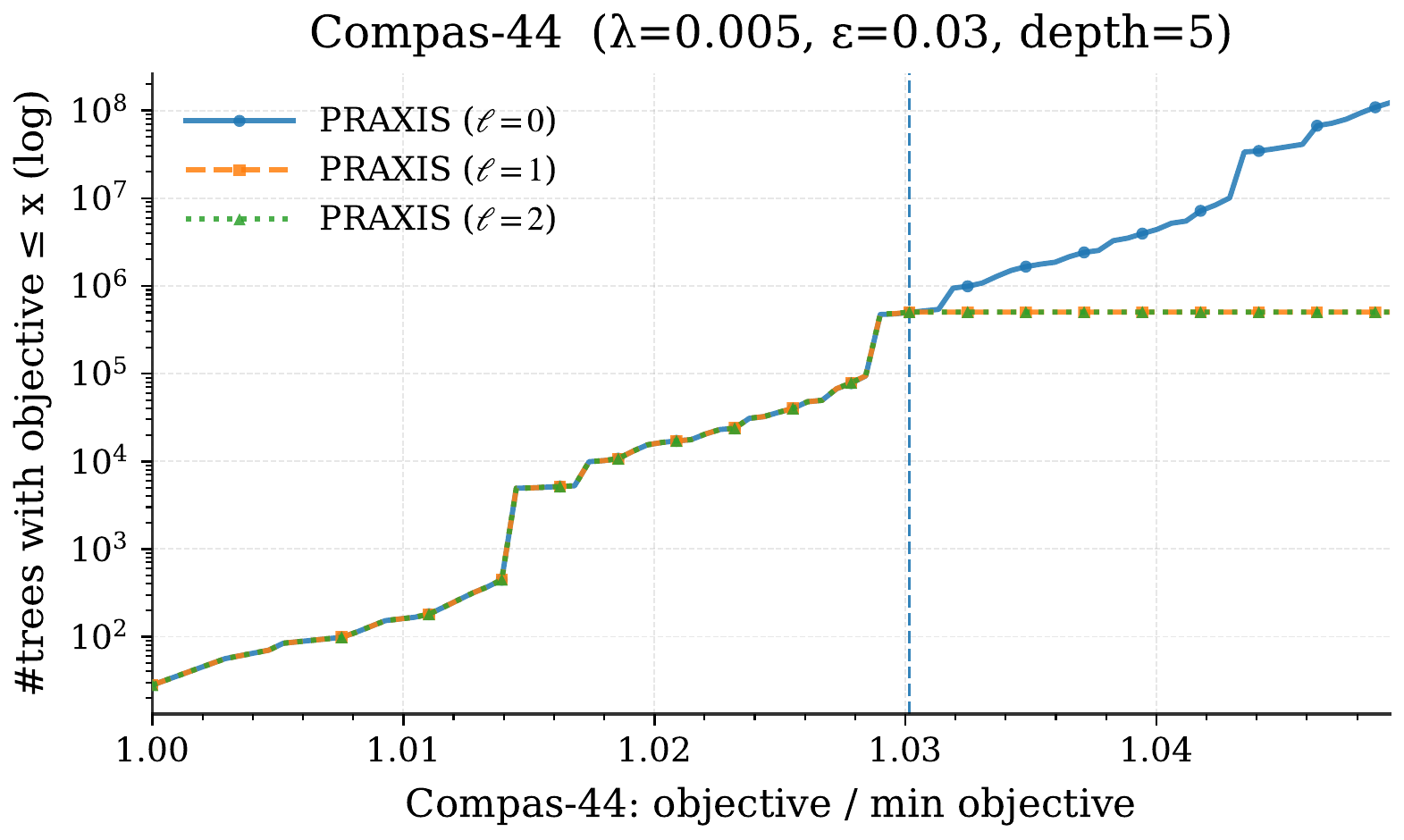} &
\includegraphics[width=0.48\linewidth]{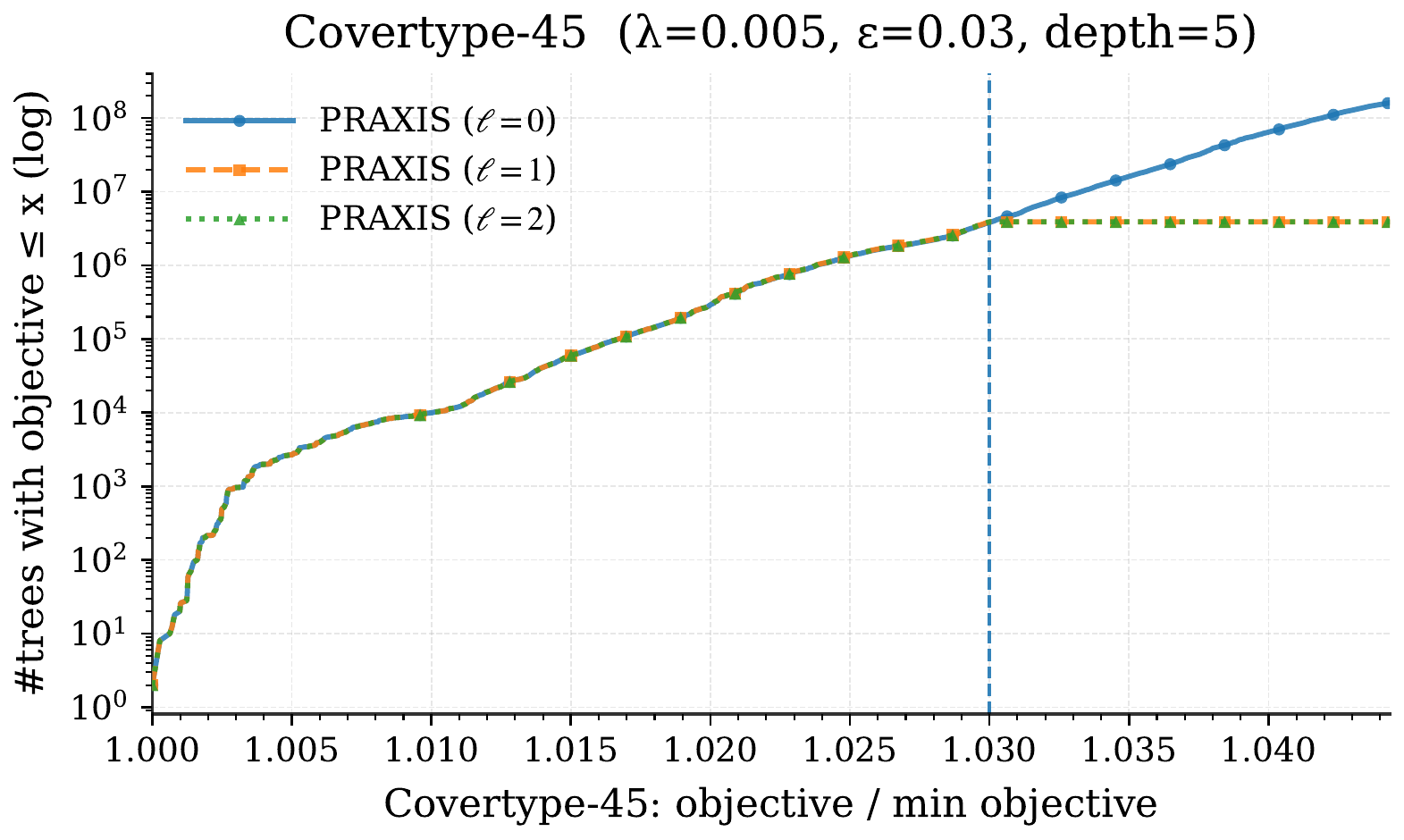} \\[4pt]
\includegraphics[width=0.48\linewidth]{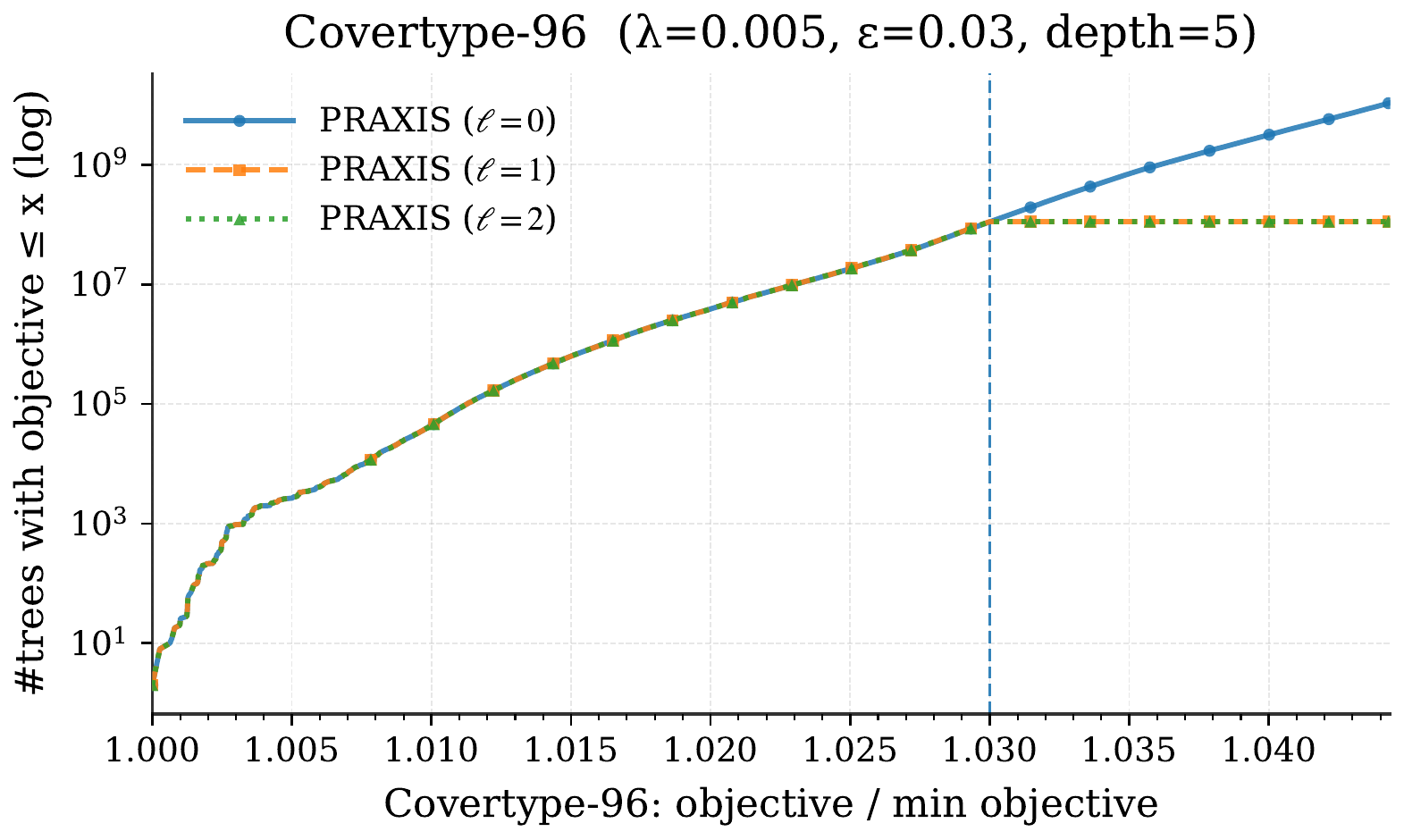} &
\includegraphics[width=0.48\linewidth]{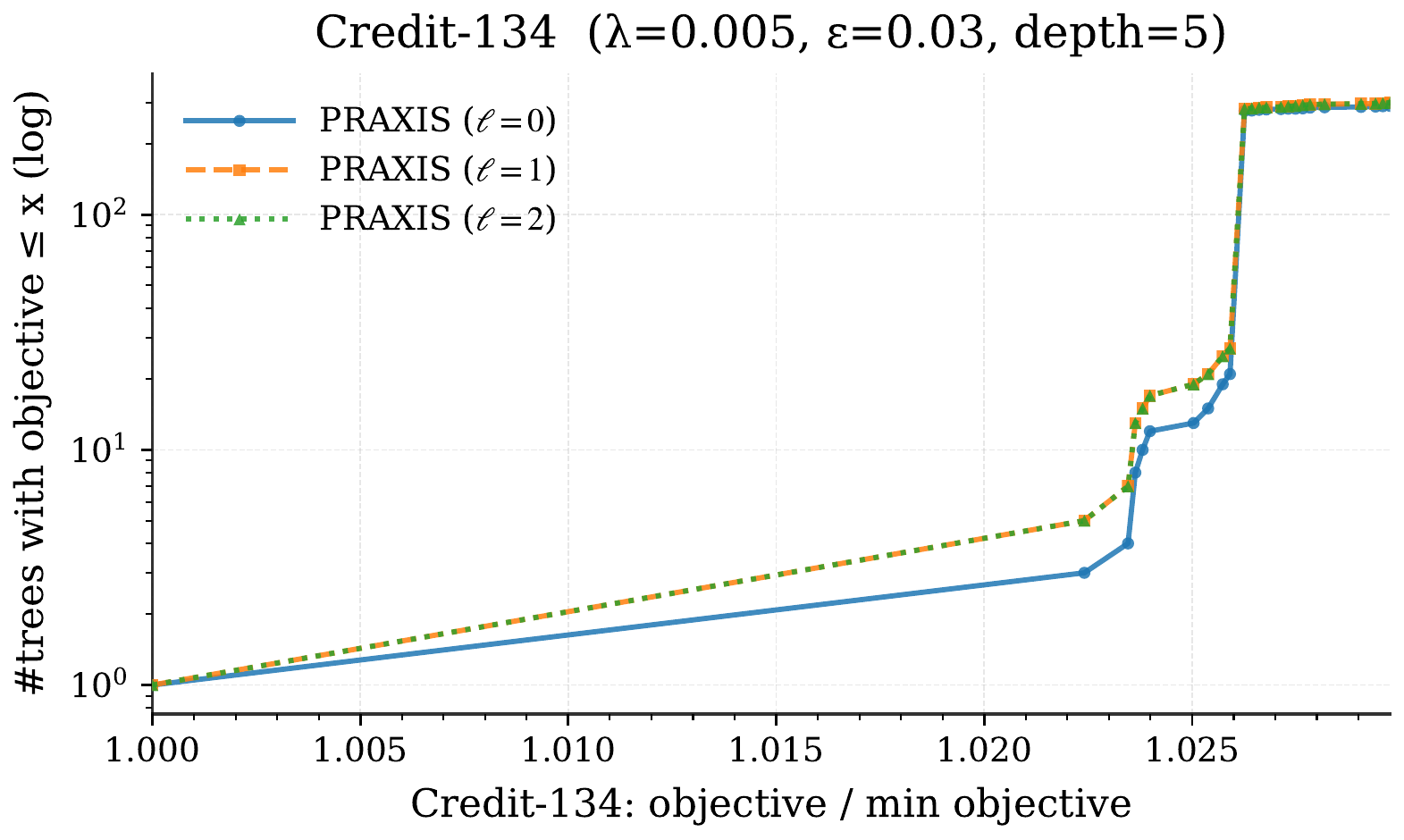} \\[4pt]
\includegraphics[width=0.48\linewidth]{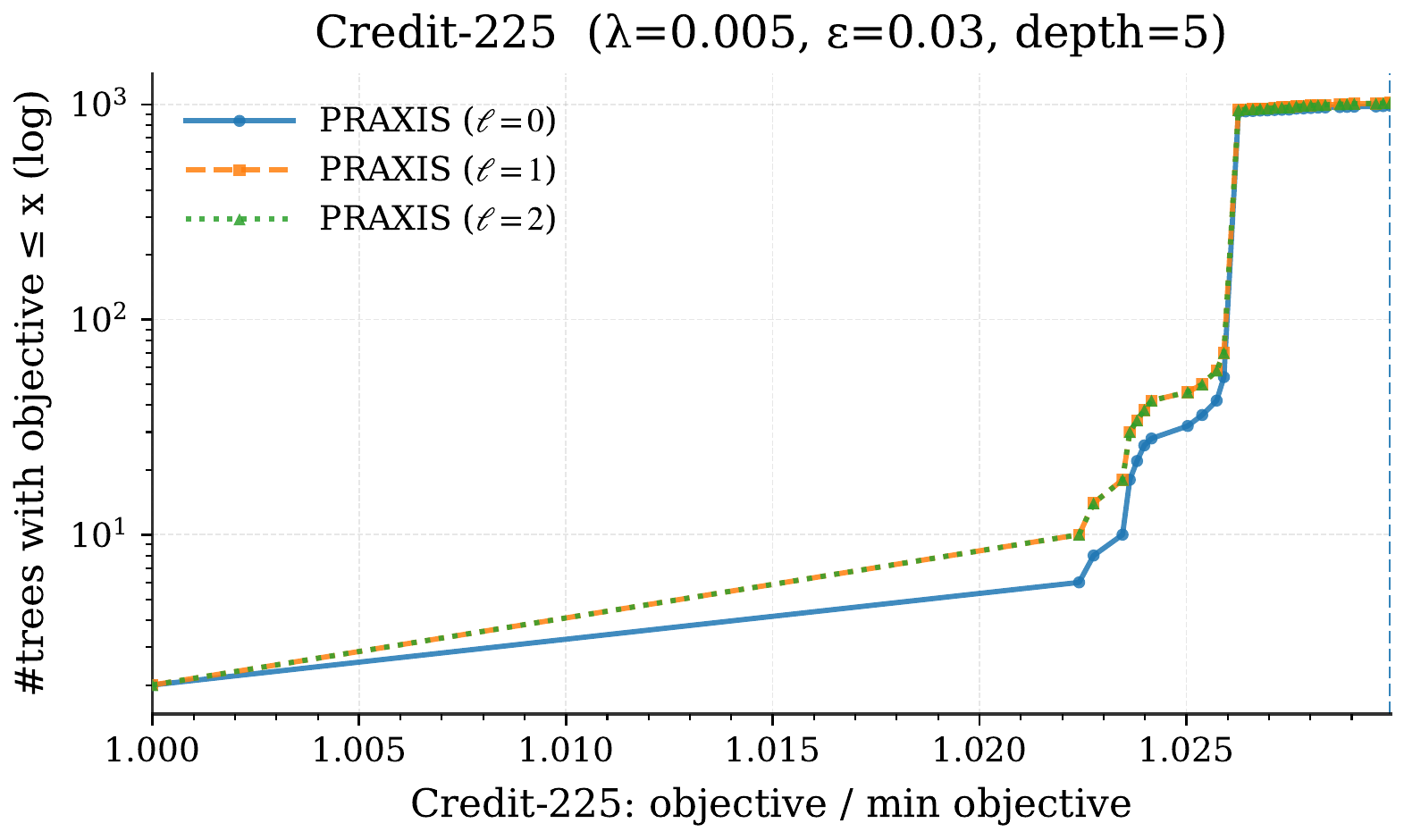} &
\includegraphics[width=0.48\linewidth]{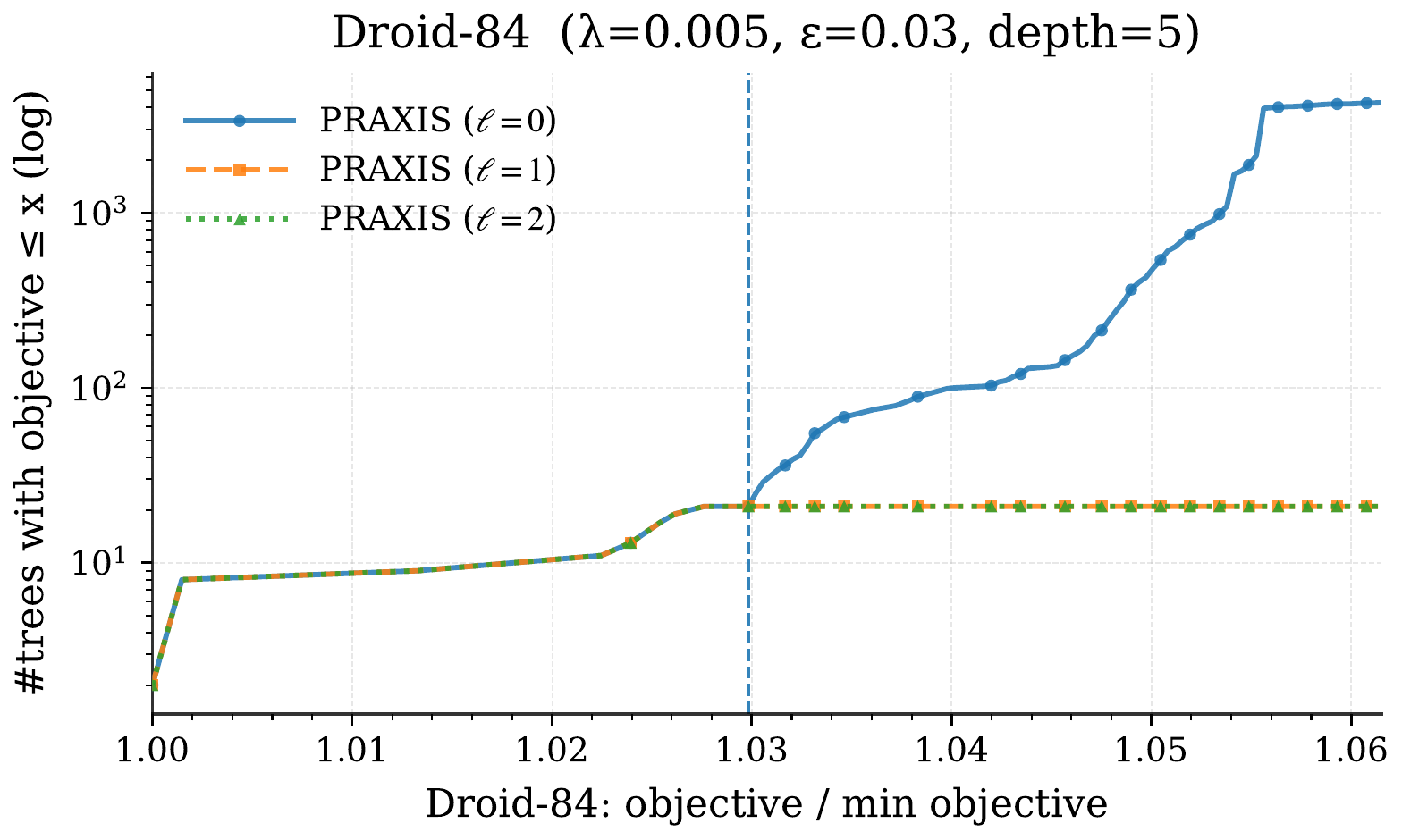}
\end{tabular}
\end{figure}

\begin{figure}[p]
\centering
\begin{tabular}{cc}
\includegraphics[width=0.48\linewidth]{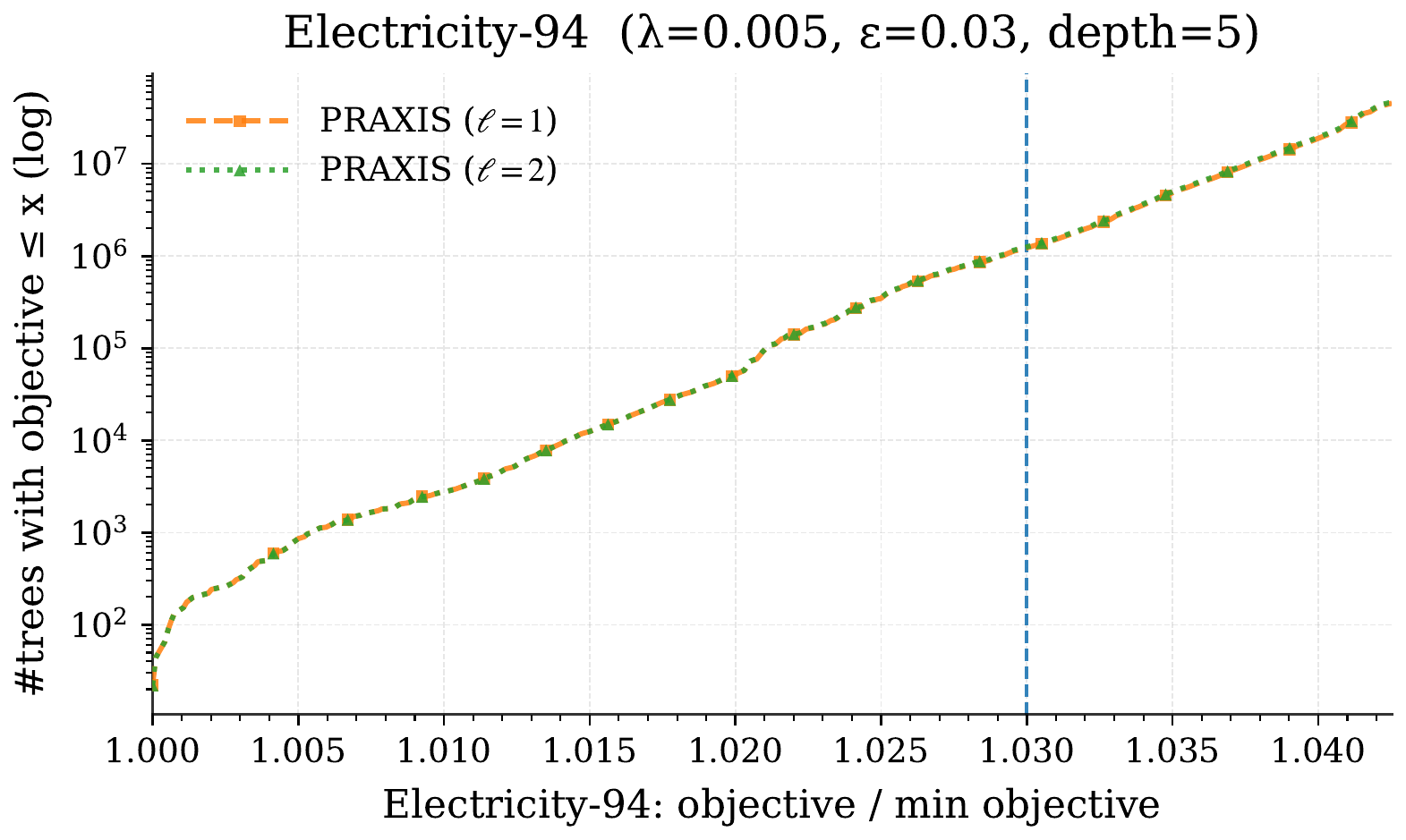} &
\includegraphics[width=0.48\linewidth]{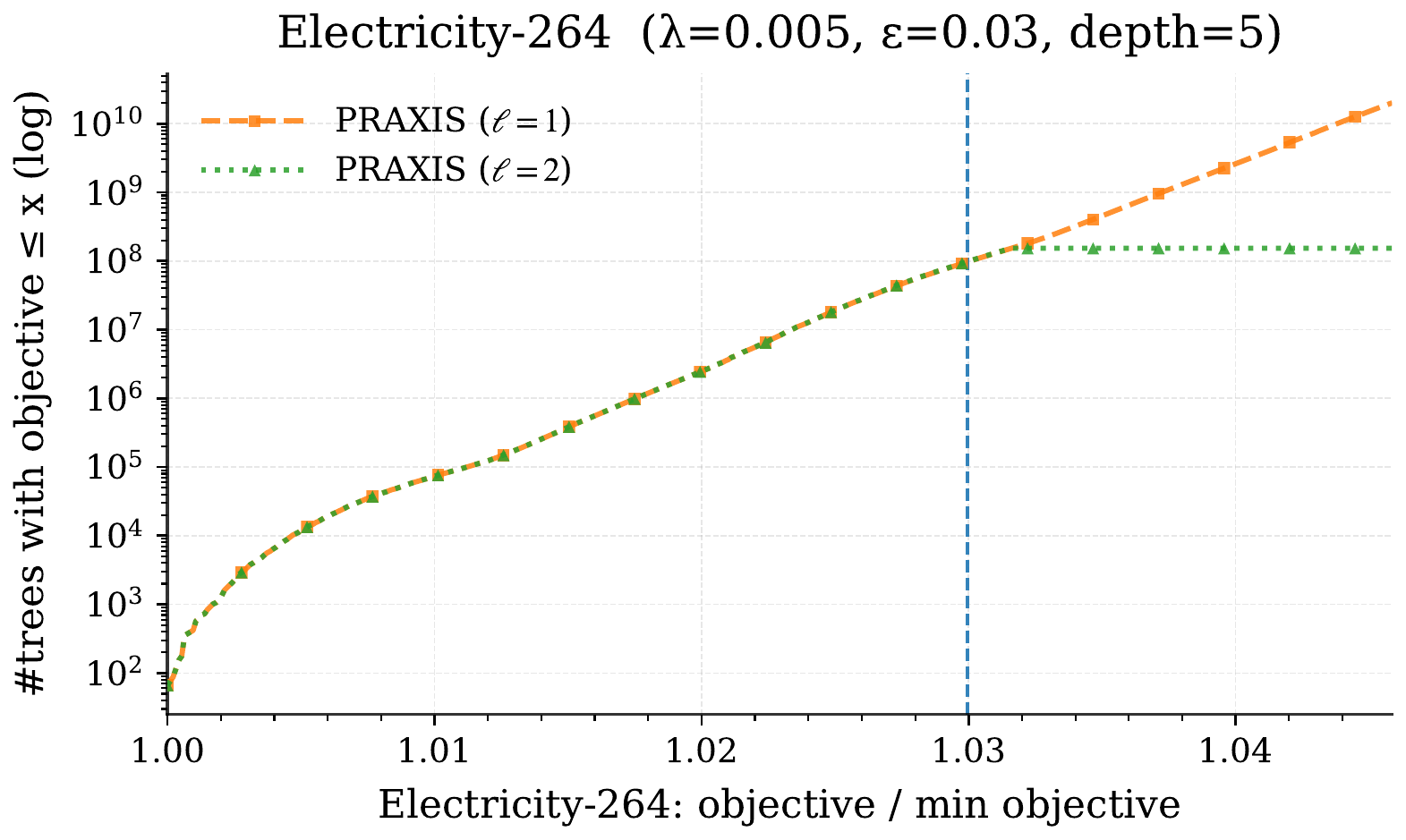} \\[4pt]
\includegraphics[width=0.48\linewidth]{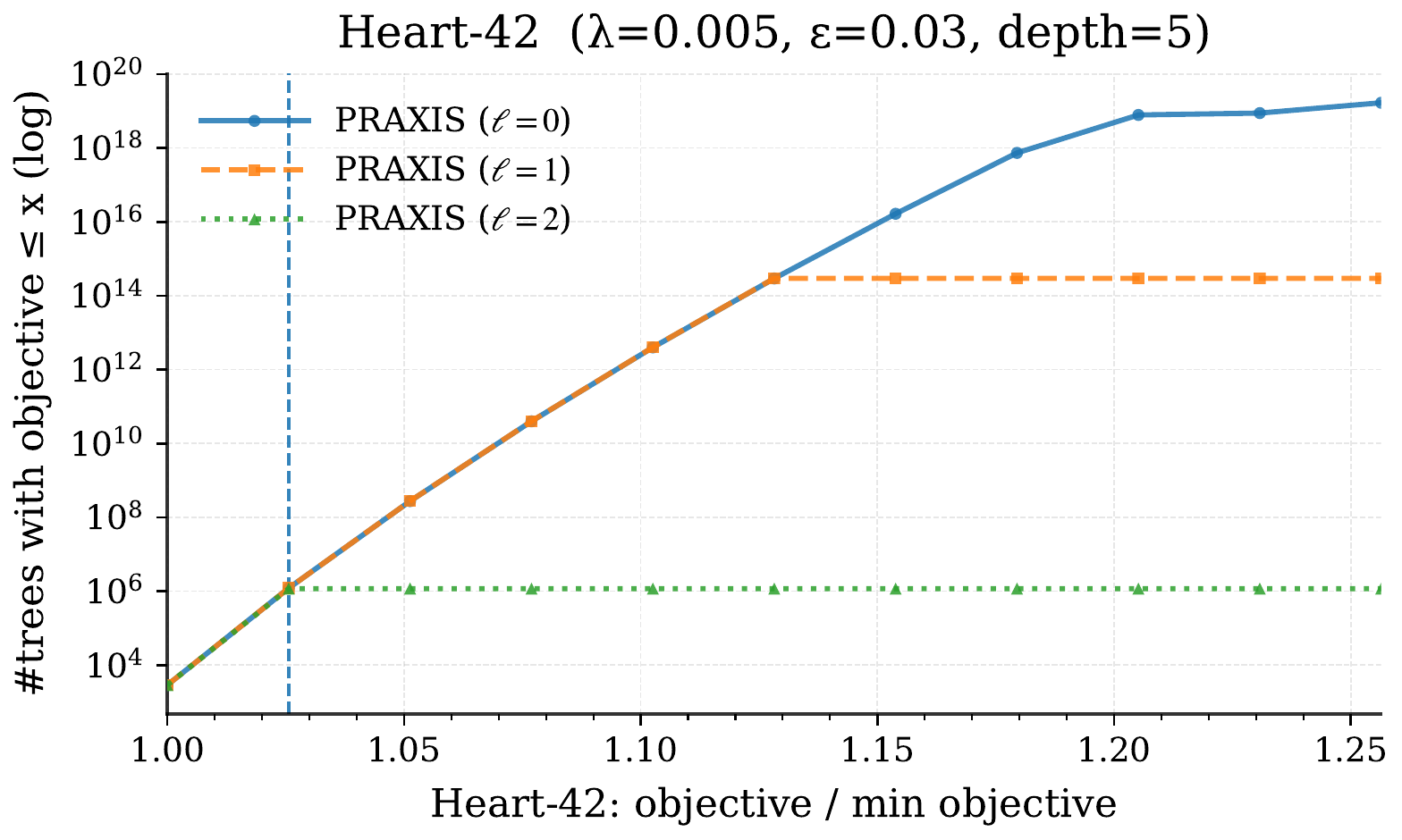} &
\includegraphics[width=0.48\linewidth]{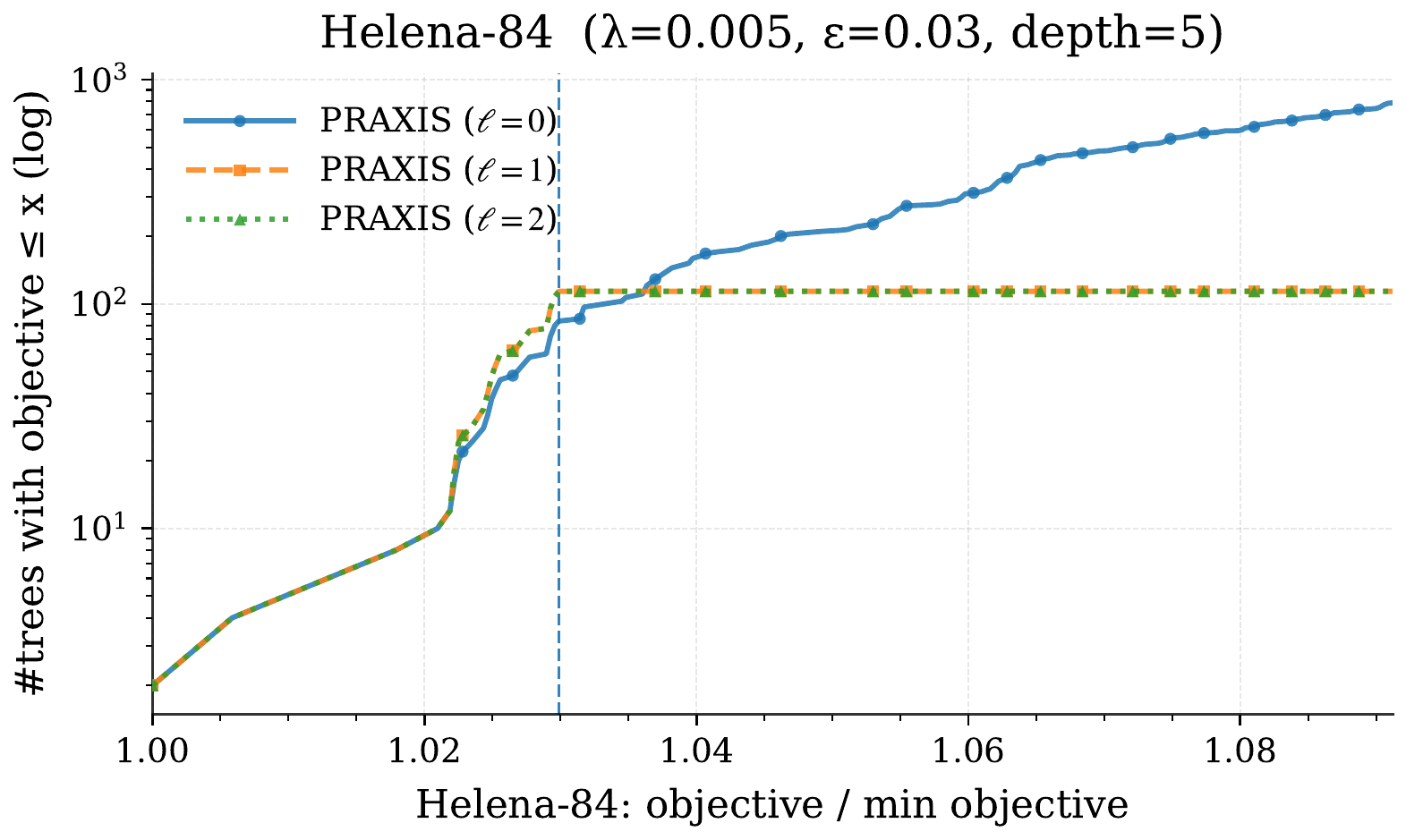} \\[4pt]
\includegraphics[width=0.48\linewidth]{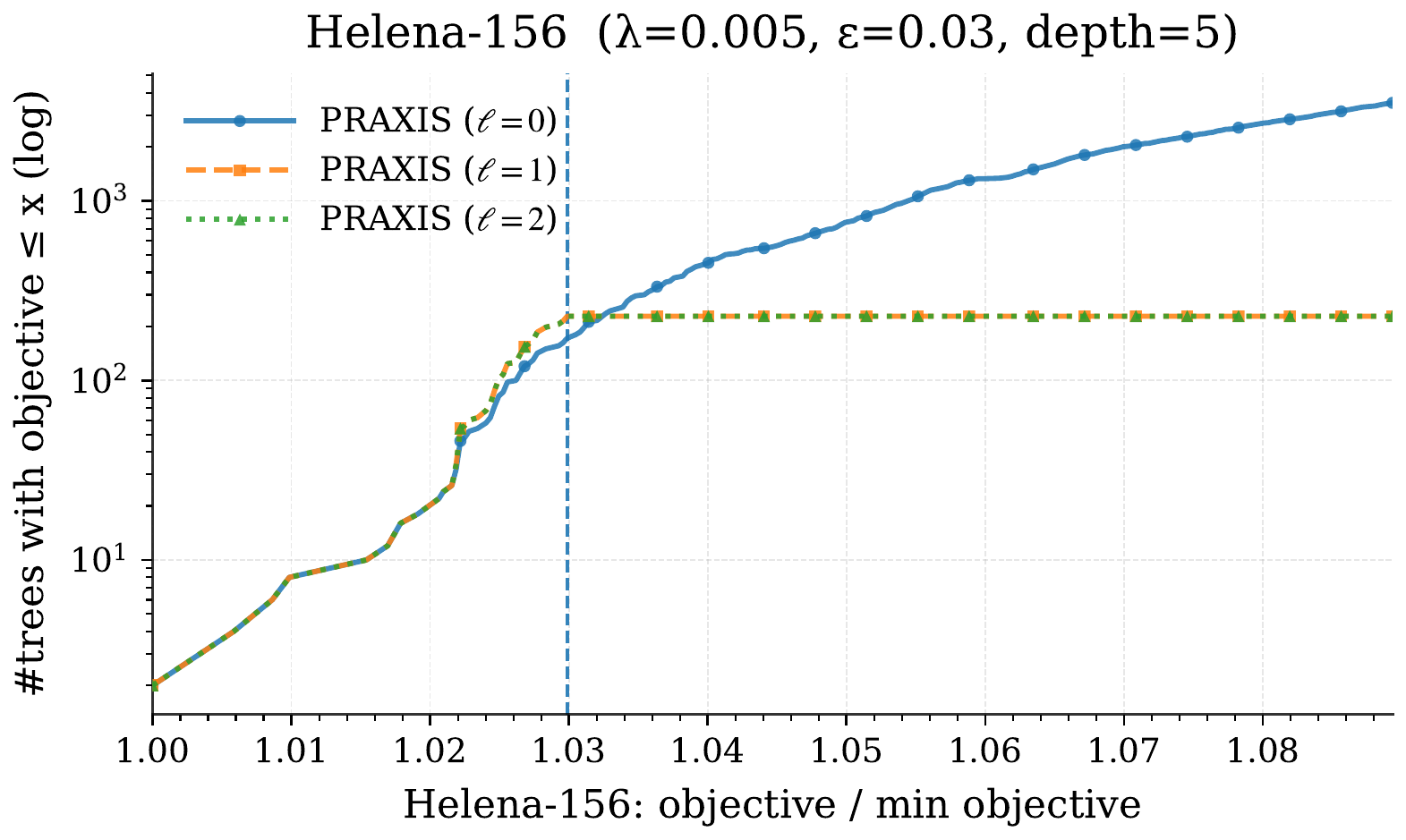} &
\includegraphics[width=0.48\linewidth]{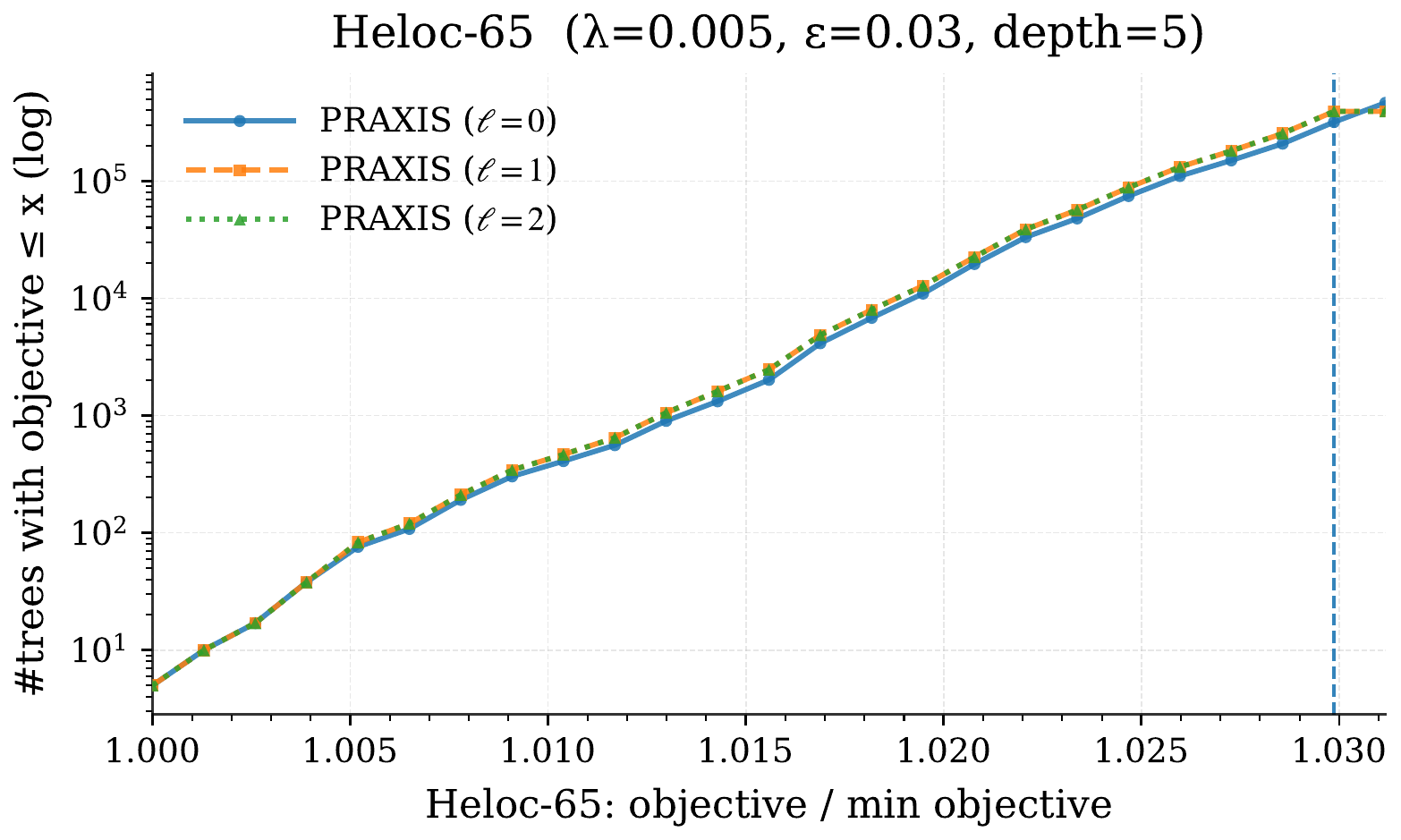} \\[4pt]
\includegraphics[width=0.48\linewidth]{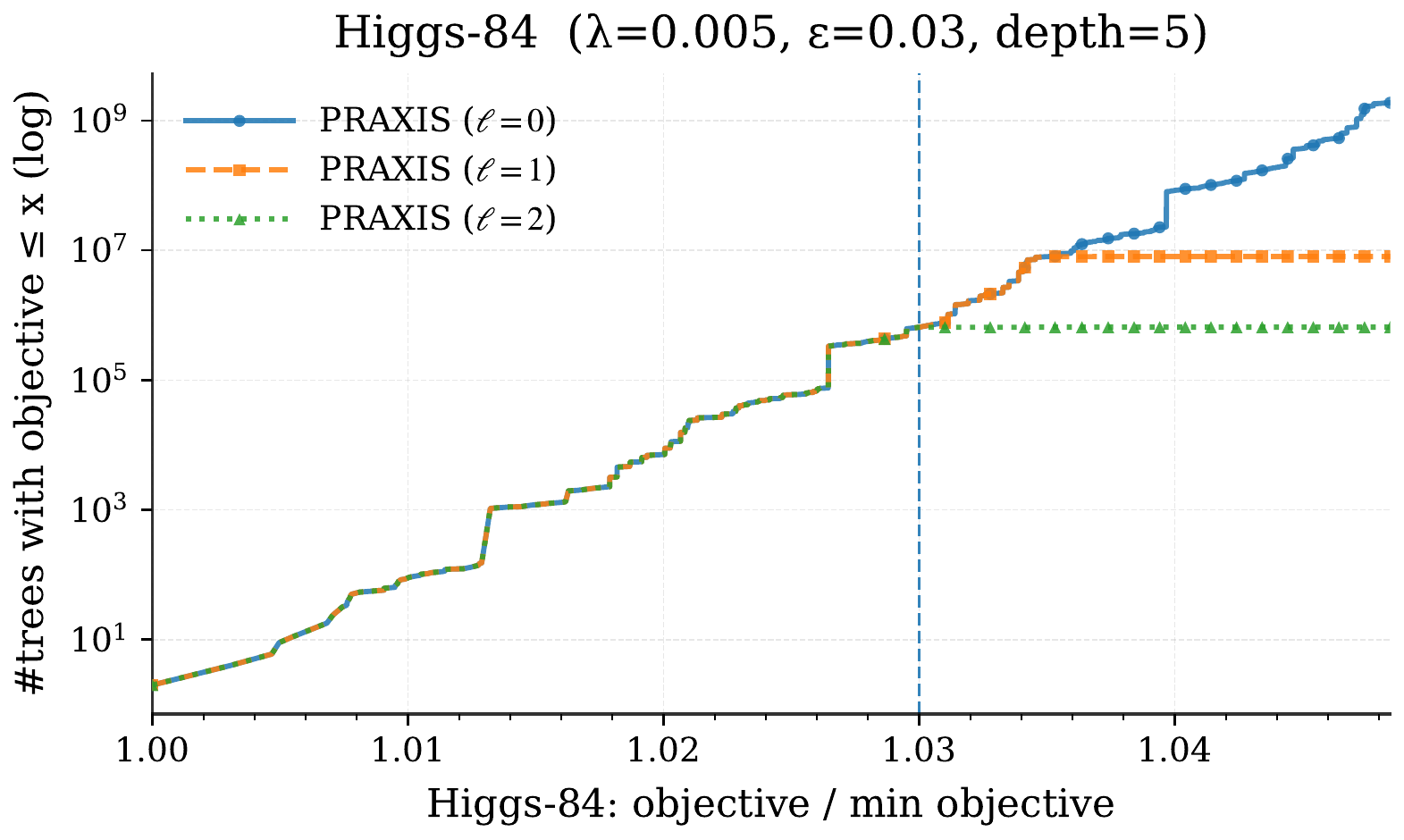} &
\includegraphics[width=0.48\linewidth]{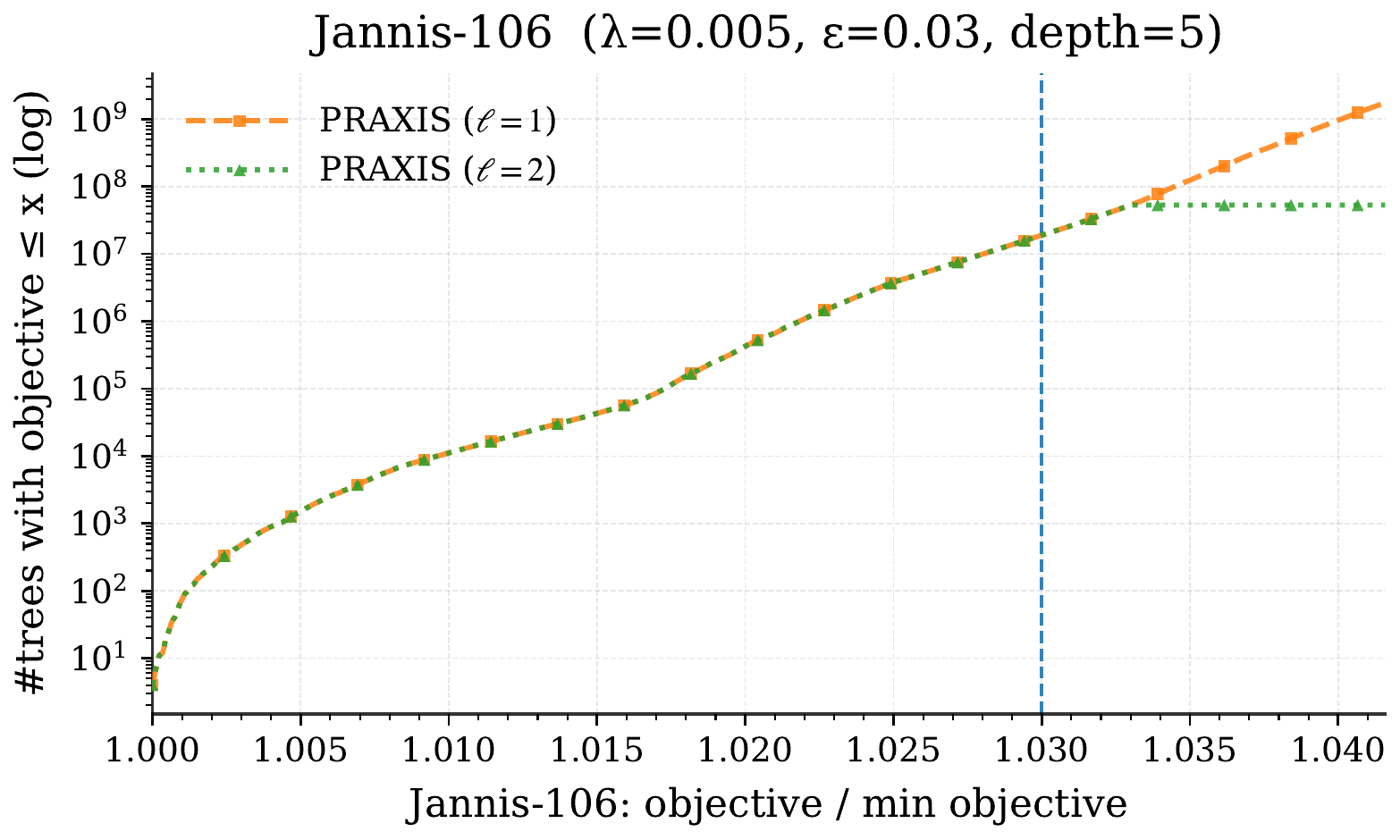}
\end{tabular}
\end{figure}

\begin{figure}[p]
\centering
\begin{tabular}{cc}
\includegraphics[width=0.48\linewidth]{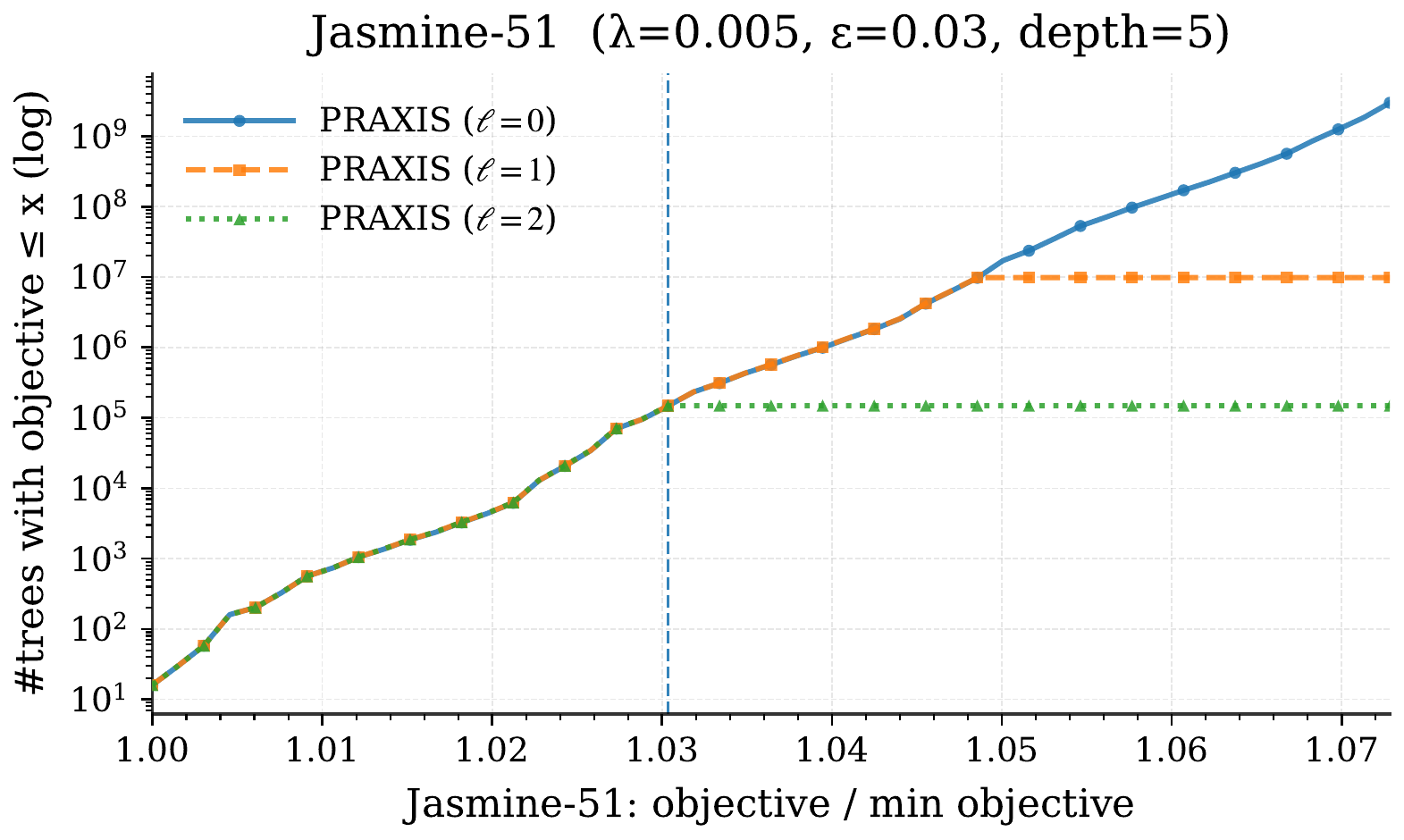} &
\includegraphics[width=0.48\linewidth]{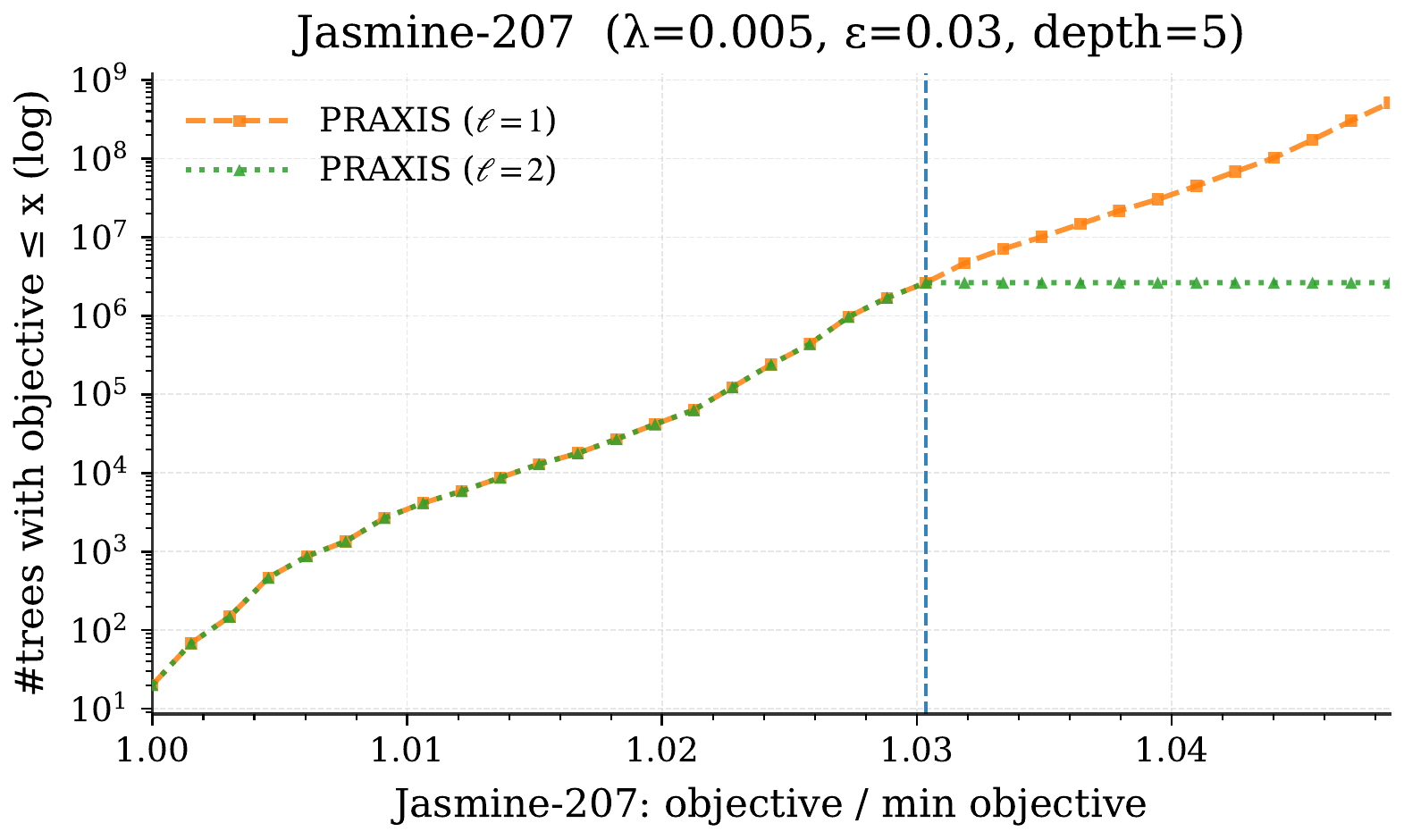} \\[4pt]
\includegraphics[width=0.48\linewidth]{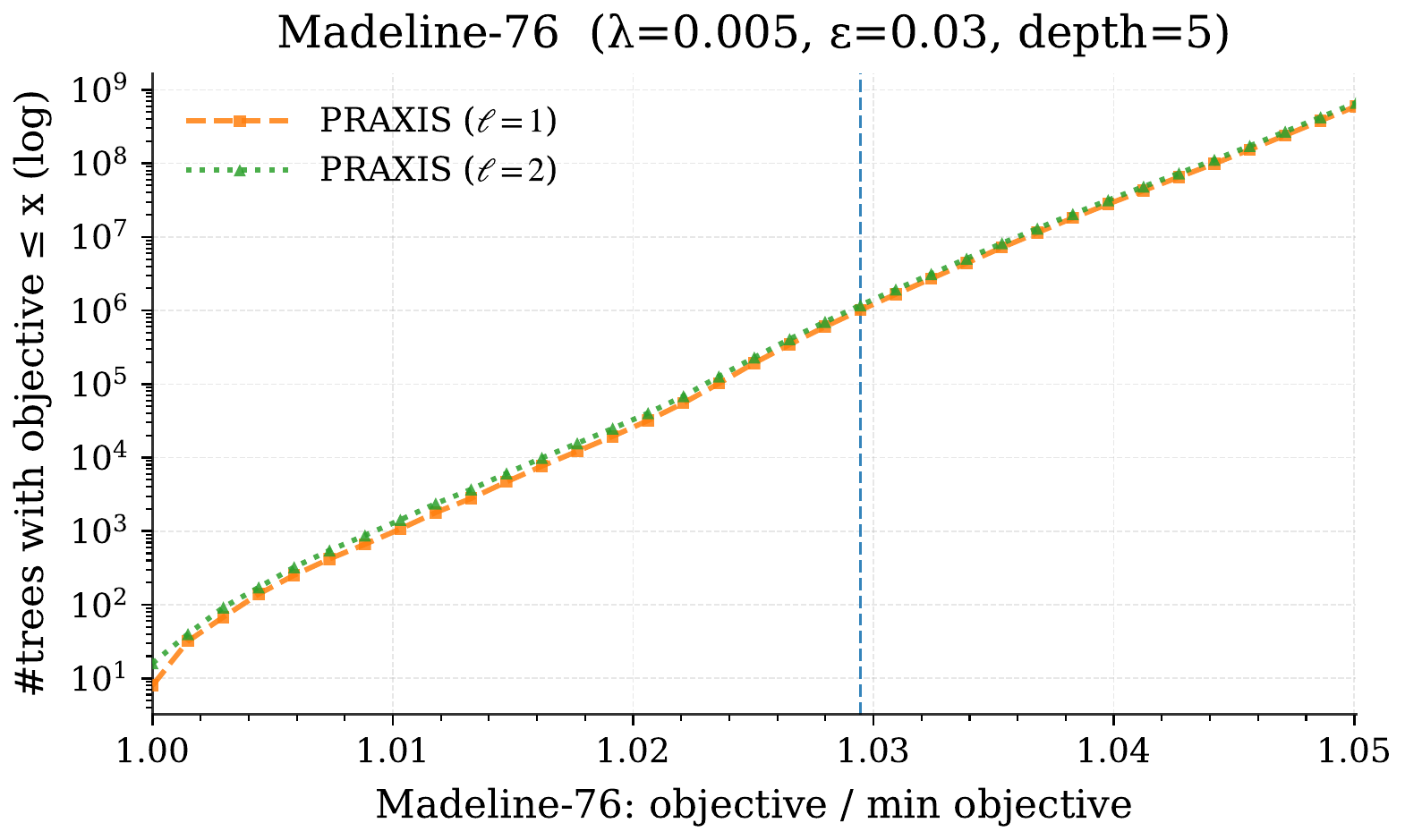} &
\includegraphics[width=0.48\linewidth]{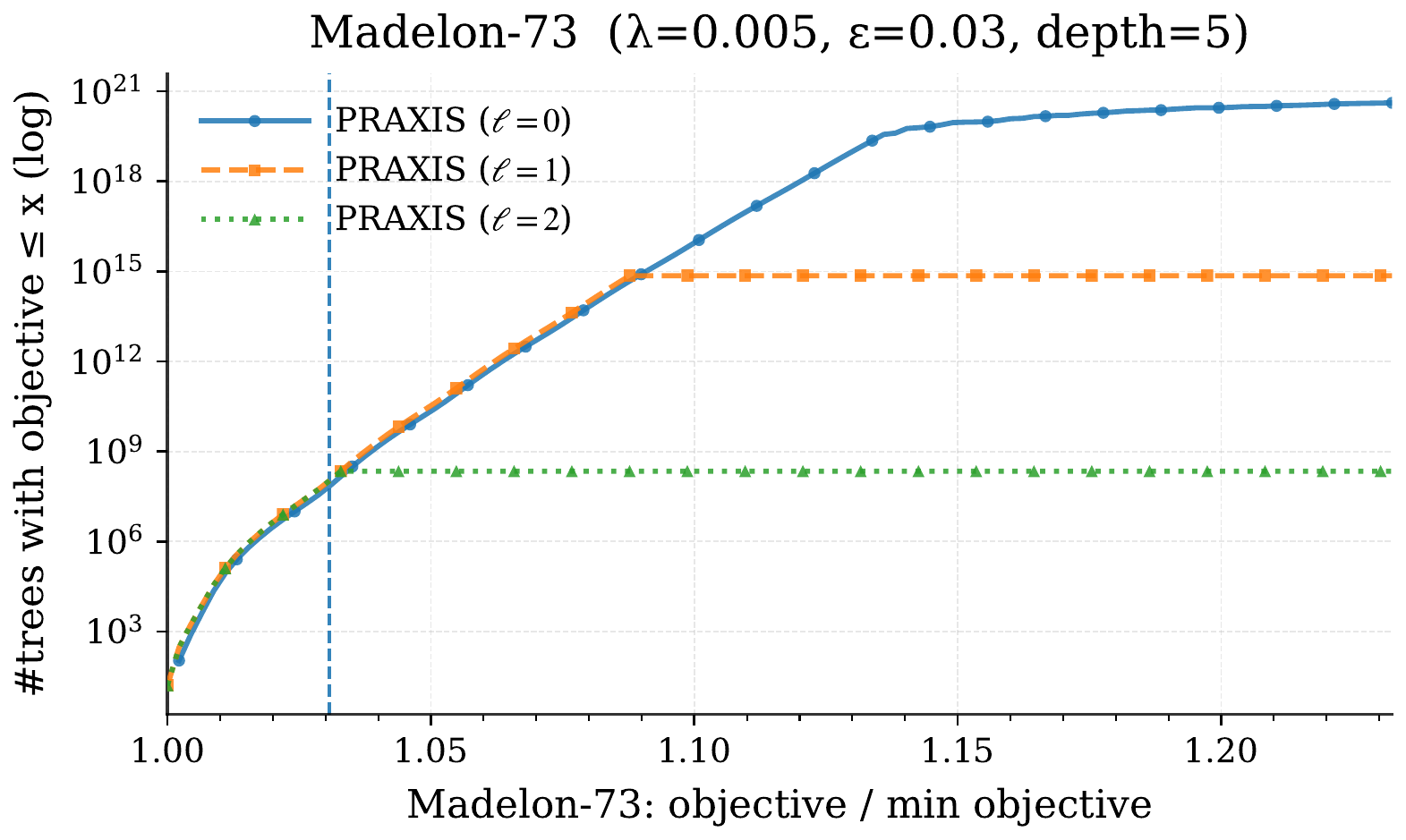} \\[4pt]
\includegraphics[width=0.48\linewidth]{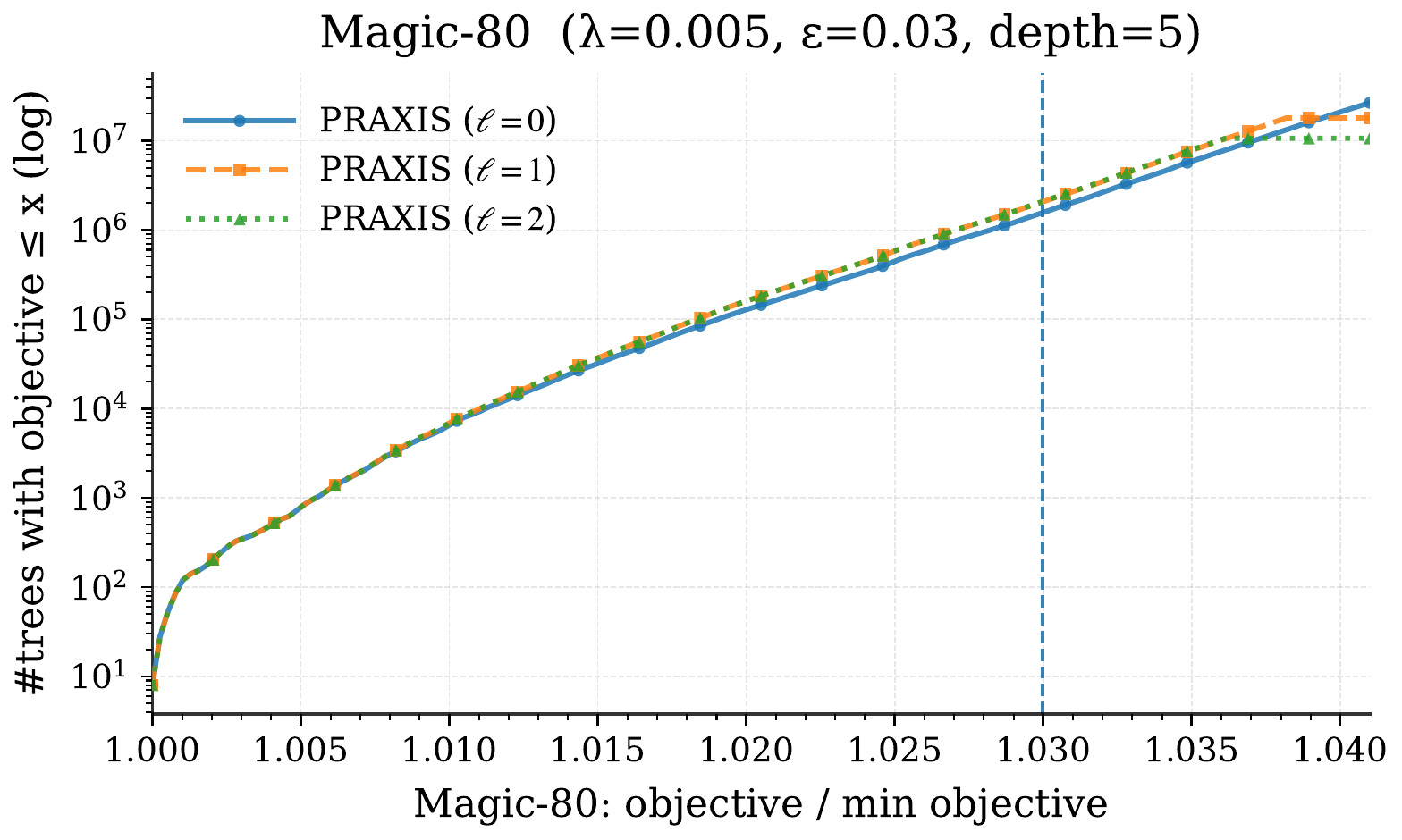} &
\includegraphics[width=0.48\linewidth]{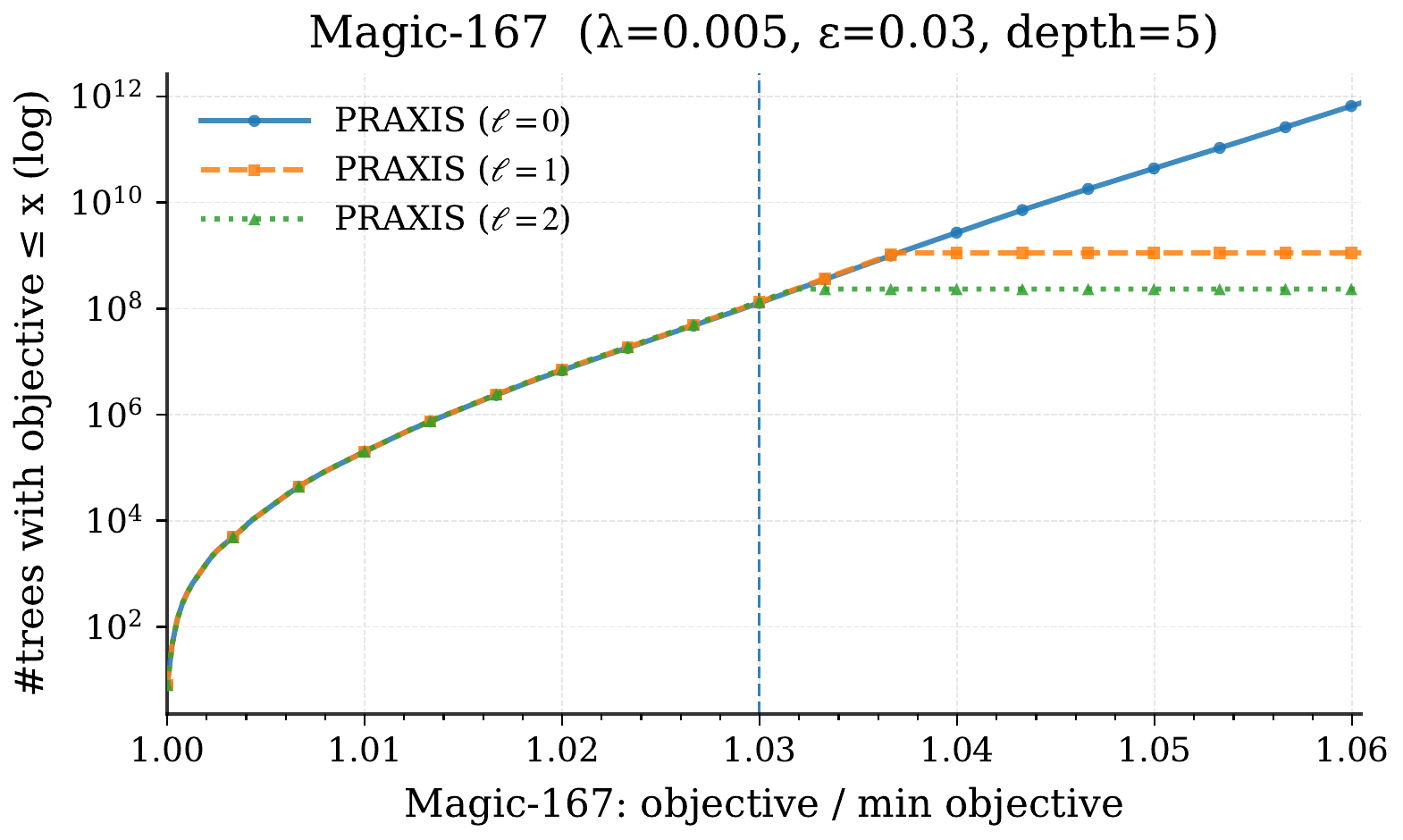} \\[4pt]
\includegraphics[width=0.48\linewidth]{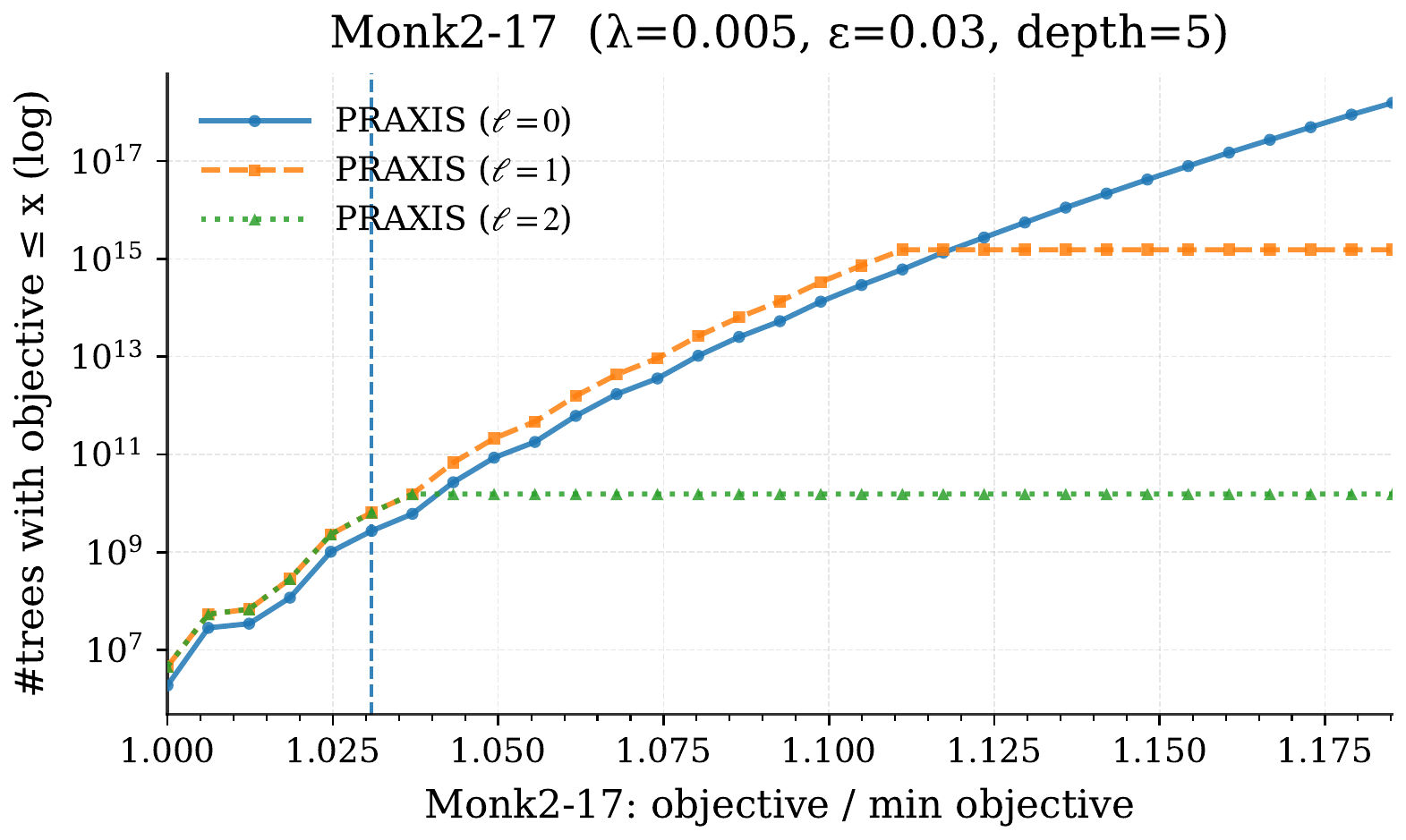} &
\includegraphics[width=0.48\linewidth]{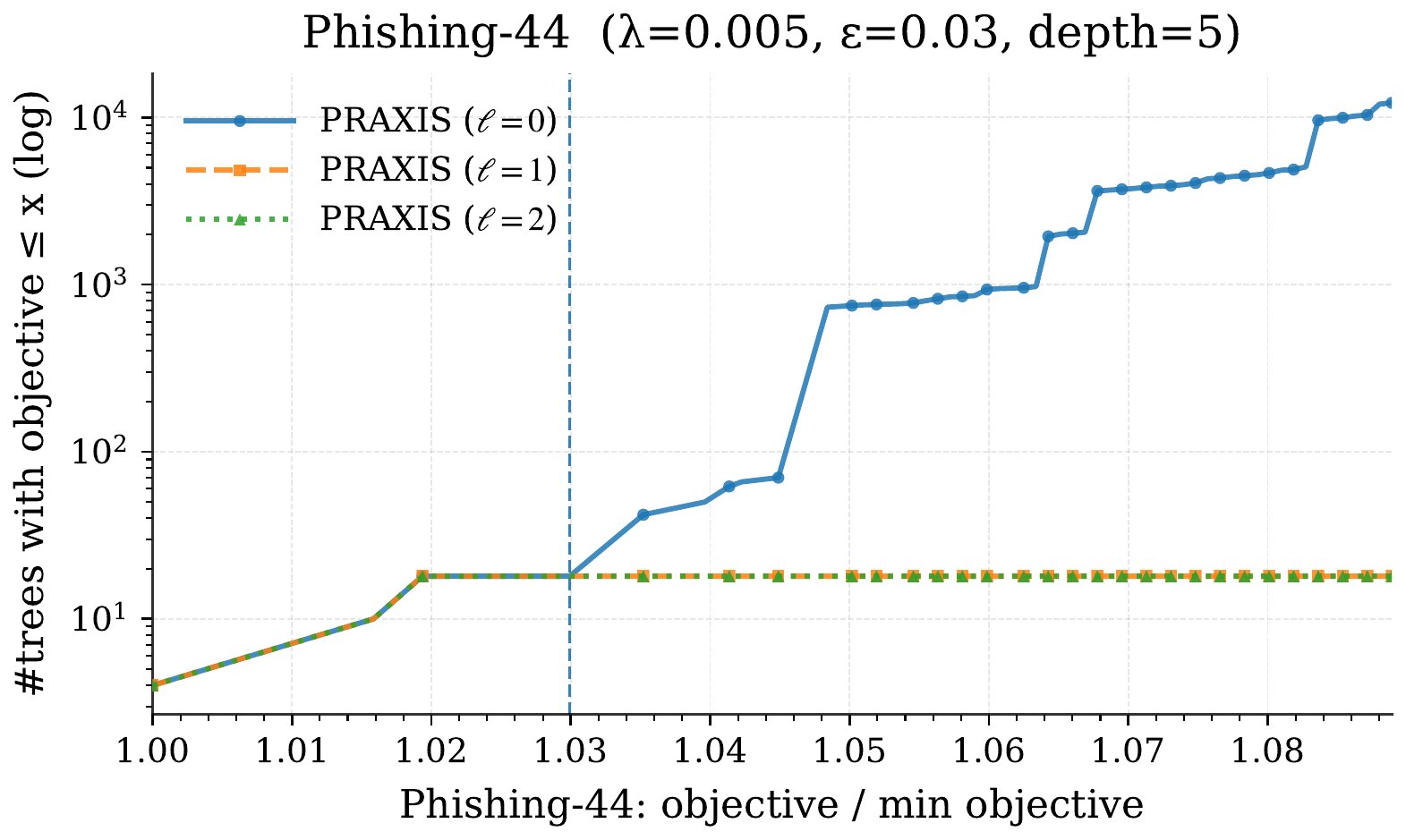}
\end{tabular}
\end{figure}

\begin{figure}[p]
\centering
\begin{tabular}{cc}
\includegraphics[width=0.48\linewidth]{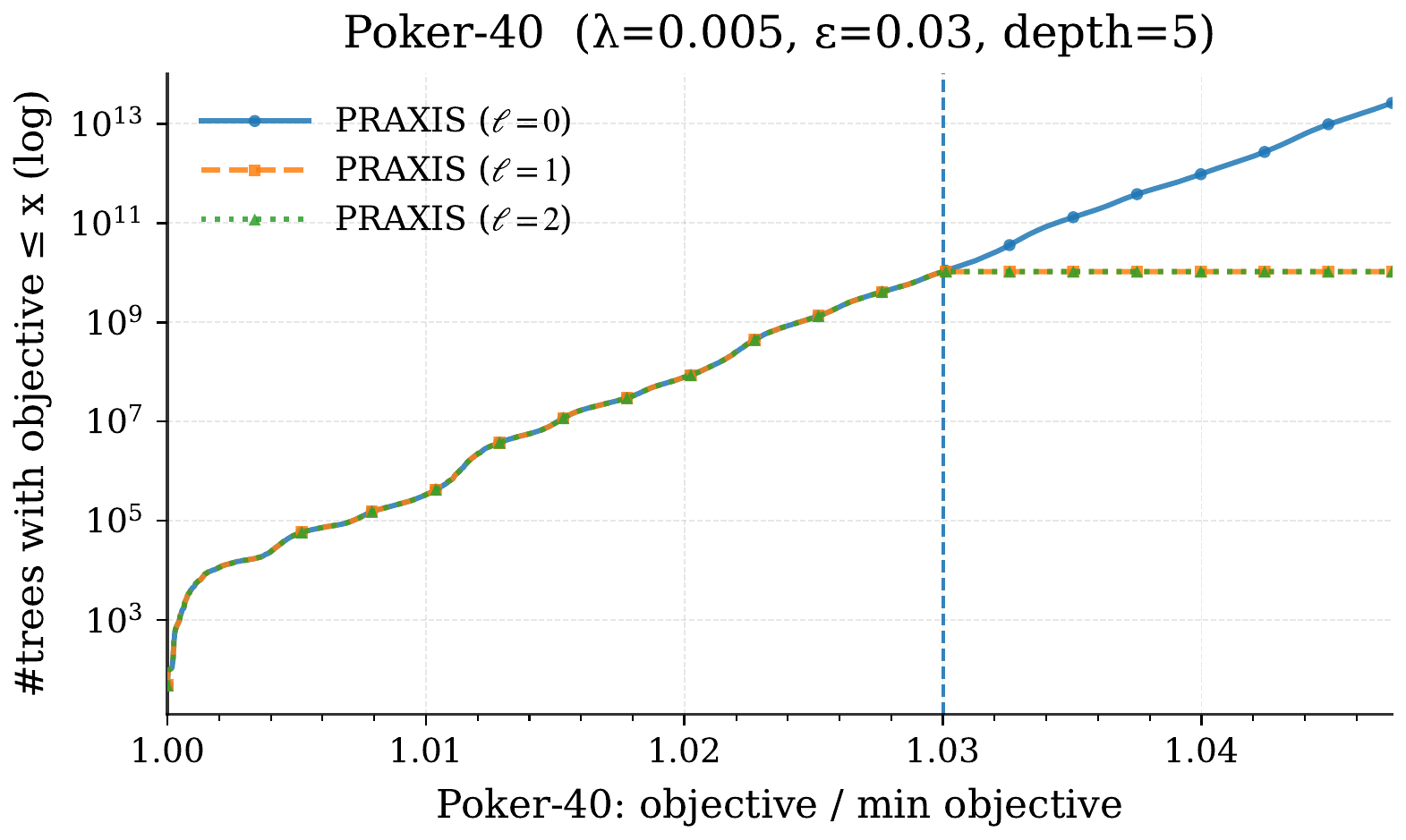} &
\includegraphics[width=0.48\linewidth]{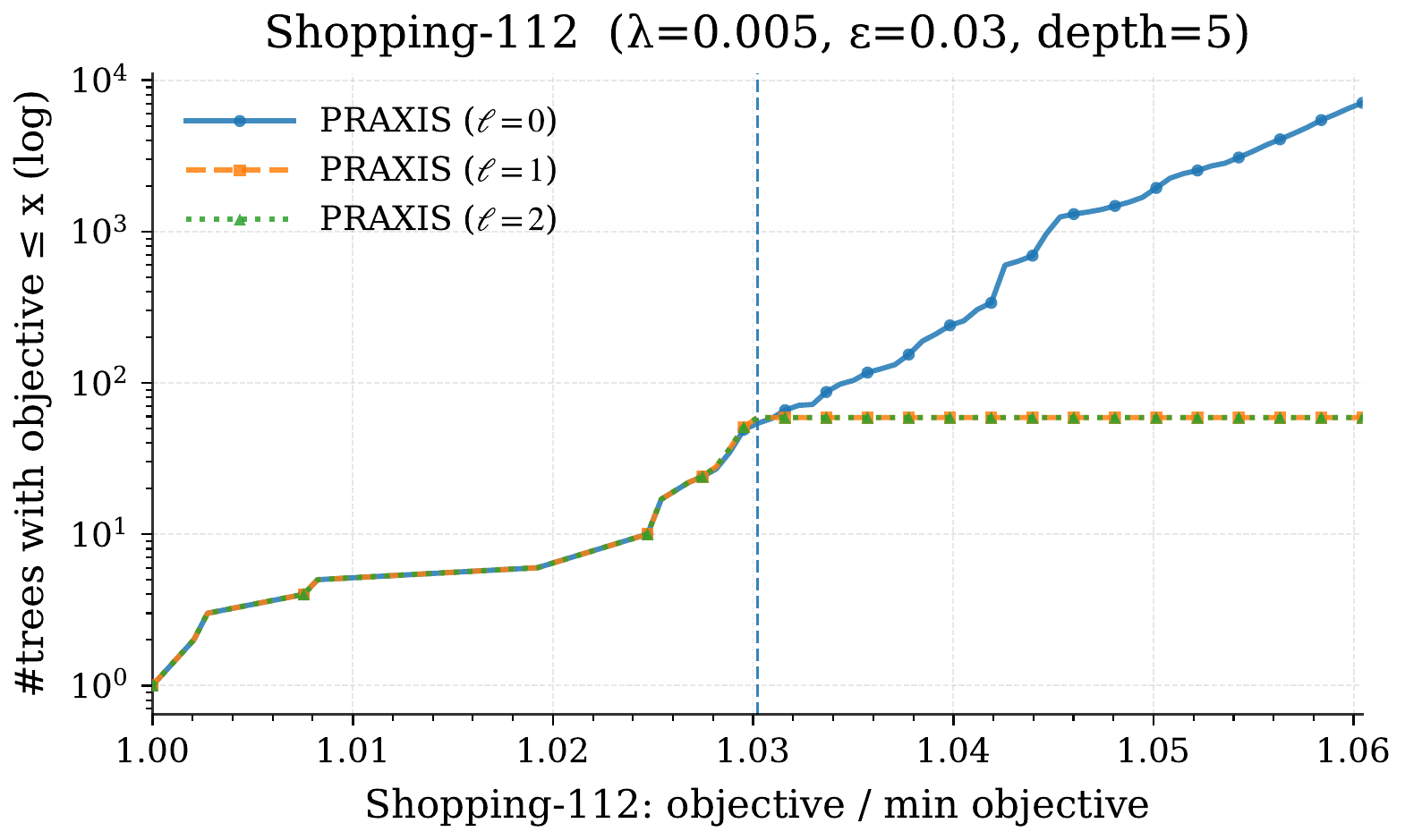} \\[4pt]
\includegraphics[width=0.48\linewidth]{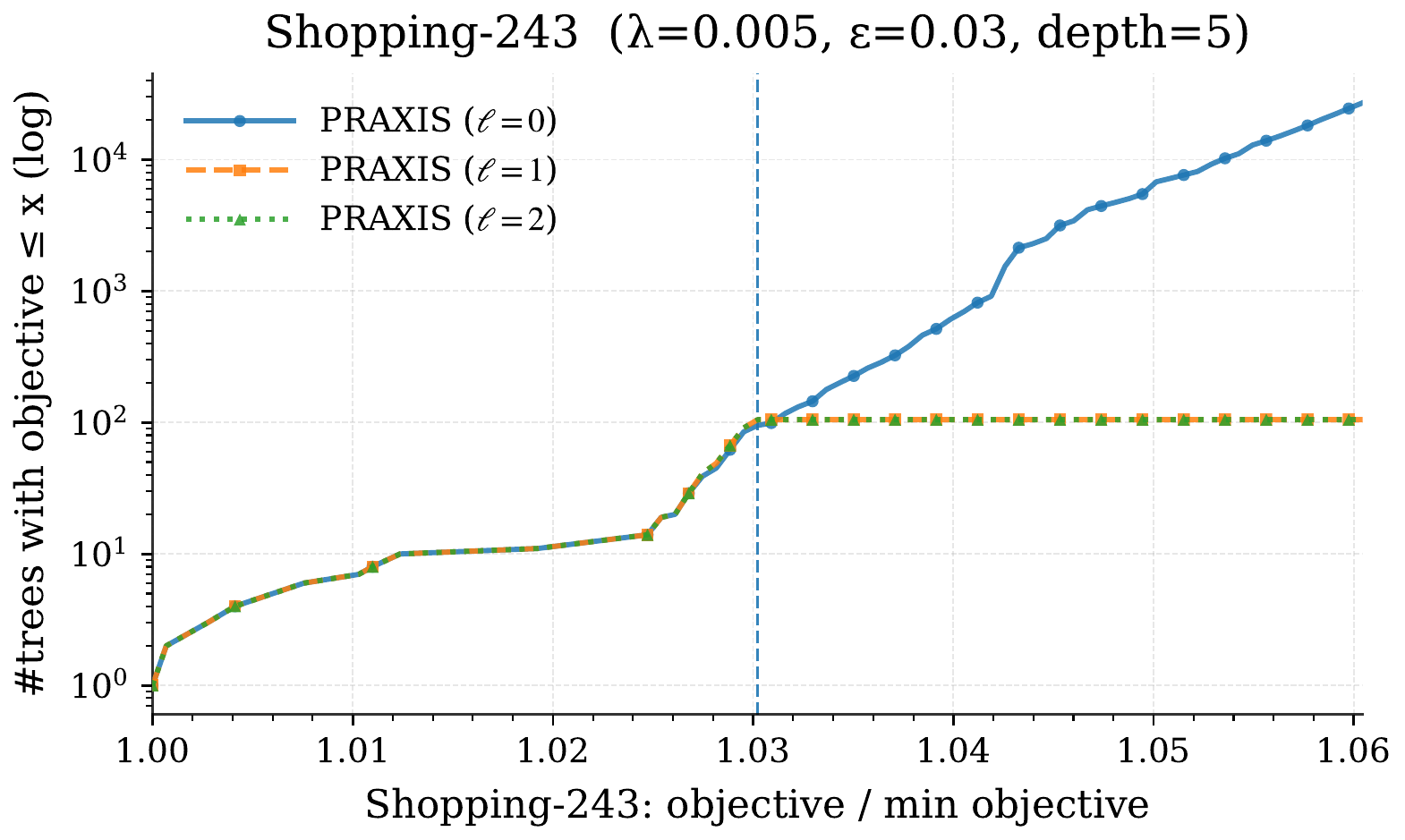} &
\includegraphics[width=0.48\linewidth]{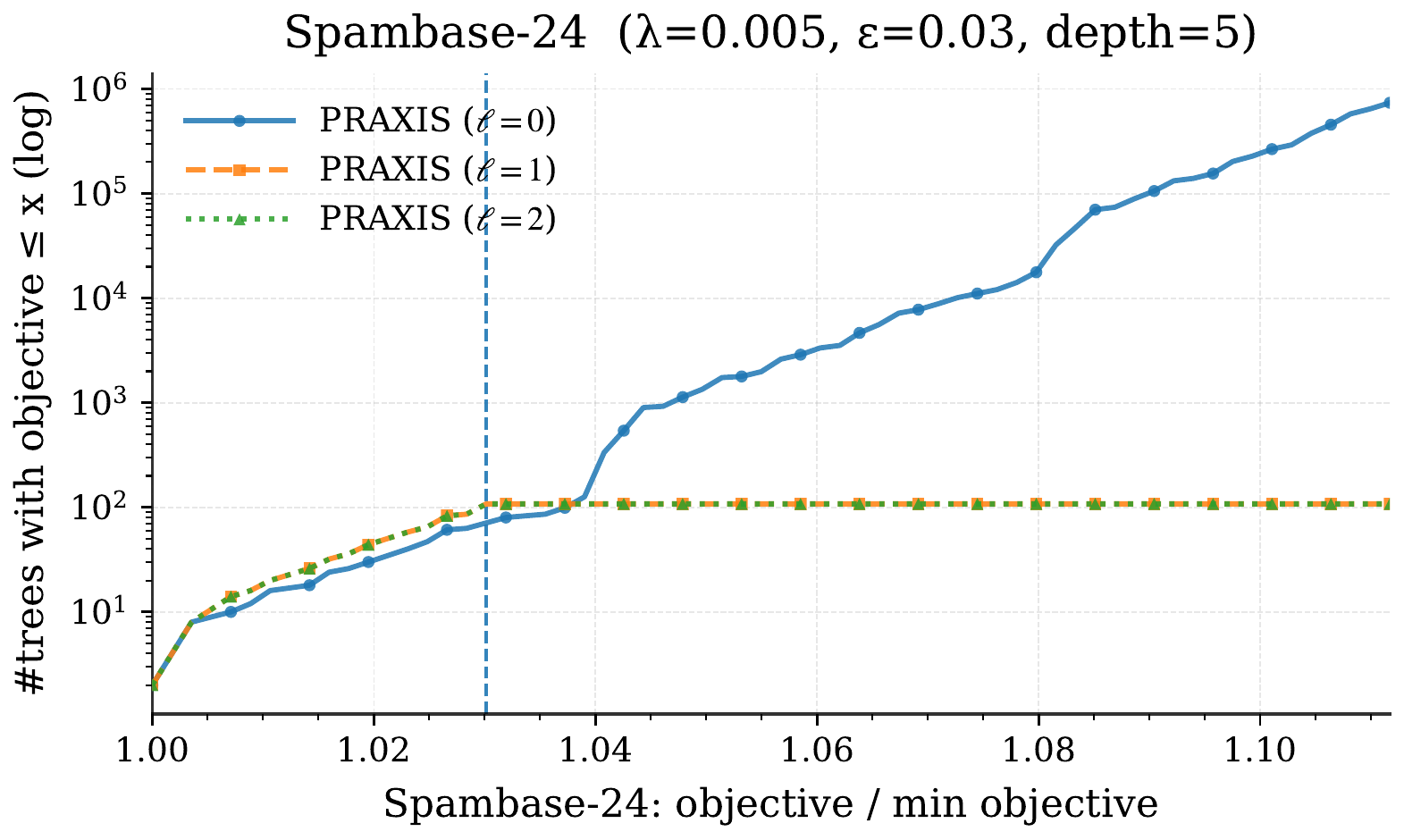} \\[4pt]
\includegraphics[width=0.48\linewidth]{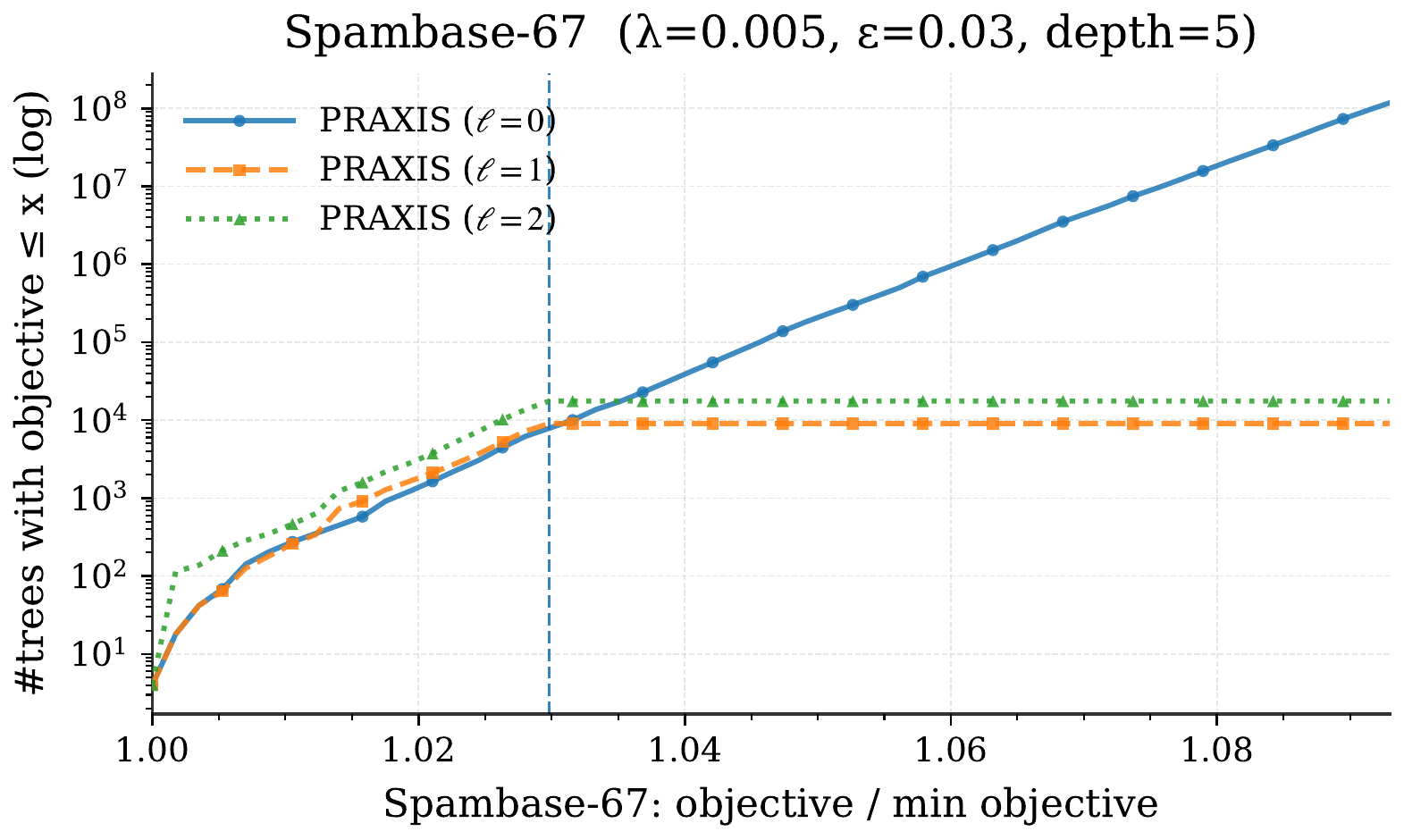} &
\includegraphics[width=0.48\linewidth]{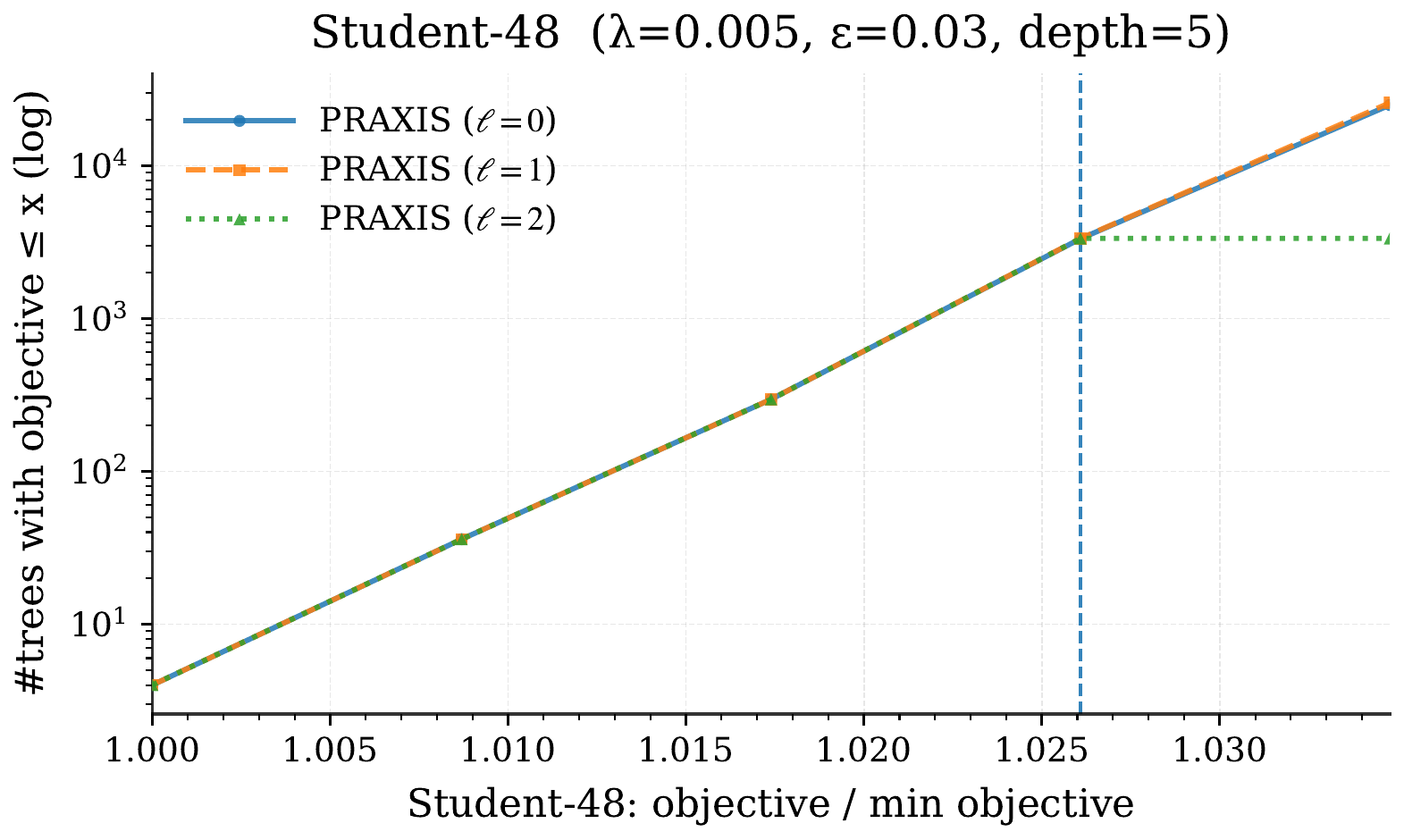} \\[4pt]
\includegraphics[width=0.48\linewidth]{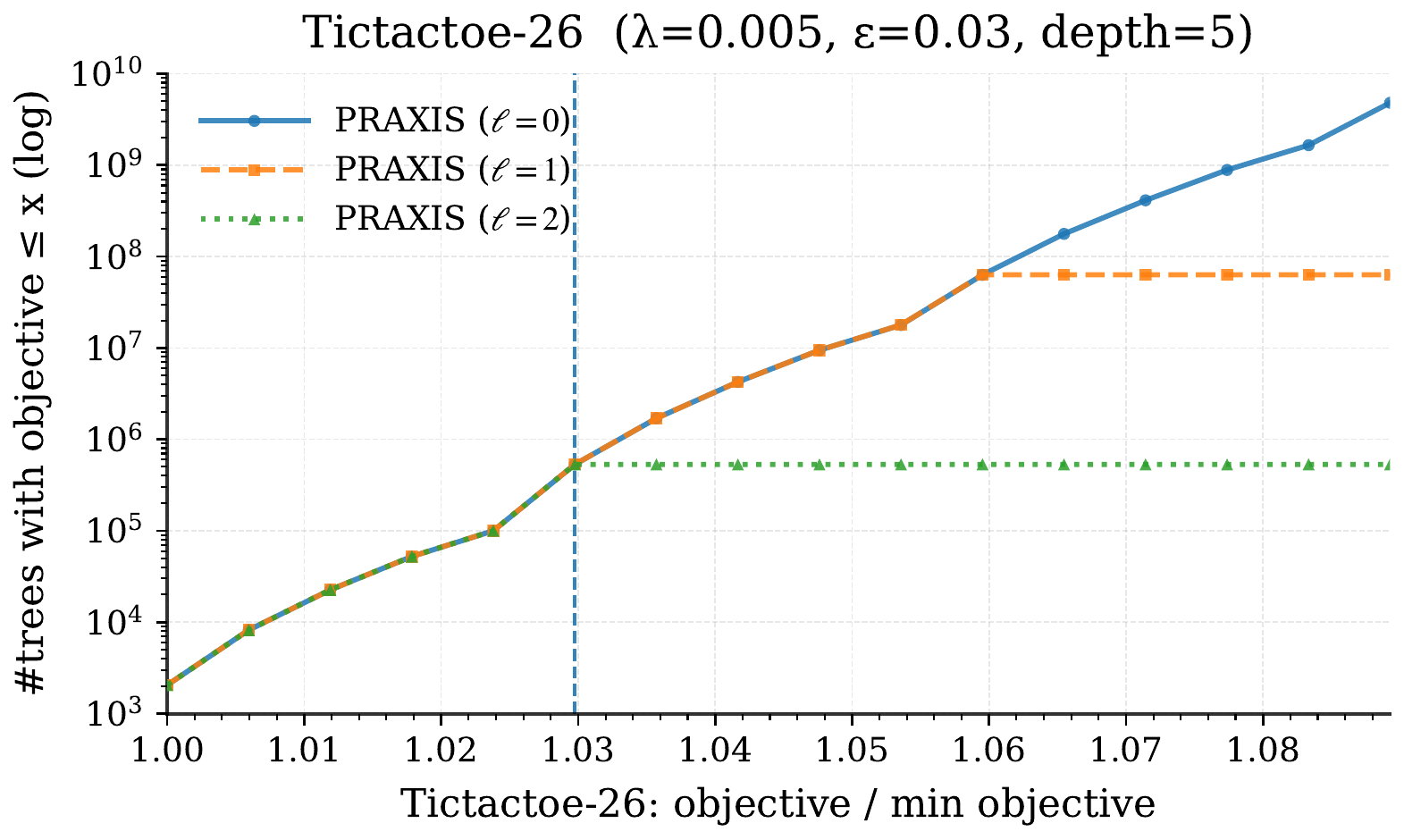} &
\includegraphics[width=0.48\linewidth]{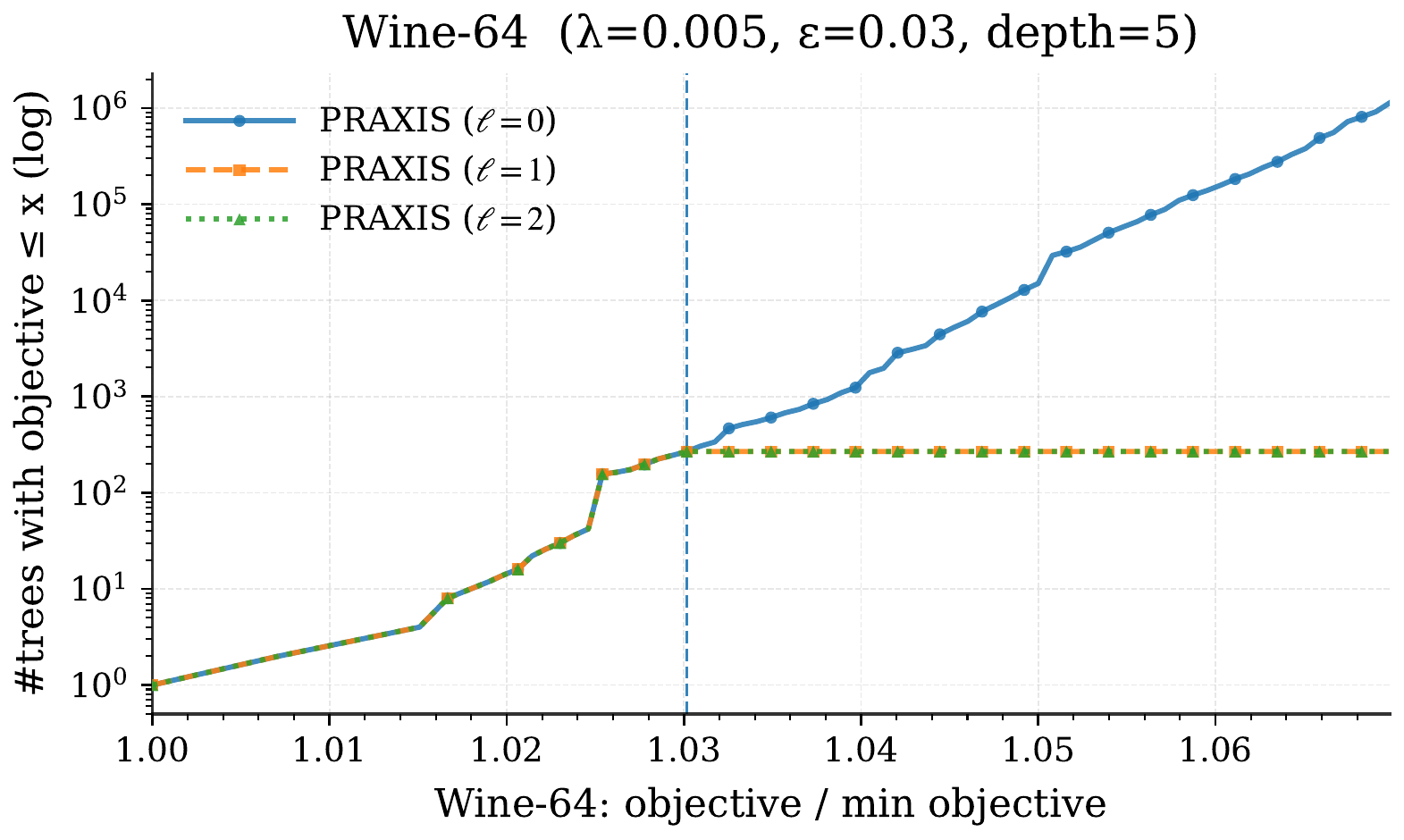}
\end{tabular}
\end{figure}
\FloatBarrier

\FloatBarrier

\FloatBarrier

\subsubsection{Recall with greedy proxy algorithm}
\label{appendix:greedyrecall}

\autoref{tab:greedy-mult-recall-meanpmstd-only} shows that using the greedy proxy algorithm in \algname can perform significantly worse than using modified LicketyRESPLIT, which was shown in  \autoref{tab:recall-abs-vs-mult-meanpmstd}. While the majority of datasets had recall above $0.85$, there existed datasets (Madelon and TicTacToe) where \algname recovered a small fraction or none of the true Rashomon set. Additionally, some datasets (particularly at lower $\lambda$ values) were unable to complete using the greedy proxy algorithm due to the suboptimality of the objectives it returned on them.

\begin{table*}[!b]
\centering
\caption{Multiplicative-$\varepsilon_{\textrm{mult}}$ (Mult) Rashomon recall across $\lambda \in \{0.02, 0.01, 0.005, 0.0025\}$ (depth $=5$, $\varepsilon_{\textrm{mult}}=0.03$) using the greedy proxy algorithm. Values are mean $\pm$ std over 10 bootstraps. Missing (--) entries indicate no completed runs. For the Bank dataset at $\lambda = 0.0025$, no standard deviation is reported because only a single bootstrap run completed within the 200 GB memory limit.}
\label{tab:greedy-mult-recall-meanpmstd-only}

\footnotesize
\setlength{\tabcolsep}{3pt}
\renewcommand{\arraystretch}{0.92}

\begin{tabular}{@{}l r r r r@{}}
\toprule
Dataset & $\lambda=0.02$ & $\lambda=0.01$ & $\lambda=0.005$ & $\lambda=0.0025$ \\
\midrule
Adult-14        & $0.987\!\pm\!0.021$ & $1.000\!\pm\!0.000$ & $0.979\!\pm\!0.011$ & $0.996\!\pm\!0.003$ \\
Aging-57        & $1.000\!\pm\!0.000$ & $1.000\!\pm\!0.000$ & $0.912\!\pm\!0.158$ & $0.967\!\pm\!0.057$ \\
Bank-97         & $1.000\!\pm\!0.000$ & $1.000\!\pm\!0.000$ & $0.742\!\pm\!0.164$ & $0.997\!\pm\!\text{--}$ \\
Bike-43         & $0.955\!\pm\!0.142$ & $0.972\!\pm\!0.044$ & $0.847\!\pm\!0.127$ & $0.902\!\pm\!0.122$ \\
Christine-80    & $0.694\!\pm\!0.180$ & $0.933\!\pm\!0.099$ & $0.981\!\pm\!0.015$ & -- \\
Churn-81        & $1.000\!\pm\!0.000$ & $0.946\!\pm\!0.083$ & $0.979\!\pm\!0.048$ & $0.982\!\pm\!0.051$ \\
Compas-44       & $0.958\!\pm\!0.059$ & $0.944\!\pm\!0.085$ & $0.990\!\pm\!0.009$ & $0.995\!\pm\!0.012$ \\
Covertype-45    & $1.000\!\pm\!0.000$ & $0.990\!\pm\!0.014$ & $0.998\!\pm\!0.001$ & $1.000\!\pm\!0.000$ \\
Diabetes-33     & $1.000\!\pm\!0.000$ & $1.000\!\pm\!0.000$ & $1.000\!\pm\!0.000$ & $0.995\!\pm\!0.016$ \\
Droid-84        & $1.000\!\pm\!0.000$ & $1.000\!\pm\!0.000$ & $1.000\!\pm\!0.000$ & $1.000\!\pm\!0.000$ \\
Electricity-94  & $1.000\!\pm\!0.000$ & $0.867\!\pm\!0.097$ & -- & -- \\
Heart-42        & $0.874\!\pm\!0.214$ & $0.871\!\pm\!0.295$ & $0.863\!\pm\!0.304$ & $0.863\!\pm\!0.304$ \\
Helena-84       & $1.000\!\pm\!0.000$ & $0.787\!\pm\!0.120$ & $0.669\!\pm\!0.241$ & $0.562\!\pm\!0.267$ \\
Heloc-65        & $0.983\!\pm\!0.036$ & $0.942\!\pm\!0.066$ & $0.959\!\pm\!0.061$ & -- \\
IOT-86          & $1.000\!\pm\!0.000$ & $1.000\!\pm\!0.000$ & $1.000\!\pm\!0.000$ & $1.000\!\pm\!0.000$ \\
Jasmine-51      & $0.861\!\pm\!0.110$ & $0.911\!\pm\!0.132$ & $0.912\!\pm\!0.121$ & $0.939\!\pm\!0.042$ \\
Madeline-76     & $0.861\!\pm\!0.321$ & -- & -- & -- \\
Madelon-73      & $0.867\!\pm\!0.238$ & $0.014\!\pm\!0.020$ & $0.000\!\pm\!-$ & $0.000\!\pm\!0.000$ \\
Magic-80        & $0.865\!\pm\!0.157$ & $0.971\!\pm\!0.042$ & $0.948\!\pm\!0.043$ & -- \\
Monk2-17        & $1.000\!\pm\!0.000$ & $0.612\!\pm\!0.443$ & $0.813\!\pm\!0.277$ & $0.783\!\pm\!0.359$ \\
Mushroom-13     & $1.000\!\pm\!0.000$ & $1.000\!\pm\!0.000$ & $1.000\!\pm\!0.000$ & $0.943\!\pm\!0.074$ \\
Phishing-44     & $1.000\!\pm\!0.000$ & $1.000\!\pm\!0.000$ & $1.000\!\pm\!0.000$ & $1.000\!\pm\!0.000$ \\
Shopping-112    & $1.000\!\pm\!0.000$ & $1.000\!\pm\!0.000$ & $0.913\!\pm\!0.100$ & $0.953\!\pm\!0.045$ \\
Spambase-24     & $0.955\!\pm\!0.078$ & $1.000\!\pm\!0.000$ & $1.000\!\pm\!0.000$ & $0.996\!\pm\!0.005$ \\
Student-48      & $1.000\!\pm\!0.000$ & $0.900\!\pm\!0.162$ & $0.908\!\pm\!0.180$ & $0.990\!\pm\!0.031$ \\
Taxi-27         & $1.000\!\pm\!0.000$ & $1.000\!\pm\!0.000$ & $1.000\!\pm\!0.000$ & $0.975\!\pm\!0.050$ \\
TicTacToe-26    & $0.240\!\pm\!0.382$ & $0.324\!\pm\!0.393$ & $0.692\!\pm\!0.438$ & $0.730\!\pm\!0.434$ \\
Wine-64         & $1.000\!\pm\!0.000$ & $0.945\!\pm\!0.127$ & $0.985\!\pm\!0.017$ & $0.984\!\pm\!0.007$ \\
\bottomrule
\end{tabular}
\end{table*}

\FloatBarrier

\FloatBarrier
\subsubsection{Proxies with Feature Selection}
An immediate implication of \autoref{thm:runtime} is that \algname scales linearly in the number of binary features outside of the work performed by the proxy algorithm. While an ideal Rashomon set approximation would retain a large feature representation to fully capture the Rashomon effect, any single sparse decision tree would use a small fraction of those features. Given this, it may not be necessary for the proxy algorithm itself to operate over the entire feature set to obtain accurate subproblem estimates. 

We therefore study a hybrid regime in which the AND/OR graph is constructed using a large binarization (on the order of hundreds of thresholds), while the proxy algorithm is restricted to a much smaller subset of features. In our experiments, we obtain both the small and large binarizations using Threshold Guessing \citep{gosdt_guesses}, which prunes binary threshold features to the minimal subset sufficient to match the predictions of a strong reference ensemble (a gradient boosted decision tree). To be precise, for the datasets where we generated two binarizations (with two different parameter sets for GBDTs), we pass the dataset with more columns into \algnamenospace, but we modify our proxy algorithm (modified LicketySPLIT) to use only those features in the intersection of the two binarizations. By limiting the proxy to this reduced feature set while preserving the full binarization for search, our goal is to approximate the Rashomon set equally well but with even less of a computational burden.

In \autoref{tab:feature-selection-summary-lam01} and \autoref{tab:feature-selection-summary}, we see that using feature selection in the proxy algorithm is frequently an order of magnitude faster than using all the features in the larger binarization, while rarely sacrificing in the approximation of the Rashomon set. This fact does have some nuance, however. For datasets such as Christine (with $d=7$), the proxy solution at the root was significantly worse when using feature selection, resulting in a much longer runtime and higher memory consumption. Interestingly, on Christine with $d=5$, the feature selection version actually yielded more trees within the bound (a consequence of the initial budget having more slack).

However, there are some risks to this approach. If the number of features in the intersection is too small (e.g., Spambase), \algname with a restricted proxy will run out of memory ($d=7$, due to a very poor budget initialization) or take longer than the full variant ($d=5$). 

We include this experiment to illustrate that \algname can accommodate extremely large feature spaces (giving much better handling of continuous features, something that is not possible in any other Rashomon set work), provided the proxy is constrained in a principled way.

\begin{table*}[t]
\centering
\scriptsize
\setlength{\tabcolsep}{4pt}
\renewcommand{\arraystretch}{1.1}

\caption{Effect of feature selection on runtime, memory, and Rashomon set size.
Memory is reported as peak RSS usage (Peak MB). $\lambda=0.01, \varepsilon_{\textrm{mult}}=0.03, d=5$. Trees are counted only if they are within a $1+\varepsilon_{\textrm{mult}}$ factor of the best tree either method found.}
\label{tab:feature-selection-summary-lam01}

\begin{tabular}{lrr rrr rrr}
\toprule
& \multicolumn{2}{c}{Features}
& \multicolumn{3}{c}{Feature Selection}
& \multicolumn{3}{c}{Full Features} \\
\cmidrule(lr){2-3} \cmidrule(lr){4-6} \cmidrule(lr){7-9}
Dataset
& FS & Full
& Time (s) & Peak MB & Trees
& Time (s) & Peak MB & Trees \\
\midrule

Adult        & 14 & 209 & 4.26        & 144.27      & 128      & 290.24      & 675.64      & 128 \\
Bank         & 78 & 217 & 16.27       & 235.65      & 6        & 56.24       & 323.10      & 6 \\
Bike         & 42 & 164 & 11.16       & 209.24      & 594      & 67.22       & 571.40      & 594 \\
Covertype    & 45 & 96  & 233.35      & 579.30      & 1{,}592  & 748.10      & 1{,}301.23  & 1{,}592 \\
Christine    & 61 & 231 & 1{,}291.73  & 11{,}056.51 & 110{,}338 & 8{,}670.74  & 137{,}812.63 & 108{,}301 \\
Churn        & 48 & 472 & 29.99       & 505.58      & 18{,}756 & 1{,}294.29  & 5{,}493.56  & 19{,}461 \\
Credit       & 86 & 225 & 11.92       & 189.42      & 2        & 37.13       & 263.52      & 2 \\
Electricity  & 76 & 264 & 4{,}721.27  & 17{,}737.12 & 240{,}818 & 28{,}839.68 & 77{,}920.45 & 240{,}818 \\
Helena       & 32 & 156 & 5.38        & 249.57      & 74       & 60.02       & 361.47      & 74 \\
Jannis       & 45 & 247 & 5{,}548.03  & 8{,}489.09  & 502{,}754 & 29{,}538.39 & 120{,}010.49 & 503{,}636 \\
Jasmine      & 40 & 207 & 16.02       & 449.68      & 310      & 253.28      & 4{,}734.81  & 310 \\
Madeline     & 41 & 451 & --          & --          & --       & --          & --          & -- \\
Madelon      & 43 & 186 & --          & --          & --       & 8{,}075.54  & 183{,}972.04 & 423{,}264 \\
Magic        & 54 & 167 & 264.80      & 1{,}287.62  & 630{,}949 & 1{,}554.83  & 10{,}724.28 & 755{,}584 \\
Shopping     & 74 & 243 & 20.74       & 259.70      & 15       & 116.08      & 675.98      & 15 \\
Spambase     & 10 & 67  & 13.33       & 2{,}374.39  & 3{,}911  & 2.93        & 209.73      & 4{,}901 \\

\bottomrule
\end{tabular}
\end{table*}

\begin{table*}[t]
\centering
\scriptsize
\setlength{\tabcolsep}{4pt}
\renewcommand{\arraystretch}{1.1}

\caption{Effect of feature selection on runtime, memory, and Rashomon set size.
Memory is reported as peak RSS usage (Peak MB). $\lambda=0.0075, \varepsilon_{\textrm{mult}}=0.01, d=7$. Trees are counted only if they are within a $1+\varepsilon_{\textrm{mult}}$ factor of the best tree either method found.}
\label{tab:feature-selection-summary}

\begin{tabular}{lrr rrr rrr}
\toprule
& \multicolumn{2}{c}{Features}
& \multicolumn{3}{c}{Feature Selection}
& \multicolumn{3}{c}{Full Features} \\
\cmidrule(lr){2-3} \cmidrule(lr){4-6} \cmidrule(lr){7-9}
Dataset
& FS & Full
& Time (s) & Peak MB & Trees
& Time (s) & Peak MB & Trees \\
\midrule

Adult        & 14 & 209 & 3.64        & 144.27      & 22        & 438.45      & 491.71      & 22 \\
Bank         & 78 & 217 & 19.75       & 246.61      & 1         & 110.42      & 302.35      & 1 \\
Bike         & 42 & 164 & 80.66       & 649.51      & 264       & 671.85      & 1{,}542.75  & 264 \\
Covertype    & 45 & 96  & 171.48      & 579.30      & 114       & 1{,}083.31  & 1{,}301.23  & 114 \\
Christine    & 61 & 231 & 6{,}570.90  & 57{,}737.20 & 893       & 7{,}088.70  & 21{,}987.82 & 893 \\
Churn        & 48 & 472 & 23.54       & 432.37      & 4{,}800   & 919.79      & 1{,}810.98  & 4{,}800 \\
Credit       & 86 & 225 & 35.90       & 238.98      & 2         & 170.45      & 325.19      & 2 \\
Electricity  & 76 & 264 & 2{,}996.05  & 5{,}528.33  & 140{,}517 & 22{,}176.67 & 12{,}875.72 & 140{,}517 \\
Helena       & 32 & 156 & 3.07        & 242.51      & 8         & 64.56       & 309.66      & 8 \\
Jannis       & 45 & 247 & 84{,}690.03 & 133{,}152.96 & 3{,}553{,}370 & -- & -- & -- \\
Jasmine      & 40 & 207 & 109.08      & 2{,}259.58  & 76        & 1{,}400.28  & 7{,}145.45  & 76 \\
Madeline     & 41 & 451 & --          & --          & --        & --          & --          & -- \\
Madelon      & 43 & 186 & --          & --          & --        & --          & --          & -- \\
Magic        & 54 & 167 & 227.91      & 1{,}432.90  & 11{,}547  & 1{,}578.62  & 3{,}224.09  & 11{,}547 \\
Shopping     & 74 & 243 & 28.07       & 337.92      & 7         & 227.76      & 762.61      & 7 \\
Spambase     & 10 & 67  & --          & --          & --        & 2.72        & 163.99      & 216 \\

\bottomrule
\end{tabular}
\end{table*}

We additionally compute the Rashomon Importance Distribution (RID; \citealp{DonnellyEtAl2024}) using \algname to assess the effect of binarization size. \autoref{tab:featimp-nonzero-combined} reports, for each dataset feature, whether it receives zero or nonzero importance under the RID when using a larger versus smaller binarization in the proxy algorithm. We group all thresholds corresponding to the same continuous feature into a single importance score (through the RID binning map), while treating each one-hot encoded categorical variable as a distinct feature.

Across both datasets, the larger binarization identifies a substantial number of features that receive nonzero importance in the Rashomon set but are absent under the smaller binarization. At the same time, it also reveals that some features highlighted by the smaller binarization are in fact irrelevant to all near-optimal models. As a result, disagreements in variable importance between the RID computed on the smaller and larger binarizations comprise a substantial fraction of the features in the union of the two binarizations (22.3\% for Christine and 29.9\% for Jannis).

\begin{table}[t]
\centering
\caption{Feature-level agreement between binarization thresholds}
\label{tab:featimp-nonzero-combined}

\begin{subtable}[t]{0.45\textwidth}
\centering
\caption*{\textbf{Christine} (157 features (union of both binarization))}
\begin{tabular}{@{}lcc@{}}
\toprule
& \multicolumn{2}{c}{\textbf{Smaller Binarization}} \\
\cmidrule(lr){2-3}
\textbf{Larger Binarization} & Not Important & Yes Important \\
\midrule
Not Important  & 86 & 24 \\
Yes Important & 11 & 36 \\
\midrule
\multicolumn{3}{@{}l}{\small\textit{Agreement metrics}} \\
\quad Disagreement & \multicolumn{2}{l}{35 (22.3\%)} \\
\quad Both nonzero & \multicolumn{2}{l}{36 features} \\
\quad Jaccard similarity & \multicolumn{2}{l}{0.507} \\
\bottomrule
\end{tabular}
\end{subtable}
\hfill
\begin{subtable}[t]{0.45\textwidth}
\centering
\caption*{\textbf{Jannis} (127 features (union of both binarization))}
\begin{tabular}{@{}lcc@{}}
\toprule
& \multicolumn{2}{c}{\textbf{Smaller Binarization}} \\
\cmidrule(lr){2-3}
\textbf{Larger Binarization} & Not Important & Yes Important \\
\midrule
Not Important  & 69 & 34 \\
Yes Important & 4  & 20 \\
\midrule
\multicolumn{3}{@{}l}{\small\textit{Agreement metrics}} \\
\quad Disagreement & \multicolumn{2}{l}{38 (29.9\%)} \\
\quad Both nonzero & \multicolumn{2}{l}{20 features} \\
\quad Jaccard similarity & \multicolumn{2}{l}{0.345} \\
\bottomrule
\end{tabular}
\end{subtable}

\vspace{0.5em}
\small
\end{table}

\FloatBarrier 

 \subsection{Extracting smaller Rashomon sets}
 \label{section:slack}

 In \autoref{section:optimal-tree-finder}, we showed that \algname can recover exactly optimal trees when the budget initialization is set with $\varepsilon_{\textrm{mult}}=0$ nearly all of the time, or, in every case tested, when one sets $\varepsilon_{\textrm{mult}}=0.03$. Here, we show that a larger $\varepsilon_{\textrm{mult}}$ will contain a great approximation of the Rashomon set corresponding to a smaller $\varepsilon_{\textrm{mult}}$. In particular, we run \algname with $\varepsilon_{\textrm{mult}}=0.03$ and evaluate the recall of the Rashomon set characterized by $\varepsilon_{\textrm{mult}}=0.01$ across a range of $
 \lambda$ values.

 \begin{table*}[!b]
\centering
\caption{Running \algname with $\varepsilon_{\textrm{mult}}=0.03$ but evaluating recall for the Rashomon set characterized by $\varepsilon_{\textrm{mult}}=0.01$.}
\label{tab:subset-recall-k1-over-full-meanpmstd}

\footnotesize
\setlength{\tabcolsep}{3pt}
\renewcommand{\arraystretch}{0.92}

\begin{tabular}{@{}l r r r r@{}}
\toprule
 & $\lambda=0.02$ & $\lambda=0.01$ & $\lambda=0.005$ & $\lambda=0.0025$ \\
\cmidrule(lr){2-2}\cmidrule(lr){3-3}\cmidrule(lr){4-4}\cmidrule(lr){5-5}
Dataset & Subset Recall & Subset Recall & Subset Recall & Subset Recall \\
\midrule
Adult-14         & $1.000\!\pm\!0.000$ & $1.000\!\pm\!0.000$ & $1.000\!\pm\!0.000$ & $1.000\!\pm\!0.000$ \\
Aging-57         & $1.000\!\pm\!0.000$ & $1.000\!\pm\!0.000$ & $1.000\!\pm\!0.000$ & $1.000\!\pm\!0.000$ \\
Bank-97          & $1.000\!\pm\!0.000$ & $1.000\!\pm\!0.000$ & $1.000\!\pm\!0.000$ & $1.000\!\pm\!0.000$ \\
Bike-43          & $1.000\!\pm\!0.000$ & $1.000\!\pm\!0.000$ & $1.000\!\pm\!0.000$ & $1.000\!\pm\!0.000$ \\
Christine-80     & $1.000\!\pm\!0.000$ & $1.000\!\pm\!0.000$ & $1.000\!\pm\!0.000$ & $1.000\!\pm\!0.000$ \\
Churn-81         & $1.000\!\pm\!0.000$ & $1.000\!\pm\!0.000$ & $0.998\!\pm\!0.008$ & $0.998\!\pm\!0.007$ \\
Compas-44        & $1.000\!\pm\!0.000$ & $1.000\!\pm\!0.000$ & $1.000\!\pm\!0.000$ & $1.000\!\pm\!0.000$ \\
Covertype-45     & $1.000\!\pm\!0.000$ & $1.000\!\pm\!0.000$ & $1.000\!\pm\!0.000$ & $1.000\!\pm\!0.000$ \\
Diabetes-33      & $1.000\!\pm\!0.000$ & $1.000\!\pm\!0.000$ & $1.000\!\pm\!0.000$ & $1.000\!\pm\!0.000$ \\
Droid-84         & $1.000\!\pm\!0.000$ & $1.000\!\pm\!0.000$ & $1.000\!\pm\!0.000$ & $1.000\!\pm\!0.000$ \\
Electricity-94   & $1.000\!\pm\!0.000$ & $0.993\!\pm\!0.015$ & $0.981\!\pm\!0.045$ & $0.990\!\pm\!0.018$ \\
Heart-42         & $1.000\!\pm\!0.000$ & $0.999\!\pm\!0.004$ & $0.900\!\pm\!0.316$ & $0.900\!\pm\!0.316$ \\
Helena-84        & $1.000\!\pm\!0.000$ & $1.000\!\pm\!0.000$ & $1.000\!\pm\!0.000$ & $1.000\!\pm\!0.000$ \\
Heloc-65         & $1.000\!\pm\!0.000$ & $1.000\!\pm\!0.000$ & $1.000\!\pm\!0.000$ & $1.000\!\pm\!0.000$ \\
IOT-86           & $1.000\!\pm\!0.000$ & $1.000\!\pm\!0.000$ & $1.000\!\pm\!0.000$ & $1.000\!\pm\!0.000$ \\
Jasmine-51       & $1.000\!\pm\!0.000$ & $1.000\!\pm\!0.000$ & $1.000\!\pm\!0.000$ & $0.996\!\pm\!0.013$ \\
Madeline-76      & $1.000\!\pm\!0.000$ & $1.000\!\pm\!0.000$ & $1.000\!\pm\!0.000$ & $0.890\!\pm\!0.272$ \\
Madelon-73       & $1.000\!\pm\!0.000$ & $1.000\!\pm\!0.000$ & $1.000\!\pm\!0.000$ & $0.998\!\pm\!0.005$ \\
Magic-80         & $1.000\!\pm\!0.000$ & $1.000\!\pm\!0.000$ & $0.999\!\pm\!0.002$ & $1.000\!\pm\!0.000$ \\
Monk2-17         & $1.000\!\pm\!0.000$ & $0.967\!\pm\!0.104$ & $0.997\!\pm\!0.010$ & $0.871\!\pm\!0.203$ \\
Mushroom-13      & $1.000\!\pm\!0.000$ & $1.000\!\pm\!0.000$ & $1.000\!\pm\!0.000$ & $1.000\!\pm\!0.000$ \\
Phishing-44      & $1.000\!\pm\!0.000$ & $1.000\!\pm\!0.000$ & $1.000\!\pm\!0.000$ & $0.950\!\pm\!0.158$ \\
Shopping-112     & $1.000\!\pm\!0.000$ & $1.000\!\pm\!0.000$ & $1.000\!\pm\!0.000$ & $1.000\!\pm\!0.000$ \\
Spambase-24      & $1.000\!\pm\!0.000$ & $1.000\!\pm\!0.000$ & $1.000\!\pm\!0.000$ & $1.000\!\pm\!0.000$ \\
Student-48       & $1.000\!\pm\!0.000$ & $1.000\!\pm\!0.000$ & $0.992\!\pm\!0.027$ & $0.991\!\pm\!0.028$ \\
Taxi-27          & $1.000\!\pm\!0.000$ & $1.000\!\pm\!0.000$ & $1.000\!\pm\!0.000$ & $1.000\!\pm\!0.000$ \\
TicTacToe-26     & $1.000\!\pm\!0.000$ & $1.000\!\pm\!0.000$ & $0.986\!\pm\!0.043$ & $1.000\!\pm\!0.000$ \\
Wine-64          & $1.000\!\pm\!0.000$ & $1.000\!\pm\!0.000$ & $1.000\!\pm\!0.000$ & $1.000\!\pm\!0.000$ \\
\bottomrule
\end{tabular}
\end{table*}

\FloatBarrier

\subsection{Allowing Non-Majority Leaf Predictions}

All existing Rashomon set algorithms enumerate decision trees under the restriction that each leaf predicts the majority class of the samples it contains. While this convention simplifies enumeration, it excludes trees that are still within the Rashomon bound.

\algname removes this restriction by constructing a more flexible AND/OR graph representation that supports enumeration and extraction of trees with any feasible leaf predictions (for a finite number of classes), provided that the resulting tree remains within the specified objective budget. This additional flexibility is important for downstream analyses that depend on the full diversity of feasible models, such as the Rashomon Importance Distribution \citep{DonnellyEtAl2024}.

\autoref{tab:majority-vs-any} compares the size of the Rashomon set returned by \algname when restricting leaves to majority-class predictions versus allowing any leaf prediction that satisfies the budget. Across datasets, permitting non-majority leaf predictions expands the Rashomon set by up to 40\%.

Importantly, enabling non-majority leaf predictions leaves runtime essentially unchanged, because no additional subproblems are explored, since alternative leaf predictions are evaluated only at already-discovered subproblems. Moreover, any non-majority leaf prediction is guaranteed to have an objective no better than the majority-class prediction at that node, so allowing these predictions cannot introduce new splits or alter the structure of the AND/OR search graph. 

We note that all comparisons with existing methods are conducted using the majority-leaf-only setting to ensure fairness, though \algname will allow non-majority leaves by default.

\begin{table}[H]
\centering
\scriptsize
\setlength{\tabcolsep}{6pt}
\begin{tabular}{lrrr}
\toprule
\textbf{Dataset} 
& \textbf{Trees (Majority Leaves)} 
& \textbf{Trees (Any Leaves)} 
& \textbf{Ratio} \\
\midrule
Adult-14        & 374        & 388        & 1.037 \\
Adult-209       & 40{,}445   & 46{,}707   & 1.155 \\
Aging-57        & 58         & 58         & 1.000 \\
Bank-97         & 275        & 275        & 1.000 \\
Bank-217        & 1{,}209    & 1{,}209    & 1.000 \\
Bike-43         & 1{,}185    & 1{,}216    & 1.026 \\
Bike-164        & 28{,}051{,}413 & 33{,}375{,}316 & 1.190 \\
Chess-50        & 13{,}060   & 13{,}844   & 1.060 \\
Christine-80    & 4{,}455{,}530 & 5{,}023{,}805 & 1.128 \\
Churn-81        & 1{,}022{,}472 & 1{,}022{,}472 & 1.000 \\
Churn-472       & 84{,}800{,}256 & 84{,}800{,}256 & 1.000 \\
Compas-44       & 502{,}296  & 631{,}411  & 1.257 \\
Covertype-45    & 3{,}887{,}020 & 4{,}277{,}705 & 1.101 \\
Covertype-96    & 111{,}933{,}980 & 118{,}746{,}673 & 1.061 \\
Credit-134      & 299        & 337        & 1.127 \\
Credit-225      & 1{,}018    & 1{,}166    & 1.145 \\
Diabetes-33     & 1          & 1          & 1.000 \\
Diabetes-121    & 1          & 1          & 1.000 \\
Droid-84        & 21         & 21         & 1.000 \\
Electricity-94  & 45{,}425{,}460 & 50{,}308{,}970 & 1.108 \\
Electricity-264 & 20{,}085{,}242{,}492 & 22{,}739{,}283{,}641 & 1.132 \\
Heart-42        & 295{,}467{,}637{,}808{,}347 & 325{,}114{,}015{,}524{,}404 & 1.100 \\
Helena-84       & 114        & 114        & 1.000 \\
Helena-156      & 228        & 228        & 1.000 \\
Heloc-65        & 392{,}314  & 459{,}669  & 1.172 \\
Higgs-84        & 8{,}021{,}852 & 8{,}607{,}868 & 1.073 \\
IOT-86          & 1          & 1          & 1.000 \\
Jannis-106      & 1{,}749{,}341{,}033 & 2{,}088{,}723{,}863 & 1.194 \\
Jasmine-51      & 9{,}813{,}787 & 12{,}282{,}077 & 1.252 \\
Jasmine-207     & 513{,}101{,}630 & 624{,}323{,}888 & 1.217 \\
Madeline-76     & 596{,}704{,}313 & 627{,}765{,}128 & 1.052 \\
Madelon-73      & 715{,}653{,}012{,}747{,}000 & 1{,}012{,}247{,}433{,}898{,}600 & \textbf{1.414} \\
Magic-80        & 17{,}957{,}431 & 19{,}585{,}383 & 1.091 \\
Magic-167       & 1{,}120{,}242{,}116 & 1{,}198{,}581{,}794 & 1.070 \\
Monk2-17        & 1{,}531{,}085{,}513{,}715{,}984 & 1{,}933{,}476{,}627{,}234{,}832 & 1.263 \\
Mushroom-13     & 18         & 18         & 1.000 \\
Phishing-44     & 18         & 18         & 1.000 \\
Poker-40        & 10{,}470{,}399{,}019 & 14{,}767{,}465{,}903 & \textbf{1.410} \\
Shopping-112    & 59         & 59         & 1.000 \\
Shopping-243    & 105        & 105        & 1.000 \\
Spambase-24     & 108        & 108        & 1.000 \\
Spambase-67     & 9{,}048    & 9{,}048    & 1.000 \\
Student-48      & 25{,}784   & 26{,}996   & 1.047 \\
Taxi-27         & 3          & 3          & 1.000 \\
TicTacToe-26    & 63{,}198{,}720 & 64{,}643{,}200 & 1.023 \\
Wine-64         & 268        & 300        & 1.119 \\
\bottomrule
\end{tabular}
\caption{Effect of majority-leaf-only restriction on Rashomon set size.}
\label{tab:majority-vs-any}
\end{table}

\FloatBarrier

\FloatBarrier

\subsection{Enumerating the full set of Rule Lists}
\label{appendix:rulelistexperiments}

We now compare \algname with the rule-list variant from \autoref{thm:rule-list}, which guarantees that the full Rashomon set of rule lists is contained as a subset of the returned decision trees when the proxy is at least as good as a majority leaf prediction.

We run \algname with a greedy proxy using
\[
\lambda = 0.01,\quad \varepsilon_{\textrm{mult}} = 0.1,\quad d = 5,
\]
and record the resulting budget. We then rerun the algorithm in rule-list mode. There are 4 different rule list variants we consider deploying: for one, we could either use a majority leaf proxy algorithm or a greedy tree algorithm. Additionally, the result of \autoref{thm:rule-list} does not require iterative budget refinement, so we also consider just subtracting the leaf objective for the other side when we recurse.

\autoref{tab:rule-list-enumeration} reports results on eight representative datasets without iterative budget refinement. Using a greedy proxy instead of a majority-leaf predictor recovers substantially more non–rule-list decision trees. Moreover, for several datasets (Bike, Churn, and Covertype), no rule lists exist within 10\% of the greedy tree objective.

Overall, the rule-list variant is not competitive with the default \algname configuration. The default configuration is often up to two orders of magnitude faster, although in some datasets the difference narrows to only a few-fold. We also observe no advantage to using the greedy rule-list variant (without iterative budget refinement) over directly running \algnamenospace. The larger number of trees returned by \algname arises from its iterative budget refinement. Performance improves with iterative budget refinement (see \autoref{tab:rule-list-ibr}), which increases the number of trees returned, but at the cost of additional runtime.

\begin{table}[t]
\centering
\small
\setlength{\tabcolsep}{4.5pt}
\renewcommand{\arraystretch}{1.08}
\caption{
Comparison of rule list enumeration under a fixed budget with
$\lambda = 0.01$, $\varepsilon_{\mathrm{mult}} = 0.1$, and depth $d=5$.
Budgets are defined relative to the greedy tree objective.
The final two columns report the ratio of the number of trees enumerated
by each rule list variant relative to \algname.
}
\label{tab:rule-list-enumeration}
\begin{tabular}{lrrrrrrrr}
\toprule
Dataset
& \multicolumn{2}{c}{Greedy Rule List}
& \multicolumn{2}{c}{Leaf Rule List}
& \multicolumn{2}{c}{\algname (Greedy)}
& \multicolumn{2}{c}{Tree Ratio vs.\ \algname} \\
\cmidrule(lr){2-3}
\cmidrule(lr){4-5}
\cmidrule(lr){6-7}
\cmidrule(lr){8-9}
& Trees & Time (s)
& Trees & Time (s)
& Trees & Time (s)
& Greedy RL & Leaf RL \\
\midrule
Adult-14
& 103{,}481 & 1.606
& 498 & 0.914
& 108{,}844 & 0.321
& 0.951 & 0.005 \\
Aging-57
& 5{,}768 & 72.892
& 5{,}768 & 67.056
& 5{,}768 & 0.095
& 1.000 & 1.000 \\
Bike-43
& 21{,}493{,}743 & 138.512
& 0 & 32.365
& 23{,}485{,}289 & 13.986
& 0.915 & 0.000 \\
Chess-50
& 6{,}867{,}464 & 322.358
& 0 & 89.991
& 6{,}883{,}172 & 7.528
& 0.998 & 0.000 \\
Churn-81
& 25{,}248{,}640 & 995.635
& 2{,}691{,}075 & 766.393
& 25{,}917{,}923 & 18.556
& 0.974 & 0.104 \\
Compas-44
& 11{,}456{,}202{,}024 & 115.098
& 1{,}066{,}678{,}125 & 50.406
& 11{,}926{,}699{,}093 & 62.703
& 0.961 & 0.089 \\
Covertype-45
& 271{,}299{,}290 & 5{,}025.532
& 0 & 3{,}393.202
& 279{,}368{,}958 & 2{,}686.531
& 0.971 & 0.000 \\
Helena-84
& 26{,}968 & 824.717
& 1{,}898 & 638.739
& 30{,}079 & 16.277
& 0.897 & 0.063 \\
\bottomrule
\end{tabular}
\end{table}

\begin{table}[t]
\centering
\small
\setlength{\tabcolsep}{4.5pt}
\renewcommand{\arraystretch}{1.08}
\caption{
Rule list enumeration with iterative budget refinement (RLIBR) under a fixed budget
with $\lambda = 0.01$, $\varepsilon_{\mathrm{mult}} = 0.1$, and depth $d=5$.
Budgets are defined relative to the greedy tree objective.
The final two columns report the ratio of the number of trees enumerated
by RLIBR relative to \algname (Greedy).
}
\label{tab:rule-list-ibr}
\begin{tabular}{lrrrrrrrr}
\toprule
Dataset
& \multicolumn{2}{c}{RLIBR Greedy}
& \multicolumn{2}{c}{RLIBR Leaf}
& \multicolumn{2}{c}{\algname (Greedy)}
& \multicolumn{2}{c}{Tree Ratio vs.\ \algname} \\
\cmidrule(lr){2-3}
\cmidrule(lr){4-5}
\cmidrule(lr){6-7}
\cmidrule(lr){8-9}
& Trees & Time (s)
& Trees & Time (s)
& Trees & Time (s)
& Greedy & Leaf \\
\midrule
Adult-14
& 109{,}147 & 1.772
& 109{,}079 & 1.751
& 108{,}844 & 0.321
& 1.003 & 1.002 \\
Aging-57
& 5{,}768 & 76.568
& 5{,}768 & 68.720
& 5{,}768 & 0.095
& 1.000 & 1.000 \\
Bike-43
& 23{,}644{,}937 & 188.044
& 23{,}321{,}274 & 79.218
& 23{,}485{,}289 & 13.986
& 1.007 & 0.993 \\
Chess-50
& 6{,}883{,}708 & 327.658
& 6{,}847{,}876 & 143.788
& 6{,}883{,}172 & 7.528
& 1.000 & 0.995 \\
Churn-81
& 26{,}094{,}667 & 1{,}122.606
& 26{,}062{,}803 & 1{,}161.384
& 25{,}917{,}923 & 18.556
& 1.007 & 1.006 \\
Compas-44
& 11{,}926{,}719{,}436 & 133.500
& 11{,}866{,}219{,}859 & 133.909
& 11{,}926{,}699{,}093 & 62.703
& 1.000 & 0.996 \\
Covertype-45
& 279{,}650{,}349 & 5{,}971.026
& 271{,}114{,}007 & 4{,}607.987
& 279{,}368{,}958 & 2{,}686.531
& 1.001 & 0.970 \\
Helena-84
& 31{,}622 & 863.677
& 31{,}622 & 656.378
& 30{,}079 & 16.277
& 1.051 & 1.051 \\
\bottomrule
\end{tabular}
\end{table}

\FloatBarrier

\subsection{Depth 7 Rashomon Sets}
\label{appendix:scaling}

Even for approximate algorithms, Rashomon set computation becomes increasingly challenging as depth grows, since the search space expands exponentially. Despite this, \algname remains practical at depths where the approximation offered by RESPLIT struggles. For example, on Magic with 167 binary features, \algname completes in under 7 hours, whereas RESPLIT was projected to require over 150 days before timing out.

More broadly, \algname enables depth-7 Rashomon sets to be approximated efficiently across a wide range of datasets. For instance, Bank with 97 binary features completes in just over 2 minutes (compared to nearly 70 hours for RESPLIT), while Churn with 472 binary features finishes in under 2 hours, where RESPLIT fails to complete at all. Similar behavior is observed across many other datasets.

Beyond being up to four orders of magnitude more efficient than RESPLIT in both runtime and memory, \algname also yields substantially better Rashomon set approximations.  Table~\ref{tab:cfg003-001-d7-rashomon-counts}, on 100\% of datasets with a sufficient number of features (and where both methods finish), \algname returned more trees within the estimated Rashomon bound. RESPLIT frequently returns zero trees within the bounds, even when hundreds of thousands of feasible trees exist and were found by \algname.

We additionally ran SORTeD on each of the datasets shown in the tables for 148 hours. Many datasets ran out of time; the full list is shown in Table \ref{tab:cfg003-001-d7-time-algname-sorted}. This list includes Rashomon sets that \algname approximated in 156 seconds (Bike), 386 seconds (Covertype), 55 seconds (Credit), 11 seconds (Droid) and 54 seconds (Helena).

\begin{table}[H]
\centering
\scriptsize
\setlength{\tabcolsep}{6pt}
\renewcommand{\arraystretch}{1.05}

\caption{Number of trees within the shared Rashomon bound ($\lambda=0.003$, $\varepsilon_{\textrm{mult}}=0.01$, depth $=7$).
The bound is computed as $(1+\varepsilon_{\textrm{mult}})\cdot \min$ objective across \algname and RESPLIT for each dataset. We display only the datasets where both methods finished, and for datasets with at least 40 binary features (to allow the algorithm to build out a reasonably full tree, as depth 7 trees have up to 127 splits).}
\label{tab:cfg003-001-d7-rashomon-counts}

\begin{tabular}{@{}l r r r@{}}
\toprule
Dataset & \algname Count & RESPLIT Count & RESPLIT / \algname \\
\midrule

Aging-57              & \textbf{1}          & \textbf{1}     & 1.00 \\
Bank-97               & \textbf{237}        & \textbf{237}   & 1.00 \\
Christine-80          & \textbf{97,383}     & 0              & 0.00 \\
Churn-81              & \textbf{444,984}    & 0              & 0.00 \\
Compas-44             & \textbf{11,777}     & 1,058            & 0.09 \\
Covertype-45  & \textbf{651,749}    & 82,042         & 0.13 \\
Droid-84              & \textbf{20}         & 4              & 0.20 \\
Electricity-94        & \textbf{338,384}    & 0              & 0.00 \\
Helena-84             & \textbf{90}         & \textbf{90}    & 1.00 \\
Heloc-65              & \textbf{8,955}      & 58             & 0.01 \\
IOT-86                & \textbf{1}          & \textbf{1}     & 1.00 \\
Jasmine-51            & \textbf{6,306}      & 352            & 0.06 \\
Jasmine-207     & \textbf{251,240}    & 8,328          & 0.03 \\
Magic-80              & \textbf{332,550}    & 0              & 0.00 \\
Phishing-44           & \textbf{44}         & 32             & 0.73 \\
Shopping-112           & \textbf{118}        & 46             & 0.39 \\
Spambase-67    & \textbf{383}        & 0              & 0.00 \\
Student-48            & \textbf{56,920,256} & 0              & 0.00 \\
Wine-64               & \textbf{255}        & 47             & 0.18 \\

\bottomrule
\end{tabular}
\end{table}

\begin{table*}[!t]
\centering
\scriptsize
\setlength{\tabcolsep}{3pt}
\renewcommand{\arraystretch}{1.05}

\caption{Runtime and peak memory at $\lambda=0.003$, $\varepsilon_{\textrm{mult}}=0.01$, depth $=7$.}
\label{tab:cfg003-001-d7-time-peak-lickety-resplit}

\begin{tabular}{@{}lrr rr rr@{}}
\toprule
 &  &  &
\multicolumn{2}{c}{\algname} &
\multicolumn{2}{c}{RESPLIT} \\
\cmidrule(lr){4-5} \cmidrule(lr){6-7}
Dataset & $n$ & $k$
& Time & Peak MB
& Time & Peak MB \\
\midrule

Adult      & 48842 & 14  & \textbf{0.19} & \textbf{145.69} & 18.11 & 415.54 \\
Adult      & 48842 & 209 & \textbf{7461.51} & \textbf{2774.41} & -- & -- \\
Aging      & 714   & 57  & \textbf{0.07} & \textbf{130.70} & 80.95 & 342.25 \\
Bank       & 45211 & 97  & \textbf{141.28} & \textbf{290.55} & 247881.42 & 91926.96 \\
Bank       & 45211 & 217 & \textbf{3535.03} & \textbf{1661.61} & -- & -- \\
Bike       & 17379 & 43  & \textbf{2.55} & \textbf{146.74} & -- & -- \\
Bike       & 17379 & 164 & \textbf{156.06} & \textbf{376.99} & -- & -- \\
Chess      & 28056 & 50  & \textbf{6.12} & \textbf{159.23} & -- & -- \\
Christine  & 5418  & 80  & \textbf{3460.69} & \textbf{19184.12} & 35727.41 & 31740.47 \\
Churn      & 5000  & 81  & \textbf{7.09} & \textbf{204.37} & 20299.75 & 113299.24 \\
Churn      & 5000  & 472 & \textbf{6761.06} & \textbf{15688.87} & -- & -- \\
Compas     & 4966  & 44  & \textbf{1.15} & \textbf{147.32} & 1859.00 & 459.10 \\
Covertype  & 581012 & 45 & \textbf{385.79} & \textbf{591.83} & 47552.42 & 22275.11 \\
Covertype  & 581012 & 96 & \textbf{107591.75} & \textbf{24887.65} & -- & -- \\
Credit     & 30000 & 134 & \textbf{54.93} & \textbf{218.25} & -- & -- \\
Credit     & 30000 & 225 & \textbf{241.45} & \textbf{323.15} & -- & -- \\
Diabetes   & 253680 & 33 & \textbf{7.87} & \textbf{288.39} & 6797.93 & 24142.36 \\
Diabetes   & 253680 & 121 & \textbf{373.83} & \textbf{678.28} & -- & -- \\
Droid      & 29332 & 84 & \textbf{11.08} & \textbf{166.18} & 17210.01 & 32359.53 \\
Electricity& 38474 & 94 & \textbf{1212.86} & \textbf{2107.44} & 26917.56 & 36508.97 \\
Helena     & 65196 & 84 & \textbf{53.73} & \textbf{221.91} & 27537.87 & 103029.52 \\
Helena     & 65196 & 156 & \textbf{945.48} & \textbf{585.24} & -- & -- \\
Heloc      & 2502  & 65 & \textbf{92.73} & \textbf{1291.25} & 4953.27 & 5481.90 \\
IOT        & 123117 & 86 & \textbf{0.75} & \textbf{306.32} & 5.35 & 808.59 \\
Jasmine    & 2984  & 51 & \textbf{273.50} & 9100.83 & 767.51 & \textbf{2055.16} \\
Jasmine    & 2984  & 207 & \textbf{1651.24} & \textbf{6359.39} & 249459.32 & 78873.87 \\
Madeline   & 3140  & 76 & \textbf{20.29} & \textbf{329.56} & -- & -- \\
Magic      & 19020 & 80 & \textbf{2607.54} & \textbf{5061.03} & 40294.21 & 17221.75 \\
Magic      & 19020 & 167 & \textbf{24503.25} & \textbf{14662.51} & -- & -- \\
Monk2      & 601   & 17 & \textbf{0.10} & \textbf{137.16} & -- & -- \\
Mushroom   & 8124  & 13 & \textbf{0.01} & \textbf{130.81} & 1.79 & 145.43 \\
News       & 39644 & 196 & \textbf{61652.99} & \textbf{24006.25} & -- & -- \\
Phishing   & 11055 & 44 & \textbf{0.91} & \textbf{139.52} & 968.78 & 2222.81 \\
Poker      & 1025010 & 40 & \textbf{9883.75} & \textbf{990.20} & -- & -- \\
Shopping   & 12330 & 112 & \textbf{393.77} & \textbf{662.75} & 69335.79 & 43121.48 \\
Shopping   & 12330 & 243 & \textbf{11203.95} & \textbf{6963.14} & -- & -- \\
Spambase   & 4601  & 24 & \textbf{0.17} & \textbf{130.68} & 39.61 & 377.48 \\
Spambase   & 4601  & 67 & \textbf{53.20} & \textbf{433.59} & 2271.98 & 3803.36 \\
Student    & 649   & 48 & \textbf{2.43} & \textbf{197.89} & 362.59 & 1410.65 \\
Taxi       & 1224158 & 27 & \textbf{18.39} & \textbf{846.56} & 13034.83 & 14111.06 \\
TicTacToe  & 958   & 26 & \textbf{17.45} & \textbf{4087.46} & -- & -- \\
Wine       & 6497  & 64 & \textbf{41.15} & \textbf{400.14} & 5735.99 & 11738.09 \\

\bottomrule
\end{tabular}
\end{table*}

\begin{table*}[!t]
\centering
\scriptsize
\setlength{\tabcolsep}{4pt}
\renewcommand{\arraystretch}{1.05}

\caption{\algname and SORTeD Runtime at $\lambda=0.003$, $\varepsilon_{\textrm{mult}}=0.01$, depth $=7$.}
\label{tab:cfg003-001-d7-time-algname-sorted}

\begin{tabular}{@{}lrr rr@{}}
\toprule
 &  &  &
\multicolumn{2}{c}{Runtime} \\
\cmidrule(lr){4-5}
Dataset & $n$ & $k$
& \algname & SORTeD \\
\midrule

Adult      & 48842   & 14  & \textbf{0.19}      &  39.53 \\
Adult      & 48842   & 209 & \textbf{7461.51}   & $>$ 532800  \\
Aging      & 714     & 57  & \textbf{0.07}      &  414.39 \\
Bank       & 45211   & 97  & \textbf{141.28}    &  $>$ 200 GB\\
Bank       & 45211   & 217 & \textbf{3535.03}   &  $>$ 200 GB \\
Bike       & 17379   & 43  & \textbf{2.55}      & 4103.58 \\
Bike       & 17379   & 164 & \textbf{156.06}    &  $>$ 532800 \\
Chess      & 28056   & 50  & \textbf{6.12}      &  5841.20 \\
Christine  & 5418    & 80  & \textbf{3460.69}   & $>$ 200 GB \\
Churn      & 5000    & 81  & \textbf{7.09}      & 41797.70 \\
Churn      & 5000    & 472 & \textbf{6761.06}   &  $>$ 532800  \\
Compas     & 4966    & 44  & \textbf{1.15}      &   1352.45\\
Covertype  & 581012  & 45  & \textbf{385.79}    &  $>$ 532800  \\
Covertype  & 581012  & 96  & \textbf{107591.75} &  $>$ 532800  \\
Credit     & 30000   & 134 & \textbf{54.93}     &  $>$ 532800 \\
Credit     & 30000   & 225 & \textbf{241.45}    &  $>$ 532800 \\
Diabetes   & 253680  & 33  & \textbf{7.87}      &  104729.81 \\
Diabetes   & 253680  & 121 & \textbf{373.83}    & $>$ 532800 \\
Droid      & 29332   & 84  & \textbf{11.08}     &  $>$ 532800 \\
Electricity& 38474   & 94  & \textbf{1212.86}   &  $>$ 532800\\
Helena     & 65196   & 84  & \textbf{53.73}     &   $>$ 532800\\
Helena     & 65196   & 156 & \textbf{945.48}    &   $>$ 532800\\
Heloc      & 2502    & 65  & \textbf{92.73}     &  54344.65 \\
IOT        & 123117  & 86  & \textbf{0.75}      &  24.84\\
Jasmine    & 2984    & 51  & \textbf{273.50}    &  7167.92\\
Jasmine    & 2984    & 207 & \textbf{1651.24}   &  $>$ 532800 \\
Madeline   & 3140    & 76  & \textbf{20.29}     &  150090.54 \\
Magic      & 19020   & 80  & \textbf{2607.54}   &  152162.28 \\
Magic      & 19020   & 167 & \textbf{24503.25}  &   $>$ 532800 \\
Monk2      & 601     & 17  & \textbf{0.10}      &  2.68 \\
Mushroom   & 8124    & 13  & \textbf{0.01}      &  0.96 \\
News       & 39644   & 196 & \textbf{61652.99}  &  $>$ 532800 \\
Phishing   & 11055   & 44  & \textbf{0.91}      &  6195.96 \\
Poker      & 1025010 & 40  & \textbf{9883.75}   &  $>$ 200 GB\\
Shopping   & 12330   & 112 & \textbf{393.77}    &  $>$ 200 GB\\
Shopping   & 12330   & 243 & \textbf{11203.95}  &  $>$ 200 GB \\
Spambase   & 4601    & 24  & \textbf{0.17}      &  79.16 \\
Spambase   & 4601    & 67  & \textbf{53.20}     &  61253.28 \\
Student    & 649     & 48  & \textbf{2.43}      &  1545.37 \\
Taxi       & 1224158 & 27  & \textbf{18.39}     &  138645.19 \\
TicTacToe  & 958     & 26  & \textbf{17.45}     &  137.47 \\
Wine       & 6497    & 64  & \textbf{41.15}     &  43200.34 \\

\bottomrule
\end{tabular}
\end{table*}

\FloatBarrier

\subsection{Results on Fully Binarized Datasets}

In \autoref{tab:fully_binarized_continuous}, we also evaluate Rashomon set computation on fully binarized versions of several datasets. For each continuous feature, we generate binary threshold features using the midpoints between consecutive unique values. This produces a fully binarized feature space. As in our earlier experiments, \algname\ achieves near-perfect approximation quality, while requiring substantially less time than prior methods. We additionally ran TreeFARMS on these datasets, but it did not complete on any instance within the prescribed time and memory limits.

\begin{table*}[t]
\centering
\footnotesize
\setlength{\tabcolsep}{8pt}
\renewcommand{\arraystretch}{0.95}
\begin{tabular}{@{}rrrrrr@{}}
\toprule
\textbf{Dataset} & 
\multicolumn{2}{c}{\textbf{PRAXIS}} & 
\multicolumn{2}{c}{\textbf{RESPLIT}} & 
\textbf{SORTeD} \\
\cmidrule(lr){2-3} \cmidrule(lr){4-5}
& \textbf{Time} & \textbf{Recall} & \textbf{Time} & \textbf{Recall} & \textbf{Time} \\
\midrule

\multicolumn{6}{@{}r}{\textbf{$\lambda = 0.02,\ \varepsilon = 0.03$}} \\
\cmidrule(r){1-6}
Bike-279     & 187.3   & 1      & 4434.1  & 0.2857 & 67188.5 \\
Compas-108   & 1.44    & 1      & 75.76   & 1      & 277.9 \\
Diabetes-185 & 106.8   & ---    & 7622.2  & ---    & --- \\
Droid-85     & 0.89    & 1      & 75.5    & 1      & 933.7 \\
Heloc-1496   & 3996    & ---    & 98764.5 & ---    & --- \\
Student-101  & 0.27    & 1      & 18.32   & 1      & 163.8 \\

\midrule
\multicolumn{6}{@{}r}{\textbf{$\lambda = 0.005,\ \varepsilon = 0.01$}} \\
\cmidrule(r){1-6}
Bike-279     & 938.2    & 1      & ---     & ---    & 93207.2 \\
Compas-108   & 2.24     & 1      & 79.4    & 0.0254 & 239.5 \\
Diabetes-185 & 273.4    & ---    & 15121.1 & ---    & --- \\
Droid-85     & 2.36     & 1      & 136.9   & 0.125  & 737.5 \\
Heloc-1496   & 34842.1  & ---    & 94000.7 & ---    & --- \\
Student-101  & 2.08     & 0.9817 & 46.86   & 0.2073 & 321.8 \\

\midrule
\multicolumn{6}{@{}r}{\textbf{$\lambda = 0.005,\ \varepsilon = 0.03$}} \\
\cmidrule(r){1-6}
Bike-279     & 2809.4  & 1      & ---     & ---    & 104518.2 \\
Compas-108   & 34.3    & 0.9999 & 830.2   & 0.0023 & 308.1 \\
Diabetes-185 & 273.6   & ---    & 15138.9 & ---    & --- \\
Droid-85     & 2.64    & 1      & 139.6   & 0.0476 & 739.6 \\
Student-101  & 13.51   & 0.9992 & 786.8   & 0.1134 & 340.7 \\

\bottomrule
\end{tabular}
\caption{Timing and approximation results on fully binarized continuous datasets at depth $5$. Time is reported in seconds. Results are shown only for runs that completed within 144{,}000 seconds and 200GB RAM. TreeFARMS did not complete on any of these instances within the same resource budget.}
\label{tab:fully_binarized_continuous}
\end{table*}

\FloatBarrier
\subsection{Approximation Quality Comparisons}
\label{sec:additional-cdf}

\begin{table}[!ht]
\footnotesize
\centering
\setlength{\tabcolsep}{10pt}
\renewcommand{\arraystretch}{0.92}
\begin{tabular}{@{}lcccc@{}}
\toprule
\textbf{Dataset} & 
\multicolumn{2}{c}{\textbf{PRAXIS Recall}} & 
\multicolumn{2}{c}{\textbf{RESPLIT Recall}} \\
\cmidrule(lr){2-3} \cmidrule(lr){4-5}
& $\lambda{=}0.005$ & $\lambda{=}0.02$ 
& $\lambda{=}0.005$ & $\lambda{=}0.02$ \\
\midrule
Adult-209        & $1.000\!\pm\!0.000$ & $1.000\!\pm\!0.000$ & $0.001\!\pm\!0.001$ & $0.265\!\pm\!0.039$ \\
Bank-97          & $1.000\!\pm\!0.000$ & $1.000\!\pm\!0.000$ & $0.511\!\pm\!0.202$ & $1.000\!\pm\!0.000$ \\
Bike-164         & $0.986\!\pm\!0.010$ & $1.000\!\pm\!0.000$ & $0.617\!\pm\!0.000$ & $0.299\!\pm\!0.075$ \\
Christine-231    & $1.000\!\pm\!0.000$ & $0.994\!\pm\!0.011$ & $0.617\!\pm\!0.000$ & $0.435\!\pm\!0.091$ \\
Churn-472        & $0.997\!\pm\!0.005$ & $1.000\!\pm\!0.000$ & $0.000\!\pm\!0.000$ & $0.955\!\pm\!0.071$ \\
Compas-44        & $1.000\!\pm\!0.000$ & $1.000\!\pm\!0.000$ & $0.007\!\pm\!0.007$ & $0.916\!\pm\!0.177$ \\
Covertype-96     & $1.000\!\pm\!0.000$ & $1.000\!\pm\!0.000$ & $0.003\!\pm\!0.001$ & $1.000\!\pm\!0.000$ \\
Credit-225       & $1.000\!\pm\!0.000$ & $1.000\!\pm\!0.000$ & $1.000\!\pm\!0.000$ & $1.000\!\pm\!0.000$ \\
Diabetes-33      & $1.000\!\pm\!0.000$ & $1.000\!\pm\!0.000$ & $1.000\!\pm\!0.000$ & $1.000\!\pm\!0.000$ \\
Droid-84         & $1.000\!\pm\!0.000$ & $1.000\!\pm\!0.000$ & $0.065\!\pm\!0.027$ & $1.000\!\pm\!0.000$ \\
Electricity-264  & $0.994\!\pm\!0.003$ & $1.000\!\pm\!0.000$ & $0.067\!\pm\!0.000$ & $1.000\!\pm\!0.000$ \\
Helena-156       & $1.000\!\pm\!0.000$ & $1.000\!\pm\!0.000$ & $0.202\!\pm\!0.122$ & $1.000\!\pm\!0.000$ \\
Heloc-65         & $0.999\!\pm\!0.000$ & $1.000\!\pm\!0.000$ & $0.001\!\pm\!0.002$ & $0.905\!\pm\!0.211$ \\
Jannis-106       & $0.995\!\pm\!0.008$ & $1.000\!\pm\!0.000$ & $0.001\!\pm\!0.000$ & $0.232\!\pm\!0.021$ \\
Jasmine-207      & $0.993\!\pm\!0.009$ & $1.000\!\pm\!0.000$ & $0.001\!\pm\!0.001$ & $0.489\!\pm\!0.469$ \\
Madelon-73       & $1.000\!\pm\!0.000$ & $1.000\!\pm\!0.000$ & $0.000\!\pm\!0.000$ & $0.009\!\pm\!0.016$ \\
Magic-80         & $0.997\!\pm\!0.004$ & $0.998\!\pm\!0.004$ & $0.001\!\pm\!0.001$ & $0.301\!\pm\!0.022$ \\
News-196         & $1.000\!\pm\!0.000$ & $1.000\!\pm\!0.000$ & $0.003\!\pm\!0.002$ & $0.938\!\pm\!0.043$ \\
Poker-40         & $1.000\!\pm\!0.000$ & $1.000\!\pm\!0.000$ & $0.111\!\pm\!0.000$ & $1.000\!\pm\!0.000$ \\
Shopping-243     & $0.984\!\pm\!0.034$ & $1.000\!\pm\!0.000$ & $0.091\!\pm\!0.080$ & $0.987\!\pm\!0.030$ \\
Taxi-27          & $1.000\!\pm\!0.000$ & $1.000\!\pm\!0.000$ & $1.000\!\pm\!0.000$ & $1.000\!\pm\!0.000$ \\
Wine-64          & $1.000\!\pm\!0.000$ & $1.000\!\pm\!0.000$ & $0.313\!\pm\!0.278$ & $1.000\!\pm\!0.000$ \\
\bottomrule
\end{tabular}
\caption{Rashomon set recall (mean $\pm$ standard deviation over 5 bootstraps) comparing \algname\ (PRAXIS) and RESPLIT across datasets for $\varepsilon_{\text{mult}}=0.03$ and depth $d=5$. Format: Dataset-NumBinaryFeatures.}
\label{tab:combined_praxis_resplit}
\end{table}

\paragraph{Recall compared to RESPLIT.} In \autoref{tab:rashomon-recall}, we show recall results for \algname across two sparsity values: $\lambda=0.02$ and $\lambda=0.005$. In \autoref{tab:combined_praxis_resplit}, we additionally provide the recall results for RESPLIT. We note that RESPLIT substantially degrades in performance for smaller values of $\lambda$; this is in stark contrast to \algnamenospace. We also show approximation results for \algname under an even smaller value of $\lambda$: $0.0025$ in \autoref{tab:praxis_recall_0025}.

\begin{table}[!ht]
\footnotesize
\centering
\setlength{\tabcolsep}{15pt}
\renewcommand{\arraystretch}{0.92}
\begin{tabular}{@{}l l@{}}
\toprule
\textbf{Dataset} & \textbf{Recall} \\
\midrule
Adult-209        & $1.000\!\pm\!0.000$ \\
Bank-97          & $1.000\!\pm\!0.001$ \\
Bike-164         & $1.000\!\pm\!0.001$ \\
Churn-472        & $0.995\!\pm\!0.009$ \\
Compas-44        & $1.000\!\pm\!0.000$ \\
Credit-225       & $1.000\!\pm\!0.000$ \\
Diabetes-33      & $1.000\!\pm\!0.000$ \\
Droid-84         & $0.994\!\pm\!0.010$ \\
Helena-156       & $1.000\!\pm\!0.000$ \\
Heloc-65         & $1.000\!\pm\!0.000$ \\
Madelon-73       & $0.997\!\pm\!0.006$ \\
Magic-80         & $0.999\!\pm\!0.001$ \\
Poker-40         & $1.000\!\pm\!0.000$ \\
Shopping-243     & $0.997\!\pm\!0.004$ \\
Taxi-27          & $0.989\!\pm\!0.007$ \\
Wine-64          & $0.997\!\pm\!0.005$ \\
\bottomrule
\end{tabular}
\caption{PRAXIS recall (mean $\pm$ standard deviation over 5 bootstraps) for $\lambda=0.0025$, $\varepsilon=0.003$. We exclude rows where the ground truth could not be enumerated.}
\label{tab:praxis_recall_0025}
\end{table}

\FloatBarrier
\paragraph{Qualitative Comparison of Rashomon Set Approximations.}

We display results for Rashomon set approximations for $\lambda=0.005$, $\varepsilon_{\textrm{mult}}=0.01$ and $d=5$. We show a subset of the 51 datasets and binarizations here because we exclude cases where the Rashomon set is believed to be small (on the order of $10^1$ trees) or where neither \algname or RESPLIT ran. Across all figures, \textcolor{blue!70!black}{blue} denotes \algname using the modified LicketySPLIT proxy, \textcolor{green!70!black}{green} denotes RESPLIT, and \textcolor{orange!70!black}{orange} denotes bootstrapped LicketySPLIT for as long as \algname ran. As in \autoref{fig:approx_placeholder}, the dashed vertical line marks the estimated Rashomon bound, defined as $(1+\varepsilon)$ times the minimum objective found by any method. Trees returned beyond this threshold should not be counted as improving coverage of the target Rashomon set, since they have objectives outside the requested quality range. If such lower-quality trees are of interest, this should instead be reflected by requesting a larger Rashomon set, i.e., by increasing $\varepsilon$.


\begin{figure}[p]
\centering
\begin{tabular}{cc}
\includegraphics[width=0.48\linewidth]{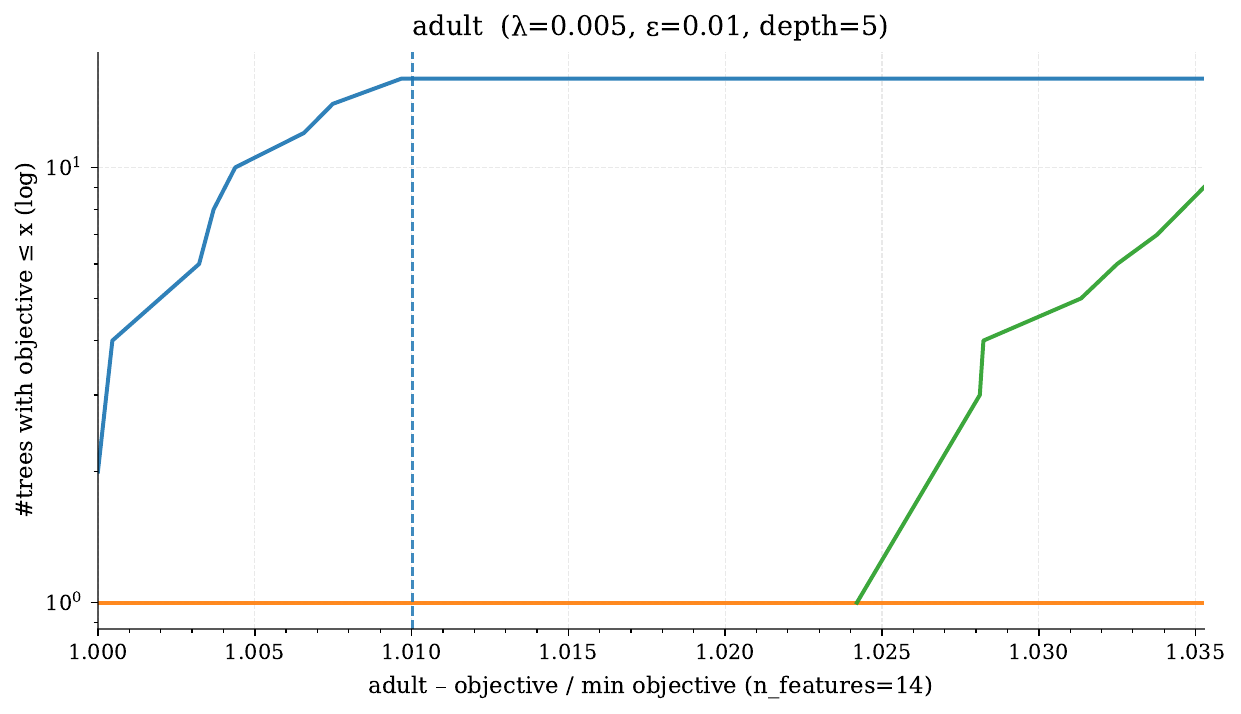} &
\includegraphics[width=0.48\linewidth]{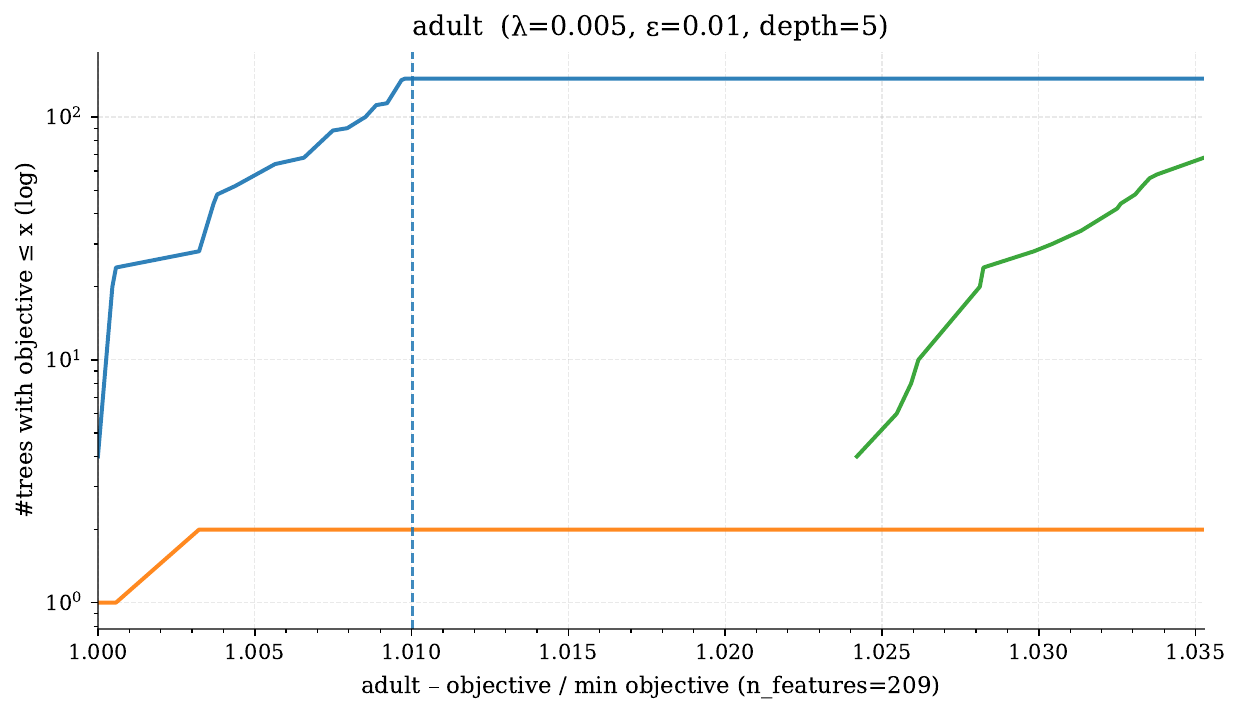} \\[4pt]

\includegraphics[width=0.48\linewidth]{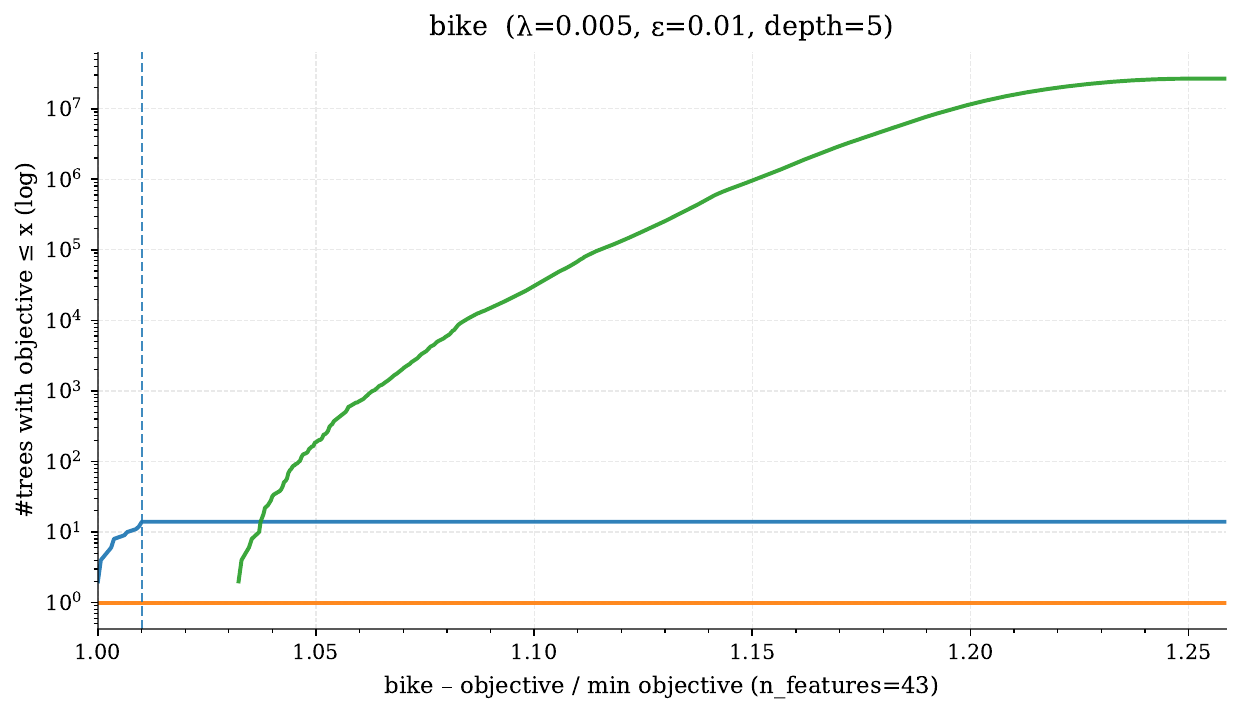} &
\includegraphics[width=0.48\linewidth]{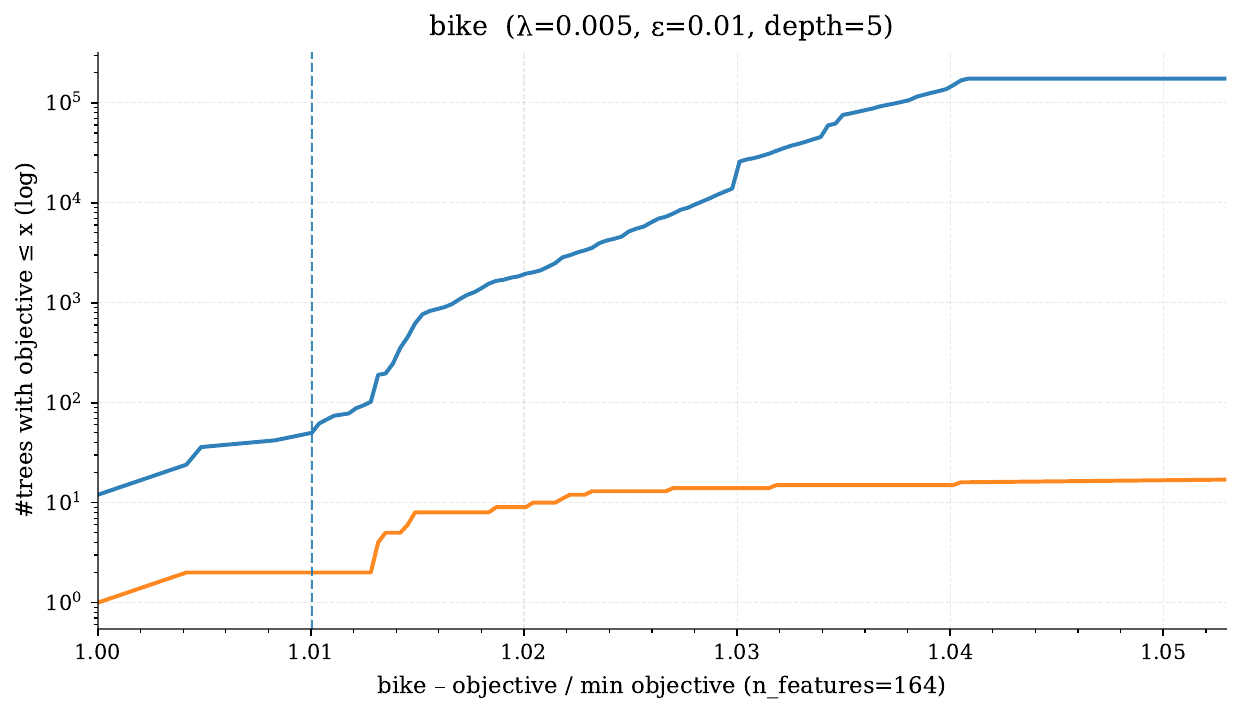} \\[4pt]

\includegraphics[width=0.48\linewidth]{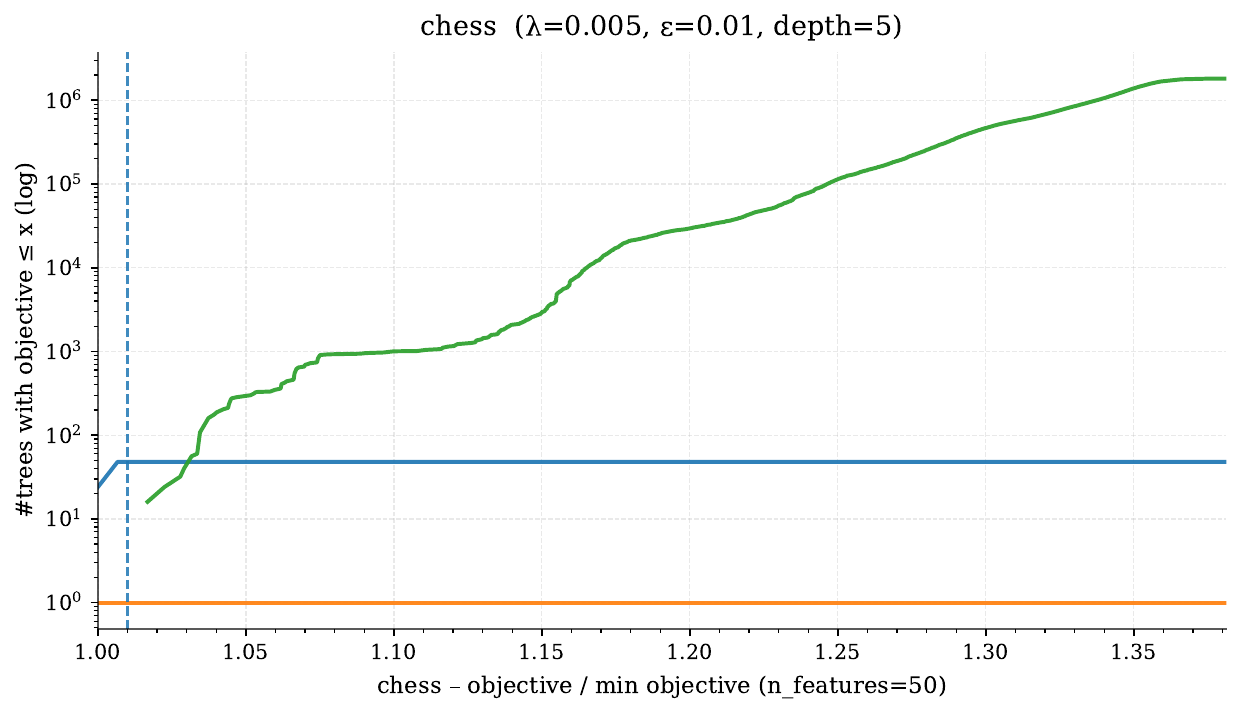} &
\includegraphics[width=0.48\linewidth]{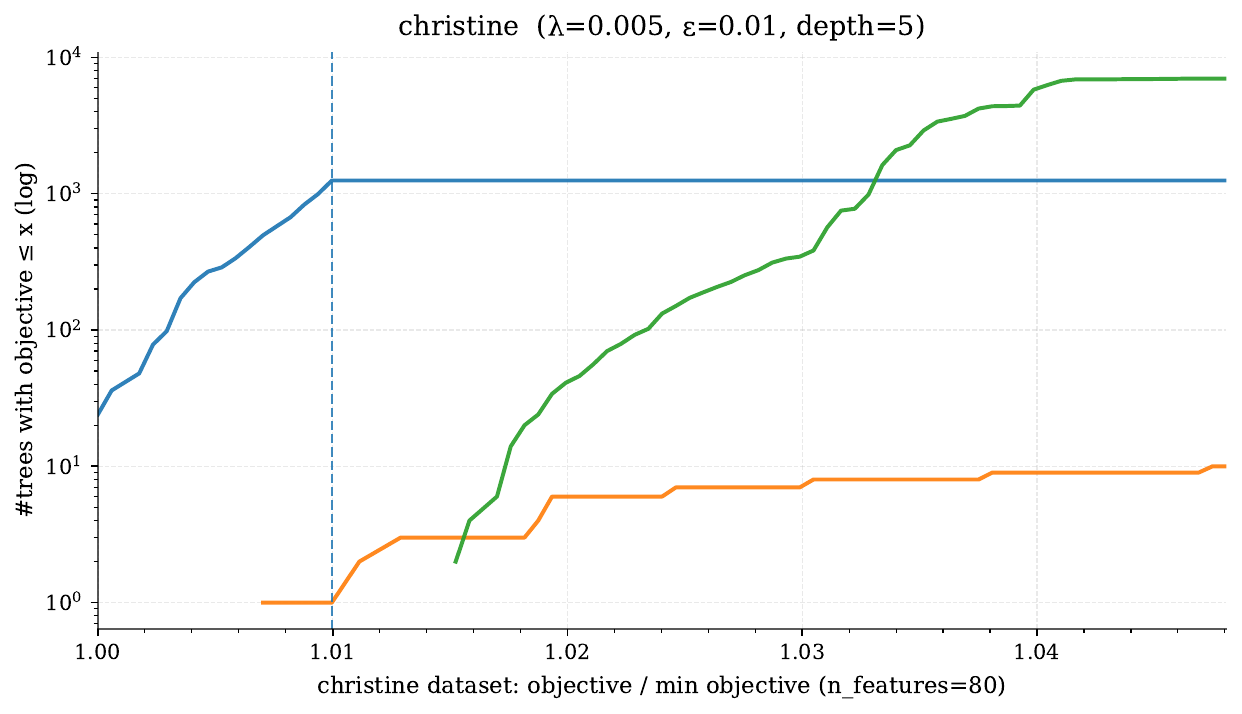} \\[4pt]

\includegraphics[width=0.48\linewidth]{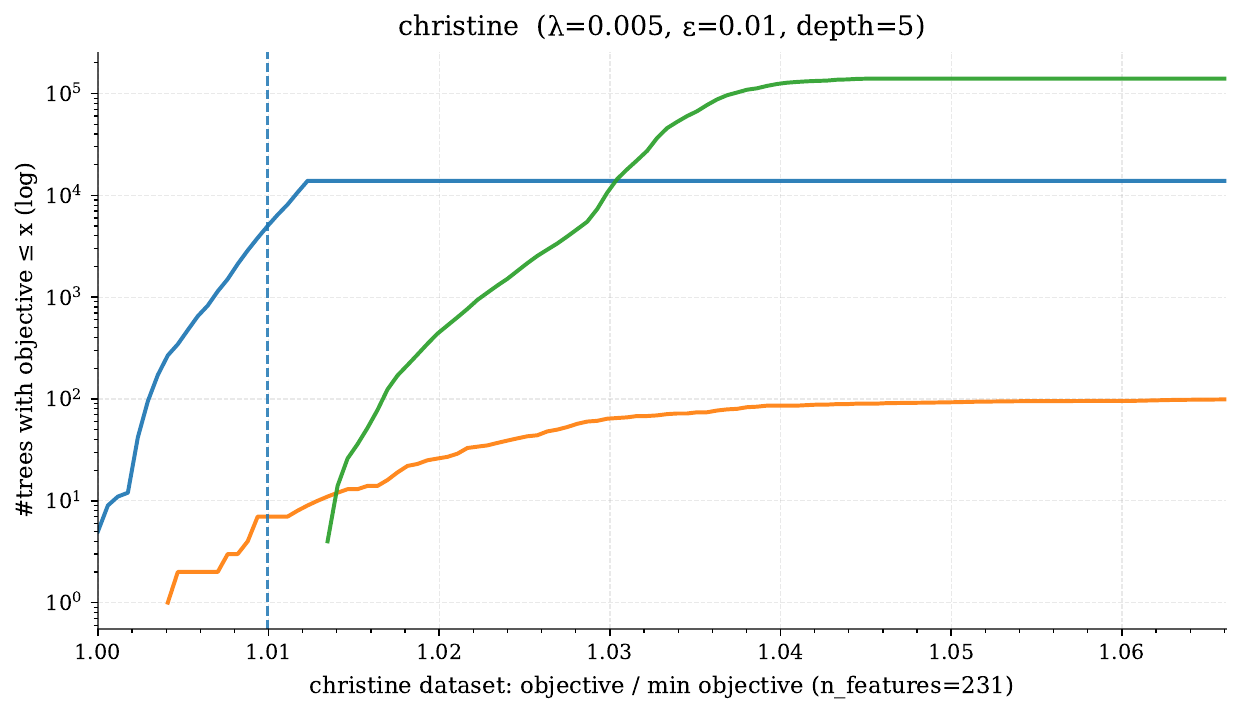} &
\includegraphics[width=0.48\linewidth]{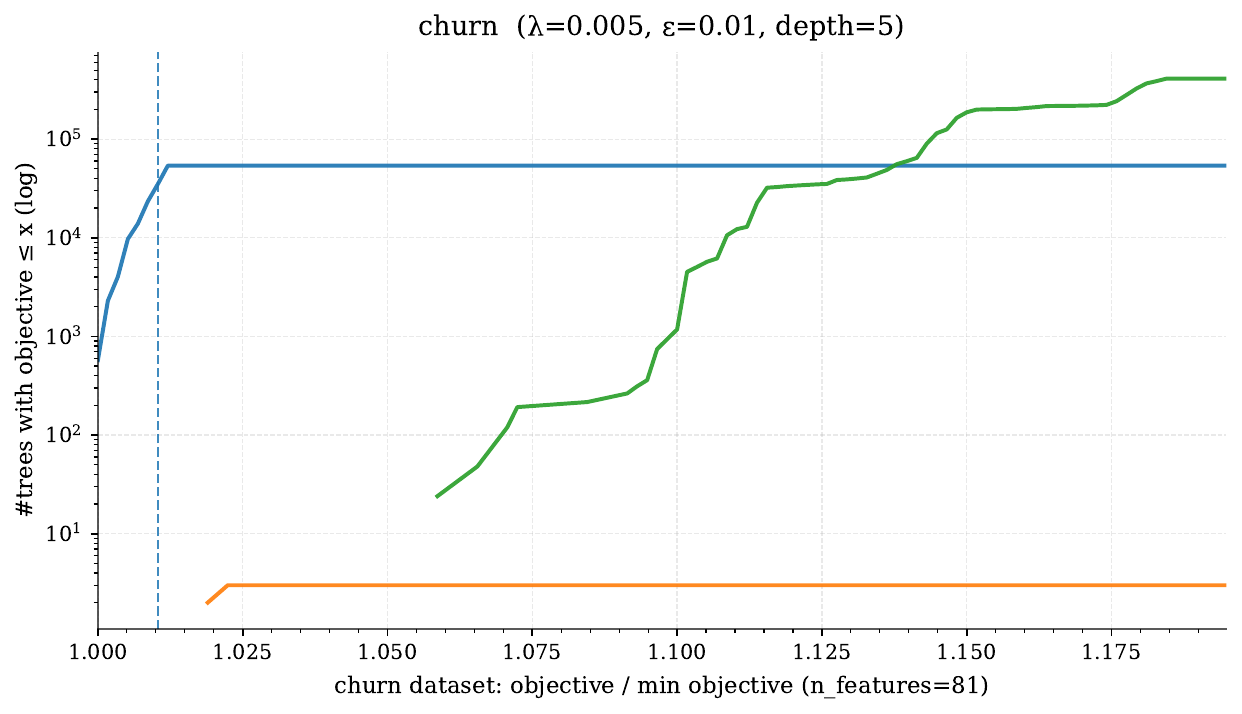}
\end{tabular}
\end{figure}
\FloatBarrier

\begin{figure}[p]
\centering
\begin{tabular}{cc}
\includegraphics[width=0.48\linewidth]{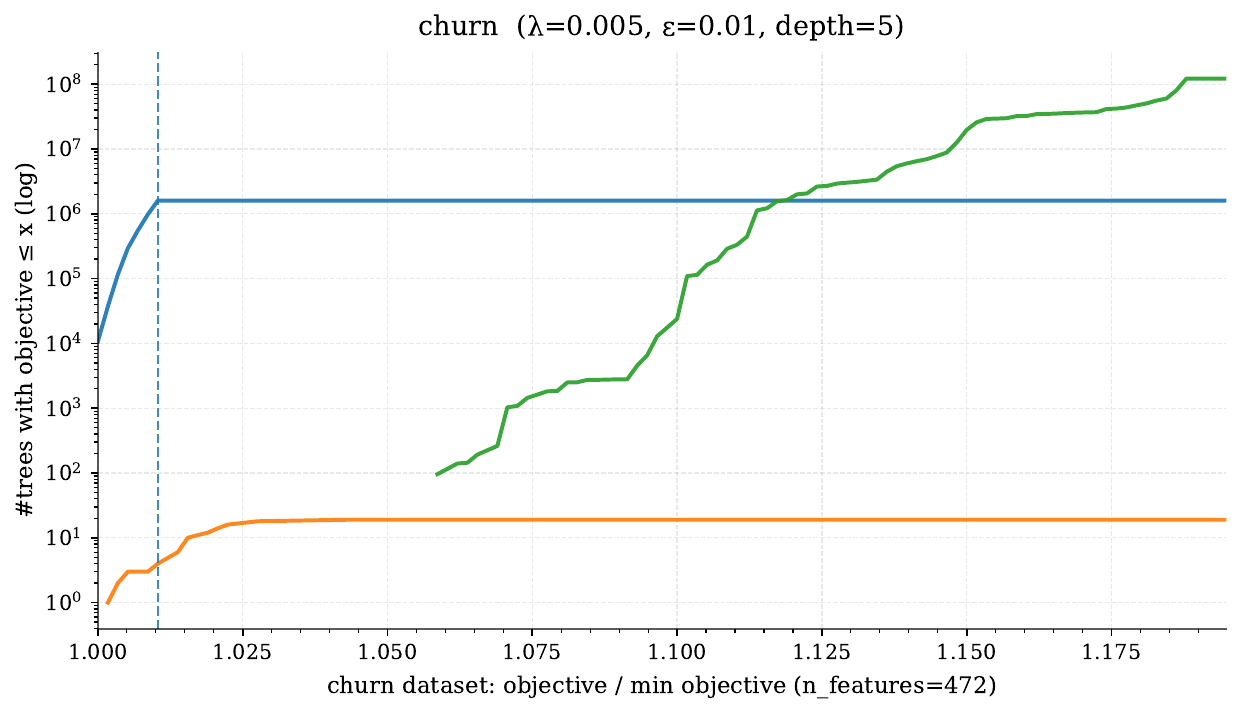} &
\includegraphics[width=0.48\linewidth]{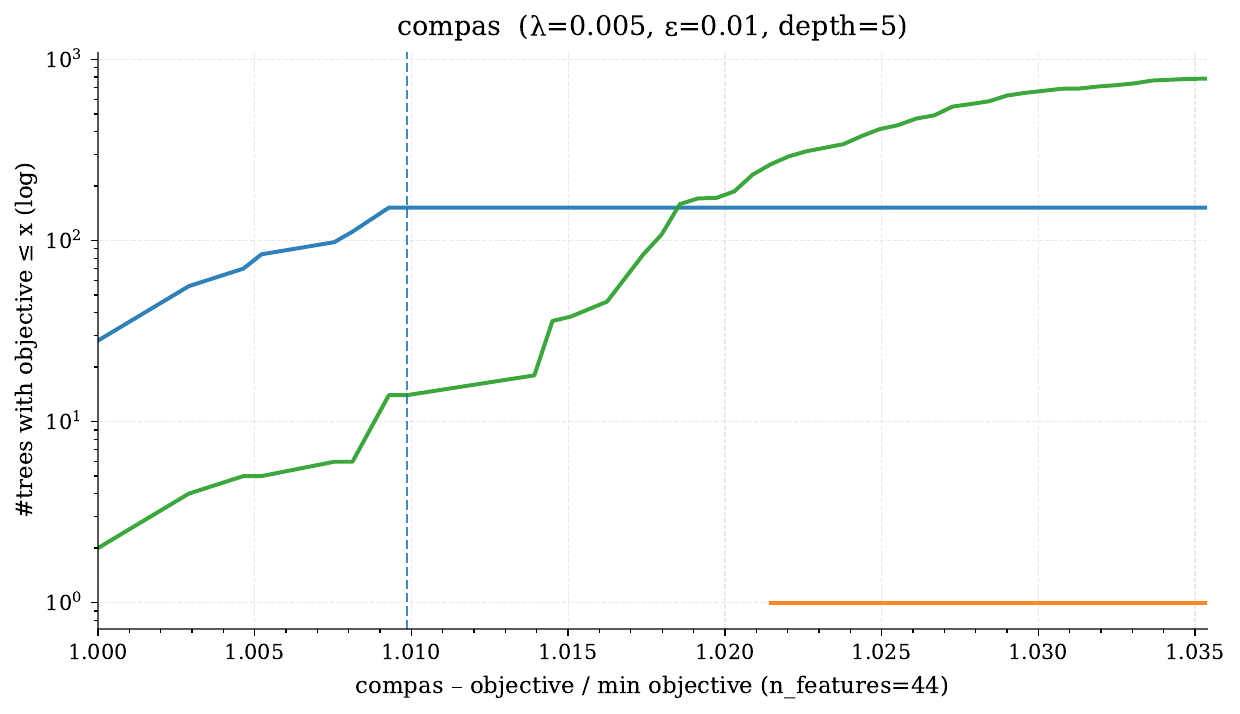} \\[4pt]

\includegraphics[width=0.48\linewidth]{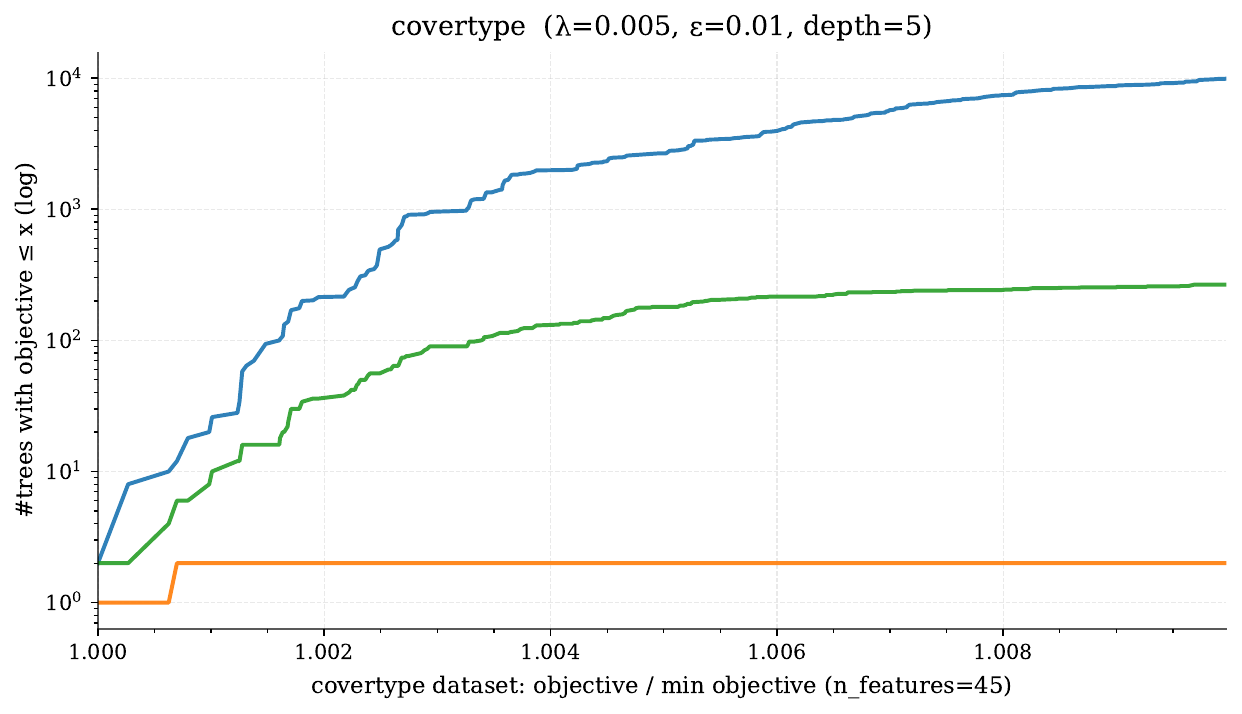} &
\includegraphics[width=0.48\linewidth]{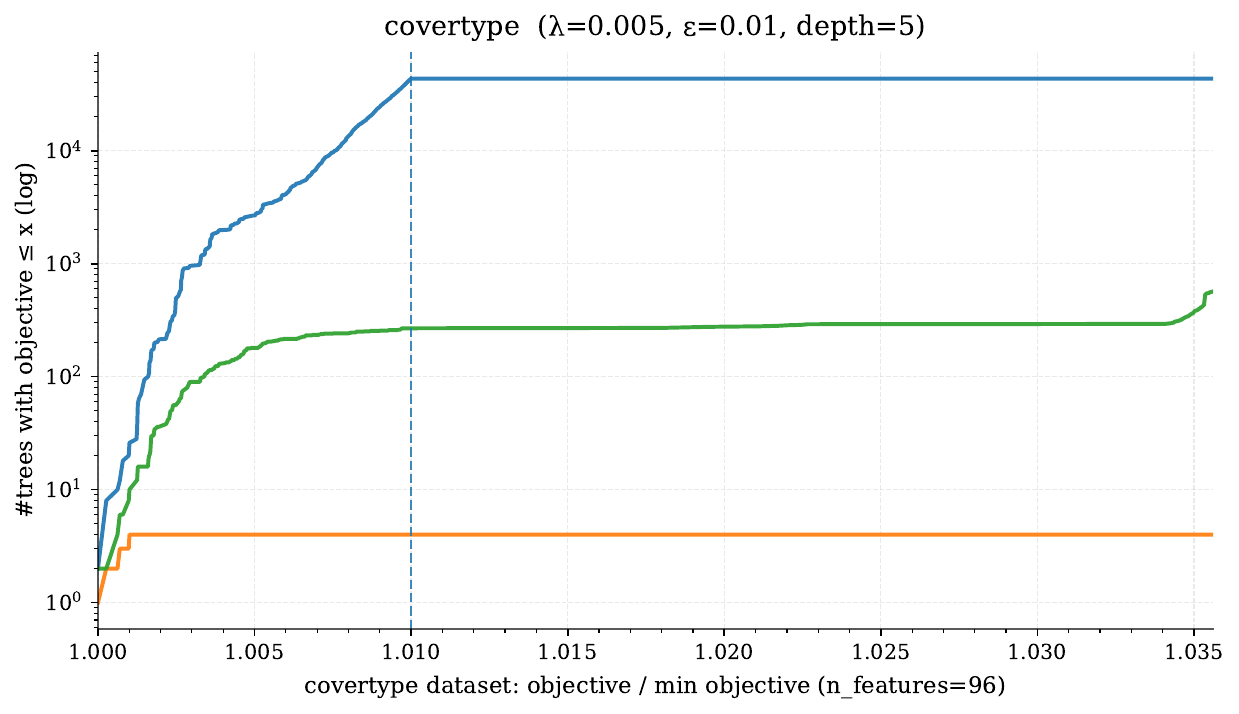} \\[4pt]

\includegraphics[width=0.48\linewidth]{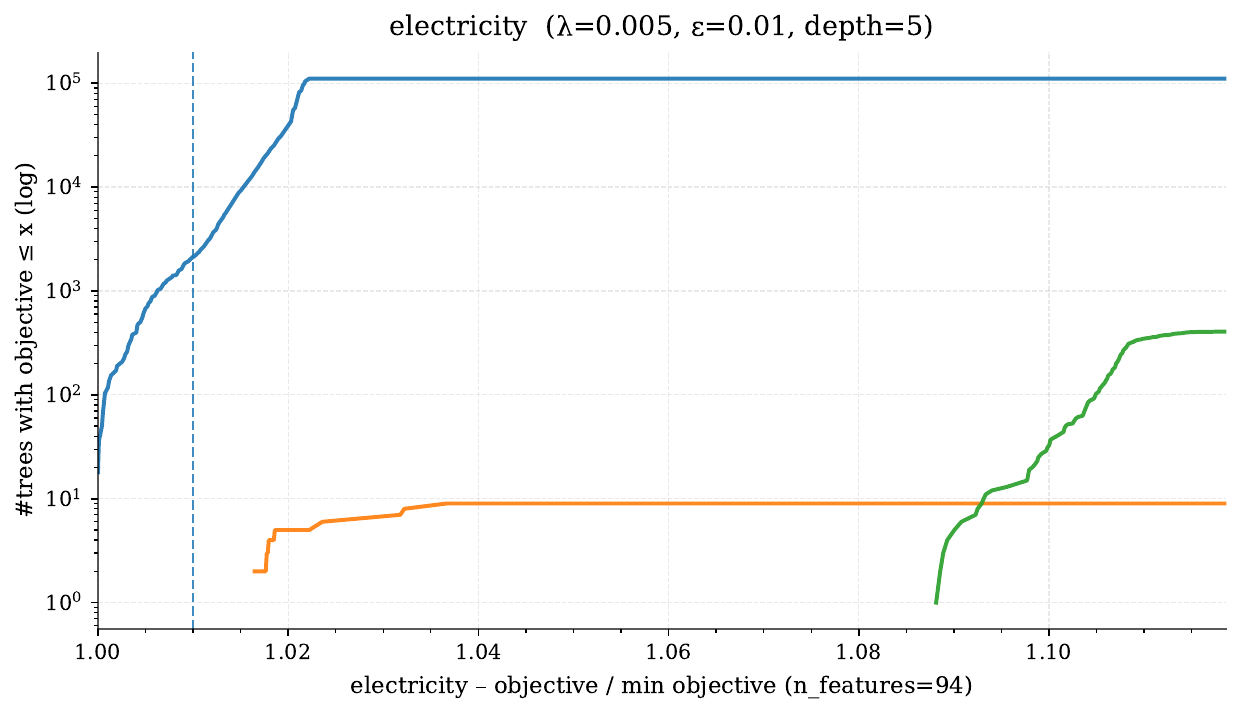} &
\includegraphics[width=0.48\linewidth]{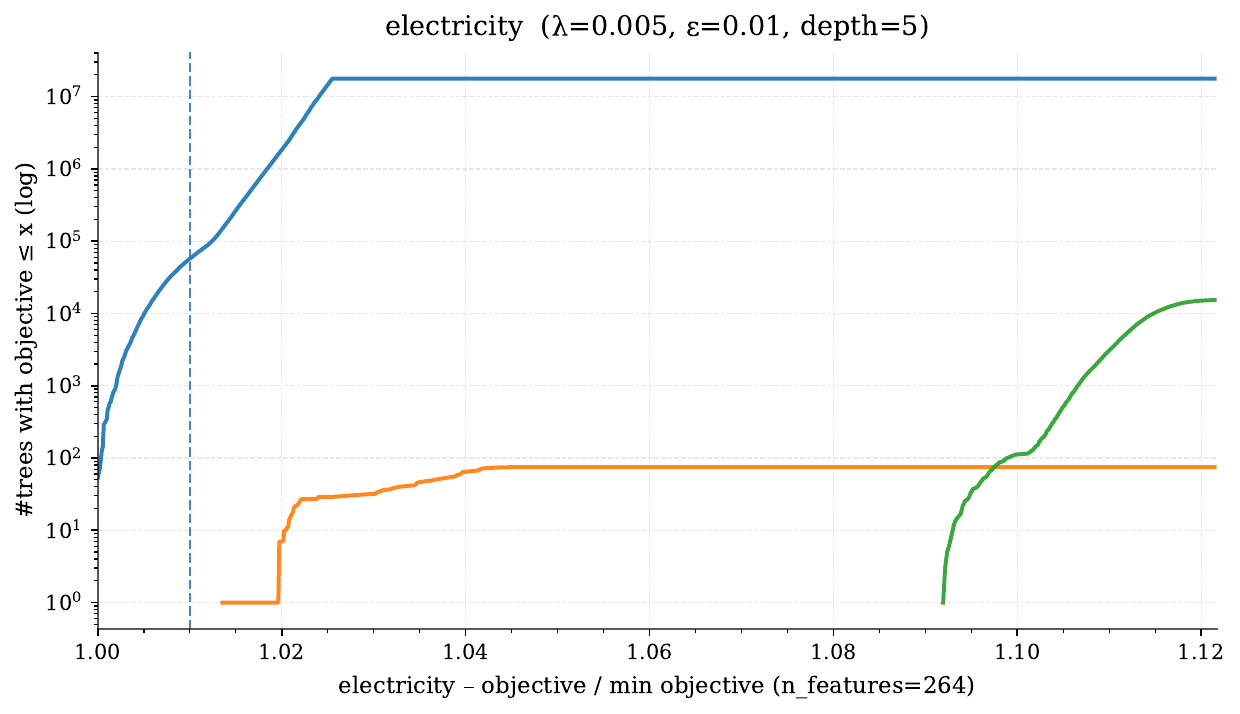} \\[4pt]

\includegraphics[width=0.48\linewidth]{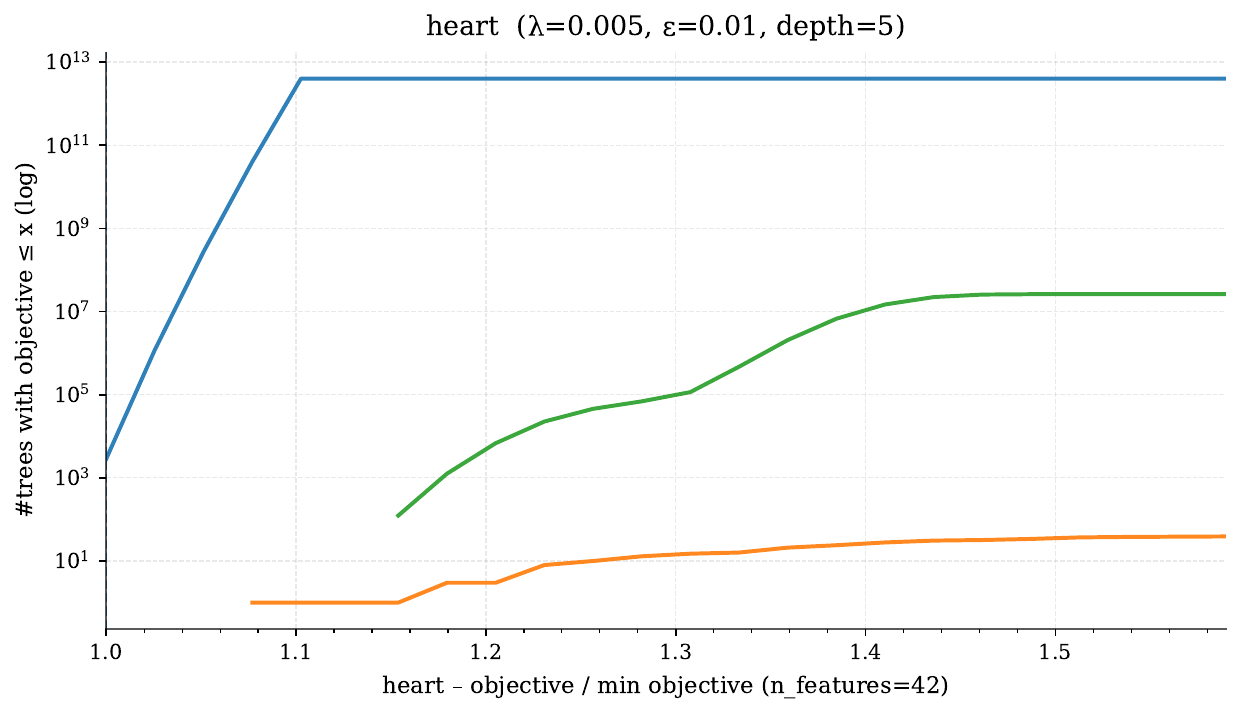} &
\includegraphics[width=0.48\linewidth]{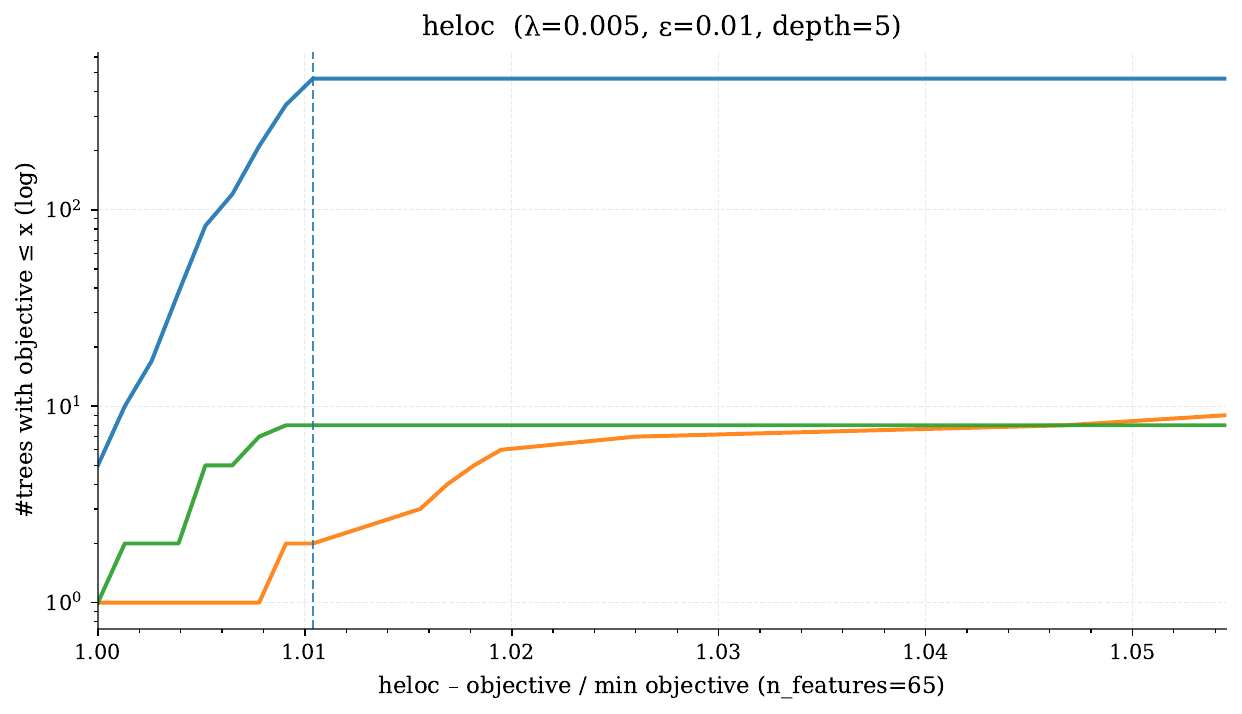}
\end{tabular}
\end{figure}
\FloatBarrier

\begin{figure}[p]
\centering
\begin{tabular}{cc}
\includegraphics[width=0.48\linewidth]{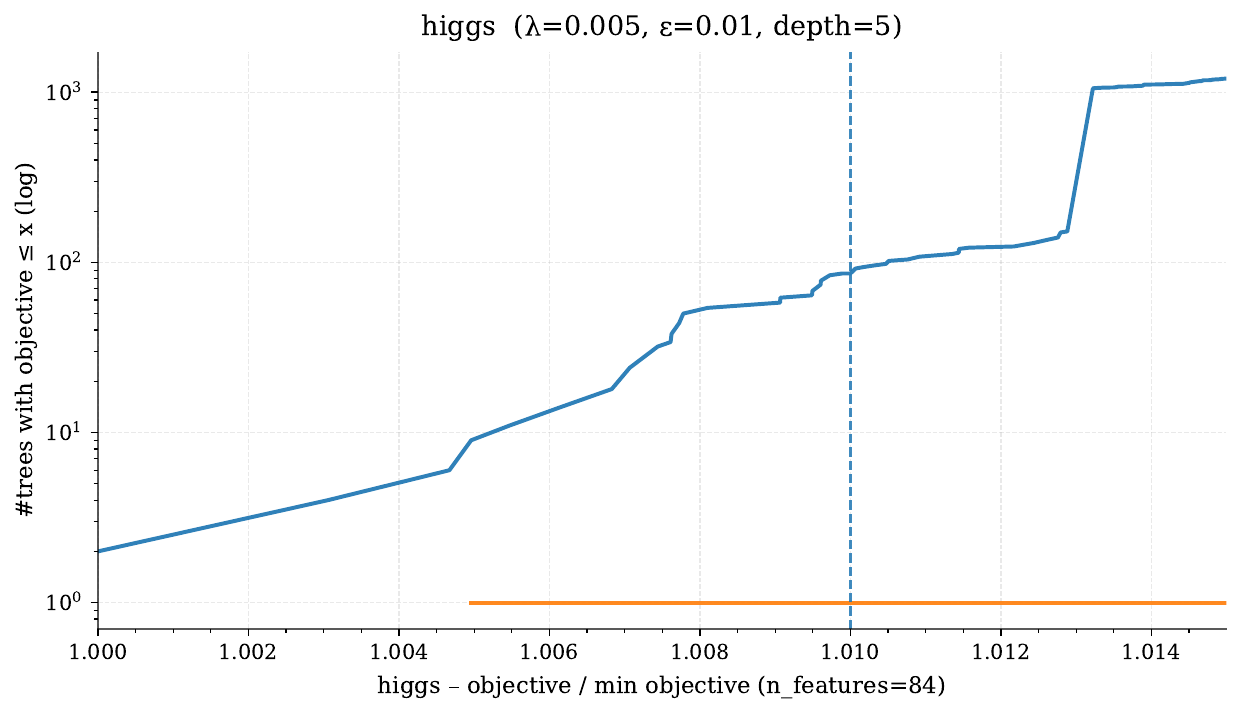} &
\includegraphics[width=0.48\linewidth]{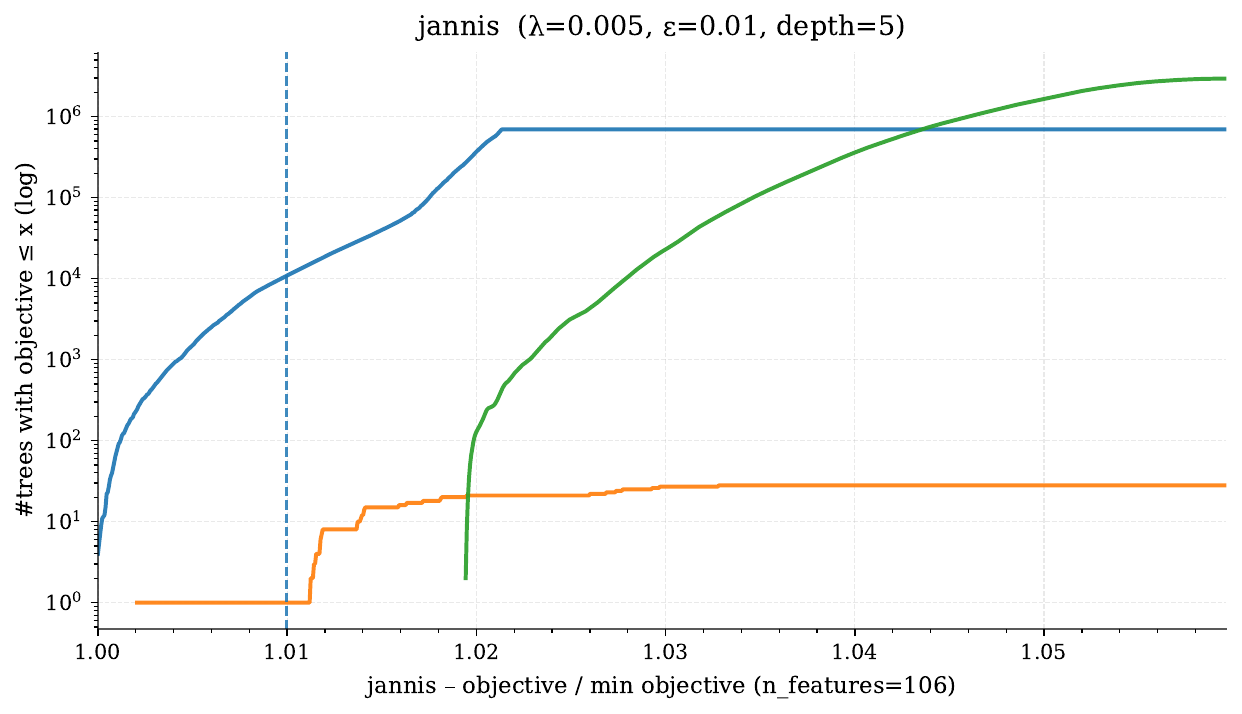} \\[4pt]

\includegraphics[width=0.48\linewidth]{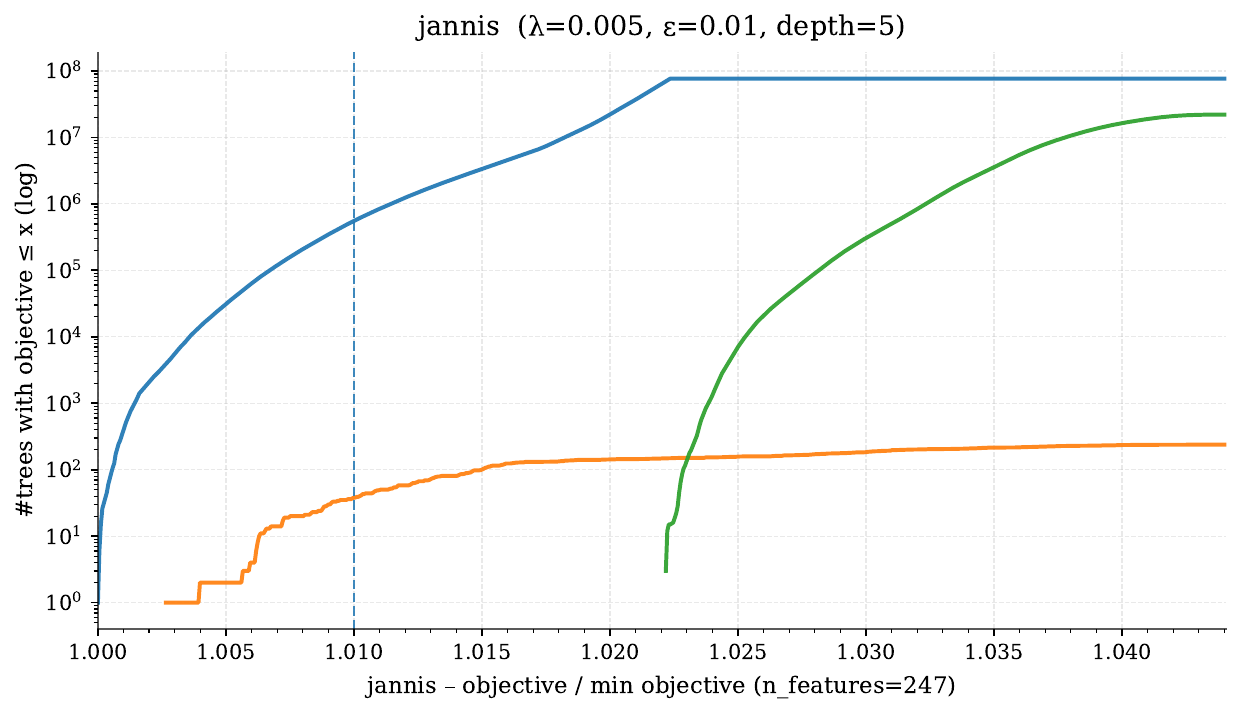} &
\includegraphics[width=0.48\linewidth]{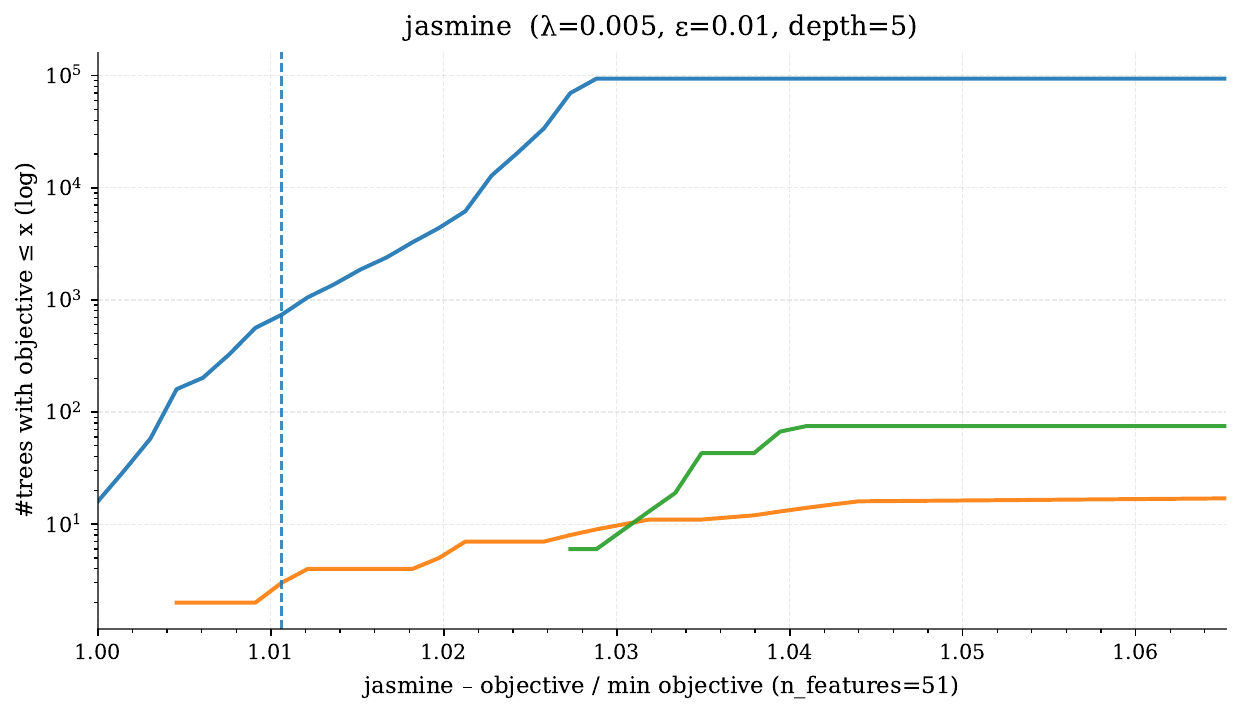} \\[4pt]

\includegraphics[width=0.48\linewidth]{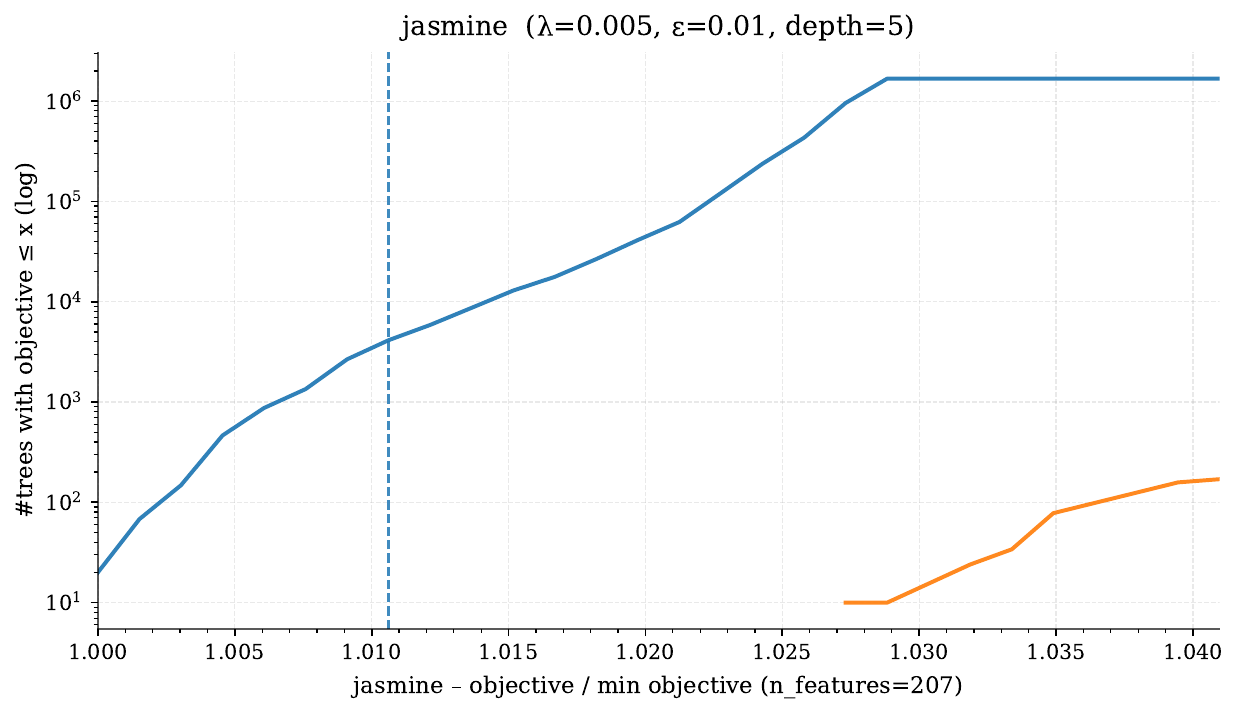} &
\includegraphics[width=0.48\linewidth]{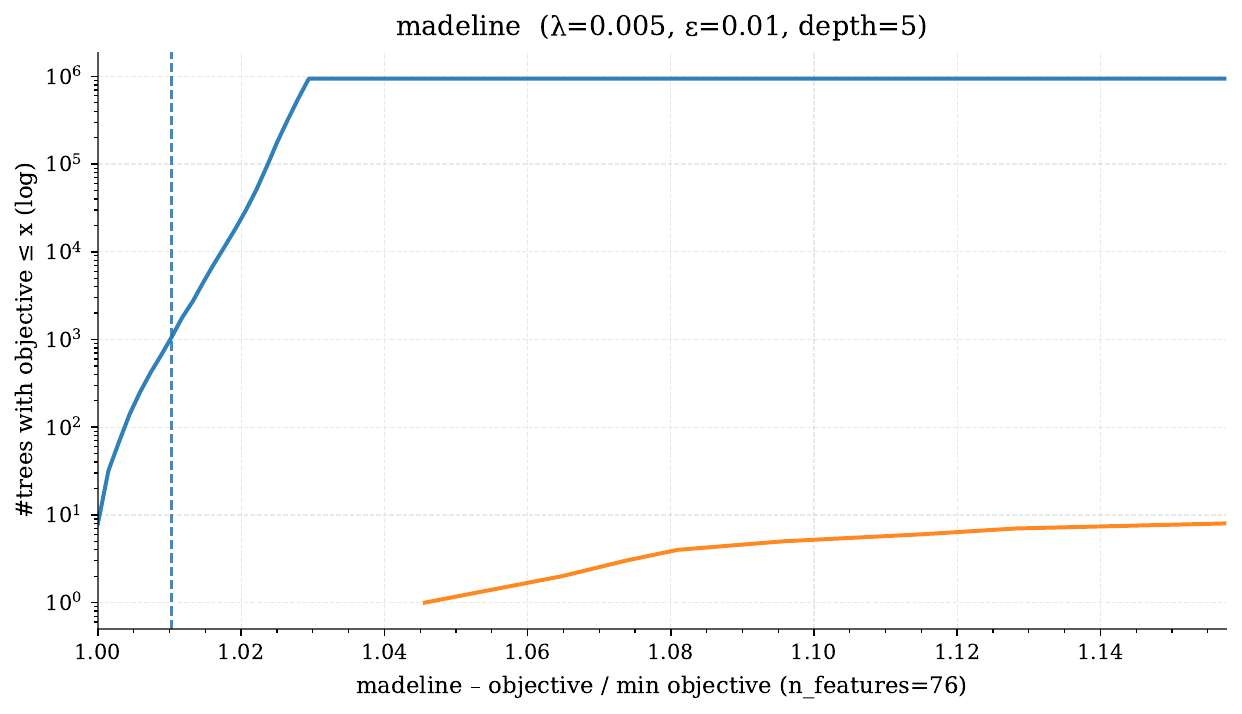} \\[4pt]

\includegraphics[width=0.48\linewidth]{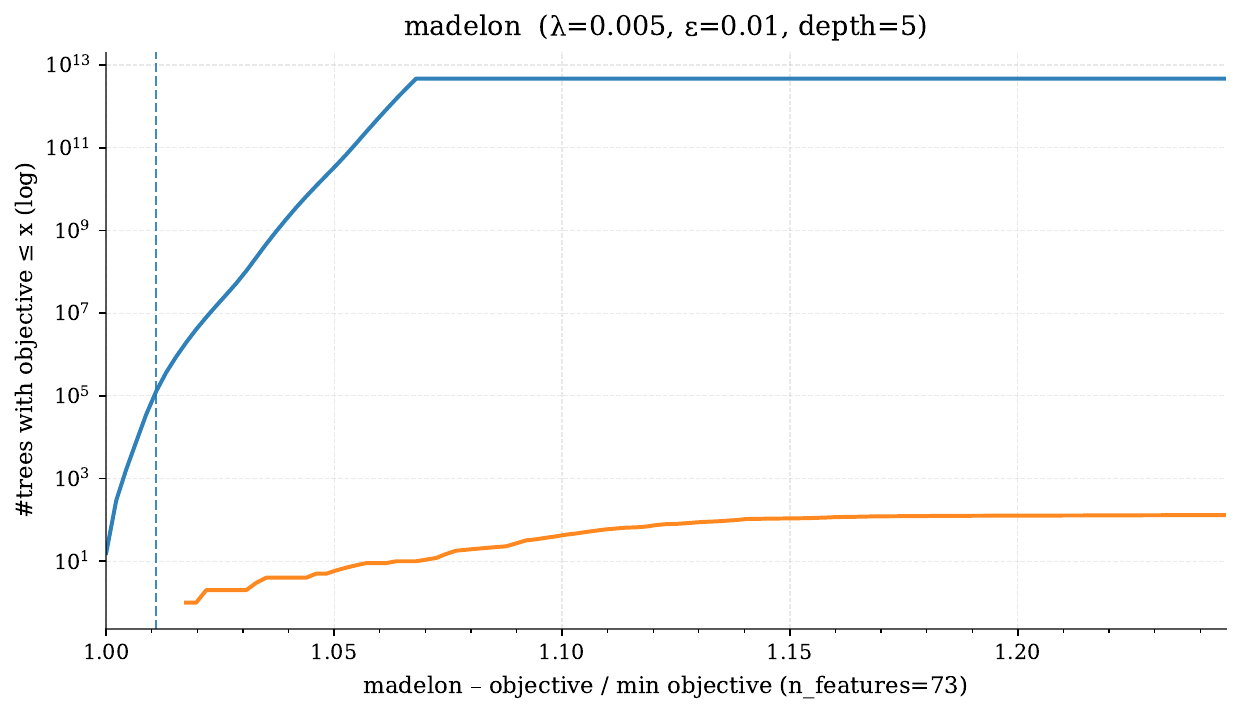} &
\includegraphics[width=0.48\linewidth]{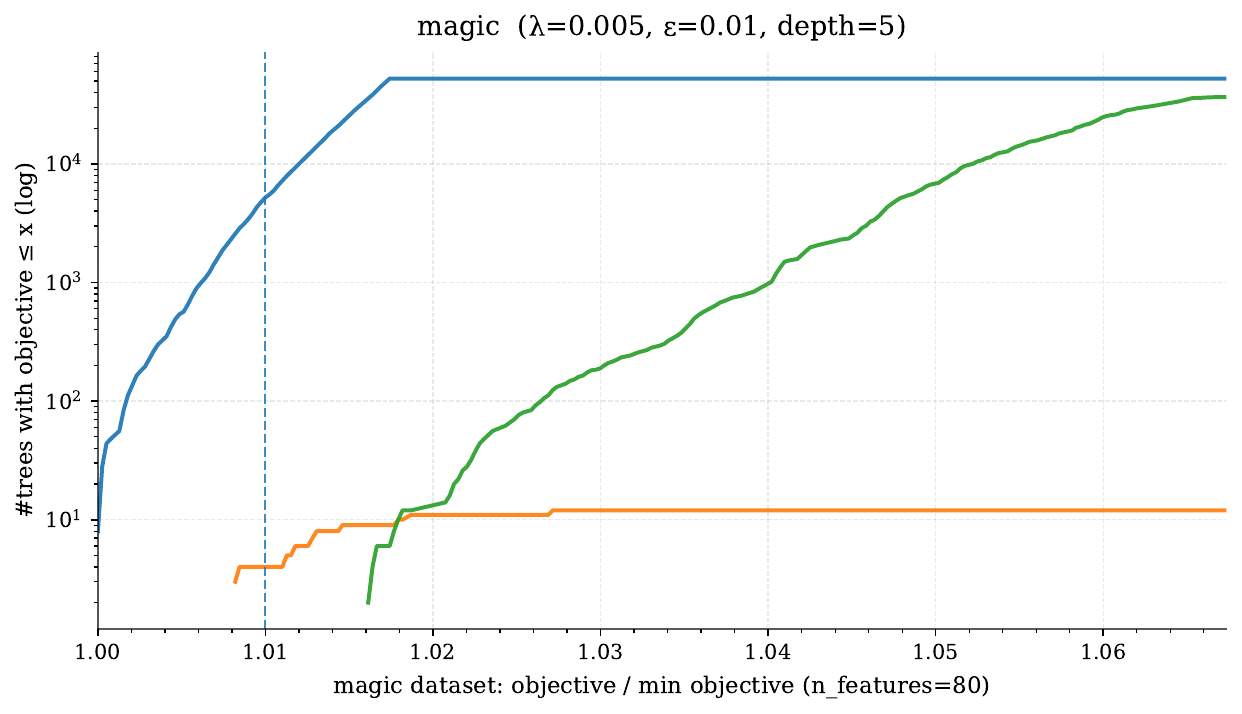}
\end{tabular}
\end{figure}
\FloatBarrier

\begin{figure}[p]
\centering
\begin{tabular}{cc}
\includegraphics[width=0.48\linewidth]{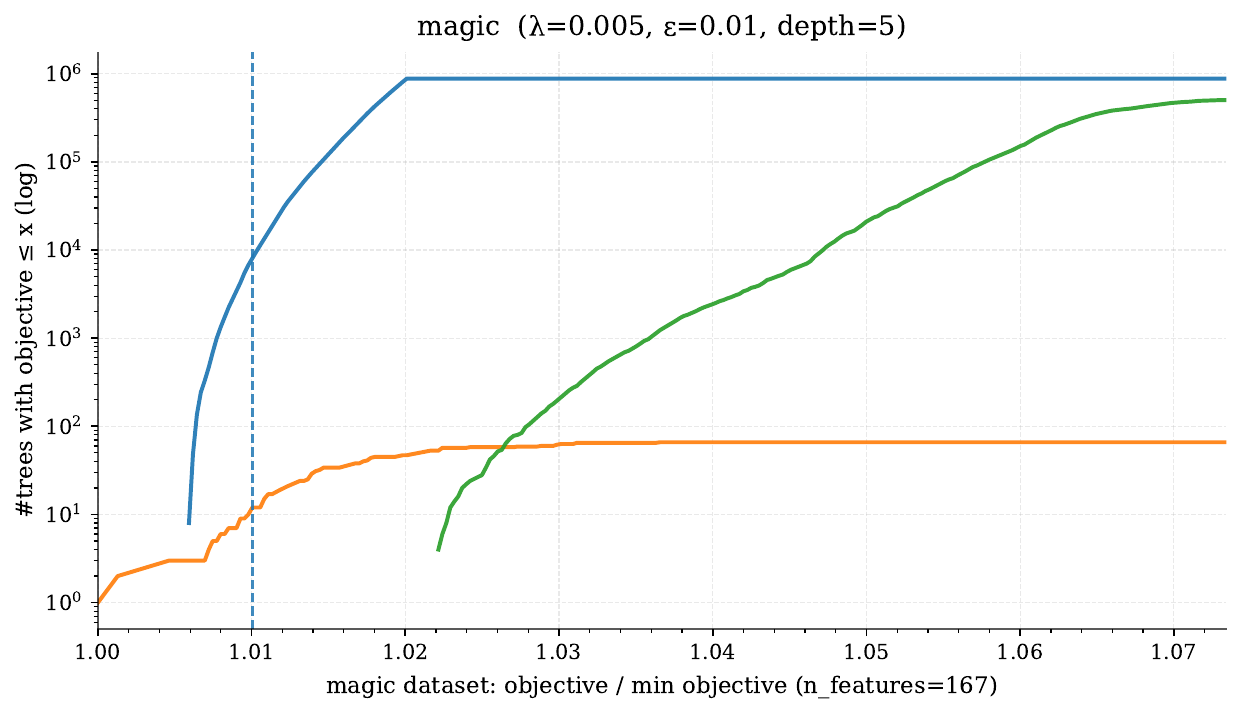} &
\includegraphics[width=0.48\linewidth]{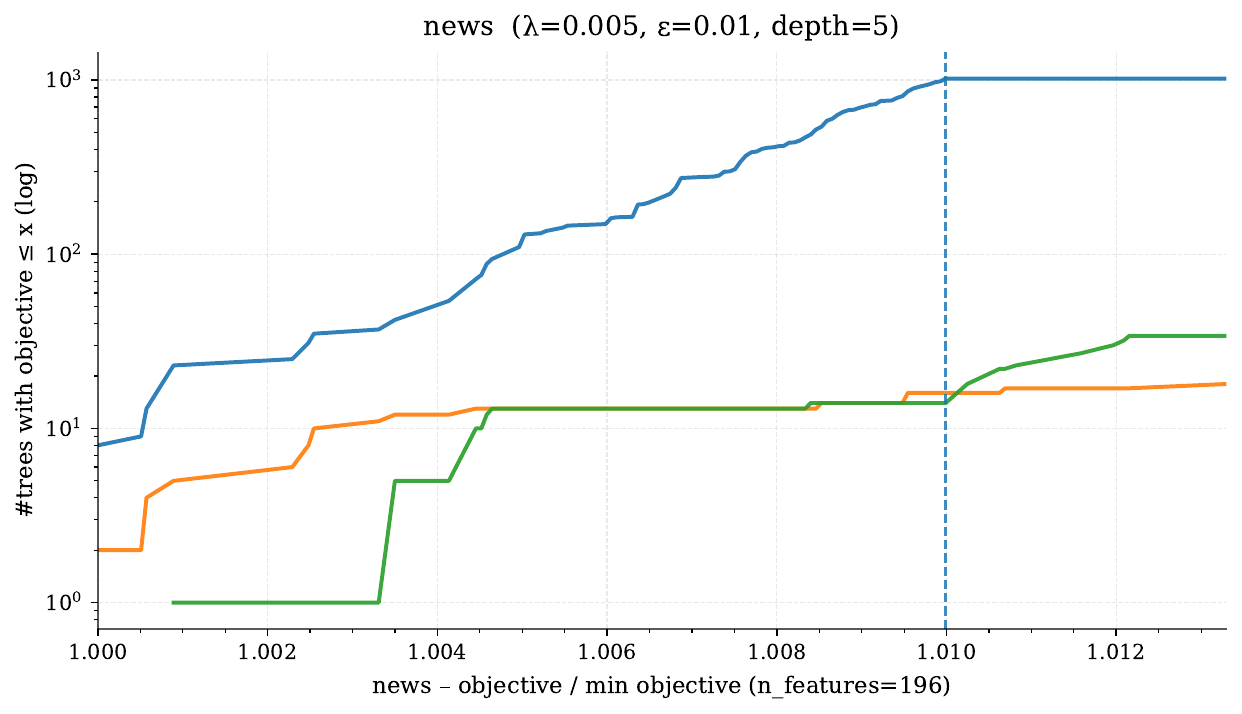} \\[4pt]

\includegraphics[width=0.48\linewidth]{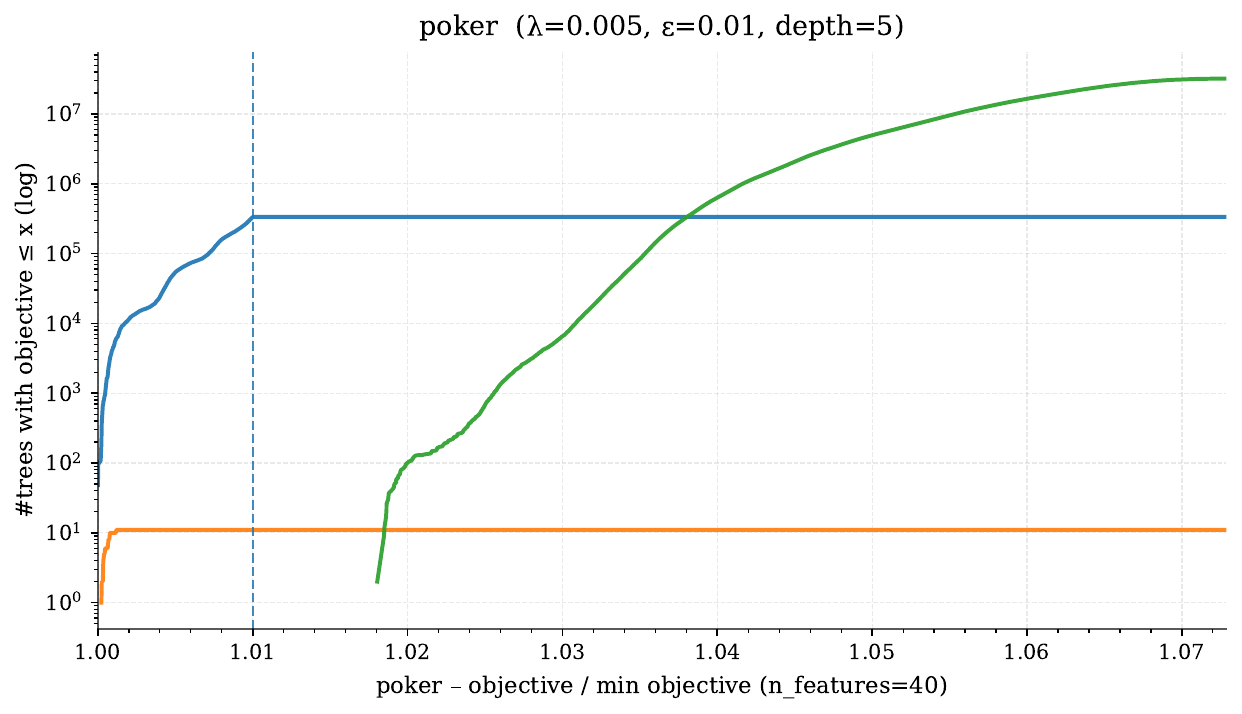} &
\includegraphics[width=0.48\linewidth]{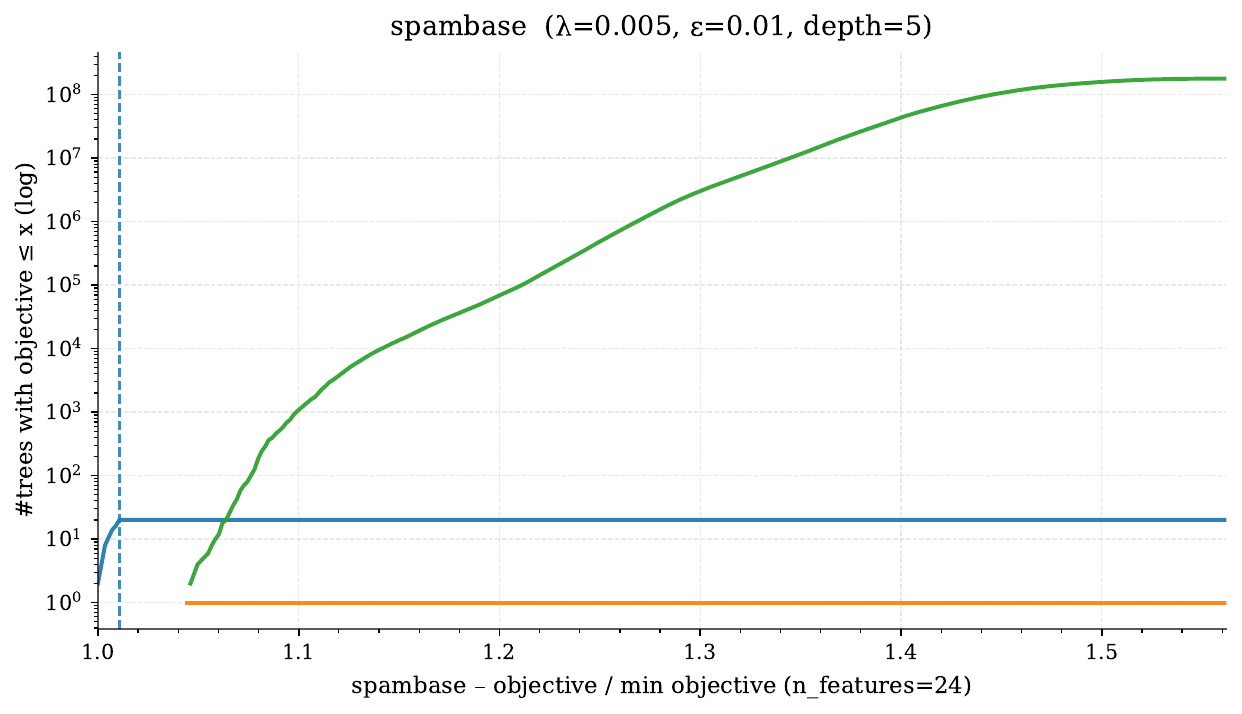} \\[4pt]

\includegraphics[width=0.48\linewidth]{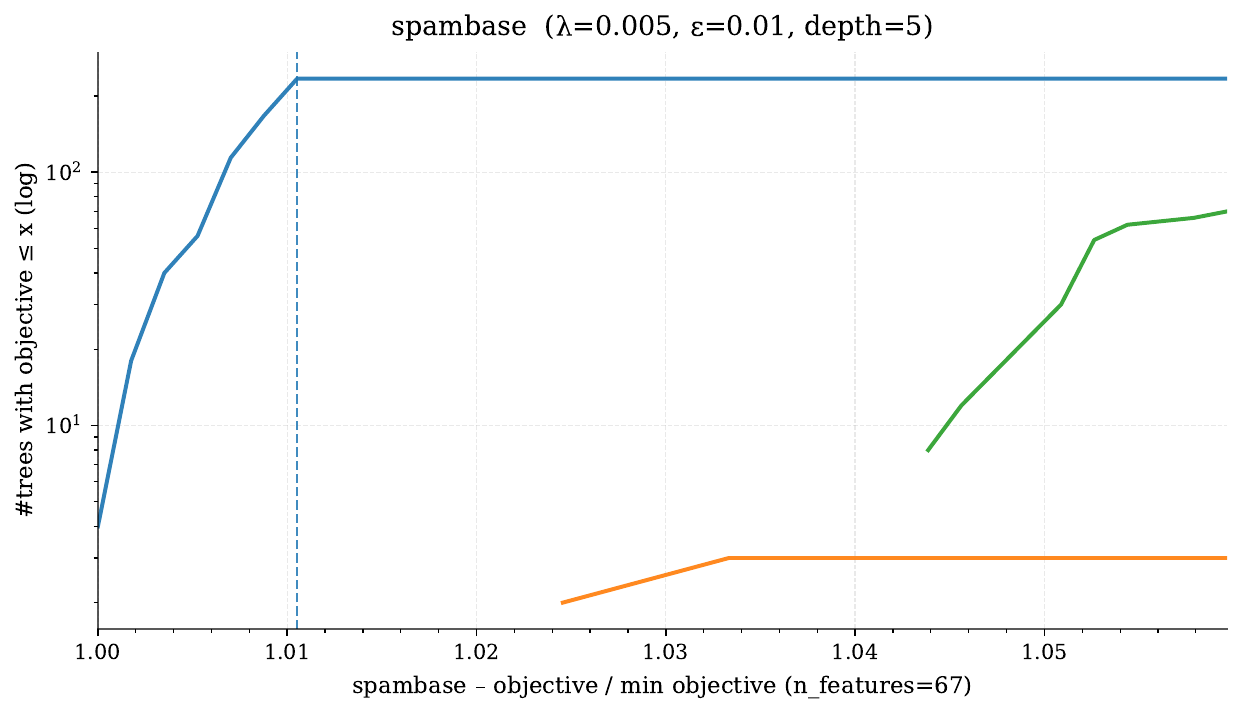} &
\includegraphics[width=0.48\linewidth]{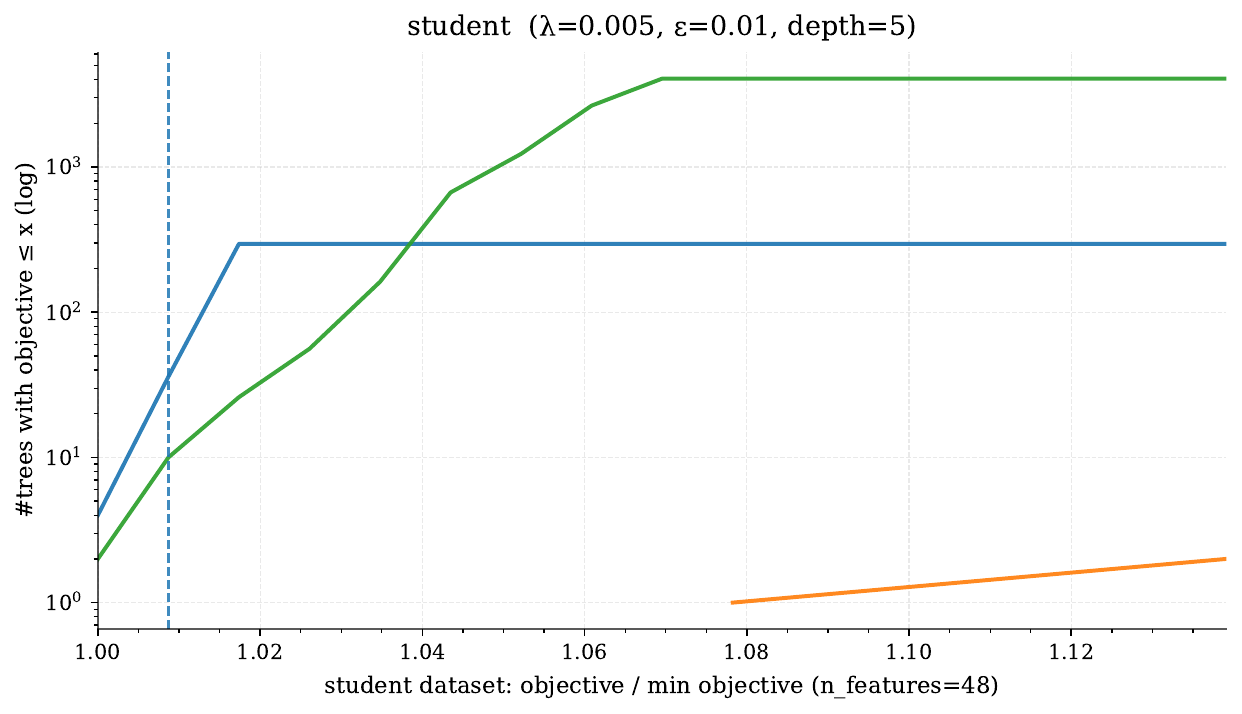}
\end{tabular}
\end{figure}
\FloatBarrier

\FloatBarrier
\subsection{Stopping Algorithms Early}

One algorithm we compare against, SORTeD \citep{arslan2025sorted}, supports saving intermediate solutions. Because it enumerates the Rashomon set in sorted order of objective value, truncating this process is equivalent to computing the Rashomon set for a smaller $\varepsilon_{\mathrm{mult}}$. However, before any solutions can be returned, SORTeD must first identify the optimal tree, which constitutes a major computational bottleneck. As shown in Figure~\ref{fig:stop_early}, computing the optimal tree dominates the runtime; consequently, early termination still incurs most of the computational cost while recovering only a small fraction of the Rashomon set.

\begin{figure}[t]
\centering
\includegraphics[width=\linewidth]{figs/stop_early.png}
\caption{Runtime vs.\ fraction of the Rashomon set ($\lambda=0.005, \varepsilon_{\mathrm{mult}}=0.03$) recovered when stopping SORTeD early. The majority of time is spent computing the optimal tree, while early stopping yields only a small fraction of the set.}
\label{fig:stop_early}
\end{figure}

\FloatBarrier

Similarly, \algname could be run with a smaller $\varepsilon_{\mathrm{mult}}$ and then reuse all of the caches from the proxy algorithm to approximate a Rashomon set for a larger $\varepsilon_{\mathrm{mult}}$. 

\subsection{Example Decision Trees}

We present example decision trees discovered by \algname for several datasets used in the paper. Note that the $\gamma$ (or equivalently, $\lambda$) penalty on the number of leaves encourages sparse trees rather than fully grown trees. We set $\varepsilon_{\textrm{mult}}=0.03$ and $\lambda=0.005$ for these examples. The examples highlight the variety of rules and tree sizes that can arise while achieving similar objective values, illustrating the Rashomon effect in practice.

At each internal node, samples are routed to the left child when the binary feature evaluates to \texttt{True} and to the right child otherwise.

\begin{figure}[p]
\centering

\begin{subfigure}{0.48\linewidth}
    \centering
    \includegraphics[width=\linewidth,trim=0 0 0 1.5cm,clip]{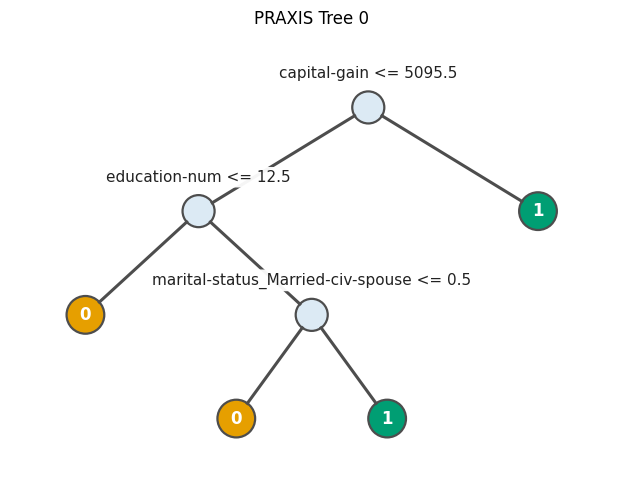}
    \caption{Accuracy: 84.2\%, Leaves: 4}
\end{subfigure}
\begin{subfigure}{0.48\linewidth}
    \centering
    \includegraphics[width=\linewidth,trim=0 0 0 1.5cm,clip]{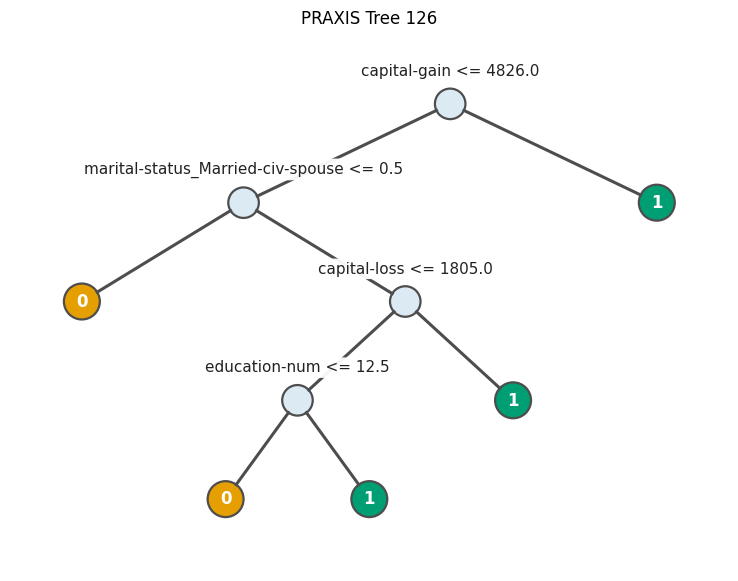}
    \caption{Accuracy: 84.5\%, Leaves: 5}
\end{subfigure}

\medskip

\begin{subfigure}{0.48\linewidth}
    \centering
    \includegraphics[width=\linewidth,trim=0 0 0 1.5cm,clip]{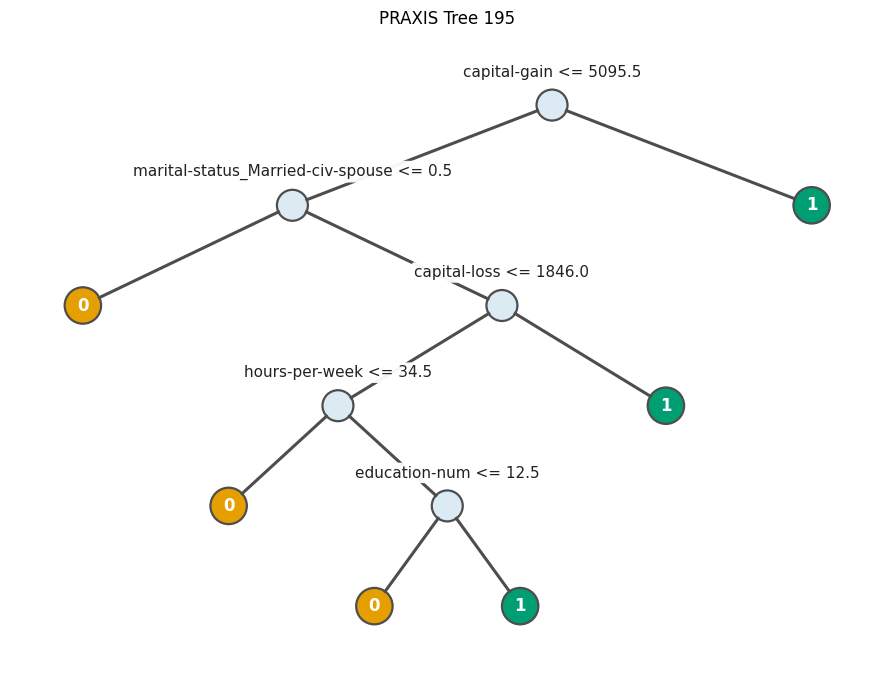}
    \caption{Accuracy: 85.0\%, Leaves: 6}
\end{subfigure}
\begin{subfigure}{0.48\linewidth}
    \centering
    \includegraphics[width=\linewidth,trim=0 0 0 1.5cm,clip]{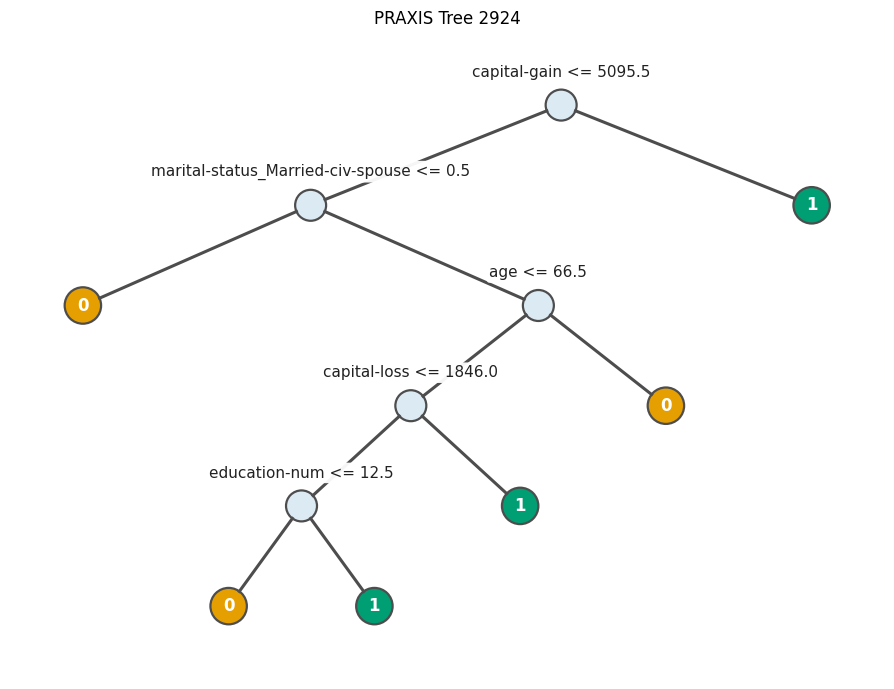}
    \caption{Accuracy: 84.8\%, Leaves: 6}
\end{subfigure}

\medskip

\begin{subfigure}{0.48\linewidth}
    \centering
    \includegraphics[width=\linewidth,trim=0 0 0 1.5cm,clip]{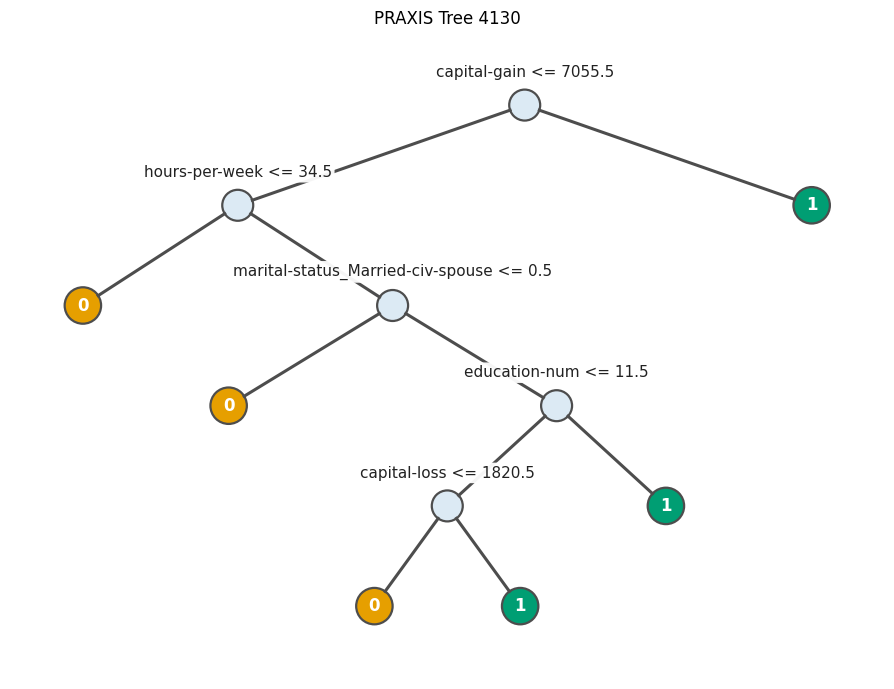}
    \caption{Accuracy: 84.8\%, Leaves: 6}
\end{subfigure}
\begin{subfigure}{0.48\linewidth}
    \centering
    \includegraphics[width=\linewidth,trim=0 0 0 1.5cm,clip]{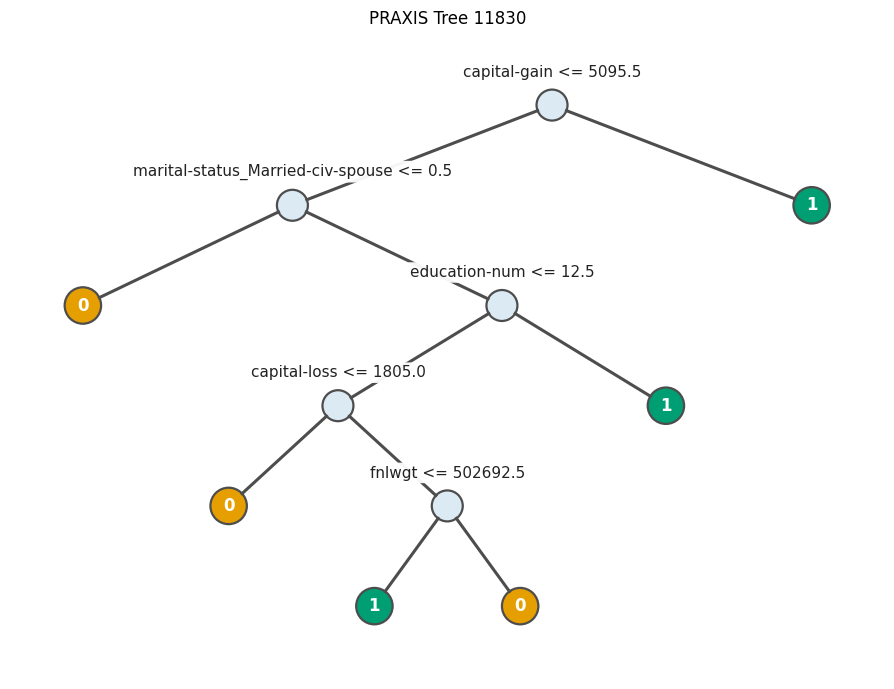}
    \caption{Accuracy: 84.7\%, Leaves: 6}
\end{subfigure}

\caption{\textbf{Adult}. Example near-optimal trees with accuracy and number of leaves shown below each tree.}
\end{figure}

\begin{figure}[p]
\centering

\begin{subfigure}{0.48\linewidth}
    \centering
    \includegraphics[width=\linewidth,trim=0 0 0 1.5cm,clip]{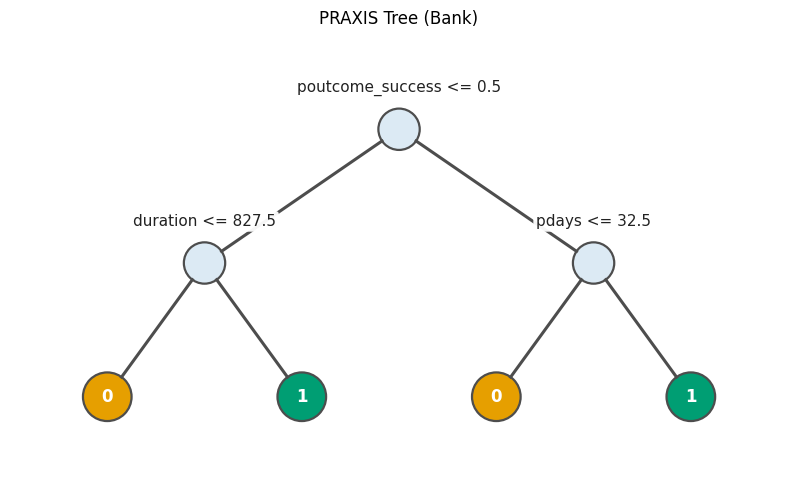}
    \caption{Accuracy: 89.8\%, Leaves: 4}
\end{subfigure}
\begin{subfigure}{0.48\linewidth}
    \centering
    \includegraphics[width=\linewidth,trim=0 0 0 1.5cm,clip]{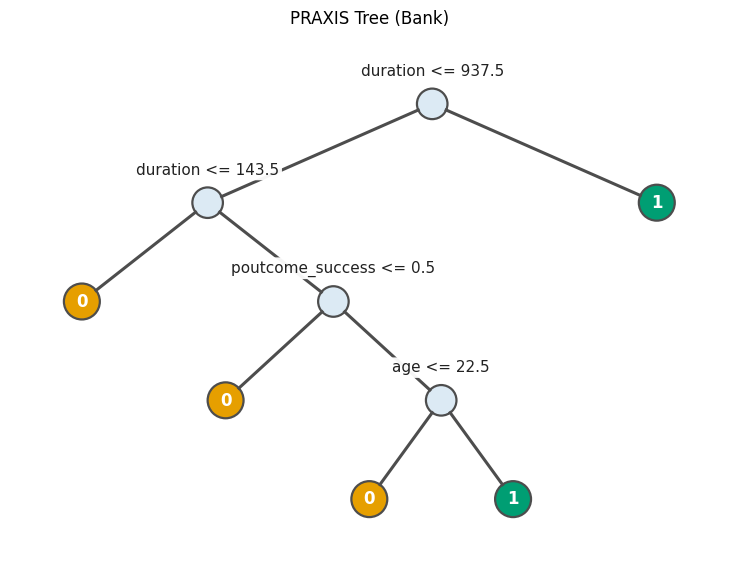}
    \caption{Accuracy: 90.0\%, Leaves: 5}
\end{subfigure}

\medskip

\begin{subfigure}{0.48\linewidth}
    \centering
    \includegraphics[width=\linewidth,trim=0 0 0 1.5cm,clip]{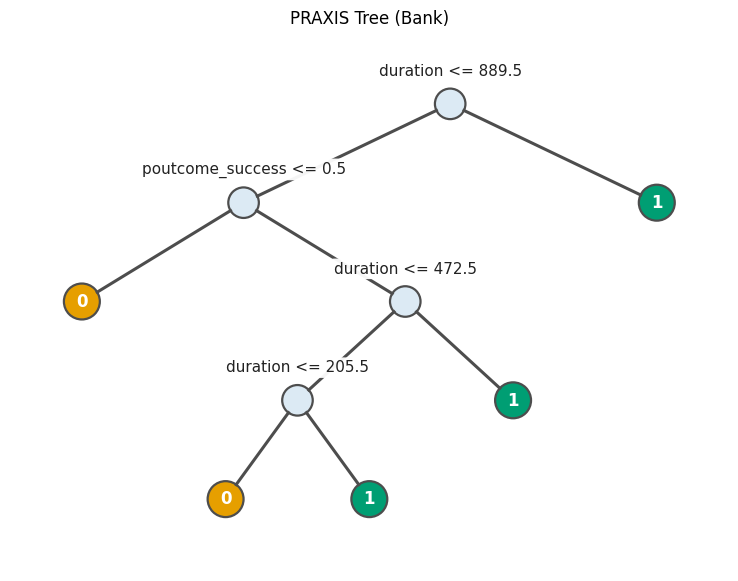}
    \caption{Accuracy: 89.5\%, Leaves: 5}
\end{subfigure}
\begin{subfigure}{0.48\linewidth}
    \centering
    \includegraphics[width=\linewidth,trim=0 0 0 1.5cm,clip]{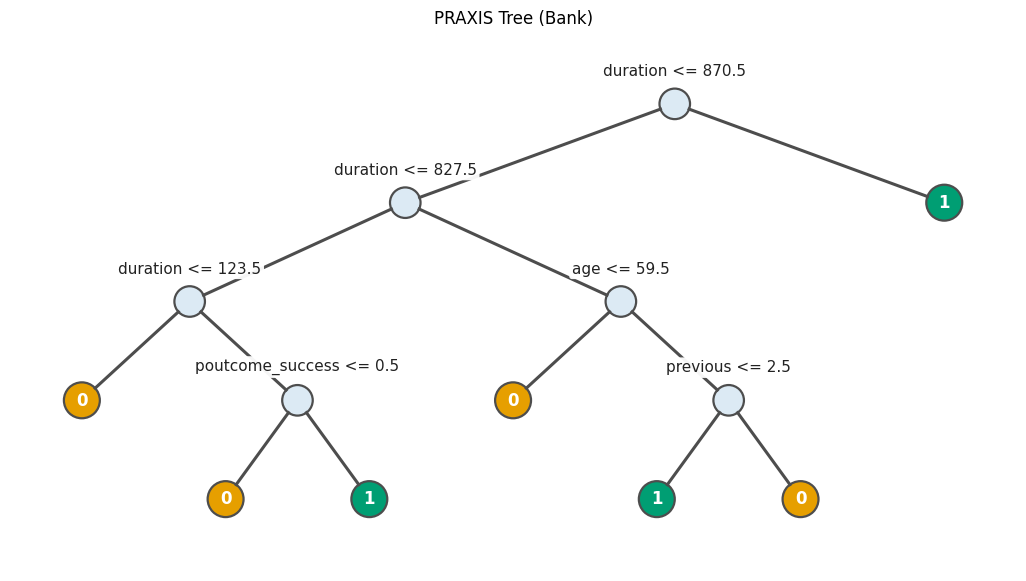}
    \caption{Accuracy: 90.1\%, Leaves: 7}
\end{subfigure}

\caption{\textbf{Bank}. Example near-optimal trees with accuracy and number of leaves shown below each tree.}
\end{figure}

\begin{figure}[p]
\centering

\begin{subfigure}{0.48\linewidth}
    \centering
    \includegraphics[width=\linewidth,trim=0 0 0 1.3cm,clip]{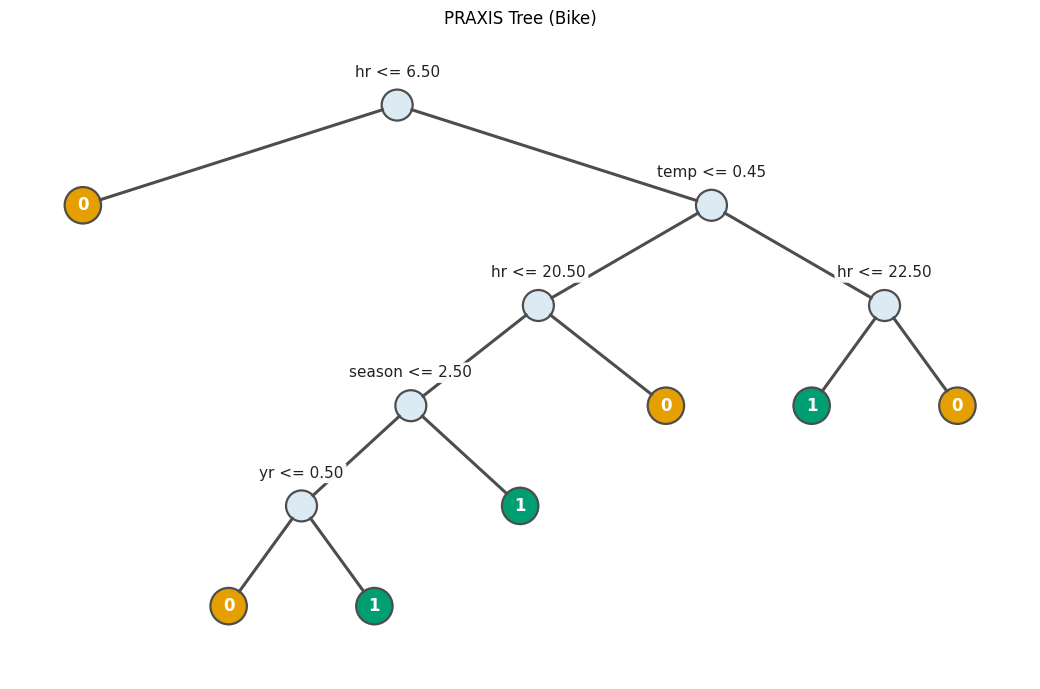}
    \caption{Accuracy: 86.4\%, Leaves: 7}
\end{subfigure}
\begin{subfigure}{0.48\linewidth}
    \centering
    \includegraphics[width=\linewidth,trim=0 0 0 1.3cm,clip]{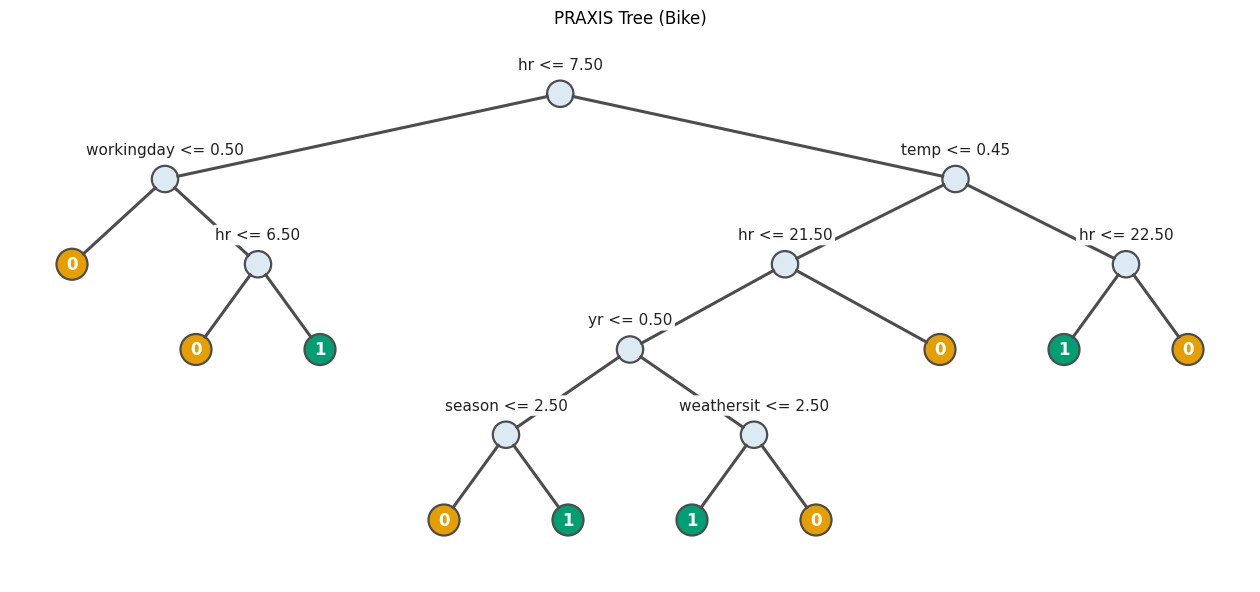}
    \caption{Accuracy: 87.4\%, Leaves: 10}
\end{subfigure}

\medskip

\begin{subfigure}{0.48\linewidth}
    \centering
    \includegraphics[width=\linewidth,trim=0 0 0 1.3cm,clip]{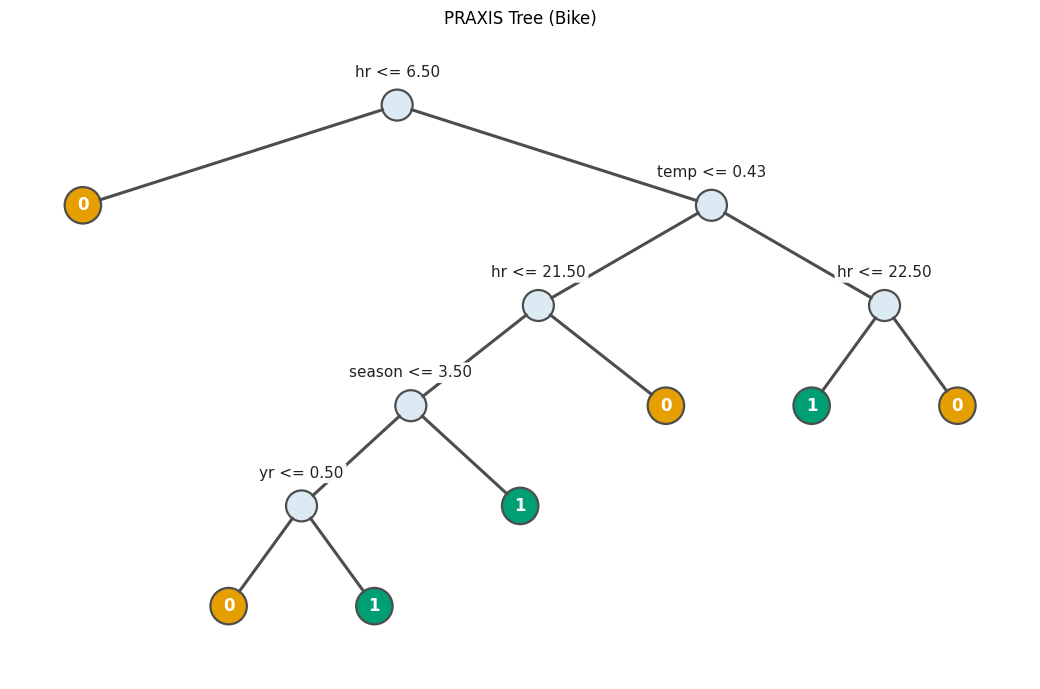}
    \caption{Accuracy: 86.1\%, Leaves: 7}
\end{subfigure}
\begin{subfigure}{0.48\linewidth}
    \centering
    \includegraphics[width=\linewidth,trim=0 0 0 1.3cm,clip]{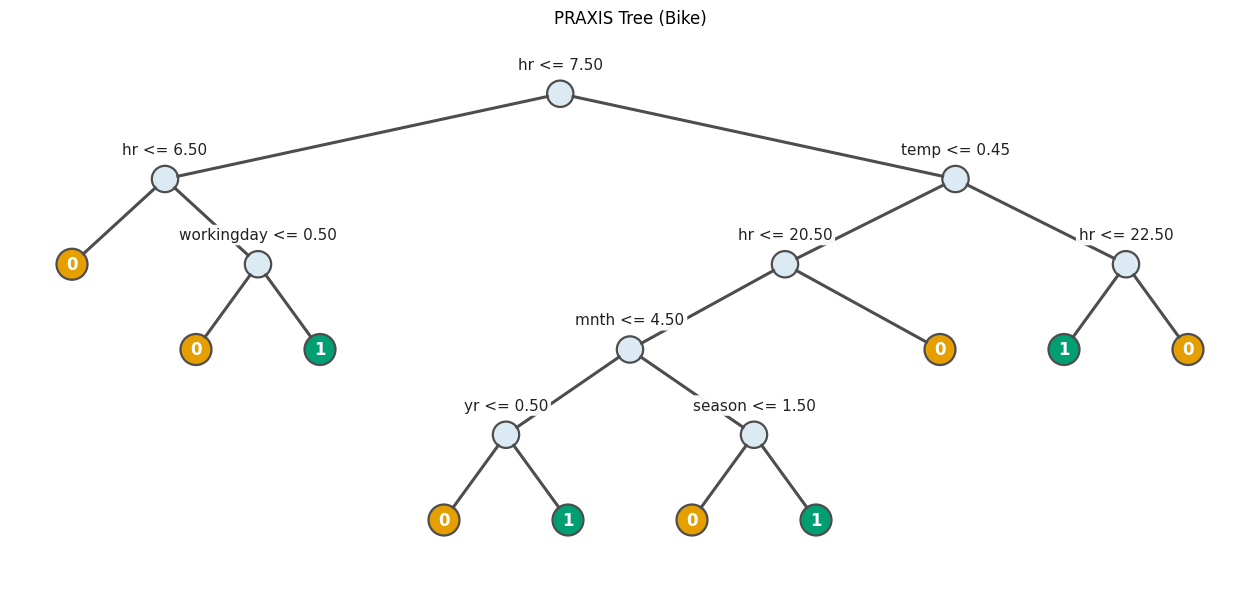}
    \caption{Accuracy: 87.3\%, Leaves: 10}
\end{subfigure}

\medskip

\begin{subfigure}{0.48\linewidth}
    \centering
    \includegraphics[width=\linewidth,trim=0 0 0 1.5cm,clip]{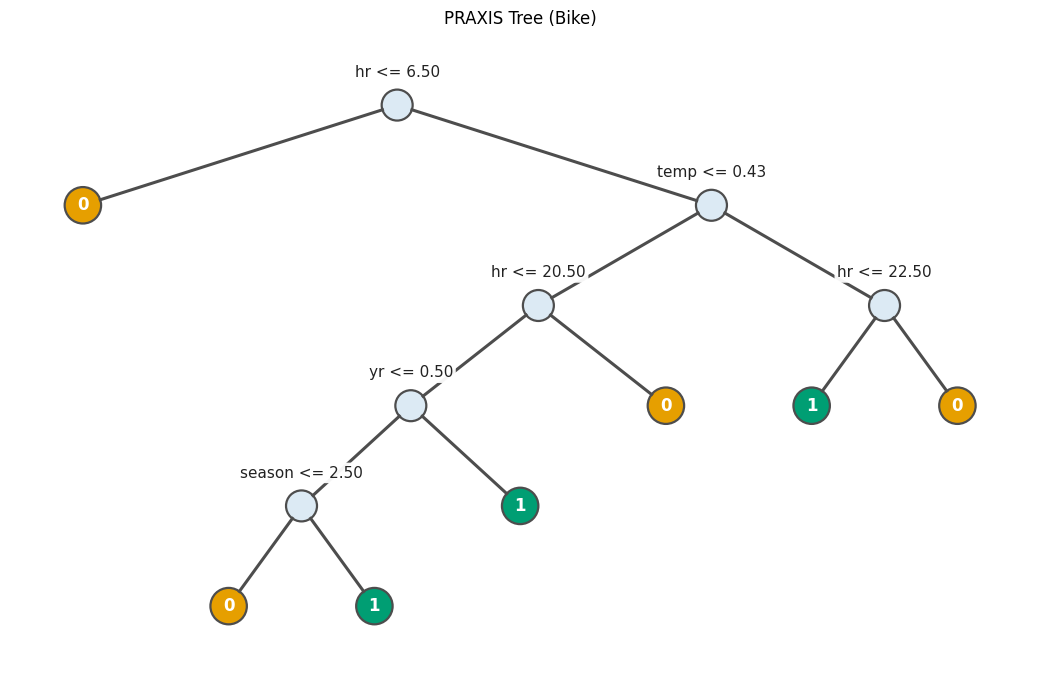}
    \caption{Accuracy: 86.3\%, Leaves: 7}
\end{subfigure}
\begin{subfigure}{0.48\linewidth}
    \centering
    \includegraphics[width=\linewidth,trim=0 0 0 1.5cm,clip]{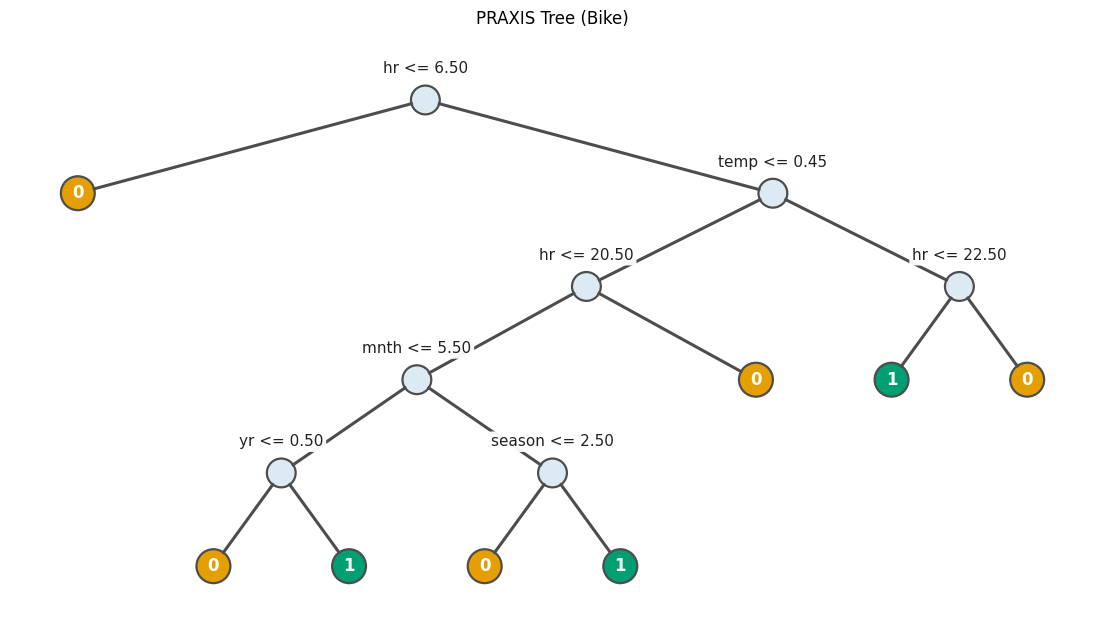}
    \caption{Accuracy: 86.7\%, Leaves: 8}
\end{subfigure}

\caption{\textbf{Bike}. Example near-optimal trees with accuracy and number of leaves shown below each tree.}
\end{figure}

\begin{figure}[p]
\centering

\begin{subfigure}{0.48\linewidth}
    \centering
    \includegraphics[width=\linewidth,trim=0 0 0 1.2cm,clip]{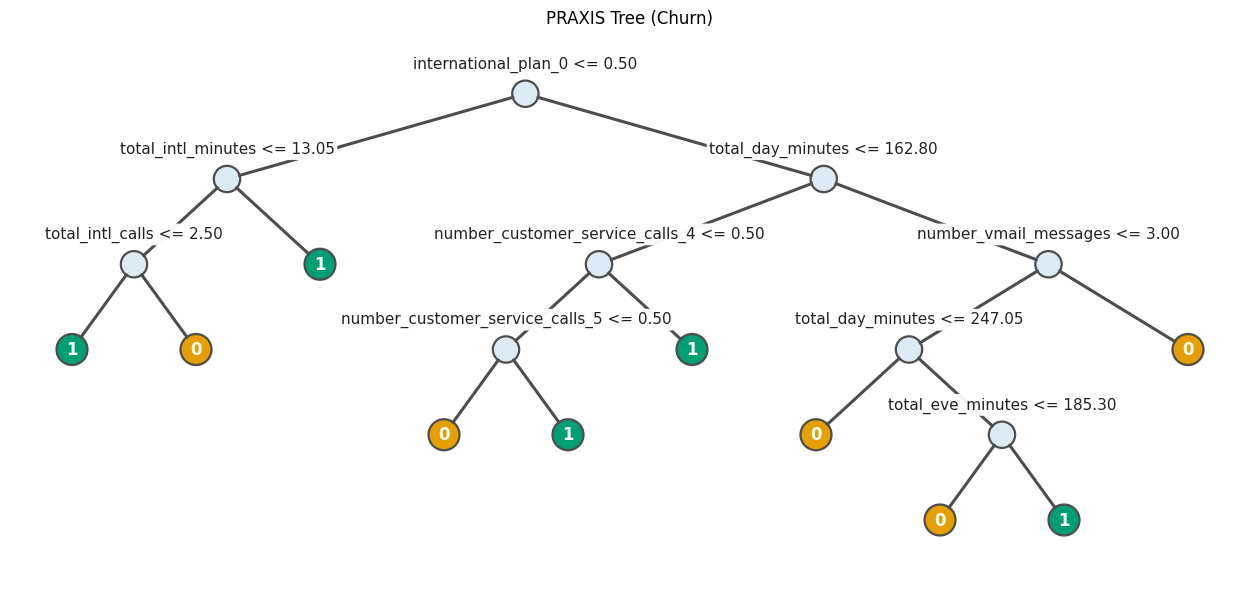}
    \caption{Accuracy: 93.4\%, Leaves: 10}
\end{subfigure}
\begin{subfigure}{0.48\linewidth}
    \centering
    \includegraphics[width=\linewidth,trim=0 0 0 1.2cm,clip]{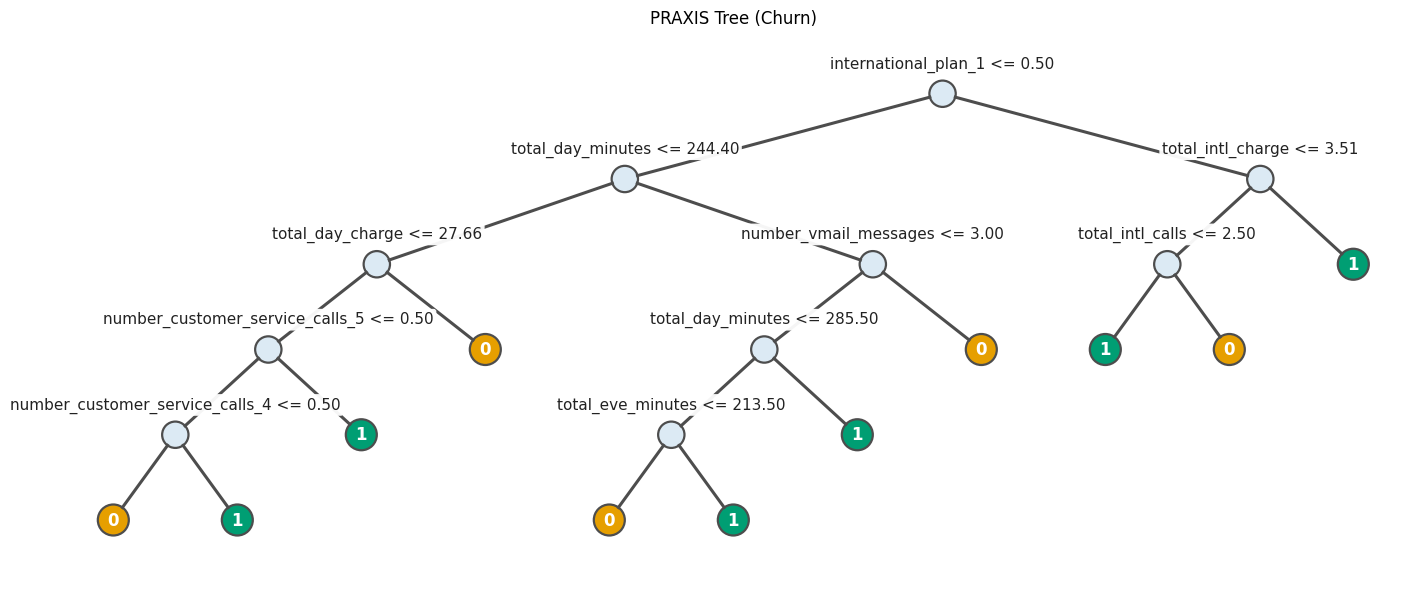}
    \caption{Accuracy: 93.5\%, Leaves: 11}
\end{subfigure}

\medskip

\begin{subfigure}{0.48\linewidth}
    \centering
    \includegraphics[width=\linewidth,trim=0 0 0 1.1cm,clip]{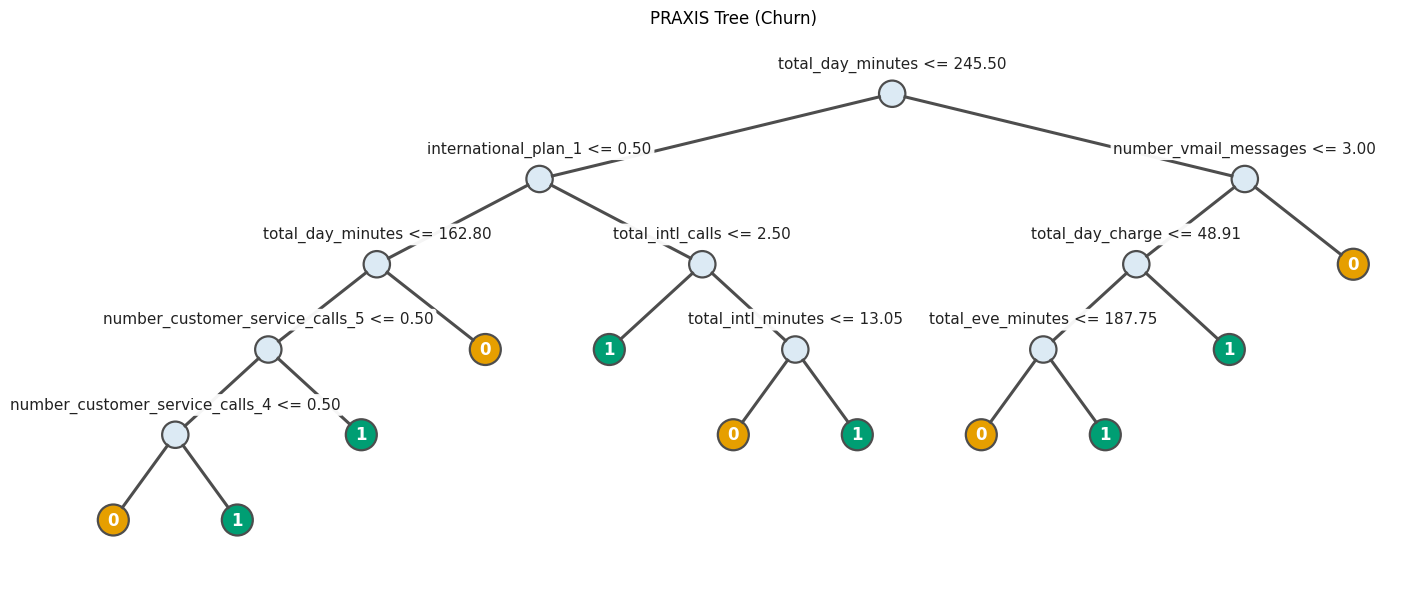}
    \caption{Accuracy: 93.4\%, Leaves: 11}
\end{subfigure}
\begin{subfigure}{0.48\linewidth}
    \centering
    \includegraphics[width=\linewidth,trim=0 0 0 1.1cm,clip]{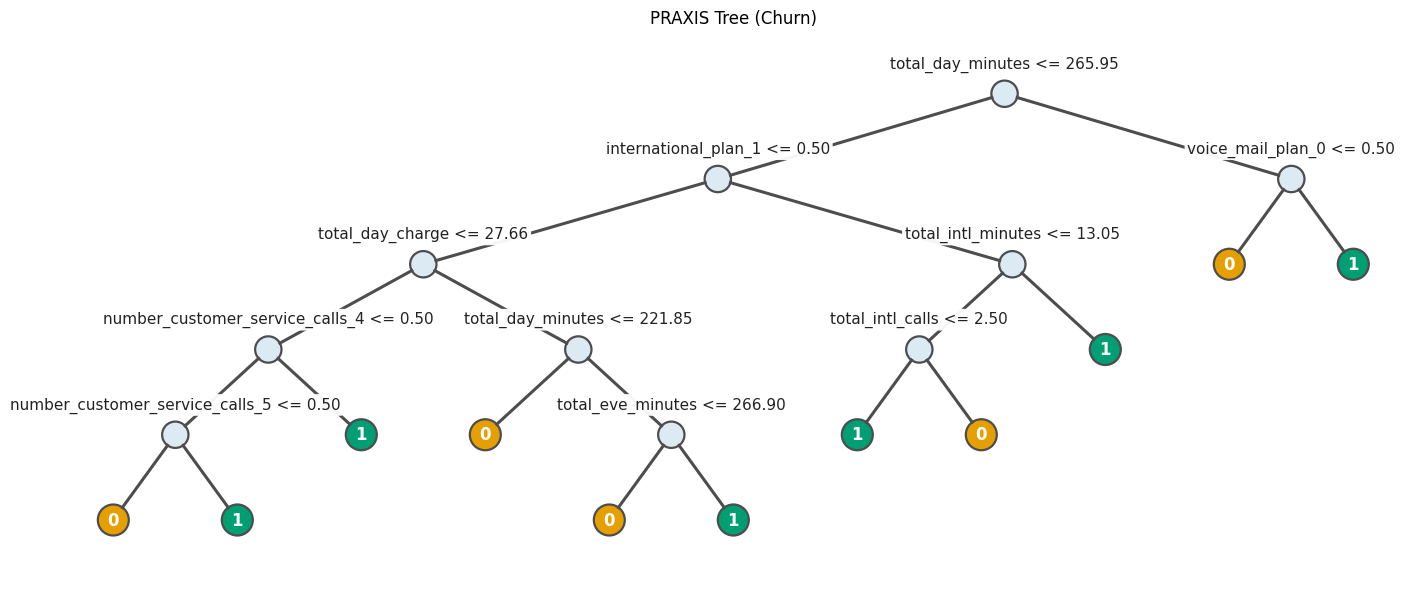}
    \caption{Accuracy: 93.4\%, Leaves: 11}
\end{subfigure}

\medskip

\begin{subfigure}{0.48\linewidth}
    \centering
    \includegraphics[width=\linewidth,trim=0 0 0 1.2cm,clip]{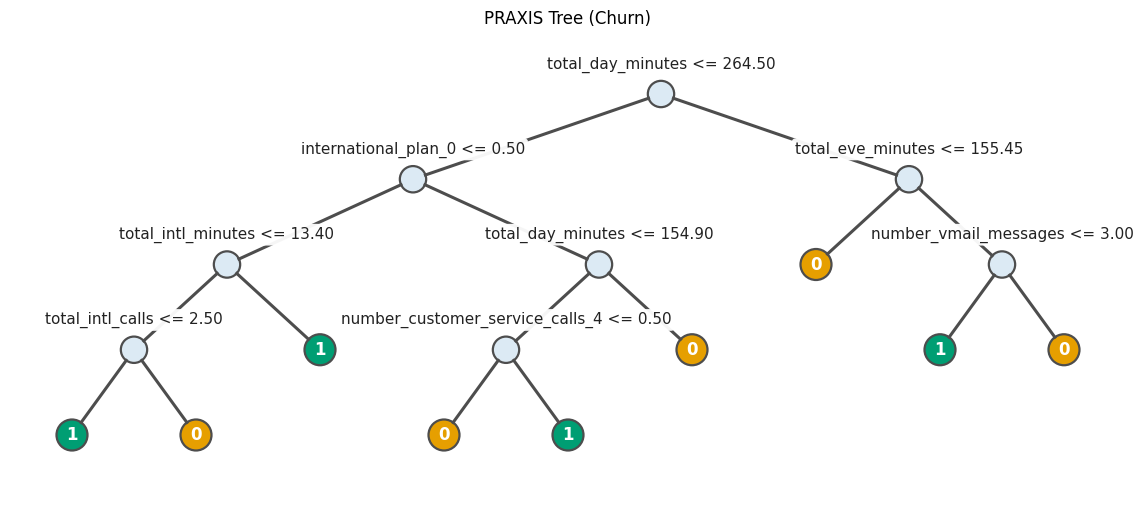}
    \caption{Accuracy: 92.3\%, Leaves: 9}
\end{subfigure}
\begin{subfigure}{0.48\linewidth}
    \centering
    \includegraphics[width=\linewidth,trim=0 0 0 1.2cm,clip]{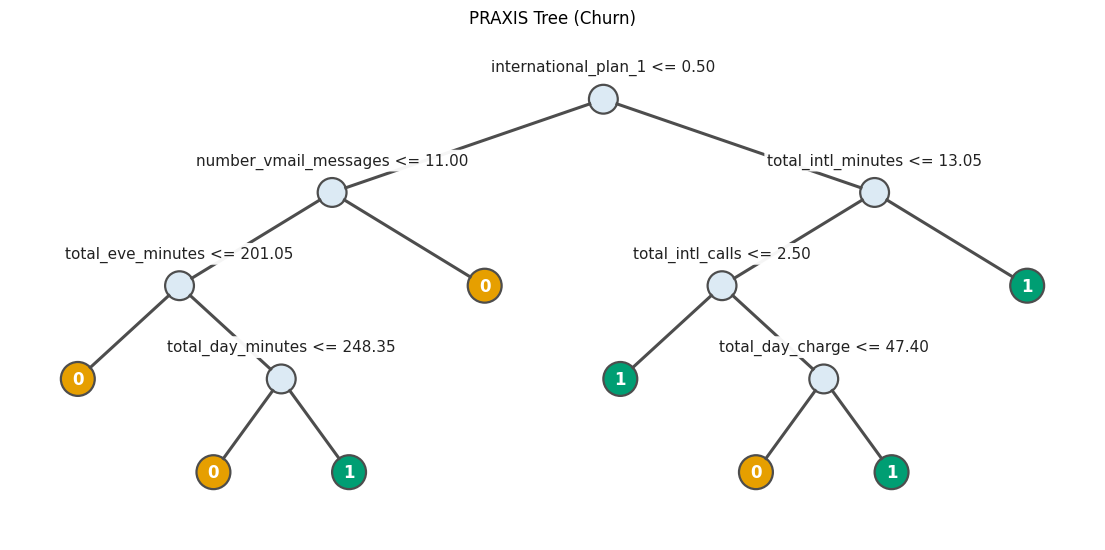}
    \caption{Accuracy: 91.8\%, Leaves: 8}
\end{subfigure}

\caption{\textbf{Churn}. Example near-optimal trees with accuracy and number of leaves shown below each tree.}
\end{figure}

\begin{figure}[p]
\centering

\begin{subfigure}{0.48\linewidth}
    \centering
    \includegraphics[width=\linewidth,trim=0 0 0 1.5cm,clip]{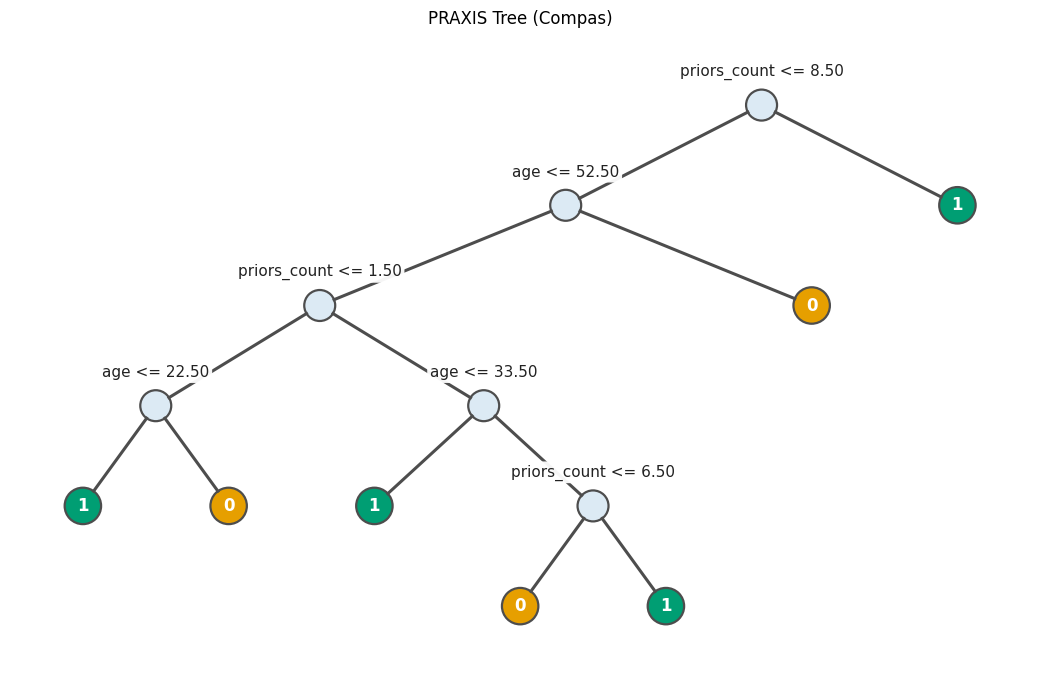}
    \caption{Accuracy: 67.8\%, Leaves: 7}
\end{subfigure}
\begin{subfigure}{0.48\linewidth}
    \centering
    \includegraphics[width=\linewidth,trim=0 0 0 1.5cm,clip]{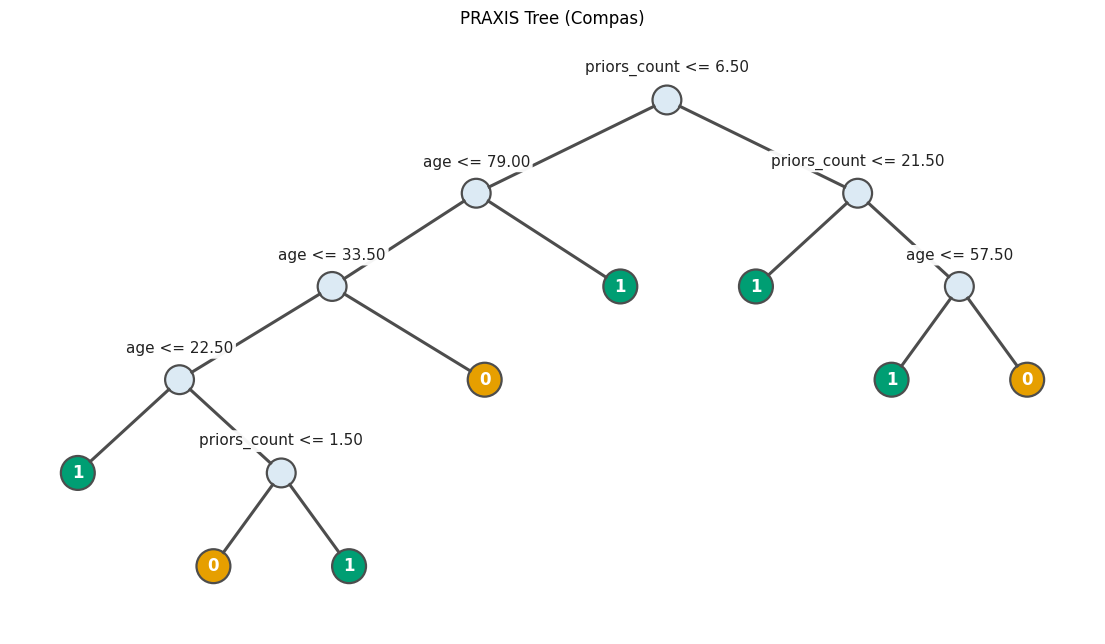}
    \caption{Accuracy: 67.7\%, Leaves: 8}
\end{subfigure}

\medskip

\begin{subfigure}{0.48\linewidth}
    \centering
    \includegraphics[width=\linewidth,trim=0 0 0 1.5cm,clip]{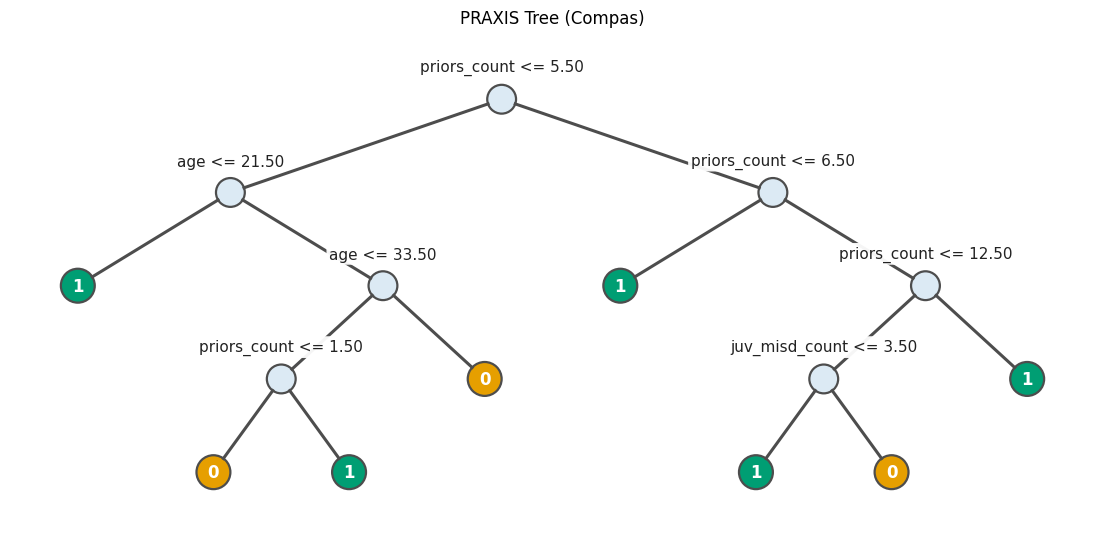}
    \caption{Accuracy: 67.7\%, Leaves: 8}
\end{subfigure}
\begin{subfigure}{0.48\linewidth}
    \centering
    \includegraphics[width=\linewidth,trim=0 0 0 1.5cm,clip]{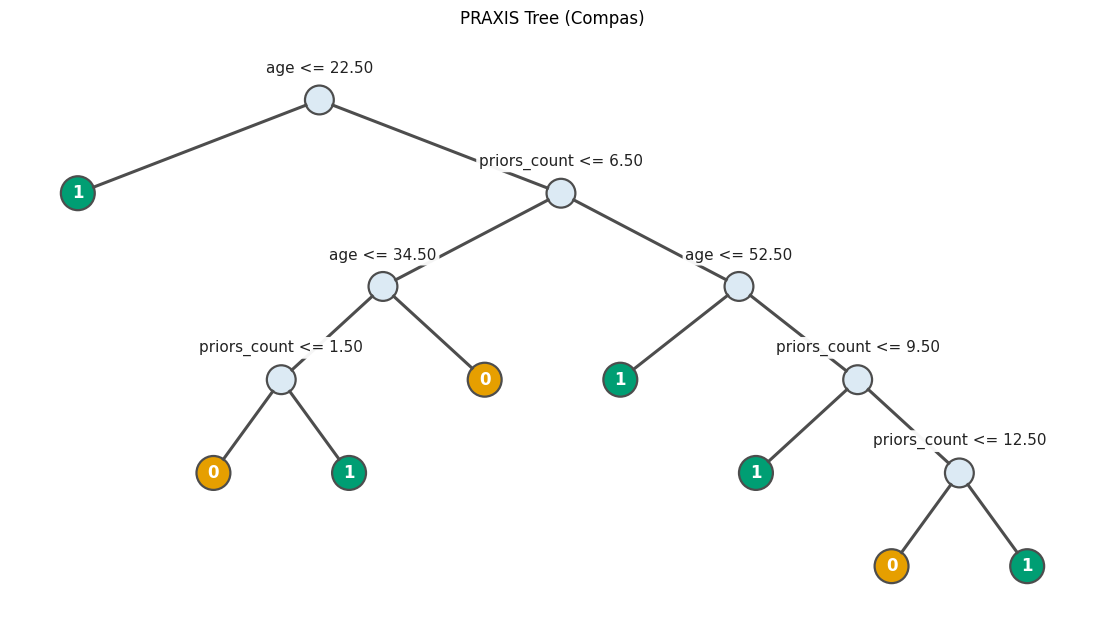}
    \caption{Accuracy: 67.6\%, Leaves: 8}
\end{subfigure}

\caption{\textbf{Compas}. Example near-optimal trees with accuracy and number of leaves shown below each tree.}
\end{figure}

\begin{figure}[p]
\centering

\begin{subfigure}{0.48\linewidth}
    \centering
    \includegraphics[width=\linewidth,trim=0 0 0 1.2cm,clip]{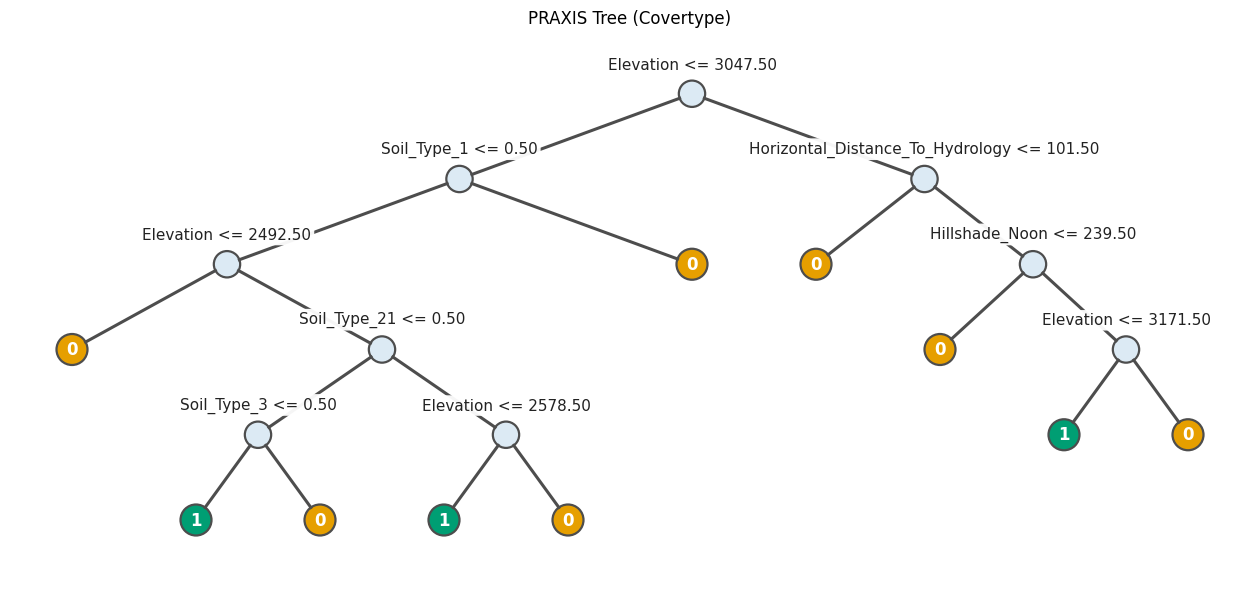}
    \caption{Accuracy: 75.5\%, Leaves: 10}
\end{subfigure}
\begin{subfigure}{0.48\linewidth}
    \centering
    \includegraphics[width=\linewidth,trim=0 0 0 1.2cm,clip]{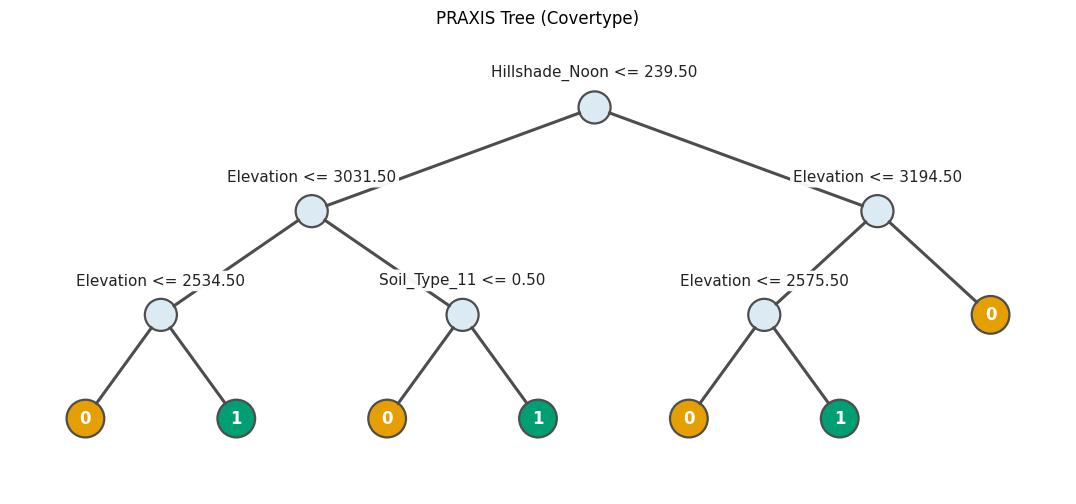}
    \caption{Accuracy: 73.9\%, Leaves: 7}
\end{subfigure}

\medskip

\begin{subfigure}{0.48\linewidth}
    \centering
    \includegraphics[width=\linewidth,trim=0 0 0 1.2cm,clip]{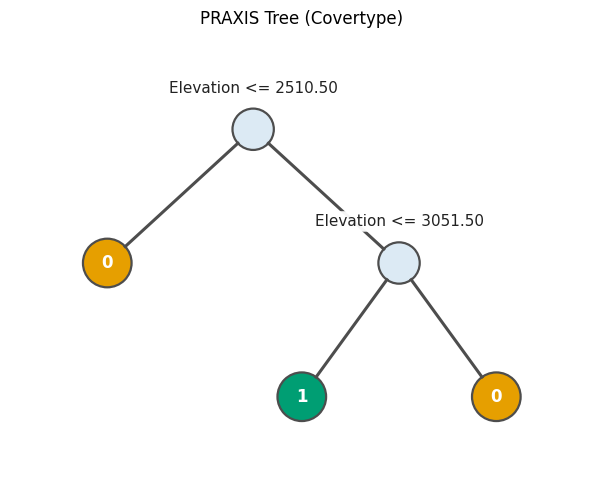}
    \caption{Accuracy: 73.2\%, Leaves: 3}
\end{subfigure}
\begin{subfigure}{0.48\linewidth}
    \centering
    \includegraphics[width=\linewidth,trim=0 0 0 1.2cm,clip]{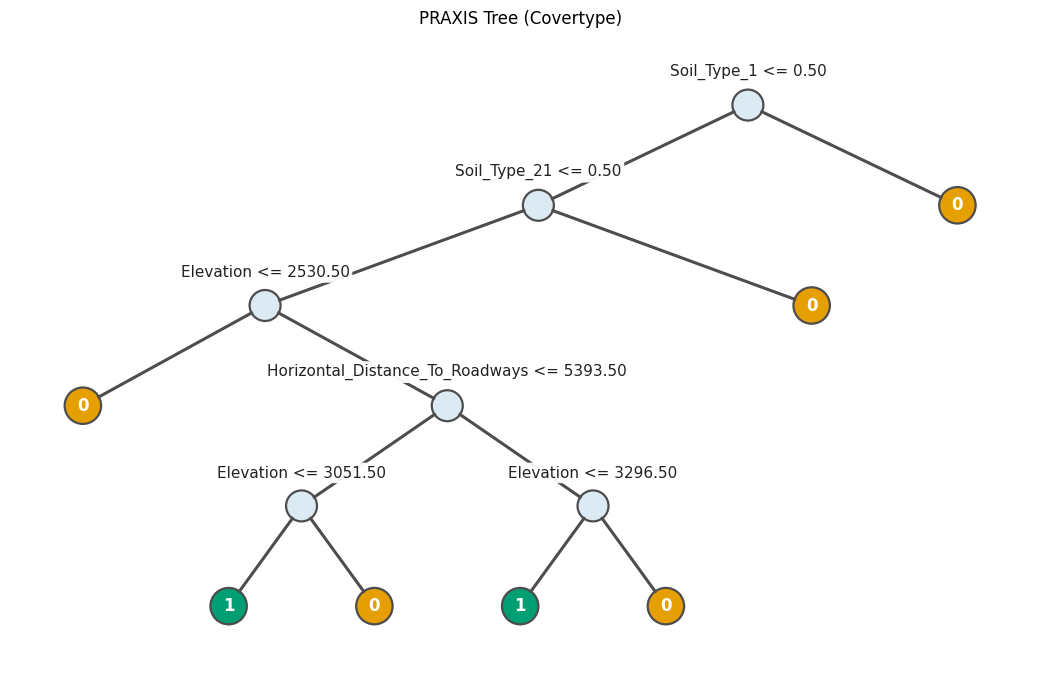}
    \caption{Accuracy: 74.5\%, Leaves: 7}
\end{subfigure}

\caption{\textbf{Covertype}. Example near-optimal trees with accuracy and number of leaves shown below each tree.}
\end{figure}

\begin{figure}[p]
\centering

\begin{subfigure}{0.48\linewidth}
    \centering
    \includegraphics[width=\linewidth,trim=0 0 0 1.5cm,clip]{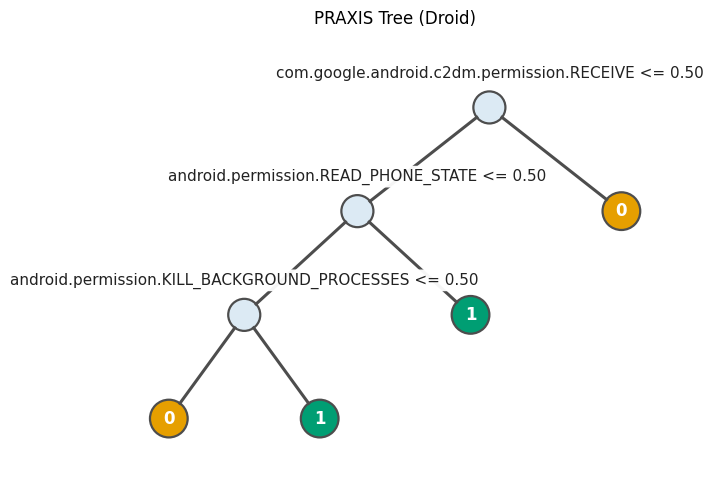}
    \caption{Accuracy: 92.7\%, Leaves: 4}
\end{subfigure}
\begin{subfigure}{0.48\linewidth}
    \centering
    \includegraphics[width=\linewidth,trim=0 0 0 1.5cm,clip]{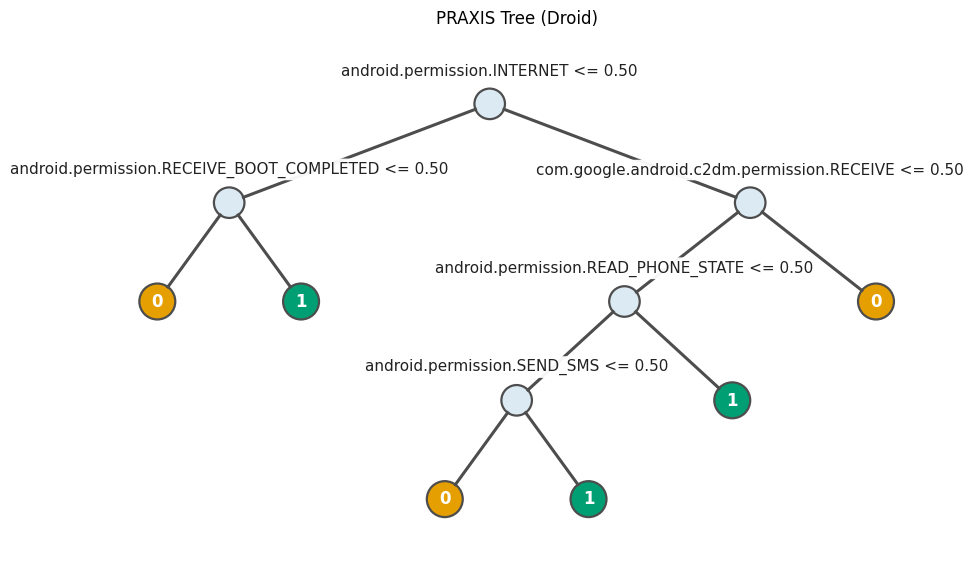}
    \caption{Accuracy: 93.4\%, Leaves: 6}
\end{subfigure}

\medskip

\begin{subfigure}{0.48\linewidth}
    \centering
    \includegraphics[width=\linewidth,trim=0 0 0 1.5cm,clip]{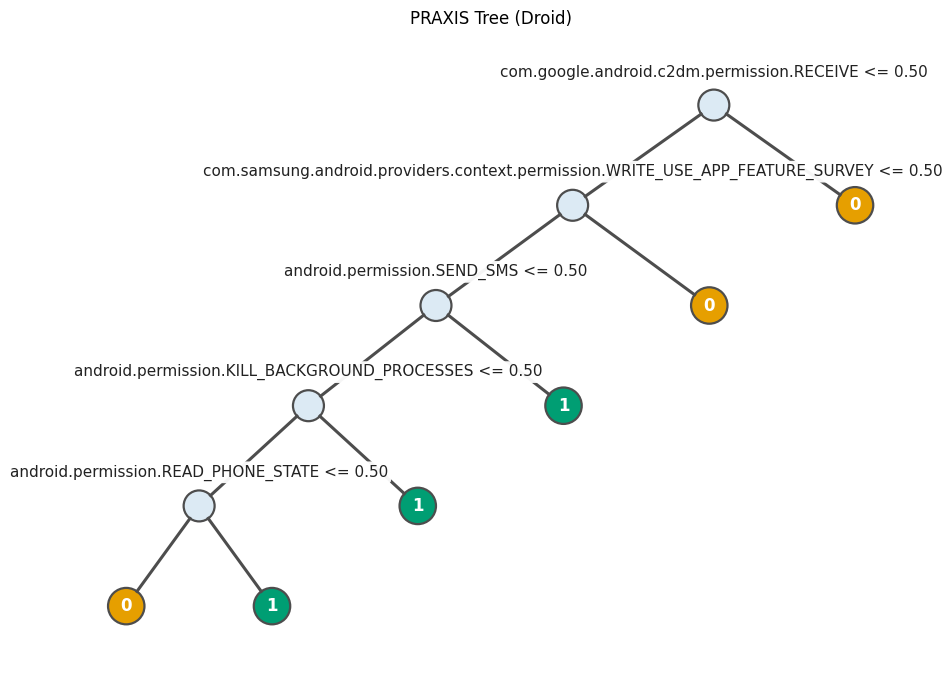}
    \caption{Accuracy: 93.3\%, Leaves: 6}
\end{subfigure}
\begin{subfigure}{0.48\linewidth}
    \centering
    \includegraphics[width=\linewidth,trim=0 0 0 1.5cm,clip]{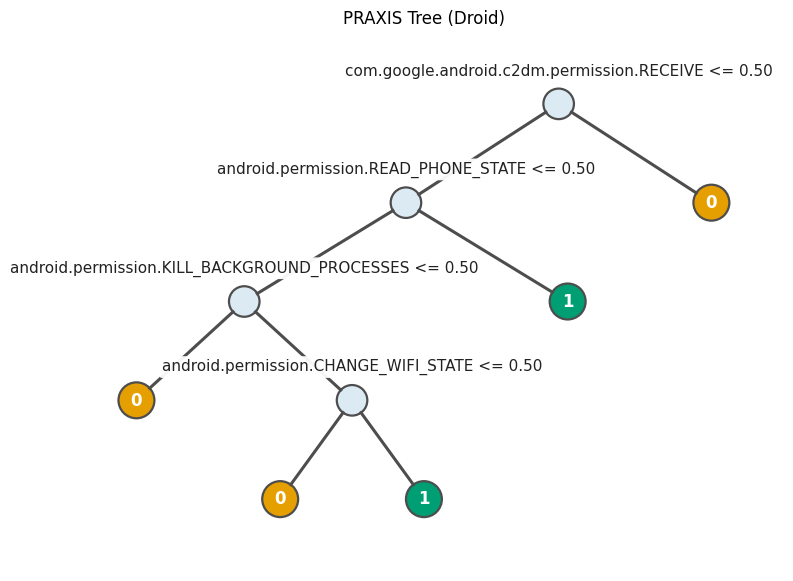}
    \caption{Accuracy: 92.8\%, Leaves: 5}
\end{subfigure}

\medskip

\begin{subfigure}{0.48\linewidth}
    \centering
    \includegraphics[width=\linewidth,trim=0 0 0 1.5cm,clip]{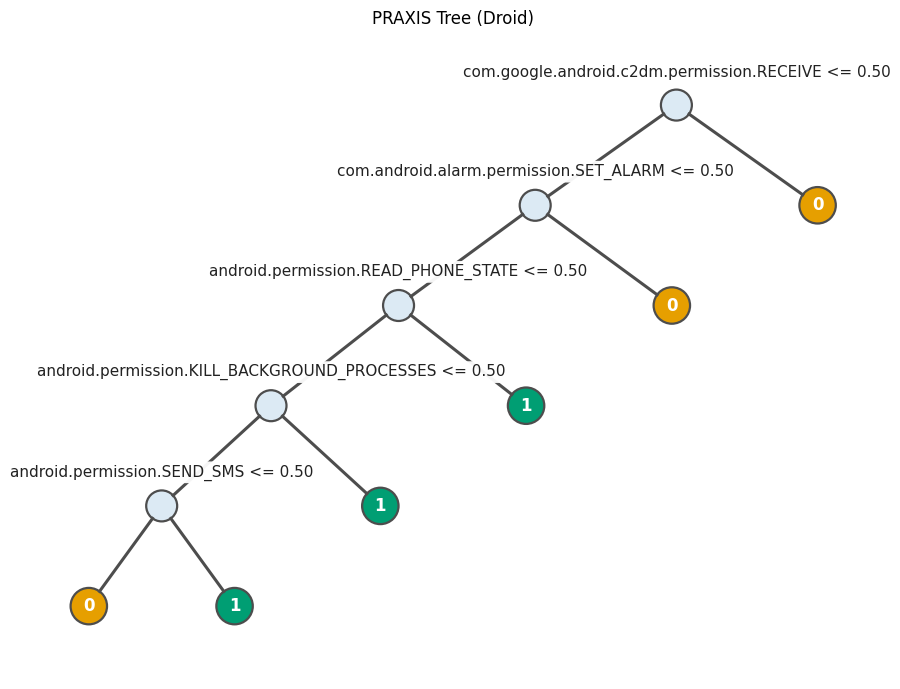}
    \caption{Accuracy: 93.3\%, Leaves: 6}
\end{subfigure}
\begin{subfigure}{0.48\linewidth}
    \centering
    \includegraphics[width=\linewidth,trim=0 0 0 1.5cm,clip]{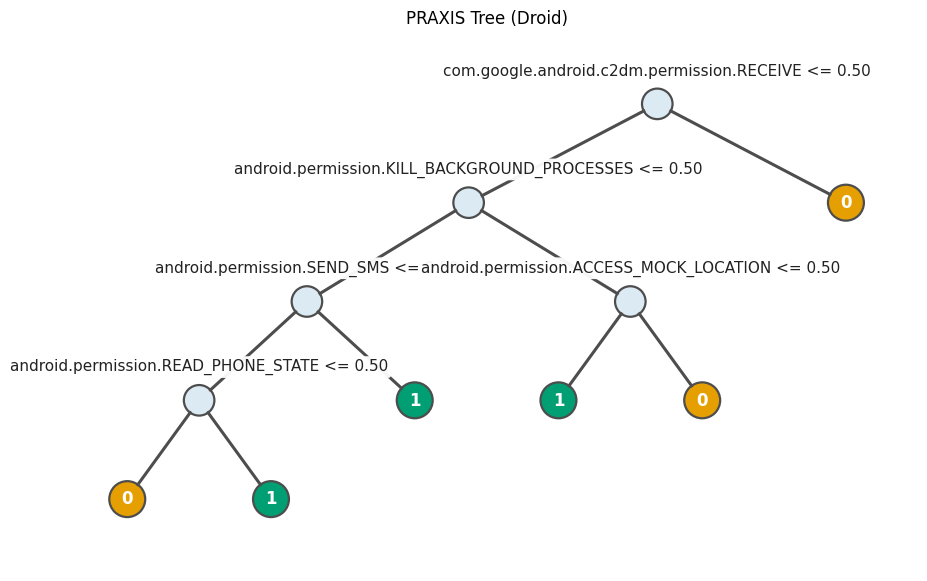}
    \caption{Accuracy: 93.2\%, Leaves: 6}
\end{subfigure}

\caption{\textbf{Droid}. Example near-optimal trees with accuracy and number of leaves shown below each tree.}
\end{figure}

\begin{figure}[p]
\centering

\begin{subfigure}{0.48\linewidth}
    \centering
    \includegraphics[width=\linewidth,trim=0 0 0 1.5cm,clip]{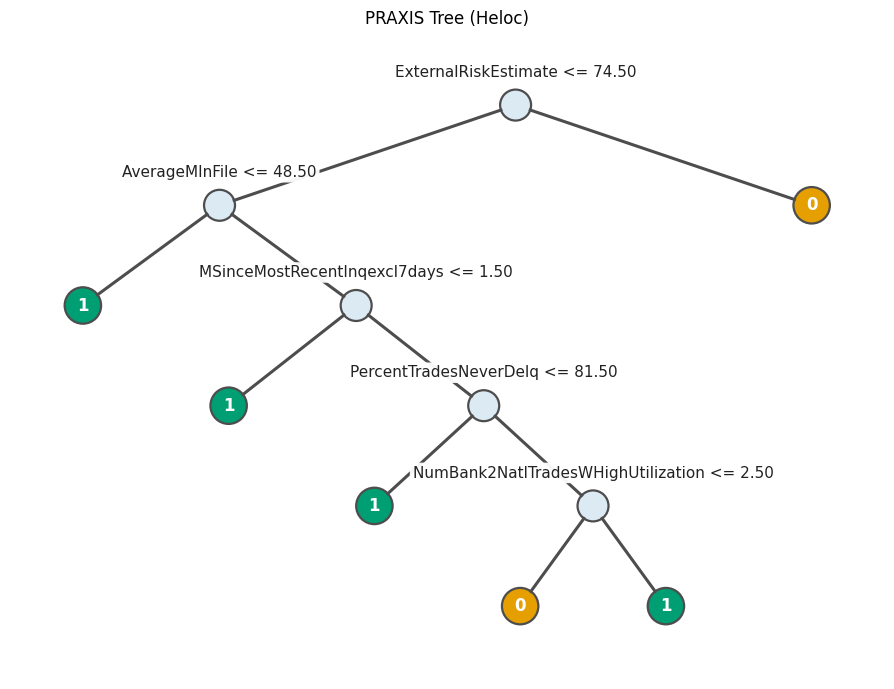}
    \caption{Accuracy: 71.6\%, Leaves: 6}
\end{subfigure}
\begin{subfigure}{0.48\linewidth}
    \centering
    \includegraphics[width=\linewidth,trim=0 0 0 1.5cm,clip]{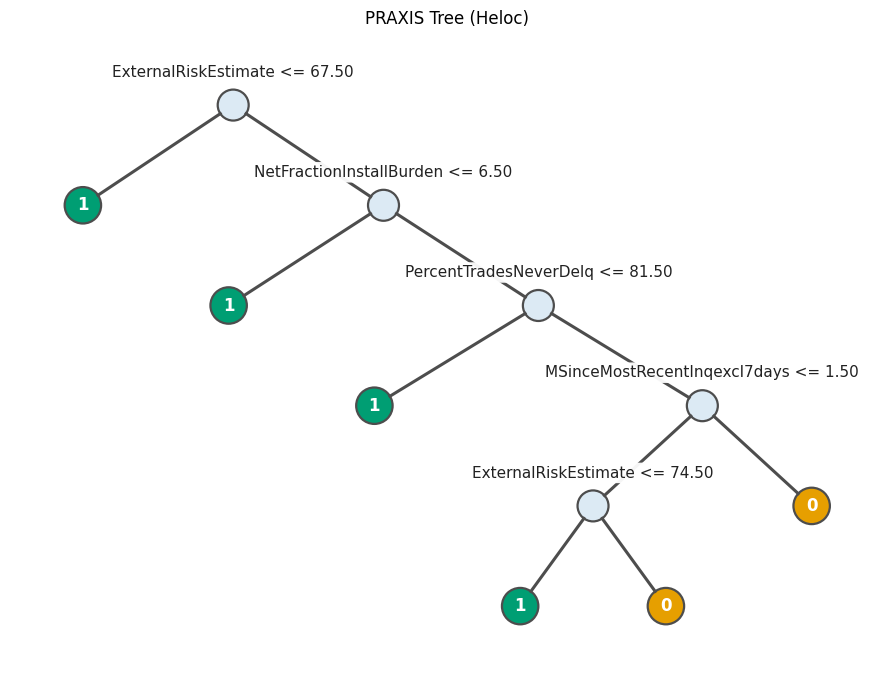}
    \caption{Accuracy: 71.6\%, Leaves: 6}
\end{subfigure}

\caption{\textbf{Heloc}. Example near-optimal trees with accuracy and number of leaves shown below each tree.}
\end{figure}

\begin{figure}[p]
\centering

\begin{subfigure}{0.48\linewidth}
    \centering
    \includegraphics[width=\linewidth,trim=0 0 0 1.5cm,clip]{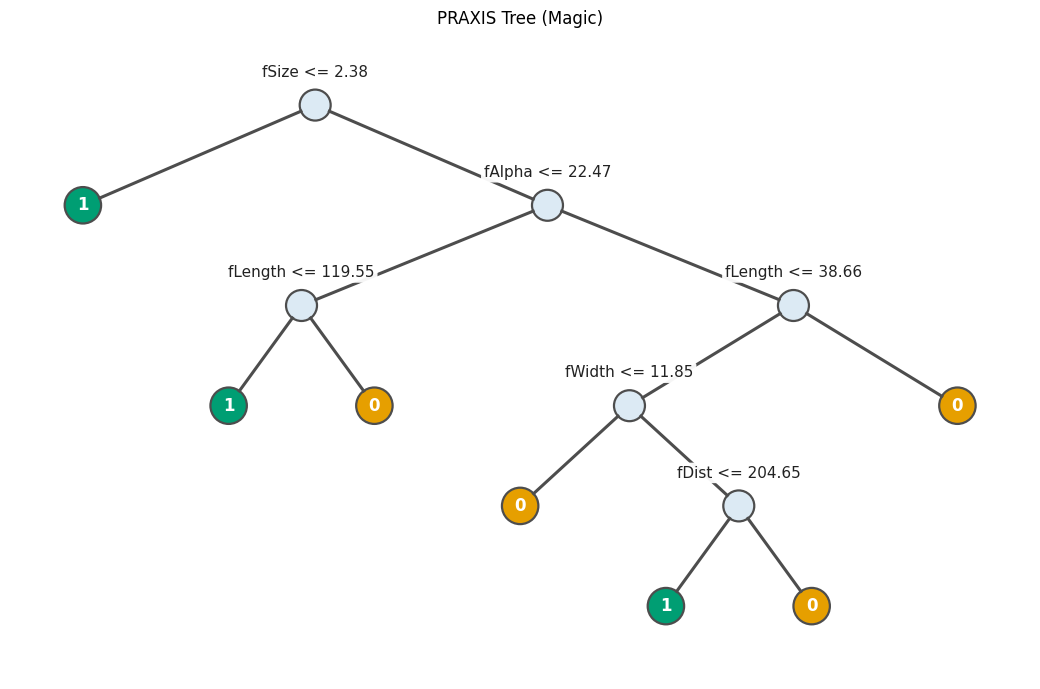}
    \caption{Accuracy: 82.9\%, Leaves: 7}
\end{subfigure}
\begin{subfigure}{0.48\linewidth}
    \centering
    \includegraphics[width=\linewidth,trim=0 0 0 1.5cm,clip]{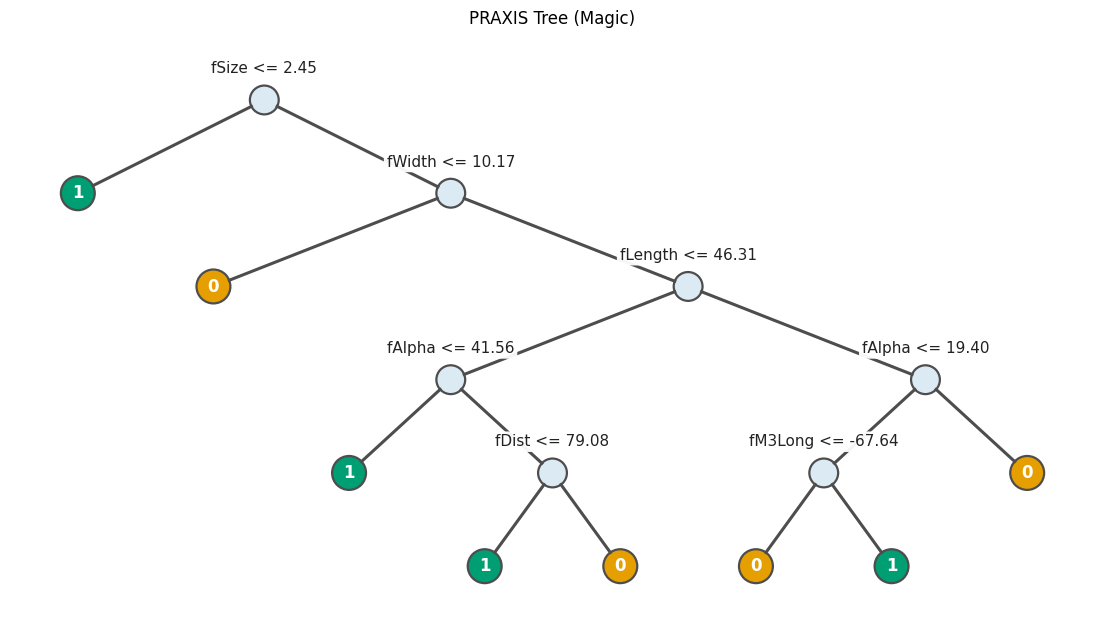}
    \caption{Accuracy: 83.3\%, Leaves: 8}
\end{subfigure}

\medskip

\begin{subfigure}{0.48\linewidth}
    \centering
    \includegraphics[width=\linewidth,trim=0 0 0 1.5cm,clip]{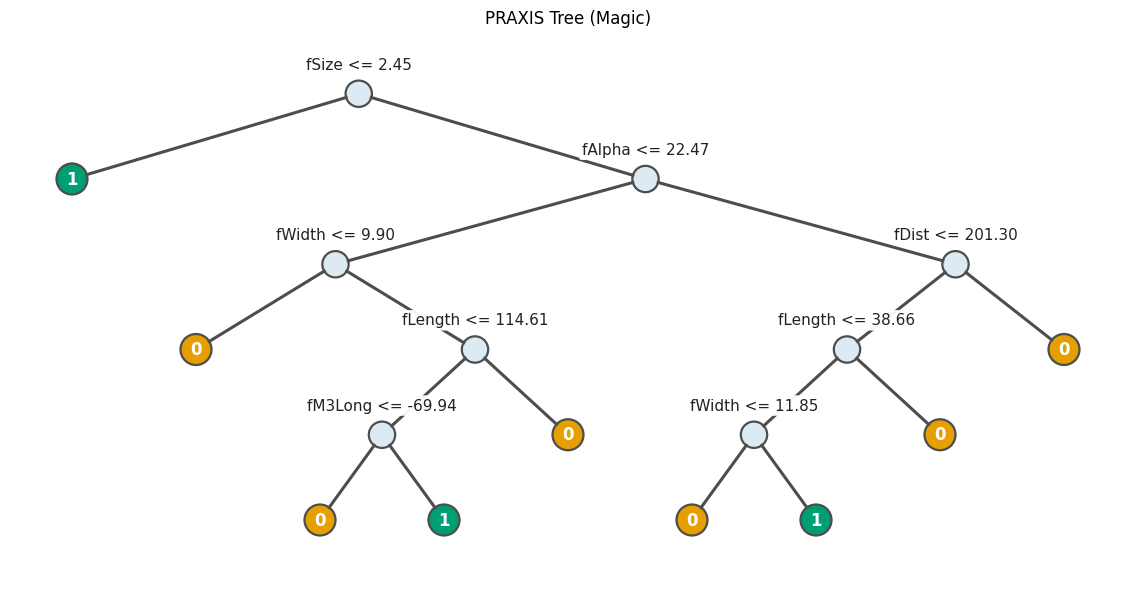}
    \caption{Accuracy: 83.6\%, Leaves: 9}
\end{subfigure}
\begin{subfigure}{0.48\linewidth}
    \centering
    \includegraphics[width=\linewidth,trim=0 0 0 1.5cm,clip]{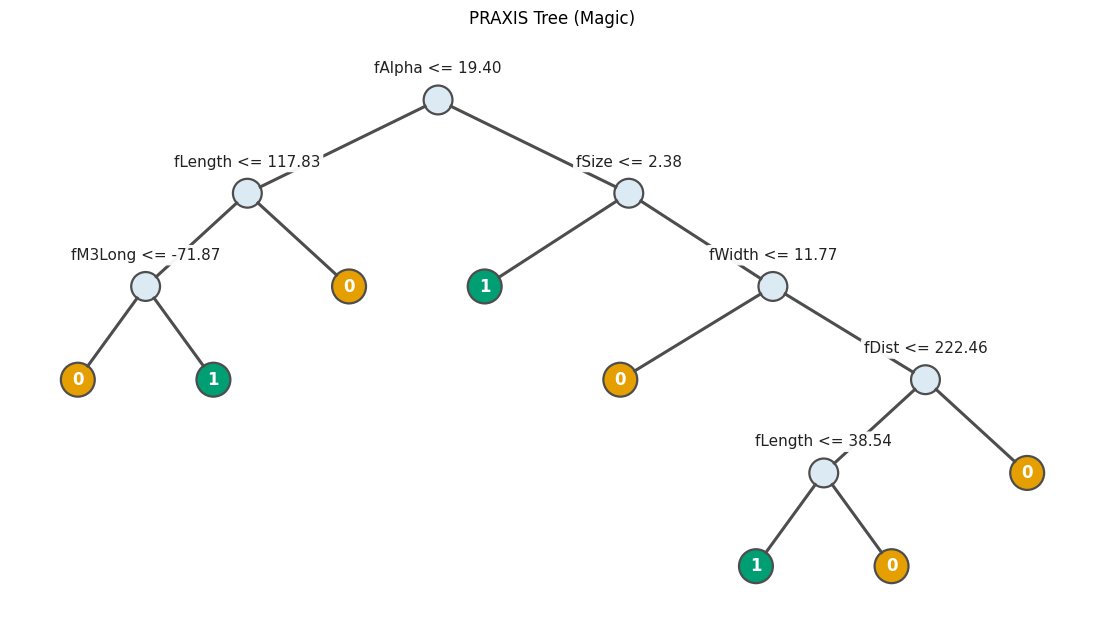}
    \caption{Accuracy: 83.0\%, Leaves: 8}
\end{subfigure}

\caption{\textbf{Magic}. Example near-optimal trees with accuracy and number of leaves shown below each tree.}
\end{figure}

\begin{figure}[p]
\centering

\begin{subfigure}{0.48\linewidth}
    \centering
    \includegraphics[width=\linewidth,trim=0 0 0 1.5cm,clip]{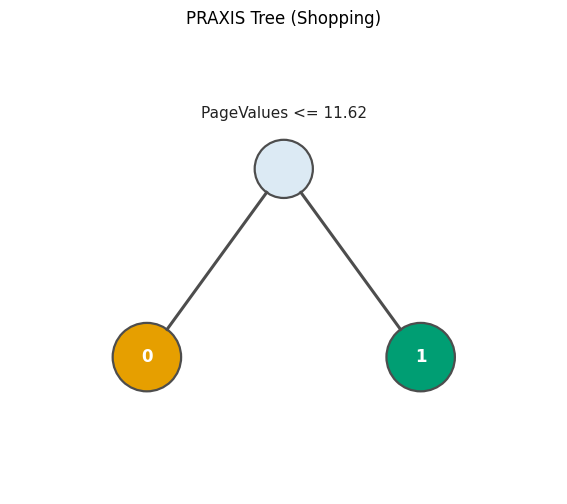}
    \caption{Accuracy: 89.2\%, Leaves: 2}
\end{subfigure}
\begin{subfigure}{0.48\linewidth}
    \centering
    \includegraphics[width=\linewidth,trim=0 0 0 1.5cm,clip]{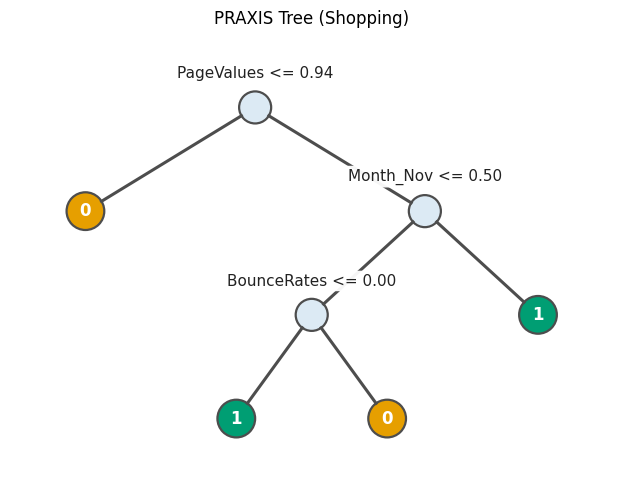}
    \caption{Accuracy: 89.9\%, Leaves: 4}
\end{subfigure}

\medskip

\begin{subfigure}{0.48\linewidth}
    \centering
    \includegraphics[width=\linewidth,trim=0 0 0 1.5cm,clip]{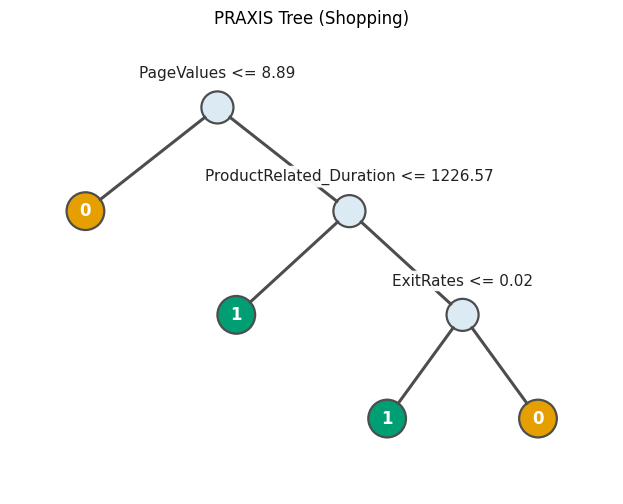}
    \caption{Accuracy: 89.6\%, Leaves: 4}
\end{subfigure}
\begin{subfigure}{0.48\linewidth}
    \centering
    \includegraphics[width=\linewidth,trim=0 0 0 1.5cm,clip]{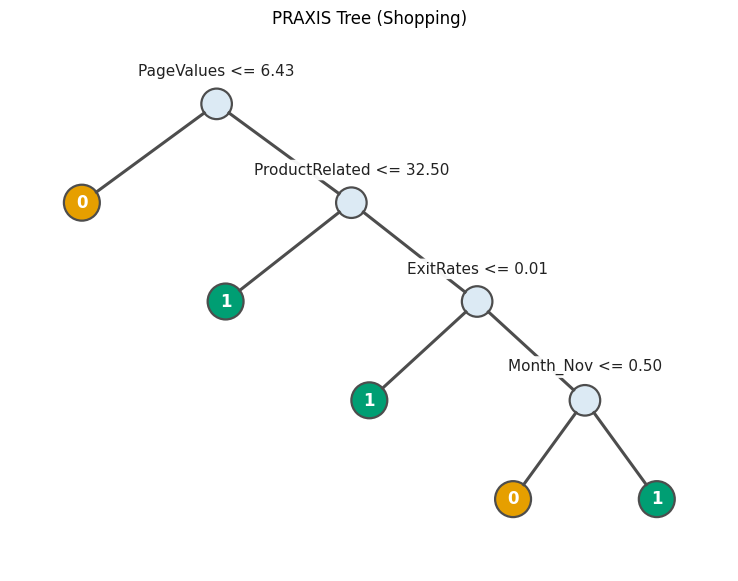}
    \caption{Accuracy: 90.2\%, Leaves: 5}
\end{subfigure}

\caption{\textbf{Shopping}. Example near-optimal trees with accuracy and number of leaves shown below each tree.}
\end{figure}

\begin{figure}[p]
\centering

\begin{subfigure}{0.48\linewidth}
    \centering
    \includegraphics[width=\linewidth,trim=0 0 0 1.2cm,clip]{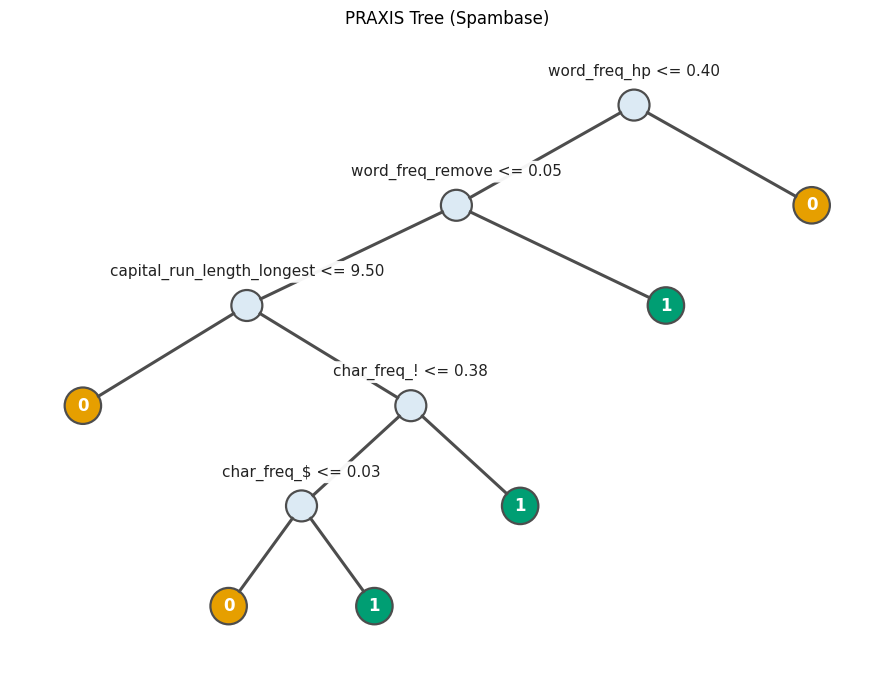}
    \caption{Accuracy: 90.6\%, Leaves: 6}
\end{subfigure}
\begin{subfigure}{0.48\linewidth}
    \centering
    \includegraphics[width=\linewidth,trim=0 0 0 1.2cm,clip]{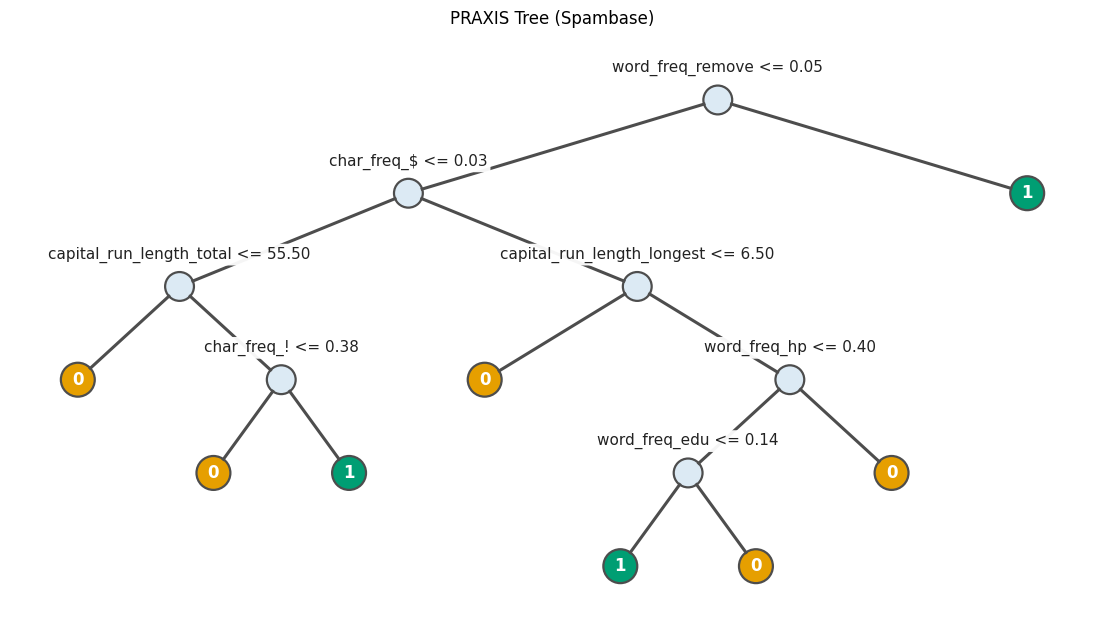}
    \caption{Accuracy: 90.8\%, Leaves: 8}
\end{subfigure}

\medskip

\begin{subfigure}{0.48\linewidth}
    \centering
    \includegraphics[width=\linewidth,trim=0 0 0 1.2cm,clip]{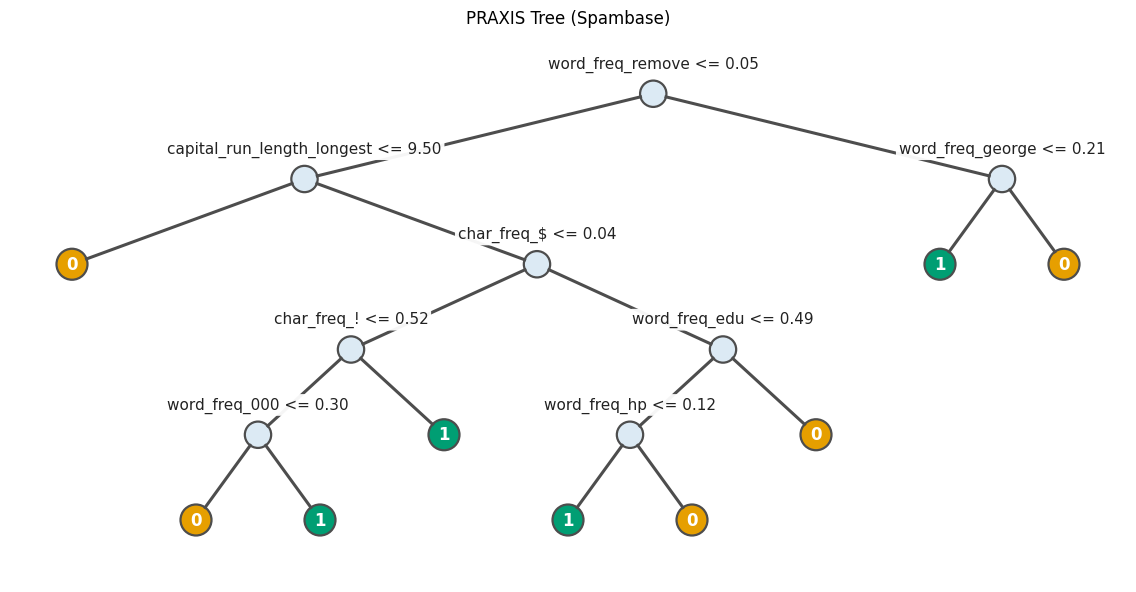}
    \caption{Accuracy: 91.3\%, Leaves: 9}
\end{subfigure}
\begin{subfigure}{0.48\linewidth}
    \centering
    \includegraphics[width=\linewidth,trim=0 0 0 1.2cm,clip]{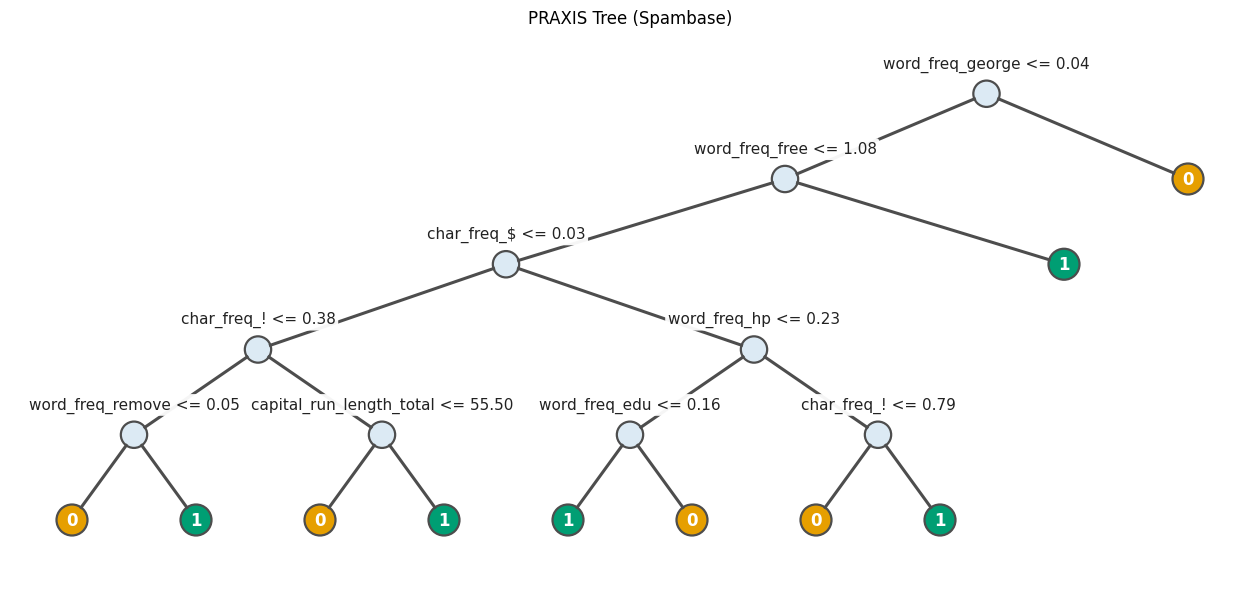}
    \caption{Accuracy: 91.7\%, Leaves: 10}
\end{subfigure}

\caption{\textbf{Spambase}. Example near-optimal trees with accuracy and number of leaves shown below each tree.}
\end{figure}

\begin{figure}[p]
\centering

\begin{subfigure}{0.48\linewidth}
    \centering
    \includegraphics[width=\linewidth,trim=0 0 0 1.2cm,clip]{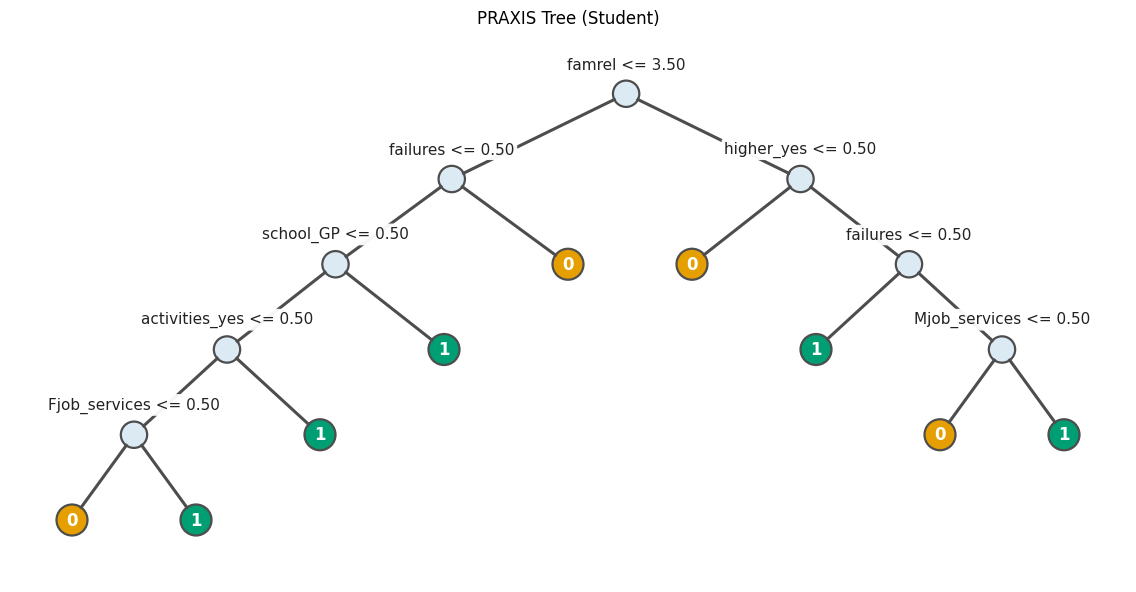}
    \caption{Accuracy: 86.0\%, Leaves: 9}
\end{subfigure}
\begin{subfigure}{0.48\linewidth}
    \centering
    \includegraphics[width=\linewidth,trim=0 0 0 1.2cm,clip]{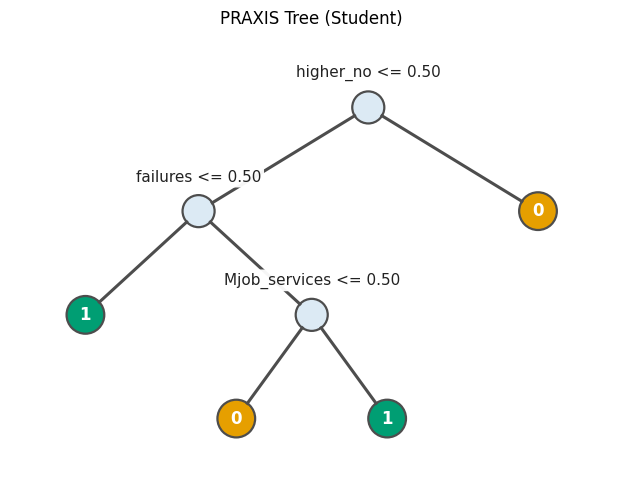}
    \caption{Accuracy: 84.1\%, Leaves: 4}
\end{subfigure}

\medskip

\begin{subfigure}{0.48\linewidth}
    \centering
    \includegraphics[width=\linewidth,trim=0 0 0 1.2cm,clip]{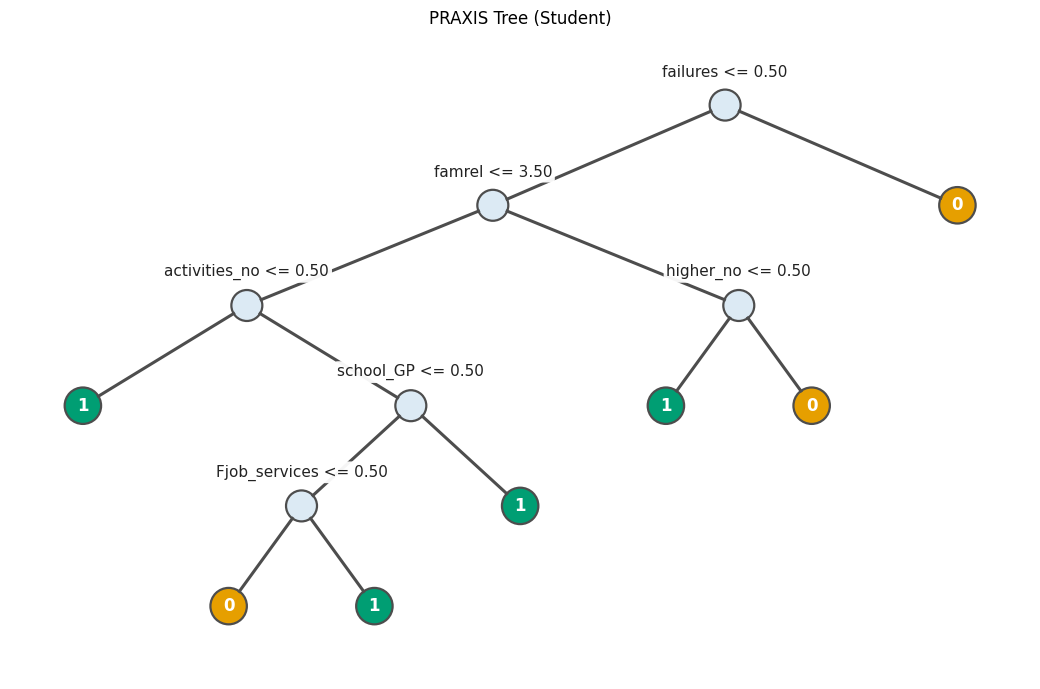}
    \caption{Accuracy: 85.2\%, Leaves: 7}
\end{subfigure}
\begin{subfigure}{0.48\linewidth}
    \centering
    \includegraphics[width=\linewidth,trim=0 0 0 1.2cm,clip]{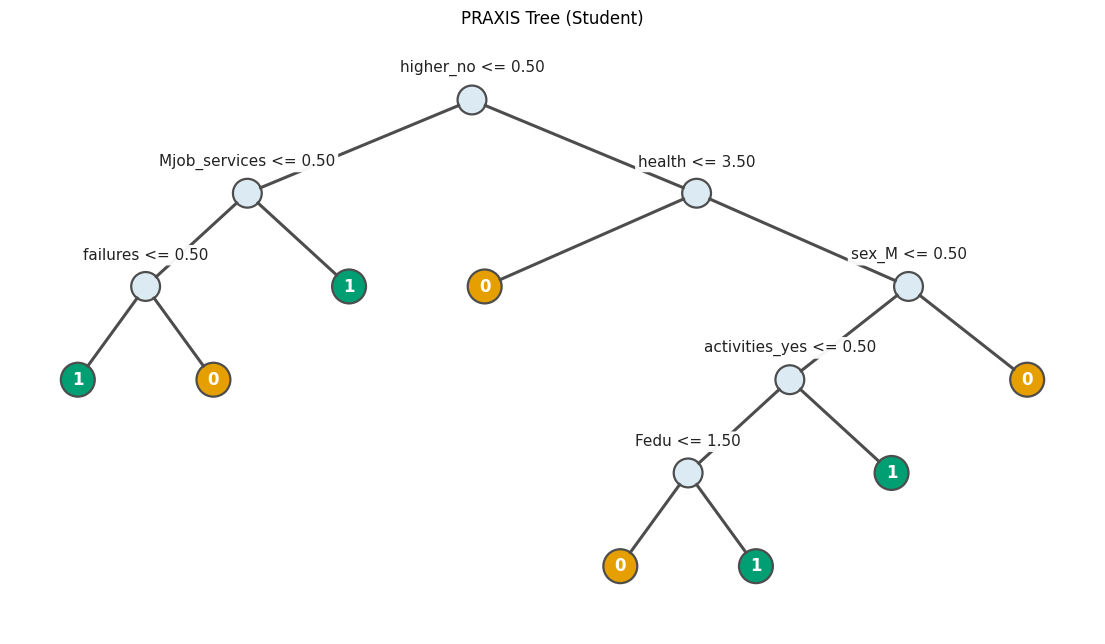}
    \caption{Accuracy: 85.5\%, Leaves: 8}
\end{subfigure}

\medskip

\begin{subfigure}{0.48\linewidth}
    \centering
    \includegraphics[width=\linewidth,trim=0 0 0 1.2cm,clip]{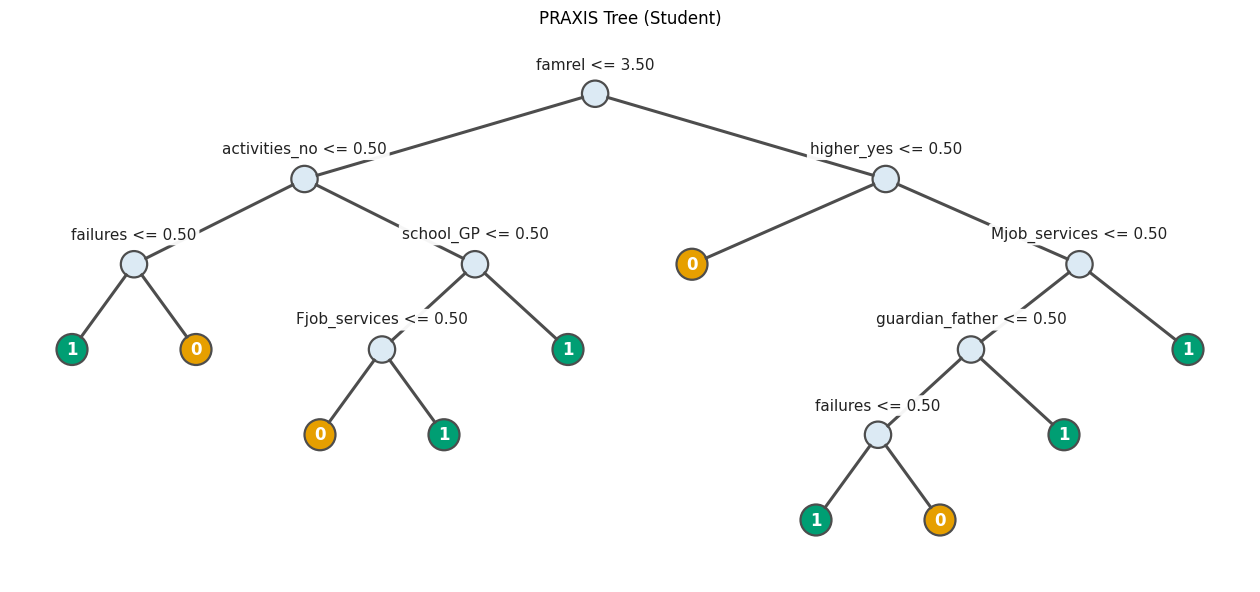}
    \caption{Accuracy: 86.3\%, Leaves: 10}
\end{subfigure}
\begin{subfigure}{0.48\linewidth}
    \centering
    \includegraphics[width=\linewidth,trim=0 0 0 1.2cm,clip]{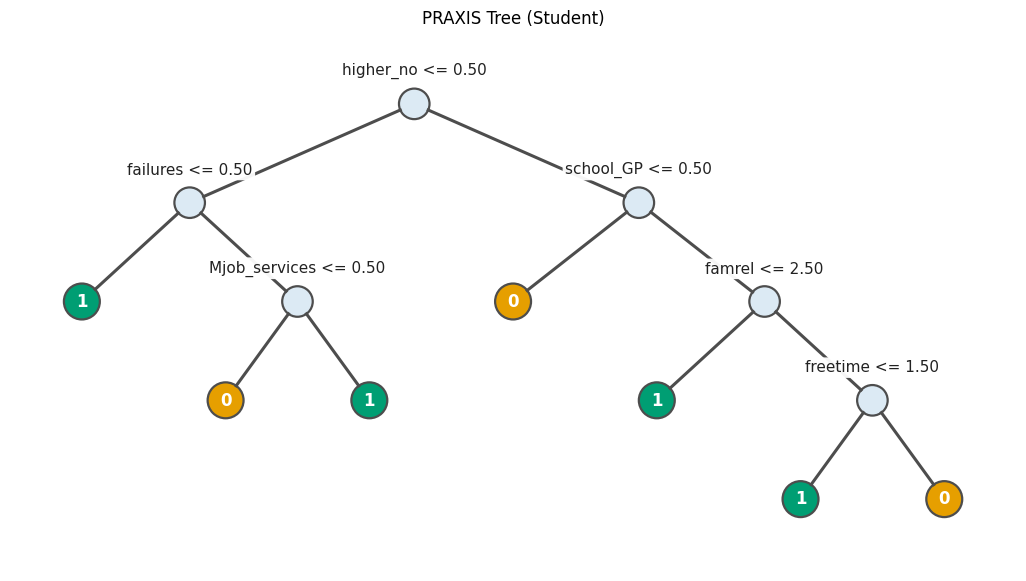}
    \caption{Accuracy: 84.9\%, Leaves: 7}
\end{subfigure}

\caption{\textbf{Student}. Example near-optimal trees with accuracy and number of leaves shown below each tree.}
\end{figure}

\FloatBarrier


\end{document}
